\newcommand{\upperR}[1]{\uppercase\expandafter{\romannumeral#1}}
\newtcbox{\mybox}{colback=blue!5,
	colframe=blue!30!black, center, enhanced, varwidth upper}
\newtcbox{\mymath}[1][]{%
	nobeforeafter, math upper, tcbox raise base,
	enhanced, colframe=blue!30!black,
	colback=blue!5, boxrule=0.5pt, top=1mm,bottom=1mm,
	#1}
\def\ceil#1{\lceil #1 \rceil}
\def\floor#1{\lfloor #1 \rfloor}
\def\1{\bm{1}}
\def\rva{{\mathbf{a}}}
\def\rvc{{\mathbf{c}}}
\def\vm{{\bm{m}}}
\def\vs{{\bm{s}}}
\def\vy{{\bm{y}}}
\def\mT{{\bm{T}}}
\DeclareMathAlphabet{\mathsfit}{\encodingdefault}{\sfdefault}{m}{sl}
\SetMathAlphabet{\mathsfit}{bold}{\encodingdefault}{\sfdefault}{bx}{n}
\def\gD{{\mathcal{D}}}
\def\gE{{\mathcal{E}}}
\newcommand{\E}{\mathbb{E}}
\renewcommand{\H}{\mathbb{H}}
\newcommand{\R}{\mathbb{R}}
\newcommand{\emp}{{\widetilde{p}}}
\DeclareMathOperator*{\argmax}{arg\,max}
\DeclareMathOperator{\Tr}{Tr}
\renewcommand\S{{\mathcal{S}}}
\newcommand\A{{\mathcal{A}}}
\newcommand\C{{\mathcal{C}}}
\newcommand\D{{\mathcal{D}}}
\newcommand\U{{\mathcal{U}}}
\renewcommand\R{{\mathbb{R}}}
\renewcommand\P{{\mathbb{P}}}
\newcommand\lm{{{w}}}
\newcommand\Diag{{\mathsf{Diag}}}
\newcommand\offd{{\mathsf{off}}}
\newcommand\diag{{\mathsf{diag}}}
\newcommand\hd{{\widehat{D}}}
\newcommand\I{{\mathcal{I}}}
\newcommand\J{{\mathcal{J}}}
\newcommand\Q{\mathcal{Q}}
\newcommand\T{{\mathcal{T}}}
\newcommand{\inner}[1]{\left<#1\right>}
\renewcommand{\norm}[1]{\left\|#1\right\|}
\newcommand{\snorm}[1]{\left\|#1\right\|_{\mathrm{op}}}
\newcommand{\round}[1]{\left(#1\right)}
\newcommand{\brac}[1]{\left[#1\right]}
\renewcommand{\abs}[1]{\left|#1\right|}
\newcommand{\biground}[1]{\left\{#1\right\}}
\newtheorem{theorem}{Theorem}
\newtheorem{lemma}[theorem]{Lemma}
\newtheorem{corollary}[theorem]{Corollary}
\newtheorem{claim}{Claim}
\newtheorem{proposition}[theorem]{Proposition}
\newtheorem{definition}{Definition}
\newtheorem{assumption}{Assumption}
\newcommand{\tran}{^\top}
\newcommand{\inv}{^{-1}}
\newcommand{\half}{^{-\frac{1}{2}}}
\newcommand{\halfe}{^{-\frac{1}{2}-\epsilon}}
\newcommand{\alp}{{\A_{\leq p_\delta}}}
\newcommand{\aslp}{{\A_{< p_\delta}}}
\newcommand{\aep}{{\A_{= p_\delta}}}
\newcommand{\agp}{{\A_{> p_\delta}}}
\newcommand{\vepsilon}{{\bm{\varepsilon}}}
\newcommand{\vlambda}{{\bm{\lambda}}}
\newcommand{\lmlp}{{\vlambda\tran\vm \leq p+\delta}}
\newcommand{\lmgp}{{\vlambda\tran\vm > p+\delta}}
\newcommand{\mgamma}{{\gamma_{\max}}}
\newcommand{\vgamma}{{\bm{\gamma}}}
\newcommand{\epsilonS}{{\epsilon_{\rm s}}}
\newcommand{\midcup}{%
  \mathop{\mathlarger{\mathlarger{\cup}}}\limits}
\newcommand{\aux}{{\rm{aux}}}
\title{Iteratively reweighted kernel machines \\efficiently learn sparse functions%
}
\author{
Libin Zhu\thanks{Department of Mathematics, University of Washington, Seattle, WA 98195; \texttt{https://libinzhu.github.io/}. Research of Zhu was supported by NSF DMS-2023166.}
\and
Damek Davis\thanks{Wharton Department of Statistics and Data Science, University of Pennsylvania,
		Philadelphia, PA 19104, USA;
		\texttt{www.damekdavis.com}. Research of Davis supported by an Alfred P. Sloan research fellowship and NSF DMS award 2047637}
\and
Dmitriy Drusvyatskiy\thanks{Department of Mathematics, U. Washington, Seattle, WA 98195; \texttt{www.math.washington.edu/$\sim$ddrusv}.
		Research of Drusvyatskiy was supported by NSF DMS-2306322, NSF DMS-2023166, and AFOSR FA9550-24-1-0092 awards.}
\and 
Maryam Fazel\thanks{Department of Electrical \& Computer Engineering, University of Washington, Seattle, WA 98195; \texttt{https://people.ece.uw.edu/fazel\_maryam/}. Research supported by NSF TRIPODS II DMS-2023166, CCF 2007036, CCF 2212261, and CCF 2312775.}
}
\date{May 2025}
\begin{document}
\maketitle
\begin{abstract}
The impressive practical performance of neural networks is often attributed to their ability to learn low-dimensional data representations and hierarchical structure directly from data. In this work, we argue that these two phenomena are not unique to neural networks, and can be elicited from classical kernel methods. Namely, we show that the derivative of the kernel predictor can detect the influential coordinates with low sample complexity. Moreover, by iteratively using the derivatives to reweight the data and retrain kernel machines, one is able to efficiently learn hierarchical polynomials with finite leap complexity. Numerical experiments illustrate the developed theory.
\end{abstract}

\section{Introduction}

Over the past decade, deep learning has revolutionized the field of artificial intelligence, driving remarkable advances in areas such as computer vision, natural language processing,
reinforcement learning and robotics. 
While impressive empirical achievements are apparent, understanding {\em why} deep learning methods are so successful remains an active area of research. In particular, it is now believed that two closely related phenomena---called {\em feature learning} and {\em hierarchical learning}---largely underpin deep learning's impressive performance.

 Heuristically, feature learning refers to the fact that gradient methods for training neural networks automatically filter out much of the irrelevant information in the data and provide a concise summary of only the most salient aspects for the learning task at hand. As a rudimentary example, detecting whether a person in an image is wearing a necktie only requires inspecting the neck area. Gradient methods for this classification task are able to automatically filter out the irrelevant pixels in the image by learning appropriate weights in the network. Importantly, feature learning implies that the trained neural network automatically performs a kind of nonlinear dimension reduction, which then leads to better generalization. Feature learning is a surprising phenomenon because it suggests that learning a data-dependent compression mechanism is an easier problem than making predictions directly.
 See, for example, papers \cite{damian2022neural,ba2022high} which explore feature learning in neural networks. 
 
 The second phenomenon is rooted in the observation that typical learning tasks are hierarchical, with features appearing in a natural-order of significance. For example, an airplane in an image may be detected by first %
 learning simpler concepts like `doors', `wheels', `wings', and so forth. There is now ample evidence that gradient methods for training neural networks are able to gradually learn these intermediate concepts, which significantly improves the overall sample efficiency. We refer the reader to \cite{allen2023backward,bengio2009learning,Goodfellow-et-al-2016,zeiler2014visualizing} for details.

Although recent work in AI has focused almost exclusively on neural networks, these models do have downsides. Neural networks are trained by running gradient-based methods on highly nonlinear and nonconvex functions. As a result, the training process can be brittle, requiring significant computational resources for tuning algorithmic parameters such as step-size, batch-size, initialization scale, momentum terms, etc. Consequently, it is worthwhile to ask whether simpler and more interpretable models can be as effective at learning features and hierarchical structure. One apparent candidate is kernel ridge regression---the preeminent model in machine learning prior to deep learning's ascension. %

\mybox{In this work, we argue that both phenomena of feature learning and hierarchical learning are not unique to neural networks, and can be elicited by iteratively retraining classical kernel machines.}

\noindent Consequently, our paper suggests that kernel methods, appropriately instantiated, are much more powerful than previously thought and are worth revisiting in contemporary settings. The algorithms we consider are largely inspired by the classical iteratively reweighted least squares (IRLS) for sparse recovery \cite{daubechies2010iteratively} and the recently introduced recursive feature machines (RFM) algorithm \cite{rfm_science} for learning multi-index models. Indeed, the retraining process we introduce is closely related to the ``diagonal'' variant of RFM for learning sparse non-linear models.

Importantly, the kernel-based algorithms we consider make no distributional assumptions on the data and can be applied on any supervised learning task. That being said, the theoretical guarantees we develop rely on strong distributional assumptions such as Gaussianity 
or uniform sampling on the hypercube. The reader should regard these assumptions as a ``simplification of reality'' that allows us to theoretically justify the use of an algorithm in more practical settings. This is in contrast for example to Fourier-based techniques for learning on the hypercube \cite{o2014analysis,mansour1994learning}, which are tailor-made for this specific task. In this sense, our work is in line with the vast literature on %
compressed sensing \cite{candes2006robust,donoho2006compressed,donoho2006most,candes2006stable}, structured signal recovery \cite{recht2010guaranteed,wright2022high,chandrasekaran2012convex,amelunxen2014living}, and gradient dynamics for neural networks \cite{damian2022neural,xu2023over}, 
where generative data assumptions are imposed  in order to support the use of a general purpose algorithm in practice.

\subsection{Summary of results}
Setting the stage, consider the problem of learning a degree $\ell$ polynomial $f$ on $\R^d$ from $n$ samples $(x^{(i)},y_{i})$ with noisy labels, 
$$y_i=f(x^{(i)})+\varepsilon_i \qquad \mbox{for} \quad i=1,\ldots,n,$$ 
where the feature vectors $x^{(i)}$ are drawn independently from some distribution $\mu$ on $\R^d$ and the noise terms are independent Gaussian $\varepsilon_i\sim N(0,\sigma_\varepsilon^2)$. 
Since we aim to learn a degree $\ell$ polynomial, we focus on the high-dimensional regime where $n$ scales polynomially in $d$, that is $n=d^{p+\delta}$ for some small constant $\delta\in (0,1)$ %
and an integer $p$.
Note that the interesting regime is when the power $p$ is smaller, and in some cases {\em much smaller}, than the degree $\ell$ of the target polynomial.

Now fix a positive semidefinite inner product kernel and let $\hat f$ be the Kernel Ridge Regression (KRR) estimator fit to the data set. 
A central question is to understand the generalization error $\|\hat{f}-f\|^2_{L_2(\mu)}$ of the prediction function. 
Since $p$ can be smaller than the degree of the target function, the generalization error will typically not tend to zero as $d$ increases. Nonetheless, in a number of interesting cases, the error can be controlled based on an expansion of $f$ in a distinguished orthonormal polynomial basis of $L_2(\mu)$. For example, for Gaussian, spherical, and hypercube data, the relevant polynomial bases are comprised of Hermite, Gaugenbauer, and Fourier-Walsh polynomials, respectively. In particular, the seminal work \cite{ghorbani2021linearized} showed that for the spherical data, the generalization error of $\hat f$ satisfies
\begin{equation}\label{eqn:montanari}
    \|\hat{f}-f\|^2_{L_2(\mu)}=\|f_{>p}\|^2_{L_2(\mu)}+ o_{d,\P}(1),
\end{equation}
where $f_{>p}$ is the function obtained by keeping the higher-order terms (the polynomial bases of degree greater than $p$) in the expansion of $f$. %
Further extensions of the estimate \eqref{eqn:montanari} appear in \cite{mei2022generalization,}. 

 Looking at equation~\eqref{eqn:montanari}, we see that $\hat f$ asymptotically recovers $f$ only in the regime $p\geq \ell$. 
In particular, it rules out favorable performance of kernel methods even if $f$ has ``low intrinsic complexity'', e.g., dependence on only a few coordinates. 
In this work, we show how to overcome this limitation in the case of such ``sparse'' functions. The method we propose alternates between two stages: (1) ``active'' coordinate detection and (2) kernel ridge regression on reweighted data. %

\subsubsection{Sparse feature learning.}
As a motivating example, Figure~\ref{fig:gen_error_cube} (blue curve) depicts the generalization error of the KRR predictor $\hat f$ when learning the degree three polynomial with a Laplace kernel and uniform sampling on the hypercube:
\begin{equation}\label{eqn:first_exa}
f(x)=x_1 + x_2 + x_3 +x_1x_2x_3\qquad \textrm{for}~x\in\R^{d}.
\end{equation}
 Even though this polynomial depends only on the first three coordinates and therefore has low intrinsic complexity, the generalization error remains large even in the regime $n=d^2$ as implied by \eqref{eqn:montanari}.  %
Thus, kernel methods are not adaptive to the low complexity of the function that is being learned. This observation often serves as the reason to discard kernel methods and instead focus on neural networks. Interestingly, when learning the cubic polynomial  \eqref{eqn:first_exa}, the KRR predictor $\hat f$ performs comparably to neural networks (orange curve) trained by Adam in the low sample regime $d^{0.3}\leq n\leq d^{1.0}$ and significantly worse in the high sample regime $d^{1.2}\leq n\leq d^{1.3}$ (Figure~\ref{fig:gen_error_cube}). %

\begin{figure}[h!]
    \centering
    \begin{subfigure}[b]{0.45\linewidth}
        \centering
        \includegraphics[width=\linewidth]{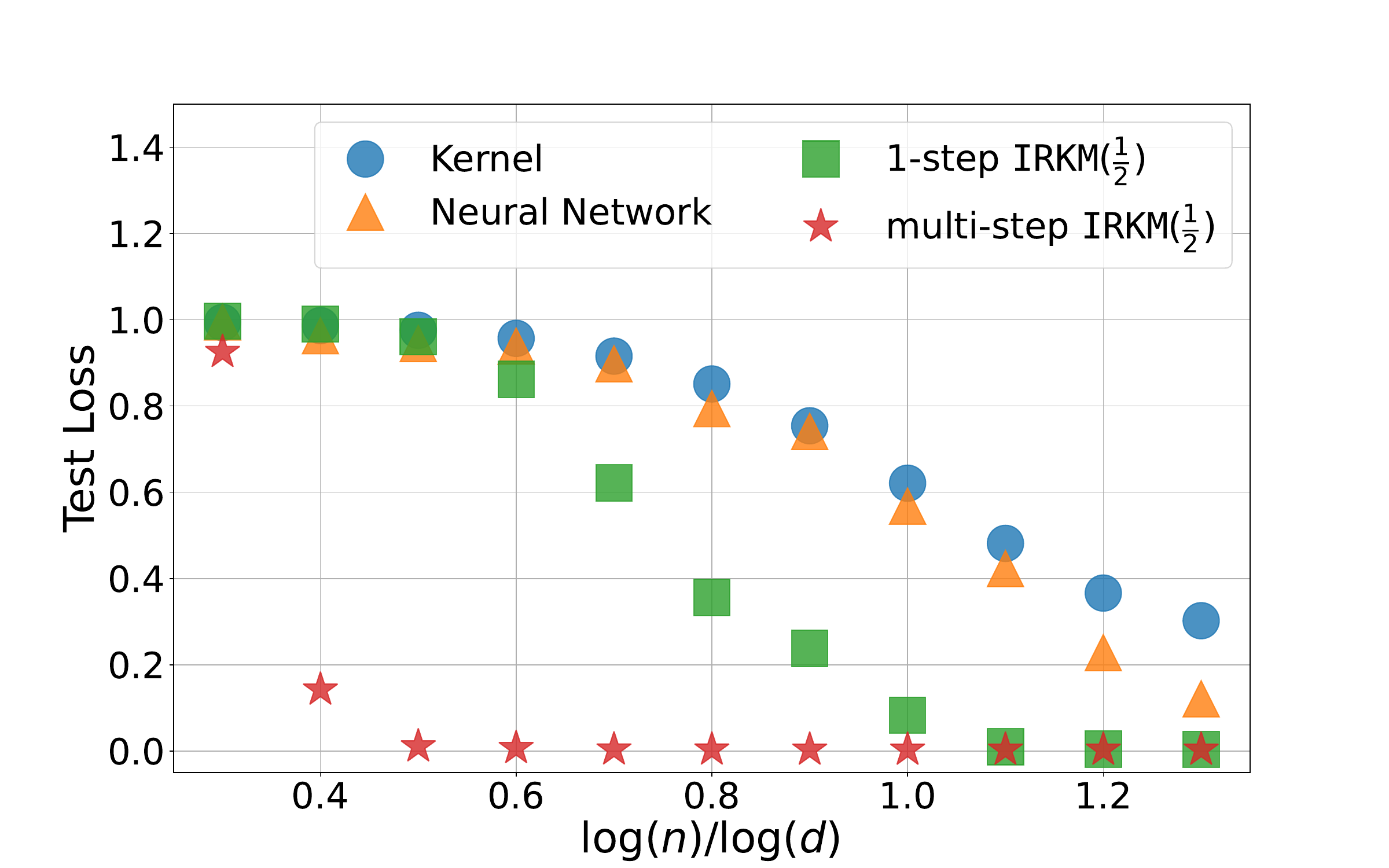}
        \caption{Generalization error\label{fig:gen_error_cube}}
    \end{subfigure}
            \begin{subfigure}[b]{0.45\linewidth}
        \centering
        \includegraphics[width=\linewidth]{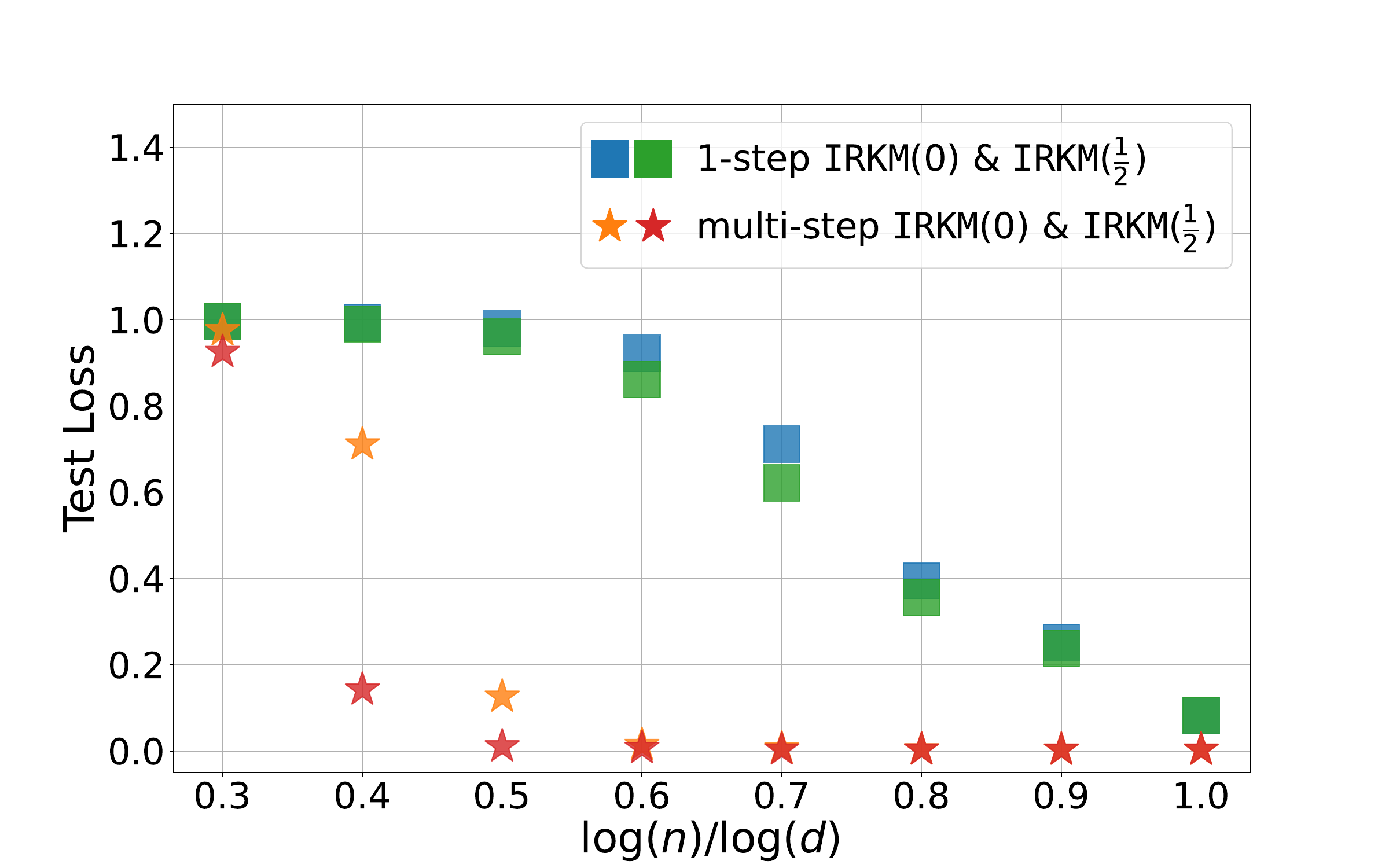}
        \caption{$\alpha = 0$ vs.  $\alpha = \tfrac{1}{2}$\label{fig:compare_task1}}
    \end{subfigure}\\
        \begin{subfigure}[b]{0.23\linewidth}
        \centering
        \includegraphics[width=\linewidth]{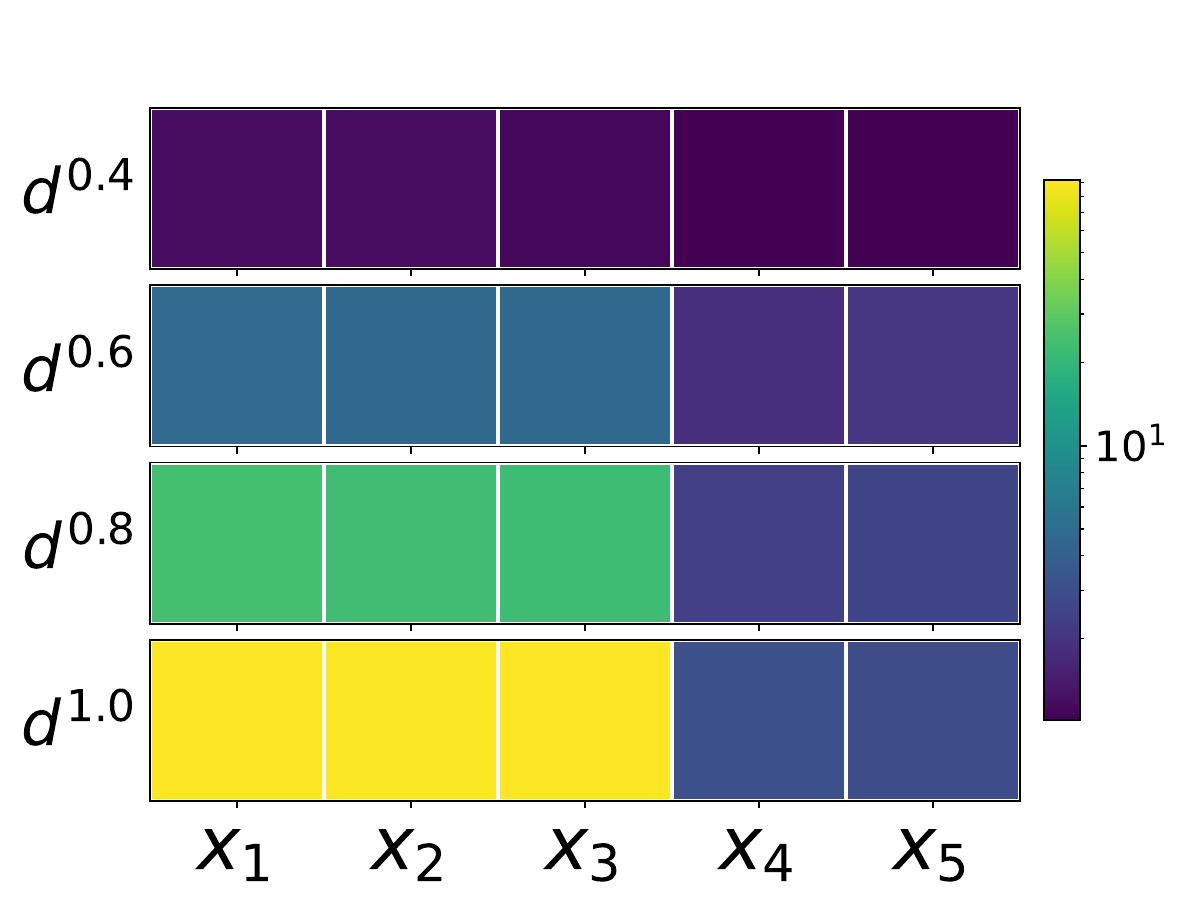}
    \caption{One step  $(\alpha=0)$\label{One step:coordinate_identification0}}
    \end{subfigure}
    \begin{subfigure}[b]{0.23\linewidth}
        \centering
        \includegraphics[width=\linewidth]{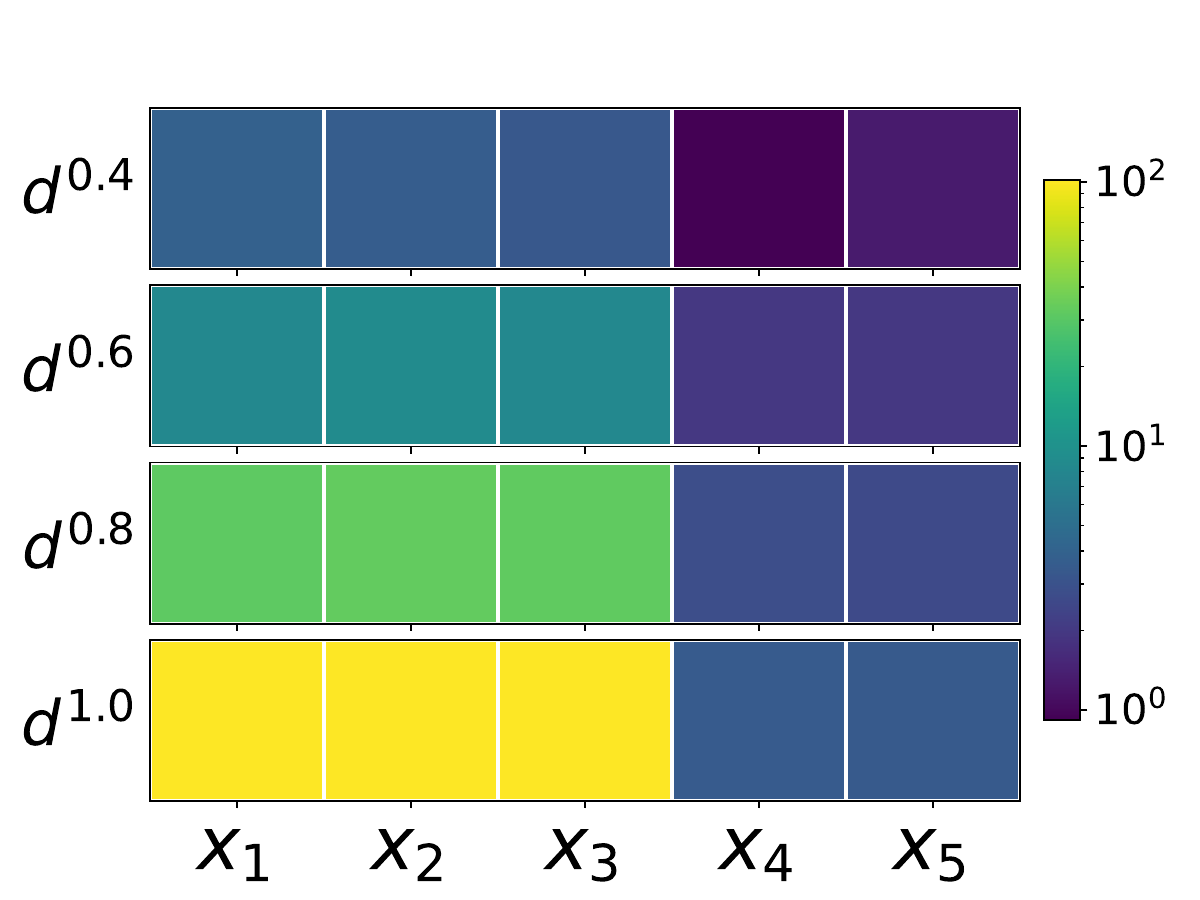}
    \caption{One step $(\alpha=\tfrac{1}{2})$\label{One step:coordinate_identification}}
    \end{subfigure}
        \begin{subfigure}[b]{0.23\linewidth}
        \centering
        \includegraphics[width=\linewidth]{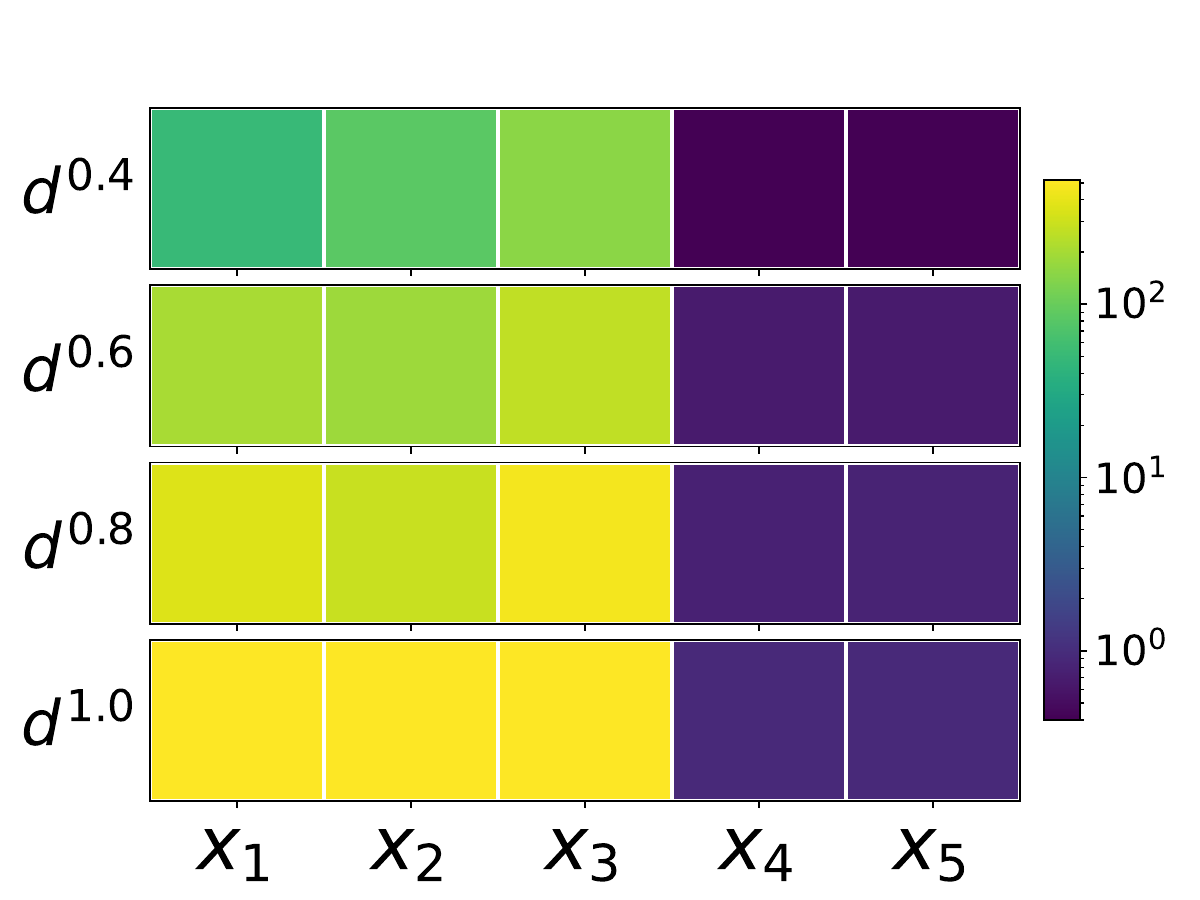}
    \caption{Multi-step  $(\alpha=0)$\label{One step:coordinate_identification_krr}}
    \end{subfigure}
            \begin{subfigure}[b]{0.23\linewidth}
        \centering
        \includegraphics[width=\linewidth]{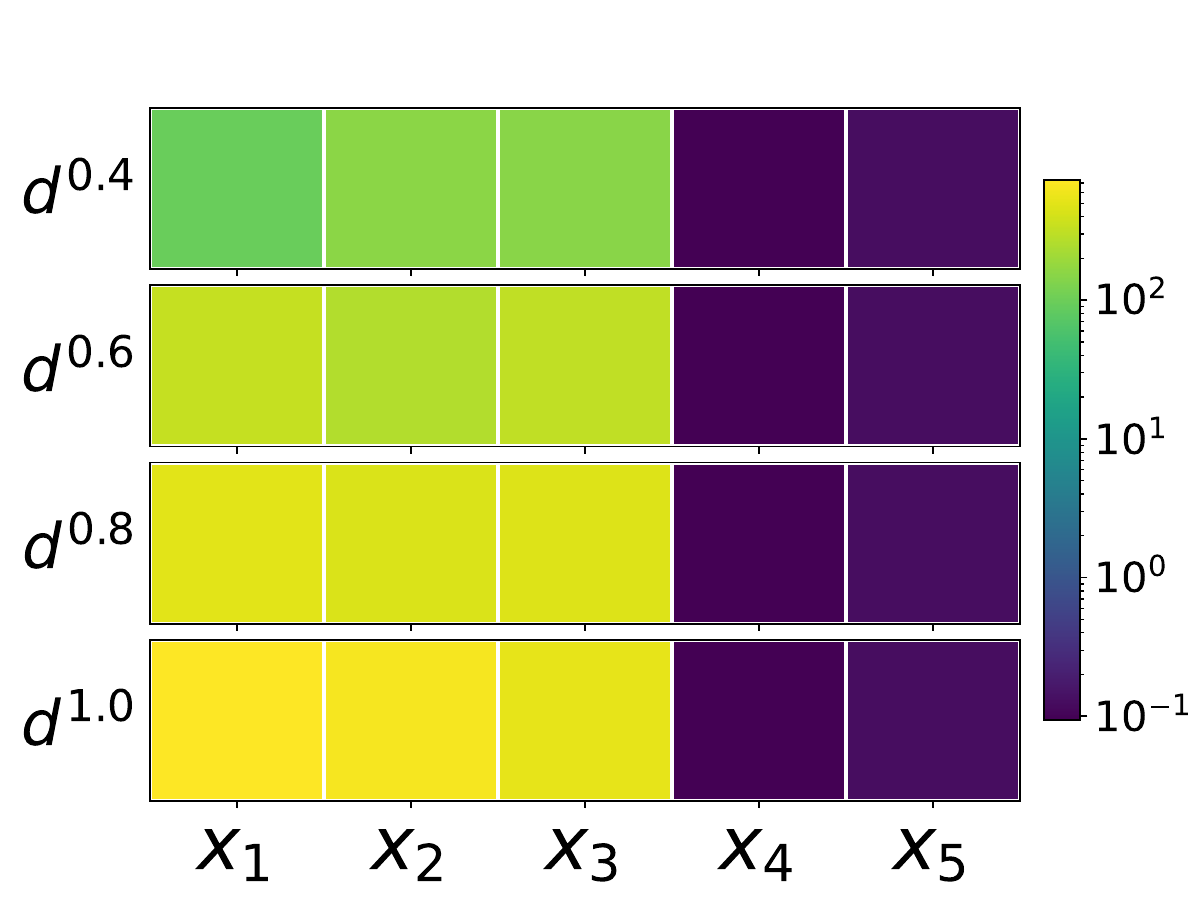}
    \caption{Multi-step $(\alpha=\tfrac{1}{2})$\label{One step:coordinate_identification_krr}}
    \end{subfigure}
\caption{{\bf %
(a) Test loss for kernel machines, neural networks and $\mathtt{IRKM}({\tfrac{1}{2}})$. (b) Comparison of $\mathtt{IRKM}(\alpha)$ performance for $\alpha = 0$ versus $\alpha = \tfrac{1}{2}$. (c,d) The empirical coordinate weights at the first step of $\mathtt{IRKM}(\alpha)$ with $\alpha=0$ and $\tfrac{1}{2}$ respectively. (e,f) The empirical coordinate weights after $T$ steps of $\mathtt{IRKM}(\alpha)$ with $\alpha=0$ and $\tfrac{1}{2}$ respectively.} The samples are i.i.d. uniformly drawn from the hypercube $\{\pm 1\}^d$ with $d= 500$ and the label is $y = x_1 + x_2 +x_3 + x_1x_2x_3 + \varepsilon$ with $\varepsilon\sim \mathcal{N}(0,0.1^2)$. We use the Laplacian kernel for both kernel machine and $\mathtt{IRKM}(\alpha)$. The $\mathtt{IRKM}(\alpha)$ algorithm is run for at most $T=20$ steps.
We train a fully-connected network by Adam,  varying the widths and batch sizes as $\{128, 256, 512\}$, depth as $\{2,3,4,5,6\}$ and learning rate as $\{10^{-3}, 10^{-4}\}$ and report the smallest test loss. For all experiments, we report an average of $10$ independent runs. }
    \label{fig:task_1}
\end{figure}

\paragraph{Detecting coordinates with empirical weights.}
On the other hand, wishful thinking suggests that it may be possible to extract the relevant coordinates (features) from $\hat f$ directly, even when $\hat f$ does not generalize well. One simple approach  is to use an empirical plug-in estimator 
$\E_n(\partial_j\hat f)^2:=\frac{1}{n}\sum_{i=1}^n (\partial_{j} \hat f(x^{(i)}))^2$ to measure the sensitivity of the prediction function $\hat f$ to the $j$'th coordinate. Returning to our running example \eqref{eqn:first_exa}, Figure~\ref{One step:coordinate_identification0} depicts the values $\E_n(\partial_j\hat f)^2$ for different sample sizes $n$. The first three coordinates are clearly identified as being the most significant. Our first contribution is to show that this is no accident: the empirical coordinate weights $\E_n(\partial_j\hat f)^2$ 
allow us to asymptotically estimate the population coordinate weights for the truncation of $f$.  In all theorem statements appearing in the introduction, we will assume that $\mu$ is the uniform measure on the hypercube $\{\pm 1\}^d$.

\begin{theorem}[\textbf{Informal}]\label{infthm:main_est}
Consider the regime $n=d^{p+\delta}$ for any $\delta\in (0,1)$ and $p\in \mathbb{N}$. Then there exists a constant $c>0$ such that the following estimate holds uniformly over all coordinates:
\begin{equation}\label{eqn:main_est}
\frac{\E_n(\partial_j\hat f)^2}{1+o_{d,\P}(1)}= \underbrace{\E(\partial_{j} f_{\leq p})^2}_{{\textrm order}~ p} ~+ ~c\cdot d^{2\delta-2}\cdot\underbrace{\E(\partial_{j} f_{p+1})^2}_{\textrm{order}~ p+1} +\underbrace{O_{d,\mathbb{P}}(d\inv)}_{noise}.
\end{equation}

\end{theorem}

 Theorem~\ref{infthm:main_est} is conceptually interesting because it shows that kernel ridge regression can detect relevant coordinates despite exhibiting a high generalization error. One unsatisfying aspect of \eqref{eqn:main_est} is that the term 
$d^{2\delta-2}\E(\partial_{j} f_{p+1})^2$  is only meaningful in the high-sampling regime $\delta\in (\frac{1}{2},1)$, since in the complementary regime $\delta\in (0,\frac{1}{2})$ this term is dominated by the noise $O_{d,\mathbb{P}}(d\inv)$ and therefore the ``signal'' $\E(\partial_{j} f_{p+1})^2$ is drowned out by the noise. %
With this in mind, we will now show that the coefficient $d^{2\delta-2}$ in \eqref{eqn:main_est} can be reduced by a square root factor to 
$d^{\delta-1}$ by carefully modifying the estimator $\E_n(\partial_j\hat f)^2$. This improved dependence on dimension will ultimately improve the sample efficiency of learning by a significant factor of $\sqrt{d}$. In particular, returning to our  running example \eqref{eqn:first_exa}, the modified estimator will be able to learn the first three coordinates with $n=d^{\delta}$ samples {\em for any} $\delta\in (0,1)$. This is in contrast to the pure empirical weights $\E_n(\partial_j\hat f)^2$ that require at least $n=\sqrt{d}$ samples to learn the first three coordinates; see Figure~\ref{One step:coordinate_identification0}.

\paragraph{Improved coordinate detection with the DN-estimator.}
The key idea underlying the modified estimator is to introduce a weighted kernel function 
$$K_w(x,z) = K(\sqrt{w}\odot x, \sqrt{w}\odot z),$$ 
parametrized by a weight vector $w\in\R^d$.\footnote{The symbol $v\odot w:=(v_iw_i)_{i=1}^d$ denotes the Hadamard product of two vectors $v$ and $w$.}  Intuitively, the weight vector $w$ encodes the importance of each coordinate for prediction.  Now let $\hat f_w$ be the KRR estimator with respect to the kernel $K_w$ and let $\|\cdot\|_w$ denote the RKHS norm induced by $K_w$. 
Intuitively, seeking to find the relevant coordinates of $f$, we should focus on those weights $w$ for which the norm of the KRR predictor $\|\hat f_w\|^2_w$ is as small as possible. Since the function $w\mapsto \|\hat f_w\|^2_w$ is difficult to optimize directly, we may instead follow its gradient. A quick computation shows that up to a sign flip, the partial derivatives of this function can be approximated\footnote{by dropping the term that scales linearly in the ridge parameter} by
$$\mathcal{D}_{j}(w):=\beta_w\tran \tfrac{\partial K_w(X,X)}{\partial w_j} \beta_w,$$
where $\beta_w\in \R^n$ are the dual coefficients 
of $f_w$.\footnote{This means that we can write $\hat f_w(x)=K_w(x,X)\beta_w$.}
We call $\mathcal{D}_{j}(w)$ the {\em derivative norm (DN) estimator} .\footnote{We note that since $K$ is a PSD inner-product kernel, the matrix $\tfrac{\partial K_w(X,X)}{\partial w_j}$ is easily seen to be positive semidefinite and therefore the DN estimator $\mathcal{D}_{j}(w)$ is nonnegative.} 
The following theorem shows that the average of the two estimators $\E_n(\partial_j\hat f)^2$ and $\mathcal{D}_{j}(w)$ with $w={\bf 1}_d$ enjoys an improved guarantee over \eqref{eqn:main_est}, wherein the coefficient $d^{2\delta-2}$ is replaced by $d^{\delta-1}$.

\begin{theorem}[\textbf{Informal}]\label{infthm:main_est_combined}
Consider the regime $n=d^{p+\delta}$ for any $\delta\in (0,1)$ and $p\in \mathbb{N}$. Then there exists a constant $c>0$ such that the following estimate holds uniformly over all coordinates:
\begin{equation}\label{eqn:main_est_combined}
\frac{\E_n(\partial_j\hat f)^2 + \tfrac{1}{n}\gD_j({\bf 1}_d)}{1+o_{d,\P}(1)}= \underbrace{\E(\partial_{j} f_{\leq p})^2}_{{\textrm order}~ p} ~+ ~c\cdot d^{\delta-1}\cdot\underbrace{\E(\partial_{j} f_{p+1})^2}_{\textrm{order}~ p+1} +\underbrace{O_{d,\mathbb{P}}(d\inv)}_{noise}.
\end{equation}
\end{theorem}

Returning to our running example \eqref{eqn:first_exa}, Figure~\ref{One step:coordinate_identification} shows that the first three coordinates are already detected by the averaged estimator in the setting $n=d^{0.4}$, but not by the pure empirical weights $\E_n(\partial_j\hat f)^2$. The reason is that these are the important coordinates for the first-order truncation $f_{\leq 1}(x)=x_1+x_2+x_3$. Moreover, note that in this example the higher-order terms of $f$ depend only on $x_1$, $x_2$, $x_3$. In other words, all influential coordinates of the third-order polynomial $f$, namely those coordinates $r$ that satisfy $\E(\partial _r f)^2 >0$, are fully captured by its first-order truncation. How can we use the averaged estimator to learn the function $f$ in this example? A natural idea is to introduce a reweighting scheme. Namely, one may draw a new set of samples, rescale the coordinates of the data using averaged estimator, and solve a new kernel regression problem which yields a new kernel estimator $\hat f_2(x)$, etc. We summarize this iterative process in the following algorithm, which we call Iteratively Reweighted Kernel Machines ($\mathtt{IRKM}$). Throughout, the symbol $v^{\odot 2}:=v\odot v$ denotes the entrywise square.

\begin{algorithm}[h]
\caption{$\mathtt{IRKM}(\alpha)$ \hfill{\bf I}teratively {\bf R}eweighted {\bf K}ernel {\bf M}achines}
\begin{algorithmic}[1] %
\State \textbf{Input:} averaging parameter $\alpha\in [0,1]$.
\State \textbf{Initialize:} $w={\bf 1}_d$ and fix a safeguard parameter $\epsilonS>0$.
\For{$t=1,\ldots, T$}:
\State Sample fresh data  $\{(x^{(i)},y_i)\}_{i=1}^n$ and define the kernel $K_w(x,z)=K(\sqrt{w}\odot x,\sqrt{w}\odot z)$.\label{line:resampleintro}
\State Obtain predictor $\hat f_t$ from Kernel Ridge Regression on the data $\{(x^{(i)},y_i)\}_{i=1}^n$ with the kernel $K_w$.\label{line:predictorintro}

\State Compute the first weight vector: $w^{[1]}=\epsilonS {\bf 1}_d+ \frac{1}{n}\sum_{i=1}^n [\nabla_x\hat f_t(x^{(i)})]^{\odot 2}  $.\label{line:safeintro_1}
\State Compute the second weight vector: $w^{[2]}=\epsilonS {\bf 1}_d+ \frac{1}{n} \mathcal{D}(w) \odot w $.\label{line:safeintro_2}
\State Normalize the weights $w^{[k]}=\frac{d}{\|w^{[k]}\|_1} \cdot w^{[k]}$,~~$k = 1,2$.\label{line:normalizeintro}
\State Update the weight vector $w = (1-\alpha)\cdot w^{[1]} + \alpha\cdot w^{[2]}$.\label{line:mixing}
\EndFor
\end{algorithmic}\label{alg:rfm}
\end{algorithm}

$\mathtt{IRKM}{(\alpha)}$ is closely related to and inspired by existing algorithms, in particular, iteratively reweighted least squares for sparse recovery \cite{daubechies2010iteratively} and low-rank matrix recovery \cite{mohan2012iterative} and recursive feature machines \cite{rfm_science} for learning multi-index models. We will comment on these connections at the end of the introduction. In the meantime, the main takeaway is that operationally $\mathtt{IRKM}{(\alpha)}$ proceeds by solving a sequence of kernel ridge regression problems on freshly sampled data that is reweighted by the average of the empirical coordinate weights and the DN estimator. 
More precisely, step \eqref{line:resampleintro} samples fresh data and forms a reweighted kernel $K_w$ based on the current weight vector $w$. Step \eqref{line:predictorintro} obtains a KRR predictor based on the kernel $K_w$. Steps \eqref{line:safeintro_1} and \eqref{line:safeintro_2} compute the empirical coordinate weights $\E(\partial f_{\leq p})^2$ and the DN-estimator $\mathcal{D}(w)$. Note that we safeguard the two intermediate weight vectors by adding a small multiple of the all-ones vector and the second weight vector is obtained as a Hadamard product of the DN estimator with the current weight vector $w$ analogously to a multiplicative weight update algorithm. Step \eqref{line:normalizeintro} normalizes the two intermediate weight vectors and \eqref{line:mixing} averages them to obtain the weight vector $w$.

Returning to our example, the green curve in Figure~\ref{fig:gen_error_cube} depicts the generalization error of $\hat f_2$ for various values of $n$ with the mixing parameter $\alpha=\frac{1}{2}$, which is clearly superior to both the standard kernel estimator and neural networks trained by Adam. Interestingly, iterating $\mathtt{IRKM}(\frac{1}{2})$ for a few extra steps leads to an even further decrease in generalization error (red) and better identification of relevant coordinates in Figure~\ref{One step:coordinate_identification_krr}. The following theorem identifies the reason for this improved performance: by running $\mathtt{IRKM}{(\frac{1}{2})}$ for a constant number of iterations, all features with order not exceeding $p+1$ will be learned. Consequently, as long as all relevant coordinates of $f$ are relevant for $f_{\leq p+1}$, after finitely many steps the predictor returned by $\mathtt{IRKM}{(\frac{1}{2})}$ is an asymptotically consistent estimator for $f$. This is the content of the following theorem.

\begin{theorem}[\textbf{Informal}]\label{thm:gener_features_ite}
Suppose that the target function $f(x)=g(x_1,\ldots,x_r)$ is a degree $\ell$ polynomial on $\R^d$.  Suppose that 
the weights  $\E [\partial_{j} f]^2$ are uniformly bounded away from zero for all $j=1,\ldots, r$ as $d \rightarrow \infty$. Consider the regime $n=d^{p+\delta}$ for any $\delta\in (0,1)$ and choose any $\alpha\in (0,1)$. Suppose moreover that the nondegeneracy condition holds:
\begin{itemize}
    \item $\E [\partial_{j} f_{\leq {p+1}}]^2$ is bounded away from zero uniformly in $j=1,\ldots, r$ as $d\to\infty$,
\end{itemize}
Then there exists a horizon $T = O_d(1)$ such that, under an appropriate choice of the safeguard parameter $\epsilon_{\mathrm S}$, the predictor $\hat f_T$ returned by $\mathtt{IRKM}{(\alpha)}$ 
satisfies:
\begin{align*}
    \norm{\hat{f}_T - f}^2_{L_2(\mu)} = o_{d,\P}(1).
\end{align*}
\end{theorem}

Interestingly, returning to our running example, Figure~\ref{fig:compare_task1} shows that $\mathtt{IRKM}{(\frac{1}{2})}$ significantly outperforms $\mathtt{IRKM}{(0)}$ thereby highlighting the importance of the DN-estimator.

\subsubsection{Hierarchical learning.} Theorem~\ref{thm:gener_features_ite} crucially relies on the assumption that all influential coordinates for $f$ are influential for its truncation $f_{\leq {p+1}}$; this is the case in the running example \eqref{eqn:first_exa}. As a concrete example where this fails, consider learning the function 
\begin{equation}\label{eqn:first_exa2}
f(x)=x_1 + x_2 + x_1x_2x_3+x_1x_2x_3x_4\qquad \textrm{for}~x\in\R^{d},
\end{equation}
with a Laplacian kernel and uniform sampling on a hypercube $\{\pm 1\}^d$. Such staircase-like functions have recently been used as a model for hierarchical structures in statistical and machine learning \cite{abbe2022merged,abbe2023sgd}. Figure~\ref{fig:gen_error_cube_huiarch} depicts the generalization error of kernel regression (blue), one-step $\mathtt{IRKM}(\frac{1}{2})$ (green), multi-step $\mathtt{IRKM}{(\frac{1}{2})}$ (red), and neural networks (orange) trained by Adam when learning the function~(\ref{eqn:first_exa2}). The superior performance of $\mathtt{IRKM}{(\frac{1}{2})}$ is striking; how can we understand this phenomenon? First, observe that in the sub-linear sampling regime $n=d^{\delta}$ with $\delta\in (0,1)$, ridge regression can only identify the important coordinates for the first-order truncation $f_{\leq 1}$, namely $x_1$ and $x_2$. This is supported by Figure~\ref{One step:coordinate_identification_hiarch}, which depicts the empirical coordinate weights using the KRR estimator. On the other hand, looking at Figures~\ref{fig:grad_learning} and~\ref{fig:multi_step_hiarch}, we see that $\mathtt{IRKM}{(\tfrac{1}{2})}$ quickly identifies the planted coordinates $x_3$ and $x_4$ as being significant, all still within the sub-linear sampling regime. 

\begin{figure}[h!]
    \centering
    \begin{subfigure}[b]{0.45\linewidth}
        \centering
        \includegraphics[width=\linewidth]{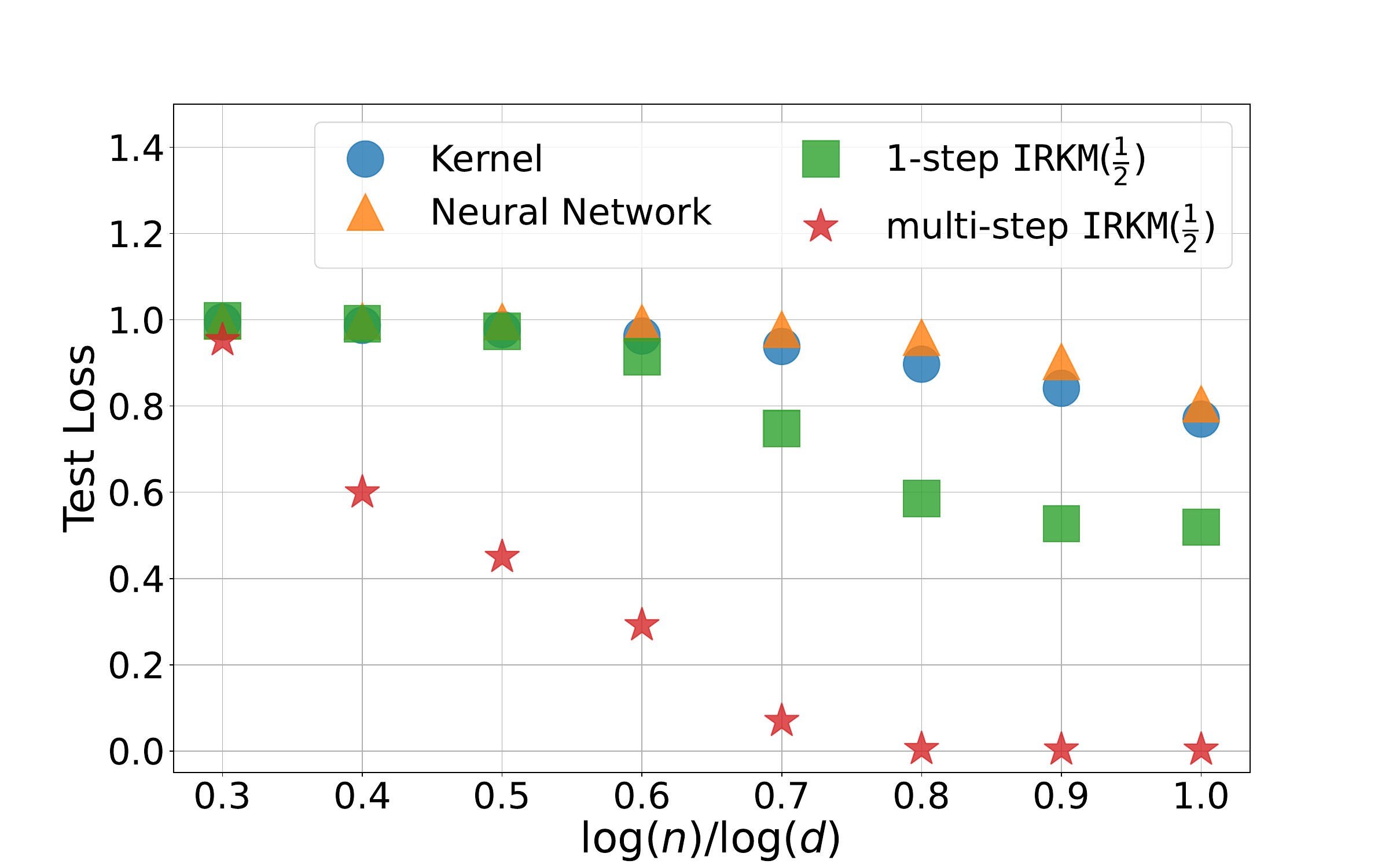}
        \caption{Generalization error\label{fig:gen_error_cube_huiarch}}
    \end{subfigure}
        \begin{subfigure}[b]{0.45\linewidth}
        \centering
        \includegraphics[width=\linewidth]{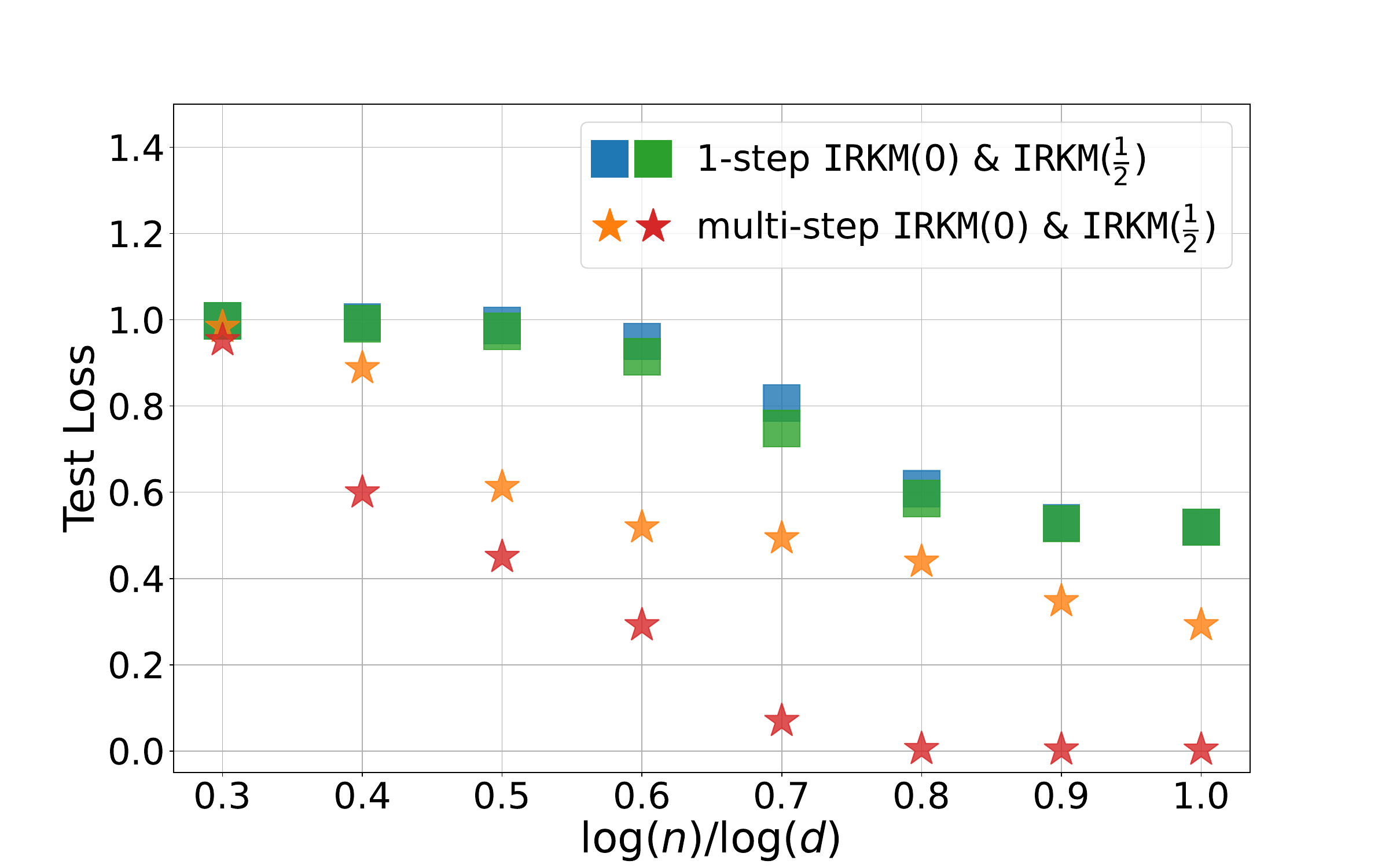}
    \caption{$\alpha = 0$ vs.  $\alpha = \tfrac{1}{2}$ \label{fig:compare_task2}}
    \end{subfigure}\\
            \begin{subfigure}[b]{0.4\linewidth}
        \centering
        \includegraphics[width=\linewidth]{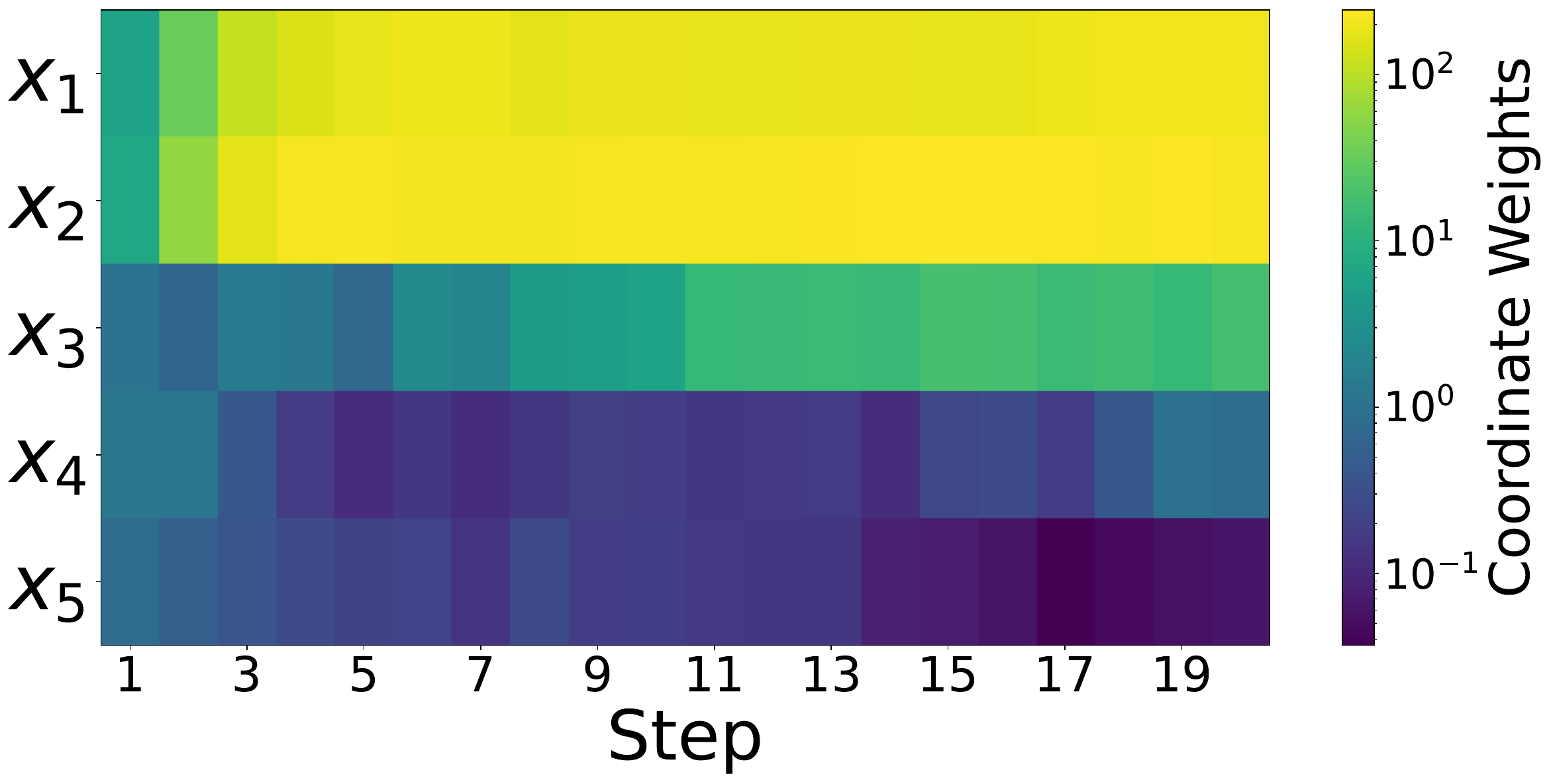}\vspace{-5pt}
        \caption{Multi-step w.r.t. step $(\alpha = \tfrac{1}{2})$\label{fig:grad_learning}}
         \end{subfigure}
    \begin{subfigure}[b]{0.28\linewidth}
        \centering
        \includegraphics[width=\linewidth]{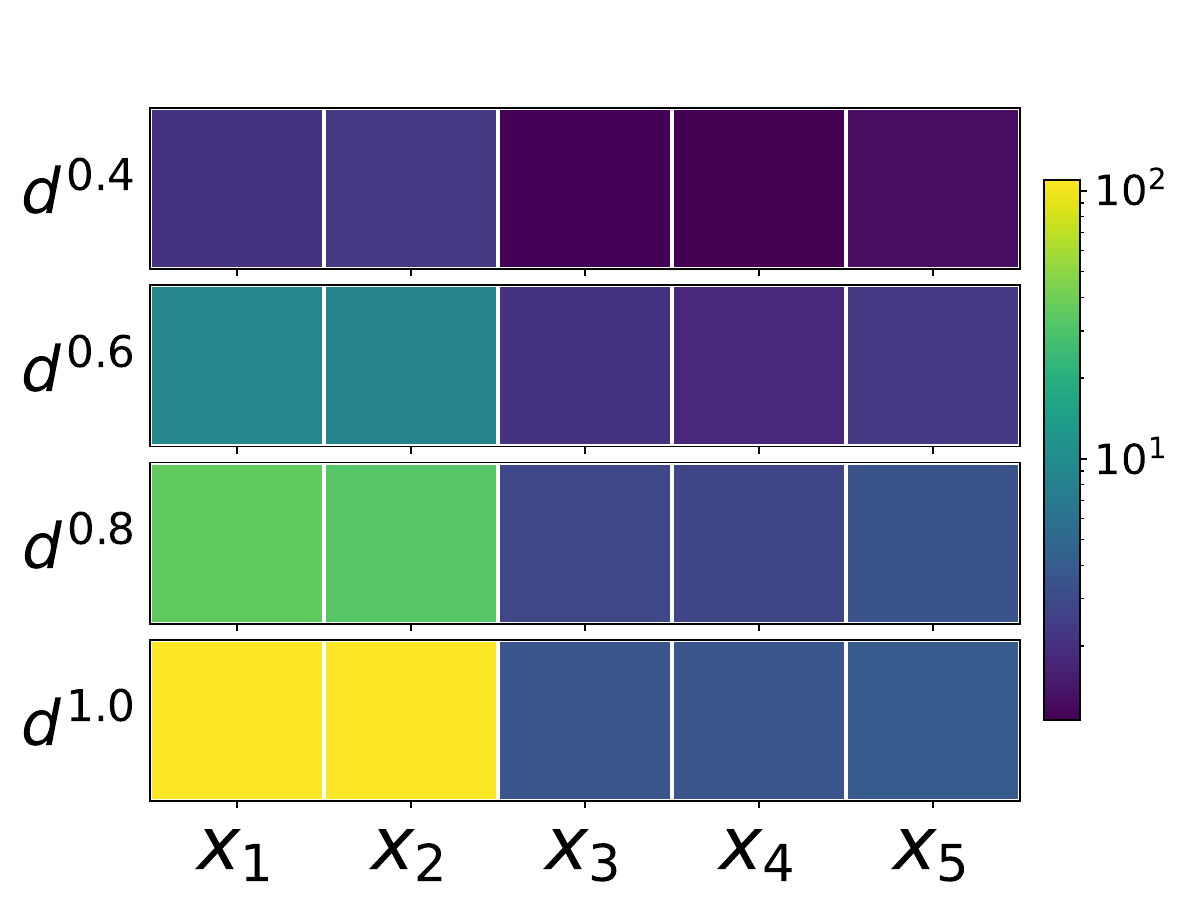}
    \caption{One step $(\alpha = \tfrac{1}{2})$ \label{One step:coordinate_identification_hiarch}}
    \end{subfigure}
        \begin{subfigure}[b]{0.28\linewidth}
        \centering
        \includegraphics[width=\linewidth]{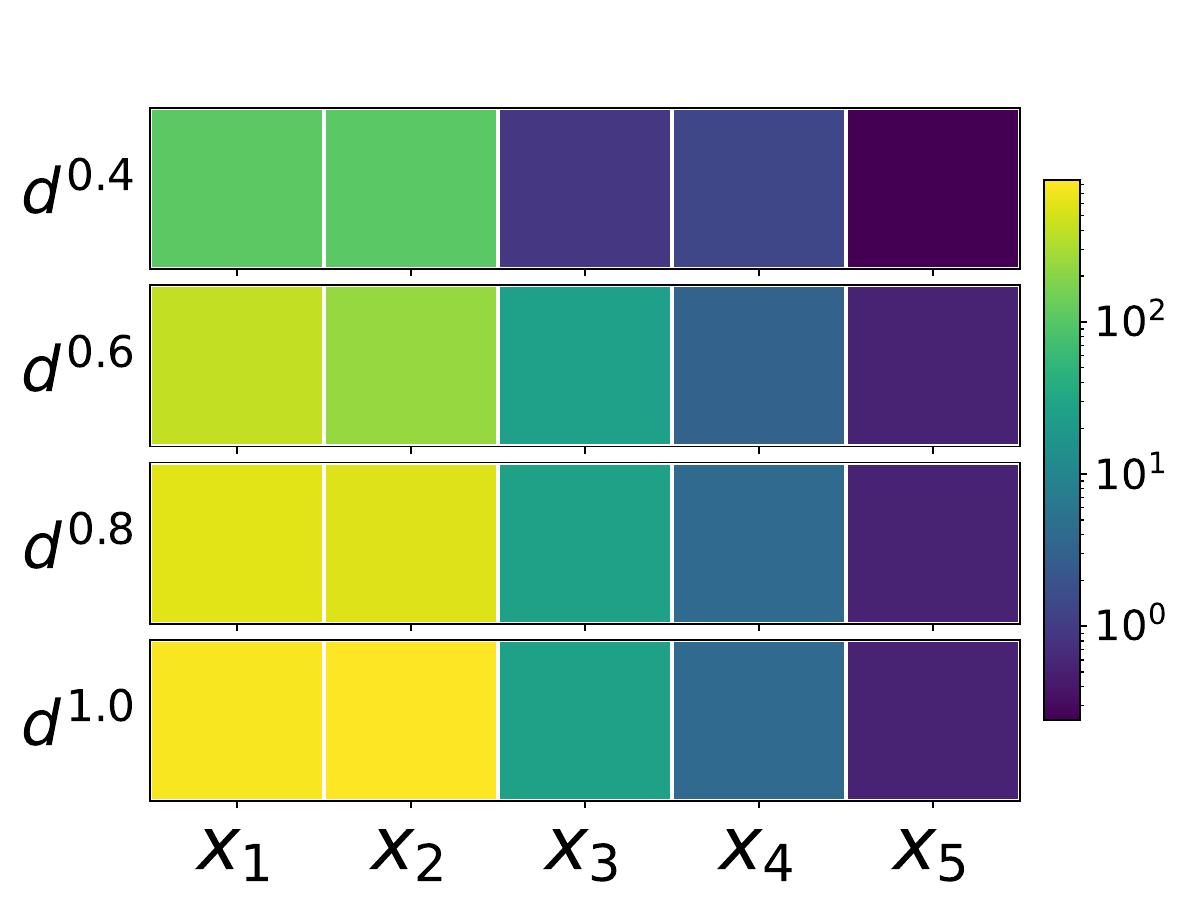}
        \caption{Multi-step $(\alpha = \tfrac{1}{2})$ \label{fig:multi_step_hiarch}}
    \end{subfigure}
    \caption{{\bf %
    (a) Test loss for kernel machines, neural networks and $\mathtt{IRKM}(\tfrac{1}{2})$. %
(b) Comparison of $\mathtt{IRKM}(\alpha)$ performance for $\alpha = 0$ versus $\alpha = \tfrac{1}{2}$.
    (c)The empirical coordinate weights of $\mathtt{IRKM}(\tfrac{1}{2})$ at step $1-20$ in the regime $n = d^{0.6}$. %
    (d,e) The empirical coordinate weights of $\mathtt{IRKM}(\alpha)$ at step $T$ for $\alpha = 0$ and $\tfrac{1}{2}$ respectively. } The samples are i.i.d.\ uniformly drawn from the hypercube ${\{- 1,1\}^d}$ with $d= 500$ and the label is $y = x_1 + x_2 + x_1x_2x_3 + x_1x_2x_3x_4 + \varepsilon$ with $\varepsilon\sim \mathcal{N}(0,0.1^2)$. We use Laplacian kernel for both kernel machines and $\mathtt{IRKM}(\alpha)$. The $\mathtt{IRKM}(\alpha)$ algorithm is iterated for at most $T=20$ steps. We train a fully-connected network by Adam,  varying the widths and batch sizes as $\{128, 256, 512\}$, depth as $\{2,3,4,5,6\}$ and learning rate as $\{10^{-3}, 10^{-4}\}$ and report the smallest test loss.  For all experiments, we report an average of $10$ runs. }
    \label{fig:task_2}
\end{figure}

We now explain this phenomenon. To this end, recall that an orthonormal basis for $L_2(\mu)$ is given by Fourier-Walsh monomials:
$$\phi_S(x)=\prod_{i\in S} x_i\qquad \forall S\subset[d].$$
Following \cite{abbe2023sgd}, we say that a function $f=\sum_{S\subset[d]}b_S\phi_S$ has {\em leap complexity $k$} if the nonzero coefficients $\{b_S\}$ can be ordered in such a way that the corresponding sets $\{S_{i}\}_{i=1}^m$ satisfy 
$$|S_1|\leq k \qquad \textrm{and}\qquad \left|S_i\setminus \midcup_{1\leq j<i} S_j\right|\leq k,$$
and $k$ is the smallest integer for which this is possible.
In other words, $f$ has leap complexity $k$ if $k$ is the smallest integer such that its nonzero coefficients can be ordered in such a way that the corresponding sets grow by at most $k$ indices.  In particular, the function \eqref{eqn:first_exa2} has leap complexity $k=1$. 
We use ${\rm Leap} (f)$ to denote the leap complexity of $f$. The following theorem shows that $n=d^{{\rm Leap}(f)-1+\delta}$ samples suffices for $\mathtt{IRKM}(\alpha)$  with $\alpha\in (0,1)$ to learn any function $f$, where $\delta\in (0,1)$ is arbitrary.

\begin{theorem}[\textbf{Informal}]\label{thm:staricase_learning}
 Consider a target function $f(x)=g(x_1,\ldots, x_r)$ that is a degree $\ell$ polynomial on $\R^d$ with $r=O_d(d^s)$ influential coordinates for some $s\in [0,1)$, meaning that the coordinate weights $\E [\partial_{j} f]^2$ are uniformly bounded away from zero for all $j=1,\ldots, r$ as $d \rightarrow \infty$. Suppose ${\rm Leap}(f) \geq 1$ and consider the regime $n = d^{{\rm Leap}(f)-1+\delta}$ with $\delta \in (0,1)$. Suppose, moreover, that the sparsity level satisfies $s\leq \delta/(\ell+1)$. 
 Then for any mixing parameter $\alpha\in(0,1)$ in step \eqref{line:mixing} and under an appropriate setting of the safeguard $\epsilonS$ in steps \eqref{line:safeintro_1} and \eqref{line:safeintro_2},  there exists a horizon $T = O_d(1)$ such that the predictor $\hat f_T$ returned by $\mathtt{IRKM}{(\alpha)}$ 
satisfies:
\begin{align*}
    \norm{\hat{f}_T - f}^2_{L_2(\mu)} = o_{d,\P}(1).
\end{align*}\label{thm:main_intro}
\end{theorem}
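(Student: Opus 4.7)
The plan is to proceed by induction on the iterate $t$, tracking which Fourier--Walsh terms of $f$ have been ``unlocked'' by time $t$. Enumerate the nonzero-coefficient subsets $S_1,S_2,\ldots,S_m$ witnessing ${\rm Leap}(f)=k$ and set $U_t:=\bigcup_{i\leq t}S_i$. The inductive invariant states that after step $t$ of IRKM, the weight vector $w^{(t)}$ concentrates on $U_t$: namely $w^{(t)}_j=\Theta(1)$ for $j\in U_t$ and $w^{(t)}_j=O(\epsilonS)$ for $j\notin U_t$, consistent with the normalization $\|w^{(t)}\|_1=d$. Since $|U_m|\leq r=O_d(d^s)$ and $m=O_d(1)$, once $t=m$ we may invoke Theorem~\ref{thm:gener_features} (applied to the reweighted kernel) to conclude $\|\hat f_T-f\|_{L_2(\mu)}^2=o_{d,\P}(1)$ for $T=m+1$.

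For the base case $t=1$, I apply Theorem~\ref{infthm:main_est} with $p=k$ (extending it to accommodate $\delta\in(-\tfrac12,0)$ by the analogous small-$n$ KRR estimate). Because every coordinate in $S_1$ is influential for $f_{\leq k}$, the empirical gradient weights $\E_n(\partial_j\hat f_1)^2$ equal $\Theta(1)$ for $j\in S_1$ and are $o_{d,\P}(1)$ for $j\notin S_1$; the safeguard $\epsilonS$ (chosen to dominate the lower-order fluctuations but vanish with $d$) then secures the invariant after the normalization step.

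For the inductive step, assume the invariant holds at $t$. The rescaled kernel $K_{w^{(t)}}(x,z)=K(\sqrt{w^{(t)}}\odot x,\sqrt{w^{(t)}}\odot z)$ preserves coordinates in $U_t$ while attenuating each coordinate outside $U_t$ by $\sqrt{\epsilonS}$. In the rescaled picture, a Fourier--Walsh monomial $\phi_S$ is scaled by $\prod_{j\in S\setminus U_t}\sqrt{\epsilonS}=\epsilonS^{|S\setminus U_t|/2}$, while the ``effective degree'' of $\phi_{S_{t+1}}$ — i.e., the number of genuinely high-dimensional variables participating in it — is $|S_{t+1}\setminus U_t|\leq k$, because coordinates in $S_{t+1}\cap U_t$ effectively act as constant-weight features and do not contribute to the relevant complexity count. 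Re-applying the anisotropic analogue of Theorem~\ref{infthm:main_est} to the rescaled regression with sample budget $n=d^{k+\delta}$ shows that the new empirical gradient weights detect exactly the coordinates in $S_{t+1}\setminus U_t$ with magnitude $\Theta(\epsilonS^{|S_{t+1}\setminus U_t|-1})$, which after normalization and safeguarding lifts these to $\Theta(1)$ while leaving all irrelevant coordinates at $O(\epsilonS)$. This yields the invariant at $t+1$ with $U_{t+1}=U_t\cup S_{t+1}$.

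The main obstacle, and the place where the quantitative regime restrictions on $s$ arise, is the anisotropic extension of Theorem~\ref{infthm:main_est}: propagating the $O(d^{-1})$ and $o(1)\cdot\E(\partial_jf_{\leq p})^2$ errors through a reweighted kernel — whose effective dimension is $\approx|U_t|+d\epsilonS$ — and summing over the $r=O_d(d^s)$ influential coordinates. A careful bookkeeping of $d^s$-many $\Theta(1)$ weights against $d-r$ safeguarded weights of size $\epsilonS$, combined with the generalization-error expansion of KRR in the anisotropic kernel, gives the required $\Theta(1)$-versus-$O(\epsilonS)$ gap precisely when $s\leq(2\delta+1)/(2\ell+1)$ in the undersampled regime $\delta<0$ and $s\leq\delta/\ell$ in the oversampled regime $\delta>0$. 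These two inequalities are exactly what balances the attenuation factor $\epsilonS^{\ell/2}$ against the $d^{-(1-2\delta)_+}$ and $d^{-1}$ error terms inherited from Theorem~\ref{infthm:main_est}. Iterating the invariant for $T=m=O_d(1)$ steps and one final kernel regression concludes the proof.
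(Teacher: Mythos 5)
Your high-level induction---unlocking the sets $S_1,\dots,S_m$ one at a time by reweighting the kernel and re-running KRR, and tracking an ``effective degree'' $|S_{t+1}\setminus U_t|$---is the right skeleton and matches the paper's Appendix B.5 in spirit (the paper formalizes effective degree in Definition~\ref{def:eff_degree}). However, there are at least two genuine gaps.

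First, your stated weight invariant is inconsistent with the normalization the algorithm imposes. You posit $w^{(t)}_j=\Theta(1)$ on $U_t$ and $w^{(t)}_j=O(\epsilonS)$ elsewhere, with $\|w^{(t)}\|_1=d$. But if $\epsilonS=o_d(1)$ and $|U_t|=O_d(d^s)$ with $s<1$, this gives $\|w^{(t)}\|_1=O(d^s)+O(d\epsilonS)=o(d)$, a contradiction. The correct picture after normalizing by $d/\|w\|_1$ is that \emph{learned} coordinates sit at $\Theta(d^\eta)$ (not $\Theta(1)$) and \emph{unlearned} coordinates sit at $\Theta(1)$ (not $O(\epsilonS)$), for a constant $\eta\in(0,1-s)$ determined by $\epsilonS=d^{-\eta}$. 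This is not cosmetic: the statement $\eta<1-s$ is exactly where the sparsity constraint enters, and your $\Theta(1)$-vs-$O(\epsilonS)$ dichotomy hides it.

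Second, and more substantively, your induction asserts that at each step the newly detected coordinates $S_{t+1}\setminus U_t$ are lifted all the way from $O(\epsilonS)$ to $\Theta(1)$ ``after normalization and safeguarding.'' That single-step lift holds only in the oversampled regime $\delta>0$, where the coordinates-to-learn sit at effective degree below the sampling threshold $p+\delta$. In the undersampled regime $\delta\in(-\tfrac12,0)$, the relevant monomial $\phi_{S_{t+1}}$ has effective degree \emph{above} $p+\delta$, so the detected gradient signal is only $\Theta(d^{-2\kappa})$ for some $\kappa>0$; the coordinate becomes \emph{partially} learned, not fully learned. The paper's Claim~\ref{claim:partial_to_partial} shows the order of its weight must then increase geometrically over several subsequent iterations (the $\lambda$-exponent decreases by at least $2^t\sigma$) before full learning occurs. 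Without this partial-learning dynamics you cannot get the $\delta<0$ case at all, and it is precisely in tracking the partial-learning exponents that the constraint $s\le(2\delta+1)/(2\ell+1)$ is derived; your proposal does not produce it (you acknowledge ``careful bookkeeping'' but the mechanism is absent). Relatedly, your final appeal to Theorem~\ref{thm:gener_features} does not apply as stated, since that theorem starts from the unweighted kernel at $w=\mathbf{1}_d$; the proof needs the reweighted-kernel generalization bound (the paper's Theorem~\ref{thm:learning_hypercube}, Eq.~\eqref{eq:generalization_error_bound}) once all of $\mathsf{L}_p f^*$ is unlocked.
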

Applying Theorem~\ref{thm:staricase_learning} to the example~\eqref{eqn:first_exa2} with $s=0$, we see that  $\mathtt{IRKM}{(\frac{1}{2})}$ in a constant number of steps produces a predictor $\hat f_T$ that is asymptotically consistent for estimating $f$ as $d$ tends to infinity.  Theorem~\ref{thm:staricase_learning} is broader still, as it allows the sparsity level $r=d^s$ of the target function to grow with $d$.

It is important to compare Theorem~\ref{thm:main_intro} to the existing results in the literature \cite{abbe2022merged,abbe2023sgd}, which  analyzed the performance of neural networks trained using (projected) SGD when learning polynomials with bounded leap complexity ${\rm Leap}(f^*)$ from Gaussian or hypercube data.
In particular, the paper \cite{abbe2023sgd} showed that $\tilde{\Theta}_d(d^{\max({\rm Leap}(f^*)-1,1)})$
projected SGD steps suffice for a two-layer neural network  to learn a sparse polynomial $f^*$.
In this work, we consider learning with kernel machines and show that $\Theta_d(d^{{\rm Leap}(f^*)-1})$ samples suffices to learn $f^*$ after solving finitely many $n\times n$ kernel linear systems. Note that the sample complexity of $\mathtt{IRKM}{(\frac{1}{2})}$ is better than neural networks by a factor of $d$ when learning leap-one functions and matches  neural networks for functions with leap complexity ${\rm Leap}(f^*)>1$. In particular, Figure~\ref{fig:task_1} indeed shows that when trained with ADAM on the leap one function, we see that ADAM performs comparably to kernels until $n = \Omega(d).$ 
We note that in all of our experiments, $\mathtt{IRKM}{(\frac{1}{2})}$ typically outperforms neural networks and at worst performs comparably.

\paragraph{Iteratively reweighted least squares and recursive feature machines.}
$\mathtt{IRKM}{(\alpha)}$ is closely related to existing algorithms in the literature when the mixing coefficient is $\alpha = 0$. First, if the kernel is linear and the resampling \eqref{line:resampleintro} and normalization \eqref{line:normalizeintro} steps are omitted, then $\mathtt{IRKM}{(0)}$  reduces exactly to the celebrated Iteratively Re-weighted Least-Squares (IRLS) algorithm \cite{daubechies2010iteratively,candes2008enhancing} for recovering a sparse vector $z_{\sharp}\in\R^d$ from finitely many linear measurements $A z_{\sharp}=b$. 
In this way, $\mathtt{IRKM}{(\frac{1}{2})}$ can be understood as a generalization of %
IRLS for learning sparse nonlinear functions. We also note %
that extensions of %
IRLS for low-rank matrix recovery, where the weights being updated are themselves matrices rather than vectors, have been studied in \cite{mohan2012iterative,fornasier2011low}.

\begin{algorithm}[t]
\caption{$\mathtt{RFM}(\alpha)$\hfill {\bf R}ecursive {\bf F}eature {\bf M}achines}
\begin{algorithmic} %
\State \textbf{Initialize:} $M=I_d$ and fix a safeguard parameter $\epsilonS>0$ and mixing parameter $\alpha\in[0,1]$.
\For{$t=1,\ldots, T$}:
\State Sample fresh data  $\{(x^{(i)},y_i)\}_{i=1}^n$ and define the reweighted kernel $K_M(x,z)=K(\sqrt{M}x,\sqrt{M}z)$.
\State Obtain predictor $\hat f_t(x)$ from Kernel Ridge Regression on the data $\{(x^{(i)},y_i)\}_{i=1}^n$ with the kernel $K_M$.
    \State Update the first weight matrix 
    $M^{[1]}=\epsilonS I_d+\frac{1}{n}\sum_{i=1}^n \nabla_x \hat f_t(x^{(i)})\nabla_x \hat f_t(x^{(i)})^\top$.
        \State Update the second weight matrix 
    $M^{[2]}=\epsilonS I_d+ \frac{1}{n}M^{\frac{1}{2}}\gD(M)M^{\frac{1}{2}} $
\State Normalize the eigenvalues   $M^{[k]}= \frac{d}{\tr(M^{[k]})} \cdot M^{[k]},~~~k=1,2$. 
\State Update the weight matrix $M = (1-\alpha)\cdot M^{[1]} + \alpha\cdot M^{[2]}$.
\EndFor
\end{algorithmic}\label{alg:fullrfm}
\end{algorithm}

\begin{figure}[H]
    \centering
    \begin{subfigure}[b]{0.45\linewidth}
        \centering
        \includegraphics[width=\linewidth]{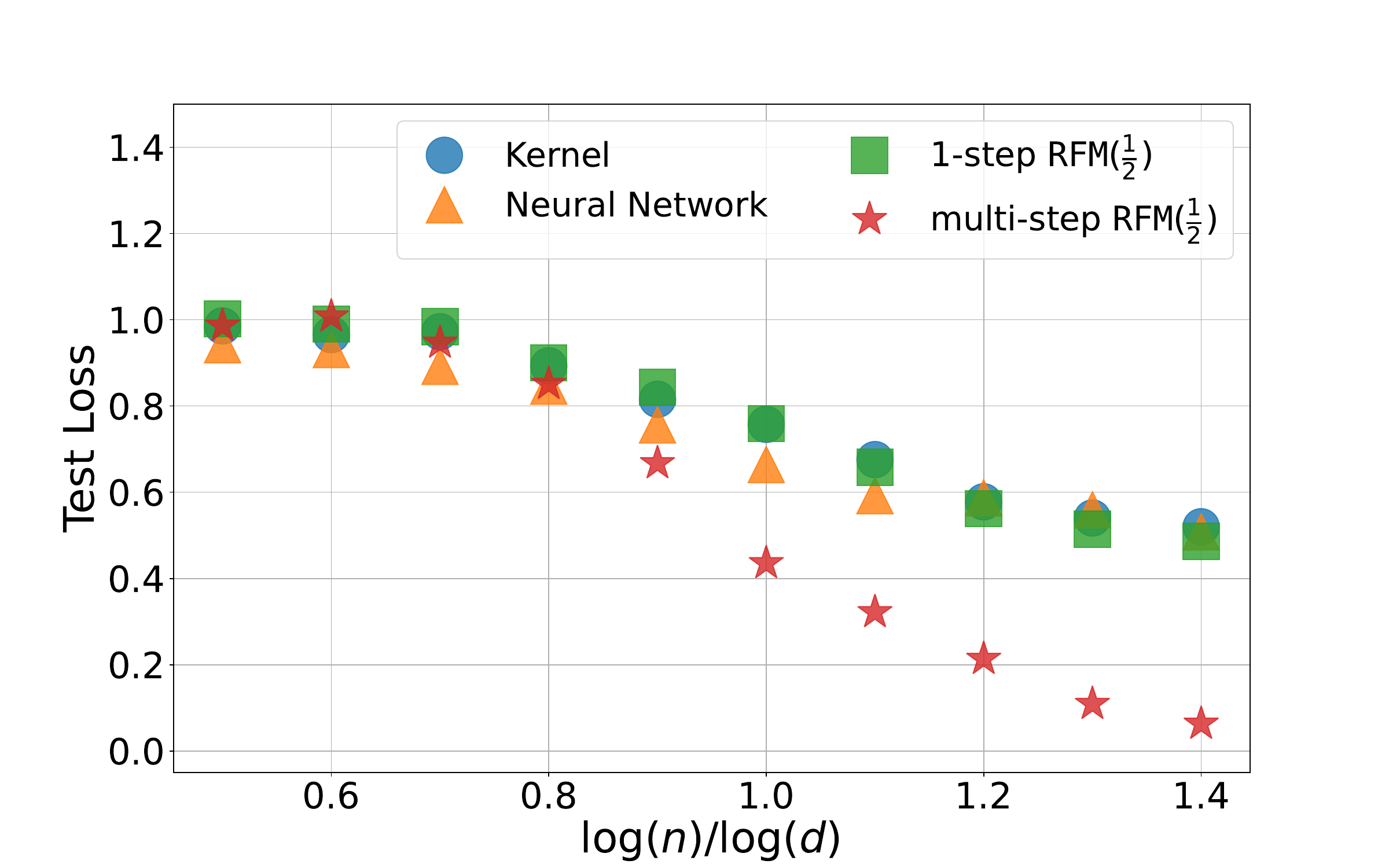}
        \caption{Generalization error\label{fig:gen_error_cube_huiarch2}}
    \end{subfigure}
    \begin{subfigure}[b]{0.45\linewidth}
        \centering
        \includegraphics[width=\linewidth]{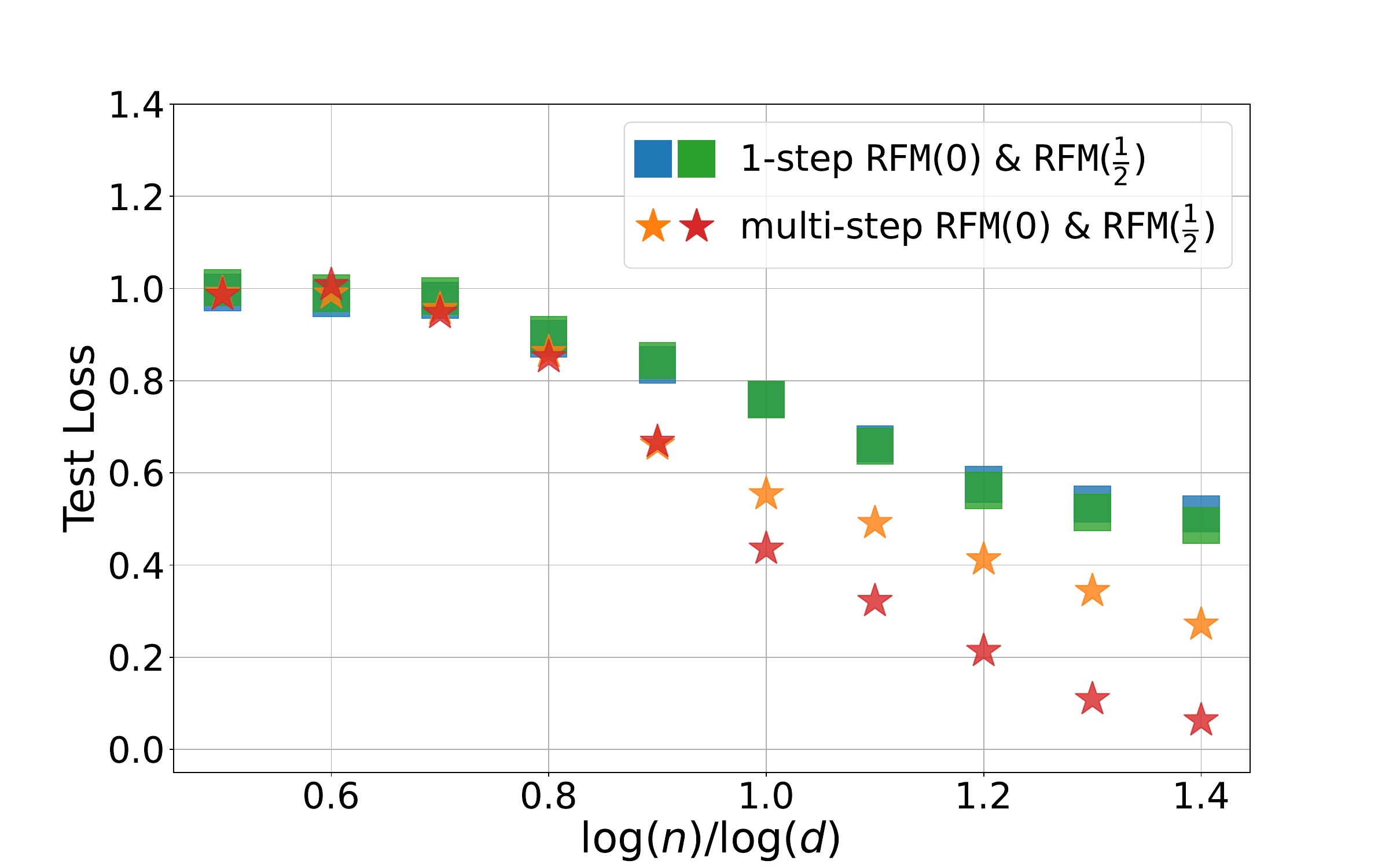}
    \caption{$\alpha = 0$ vs.  $\alpha = \tfrac{1}{2}$\label{compare:task_full}}
    \end{subfigure}\\
            \begin{subfigure}[b]{0.45\linewidth}
        \centering
        \includegraphics[width=\linewidth]{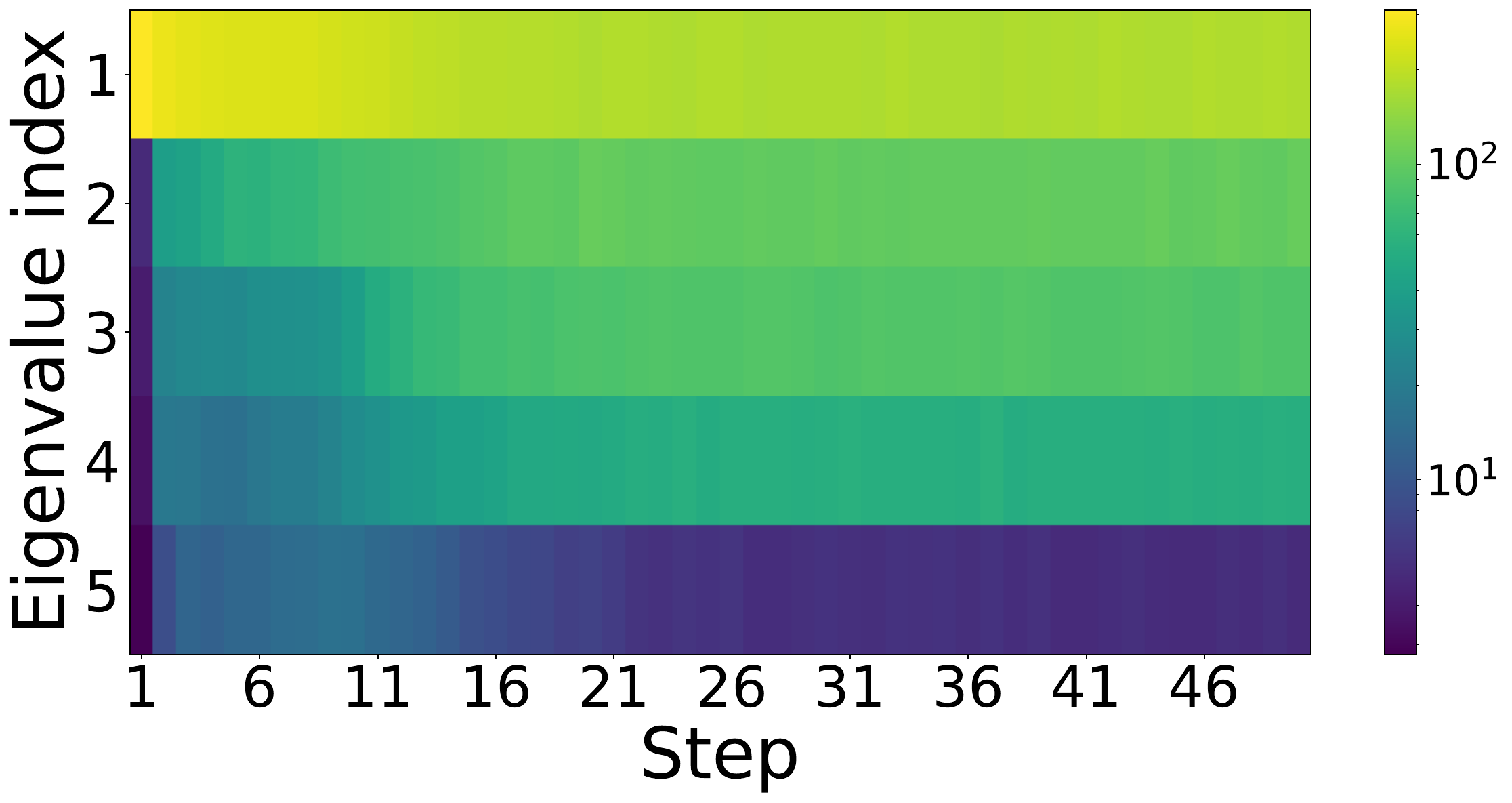}
        \caption{Multi-step identification w.r.t. step\label{fig:grad_learning2}}
         \end{subfigure}
        \begin{subfigure}[b]{0.45\linewidth}
        \centering
        \includegraphics[width=\linewidth]{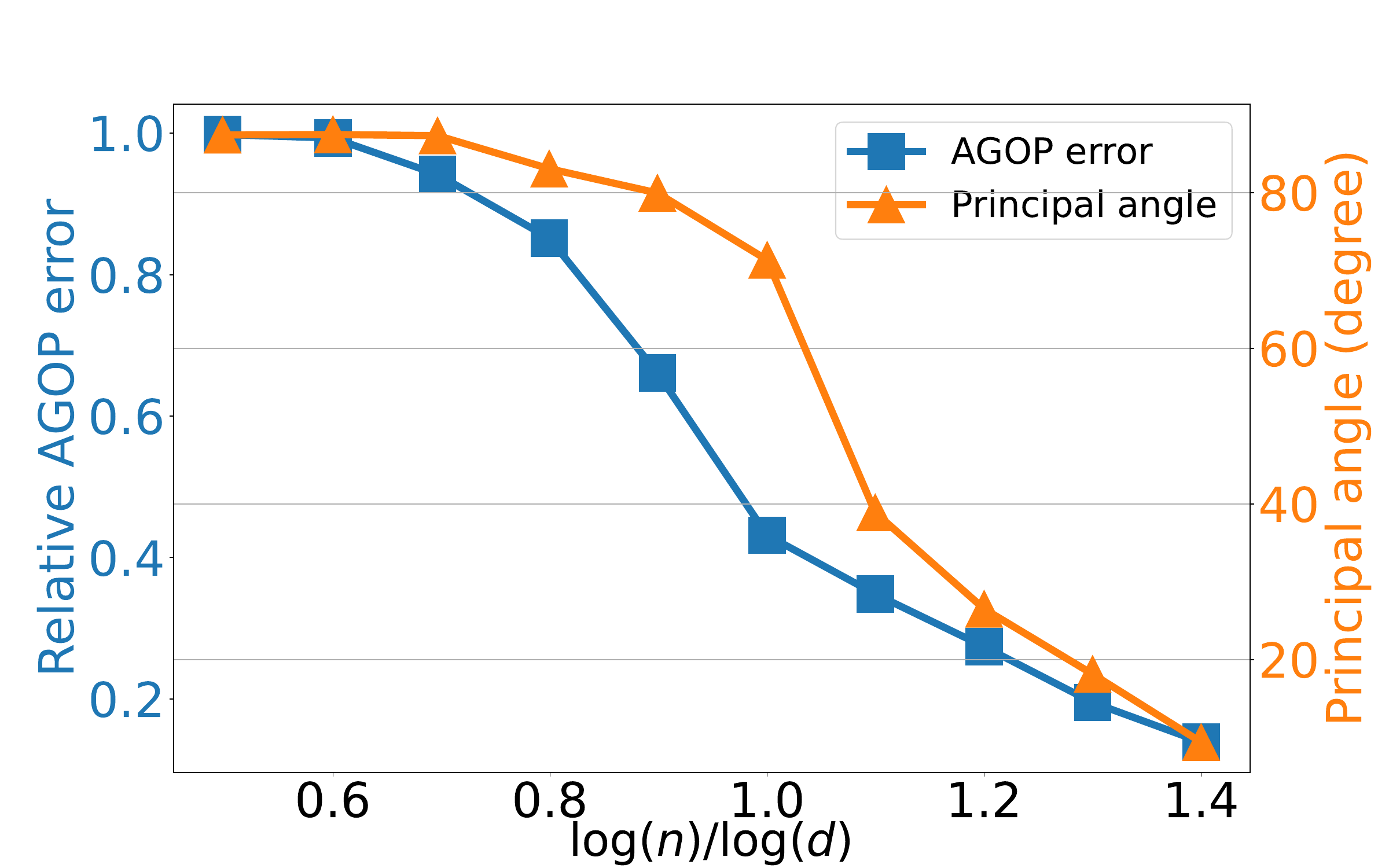}
    \caption{AGOP error \label{One step:coordinate_identification_hiarch2}}
    \end{subfigure}
    \caption{{\bf %
    (a) Test loss for kernel machines, neural networks and $\mathtt{RFM}(\tfrac{1}{2})$ (Algo.~\ref{alg:fullrfm}). 
    (b) Comparison of $\mathtt{RFM}(\alpha)$  performance for $\alpha = 0$ versus $\alpha = \tfrac{1}{2}$.
    (c) Eigenvalues of AGOP at step $1-50$ in the regime $n = d^{1.4}$. %
    (d) Relative AGOP error and principal angle between the top-4 eigenspace of AGOPs.  } 
    The samples are i.i.d.\ uniformly drawn from ${\{-1, 1\}^d}$ with $d= 500$ and label is $y = z_1 + z_2 + z_1z_2z_3 + z_1z_2z_3z_4+ \varepsilon$ with $\varepsilon\sim \mathcal{N}(0,0.1^2)$ and  $z = Ux$ where $U$ is a random rotation matrix. We use the Laplacian kernel for both kernel machines and $\mathtt{RFM}(\alpha)$. We train a fully-connected network by Adam,  varying the widths and batch sizes as $\{128, 256, 512\}$, depth as $\{2,3,4,5,6\}$ and learning rate as $\{10^{-3}, 10^{-4}\}$ and report the smallest test loss. For all experiments, we report an average of $10$ runs.}
    \label{fig:task_3}
\end{figure}

$\mathtt{IRKM}{(0)}$ is also closely related to the Recursive Feature Machines (RFM) algorithm introduced in \cite{rfm_science} for learning multi-index models. Indeed, if the resampling \eqref{line:resampleintro} and normalization \eqref{line:normalizeintro} steps  are omitted and one sets $\epsilonS=0$,
then $\mathtt{IRKM(0)}$ is a variant of RFM~\cite{rfm_science} where the Average Gradient Outer Product (AGOP) $M=\frac{1}{n}\sum_{i}\nabla \hat f(x^{(i)})\nabla \hat f(x^{(i)})^\top$ is replaced by its diagonal. We stress, however, that the fresh resampling \eqref{line:resampleintro} and normalization steps \eqref{line:normalizeintro}, and safeguarding in \eqref{line:safeintro_1} with $\epsilonS>0$ are essential both for our theoretical results to be valid and in our experiments when learning hierarchical functions. 
Moreover, $\mathtt{IRKM(\alpha)}$ suggests a modification of RFM for learning multi-index models (i.e., sparse functions in an unknown basis) online, which draws fresh samples and reweights the data by an average of the AGOP and the DN-estimator, which is naturally defined as
\begin{equation}
\mathcal{D}(M)_{i,j}:=\beta_M\tran \tfrac{\partial K_M(X,X)}{\partial M_{i,j}} \beta_M,~~~~\forall i,j\in[d].
\end{equation}
We record this modified algorithm as Algorithm~\ref{alg:fullrfm}, and call it $\mathtt{RFM(\alpha)}$.  
 We conjecture that many of our theoretical results should extend from learning sparse functions with $\mathtt{IRKM(\alpha)}$ to learning multi-index models with $\mathtt{RFM}(\alpha)$. %
 As an illustration, we apply $\mathtt{RFM}(\alpha)$ for learning the function \eqref{eqn:first_exa2}, precomposed by a random rotation. The results are recorded in Figure~\ref{fig:task_3}, where we clearly see that the unknown basis is gradually learned by the algorithm, albeit with higher sample complexity and a larger number of steps ($20$ vs.\ $50$). We measure the learning of the basis using the relative AGOP error and the principal angle against the ground truth AGOP, namely $\E_x\brac{\nabla f(x)\nabla  f(x)^\top}.$ As in the vector case, we observe in Figure~\ref{compare:task_full} the superior performance of $\mathtt{RFM}(\frac{1}{2})$ over $\mathtt{RFM}(0)$.

\subsection{Further related works}\label{sec:related_literature}
\paragraph{Kernel learning in the polynomial scaling regime.}
Kernel Ridge Regression is a classical technique in machine learning. Generalization guarantees for kernel ridge regression are by now classical in the regime where the dimension $d$ of the feature vectors is fixed and the number of samples $n$ can grow. See for example the monographs \cite{bach2024learning,cucker2007learning,scholkopf2002learning} on the subject. More recently, the high-dimensional regime, where $d$ and $n$ can grow simultaneously, has gained much attention. We refer the reader to the representative articles on the subject \cite{elkaroui2010spectrum,liang2020just,belkin2019reconciling,mei2022generalization,bordelon2020spectrum,cao2021generalization,pedregosa2021convergence}. The most relevant setting for us is the polynomial scaling regime $n= d^{p+\delta}$ for some $p\in \mathbb{N}$ and $\delta\in(0,1)$. In this regime, it is known that kernel methods can learn (at most) a degree-$p$ polynomial when the data is uniformly distributed on the sphere~\cite{ghorbani2021linearized} and the hypercube~\cite{mei2022generalization}. Moreover, the subsequent work~\cite{donhauser2021rotational} established analogous upper and lower bounds on the generalization error for a broader class of data distributions and kernels, including radial basis function (RBF) kernels.

\paragraph{Feature learning with neural networks and kernel machines.} The learning capacity of infinitely wide neural networks under specific parameterizations is equivalent to that of kernel machines with the Neural Tangent Kernel (NTK)~\cite{jacot2018neural}. However, neural networks empirically exhibit superior generalization performance compared to kernel machines, a phenomenon attributed to their ability to implicitly learn low-dimensional features which suffice for accurate prediction. For instance, a series of works has demonstrated a clear separation in generalization performance between neural networks and NTKs~\cite{allen2019can,li2020learning,chen2020towards}.   To better understand the feature learning phenomenon, a significant line of research focuses on single- and multi-index models, where the target function takes the form $f^*(x) = g(Ux)$ with $U:\R^d \rightarrow \R^r$ and $d \gg r$.
It has been shown that neural networks benefit from a small intrinsic dimension $r$, achieving improved sample complexity, whereas kernel methods do not exhibit similar gains~\cite{bietti2022learning,mousavineural,ba2022high,damian2022neural,barak2022hidden,daniely2020learning}.

Recently, kernel-based algorithms that explicitly learn features have gained some attention, with prominent examples including \cite{rfm_science,li2025kernel,xia2002adaptive,fukumizu2009kernel}. 
The most relevant work for our paper is the Recursive Feature Machines algorithm, introduced in~\cite{rfm_science}, which was experimentally shown to efficiently learn multi-index models and achieve impressive performance on tabular datasets and matrix completion tasks~\cite{radhakrishnan2024linear}, among others.  RFM learns the features through iteratively learning the so-called Average Gradient Outer Product, a matrix that captures the relevant low-dimensional subspace of the true function. The (implicit) AGOP learning mechanism has also been observed in deep ReLU Neural Networks with linear activation layers~\cite{parkinson2023relu}.
The paper \cite{chen2023kernel} provided theoretical guarantees demonstrating that AGOP learning is inherent in training two-layer models, where the dimensionality reduction happens in the first linear layer weights. In a parallel line of work, \cite{follain2024enhanced} introduces a model combining kernel methods and neural networks to promote feature learning. We note also that the paper \cite{aristoff2024fast} proposed a similar normalization scheme for AGOP as ours in the implementation of RFM for better bandwidth selection. 

Another interesting line of work studies kernel-based feature selection in nonparametric statistics~\cite{jordan2021self,ruan2021taming}. In particular, \cite{ruan2021taming} shows that by explicitly penalizing the $L_1$ norm of the coordinate weights,  kernel machines can recover influential coordinates and the hierarchical interactions up to leap complexity one.

\paragraph{Learning with orthogonal polynomials.}  A recent line of work focuses on learning sparse functions that depend only on a small number of basis elements in the $L^2$-space induced by the data distribution and analyzes the learning power of neural networks. Note that sparse functions can be viewed as a special case of multi-index models.  Neural networks, unlike classical methods, do not require prior knowledge of the functional basis, and were shown to be more sample-efficient than linear methods in a variety of settings; some representative examples of this line of work are \cite{bach2017breaking,Schmidt2020nonparametric,celentano2021minimum,damian2022neural,barak2022hidden,kou2024matching,ge2017learning,lee2024neural,joshi2024complexity}. Additionally,  the recent work \cite{tan2024statistical} showed 
 that Classification and Regression Trees (CART) can learn leap-$1$  functions with $O_d(\log(d))$ samples.

\subsection{Outline}
The outline of the paper is as follows. Section~\ref{sec:notation} sets forth basic notation that we will use. Section~\ref{sec:prelin_ortho} describes the framework of orthogonal polynomials, with a particular focus on Hermite and Fourier-Walsh polynomials. Section~\ref{sec:main}  develops the main results of the paper. The final Section~\ref{sec:numerics} provides further experimental evidence that illustrates our theoretical results.

\section{Notation}\label{sec:notation}
We denote the set of natural numbers by $\mathbb{N}$ and let $\mathbb{N}^+$ denote the set $\mathbb{N}\slash\{0\}$. For a positive integer $n$, the symbol $[n]$ will stand for the set $\{1,2,\ldots,n\}$. A vector $\lambda\in\mathbb{N}^d$ with nonnegative entries is called a multi-index and we denote its modulus by $|\lambda| = \lambda_1+\lambda_2+\ldots+\lambda_d$. For any vector $x\in \R^d$ and a multi-index $\lambda\in\mathbb{N}^d$, we define the Fourier-Walsh polynomial $x^\lambda = x_1^{\lambda_1}x_2^{\lambda_2}\ldots x_d^{\lambda_d}$. Throughout the paper, given a sequence of vectors $v_i^{(k)}\in \R^d$, the subscript $i$ denotes the coordinate index of a single vector while the superscript $(k)$ denotes the index of the vector in the sequence.

 For any vectors $u,v \in \R^d$, we use the symbol $\inner{u,v} = \sum_{i=1}^d u_iv_i$ to denote their inner product and  we denote the $\ell_p$-norm of $v$ by $\norm{v}_p = (\sum_{i=1}^d |v_i|^p)^{1/p}$.
For any matrix $M$, we denote its $ij$'th entry by $M_{i,j}$, its $i$'th row by $M_{[i,:]}$, and its $j$'th column by $M_{[:,j]}$. We write $\diag(M)$ for the vector of diagonal entries of $M$, and for a vector $v$, the symbol $\Diag(v)$ denotes the diagonal matrix whose diagonal entries are the elements of $v$. The symbols $\snorm{M}$ and $\norm{M}_F$ will stand for the spectral and Frobenius norms of a matrix $M$, respectively. We use the Hadamard product, denoted by $\odot$, to represent element-wise multiplication between vectors or matrices, meaning $(A\odot B)_{i,j} = A_{i,j} B_{i,j}$.

For any function $f,g\colon\R^d\to \R$, the symbol $f \lesssim g$ means that there exists a constant $C>0$ satisfying $f(x) \leq C g(x)$ for all $x$. The symbol $f \gtrsim g$ is defined similarly.  We write $f \asymp g$ to indicate that $f$ and $g$ are of the same order, meaning both $f \lesssim g$ and $f \gtrsim g$ hold. For any functions $f,g\colon\R^d\times \mathbb{N}\to \R$, the asymptotic notation $f=O_d(g)$ means that there exists a constant $C>0$ and $\bar d\in \mathbb{N}$ such that the inequality $|f(x,d)|\leq C g(x,d)$ holds for all $x$ and all $d\geq \bar d$. Similarly, notation $f=o_d(g)$ means that for any $\epsilon>0$ there exists $d_{\epsilon}\in \mathbb{N}$ such that the estimate $|f(x,d)|\leq \epsilon g(x,d)$ holds for all $x$ and all $d\geq d_{\epsilon}$.

Finally, we use $O_{d,\P}(\cdot)$ to denote the big-O in probability relation and similarly $o_{d,\P}(\cdot)$ for the little-o in probability relation. Namely, for two sequences of random variables $f_d$ and $g_d$, the equation $f_d = O_{d,\P}(g_d)$ means that for any $\epsilon>0$, there exists a constant $C_\epsilon>0$ satisfying $\limsup_{d\to \infty} \mathbb{P}(|f_d|> C_{\epsilon} g_d)\leq \epsilon$. Similarly, the notation $f_d = o_{d,\P}(g_d)$ means that for every $\epsilon>0$ we have $\lim_{d\to \infty} \mathbb{P}(|f_d|> \epsilon g_d)=0$.

\section{Preliminaries on orthogonal polynomials}\label{sec:prelin_ortho}
In this section, we record notation and preliminaries on orthogonal polynomials, following the standard monographs on the subject \cite{szeg1939orthogonal,
chihara2011introduction,
andrews2006classical}. Consider a
measure space $(D,\mathcal{A},\mu)$, where $D$ is a set, $\mathcal{A}$ is a $\sigma$-algebra, and $\mu$ is a measure on $(D,\mathcal{A})$. We let $L^2(D,\mu)$ denote the Hilbert space of $\mu$ square-integrable functions on $D$ equipped with the usual inner product and the induced norm
$$\langle f,g \rangle=\int fg\,d\mu\qquad \|f\|=\sqrt{\langle f,f\rangle }.$$
We will suppress the symbol $D$ from $L^2(D,\mu)$ when it is clear from context. Every Hilbert space $L^2(\mu)$ admits an orthonormal basis $\{\phi_j\}_{j\in J}$, meaning  $\phi_j$ are unit norm, pairwise orthogonal, and their linear span is dense in $L^2(\mu)$. A favorable situation occurs when $J$ is countable, in which case $L^2(\mu)$ is called separable. Any function $f\in L^2(\mu)$ in a separable Hilbert space can be expanded in the orthogonal basis:
$$\left\|f-\sum_{i=0}^{m} \langle f,\phi_i \rangle \phi_i\right\|\to 0\quad\textrm{as}\quad m\nearrow |J|.$$
Here, we identify $J$ with the contiguous subset of natural numbers $\mathbb{N}$ starting at zero. We will primarily focus on the following two examples of separable Hilbert spaces along with orthonormal bases.

\subsection{Fourier expansion on the boolean Hypercube}\label{sec:fourier_exp}
As the first example, let $\mathbb{H}^d=\{-1,1\}^d$ be the hypercube equipped with the uniform measure $\tau_d$. Then the inner product between any two functions $f,g\colon \mathbb{H}^d\to\R$ is simply 
$$\langle f,g\rangle=\frac{1}{2^d}\sum_{x\in \mathbb{H}^d} f(x)g(x).$$
An orthonormal basis on $L_2(\mathbb{H}^d)$ is furnished by the multi-linear monomials (called Fourier-Walsh)
$$x^{\alpha}:=\prod_{i=1}^d x_i^{\alpha_i}\qquad \textrm{for each}~ \alpha\in \{0,1\}^d.$$
We will denote the degree of the monomial  $x^{\alpha}$ by the symbol $|\alpha|:=\sum_{i=1}^d\alpha_i$. Notice that there is a one-to-one correspondence between binary vectors $\alpha\in\{0,1\}^d$ and subsets $S\subset [d]$ (their support). We will therefore often abuse notation and treat $\alpha$ as both a vector and a set whenever convenient. %

Thus any polynomial $f$ on the hypercube $\mathbb{H}^d$ can be expanded in the monomial basis:
$$f(x)=\sum_{\alpha\in \{0,1\}^d} b_{\alpha} x^{\alpha}\qquad \textrm{with}\qquad b_{\alpha}=\langle f,x^{\alpha}\rangle,$$
where $b_{\alpha}$ is called the Fourier coefficient of $f$ indexed by $\alpha$.
The $p$-truncation of $f$ is then defined to be the truncated series
\begin{equation}\label{eqn:trunc_fourier}
f_{\leq p}(x)=\sum_{\alpha\in \{0,1\}^d:\,|\alpha|\leq p} b_{\alpha} x^{\alpha}.
\end{equation}
Equivalently, $f_{\leq p}$ is the projection of $f$ in $L_2(\mathbb{H}^d)$ onto the span of all Fourier-Walsh monomials $x^{\alpha}$ with $|\alpha|\leq p$. The functions $f_{p}$ and $f_{>p}$ are defined in an obvious way.

The Fourier coefficients $b_{\alpha}$ have a convenient interpretation in terms of discrete derivative of $f$. Namely,  the discrete derivative of any function $g\colon\mathbb{H}^d\to\R$ in direction $x_i$ is defined to be 
$$\mathtt{D}_i g(x)=\frac{g(x)-g(x_1,\ldots, x_{i-1}, -x_i, x_{i+1},\ldots, x_d)}{2x_i}.$$
The discrete derivative with respect to a set $S=\{i_1,\ldots, i_k\}\subset [d]$ is then defined by iterating:
$$\mathtt{D}_S f(x)=\mathtt{D}_{i_1}\mathtt{D}_{i_2}\ldots\mathtt{D}_{i_k} f(x).$$
Interestingly, Fourier coefficients correspond precisely to expectations of discrete derivatives:
\begin{equation}\label{eqn:herm_iterated_der}
b_S=\E[\mathtt{D}_S f].
\end{equation}
See for example \cite[Section 2.2]{o2014analysis} for details. Therefore,  Fourier coefficients measure the sensitivity of $f$ to coordinate perturbations.

\subsection{Hermite expansion in Gauss space}\label{sec:hermite}
Let $\gamma$ be the standard Gaussian measure on $\R$. Then the Hilbert space $L_2(\R,\gamma)$ is separable. One convenient orthonormal basis $\{h_k\}_{k\geq 0}$ is obtained by performing the Gram-Schmidt process on the standard monomial basis $\{1, x, x^2,\ldots\}$. The elements of this orthonormal basis are called Hermite polynomials. They can be explicitly described in many different ways. For example, they arise as the higher-order derivatives of the Gaussian density\footnote{Note that compared to the ``probabilist's Hermite polynomials'', we have an extra scaling factor $1/\sqrt{k!}$ to ensure the unit length of the bases elements.}:
\begin{equation}\label{eqn:defn_hermite}
h_k(x)=\tfrac{(-1)^k}{\sqrt{k!}}\cdot e^{\frac{x^2}{2}} \cdot p^{(k)}(x)\qquad \textrm{where}~~p(x):=e^{-\tfrac{x^2}{2}}.
\end{equation}
The first few Hermite polynomials are
\begin{align*}
    h_0(x) = 1,~~h_1(x) = x,~~ h_2(x) =\tfrac{1}{\sqrt{2!}}(x^2 - 1), ~~ h_3(x) = \tfrac{1}{\sqrt{3!}}(x^3 - 3x).
\end{align*}
Multivariate Hermite polynomials are similarly defined as products of univariate Hermite polynomials along coordinates. Namely, the Hermite polynomial $h_{\alpha}$ with respect to a multi-index $\alpha \in \mathbb{N}^d$ is defined by 
\begin{align*}
    h_{\alpha}(x)  = \prod_{i=1}^d h_{\alpha_i}(x_i),
\end{align*}
where $h_{\alpha_i}$ is the $\alpha_i$-th univariate Hermite polynomial. The degree of $h_{\alpha}$ as a polynomial on $\R^d$ is simply the order of the multi-index $\|\alpha\|_1=\sum_{i=1}^d \alpha_i$. Hermite polynomials $\{h_{\alpha}\}_{\alpha\in \mathbb{N}^d}$ form an orthonormal basis for $L_2(\R^d,\gamma_d)$ where $\gamma_d$ is the standard Gaussian measure on $\R^d$. Thus any square-integrable function $f\in L_2(\R^d,\gamma_d)$ can be expanded in the Hermite basis:
$$f(x)=\sum_{\alpha\in \mathbb{N}^d} b_{\lambda} h_{\alpha}(x)\qquad \textrm{with}\qquad b_{\alpha}=\langle f,x^{\alpha}\rangle,$$
where the quantity $b_{\alpha}$ is called the Hermite coefficient of $f$ indexed by $\alpha$.
The $p$-truncation of $f\in L_2(\R^d,\gamma_d)$ is then defined to be the truncated series
\begin{equation}\label{eqn:gauss_exp}
    f_{\leq p}=\sum_{\alpha\in \mathbb{N}^d: \|\alpha\|_1\leq p} b_{\alpha} h_{\alpha}.
\end{equation}
Equivalently, $f_{\leq p}$ is the projection of $f$ in $L_2(\R^d,\gamma_d)$ onto the span of all Hermite polynomials $h_{\alpha}$ of degree at most $p$.  The functions $f_{p}$ and $f_{>p}$ are defined in an obvious way.

Similarly to \eqref{eqn:herm_iterated_der}, the Hermite coefficients are closely related to partial derivatives by the formula: 
\begin{equation}\label{eqn:gradient formula}
b_{\alpha}= \frac{1}{\sqrt{\alpha!}}\cdot \E\,\partial^{(\alpha)}f,
\end{equation}
where we use the shorthand notation $\alpha!=\prod_{i=1}^d \alpha_i !$ and $\partial^{(\alpha)}$ is the iterated partial derivative.

\bigskip

Consider now $\R^d$ equipped with some probability measure $\mu_d$. We will often have to control the $L_q$-norm $\|f\|_{L_q(\mu_d)}=[\E |f|^q]^{1/q}$ of a polynomial $f$ on $\R^d$.
In general, one would expect that the ratio between the $L_q(\mu_d)$ and $L_2(\mu_d)$ norms depends strongly on the dimension $d$. Interestingly, for a broad class of measures $\mu$ and functions $f$ this is not the case. Indeed, the so-called hypercontractivity inequality ensures that any polynomial $f\colon\R^d\to\R$ of degree at most $\ell$ satisfies the inequality %
\begin{equation}\label{eqn:hypercontrac} 
\norm{f}_{L_q(\mu_d)} \leq (q-1)^{\ell/2}\norm{f}_{L_2(\mu_d)}\qquad \forall q\geq 2,
\end{equation}
where $\mu_d$ can be the standard Gaussian measure~\cite[Chapter 3.2]{ledoux2013probability} on $\R^d$ or the uniform  measure on the hypercube $\mathbb{H}^d$ \cite[Chapter 9]{o2014analysis}. The %
inequality \eqref{eqn:hypercontrac} is widely used in probability and theoretical computer science, and we will use it heavily here as well.

\section{Main results}\label{sec:main}
In this section, we formally state the main results of the paper. To this end, fix a measure space $(D,\mathcal{A},\mu)$, where $D\subset \R^d$ is a set, $\mathcal{A}$ is a $\sigma$-algebra, and $\mu$ is a measure on $(D,\mathcal{A})$. The two main examples for us will be the uniform measure $\tau_d$ on the Hypercube $\mathbb{H}^d=\{-1,1\}^d$ and the Gaussian measure $\gamma_d$ on $\R^d$. In both of these cases, the symbol $f_{\leq p}$ will refer to the truncated functions \eqref{eqn:trunc_fourier} and \eqref{eqn:gauss_exp}, respectively. Fix now  a function $f\colon D\to\R$ with finite second moment $\E_{\mu} f^2<\infty $ and a data set $\{(x^{(i)},y_i)\}_{i=1}^n$, where the points $x^{(1)},\ldots, x^{(n)}$ are sampled independently from $\mu$ and are labeled by the function values 
$$y_i=f^*(x^{(i)}) + \varepsilon_i,$$ with noise $\varepsilon_i \overset{\text{i.i.d.}}{\sim}\mathcal{N}(0,\sigma_\varepsilon^2)$ that is independent of $x_i$. Note that since $f^*$ is defined on $D\subset\R^d$ and we will be interested in asymptotics as $d$ tends to infinity, it would be more accurate to use the notation $f^*_d$ for the sequence of ground truth functions. Although we omit the subscript $d$ to simplify notation, the reader should nonetheless keep in mind the implicit dependence of $f^*$ on the dimension $d$ throughout.

Our goal is to construct a function $\hat f$ (the predictor) from the sampled data that approximates the target function $f^*$ in the $L_2$ sense. %
We construct a predictor $\hat f$ based on Kernel Ridge Regression, henceforth abbreviated as KRR. Throughout the paper we focus on inner product kernels $K\colon\R^d\times\R^d\to \R$ having the form 
\begin{align}\label{eq:inner_product_kernel}
    K (x,y) = g\left(\frac{\langle x,y\rangle}{d}\right), 
\end{align}
for all $x$ and $y$ in the support of $\mu$ and where $g\colon\R\to \R$ is some function. We then define the following kernel induced by the derivative of $g$: 
$$K'(x,z):=g'\left(\frac{\langle x, z\rangle}{d}\right).$$ Our main results will invoke a combination of the following two regularity assumptions on $g$.

\begin{assumption}[Analytic]\label{assump:g_0}
There exists  $\varepsilon\in (0,1)$ such that the function $g$ in \eqref{eq:inner_product_kernel} is analytic on $(-\varepsilon,\varepsilon)$ and its derivatives satisfy $g^{(k)}(0)\geq 0$ for all $k\geq 0$.
\end{assumption}
\begin{assumption}[Lipschitz]\label{assump:g_1}
There exists $\varepsilon\in (0,1)$ such that the function $g$ in \eqref{eq:inner_product_kernel} has Lipschitz continuous derivative $g'$ on $(1-\varepsilon,1+\varepsilon)$. 
\end{assumption}

Classical results \cite[Theorem 13.4]{scholkopf2002learning} show that  non-negativity of derivatives in Assumption~\ref{assump:g_0} implies that the kernels $K$ and $K'$ are positive semidefinite. Inner-product kernels satisfying Assumptions~\ref{assump:g_0} and \ref{assump:g_1} appear prominently in applications and include for example the polynomial kernel $K(x,y)=(b+\langle x,y\rangle)^m$ and the exponential kernel $K(x,y)=\exp(\alpha\langle x,y \rangle)$ for any $\alpha>0$. 
 When the vectors in the support of $\mu$ have constant norm, as is the case for spherical or hypercube data, the Gaussian kernel $K(x,y)=\exp(-\|x-y\|^2/2\sigma^2)$ satisfies Assumptions~\ref{assump:g_0} and~\ref{assump:g_1}, while the the Laplace kernel $K(x,y)=\exp(-\|x-y\|/\sigma)$ satisfies Assumption~\ref{assump:g_0} but not~\ref{assump:g_1}.

Given a set of data points $x^{(1)},\ldots,x^{(n)}\in \R^d$ we form the data matrix $X\in \R^{n\times d}$ by stacking $x^{(i)}$ as its rows and define the $n\times n$ similarity matrix
$K(X,X)=\{K(x^{(i)},x^{(j)})\}_{i,j=1}^n.$ %
The kernel ridge regression (KRR) prediction function with regularization parameter $\lambda>0$ is then given by the expression
\begin{align}\label{eq:krr}
    \hat f(z) = K(z,X)\beta\qquad \textrm{where}\qquad \beta:=\round{K(X,X) + \lambda I_n}^{-1} y,  
\end{align}
where we define the vector $K(z,X):=[K(z,x^{(1)}),\ldots,K(z,x^{(n)}))]\in \R^{1\times n}$. Throughout, we assume that the regularization parameter satisfies  $\lambda=O_d(1)$.
Much of our paper focuses on analyzing the partial derivative of the prediction function $\partial_r \hat f(x)$, which takes the simple form   
\begin{align}
   \partial_{r} \hat{f} (z)  = \frac{1}{d}  K'(z, X)X_r\beta,\label{eq:derivative}
\end{align}
Here, $\partial_r$ denotes the partial derivative with respect to $x_r$ and we define the matrix $X_r:=\Diag[x_r^{(1)},\ldots,x_r^{(n)}]$.

We require one last notational ingredient before stating our results. Namely, all theorems in this section impose a mild nondegeneracy condition of the form:
    \begin{equation}\label{eqn:nondegegn}
     \displaystyle\min_{k=0,\ldots,q}g^{(k)}(0) \textrm{ and } \lambda+\displaystyle\sum_{k=q+1}^\infty g^{(k)}(0) \textrm{ are strictly positive.}
    \end{equation}
  Here, the constant $q\in \mathbb{N}$ will be specified each time that \eqref{eqn:nondegegn} is invoked in theorem statements. In particular, note that if all the derivatives $g^{(k)}(0)$ are strictly positive, then \eqref{eqn:nondegegn} holds even with $\lambda=0$.

\subsection{Results for Gaussian data}

Our first main theorem shows that for Gaussian data in the regime $n=d^{p+\delta}$ with $\delta\in (0,1/2)$ and $p\in \{1,2\}$, the empirical weights  
$\E_n (\partial_r \hat f)^{2}:=\frac{1}{n}\sum_{i=1}^n (\partial_r \hat f(x^{(i)}))^{2}$ are asymptotically consistent for estimating the weight $\E(\partial_r f^*_{\leq p})^{2}$ of the truncated function $f_{\leq p}$. This theorem can be understood as a higher-order analogue of the bound on generalization error established in \cite{mei2022generalization,ghorbani2021linearized}. Although we only prove Theorem~\ref{thm:main_gauss} in the setting $p\in \{1,2\}$, we conjecture that the analogous result holds for all $p\in \mathbb{N}$. The proof of the theorem appears in Appendix~\ref{proof:gauss}.

\begin{theorem}[Gradient consistency with Gaussian data]\label{thm:main_gauss}
Suppose that Assumptions~\ref{assump:g_0} and \ref{assump:g_1} hold. Consider the regime $n = d^{p+\delta}$ where $\delta \in(0,1/2)$ is a constant and $p\in \{1,2\}$. Suppose moreover that condition \eqref{eqn:nondegegn} holds with $q=p$.  Then for any $\epsilon>0$ and $\eta>0$, the KRR predictor $\hat f$ satisfies:
\begin{align}\label{eqn:basic_gauss} 
   \max_{r=1,\ldots,d}~\abs{\E_n (\partial_r \hat f)^{2} - \E{\round{\partial_{r} f^*_{\leq p} }^2}} = O_{d,\P}\round{d^{\delta - 1/2+\epsilon}+ d^{-\delta/2+\epsilon}}\cdot \round{\norm{f^*}_{L_2}^2+  \norm{f^*_{>2}}_{L_{2+\eta}}^2+ \sigma_\varepsilon^2}.
\end{align}
\end{theorem}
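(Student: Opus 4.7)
The plan is to work from the explicit matrix formula for the quantity of interest. Combining the gradient formula \eqref{eq:derivative} with the KRR formula \eqref{eq:krr} gives
\begin{equation*}
\E_n(\partial_r \hat f)^2 \;=\; \frac{1}{nd^2}\, y^\top (K+\lambda I_n)^{-1}\, X_r\, [K'(X,X)]^2\, X_r\, (K+\lambda I_n)^{-1}\, y,
\end{equation*}
where $K=K(X,X)$ and $X_r = \Diag(x_r^{(1)},\ldots,x_r^{(n)})$. I would then decompose $y = f^*_{\leq p}(X) + f^*_{>p}(X) + \varepsilon$ into signal, nuisance, and label noise, splitting the quadratic form into three pieces. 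The strategy is to show that the signal piece converges to $\E(\partial_r f^*_{\leq p})^2$ at rate $d^{\delta-1/2+\epsilon}$, while the nuisance and label-noise pieces are each $O_{d,\P}(d^{-\delta/2+\epsilon})$ times the relevant norms of $f^*_{>p}$ and $\sigma_\varepsilon$.

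The core linear-algebraic input is an asymptotic decomposition of the kernel matrix in the polynomial scaling regime, in the spirit of \cite{mei2022generalization, ghorbani2021linearized}. Using the Taylor expansion $g(t)=\sum_k (g^{(k)}(0)/k!)\,t^k$ together with the concentration of the inner products $\langle x^{(i)}, x^{(j)}\rangle/d$ near zero for $i\neq j$, I would establish an approximate identity of the form
\begin{equation*}
K(X,X) \;\approx\; \Psi_{\leq p}^\top D_{\leq p}\, \Psi_{\leq p} \;+\; \kappa\, I_n \;+\; E,
\end{equation*}
where the rows of $\Psi_{\leq p}$ list normalized Hermite polynomials of degree at most $p$ evaluated on the data, $D_{\leq p}$ collects the corresponding Taylor coefficients, and $\kappa = \sum_{k>p} g^{(k)}(0)/k!$ encodes the ``high-frequency'' eigenvalue (positive by \eqref{eqn:nondegegn} with $q=p$). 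An analogous decomposition, with shifted coefficients $g^{(k+1)}(0)/k!$, holds for $K'(X,X)$; the Lipschitz Assumption~\ref{assump:g_1} controls the remainder $E$ in operator norm. The Sherman--Morrison--Woodbury identity then produces a tractable expression for $(K+\lambda I_n)^{-1}$ in which the high-degree subspace acts by the scalar $(\kappa+\lambda)^{-1}$.

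Substituting these decompositions into the quadratic form yields three estimates. For the signal part, the key observation is that $\frac{1}{n}\Psi_{\leq p}\Psi_{\leq p}^\top$ concentrates near a block-diagonal matrix of Hermite Gram matrices, so after cancellation the expression reduces to $\E_n(\partial_r f^*_{\leq p})^2$ plus spectral-noise fluctuations of order $d^{\delta-1/2+\epsilon}$, controlled by hypercontractivity \eqref{eqn:hypercontrac} applied to the low-degree polynomial $(\partial_r f^*_{\leq p})^2$ together with a union bound over the $d$ coordinates. The nuisance and pure-noise pieces both pass through the high-degree inverse $(\kappa+\lambda)^{-1}$; a direct operator-norm bound, together with a hypercontractive $L_{2+\eta}$ moment estimate on $f^*_{>p}(X)$, shows that these terms come out as $O_{d,\P}(d^{-\delta/2+\epsilon})(\|f^*_{>p}\|_{L_{2+\eta}}^2 + \sigma_\varepsilon^2)$ after accounting for the $1/d^2$ prefactor and the empirical averaging.

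The main obstacle is the spectral analysis of the kernel matrices $K(X,X)$ and $X_r [K'(X,X)]^2 X_r$ at the precision required for the sharp rate $d^{\delta-1/2+\epsilon}$. This is also why the theorem is restricted to $p\in\{1,2\}$: for $p=1$ the matrix $K(X,X)$ reduces to an explicit low-rank perturbation of a scaled identity, while for $p=2$ one must additionally control the $\frac{1}{d^2}(XX^\top)^{\odot 2}$ block and its interaction with $X_r$, but both cases reduce to well-understood sample-covariance-type bounds. Extending to general $p$ would require sharper random-matrix estimates on $\Psi_{\leq p}\Psi_{\leq p}^\top/n$ and on its interaction with diagonal multiplication. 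A secondary delicate check is that the bias contributed by the $f^*_{p+1}$ component—which in the companion regime $\delta\in(1/2,1)$ persists as the term $c\cdot d^{2\delta-2}\E(\partial_r f^*_{p+1})^2$ visible in \eqref{eqn:main_est}—is dominated, in the present regime $\delta<1/2$, by $d^{\delta-1/2+\epsilon}\|f^*\|_{L_2}^2$, which is precisely why both rates $d^{\delta-1/2+\epsilon}$ and $d^{-\delta/2+\epsilon}$ appear in the final bound.
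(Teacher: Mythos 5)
Your proposal follows the paper's route closely: the same starting matrix formula, the same decomposition $y = f^*_{\leq 2}(X)+f^*_{>2}(X)+\varepsilon$, the same replacement of $K'$ by a Hermite quadratic form plus $\rho' I_n + \Delta$, the same Sherman--Morrison--Woodbury reduction of $(K+\lambda I_n)^{-1}$, and the same two error sources (Taylor truncation error of order $d^{\delta - 1/2}$ and sample-covariance concentration $\frac{1}{n}\Phi_{\leq p}^\top\Phi_{\leq p}\approx I$ with error of order $d^{-\delta/2}$); your understanding of why $p\in\{1,2\}$ is forced also matches the paper's own reason.

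The one point you gloss over, and that would surprise you at the whiteboard, is that ``$D_{\leq p}$ collects the corresponding Taylor coefficients'' reads as if $D$ were diagonal, but in Gaussian space this fails for $p=2$. The Hermite conversion of $\tfrac{1}{2}(XX^\top/d)^{\odot 2}$ (Lemma~\ref{lem:conv_quad_herm}) carries an off-diagonal coupling of norm $\Theta(d^{-3/2})$ between $\phi_{\S_0}$ and the pure second-order Hermite polynomials $\{h_{2e_i}\}$, producing an \emph{arrowhead} matrix $D$ (Eq.~\eqref{eq:D_p=2}); the paper even notes in a footnote that a previously published claim of diagonality here is incorrect. This propagates through the Woodbury inversion: the paper needs the arrowhead inversion formula~\eqref{eqn:inverse_arrow} and a dedicated verification that $\tfrac{1}{d}D^{-1}R$ is still close to the coordinate projector $E_r$ (Proposition~\ref{prop:d_inv_r}). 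It is not fatal---the extra contribution enters at order $O_d(d^{-1})$---but it is precisely what separates the Gaussian case from the hypercube case, where the analogous conversion is genuinely diagonal (Lemma~\ref{lemma:each_mono_cube}). A secondary bookkeeping remark: both rates $d^{\delta-1/2+\epsilon}$ and $d^{-\delta/2+\epsilon}$ arise already in the signal piece (Proposition~\ref{lemma:beta_<_l2}), not only the first as you suggest.
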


A few comments are in order. First, the bound \eqref{eqn:basic_gauss}  depends on the target function $f^*$ only through the moments, $\norm{f^*}_{L_2}^2$ and $\norm{f_{>2}^*}_{L_{2+\eta}}^{2}$, where $\eta>0$ is arbitrary. In the particular case when $f^*$ is a finite degree polynomial with a bounded second moment $\norm{f^*}_{L_2}^2<\kappa$ uniformly in dimension, then the higher-order moment $\norm{f_{>2}^*}_{L_{2+\eta}}^{2}$ is automatically uniformly bounded due to the Hypercontractivity inequality \eqref{eqn:hypercontrac}. Consequently, in this case the right side of \eqref{eqn:basic_gauss}  tends to zero in probability as $d$ tends to infinity. The assumptions on the derivatives of $g$ are mild and hold for example if $K$ is an exponential kernel or a polynomial kernel with degree $m\geq 3$. In both of these examples, one can even set $\lambda=0$.

It is worthwhile to note that the quantities $\E{\round{\partial_{r} f^*_{\leq p} }^2}$ have an intuitive variational interpretation in terms of the derivatives of $f^*$ itself. Namely, suppose that we may write $f^*=\sum_{\alpha\in \mathbb{N}^d} b_{\alpha}h_{\alpha}$, where $h_{\alpha}$ are the Hermite polynomials. Then for $p=1$, we have the expression 
$$\E{\round{\partial_{r} f^*_{\leq 1} }^2}=b_{e_r}^2=(\E f^*(x)x_r)^2=(\E \partial_r f^*)^2,$$
where the last equality follows from \eqref{eqn:gradient formula}. Similarly, for the case $p=2$ simple arithmetic shows the expression
\begin{align*}
\E{\round{\partial_{r} f^*_{\leq 2} }^2}&=\E(b_{e_r}+\sqrt{2}b_{2e_r}x_r+\sum_{i\neq r}b_{e_i+e_r}x_i)^2\\
&=b_{e_r}^2+2\alpha_{2e_r}^2+\sum_{j\neq r}b_{e_j+e_r}^2,\\
&=(\E\partial_r f^*)^2+\sum_{j\in [d]}(\E\partial^{(rj)} f^*)^2,
\end{align*}%
 where again the last equality follows from \eqref{eqn:gradient formula}. Thus in both cases, the quantity $\E{\round{\partial_{r} f^*_{\leq p} }^2}$  measures the total magnitude of the expected derivatives of the target function $f^*$ with respect to $x_r$.

\subsection{Results for hypercube data}
For the rest of the section, we focus entirely on learning polynomials from data that is sampled uniformly over the hypercube. We formalize this setting in the following assumption. %
\begin{assumption}[Hypercube data]\label{assump:finite_basis_hypercube}
The measure $\mu$ is the uniform measure on the hypercube $\{-1,1\}^d$ and the underlying true function $f^*=\sum_{S\subset[d]}b_S\phi_S$ is a polynomial of constant degree $\ell$.
\end{assumption}

We will be interested in learning functions on $\R^d$ that only depend on a small portion of all coordinates. With this in mind, we introduce the following definition of ``influential coordinates''. Conceptually, $x_r$ is an influential coordinate of $g$ if variations in $x_r$ lead to nontrivial variations in $g$.

\begin{definition}[Influential coordinates]
{\rm
For a function $g\in L_2(\mu)$, a coordinate $x_r$ is said to be {\em influential} if the inequality $\E(\partial_{r} g)^2 >0$ holds.}
\end{definition}

The following is the analogue of Theorem~\ref{thm:main_gauss} for hypercube data and which holds for all values of $p\in \mathbb{N}$. Indeed, we prove an even stronger estimate. In the regime $n=d^{p+\delta}$, the error $\E_n (\partial_r \hat f)^{2} - \E{\round{\partial_{r} f^*_{\leq p} }^2}$ decays at slow rate $\approx d^{-\delta/2}$ when $r$ is an influential coordinate and at a fast rate $d^{-1}$ if $r$ is not influential! We will see that the improved rate for non-influential coordinates is the key reason why reweighted kernel machines can efficiently learn hierarchical polynomials. The proof of the theorem appears in Appendix~\ref{proof:main_hypercube}.

\begin{theorem}[Gradient consistency on hypercube]\label{thm:main_hypercube}
Suppose that Assumptions~\ref{assump:g_0} and \ref{assump:finite_basis_hypercube} hold. Consider the regime $n = d^{p+\delta}$ where $\delta \in(0,1)$ is a constant and suppose that condition \eqref{eqn:nondegegn} holds with $q=p$. 
 Then for any $\epsilon>0$, the KRR predictor $\hat{f}$ satisfies:
\begin{align*}
    &~~~\abs{\E_n (\partial_r \hat f)^{2} -\E{\round{\partial_{r} f^*_{\leq p} }^2} - c\cdot (d^{2\delta-2})\cdot \E(\partial_r f^*_{p+1})^2}\\
    &\leq O_{d,\P}(1)\cdot\left( \tfrac{\norm{f^*}_{L_2}^2+\sigma_\varepsilon^2}{d}+ \tfrac{\norm{\partial_r f^*_{\leq p}}_{L_2}^2 + d^{2\delta -2}\norm{\partial_r f^*_{ p+1}}_{L_2}^2}{d^{\max(\delta,1-\delta)/2-\epsilon}} +\tfrac{\norm{f^*}_{L_2}\norm{\partial_{r} f^*_{\leq p} }_{L_2}}{d^{1/2}}+ \tfrac{(\norm{f^*}_{L_2}  +\sigma_\varepsilon)\norm{\partial_{r} f^*_{p+1} }_{L_2}}{d^{3/2-\delta}}\right).
\end{align*}
simultaneously for all $r=1,\ldots,d$ where $c>0$ is a constant.

\end{theorem}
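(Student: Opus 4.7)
The plan is to analyze the quadratic form
\begin{align*}
\E_n(\partial_r \hat f)^2 \;=\; \frac{1}{nd^2}\, \beta^\top X_r K'(X,X)^2 X_r \beta, \qquad \beta \;=\; \bigl(K(X,X)+\lambda I\bigr)^{-1}(f^*(X) + \varepsilon),
\end{align*}
via a Fourier--Walsh spectral decomposition of the kernel matrices, in the spirit of the resolvent analyses of Mei--Misiakiewicz--Montanari and Ghorbani et al., but extended to track the second-order correction that produces the $d^{2\delta-2}$ term. Using $x_i^2=1$ on the hypercube together with the Taylor expansion of $g$, the first step is to obtain the orthogonal expansions $K(X,X) = \sum_{S\subset [d]} \mu_{|S|}\,\phi_S \phi_S^\top$ and $K'(X,X) = \sum_S \mu'_{|S|}\,\phi_S\phi_S^\top$, where $\phi_S\in\R^n$ has entries $\prod_{i\in S}x_i^{(j)}$ and the scalars $\mu_{|S|}, \mu'_{|S|}$ are computed from $\{g^{(k)}(0)\}$ by a standard combinatorial identity. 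Matrix concentration combined with the hypercontractivity inequality~\eqref{eqn:hypercontrac} then shows that $\sum_{|S|>p}\mu_{|S|}\phi_S\phi_S^\top$ is close to a scaled identity $\kappa_{>p}\, I_n$, while $\sum_{|S|\leq p}\mu_{|S|}\phi_S\phi_S^\top$ is low-rank of order $O(d^p) \ll n$.

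A Woodbury-type identity then yields a near-block-diagonal expression for $(K(X,X)+\lambda I)^{-1}$: the degree-$\leq p$ block has entries of order $n^{-1}$, the degree-$(p+1)$ block is shrunk by an additional factor of order $d^{\delta-1}$ relative to its raw size, and still higher-degree blocks are negligible. Multiplication by $X_r$ acts in the Fourier basis via the identity $x_r\,\phi_S = \phi_{S\triangle\{r\}}$, which is precisely how discrete partial derivatives act on polynomials. Plugging the resolvent expansion into $\beta$, expanding $y=f^*(X)+\varepsilon$ in the Fourier basis, and tracking which degree-blocks pair up inside $\beta^\top X_r K'(X,X)^2 X_r\beta$, I would extract as the leading contribution $\E(\partial_r f^*_{\leq p})^2$ from the low-degree block paired with itself, and as the next-order contribution $c\cdot d^{2\delta-2}\,\E(\partial_r f^*_{p+1})^2$ from the shrunk $(p+1)$-degree block paired with its conjugate. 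The factor $d^{2\delta-2}$ is precisely the square of the $d^{\delta-1}$ shrinkage, and the constant $c$ is pinned down by the ratio $\mu'_{p+1}/\kappa_{>p}$.

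Cross-terms between different degrees vanish in expectation by orthogonality of the Fourier--Walsh monomials, and their fluctuations can be controlled by $(4,2)$-hypercontractivity for degree-$\ell$ polynomials on the hypercube, producing the error terms that scale with $\|\partial_r f^*_{\leq p}\|_{L_2}$ and $\|\partial_r f^*_{p+1}\|_{L_2}$. The additive noise contribution $\sigma_\varepsilon^2/d$ emerges from the trace $\Tr\bigl((K+\lambda I)^{-1} X_r K'(X,X)^2 X_r (K+\lambda I)^{-1}\bigr)/(nd^2)$ together with standard sub-Gaussian concentration in $\varepsilon$. The improved $O(1/d)$ rate for non-influential coordinates --- which ultimately powers the hierarchical-learning guarantees of this paper --- arises because when $\partial_r f^*_{\leq p+1}\equiv 0$, the coordinate $r$ appears only in degree-$\geq p+2$ blocks of $f^*$, whose contribution to $\beta$ is doubly shrunk and hence falls below the $d^{-\max(\delta,1-\delta)/2}$ threshold.

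The main technical obstacle is the second-order tracking of the resolvent needed to extract the exact $c\cdot d^{2\delta-2}$ coefficient: one must go beyond the leading block-diagonal approximation that suffices for prior generalization-error bounds, and carefully handle the $(p+1)$-degree block, which is neither negligible (as higher-degree blocks are, being absorbed into the $\kappa_{>p}I_n$ identity) nor concentrated onto the low-degree subspace. Pinning down its shrinkage factor in the regime $n=d^{p+\delta}$ requires isolating it from both neighboring blocks through a careful two-step Woodbury argument. A secondary difficulty is obtaining the estimate uniformly over all $d$ coordinates $r$ simultaneously, which will require a union bound married to sharp per-coordinate concentration, again obtained through hypercontractivity.
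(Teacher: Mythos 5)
Your proposal follows essentially the same route as the paper's own proof: write $\E_n(\partial_r\hat f)^2$ as the quadratic form $\tfrac{1}{nd^2}\|K'X_rK_\lambda^{-1}y\|_2^2$, expand $K$ and $K'$ in the Fourier--Walsh basis into a low-degree block $\Phi_{\leq p}D\Phi_{\leq p}^\top$ plus an approximate identity $\rho I_n+\Delta$ (Lemma~\ref{lemma:kernel_to_mono}), invert via Sherman--Morrison--Woodbury, use $x_r\phi_S=\phi_{S\triangle\{r\}}$ to track the action of $X_r$, and isolate the leading $\E(\partial_r f^*_{\leq p})^2$ term from the degree-$\leq p$ block and the $c\,d^{2\delta-2}\,\E(\partial_r f^*_{p+1})^2$ correction from the shrunk degree-$(p+1)$ block (Propositions \ref{prop:<_l1_l1_l1_cube} and \ref{prop:s2_s2_cube}). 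Your identification of the $d^{\delta-1}$ shrinkage of the $(p+1)$ block, squared to $d^{2\delta-2}$, is exactly what happens in Proposition~\ref{prop:term_p_plus_one} after rescaling.

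One place where your sketch understates the difficulty is the $O(1/d)$ error for non-influential $r$. You attribute it to coordinate $r$ appearing ``only in degree-$\geq p+2$ blocks,'' which are ``doubly shrunk.'' In fact this term arises largely from cross-interactions between the fitted coefficients supported on $\A^c(r)$ (Fourier indices \emph{not} containing $r$, including ones of degree $\leq p$) and the test direction $X_r\Phi_{\leq p}$, and controlling these at rate $d^{-1/2}$ in norm (hence $d^{-1}$ in the squared quantity after AM--GM) is the work of Proposition~\ref{lemma:beta_>_l2_cube} and Proposition~\ref{prop:feature_product}. The argument there is not a shrinkage heuristic but a careful bound on the spectral norm of the \emph{expectation} of products of Fourier feature matrices with non-overlapping index sets (Lemma~\ref{lemma:feature_product_tech}), obtained by truncating the Neumann series for $H^{-1}$ and $G^{-1}$ to constant order and invoking a decoupling-style matrix inequality. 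A uniform-in-$r$ version then requires hypercontractivity (Lemma~\ref{lemma:uniform_bound}) applied to each finite product. If you tried to derive the $1/d$ rate only from high-degree contributions you would miss the $\A^c(r)$ contamination and end up with a slower $d^{-\delta/2}$ error, which is not sharp enough to drive the hierarchical learning theorems downstream.
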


 Unpacking Theorem~\ref{thm:main_hypercube} we see that there are two interesting regimes corresponding to whether $\delta$ is greater or less than $1/2$. 
Namely, the following estimate holds 
\[ 
\E_n(\partial_j\hat f)^2 \approx
\begin{cases}
\E(\partial_{j} f^*_{\leq p})^2  
& \quad \text{if } \delta \in (0, \tfrac{1}{2}) \\
\E(\partial_{j} f^*_{\leq p})^2 + c \cdot d^{-2(1-\delta)} \E(\partial_{j} f^*_{p+1})^2 
& \quad \text{if } \delta \in (\tfrac{1}{2}, 1)
\end{cases},
\]
uniformly over all the coordinates up to a small order error.
 In words, this means that in the small oversampling regime $\delta \in (0, \tfrac{1}{2})$, the KRR estimator learns the coordinate weights of the $p$'th truncation $\E(\partial_{j} f^*_{\leq p})^2$. Interestingly, in the larger oversampling regime  $(\tfrac{1}{2}, 1)$, the KRR estimator can also detect the coordinate weights of the $(p+1)$-order term $\E(\partial_{j} f^*_{p+1})^2$ when the lower-order weight $\E(\partial_{j} f^*_{\leq p})^2$ is zero. 
 
 We note that Theorem~\ref{thm:main_hypercube} requires $f^*$ to be a polynomial of constant degree per Assumption~\ref{assump:finite_basis_hypercube}. This requirement was only needed to establish the fast rate $d^{-1}$ for learning non-influential coordinates,  namely the norm $\norm{\partial_r f^*_{\leq p}}_{L_2} = \norm{\partial_r f^*_{p+1}}_{L_2} = 0$. The slow rate $O_{d,\P}(d^{-\delta/2+\epsilon}+d^{(\delta-1)/2+\epsilon})$ is valid without this requirement, thereby directly paralleling Theorem~\ref{thm:main_gauss}.
We also note that as in the Gaussian case, the quantities $\E{\round{\partial_{r} f^*_{\leq p} }^2}$ have an intuitive variational interpretation in terms of the derivatives of $f^*$ itself. Namely, we have 
\begin{equation}\label{eqn:grad_interp_cube}
\E{\round{\partial_{r} f^*_{\leq p} }^2}=\E\left(\sum_{S: \,r\in S,\,|S|\leq p} b_S x^{S\setminus\{r\}}\right)^2=\sum_{S: \,r\in S,\,|S|\leq p} b_S^2=\sum_{S: \,r\in S,\,|S|\leq p}(\E[\mathtt{D}_S f^* ])^2,
\end{equation}
where the last inequality follows from \eqref{eqn:herm_iterated_der}.

Notice that the noise/error term in the theorem scales as $d\inv$ and therefore the empirical weights estimator $\E_n (\partial_r \hat f)^{2}$ fails to distinguish the coordinate weight of the $(p+1)$-order term $\E(\partial_{j} f^*_{p+1})^2$ from the noise when $\delta \in (0, \tfrac{1}{2})$. To address this limitation, we introduce the Derivative Norm (DN) estimator $\D(w)$, defined as
\begin{align}
   \mathcal{D}_{j}(w):=\beta_w\tran \tfrac{\partial K_w(X,X)}{\partial w_j} \beta_w, 
\end{align}
where $\beta_w\in \R^n$ denotes the dual coefficients $\beta_w\in \R^n$ of the KRR predictor $f_w$ induced by the reweighted kernel $K_w$.  Analogous to Theorem~\ref{thm:main_hypercube}, the following result shows that $\D({\bf 1}_d)$ provides an estimate of the population coordinate weights. The proof, which also extends to general weights, appears in Appendix~\ref{proof:learning_hypercube_gradient_rkhs}.

\begin{theorem}[Derivative norm estimator on hypercube]\label{thm:main_hypercube_rkhs}
Suppose that Assumptions~\ref{assump:g_0} and \ref{assump:finite_basis_hypercube} hold. Consider the regime $n = d^{p+\delta}$ where $\delta \in(0,1)$ is a constant and suppose that condition \eqref{eqn:nondegegn} holds with $q=p+1$. 
 Then for any $\epsilon>0$, the following holds:
\begin{align*}
    &~~~\abs{~\tfrac{1}{n}\D_r({\bf 1}_d) -\sum_{k=0}^{p+1}\Theta_d(d^{-|k-p-\delta|})\cdot \E(\partial_r f^*_{k})^2}\\
    &\leq O_{d,\P}(1)\cdot\left( \tfrac{\norm{f^*}_{L_2}^2 + \sigma_\varepsilon^2}{d^{1-\epsilon}} + \tfrac{\sum_{k=0}^{p+1}d^{-|k-p-\delta|}\norm{\partial_r f^*_{k}}_{L_2}^2 }{d^{\max(\delta,1-\delta)/2-\epsilon}} + \tfrac{\sum_{k=0}^{p+1}d^{-|k-p-\delta|}\norm{\partial_r f^*_{k}}_{L_2}(\norm{f^*}_{L_2} +\sigma_\varepsilon)}{d^{1/2-\epsilon}}\right).
\end{align*}
simultaneously for all $r=1,\ldots,d$.
\end{theorem}

The estimator $\mathcal{D}_{j}(w)$  addresses the limitation of $\E_n(\partial_j\hat f)^2$ in estimating the coordinate weights of the $(p+1)$-order term when $\delta\in(0,\tfrac{1}{2})$. 
Indeed, in this regime,   the last term in the sum is $\Theta_d(d^{1-\delta}) \E(\partial_{j} f_{p+1})^2$ and the leading coefficient $d^{1-\delta}$ is large compared to the error term $O_{d,\mathbb{P}}(d\inv)$. In contrast, for the coordinate weights of the $p$'th truncation $\E(\partial_{j} f_{\leq p})^2$, this estimator systematically underestimates the true coordinate weights compared to $\E_n(\partial_j\hat f)^2$.
A simple average of the empirical weight estimator and the DN estimator enjoys the best properties of both, as the following corollary shows.

\begin{corollary}\label{cor:combined_estimator}
Suppose that Assumptions~\ref{assump:g_0} and \ref{assump:finite_basis_hypercube} hold. Consider the regime $n = d^{p+\delta}$ where $\delta \in(0,1)$ is a constant and suppose that condition \eqref{eqn:nondegegn} holds with $q=p+1$. 
 Then the following holds for any average parameter $\alpha\in[0,1]$:
\begin{equation}\label{eqn:main_est_combined_copy}
\left|(1-\alpha)\cdot \E_n(\partial_j\hat f)^2 + \alpha\cdot \tfrac{1}{n}\gD_j({\bf 1}_d)-  (1+o_{d,\P}(1)) \round{(1-\alpha)\E(\partial_{j} f_{\leq p})^2 + c\cdot\alpha  d^{\delta-1}\E(\partial_{j} f_{p+1})^2 }\right|=O_{d,\mathbb{P}}(d\inv).
\end{equation}simultaneously for all $r=1,\ldots,d$ where $c>0$ is a constant.
\end{corollary}

In light of Corollary~\ref{cor:combined_estimator}, an appealing strategy to learn the function $f^*$ is to iteratively reweight the data using a combination of the empirical weights $\E_n (\partial_r\hat f)^2$ and DN-estimator $\D_r(w)$ corresponding to each coordinate $r=1,\ldots, d$, then refit the data with kernel ridge regression, and so forth. This retraining procedure was informally summarized in Algorithm~\ref{alg:rfm} and we formalize it here as Algorithm~\ref{alg:rfm2}.

\begin{algorithm}[h!]
\caption{$\mathtt{IRKM}(\alpha)$ \hfill{\bf I}teratively {\bf R}eweighted {\bf K}ernel {\bf M}achines}
\begin{algorithmic}[1] %
\State $\textbf{Require:}$ Iteration count $T\in \mathbb{N}$, kernel function $K$ on $\R^d$, ridge parameter $\lambda\geq 0$, safeguard $\epsilonS>0$, average parameter $\alpha\in[0,1]$.
\State \textbf{Initialize:} $w={\bf 1}_d$.
\For{$t=1,\ldots, T$}:
\State Sample fresh data $\{(x^{(i)},y_i)\}_{i=1}^n$ and define the kernel $K_w(x,z)=K(\sqrt{w}\odot x,\sqrt{w}\odot z)$.\label{line:resample}
\State Obtain predictor $\hat f_t$ from Kernel Ridge Regression:
$$\hat f_t(z):=K_w(z,X)\beta\qquad \textrm{where}\qquad \beta:=(K_{w}(X,X)+\lambda I)^{-1}y.$$
\State Compute the first weight vector: $w^{[1]}=\epsilonS {\bf 1}_d+\cdot \frac{1}{n}\sum_{i=1}^n [\nabla_x\hat f_t(x^{(i)})]^{\odot 2}  $.
\State Compute the second weight vector: $w^{[2]}=\epsilonS {\bf 1}_d+ \frac{1}{n}\D(w) \odot w $.  
\State Normalize the weights $w^{[k]}=\frac{d}{\|w^{[k]}\|_1} \cdot w^{[k]}$,~~$k = 1,2$.
\State Update the weight vector $w = (1-\alpha)\cdot w^{[1]} + \alpha\cdot w^{[2]}$.
\EndFor
\end{algorithmic}\label{alg:rfm2}
\end{algorithm}

Let us make a few comments. The algorithm maintains a sequence of prediction functions $\hat f$ and weights $w$. Each iteration $t$ of the algorithm begins by drawing fresh data $\{(x^{(i)},y_i)\}_{i=1}^n$ with  $x^{(1)},\ldots, x^{(n)} \overset{\text{iid}}{\sim}\mu$. The algorithm then forms a reweighted kernel $K_w(x,z)$, which in essence amounts to reweighing the feature vectors. The next prediction function $\hat f_t$ is then defined to be the KRR predictor obtained by fitting the data $\{(x^{(i)},y_i)\}_{i=1}^n$ with the kernel $K_w$. The weight vector $w$ is then updated using the average of the empirical coordinate weights and the DN estimator,  safeguarded by adding a small positive constant $\epsilonS>0$. This safeguard turns out to be important both in theory and in numerical experiments. The reason is intuitively clear: in the extreme case, a weight $w_r$ on coordinate $x_r$ that is set to zero in iteration $t$ will remain zero for all future iterations. In other words, without the safeguard it would be impossible for the algorithm to recover from incorrectly classifying a coordinate as noninfluential. In the update, we obtain the second weight vector $w^{[2]}$ by taking the Hadamard product of the DN estimator with the current weight vector $w$ analogously to a multiplicative weight update algorithm. Finally, we normalize the two intermediate weight vectors and combine them using an averaging parameter $\alpha$ to obtain the new weight vector $w$.

The analysis of Algorithm~\ref{alg:rfm2} proceeds in two stages. First, we describe which coordinates can be learned in a single step of $\mathtt{IRKM}(\alpha)$ for a fixed weight vector $w$ (regardless of how $w$ originated). Secondly, we monitor the evolution of the weights and inductively apply the one-step guarantees in order to describe all the coordinate that are learned along with the generalization error of the final predictor.

The first stage, which states which coordinates can be learned in a single step of $\mathtt{IRKM}(\alpha)$, is summarized in Theorem~\ref{thm:learning_hypercube}. We defer this theorem to Appendix~\ref{sec:one_step_learning} since its statement is quite technical.
For the second stage, we introduce the following definition from \cite{abbe2023sgd} to describe the class of functions that $\mathtt{IRKM}(\alpha)$ is capable of learning.

\begin{definition}[Leap complexity]{\rm 
    A function $f=\sum_{S\subset[d]}b_S\phi_S$ has {\em leap complexity $k$}, denoted ${\rm Leap}(f)=k$, if the nonzero coefficients $\{b_S\}$ can be ordered in such a way that the corresponding sets $\{S_{i}\}_{i=1}^m$ satisfy 
$$|S_1|\leq k \qquad \textrm{and}\qquad \left|S_i\setminus \midcup_{1\leq j<i} S_j\right|\leq k,$$
and $k$ is the smallest integer for which this is possible.}
\end{definition}

In other words, $f$ has leap complexity $k$ if $k$ is the smallest integer such that its nonzero coefficients can be ordered in a way that the corresponding sets grow by at most $k$ indices. As a concrete example, the polynomial 
\begin{equation}\label{eqn:repeat_hiarch}
f^*(x)=x_1 + x_2 + x_1x_2x_3+x_1x_2x_3x_4,
\end{equation}
which figured prominently in the introduction \eqref{eqn:first_exa2}, has leap complexity ${\rm Leap}(f^*)=1$ since we may define $S_1=\{1,2\}$, $S_2=\{1,2,3\}$, and $S_3=\{1,2,3,4\}$. We note that in the earlier work \cite{abbe2022merged}, functions with leap complexity one were said to satisfy the ``merged staircase property''.

In the same way that we may isolate a low-degree component of a function, we may define a low-leap complexity component. This is the content of the following definition.

\begin{definition}[Maximum $k$-leap complexity component]\label{def:leap_complexity}
{\rm  For any function $f = \sum_{\I\subset [d]} b_\I \phi_\I$,  its {\em maximum $k$-leap complexity component}, denoted by $\mathsf{L}_k f$, is defined as
 \begin{align}
     \mathsf{L}_k f := \sum_{\I \in \S} b_\I \phi_\I,~~~{\rm where}~~\S =\argmax_{\S' \subseteq [d]}\left\{|\S'|:~{\rm Leap}\left(\sum_{\I \in \S'} b_\I \phi_\I\right)\leq k\right\}.
 \end{align}}
\end{definition}

For example, the maximum $1$-leap component of the function $g(x)=x_1+x_2+x_1x_2x_3+x_3x_4x_6$ is $\mathsf{L}_1 f=x_1+x_2+x_1x_2x_3$. The following theorem shows that in the sampling regime $n=d^{p+\delta}$, a constant number of iterations of $\mathtt{IRKM}(\tfrac{1}{2})$ learns the maximum $(p+1)$-leap  component of a sparse polynomial. The proof appears in Appendix~\ref{proof:multi_step}.

\begin{theorem}[Learning the maximal leap component]\label{cor:multi_step}
Suppose that Assumptions~\ref{assump:g_0} and \ref{assump:finite_basis_hypercube} hold and that the number of influential coordinates of $f^*$ scales as $\Theta(d^s)$ for some constant $s\in [0,1)$. Consider the regime $n = d^{p+\delta}$ where $\delta\in(0,1)$ is a constant and suppose that condition \eqref{eqn:nondegegn} holds with $q=\ell$. Suppose the  sparsity level satisfies $s < \frac{\delta}{\ell+1}$ , choose any  $\eta\in\round{1 - \frac{\delta}{\ell+1},1 -s}$ and mixing parameter $\alpha\in(0,1)$.
Then there exists a horizon $T = O_d(1)$ such that the predictor $\hat f_T$ returned by $\mathtt{IRKM}(\alpha)$ (Algorithm~\ref{alg:rfm}) with safeguard $\epsilonS = d^{-\eta}$
satisfies:
\begin{align*}
\norm{\hat{f}_{T} - \mathsf{L}_{p+1} f^*}^2_{L_2} = o_{d,\P}(1)\cdot (\norm{f^*}_{L_2}^2+\sigma_\varepsilon^2).
\end{align*}
\end{theorem}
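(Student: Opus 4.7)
The approach is to combine the one-step learning guarantee (Theorem~\ref{thm:learning_hypercube} in Appendix~\ref{sec:one_step_learning}) with an induction on the set of coordinates identified as influential across successive rounds of IRKM. Write $U_t \subseteq [d]$ for the set of coordinates that have been unmasked by round $t$, in the sense that their post-normalization weight $w_r$ is much larger than the safeguard level $\epsilonS$. The plan is to show inductively that $U_t$ grows to contain the full support of $\mathsf{L}_p f^*$ within a finite number of rounds $T = O_d(1)$, after which the final predictor $\hat f_T$ recovers $\mathsf{L}_p f^*$ in $L_2$ via the generalization argument underlying Theorem~\ref{cor:one_step}.

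For the base case at $t=1$ with $w = \mathbf{1}_d$, Theorem~\ref{thm:main_hypercube} implies that $\E_n(\partial_r \hat f_1)^2$ concentrates around $\E(\partial_r f^*_{\leq p})^2$; in the oversampling sub-regime $\delta \in (-\tfrac{1}{2},0)$, viewed as $n = d^{(p-1)+(1+\delta)}$ with $1+\delta > 1/2$, the extra $d^{2\delta}\E(\partial_r f^*_{p})^2$ term of Theorem~\ref{thm:main_hypercube} additionally unmasks coordinates appearing only in degree-$p$ monomials. The fast $O(d^{-1})$ rate for non-influential coordinates, together with the sum-to-$d$ normalization step \ref{line:normalize}, creates a gap between identified coordinates (post-normalization weight $\Theta(d^\eta)$) and unidentified ones (post-normalization weight $\Theta(1)$). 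The constraints on $\eta$ and on the sparsity level $s$ in the two sub-cases of the theorem are tuned precisely to guarantee this separation by balancing the signal-to-noise bounds in Theorem~\ref{thm:learning_hypercube} against the $\Theta(d^s)$ total mass of influential coordinates.

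For the induction step, we apply Theorem~\ref{thm:learning_hypercube} to the reweighted kernel $K_w$ with $w$ obtained from round $t-1$; the resampling step \ref{line:resample} makes the fresh data independent of $w$, so the one-step theorem may be invoked conditionally on $w$. The heavy reweighting $w_j \asymp d^\eta$ for $j \in U_{t-1}$ and $w_j \asymp 1$ otherwise means the kernel inner product $\sum_j w_j x_j z_j/d$ treats the identified coordinates as effectively known, reducing the effective sample complexity of learning a monomial $\phi_S$ from $d^{|S|}$ to $d^{|S\setminus U_{t-1}|}$. Consequently, any monomial $\phi_S$ of $\mathsf{L}_p f^*$ with $|S \setminus U_{t-1}| \leq p$ is identified in round $t$ and absorbed into $U_t$. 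By the definition of leap-$p$ complexity, the monomials of $\mathsf{L}_p f^*$ admit an ordering with each successive set adding at most $p$ new indices, so the round-by-round staircase terminates with $U_T$ equal to the full support of $\mathsf{L}_p f^*$ within a bounded number of rounds $T = O_d(1)$ determined by the constant degree $\ell$ and leap parameter $p$. A final round of KRR with the concentrated weight then produces $\hat f_T$ approximating $\mathsf{L}_p f^*$ on its influential support, and the reasoning of Theorem~\ref{cor:one_step} yields $\|\hat f_T-\mathsf{L}_p f^*\|_{L_2}^2 = o_{d,\P}(1)\cdot(\|f^*\|_{L_2}^2+\sigma_\varepsilon^2)$.

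The main technical obstacle is the inductive control of the reweighted KRR problem: one must verify that the assumptions of Theorem~\ref{thm:learning_hypercube} continue to hold at every round after normalization and safeguarding, and that the error bounds from that theorem do not accumulate across the $T$ rounds. The sparsity constraints $s < \delta/\ell$ and $s < (2\delta+1)/(2\ell+1)$ enter here to keep the total safeguard mass $\epsilonS(d-|U_{t-1}|)$ comparable to (but not dominating) the influential mass $|U_{t-1}|$ so that we remain in the weight-concentration regime above, and to keep the error moments involving $\|\partial_r f^*_{>p}\|_{L_2}$ appearing in Theorem~\ref{thm:learning_hypercube} below the coordinate-separation gap $d^\eta$ at every round.
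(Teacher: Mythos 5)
Your proposal correctly captures the high-level staircase structure used in the paper's Case~1 ($\delta \in (0,\tfrac{1}{2})$): an induction on the set of unmasked coordinates, fresh resampling to keep rounds conditionally independent, termination in $O_d(1)$ rounds by the definition of $\mathsf{L}_p f^*$, and a final $L_2$ bound via the generalization part of Theorem~\ref{thm:learning_hypercube}. That part of your argument is essentially the paper's proof.

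The gap is in your handling of $\delta \in (-\tfrac{1}{2},0)$. You write that viewing $n = d^{(p-1)+(1+\delta)}$, the extra $d^{2\delta}\,\E(\partial_r f^*_{p})^2$ term in Theorem~\ref{thm:main_hypercube} ``additionally unmasks coordinates appearing only in degree-$p$ monomials,'' and then you treat this on par with a full unmasking and continue your binary induction on $U_t$. This is incorrect: the contribution is of subpolynomial magnitude (of order $d^{-2(1-\delta)}$ in the paper's normalization), so the reweighted coordinate receives a weight $w_r$ of an intermediate order, strictly between the safeguard $\Theta(1)$ and the fully-learned order $\Theta(d^{\eta})$. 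Your induction, which partitions coordinates into ``identified'' and ``unidentified'' with two weight scales, therefore breaks down — the inductive hypothesis for the $(\vs,\bm{\gamma})$-active partition required by Theorem~\ref{thm:learning_hypercube} fails after the first round, since a genuinely three-valued (or many-valued) weight profile arises. The paper resolves this with a separate bootstrap argument (Claim~\ref{claim:partial_to_partial}): a ``partially learned'' coordinate's weight order increases monotonically across rounds, and crucially the gap to full learning contracts geometrically (by $2^t\cdot\sigma$ at step $t$), so the coordinate becomes fully learned within a bounded, $d$-independent number of rounds. Without this accelerating-contraction argument one cannot conclude $T=O_d(1)$ in Case~2 — naively one might worry the sequence of partially learned orders asymptotes without ever reaching the full-learning threshold. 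Your proposal omits this mechanism entirely, so the $\delta<0$ half of the statement is not actually established.
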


Let us illustrate Theorem~\ref{cor:multi_step} by applying it to the function \eqref{eqn:repeat_hiarch}. Recall that this function has leap complexity $p=1$. Then applying Theorem~\ref{cor:multi_step} with $s=0$ and $n=d^{\delta}$ for any $\delta\in (0,1)$, we deduce that after a constant number of steps $T = O_d(1)$ the  $\mathtt{IRKM}(\tfrac{1}{2})$ algorithm returns a predictor $\hat f_T$  that is a consistent estimator of $f^*$ in the sense that $\|\hat f_T - f^*\|^2_{L_2} = o_{d,\P}(1)\cdot (\norm{f^*}_{L_2}^2+\sigma_\varepsilon^2)$.

An important special case occurs when $f^*_{\leq p+1 }$ and $f^*$ share the same set of influential coordinates meaning that $\E(\partial_r f^*)^2$ is zero if and only if $\E(\partial_r f^*_{\leq p+1 })^2$ is zero. In this situation, we have $\mathsf{L}_{p+1}f^* = f^*$ and therefore Theorem~\ref{cor:multi_step} reduces to Theorem~
\ref{thm:gener_features_ite} stated in the introduction. As a concrete example, consider the function $f(x) = x_1 + x_2 + x_3 + x_1x_2x_3$ (Eq.~\eqref{eqn:first_exa}) discussed in the introduction. Setting $p=0$ gives $\mathsf{L}_{1}f^* = f^*_{\leq 1} = x_1 + x_2 + x_3$. Note that Corollary~\ref{cor:combined_estimator} shows that the linear component  $f_{\leq 1}(x) = x_1+x_2 +x_3$ can be weakly learned (with coefficient $d^{\delta-1}$) for any average parameter $\alpha\in(0,1)$ in the regime $n = d^{\delta}$. Thus we may instantiate Theorem~\ref{cor:multi_step} by setting $s=0$, $p=0$, and choosing any $\eta\in (1-\tfrac{\delta}{4},1)$.
Theorem~\ref{cor:multi_step} then implies that in a constant number of steps $T$ the $\mathtt{IRKM}(\tfrac{1}{2})$ algorithm return a predictor $\hat{f}_T$ that learns the third-order component $x_1x_2x_3$ and becomes an asymptotically consistent estimator for $f^*$ in the sense that $\|\hat f_T - f^*\|^2_{L_2} = o_{d,\P}(1)\cdot (\norm{f^*}_{L_2}^2+\sigma_\varepsilon^2).$

\section{Proof sketches of Theorems~\ref{thm:main_hypercube} and \ref{cor:multi_step}}

In this section, we outline the key ingredients underpinning our results. We will focus on the case of hypercube data as this is the setting when our strongest results are applicable.

\subsection{Basic notation}
For each $\lambda\in\{0,1\}^d$, let $\phi_\lambda(x)=x^\lambda$ be the Fourier orthonormal basis element.  
For a subset $\S\subset\{0,1\}^d$ define the feature vector
$\phi_{\S}(x):=\{\phi_\lambda(x)\}_{\lambda\in\S}\in\mathbb{R}^{|\S|}$,
and the feature matrix $\Phi(X)\in\mathbb{R}^{n\times|\S|}$ whose $i$-th row is $\phi_{\S}(x^{(i)})$.
We write $\S_k:=\{\lambda:|\lambda|=k\}$ for the degree-$k$ index set, and use
\(\Phi_{\S_k}(X)\) and \(\Phi_{\le k}(X)\) for the corresponding submatrices.

An important observation is that multiplication of $x^{S}$ by a coordinate adds or removes that coordinate from the support:
\[
x_r x^{S}=
\begin{cases}
x^{S\cup\{r\}}, & r\notin S,\\
x^{S\setminus\{r\}}, & r\in S.
\end{cases}
\]
We let $\S_j$ consist of all subsets of $[d]$ of cardinality $j$.
Giving a coordinate $r$, we decompose each $\S_j$ as
\begin{align}\label{main:eq:s_j_0_1}
    \S_j^0=\{S\in \S_j: r\notin S\}\qquad \textrm{and}\qquad \S_j^1=\{S\in \S_j: r\in S\}.
\end{align}

\subsection{Kernel and feature decompositions}
Now define the matrices $K_{ij} = g\round{\nicefrac{\inner{x^{(i)},x^{(j)}}}{d}}$, $K'_{ij} = g'\round{\nicefrac{\inner{x^{(i)},x^{(j)}}}{d}}$,  and $K_\lambda:=K+\lambda I_n$. Direct algebraic manipulations show the convenient expression for the coordinate weights of the KRR predictor:
\begin{align}\label{main:eqn:deriv_est}
     &~~~\E_n(\partial_r\hat f)^2 =\frac{1}{nd^2} \|K'X_r \beta\|^2_2,
\end{align}
where $X_r \in \R^{d\times d}$ is a diagonal matrix with entry $(X_r)_{i,i} = x_r^{(i)}$ for $i\in[d]$ and $\beta=K_{\lambda}^{-1}y$ are the dual coefficients. Now a natural idea is to rewrite $K$ and $K'$ in the Fourier basis in order to identify the dominant terms in this expression. Interestingly, in this basis the two matrices are nearly diagonal up to an identity perturbation:
\begin{align}
K &= \Phi_{\le p}D\Phi_{\le p}^\top + \rho I_n + \Delta_1,\label{main:eqn:K_decomp_cube}\\
K'&= \Phi_{\le p}D'\Phi_{\le p}^\top + \rho' I_n + \Delta,\label{main:eqn:K'_decomp_cube}
\end{align}
where $\rho,\rho'>0$ are some constants,  
$\| \Delta\|_{\rm op},\|\Delta_1\|_{\rm op} = O_{d,\mathbb{P}}(d^{(\delta-1)/2+\varepsilon})$ are error terms,  
and $D$ and $D'$ are diagonal matrices satisfying
\[
D_{\S_k},D'_{\S_k} = \Theta_d(d^{-k}) I_{|\S_k|}.
\]

\subsection{Empirical weights estimator}
Inserting \eqref{main:eqn:K'_decomp_cube} into \eqref{main:eqn:deriv_est} and carefully bounding the error terms gives the intuitive expression 
\begin{align*}
    \E_n(\partial_r\hat f)^2 = \frac{1+o_{d,\P}(1)}{nd^2} \norm{ \beta^\top X_r\Phi_{\leq p} D'\Phi_{\leq p}^\top }^2_2+O_{d,\P}(d\inv \sigma_\varepsilon^2).
\end{align*}
Now one can show that $\tfrac{1}{n}\Phi_{\leq p}\tran \Phi_{\leq p}$ is almost an isometry in the sense that
$\snorm{  \tfrac{1}{n}\Phi_{\leq p}\tran \Phi_{\leq p} - I_n} ={O}_{d,\P}(d^{-\delta/2+\epsilon})$. Therefore, after controlling the error terms, we can further simplify to
\begin{align}
\E_n(\partial_r\hat f)^2=\tfrac{1+o_{d,\P}(1)}{d^2}\cdot \|\beta \tran X_r\Phi_{\leq p} D' \|_2^2.\label{main:eqn:basic:needed}
\end{align}
Next, let us focus on rewriting the product $X_r\Phi_{\leq p}$ in the Fourier basis. To this end, notice that multiplication by $x_r$ shifts degrees. Simple algebra and orthogonality considerations show that the right-side of \eqref{main:eq:agop} is proportional to
\begin{align}\label{main:eq:decompose_norm}
     \sum_{j=1}^{p+1}\|\beta \tran   \Phi_{\S_j^1} (\tfrac{1}{d} D_{\S_{j-1}^0}') \|_2^2 + \sum_{j=0}^{p-1}\|\beta\tran    \Phi_{\S_j^0} (\tfrac{1}{d}D_{\S_{j+1}^1}') \|_2^2.
\end{align}
Writing $y=f^*(x)+\bm{\varepsilon}=\sum_{S}b_S x^S +\bm{\varepsilon}$ in the Fourier basis, we decompose $\beta=K_{\lambda}^{-1}y$ as:
\begin{align}\label{main:eq:decompose_beta}
    \beta =  \sum_{j=1}^{p+1}K_\lambda\inv\Phi_{\S_{j}^1} b_{\S_{j}^1} + \sum_{j=0}^{p-1}K_\lambda\inv\Phi_{\S_{j}^0} b_{\S_{j}^0} + K_\lambda\inv (\Phi_\C b_{\C}+\bm{\varepsilon})
\end{align}
where $\C := 2^{[d]} \slash \round{\left(\bigsqcup_{j=0}^{p-1} S_j^0\right)\cup \left(\bigsqcup_{j=1}^{p+1} S_j^{1}\right)}$ and $\bm{\varepsilon}\in\R^n$ denotes the noise vector with $\varepsilon_i \sim \mathcal{N}(0,\sigma_\varepsilon^2)$.

Now the key observation is  $K_\lambda\inv \Phi_{\S_k}$ is essentially a multiple of $\Phi_{\S_k}$, where the multiple depends on whether k is smaller or greater than $p$. This is the content of the following proposition, which follows from the arguments in the proof of Theorem~\ref{thm:main_hypercube}.
\begin{proposition}\label{main:prop:property_k}
In the regime $n = d^{p+\delta}$, the following estimate holds
\begin{align}
    K_\lambda\inv \Phi_{\S_k} b_{\S_k} \approx\begin{cases}
        \Phi_{\S_k} (nD_{\S_k})\inv b_{\S_k},~~~&{\rm if}~k\leq p,\\
        \Phi_{\S_k} b_{\S_k} ,~~~&{\rm if}~k> p,\label{main:eq:property_k}
    \end{cases} 
\end{align}
up to an $o_{d,\mathbb{P}}(\|\cdot\|_2)$ error.
\end{proposition}

Note that $\tfrac{n}{d}D_{\S_{k}} = \Theta_d(d^{\delta-1}) I_{|\S_k|}$ for $k\leq p$ and therefore $K_\lambda\inv$ roughly acts on $\Phi_{\S_k}$ as a contraction. Combining \eqref{main:eq:decompose_norm}, \eqref{main:eq:decompose_beta}, and Proposition~\ref{main:prop:property_k} and using orthogonality of Fourier blocks, we can obtain
\begin{align}\label{main:eq:norm_decomp}
     \|\beta \tran X_r\Phi_{\leq p} (\tfrac{1}{d}D') \|_2^2 = A+B + O_{d,\P}(d\inv),
\end{align}
where $A$ collects the $\S_j^1$ terms and $B$ the $\S_j^0$ terms.  A detailed calculation yields
\begin{align*}
    A\approx  \norm{b_{\S_{p+1}^1}\tran \underbrace{\round{\tfrac{1}{n}\Phi_{\S_{p+1}^1}\tran  \Phi_{\S_{p+1}^1}}}_{\approx I_{|\S_{p+1}^1|}} (\tfrac{n}{d}D_{\S_{p}^0}') }_2^2 +\sum_{j=1}^{p}\norm{b_{\S_j^1}\tran D_{\S_{j}^1} \underbrace{\round{\tfrac{1}{n}\Phi_{\S_j^1}\tran  \Phi_{\S_j^1}}}_{\approx I_{|\S_j^1|}} (\tfrac{1}{d}D_{S_{j-1}^0}') }_2^2.
\end{align*}
Noting that $(\tfrac{n}{d}D_{\S_{p}^0}') = \Theta_d(d^{\delta-1}) I_{|\S_p^0|}$ in the first term and that $D_{\S_j^1}$ cancels out with $(\tfrac{1}{d}D_{S_{j-1}^0}') $ in the second term, the above equation reduces to
\[
A\approx \Theta_d(d^{2\delta-2})\|b_{\S_{p+1}^1}\|_2^2+\sum_{j=1}^p \|b_{\S_j^1}\|_2^2.
\]
A similar argument can be applied to show $B=O_{d,\mathbb{P}}(d^{-2})$.
Combining all the ingredients yields the desired expression
\begin{equation}\label{main:eq:agop}
\E_n(\partial_r\hat f)^2
\approx \Theta_d(d^{2\delta-2})\|b_{\S_{p+1}^1}\|_2^2+\sum_{j=1}^p \|b_{\S_j^1}\|_2^2 + O_{d,\P}(d\inv \sigma_\varepsilon^2).
\end{equation}

\subsection{DN estimator for $w=\mathbf 1_d$} 
The computations for the DN estimator with $w = {\bf 1}_d$ proceed similarly. Namely, elementary algebra shows:
\begin{align*}
    \D_r({\bf 1}_d) = \frac{1}{nd} \|(K')^{\frac{1}{2}}X_r \beta\|^2_2.
\end{align*}
Analogously to the argument that led to \eqref{main:eqn:basic:needed}, the estimator $\D_r$ can be approximated by 
\begin{align*}
  \D_r({\bf 1}_d)\approx   \norm{ \beta\tran \Phi_{\leq p} X_r \round{ \frac{1}{nd}D'}^{\frac{1}{2}}}_2^2.
\end{align*}
Compared to Eq.~\eqref{main:eqn:basic:needed}, we observe that the only difference is that $(d\inv D')$ is replaced by $\round{ \frac{1}{nd}D'}^{\frac{1}{2}}$. Continuing the same argument, but with this different scaling yields the approximation
\begin{align}\label{main:eq:nd}
    \D_r({\bm 1}_d) \approx\sum_{j=1}^{p+1}\Theta_{d,\P}(d^{-|j-p-\delta|}) \norm{b_{\S_j^1}}_2^2 + O_{d,\P}(d^{-1+\epsilon}\sigma_\varepsilon^2).
\end{align}

\subsection{Extension to general weights}
Now we extend the proof to the case where weights are present and highlight how the iterative process $\mathtt{IRKM}(\alpha)$ can learn hierarchical functions. We consider a weight $w\in\R^d_+$ that satisfies $\norm{w}_1 = d$. Along the iterations of $\mathtt{IRKM}(\alpha)$, the coordinates of $w$ may exhibit different scalings in $d$. It will be important to monitor these dimension dependencies. To this end, fix a partition $\pi_N=\{\T_1,\ldots,\T_N\}$ of $[d]$ with
\begin{align}\label{main:eq:t_k_w_k}
    |\T_k| = \Theta_d(d^{s_k}),~~~w_r = \Theta_d(d^{1-\lambda_k}),~~\forall r\in\T_k.
\end{align}
For a tuple $\mT = (T_1, T_2,\ldots,T_N)$ with $T_k\in 2^{\T_k}$ for $k\in[N]$, we define its corresponding Fourier feature
\begin{align}
    \phi_{\mT} := \phi_{\cup_{k=1}^N T_k}= \prod_{k=1}^N x^{T_k}.
\end{align}
and we define its {\it effective degree} with respect to $w$ as
\begin{align}\label{main:eq:eff_degree}
    p_{\rm eff}(\mT) := \sum_{k=1}^N \lambda_k \cdot |T_k|,
\end{align}
As a concrete example for the initial weights $w = {\bf 1}_d$, we have $k=1$, $\T_1 = [d],$ and $\lambda_1 = 1$. Then the effective degree of any $\mT$ is simply $|T_1|$, which is exactly the degree of the feature. In the more general setting, the effective degree replaces the role of the polynomial degree $p$. Understanding how this quantity evolves along the iterations is key to obtaining guarantees for $\mathtt{IRKM}(\alpha)$.

Consider the empirical weights for the inner product kernel $(K_w)_{ij} = g\round{\nicefrac{\inner{\sqrt{w}\odot x^{(i)},\sqrt{w}\odot x^{(j)}}}{d}}$ and set $K_{w,\lambda}:=K_w+\lambda I_n$. As in Eq.~\eqref{main:eqn:deriv_est}, direct manipulations show
\begin{align}\label{main:eqn:deriv_est_w}
         \E_n\round{\partial_{r}  \hat{f}_{w}}^2= \frac{w_r^2}{nd^2} \norm{    K_{w}' X_r K_{w,\lambda}^{-1} y}^2_2.
\end{align}
Next, in parallel to Eq.~\eqref{main:eqn:K_decomp_cube}, we can decompose $K_w$ on Fourier blocks truncated now by effective degree:
\begin{align}\label{main:eq:kw_decomp}
    K_w = \Phi_{\alp}\widehat{D} \Phi_{\alp}\tran + \rho I_n+\Delta_1,
\end{align}
where
\begin{align*}
      \alp:=\biground{\mT = (T_1,T_2,\ldots,T_N)\;\middle|\; T_k\in 2^{\T_k}~~{\rm and}~p_{\rm eff}(\mT) \leq p+\delta},
\end{align*}
The diagonal entries of $\hat D$ scale as
\begin{align*}
     \widehat{D}_{\mT,\mT} = \Theta_d(d^{- p_{\rm eff}(\mT)}),
\end{align*}
which is a direct analog of $D$.  Note that for the initial weights $w = {\bf 1}_d$, since the effective degree is simply the degree of the feature, the set $\alp$ contains all Fourier features with degree not exceeding $p$. Thus $\alp$ is the analogues of all Fourier features of degree bounded by $p$.

Define the dual coordinates  $\beta_w:=K_{w,\lambda}\inv y$. In parallel to Eq.~\eqref{main:eqn:basic:needed}, the following term will dominate the empirical weights:
\begin{align*}
\E_n\round{\partial_{r}  \hat{f}_{w}}^2\approx    \norm{\beta_w \tran X_r\Phi_{\alp} (\tfrac{w_r}{d}\hd'_{\alp})}_2^2,
\end{align*}
where $\widehat D'$ is the diagonal matrix appearing in the decomposition of $K_w'$. 
Fix now a coordinate index $r$ and let $\iota_r$ be the unique index satisfying $r\in\T_{\iota_r}$.
We now define
\begin{align*}
    \A_{p_{\delta}}^+(r):=\biground{\mT = (T_1,T_2,\ldots,T_N)\;\middle|\; T_k\in 2^{\T_k}~~{\rm and}~0< p_{\rm eff}(\mT)-(p+\delta) < \lambda_{\iota_r}}.
\end{align*}
Henceforth, we will suppress the subscript $r$ on $\iota_r$ if it is clear from context.
We will see that $\A_{p_{\delta}}^+(r)$ plays the role of degree $p+1$ feasrures in the equiweighted case.
Analogous to \eqref{main:eq:agop} we can derive the estimates for the general weights:
\begin{equation}\label{main:eq:agop_w}
\E_n(\partial_r\hat f_w)^2
\approx \sum_{\mT\in \A_{p_{\delta}}^+(r)} \Theta_d(d^{-2|p_{\rm eff}(\mT) -(p+\delta)|})\E(\partial_r f^*_{\mT})^2  +  \sum_{\mT\in \alp} \E(\partial_r f^*_{\mT})^2 + O_{d,\P}(d^{-\lambda_{\iota}} \sigma_\varepsilon^2).
\end{equation}
The DN-estimator for general weights can be analogously derived:
\begin{equation}\label{main:eq:nd_w}
w_r\cdot\D_r(w)
\approx \sum_{\mT\in \A_{p_{\delta}}^+(r)\sqcup \alp} \Theta_d(d^{-|p_{\rm eff}(\mT) -(p+\delta)|})\E(\partial_r f^*_{\mT})^2   + O_{d,\P}(d^{-\lambda_{\iota}+\epsilon} \sigma_\varepsilon^2).
\end{equation}
Here for simplicity of presentation, we assume there is no $\mT$ such that $p_{\mathrm{eff}}(\mT) = p + \delta$, since this case has to be treated separately.

\paragraph{Algorithm $\mathtt{IRKM}(\alpha)$ learns leap $(p+1)$ functions.} Suppose now that $f^*$ has leap complexity $p+1$.
For any averaging parameters $\alpha\in(0,1)$, the two estimators \eqref{main:eq:agop_w} and \eqref{main:eq:nd_w} together give
\begin{align*}
    &~~~~(1-\alpha)\E_n(\partial_r\hat f_w)^2 + \alpha w_r\cdot\D_r(w)\\
    &\approx (1-\alpha)\sum_{\mT\in \alp} \E(\partial_r f^*_{\mT})^2 + \alpha\sum_{\mT\in \A_{p_{\delta}}^+(r)} \Theta_d(d^{-|p_{\rm eff}(\mT) -(p+\delta)|})\E(\partial_r f^*_{\mT})^2 + O_{d,\P}(d^{-\lambda_{\iota}+\epsilon} \sigma_\varepsilon^2).
\end{align*}

From the estimate we conclude that a feature $x_{\bf T}$ that depends on coordinate $x_r$ can be identified when its effective degree~\eqref{main:eq:eff_degree} does not exceed $p+\delta + \lambda_{\iota_r}$, where $\lambda_{\iota_r}$ is initially $1$. As the iterations proceed, one can show that any coordinate $r$ that is learned---meaning that one of the summands in the expression is nonzero---will be assigned a large weight after the normalization step (Line~\ref{line:normalizeintro} in Algorithm~\ref{alg:rfm}), that is of order $d^{\eta}$ with $\eta\in(0,1)$. This reweighting reduces the effective degree of every feature that depends on  $x_r$, thereby enabling those features to be learned in subsequent iterations. 

Consider any feature that is the product of learned and unlearned coordinates.
A careful accounting of normalization, safeguarding and sparsity shows that (1) each unlearned coordinate retains weight of the order $ \Theta_d(1)$; hence the corresponding $\lambda_\iota = 1$ and it contributes one to the effective degree, and (2) the combined contribution of the learned coordinates can be controlled to remain below 
$\delta$ since these coordinates are assigned large weights. Consequently, in the regime $n = d^{p+\delta}$, any feature whose effective degree is less than $p+\delta+1$ can be identified as discussed, and can contain at most $p+1$ new coordinates (see Figure~\ref{fig:effective_degree} for a visual illustration). Since $f^*$ is assumed to be finite degree, this careful accounting also shows that a constant number of iterations of $\mathtt{IRKM}(\alpha)$ suffices to learn leap $(p+1)$ component of $f^*$.
\begin{figure}[H]
    \centering
    \includegraphics[width=0.6\linewidth]{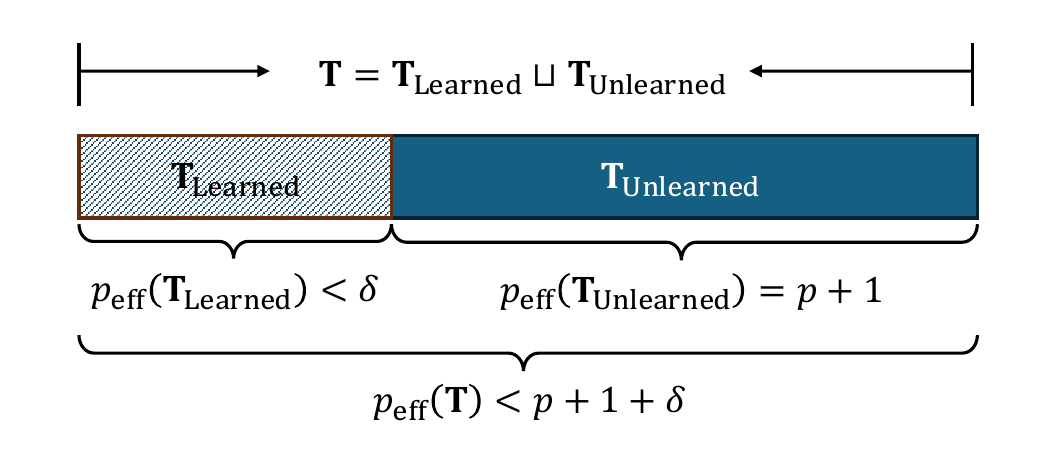}
\caption{Illustration of the effective degree of a feature as the product of learned and unlearned coordinates.}
    \label{fig:effective_degree}
\end{figure}

\section{Numerical experiments}\label{sec:numerics}
In this section, we demonstrate  the efficiency of $\mathtt{IRKM}(\alpha)$ (Algorithm~\ref{alg:rfm}) when learning sparse functions across diverse data distributions and with higher leap complexity than considered in the introduction. Our experimental results show that $\mathtt{IRKM}(\alpha)$ often outperforms neural networks on these tasks including epistasis detection, and achieves comparable performance to state-of-the-art methods on UCI classification tasks~\cite{fernandez2014we}. 
The code for $\mathtt{IRKM}(\alpha)$ that we use is available at \url{https://github.com/zhm1995/IRKM}.

For fixed training size $n$, we train both neural networks and kernel machines using $2n$ samples, drawing fresh $n$ samples at each $\mathtt{IRKM}(\alpha)$ training step.  All neural networks reported in this section are fully connected and trained with Adam, using widths and batch sizes in $\{128, 256, 512\}$, depth in $\{2,3,4,5,6\}$ and learning rate in $\{10^{-3}, 10^{-4}\}$. We report the smallest test loss across these configurations.  

Generally speaking, in our experiments, wider and deeper networks tend to perform better. As a concrete illustration, Figure~\ref{fig:network_depth} presents the test performance of fully connected neural networks across depths from 2 to 6 and width in $\{128, 256, 512\}$  for the experimental settings in Figure~\ref{fig:task_1}, \ref{fig:task_2} and \ref{fig:task_3} of the introduction. We observe that the test performance is improved as the depth or the width increases, and this trend persists in all our experiments. 

\begin{figure}[H]
    \centering
    \begin{subfigure}[b]{0.32\linewidth}
        \centering
        \includegraphics[width=\linewidth]{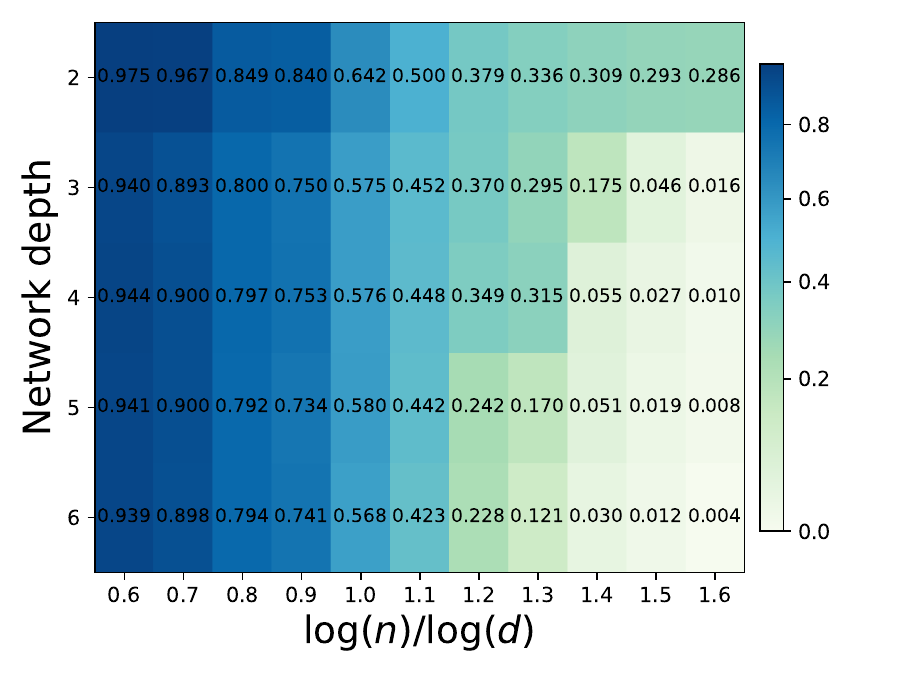}
    \end{subfigure}
    \begin{subfigure}[b]{0.32\linewidth}
        \centering
        \includegraphics[width=\linewidth]{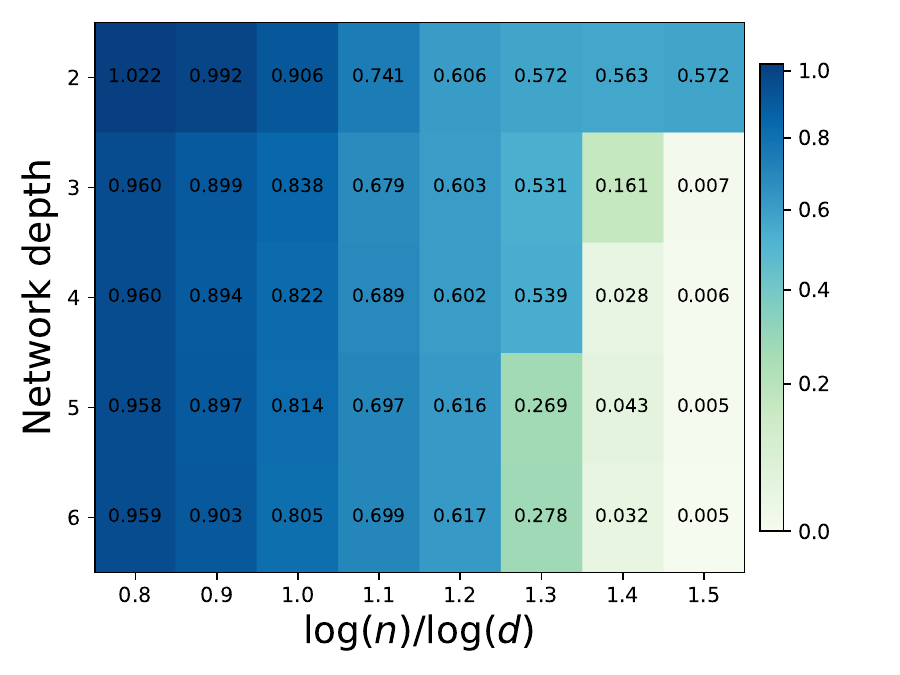}
    \end{subfigure}
            \begin{subfigure}[b]{0.32\linewidth}
        \centering
        \includegraphics[width=\linewidth]{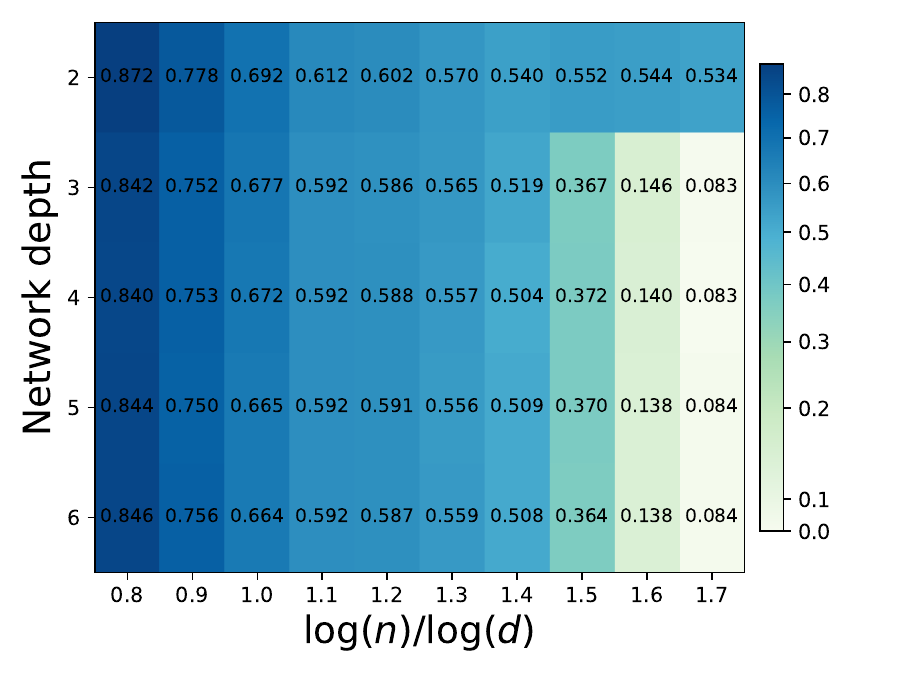}
         \end{subfigure}\\
    \begin{subfigure}[b]{0.32\linewidth}
        \centering
        \includegraphics[width=\linewidth]{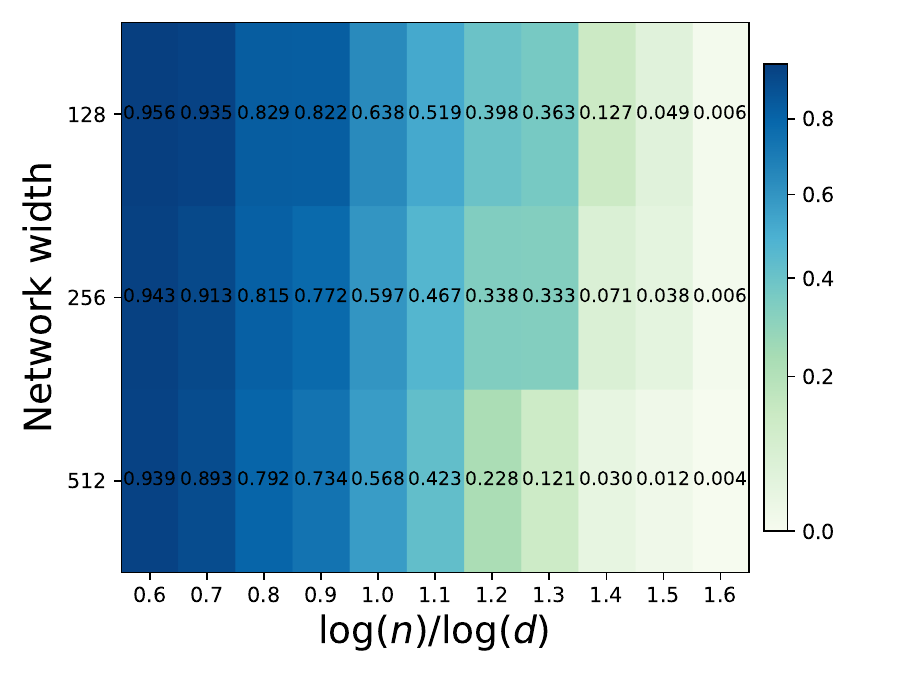}
        \caption{}
    \end{subfigure}
    \begin{subfigure}[b]{0.32\linewidth}
        \centering
        \includegraphics[width=\linewidth]{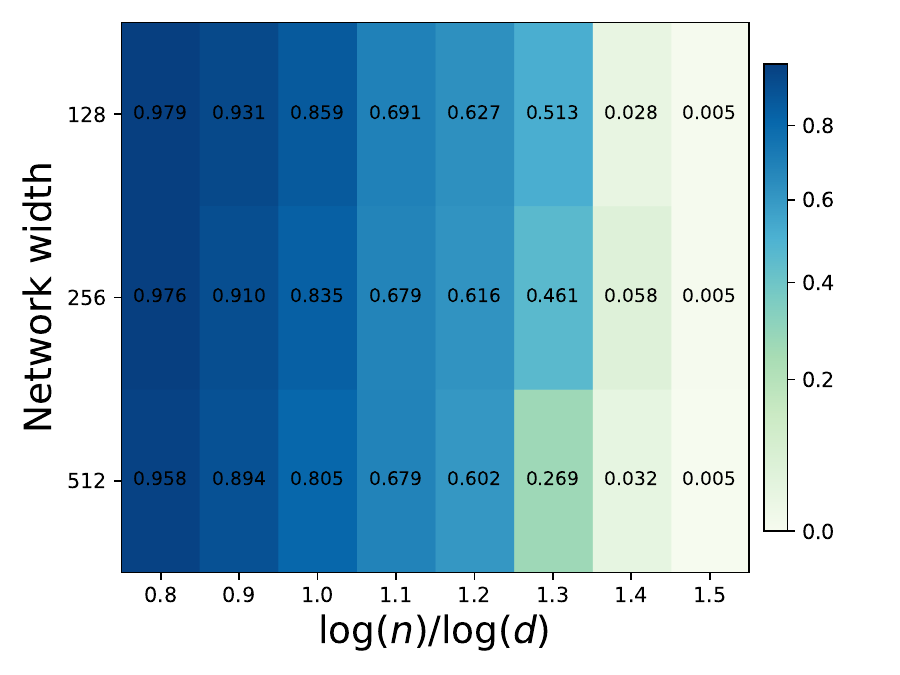}
    \caption{}
    \end{subfigure}
            \begin{subfigure}[b]{0.32\linewidth}
        \centering
        \includegraphics[width=\linewidth]{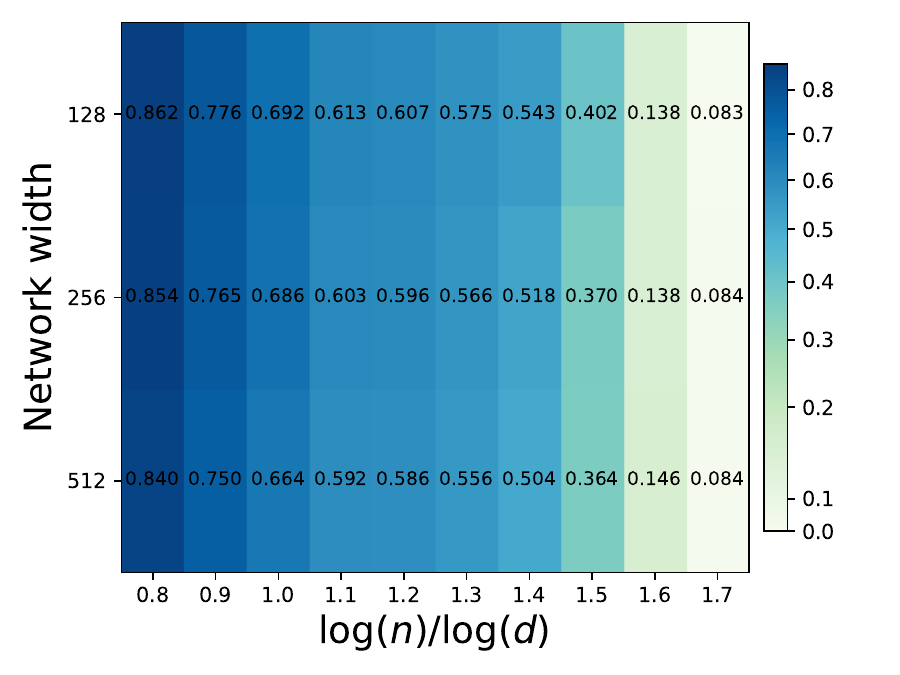}
        \caption{}
         \end{subfigure}\\         
    \caption{{\bf Best test performance of neural networks across various widths and depths.} Columns (a), (b) and (c) correspond to the experimental setting in Figure~\ref{fig:task_1}, \ref{fig:task_2} and \ref{fig:task_3} respectively. We train a fully-connected network by Adam,  varying the widths and batch sizes as $\{128, 256, 512\}$, depth as $\{2,3,4,5,6\}$ and learning rate as $\{10^{-3}, 10^{-4}\}$ and report the smallest test loss. For all experiments, we report an average of $5$ runs.}
    \label{fig:network_depth}
\end{figure}

\subsection{Evaluation of $\mathtt{IRKM}(\alpha)$  on Gaussian and CIFAR-10 data}\label{subsec:gauss_cifar}
Though our theoretical analysis of $\mathtt{IRKM}(\alpha)$ focuses on hypercube data, we now empirically demonstrate its impressive performance across multiple input distributions. Beyond the hypercube experiments in the introduction, we evaluate $\mathtt{IRKM}(\alpha)$ for learning sparse functions with inputs drawn from Gaussian distributions and the CIFAR-10 dataset~\cite{krizhevsky2009learning}. 

Analogously to Figures~\ref{fig:task_1}, \ref{fig:task_2} and \ref{fig:task_3} for hypercube data, we run $\mathtt{IRKM}(\tfrac{1}{2})$ on samples from Gaussian distributions to learn the same target functions.  
The results are reported in Figures~\ref{fig:task_2_gauss_cifar}A,~\ref{fig:task_2_gauss_cifar}B and \ref{fig:task_3_gaussian}A, respectively. 
For the CIFAR-10 dataset, we randomly select $200$ pixels from each image and learn sparse functions and multi-index models based on these selected pixels. See the results in Figure~\ref{fig:task_2_gauss_cifar}C and~\ref{fig:task_3_gaussian}B respectively. The conclusion is exactly the same in all cases as for hypercube data.

\begin{figure}[H]
  \centering

  \begin{minipage}{\linewidth}
    \makebox[0pt][l]{\textbf{A}\hspace{0.8em}}%
    \renewcommand\thesubfigure{A\arabic{subfigure}}
    \setcounter{subfigure}{0}

    \begin{subfigure}[b]{0.38\linewidth}
        \centering
        \includegraphics[width=\linewidth]{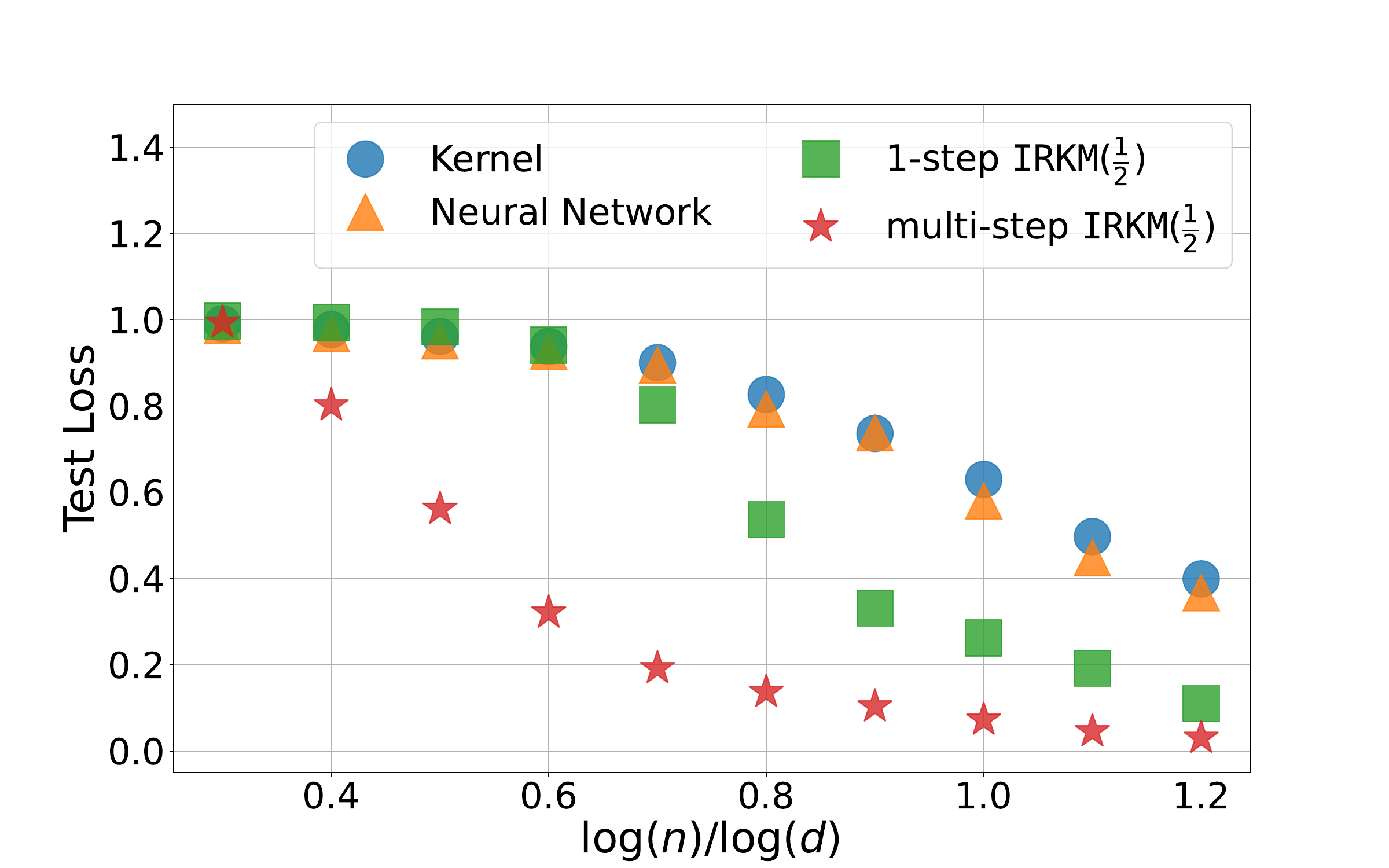}
    \end{subfigure}
    \begin{subfigure}[b]{0.3\linewidth}
        \centering
        \includegraphics[width=\linewidth]{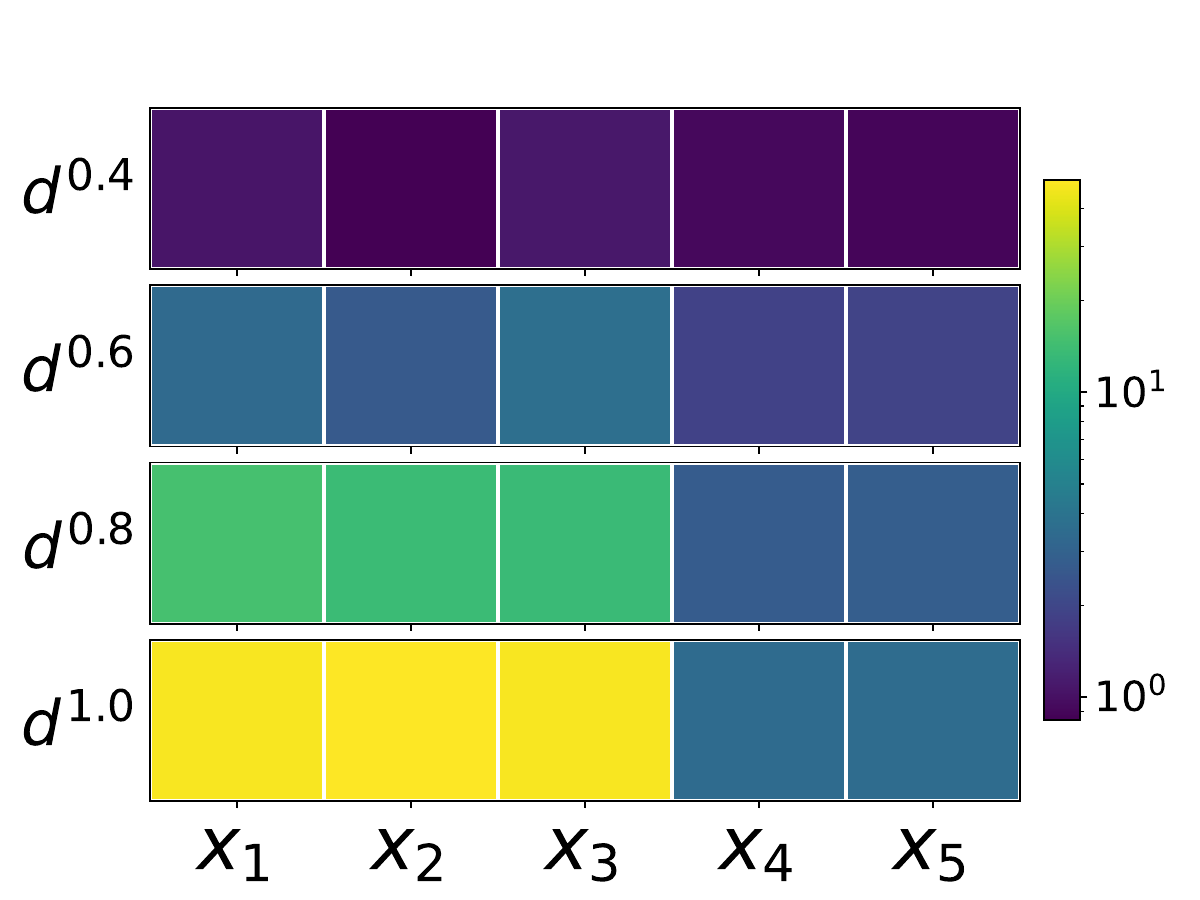}
    \end{subfigure}
        \begin{subfigure}[b]{0.3\linewidth}
        \centering
        \includegraphics[width=\linewidth]{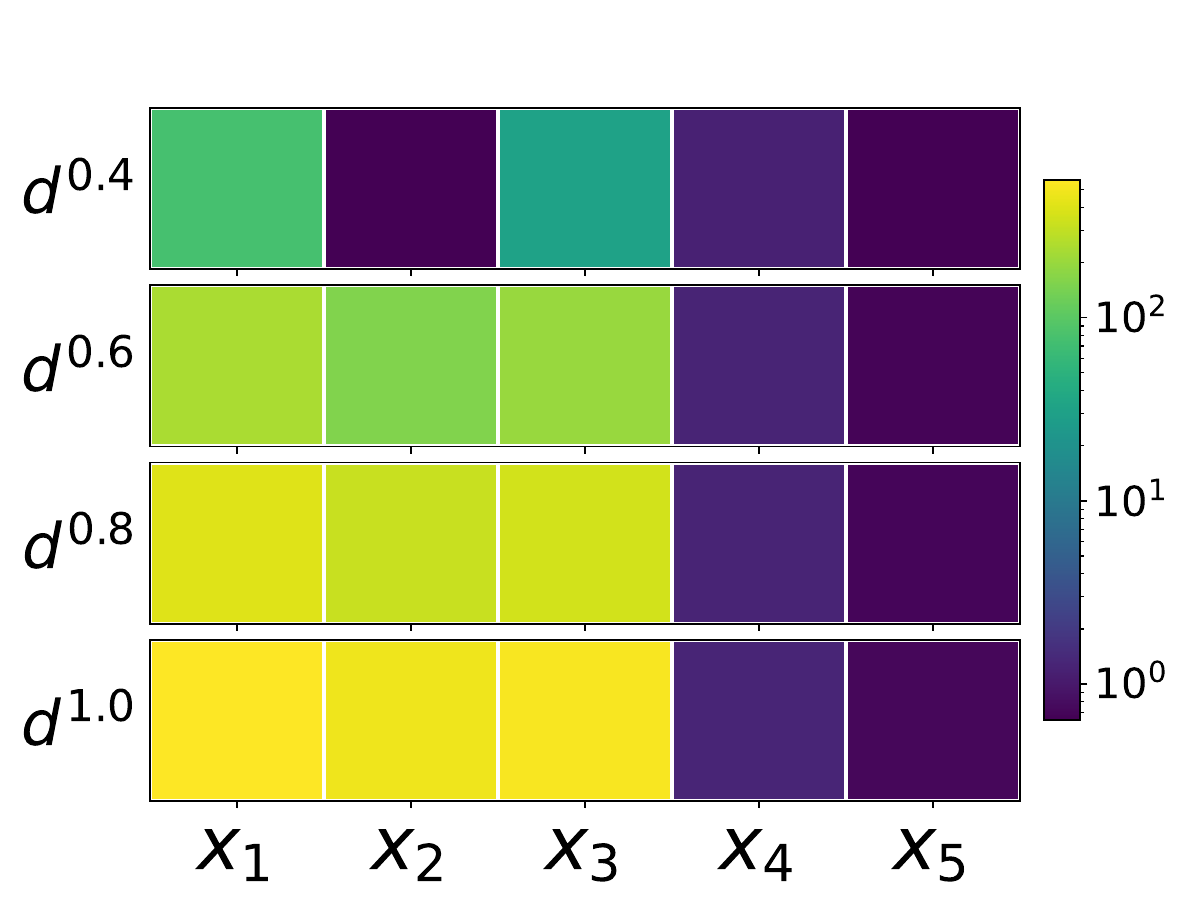}
    \end{subfigure}
  \end{minipage}
  \vspace{0.9em}
  
  \begin{minipage}{\linewidth}
    \makebox[0pt][l]{\textbf{B}\hspace{0.8em}}%
    \renewcommand\thesubfigure{B\arabic{subfigure}}
    \setcounter{subfigure}{0}

    \begin{subfigure}[b]{0.38\linewidth}
      \centering
      \includegraphics[width=\linewidth]{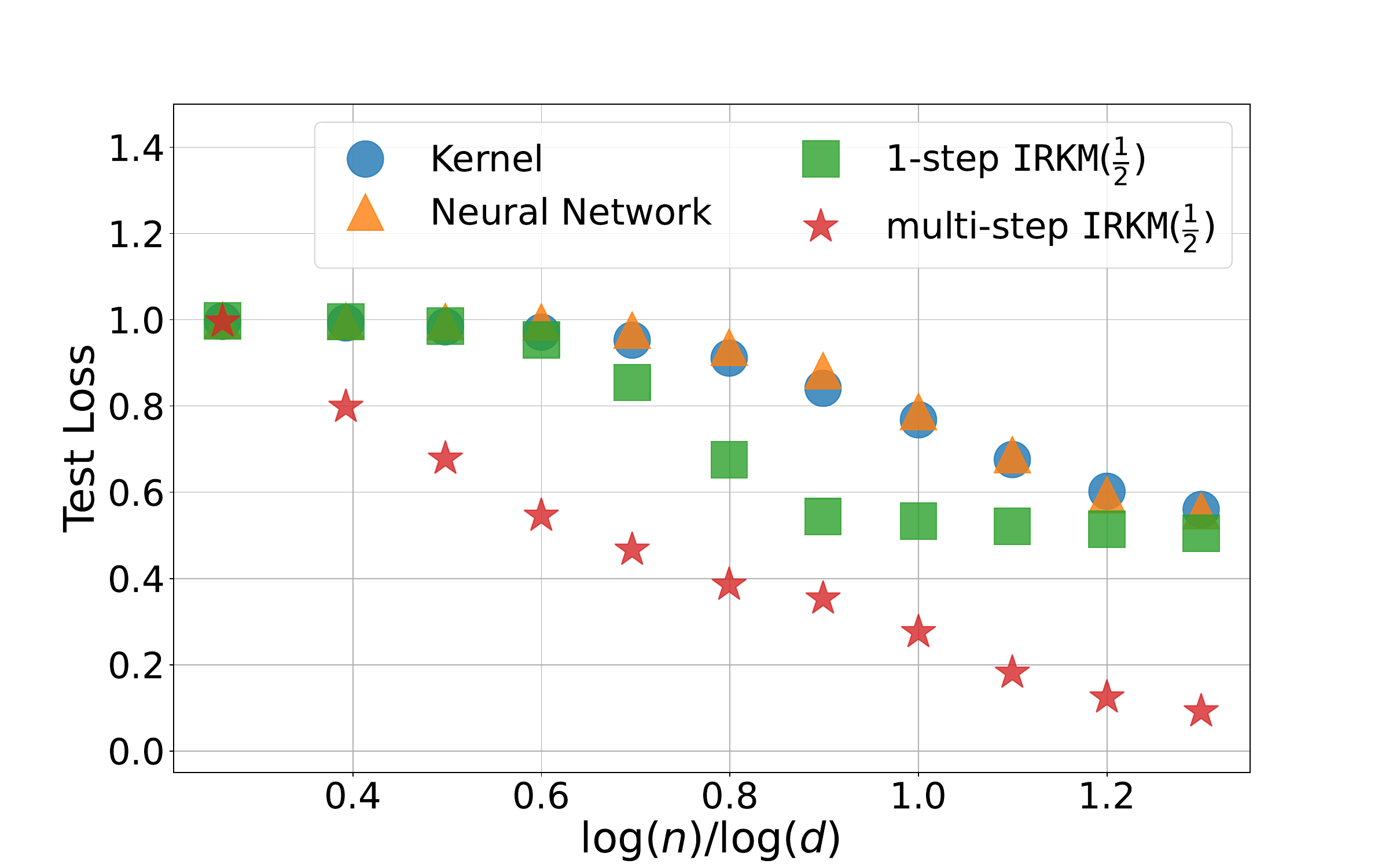}
    \end{subfigure}\hfill
    \begin{subfigure}[b]{0.3\linewidth}
      \centering
      \includegraphics[width=\linewidth]{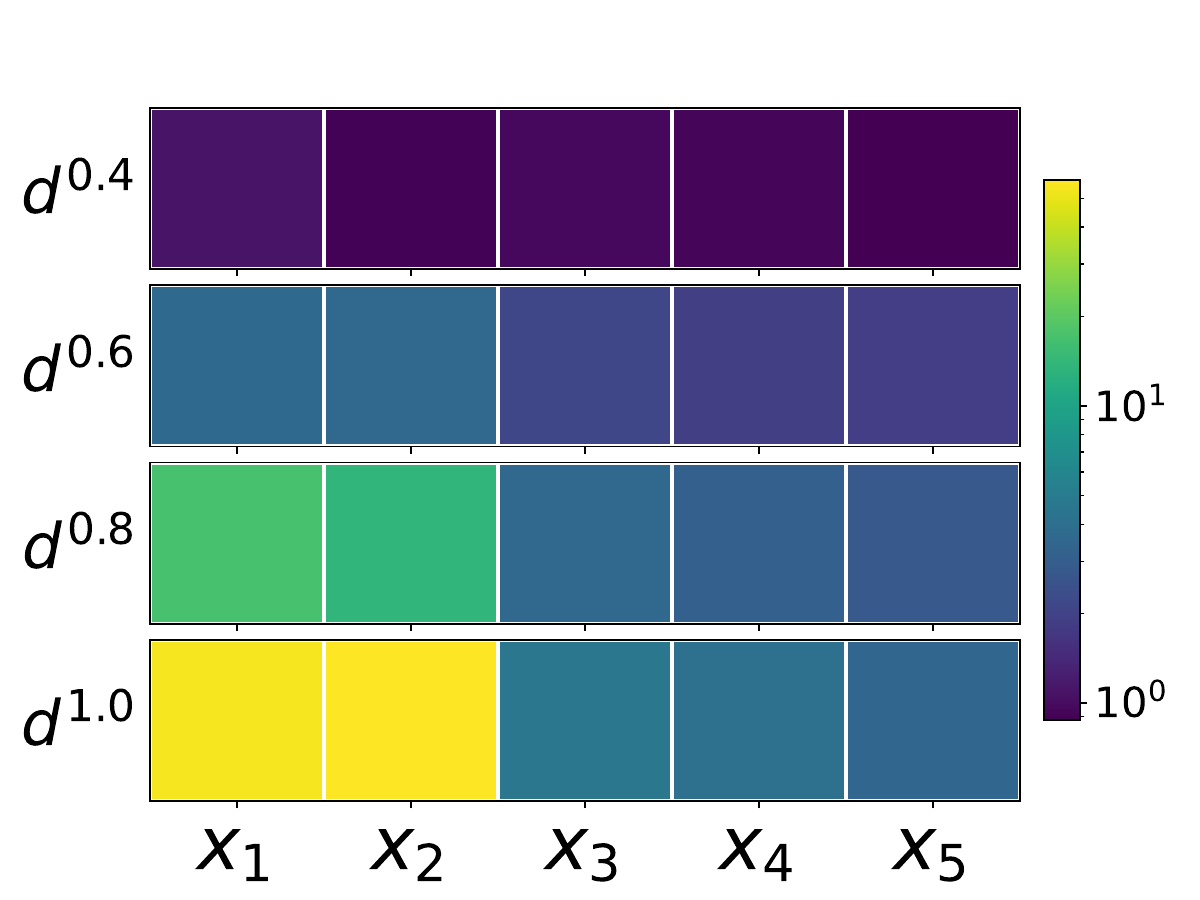}
    \end{subfigure}\hfill
    \begin{subfigure}[b]{0.3\linewidth}
      \centering
      \includegraphics[width=\linewidth]{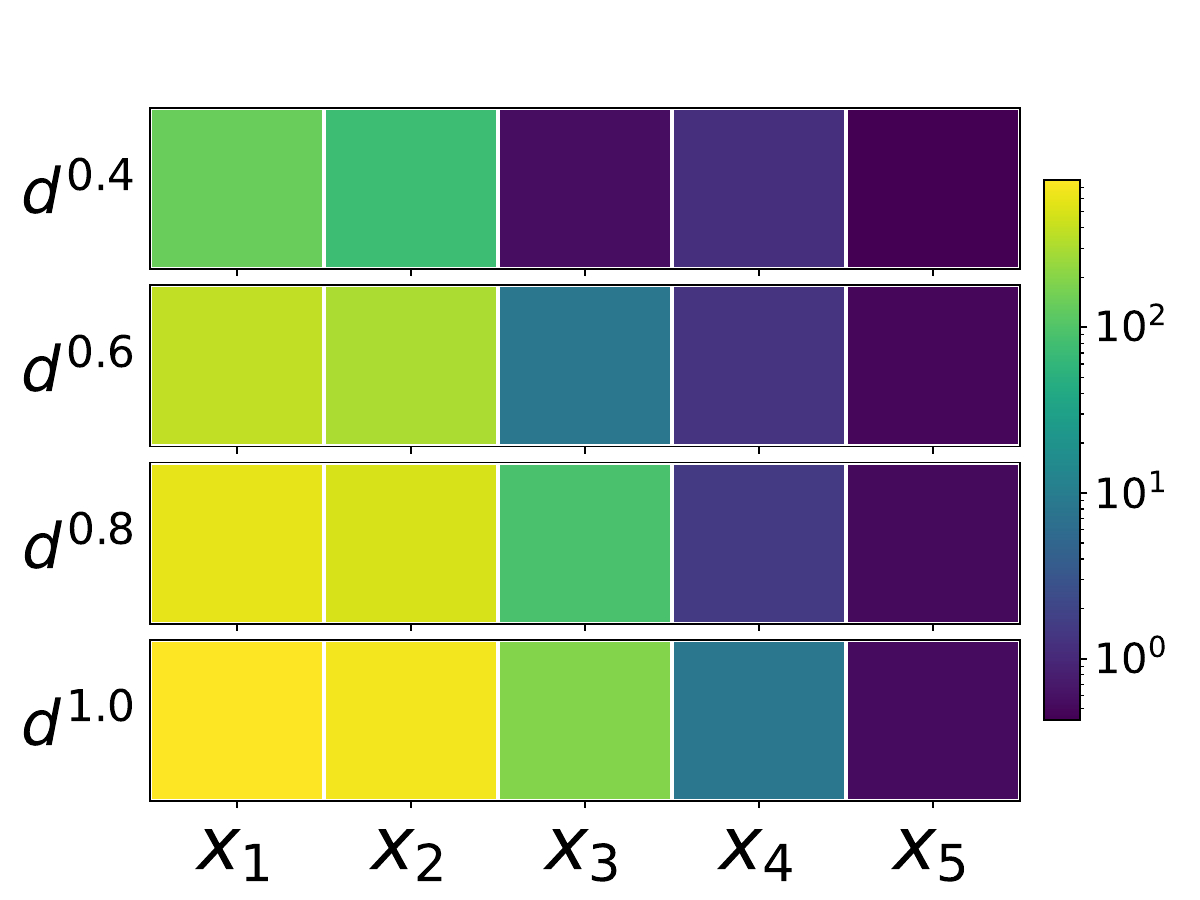}
    \end{subfigure}
  \end{minipage}

  \vspace{0.9em} %

  \begin{minipage}{\linewidth}
    \makebox[0pt][l]{\textbf{C}\hspace{0.8em}}%
    \renewcommand\thesubfigure{\arabic{subfigure}}
    \setcounter{subfigure}{0}

    \begin{subfigure}[b]{0.38\linewidth}
      \centering
      \includegraphics[width=\linewidth]{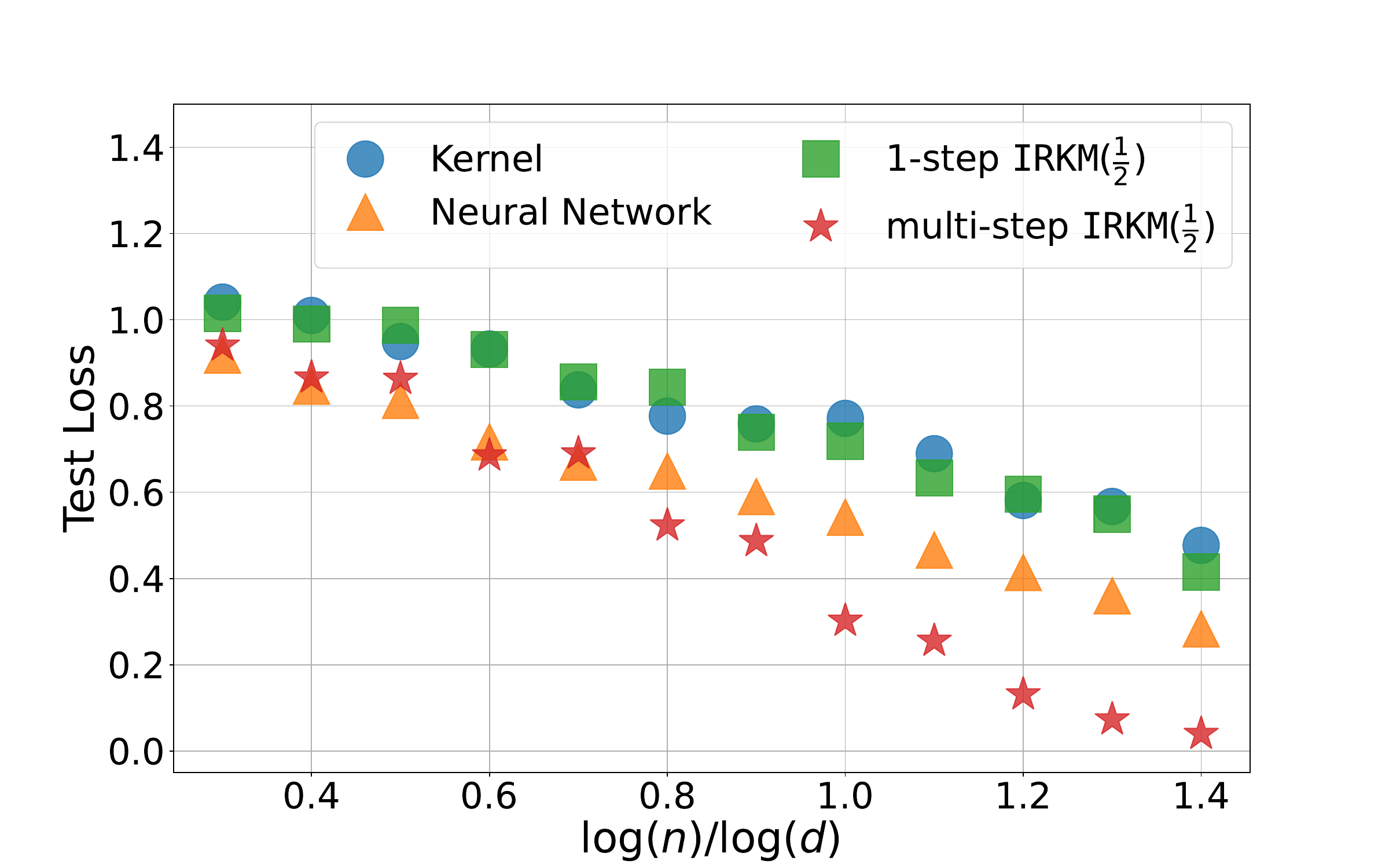}
      \caption{Generalization error}\label{fig:B1}
    \end{subfigure}\hfill
    \begin{subfigure}[b]{0.3\linewidth}
      \centering
      \includegraphics[width=\linewidth]{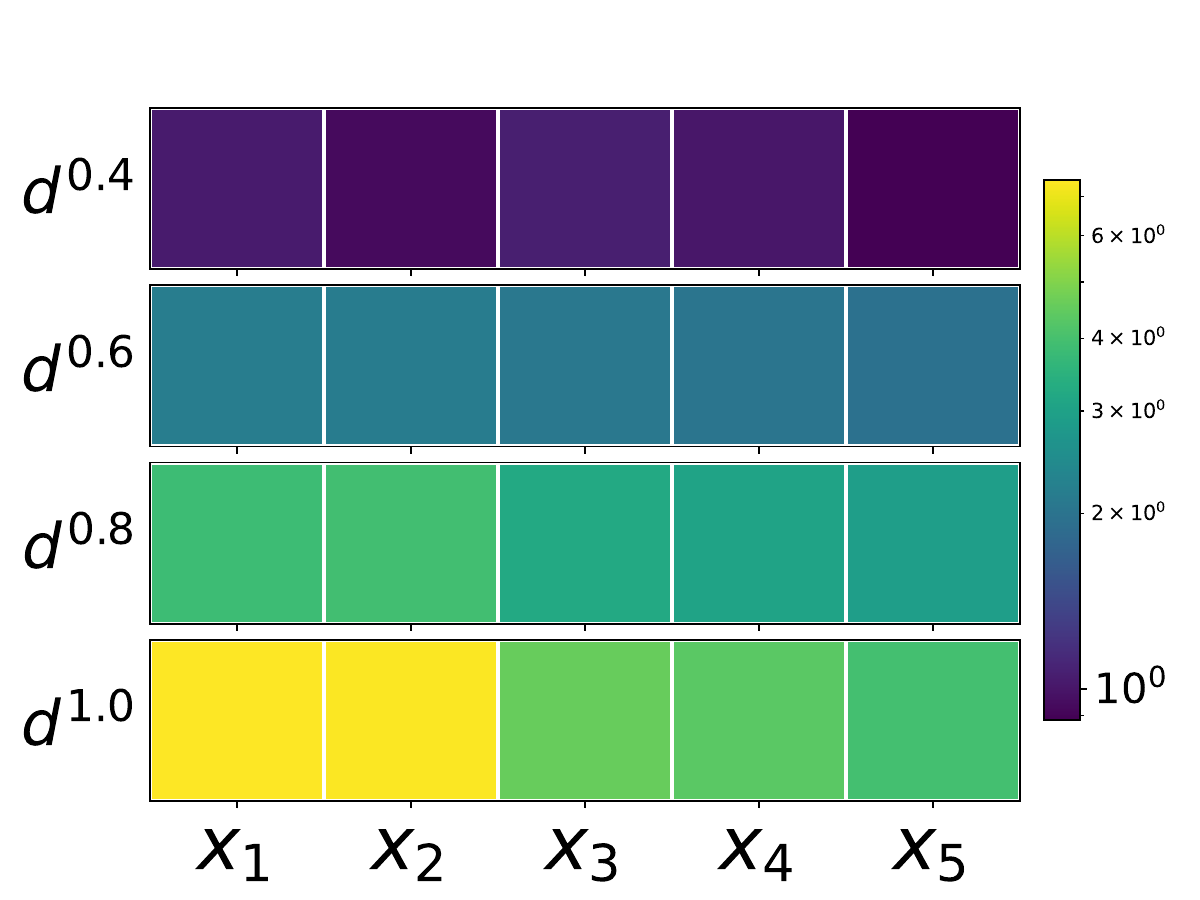}
      \caption{One‑step identification}\label{fig:B2}
    \end{subfigure}\hfill
    \begin{subfigure}[b]{0.3\linewidth}
      \centering
      \includegraphics[width=\linewidth]{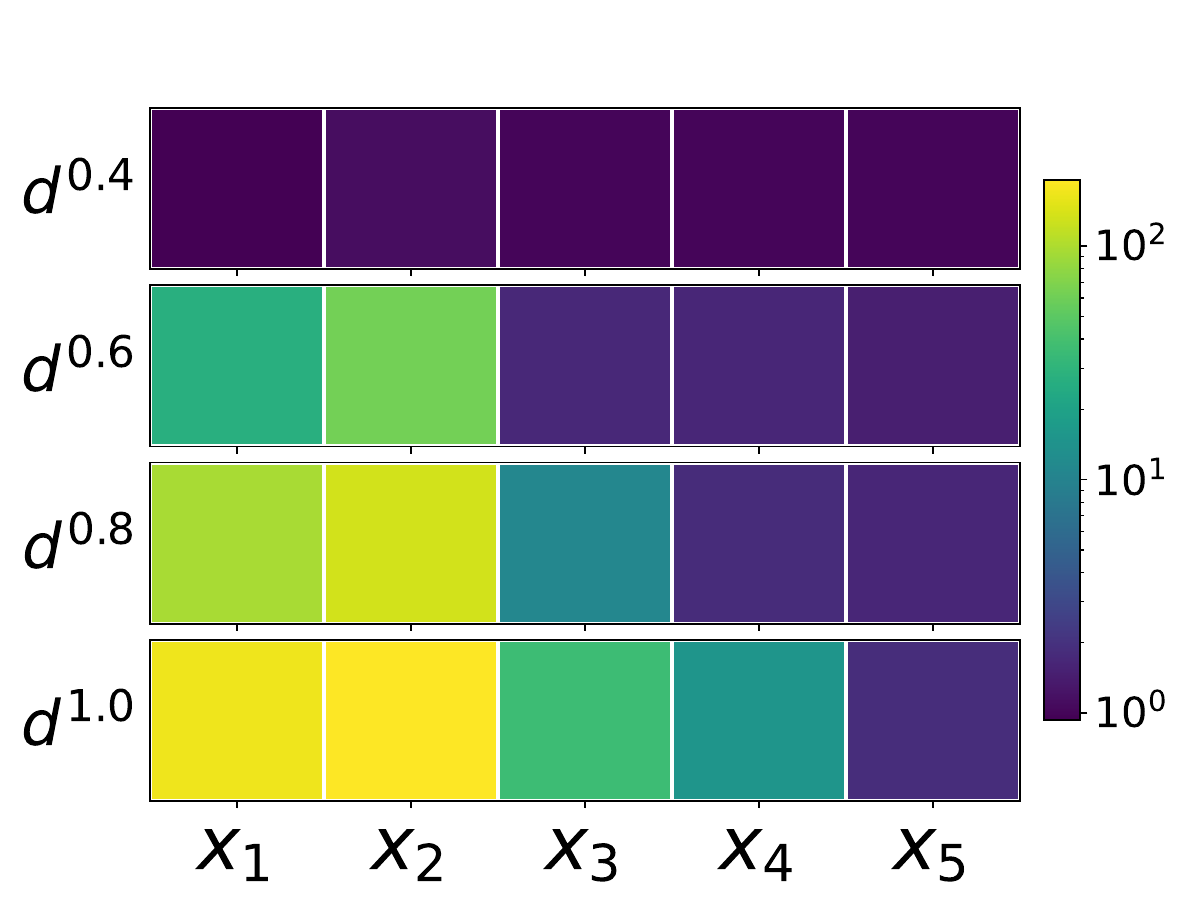}
      \caption{Multi‑step identification}\label{fig:B3}
    \end{subfigure}
  \end{minipage}
\vspace{0.9em}
  \caption{
  {\bf Column.} {\bf (1)} Test loss for kernel machines, neural networks and $\mathtt{IRKM}(\tfrac{1}{2})$. %
    {\bf (2)} The empirical coordinate weights at the first step. %
   {\bf (3)} The empirical coordinate weights at  step $T$. \\
    {\bf Row.}
\textbf{A:} synthetic data $x\sim\mathcal N(0,I_{500})$,
label $y=x_1+x_2+x_3+x_1x_2x_3+\varepsilon$.  
\textbf{B:} synthetic data $x\sim\mathcal N(0,I_{500})$,
label $y=x_1+x_2+x_1x_2x_3+x_1x_2x_3x_4+\varepsilon$.  
\textbf{C:} $200$ random pixels from CIFAR‑10, same label rule as row B. The noise $\varepsilon$ is i.i.d. drawn from $\mathcal{N}(0,0.1^2)$.
We use the Laplacian kernel for both kernel machines and $\mathtt{IRKM}(\tfrac{1}{2})$. The $\mathtt{IRKM}(\tfrac{1}{2})$ algorithm is iterated for at most $T=20$ steps for row A, and $10$ steps for row B and C. We train a fully-connected network by Adam.  For all experiments, we report an average of $10$ runs.}
  \label{fig:task_2_gauss_cifar}
\end{figure}

\begin{figure}[H]
    \centering
  \begin{minipage}{\linewidth}
    \makebox[0pt][l]{\textbf{A}\hspace{0.8em}}%
    \renewcommand\thesubfigure{A\arabic{subfigure}}
    \setcounter{subfigure}{0}
    
    \begin{subfigure}[b]{0.32\linewidth}
        \centering
        \includegraphics[width=\linewidth]{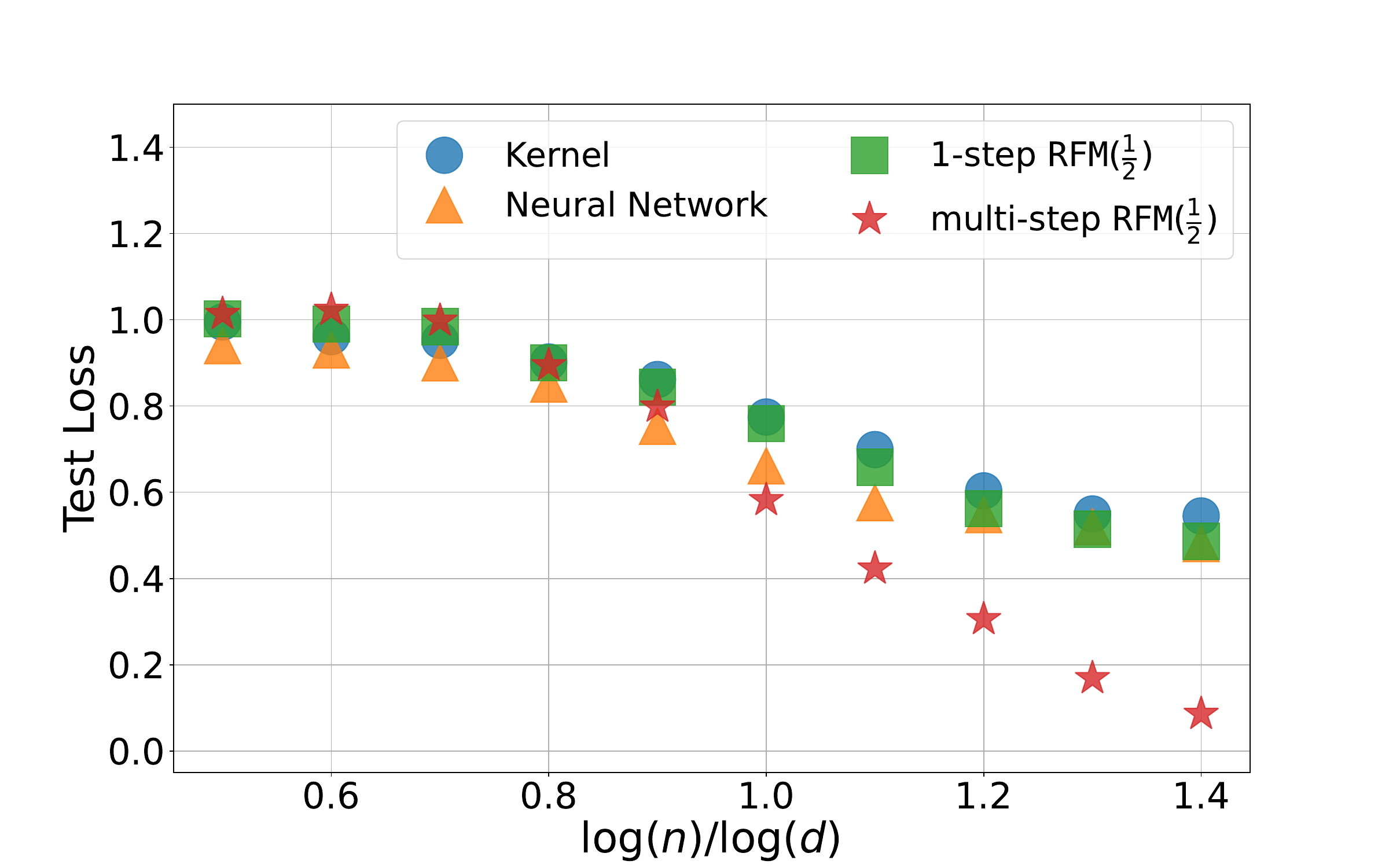}
        \caption{Generalization error}
    \end{subfigure}
    \begin{subfigure}[b]{0.32\linewidth}
        \centering
        \includegraphics[width=\linewidth]{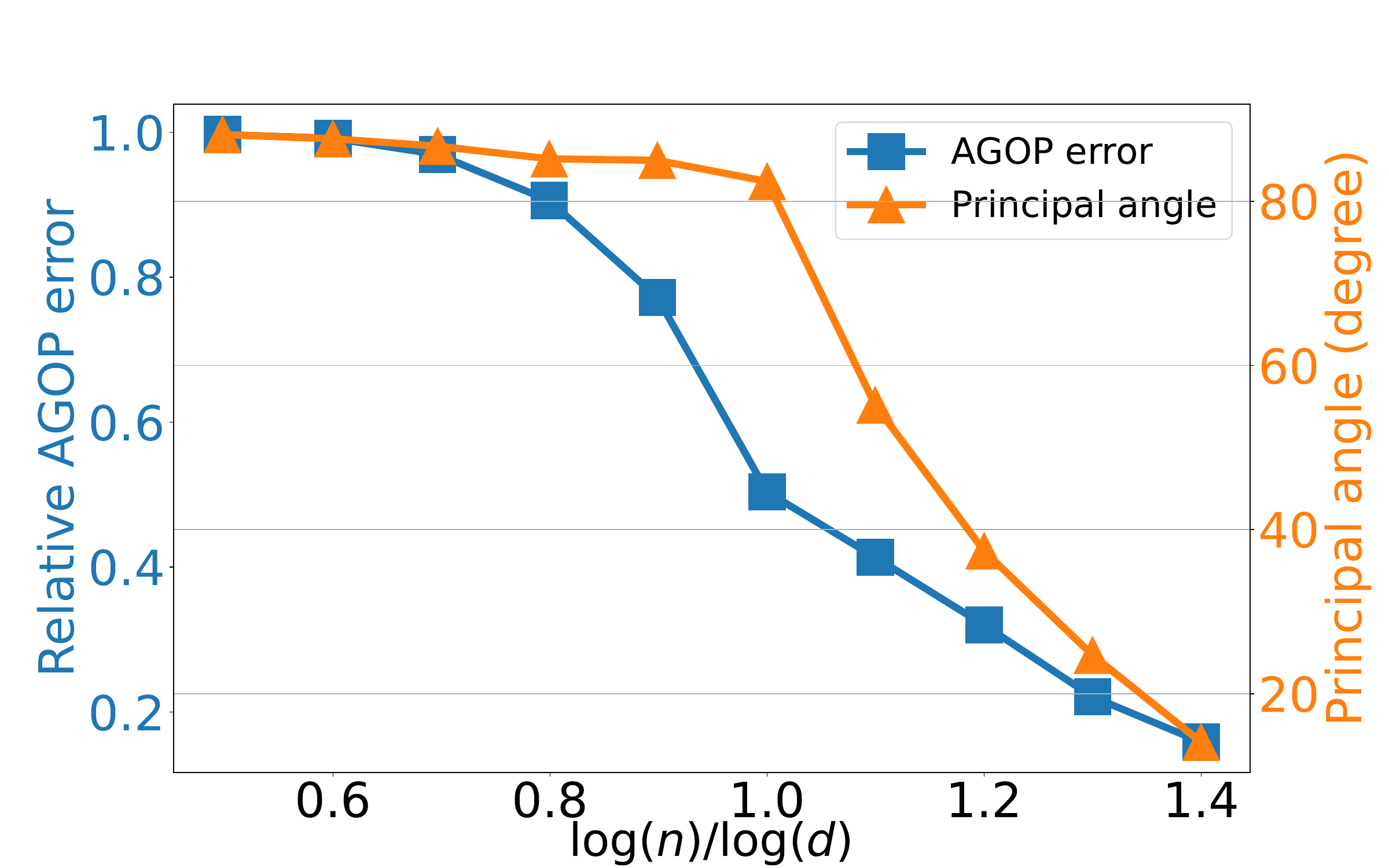}
    \caption{AGOP error}
    \end{subfigure}
            \begin{subfigure}[b]{0.32\linewidth}
        \centering
        \includegraphics[width=\linewidth]{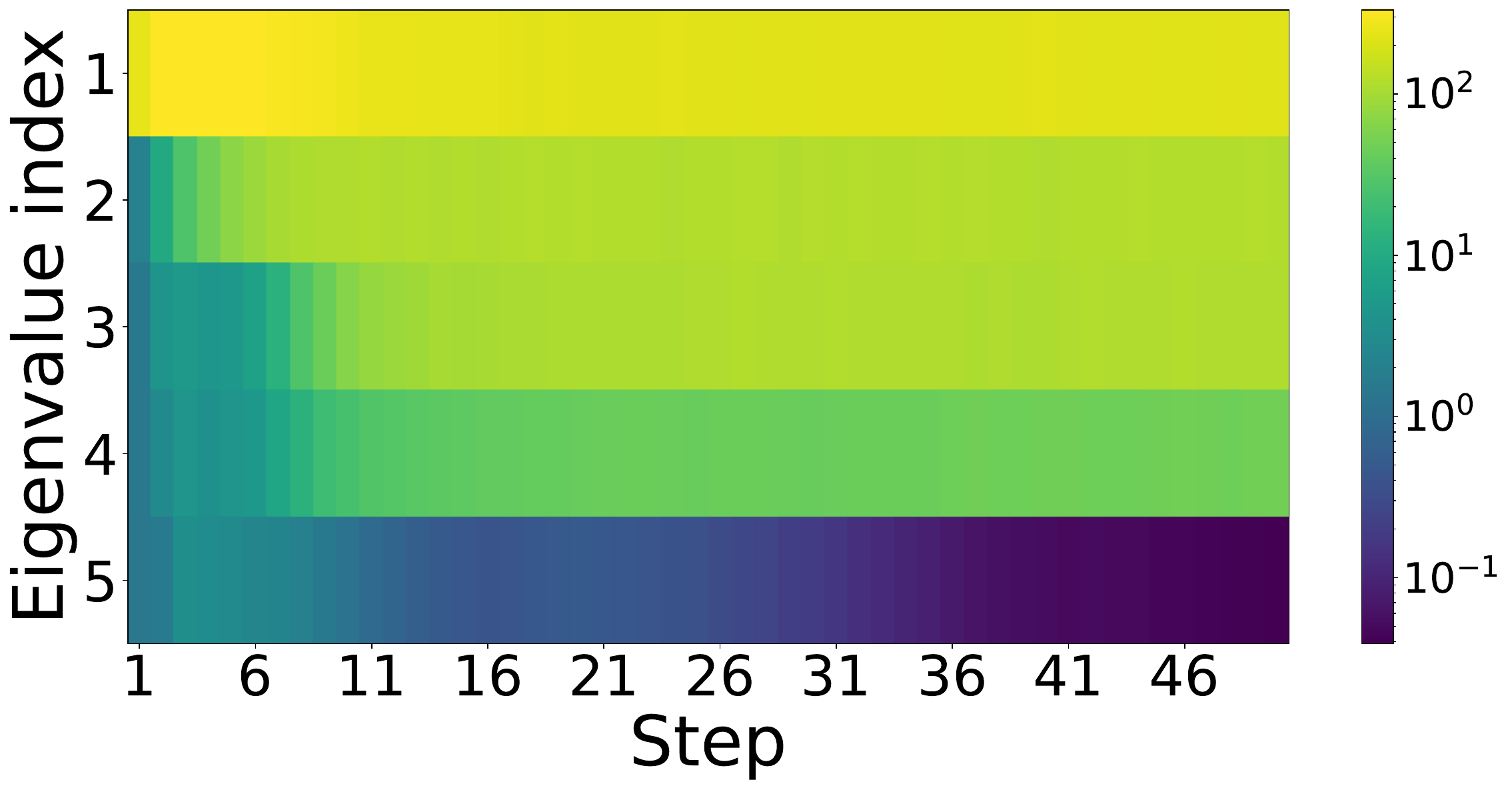}
        \caption{Multi-step identification}
         \end{subfigure}
  \end{minipage}
\vspace{0.9em}

 \begin{minipage}{\linewidth}
    \makebox[0pt][l]{\textbf{B}\hspace{0.8em}}%
    \renewcommand\thesubfigure{B\arabic{subfigure}}
    \setcounter{subfigure}{0}
    
        \begin{subfigure}[b]{0.32\linewidth}
        \centering
        \includegraphics[width=\linewidth]{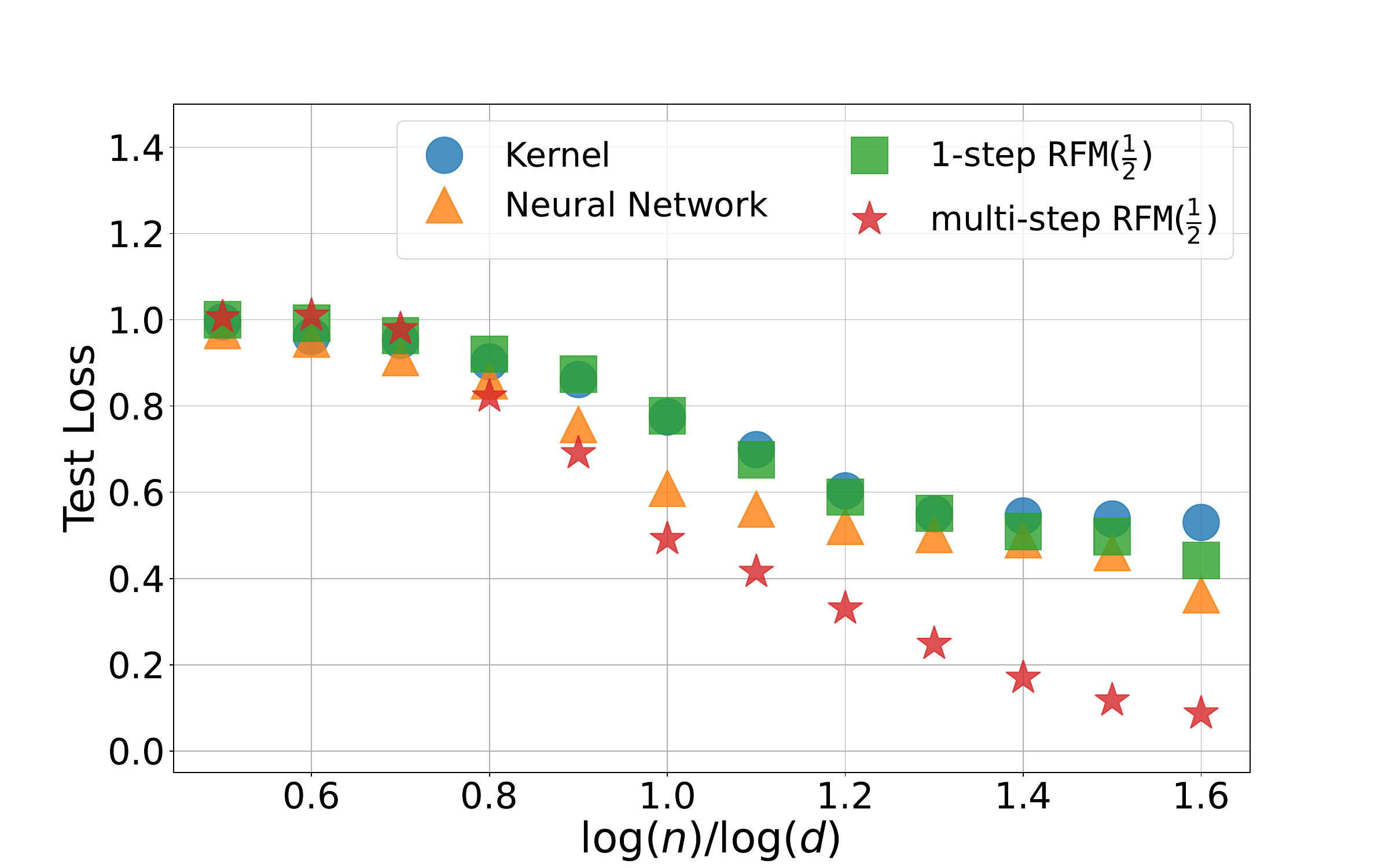}
        \caption{Generalization error}
    \end{subfigure}
    \begin{subfigure}[b]{0.32\linewidth}
        \centering
        \includegraphics[width=\linewidth]{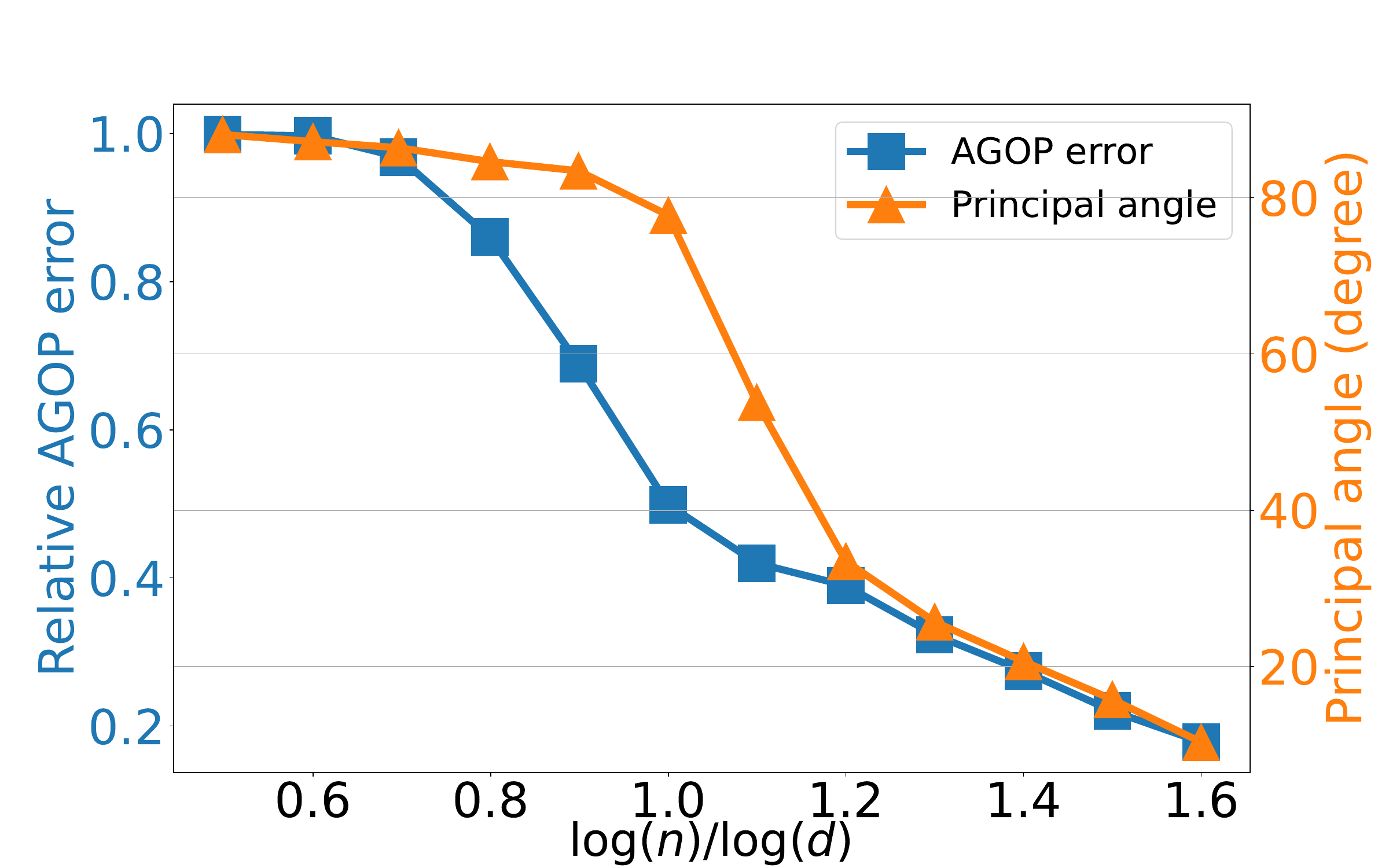}
    \caption{AGOP error}
    \end{subfigure}
         \begin{subfigure}[b]{0.32\linewidth}
        \centering
        \includegraphics[width=\linewidth]{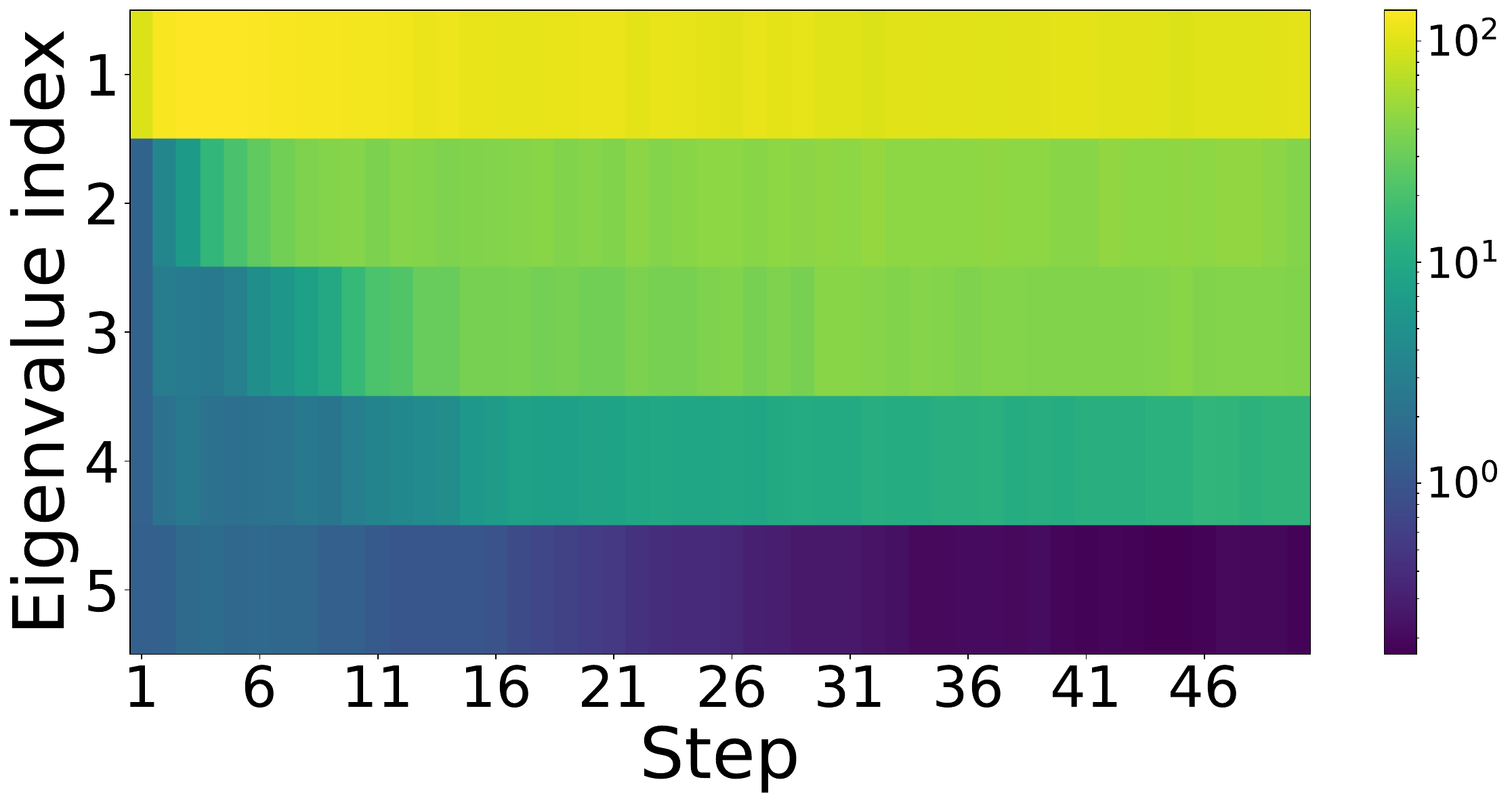}
       \caption{Multi-step identification}
    \end{subfigure}
  \end{minipage}
\vspace{0.9em}
    \caption{{\bf %
    (1) Test loss for kernel machines, neural networks and $\mathtt{RFM}(\tfrac{1}{2})$) (Algo.~\ref{alg:fullrfm}). %
    (2) Relative AGOP error and the principal angle between the top-4 eigenspace of AGOPs.  %
    (3) Eigenvalues of AGOP at step $1-50$ in the regime $n = d^{1.4}$. %
   } 
   \textbf{A:} synthetic data $x\sim\mathcal N(0,I_{500})$,
label $y=z_1+z_2+z_3+z_1z_2z_3+\varepsilon$.
   \textbf{B:} $200$ random pixels from ZCA whitened CIFAR‑10, same label rule as row A. The noise $\varepsilon$ is i.i.d. drawn from $\mathcal{N}(0,0.1^2)$ and  $z = Ux$ where $U$ is a random rotation matrix. We use the Laplacian kernel for both kernel machines and $\mathtt{RFM}(\tfrac{1}{2})$. We train a fully-connected network by Adam. For all experiments, we report an average of $10$ runs.}
    \label{fig:task_3_gaussian}
\end{figure}

\subsection{Learning functions with leap complexity two}
So far, we have experimented only with functions that have leap complexity one. 
We now consider target functions with leap complexity two: 
\begin{align*}
    f^*(x) = x_1 x_2 + x_1x_2x_3x_4,
\end{align*}
and show that $\mathtt{RFM}(\alpha)$ outperforms neural networks on both hypercube and Gaussian synthetic datasets. Our experimental results appear in Figures~\ref{fig:task_4}.

\begin{figure}[H]
  \centering
  \begin{minipage}{\linewidth}
    \makebox[0pt][l]{\textbf{A}\hspace{0.8em}}%
    \renewcommand\thesubfigure{A\arabic{subfigure}}
    \setcounter{subfigure}{0}

    \begin{subfigure}[b]{0.38\linewidth}
        \centering
        \includegraphics[width=\linewidth]{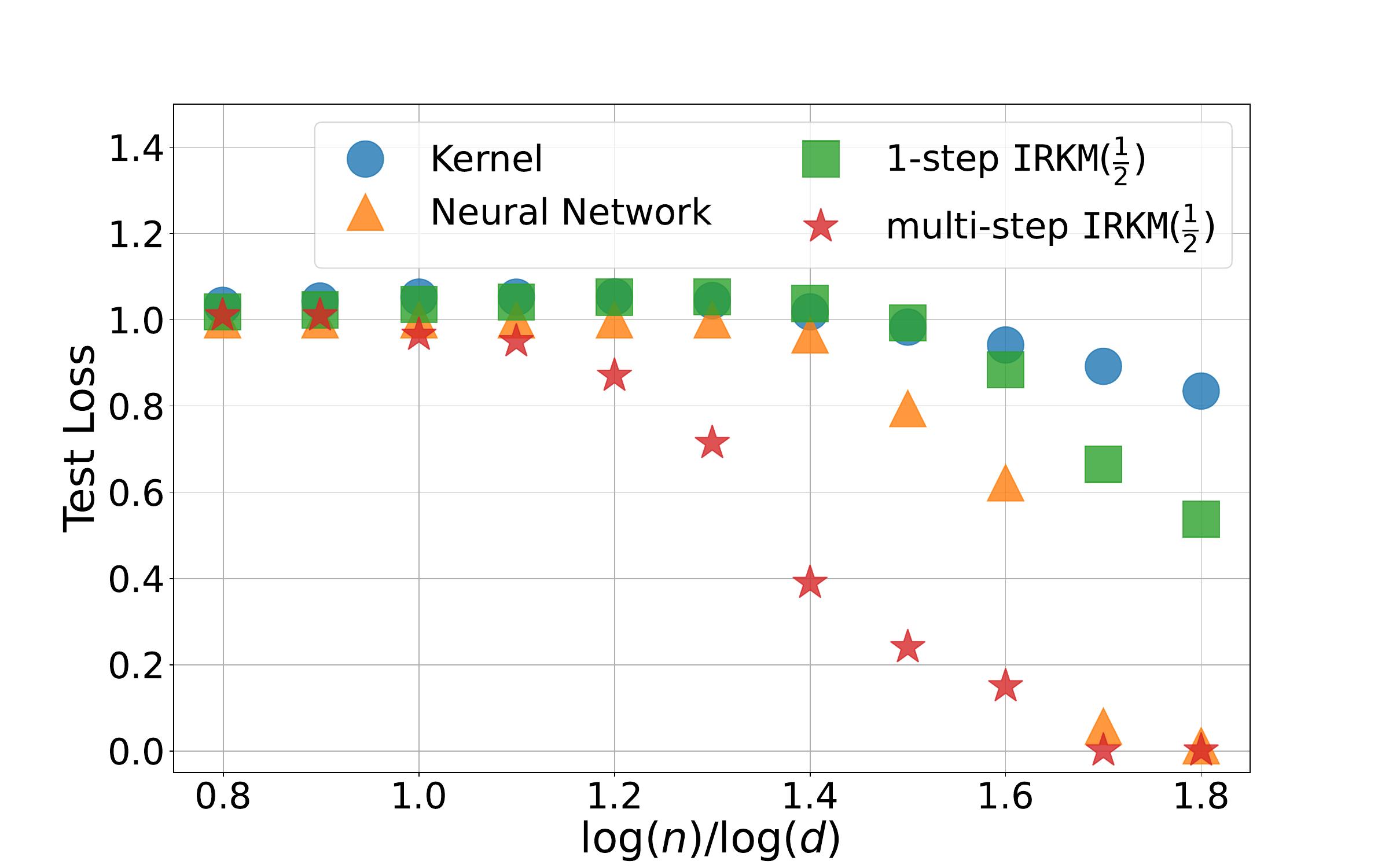}
        \caption{Generalization error}
    \end{subfigure}
    \begin{subfigure}[b]{0.3\linewidth}
        \centering
        \includegraphics[width=\linewidth]{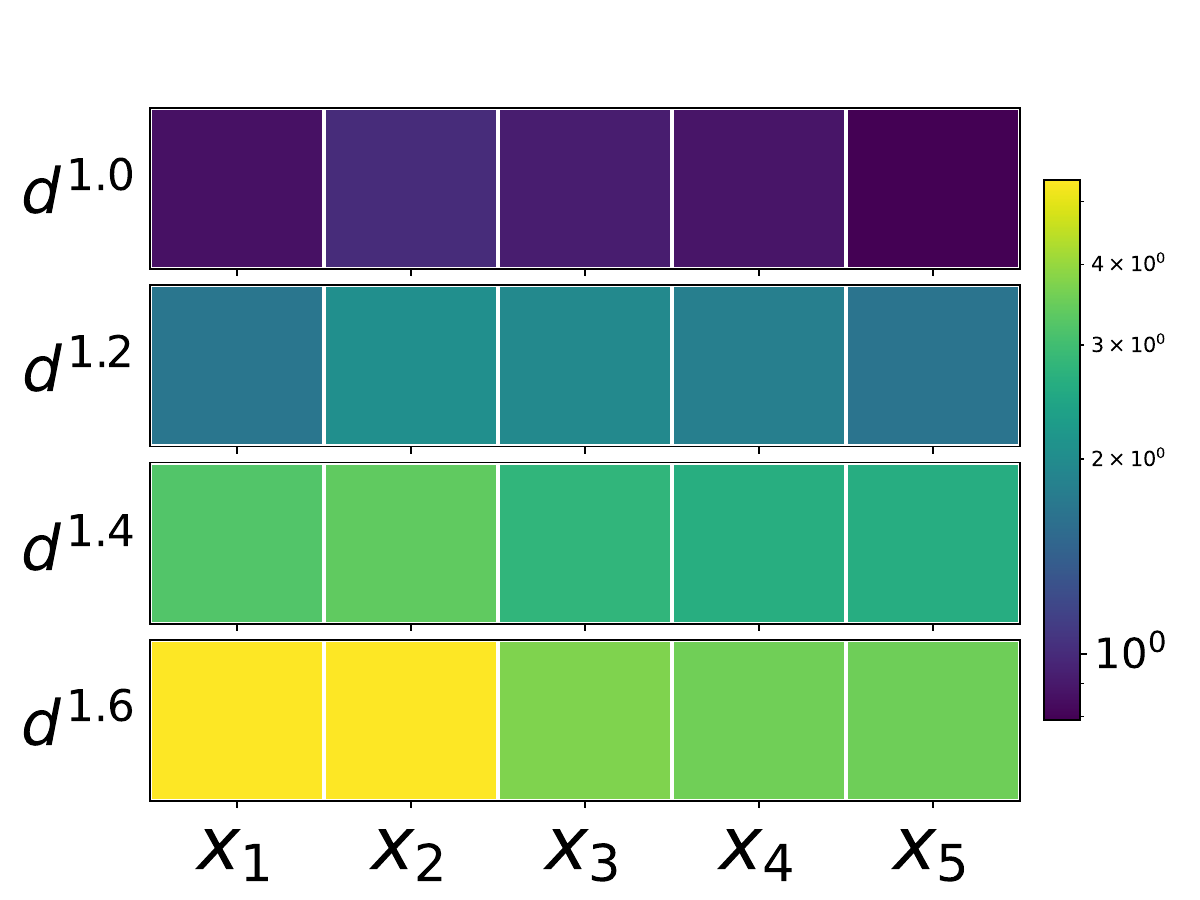}
    \caption{One step identification }
    \end{subfigure}
        \begin{subfigure}[b]{0.3\linewidth}
        \centering
        \includegraphics[width=\linewidth]{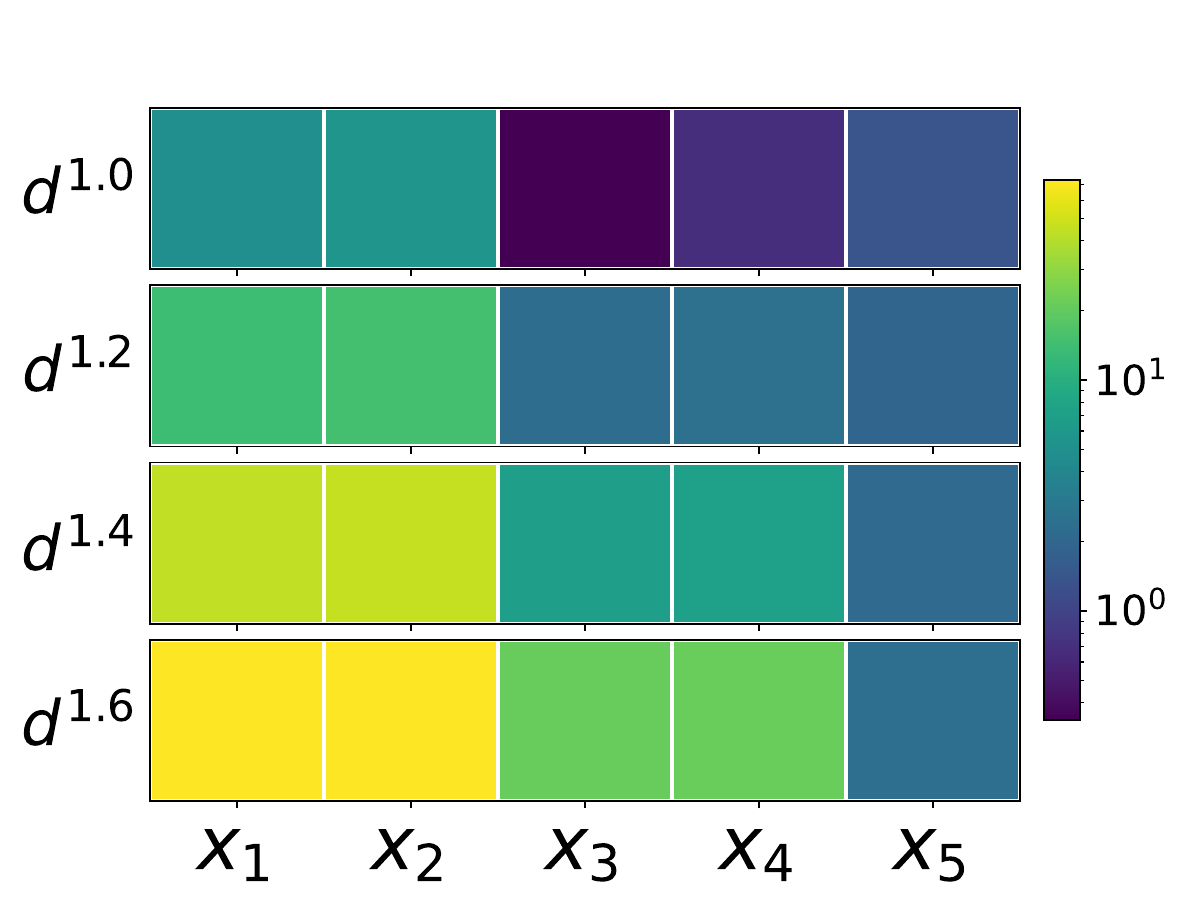}
        \caption{Multi-step identification}
    \end{subfigure}
  \end{minipage}

  \vspace{0.9em} %

  \begin{minipage}{\linewidth}
    \makebox[0pt][l]{\textbf{B}\hspace{0.8em}}%
    \renewcommand\thesubfigure{B\arabic{subfigure}}
    \setcounter{subfigure}{0}

   \begin{subfigure}[b]{0.38\linewidth}
        \centering
        \includegraphics[width=\linewidth]{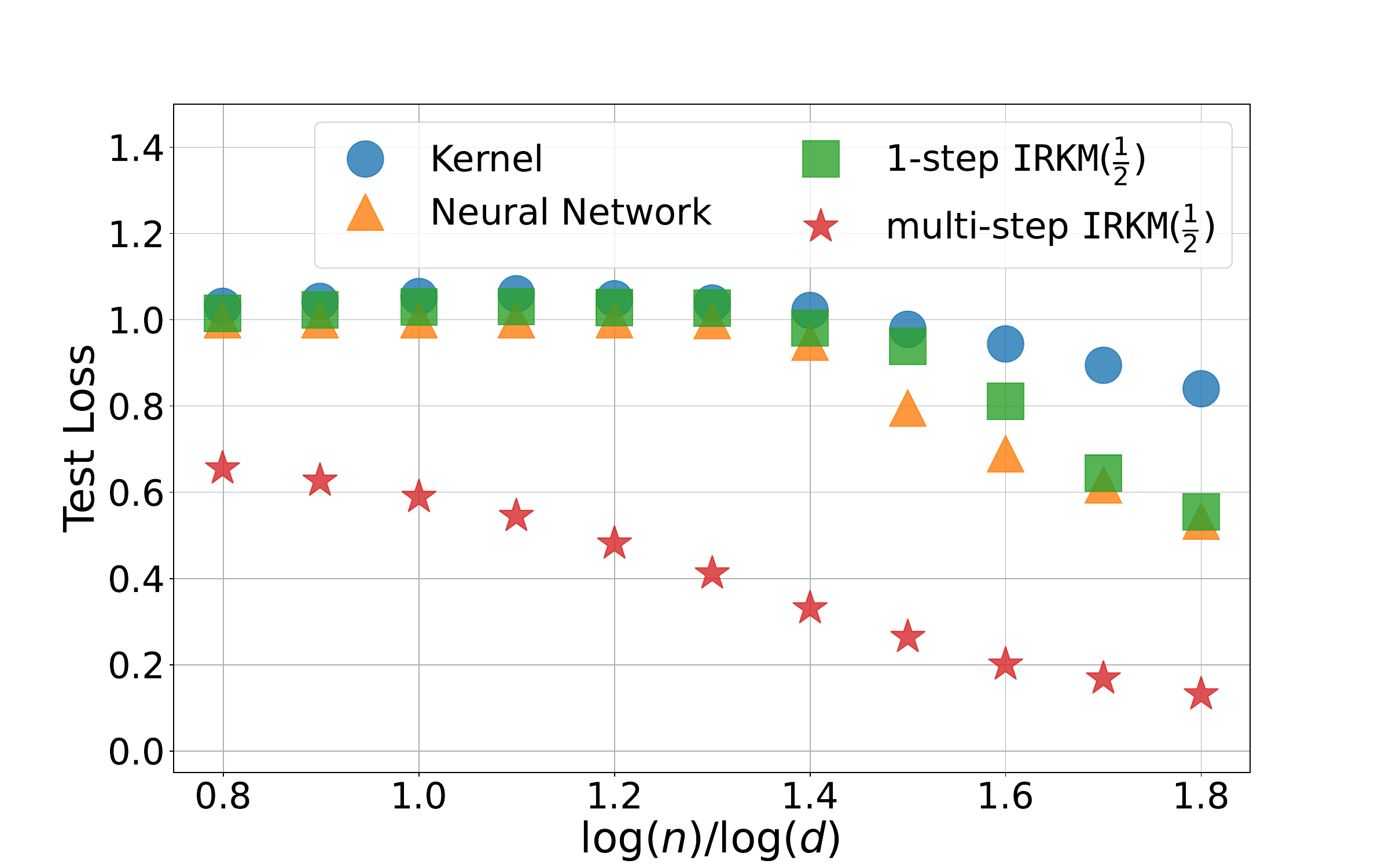}
        \caption{Generalization error}
    \end{subfigure}
    \begin{subfigure}[b]{0.3\linewidth}
        \centering
        \includegraphics[width=\linewidth]{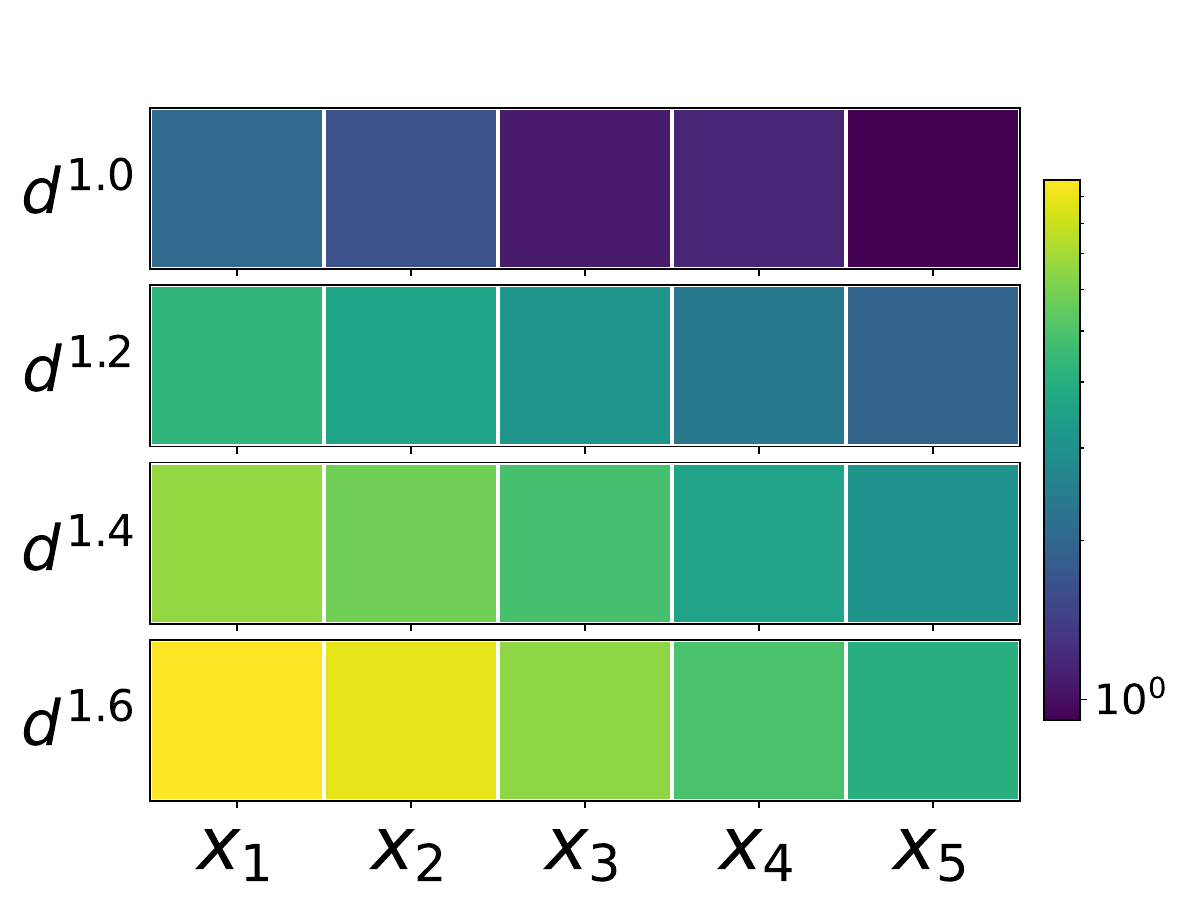}
    \caption{One step identification }
    \end{subfigure}
        \begin{subfigure}[b]{0.3\linewidth}
        \centering
        \includegraphics[width=\linewidth]{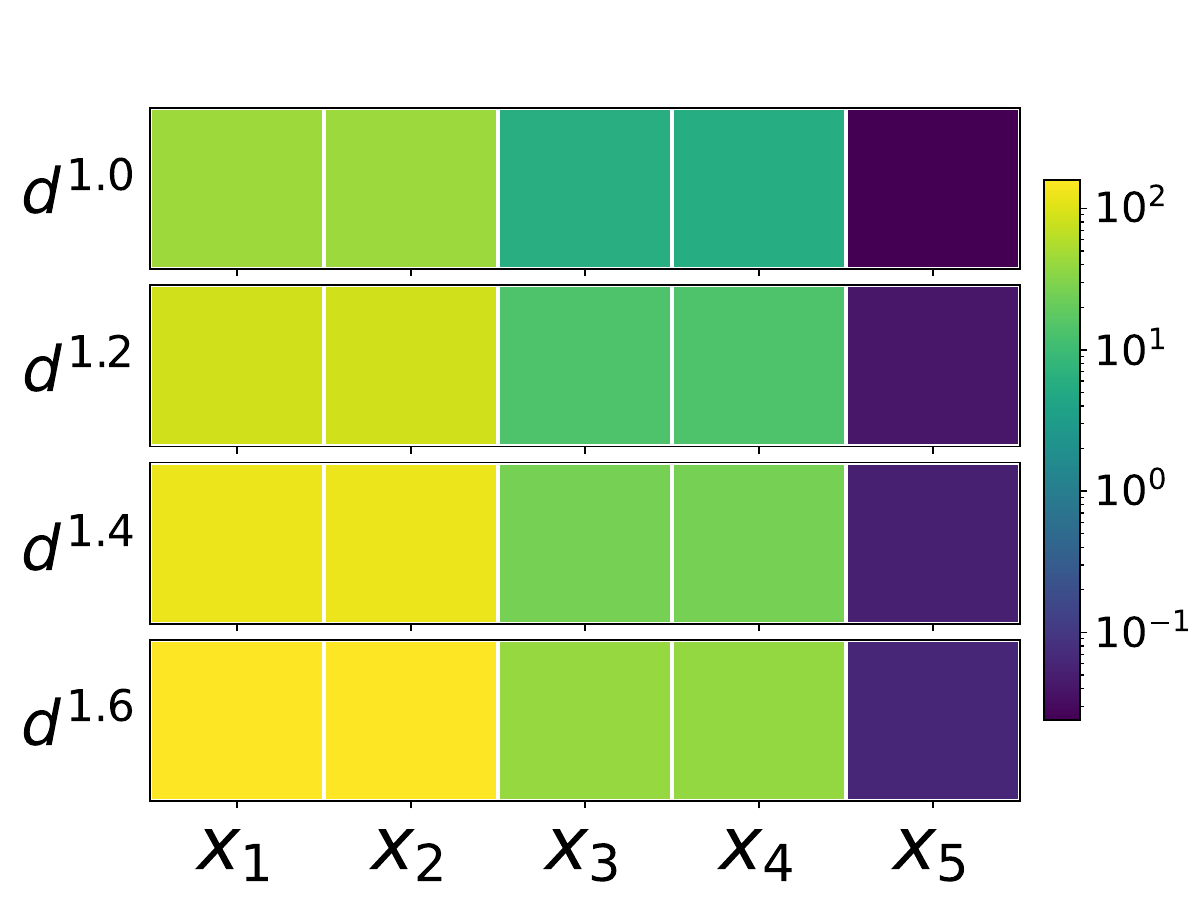}
        \caption{Multi-step identification}
    \end{subfigure}
  \end{minipage}

  \caption{{\bf %
    (A1,B1) Test loss for kernel machines, neural networks and $\mathtt{RFM}(\tfrac{1}{2})$. %
    (A2,B2) The empirical coordinate weights at the first step. %
    (A3,B3) The empirical coordinate weights at step $T$.} The samples are i.i.d. uniformly drawn from the hypercube $\{\pm 1\}^d$ (Row A) and $ \mathcal{N}(0,I_d)$ (Row B) with $d= 100$. The label is $y = x_1x_2 + x_1x_2x_3x_4 + \varepsilon$ with $\varepsilon\sim \mathcal{N}(0,0.1^2)$. We use the Laplacian kernel for both kernel machines and $\mathtt{RFM}(\tfrac{1}{2})$. The $\mathtt{RFM}(\alpha)$ algorithm is iterated for at most $T=20$ steps. We train a fully-connected network by Adam.  For all experiments, we report an average of $10$ runs.}
  \label{fig:task_4}
\end{figure}

\subsection{Effect of the mixing parameter $\alpha$ on  $\mathtt{IRKM}(\alpha)$ performance}
We vary the value of the mixing parameter $\alpha$ and evaluate the test performance of $\mathtt{IRKM}(\alpha)$ . Overall, the results indicate that setting $\alpha \approx \frac{1}{2}$ yields the best performance. The corresponding experimental results are presented in Figure~\ref{fig:ablation_alpha}.

\begin{figure}[H]
    \centering
    \begin{subfigure}[b]{0.45\linewidth}
        \centering
        \includegraphics[width=\linewidth]{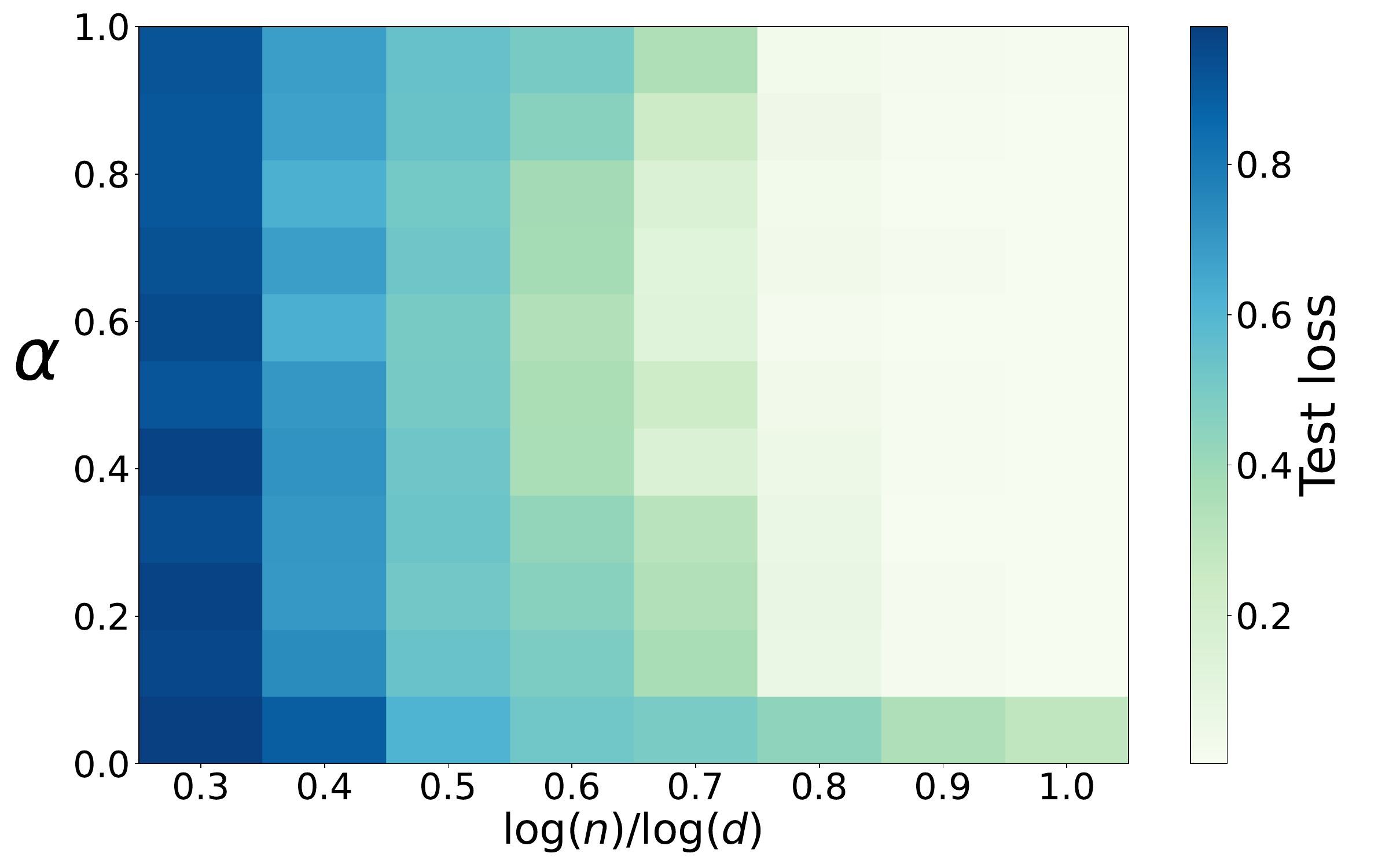}
        \caption{}
    \end{subfigure}
    \begin{subfigure}[b]{0.45\linewidth}
        \centering
        \includegraphics[width=\linewidth]{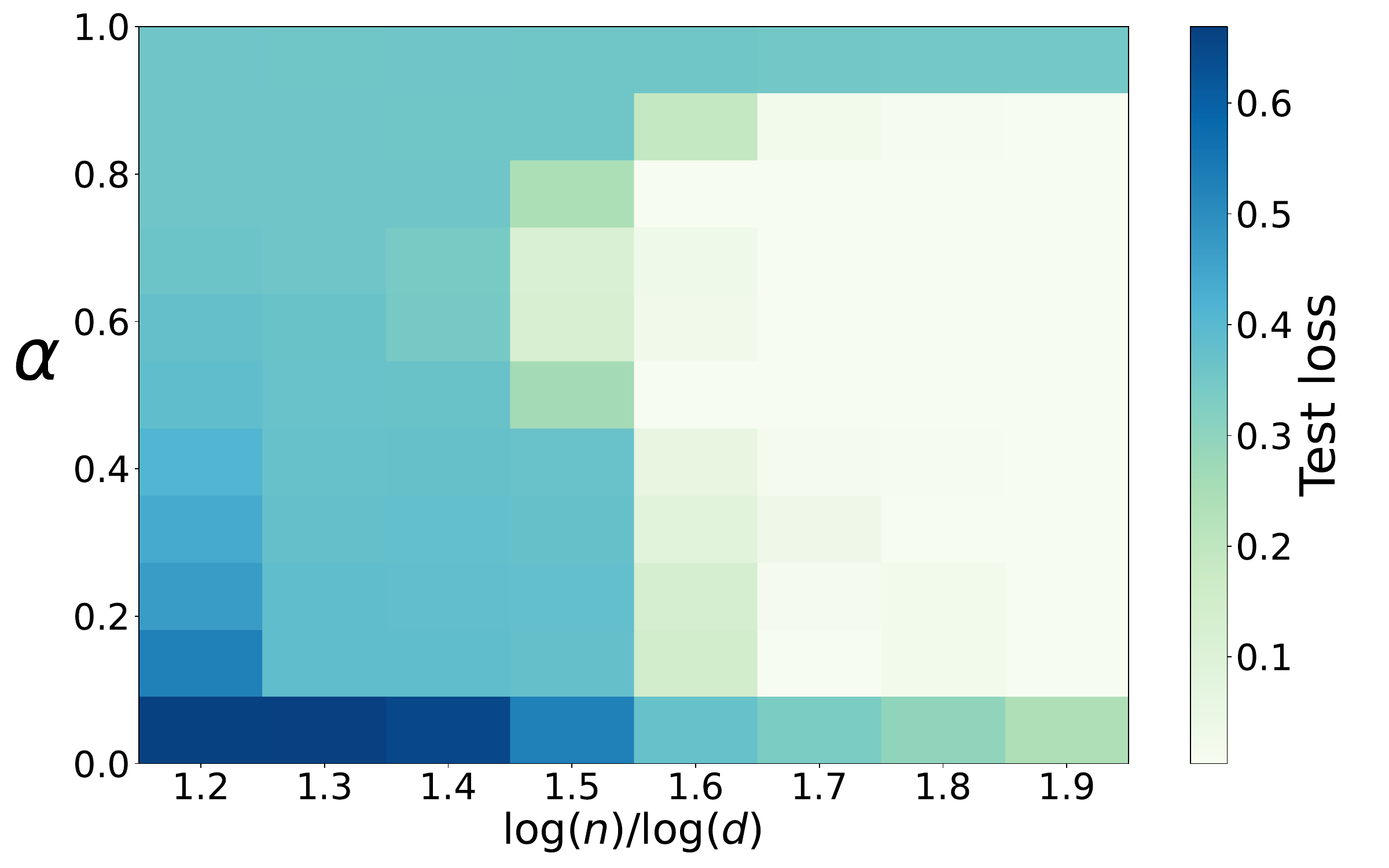}
    \caption{ }
    \end{subfigure}\\
    \centering
    \begin{subfigure}[b]{0.45\linewidth}
        \centering
        \includegraphics[width=\linewidth]{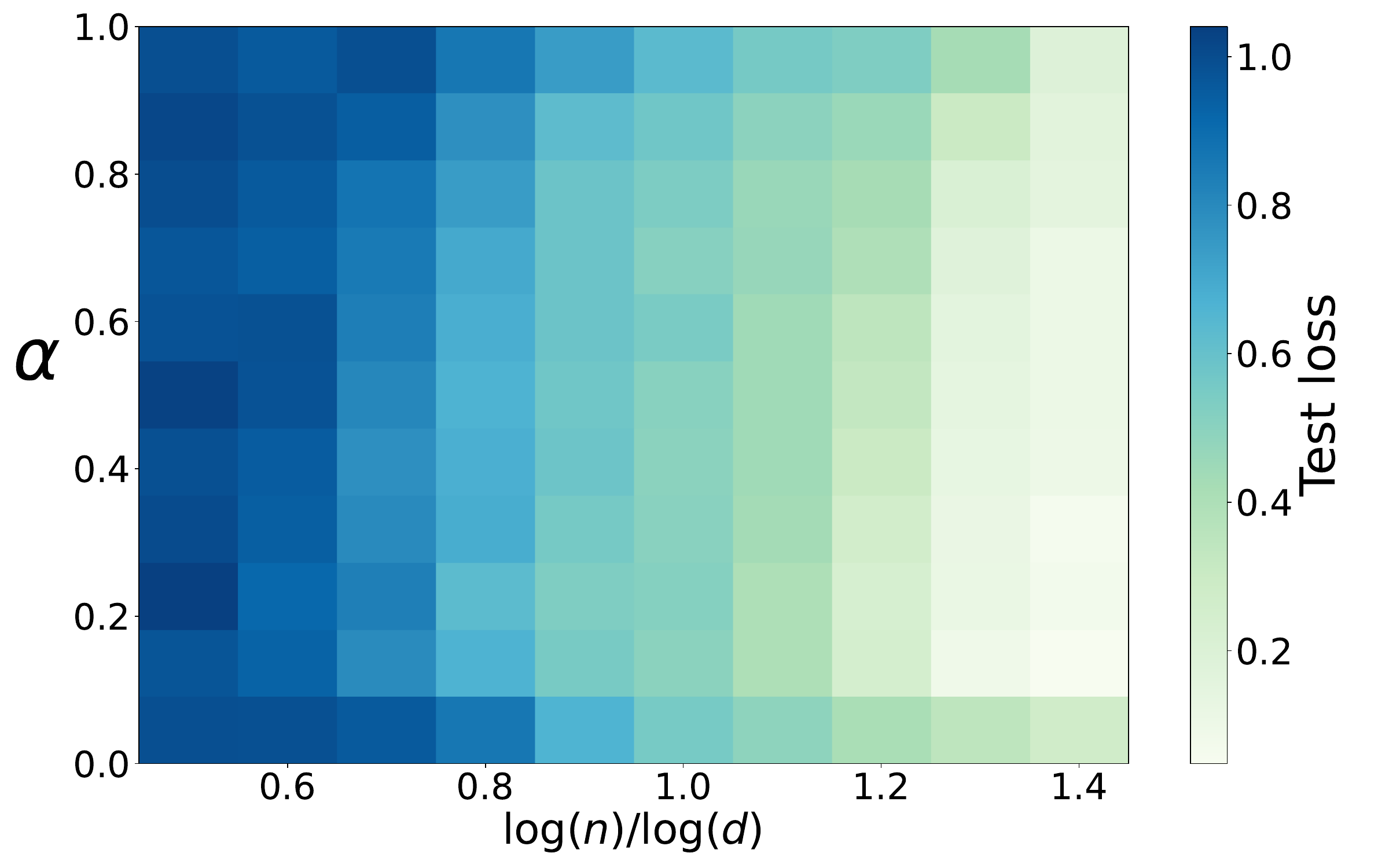}
        \caption{}
    \end{subfigure}
    \begin{subfigure}[b]{0.45\linewidth}
        \centering
        \includegraphics[width=\linewidth]{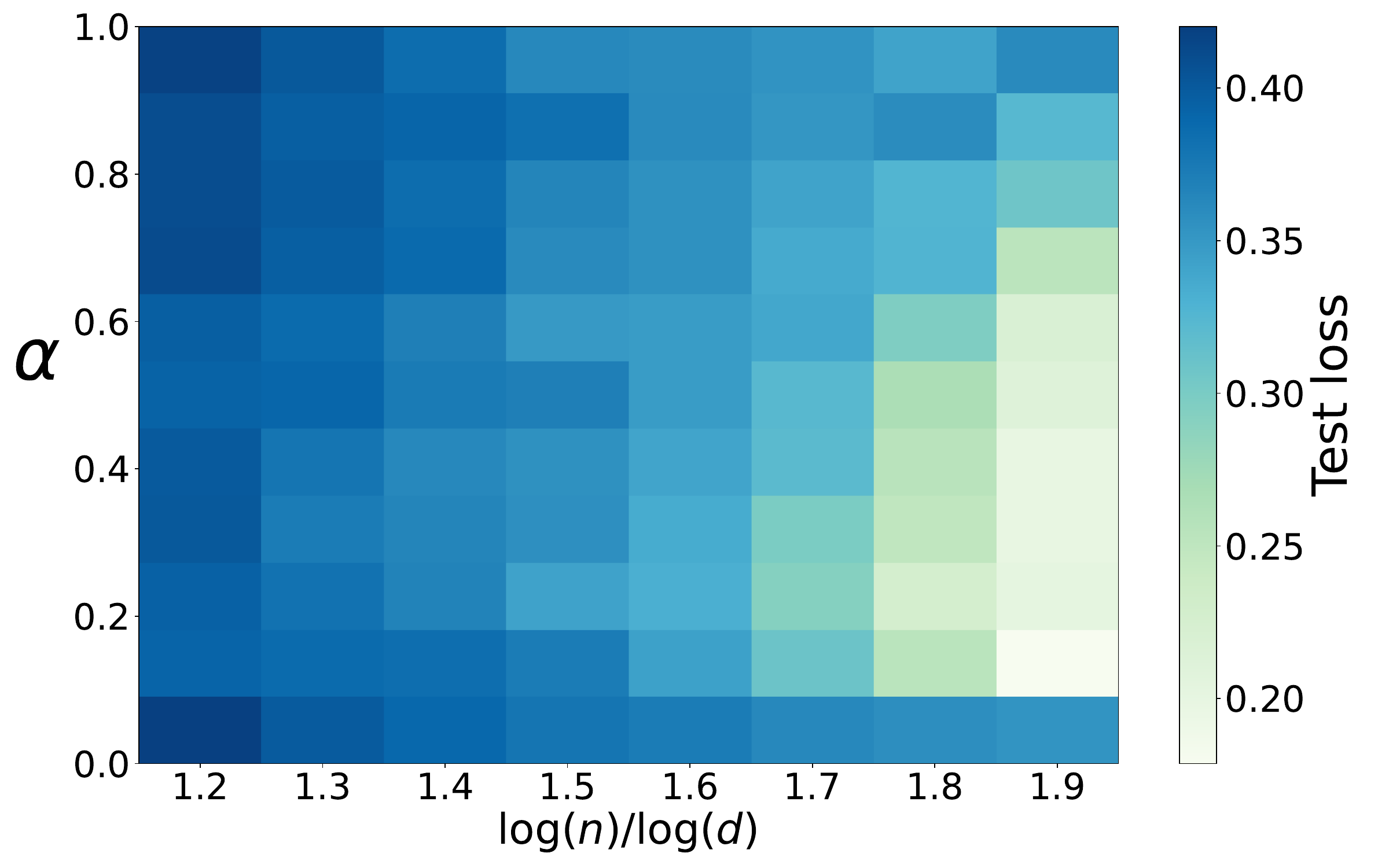}
    \caption{ }
    \end{subfigure}\\
\caption{{\bf Test loss of $\mathtt{IRKM}(\alpha)/\mathtt{RFM}(\alpha)$   with varying mixing parameter $\alpha$.} 
Panel~(a): $y = x_1 + x_2 + x_1x_2x_3 + x_1x_2x_3x_4 + \varepsilon$, $x\sim \{\pm 1\}^{500}$. 
Panel~(b): $y = x_1 + x_2x_3 + x_1x_2x_3x_4x_5 + \varepsilon$, $x\sim \{\pm 1\}^{100}$. 
Panel~(c): $y = z_1 + z_2 + z_1z_2z_3 + z_1z_2z_3z_4 + \varepsilon$,  $z = Ux$ where $x\sim \{\pm 1\}^{500}$ and $U$ is a random rotation matrix. 
Panel~(d): $y = x_1 + x_2x_3 + x_1x_2x_3x_4x_5 + \varepsilon$, $x\sim \mathcal{N}(0,I_{100})$.
The noise $\varepsilon$ is i.i.d. drawn from $ \mathcal{N}(0,0.1^2)$. 
$\mathtt{IRKM}(\alpha)/\mathtt{RFM}(\alpha)$ uses the Laplacian kernel, is run for $T=20$ steps for Panel (a,b,d) and for $T=50$ steps for Panel (c), and results are averaged over $10$ runs.}
    \label{fig:ablation_alpha}
\end{figure}

\subsection{Performance on tabular datasets}
We evaluate $\mathtt{IRKM}(\alpha)$ on tabular datasets that potentially require learning sparse functions. Using the benchmark from~\cite{fernandez2014we}, which compares $179$ machine learning methods across $121$ tabular classification tasks, we benchmark $\mathtt{IRKM}(\alpha)$ against neural networks, random forests, and RFM, previously reported as state-of-the-art for these datasets~\cite{rfm_science}. Note that in each iteration of $\mathtt{IRKM}(\alpha)$, we reuse the entire training set instead of sampling fresh data.
We evaluate the performance based on the following standard metrics:
\begin{itemize}
    \item Average accuracy: The average accuracy of the classifier across all datasets.
    \item P90/P95: The percentage of datasets where the classifier achieved accuracy within 90\%/95\% of the best model.
    \item Friedman rank: The average rank of the classifier across all datasets.
\end{itemize}

Table~\ref{tab:uci} shows that $\mathtt{IRKM}(0)$ achieves comparable results to RFM, while both significantly outperform random forests and neural networks. This in particular suggests that the prediction functions in these data sets tend to be sparse; that is, many feature coordinates do not significantly influence the label. We report only the results for $\mathtt{IRKM}(0)$, as this setting achieves the highest average accuracy.

\begin{table}[htbp]
\centering
\begin{tabular}{lcccc}
\toprule
Classifier $(\lambda)$ & Avg. Accuracy $(\%)$&  P90  & P95 &Friedman Rank \\
\midrule
IRKM (ours) & $84.69$ & $92.56$ &$79.34$ & $25.80$ \\
RFM  & $85.15$ & $93.39$ &$78.51$ & $20.37$\\
Random Forest& $80.01$ & $75.43$ & $64.46$ & $62.21$ \\
Neural Network & $78.46$ & $64.46$ & $46.28$ & $64.71$ \\
\bottomrule
\end{tabular}
\caption{Test performance of $\mathtt{IRKM}(0)$ against RFM, Random Forest, and Neural Networks.\label{tab:uci}}
\end{table}

\subsection{Performance on Epistatic Interactions}
In Genome-Wide Association Studies (GWAS), epistasis detection involves identifying genetic variant interactions that jointly influence disease phenotypes. This task is important in biology as it can reveal the underlying mechanisms of complex diseases. The challenge is difficult because combinations of SNPs may contribute to disease risk while individual SNPs show no detectable effects. This creates nonlinear sparse functions in high-dimensional data where only a small subset of genetic variants are truly relevant.

We evaluate our approach using the GAMETES~\cite{urbanowicz2012gametes} simulator. GAMETES generates pure epistatic interactions through penetrance tables where only multi-locus genotype combinations (AA, Aa, aa) predict phenotype, and individual loci show no main effects in these models. This provides controlled ground truth for assessing our algorithm's ability to identify relevant feature subsets among genetic variants.
Each genetic variant (i.e., SNP)  takes values $\{0, 1, 2\}$ corresponding to the count of minor alleles, and the classification target is binary disease status.

We conduct experiments with 2-way epistatic interactions, namely the label depends on 2 SNPs, varying the heritability parameter across $\{0.05, 0.1, 0.2\}$. Higher heritability values correspond to stronger genetic effects and improved signal-to-noise ratios. Classification performance is evaluated using the AUC metric (Area Under the ROC Curve), comparing $\mathtt{IRKM}(\tfrac{1}{2})$ against neural networks and random forests. We consider fully-connected networks with depth $6$ and width $512$, trained by Adam with mini-batch size $128$. Note that in each iteration of $\mathtt{IRKM}(\tfrac{1}{2})$, we reuse the entire training set instead of sampling fresh data. The results are presented in Figure~\ref{fig:epistasis}.

\begin{figure}[H]
  \centering
    \begin{subfigure}[b]{0.32\linewidth}
        \centering
        \includegraphics[width=\linewidth]{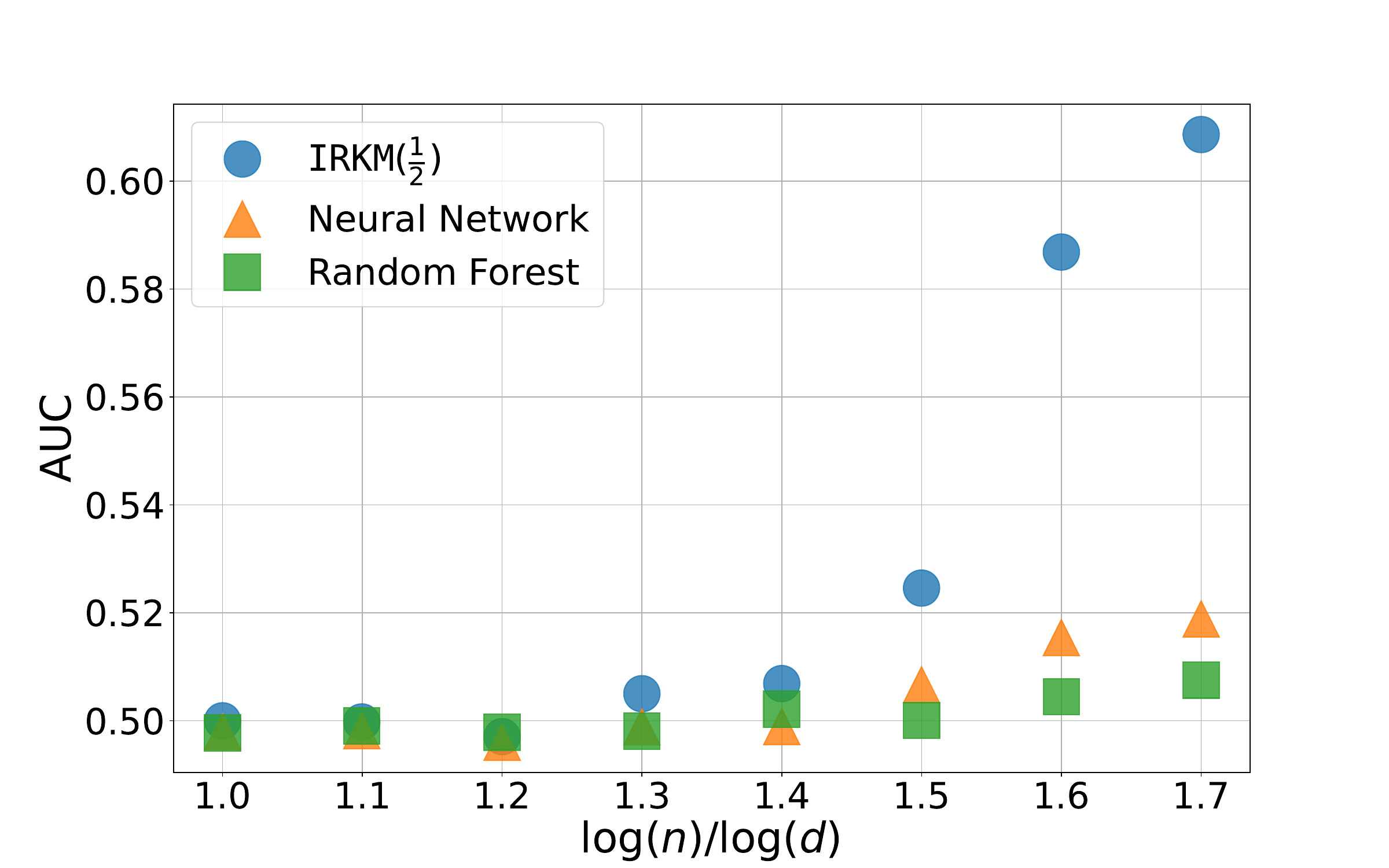}
        \caption{Heritability = 0.05}
    \end{subfigure}
    \begin{subfigure}[b]{0.32\linewidth}
        \centering
        \includegraphics[width=\linewidth]{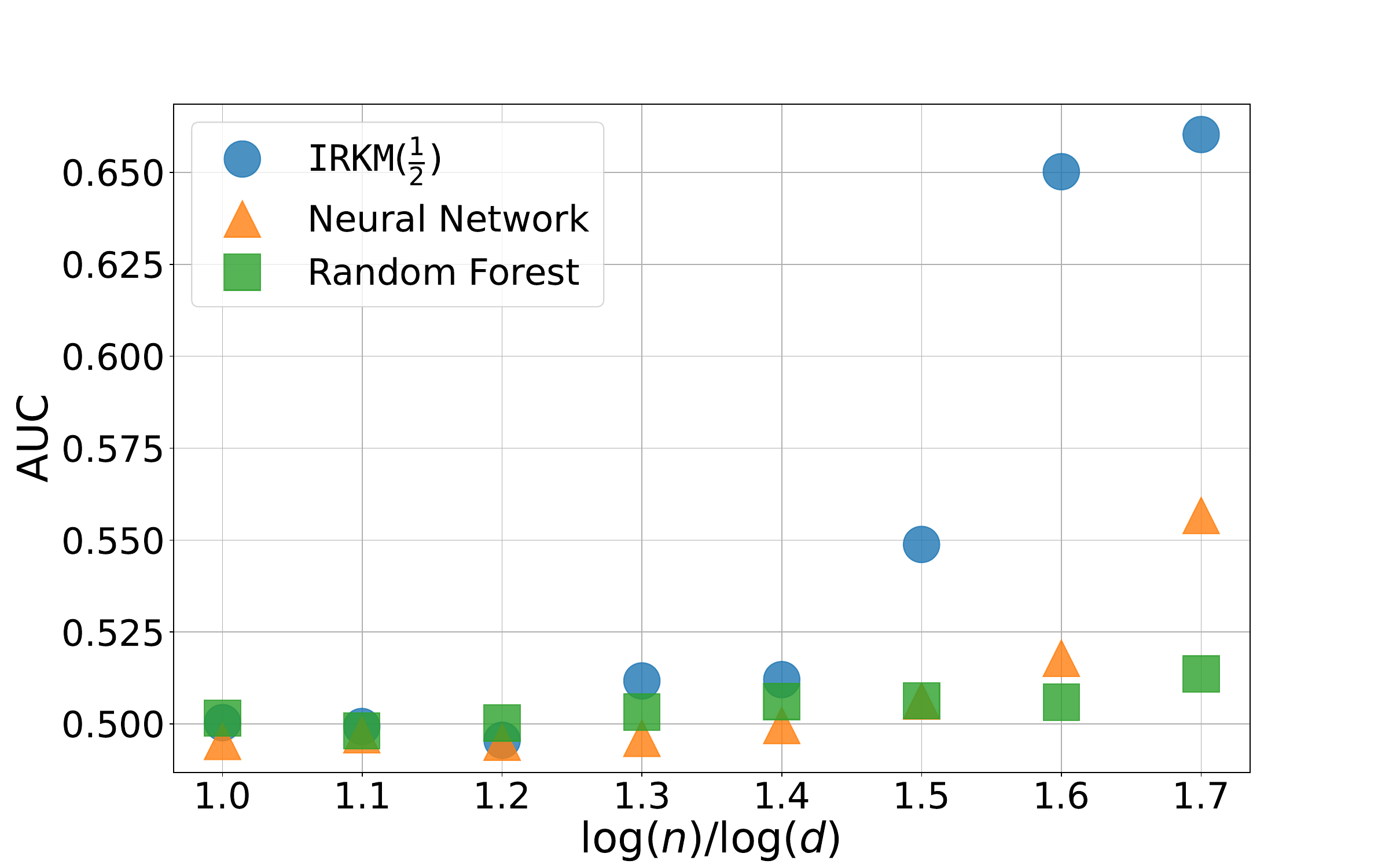}
    \caption{Heritability = 0.1 }
    \end{subfigure}
        \begin{subfigure}[b]{0.32\linewidth}
        \centering
        \includegraphics[width=\linewidth]{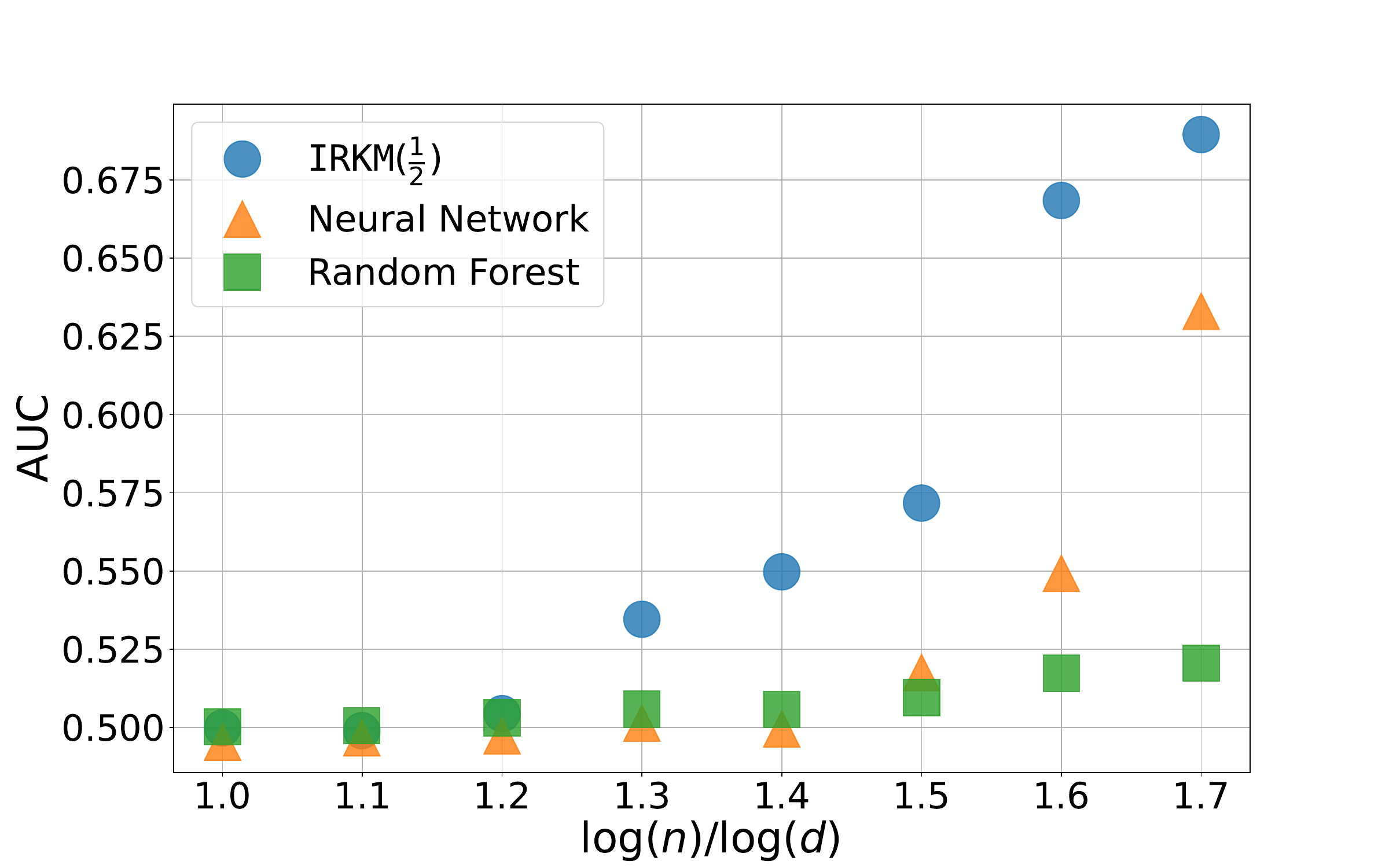}
        \caption{Heritability = 0.2}
    \end{subfigure}

  \caption{{\bf AUC for $\mathtt{IRKM}(\tfrac{1}{2})$, neural networks and random forest.
   } The datasets are generated by GAMETES simulator with 100 SNPs, where the minor allele frequency is set to $0.2$ for both interacting loci. We set the number of epistatic attributes to $2$ and vary heritability across $0.05$, $0.1$, and $0.2$. We report the average of $10$ independent runs.}
  \label{fig:epistasis}
\end{figure}

\section{Conclusion}

In this work, we have demonstrated that iteratively retrained kernel machines can efficiently learn coordinate features and hierarchical structures, thereby challenging the prevailing belief that these phenomena are unique to neural networks. 
Specifically, we established two key results: (1) the empirical coordinate weights of the KRR predictor can serve as an asymptotically consistent estimator for the population coordinate weights, even while the generalization error remains large; and (2) IRKM can learn functions with leap complexity $p$ using roughly $d^{p-1}$ samples, thereby demonstrating that kernel methods effectively learn hierarchical structure.
These findings suggest that kernel-based methods are much more powerful than previously thought and their feature-learning ability deserves further investigation. More broadly, our work contributes to understanding what makes learning algorithms effective, suggesting that adapting to the intrinsic structure of data may be more fundamental than the specific model class that is employed.

\printbibliography
\newpage

\appendix
\section*{Appendix}
\startcontents[apx] 
\printcontents[apx]{l}{1}{\section*{Contents}}
\section{Orthogonal polynomials: Isometric properties and kernel approximation}
\subsection{Isometric properties of orthogonal polynomials}\label{sec:iso_poly}
In this section, we establish some isometric properties of covariance-like matrices induced by orthogonal polynomials. Setting the stage, fix a probability space $(D,\mathcal{A},\mu_d)$ with $D\subset\R^d$ and a set 
$\{\phi_j\}_{j\in \bar{\mathcal{S}}}$
of orthonormal polynomials with respect to $\mu_d$, indexed by some set $\bar{\mathcal{S}}$. For any finite set $\S\subset \bar{\mathcal{S}}$ and a point $x\in D$, we define the concatenated vector:
$$\phi_{\S}(x):=(\phi_j(x))_{j\in \S}\in \R^{|\S|}.$$
Finally given a sequence of points $X=(x^{(1)},\ldots, x^{(n)})$ we may stack $\phi_{\S}(x^{(i)})$ as rows to form the matrix
$$\Phi_{\S}(X)=[\phi_j(x^{(i)})]\in \R^{n\times |\S|}.
$$
Note that the rows of $\Phi_{\S}(X)$ are indexed by the data points and the columns by the polynomials in $\S$. To simplify notation, we will often omit $X$ from the symbol $\Phi_{\S}(X)$, whenever it is clear from context. Throughout, we will assume that the data points $x^{(1)},\ldots, x^{(n)}$ are sampled independently from $\mu$. 

We impose the following hypercontractivity assumption for the rest of the section, mimicking equation \eqref{eqn:hypercontrac} for our two running examples. More precisely, we will assume hypercontractivity of the product measure $\mu_d\times \mu_d$ on $\R^d\times \R^d$. Hypercontractivity of the original measure $\mu_d$ then follows trivially.

\begin{assumption}[Hypercontractivity]\label{ass:hypercontr}
{\rm
There exist constants $C_{l,q}>0$ indexed by integers $l,q\in \mathbb{N}$, such that any polynomial $f$ on $\R^d\times \R^d$ of degree at most $\ell$ satisfies:
$$\|f\|_{{L_q}(\mu_d\times \mu_d)}\leq C_{\ell,q}\cdot \|f\|_{L_2(\mu_d\times \mu_d)}\qquad \forall q\geq 2.$$}
\end{assumption}
In particular, Assumption~\ref{ass:hypercontr} holds in our two running examples---Gaussian and uniform on hypercube---with $C_{l,q}=(q-1)^{l/2}$, since the product measure of Gaussians (respectively uniform on hypercube) is Gaussian (respectively uniform on hypercube). 
The rest of the section is devoted to establishing isometry properties of the matrix $\Phi_{\S}$ in the high-dimensional regime $n,d\to \infty$.  We begin by estimating the operator norm of $\Phi_{\S}$.
\begin{lemma}[Norm control]\label{lem:norm_control}
     Suppose Assumption~\ref{ass:hypercontr} holds. Consider the regime $n = d^{p+\delta}$ where $\delta\in(0,1)$ is a constant. Fix a set $\S\subseteq\bar{\S}$ indexing polynomials of degree at most $l$. Then the following are true:
     \begin{enumerate}
    \item\label{it:smallS} in the regime $|\S| \leq Cd^{p+\delta_0}$ with $\delta_0\in (-\infty, \delta)$ the estimate  $\|\Phi_{\S}\|_{\rm op}=O_{d,\mathbb{P}}(\sqrt{n})$ holds.
    \item\label{it:bigS} in the regime $|\S| \geq Cd^{p+\delta_0}$ with $\delta_0\in (\delta,\infty)$ the estimate  $\|\Phi_{\S}\|_{\rm op}=O_{d,\mathbb{P}}(\sqrt{|\S|})$ holds. 
     \end{enumerate}
\end{lemma}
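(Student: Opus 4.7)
My plan is to establish both regimes through a unified matrix-concentration argument on the $|\S|\times|\S|$ Gram matrix
\[
\Phi_{\S}^{\top}\Phi_{\S} \;=\; \sum_{i=1}^n X_i, \qquad X_i := \phi_{\S}(x^{(i)})\phi_{\S}(x^{(i)})^{\top},
\]
which decomposes as a sum of $n$ i.i.d.\ rank-one PSD matrices with $\E X_i = I_{|\S|}$ by orthonormality of $\{\phi_j\}_{j\in\S}$. Since $\|\Phi_{\S}\|_{\rm op}^2 = \|\Phi_{\S}^{\top}\Phi_{\S}\|_{\rm op}$, both parts of the lemma reduce to controlling the deviation $\|\sum_i X_i - nI_{|\S|}\|_{\rm op}$, measured against $n$ in Case~\ref{it:smallS} and against $|\S|$ in Case~\ref{it:bigS}.

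Next I would extract two ingredients from Assumption~\ref{ass:hypercontr}. \emph{Truncation:} The scalar $\|\phi_{\S}(x)\|_2^2 = \sum_{j\in\S}\phi_j(x)^2$ is a polynomial of degree $\leq 2l$ in $x$; Cauchy--Schwarz together with $\|\phi_j\|_{L_4}\leq C_{l,4}$ gives $\bigl\|\sum_j\phi_j^2\bigr\|_{L_2}\leq C_{l,4}^2\,|\S|$, and hypercontractivity upgrades this to $\bigl\|\sum_j\phi_j^2\bigr\|_{L_q} \leq C_{2l,q}C_{l,4}^2\,|\S|$ for every $q\geq 2$, whence $L_q$-Markov and a union bound yield, for arbitrary fixed $q\in\mathbb{N}$,
\[
R := \max_{1\le i\le n} \|\phi_{\S}(x^{(i)})\|_2^2 \;=\; O_{d,\P}\bigl(|\S|\,n^{1/q}\bigr).
\]
\emph{Variance bound:} Writing $\E X_i^2 = \E[\|\phi_{\S}\|_2^2\,\phi_{\S}\phi_{\S}^{\top}]$ and applying Cauchy--Schwarz to $v^{\top}(\E X_i^2)v$ for unit $v$, the two factors $\E\|\phi_{\S}\|_2^4$ and $\E(v^{\top}\phi_{\S})^4$ are each handled by hypercontractivity: the former is $O(|\S|^2)$, while $v^{\top}\phi_{\S}$ is a degree-$l$ polynomial of $L_2$-norm $\|v\|_2=1$, making the latter $O(1)$. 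Hence $\|\E X_i^2\|_{\rm op}\leq C|\S|$ and the Bernstein variance proxy satisfies $\bigl\|\sum_i \E X_i^2\bigr\|_{\rm op} \leq Cn|\S|$.

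Plugging the truncation level $R$ and the variance bound into Tropp's matrix Bernstein inequality for the centered sum then yields, with probability $1-o(1)$ conditional on the truncation event,
\[
\|\Phi_{\S}^{\top}\Phi_{\S} - nI\|_{\rm op} \;\lesssim\; \sqrt{n|\S|\log|\S|} \;+\; R\log|\S|.
\]
In Case~\ref{it:smallS}, where $|\S|=O(d^{p+\delta_0})$ with $\delta_0<\delta$, the first term is immediately $o(n)$ and choosing $q>(p+\delta)/(\delta-\delta_0)$ makes $R\log|\S|=o(n)$ too, so $\|\Phi_{\S}^{\top}\Phi_{\S}\|_{\rm op}\leq n(1+o(1))$, giving $\|\Phi_{\S}\|_{\rm op}=O_{d,\P}(\sqrt n)$. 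In Case~\ref{it:bigS}, where $|\S|\geq Cd^{p+\delta_0}$ with $\delta_0>\delta$, one has $\sqrt{n|\S|}=o(|\S|)$ and $R\log|\S|=O(|\S|\cdot d^{1/q}\log d)$; taking $q$ large and using $n\leq|\S|$ yields $\|\Phi_{\S}^{\top}\Phi_{\S}\|_{\rm op}=O(|\S|)$, giving $\|\Phi_{\S}\|_{\rm op}=O_{d,\P}(\sqrt{|\S|})$.

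The main obstacle is eliminating the residual $d^{o(1)}$ and $\sqrt{\log|\S|}$ factors: hypercontractivity delivers only polynomial (not sub-Gaussian) tail control for the degree-$2l$ random variable $\|\phi_{\S}(x)\|_2^2$, so bounding its maximum over $n$ samples via $L_q$-Markov inherently costs $n^{1/q}$, and matrix Bernstein contributes an additional $\sqrt{\log|\S|}$. To reach the strict $O_{d,\P}(\sqrt{|\S|})$ claim of Case~\ref{it:bigS} without absorbing subpolynomial factors into the implicit constant, one would need either a structural shortcut (e.g.\ for hypercube data $|\phi_j|\equiv 1$, so truncation is trivial and $R=|\S|$ deterministically) or a finer chaining/trace-moment argument applied to the alternative decomposition $\|\Phi_{\S}\Phi_{\S}^{\top}\|_{\rm op}=\sup_{u\in S^{n-1}}\|\Phi_{\S}^{\top}u\|_2^2$ that leverages the fixed polynomial degree to prevent logarithmic overhead from accumulating.
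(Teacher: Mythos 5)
Your overall strategy mirrors the paper's: both proofs reduce to concentration of the $|\S|\times|\S|$ sample Gram matrix $\Phi_{\S}^{\top}\Phi_{\S}$, and both use the hypercontractivity hypothesis to control moments of $\|\phi_{\S}(x)\|_2^2$. Your treatment of Case~\ref{it:smallS} goes through as you wrote it. But the gap you flag in Case~\ref{it:bigS} is genuine, and the difference between your route and the paper's is precisely what creates it.

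The obstacle comes from how the ``heavy-tail'' information enters the concentration inequality. You invoke Tropp's matrix Bernstein inequality after truncating at level $R:=\max_i\|\phi_{\S}(x^{(i)})\|_2^2=O_{d,\P}(|\S|\,n^{1/q})$; matrix Bernstein then contributes the deviation $\sqrt{\sigma^2\log|\S|}+R\log|\S|$, and the second piece is \emph{linear} in $R$. Even after optimizing $q$ (where the constant $C_{2l,2q}$ scales like $q^{l}$, so $R$ is at best $O_{d,\P}(|\S|(\log d)^{l})$), the term $R\log|\S|$ exceeds $|\S|$ by a polylogarithmic factor and never becomes $O_{d,\P}(|\S|)$. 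So the Bernstein-plus-truncation pipeline gives only $\|\Phi_{\S}\|_{\rm op}=O_{d,\P}(\sqrt{|\S|}\cdot d^{o(1)})$, short of the claim. The paper avoids this by appealing directly to the covariance-estimation bound for independent heavy-tailed isotropic rows, \cite[Theorem 5.45]{vershynin2010introduction}, in which the only norm-of-row input is the single scalar $m := \E\max_i\|\phi_{\S}(x^{(i)})\|_2^2$, and this $m$ appears \emph{under the same square root} as the logarithm: $\E\|\tfrac{1}{n}\Phi_{\S}^{\top}\Phi_{\S}-I\|\le C\sqrt{m\log\min(|\S|,n)/n}$. There is no separate truncation term. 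Plugging $m\le C_{2q,l}^2\,n^{1/q}|\S|^{1+1/q}$ and multiplying through gives $\E\|\Phi_{\S}\|^2_{\rm op}\le n + C\sqrt{(n|\S|)^{1+1/q}\log\min(|\S|,n)}$, and because the $n^{1/q}|\S|^{1/q}$ excess from hypercontractivity sits \emph{inside} the root, the elementary arithmetic $(n|\S|)^{1+1/q}\le C^{-1}|\S|^{2-2\epsilon}$ for $q$ large absorbs it against the spare $|\S|^{2\epsilon}$, whence $\E\|\Phi_{\S}\|^2_{\rm op}\lesssim |\S|$ and Markov closes the argument. So the repair is not a finer tail analysis in your framework but a swap of black boxes: replace Tropp matrix Bernstein (variance proxy $+$ linear truncation term) by a Rudelson/Vershynin-type covariance bound (symmetrization $+$ noncommutative Khintchine), which delivers an expectation bound with the row-norm statistic $m$ appearing as a single square-root factor. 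The structural shortcut you note for hypercube data (where $\|\phi_{\S}(x)\|_2^2 \equiv |\S|$) is correct but not needed once the right concentration theorem is used.
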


\noindent The exact asymptotics of $\Phi_{\S}$ depend on the relative scale of $n$ to $|\S|$, meaning:
$$\frac{n}{|\S|}\asymp d^{q} ~{\rm with}~ \underbrace{q>0}_{\rm Case~ I} ~{\rm or }~\underbrace{q<0}_{\rm Case~ II}.$$
The following two theorems show (respectively) that in the first case $\lambda>0$, the empirical covariance matrix $\frac{1}{n}\Phi_{\S}^\top\Phi_{\S}$ is asymptotically equal to the identity $I_{|\S|}$, while in the second case $\lambda<0$, the Gram matrix $\frac{1}{|S|}\Phi_{\S}\Phi_{\S}^\top$ is asymptotically equal to the identity $I_{n}$.
Both Lemma~\ref{lem:norm_control} and Theorem~\ref{lemma:phi_id_2} are proved in Appendix~\ref{proof:phi_id_2} and closely follow the proof template of \cite[Theorem 6(b)]{mei2022generalization}.

\begin{theorem}[Asymptotics in Case I]\label{lemma:phi_id_2}
  Suppose that Assumption~\ref{ass:hypercontr} holds and consider the regime $n = d^{p+\delta}$ where $\delta\in(0,1)$ is a constant.    Fix a set $\S\subseteq\bar{\S}$ of cardinality  $|\S| \leq Cd^{p+\delta_0}$, with $\delta_0\in (-\infty,\delta)$, indexing polynomials of degree at most $\ell$. Then for any $\delta'\in (0,\delta-\delta_0)$ there exists a constant  $C'$ satisfying 
   $$\E\brac{\snorm{\frac{1}{n}\Phi_{\S}\tran \Phi_{\S} - I_{|\S|}}} \leq C' d^{-\delta'/2}\sqrt{\log(n)}.$$
   Consequently,   with probability at least $1 - \tfrac{C'}{\sqrt{\log(n)}}$, we have
$
    \snorm{\Phi_{\S}(X)\tran \Phi_{\S}(X)/ n - I_{|\S|}} \leq d^{-\delta'/2 + \epsilon}
$.
\end{theorem}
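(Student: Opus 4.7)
My plan is to attack the bound via the matrix moment (trace) method combined with the hypercontractivity assumption, which is essentially the template used by Mei--Montanari for sample covariances of polynomial feature maps. Write
\begin{align*}
M := \tfrac{1}{n}\Phi_{\S}\tran\Phi_{\S}-I_{|\S|} = \tfrac{1}{n}\sum_{i=1}^n Z_i, \qquad Z_i := \phi_{\S}(x^{(i)})\phi_{\S}(x^{(i)})\tran - I_{|\S|},
\end{align*}
so that the $Z_i$ are i.i.d., symmetric, and mean zero by orthonormality of the $\phi_j$'s. For a positive even integer $2q$ I use the standard bound $\|M\|_{\mathrm{op}}^{2q}\le \mathrm{tr}(M^{2q})$ and expand
\begin{align*}
\E\,\mathrm{tr}(M^{2q}) = \tfrac{1}{n^{2q}}\!\!\!\sum_{(i_1,\dots,i_{2q})\in[n]^{2q}}\!\!\sum_{(j_1,\dots,j_{2q})\in\S^{2q}}\!\! \E\!\brac{\prod_{l=1}^{2q}\round{\phi_{j_l}(x^{(i_l)})\phi_{j_{l+1}}(x^{(i_l)}) -\delta_{j_l j_{l+1}}}},
\end{align*}
with indices interpreted cyclically.

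The first reduction is combinatorial on the sample indices $(i_1,\dots,i_{2q})$. By independence of the $x^{(i)}$'s and $\E[(Z_i)_{jk}]=0$, any tuple in which some index $i_l$ appears only once contributes zero. Hence the sum is restricted to tuples with at most $q$ distinct values of $i_l$, and a standard partition count bounds the number of surviving tuples by $C_q\, n^{q}$. The second reduction is combinatorial on the polynomial indices $(j_1,\dots,j_{2q})$. Here I would use orthonormality to expand each diagonal factor $(Z_i)_{jk}$ in the orthonormal basis (the constant term vanishes, giving $(Z_i)_{jk}=\sum_{r\ne 0}c_{jkr}\phi_r(x^{(i)})$ with $c_{jkr}=\E[\phi_j\phi_k\phi_r]$), and then track the resulting contractions as a graph whose edges correspond to polynomial indices and whose vertices correspond to sample indices, exactly as in \cite{mei2022generalization}. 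Using the orthonormality of the $\phi_r$'s on each group of repeated $i_l$'s forces pairings/contractions between the $j$-labels, which caps the effective number of free $j$-indices in each graph at roughly $q+1$ (the number of connected components of such a pairing graph on a cycle of length $2q$).

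With these two reductions, combined with Assumption~\ref{ass:hypercontr} applied to the degree-$\le 2\ell q$ polynomials that appear after expansion, each surviving term is bounded by $(C_{\ell,q})^{2q}$. Collecting the estimates yields a bound of the schematic form
\begin{align*}
\E\,\mathrm{tr}(M^{2q}) \le C_q\cdot \frac{n^q \cdot |\S|^{q+1} \cdot (C_{\ell,q})^{2q}}{n^{2q}} = |\S|\cdot (C')^{2q}\,\round{\frac{|\S|}{n}}^{q},
\end{align*}
where I absorb $q$-dependent factors into $C'$ growing at most polynomially in $q$. Taking $2q$-th roots and optimizing $q\asymp \log n$, Markov's inequality then gives
\begin{align*}
\snorm{M}\le C'\sqrt{|\S|\log(n)/n}\cdot(1+o_d(1))
\end{align*}
with probability $1-O(1/\sqrt{\log n})$. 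Since $|\S|/n\le d^{\delta_0-\delta}$ by hypothesis, this is $O(d^{-\delta'/2}\sqrt{\log n})$ for any $\delta'<\delta-\delta_0$, and a further $d^{\epsilon}$ slack allows the probability to be upgraded in the stated form.

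The main obstacle is the combinatorial/orthogonality accounting on the $j$-indices, namely getting the correct scaling $(|\S|/n)^q$ rather than the trivial $|\S|^{2q}/n^q$. The crude Hölder bound applied factor-by-factor loses a factor of $|\S|^q$ and would only give $\|M\|_{\mathrm{op}}\lesssim |\S|/\sqrt{n}$, which is insufficient. Thus the crux is to carefully classify the graph/partition structure of $(j_1,\dots,j_{2q})$ that survives the orthogonality cancellations, verify that for each such structure the number of free $j$-labels is at most $q+1$, and check that the hypercontractive moment bound controls all surviving terms uniformly in $d$. This mirrors the proof of \cite[Theorem 6(b)]{mei2022generalization}, but must be carried out in the abstract orthonormal-polynomial setting required here, which is why the proof is deferred to Appendix~\ref{proof:phi_id_2} where the combinatorial details can be laid out in full.
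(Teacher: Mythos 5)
Your proposal takes a genuinely different route from the paper. The paper does not run the moment/trace method itself: it invokes Vershynin's covariance-estimation theorem (\cite[Theorem~5.45]{vershynin2010introduction}) as a black box, which yields
\[
\E\Bigl[\snorm{\tfrac{1}{n}\Phi_{\S}\tran\Phi_{\S}-I_{|\S|}}\Bigr]\le \frac{C_1\sqrt{m\log\min(|\S|,n)}}{\sqrt{n}},\qquad m:=\E\max_{i\le n}\norm{\phi_\S(x^{(i)})}_2^2,
\]
and then spends all of its effort on bounding $m$. The key observation is that $m$ can be controlled purely via Lyapunov's inequality plus Assumption~\ref{ass:hypercontr}: one bounds $m/|\S|\le(n|\S|)^{1/q}\max_{i,j}\E[(\phi_{i,j})^{2q}]^{1/q}\le (n|\S|)^{1/q}C_{2q,l}^2$, and takes $q$ large so the $(n|\S|)^{1/q}$ factor is absorbed into a $d^\epsilon$ loss. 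Substituting $|\S|\lesssim d^{p+\delta_0}$ and $n=d^{p+\delta}$ gives the stated rate. In contrast, you propose to expand $\E\,\mathrm{tr}(M^{2q})$ directly and redo the pairing combinatorics of Mei--Montanari in the abstract orthonormal-polynomial setting. This should also work and would give the same $\sqrt{|\S|\log n /n}$ scaling after optimizing $q\asymp\log n$, but the hard step — showing that orthogonality cancellations cap the number of free $j$-labels at roughly $q+1$ on a cycle of length $2q$ — is exactly the part you flag as ``the main obstacle'' and do not carry out; this is the technical content that Vershynin's theorem packages for you. What your approach buys is self-containedness and the ability to track constants; what the paper's approach buys is a one-line proof modulo a single moment bound, at the cost of citing a stronger off-the-shelf statement. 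If you pursue your route, be careful: the schematic bound $C_q n^q|\S|^{q+1}(C_{\ell,q})^{2q}/n^{2q}$ requires separately controlling (a) the number of surviving $i$-partitions, and (b) the number of free $j$-labels \emph{per} partition class, and the factor $|\S|^{q+1}$ is not automatic from a crude Hölder bound — precisely the point you raise yourself. As stated, the proposal is a correct sketch of a valid alternative proof, but the central combinatorial lemma is asserted rather than established.
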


Next, we argue that in the complementary setting when $n$ is small compared to $|\S|$, the Gram matrix $\Phi_{\S}\Phi_{\S}^\top=[\langle 
\phi_{\S}(x^{(i)},\phi_{\S}(x^{(j)})\rangle]_{i,j=1}^n$ is close to a multiple of the identity. The proof appears in  Appendix~\ref{proof:gram_matrix} and closely follows the proof template of \cite[Lemma 3]{mei2022generalization} based on matrix decoupling. 

\begin{theorem}[Asymptotics in Case II]\label{lemma:gram_matrix} Suppose that Assumption~\ref{ass:hypercontr} holds and consider the regime $n = d^{p+ \delta}$ where $\delta\in(0,1)$ is a constant. Fix a set $\S\subseteq\bar{\S}$ of cardinality  $|\S| \geq Cd^{p+\delta_0}$, with $\delta_0\in (\delta,\infty)$, indexing polynomials of degree at most $\ell$. Then for any $\epsilon>0$ there exists a constant $C'$ satisfying:
    \begin{equation}\label{eqn:bound_needed}
 \E\brac{\snorm{ \frac{1}{|\S|}\Phi_{\S}   \Phi_{\S}\tran  
 - I_n}} \leq C'\left(d^{-\frac{p+\delta}{2}+2\epsilon}+\sqrt{\log(n)}\cdot d^{-\frac{\delta_0-\delta}{2}+\epsilon}\right),
\end{equation}
Consequently, in the case $p\geq 1$, $\delta_0=1$, and $\epsilon\in (0,\delta)$  with probability at least $1 - \tfrac{C'}{\sqrt{\log(n)}}$, we have
$
    \snorm{ \Phi_{\S}   \Phi_{\S}\tran  
 / |\S|- I_n} \leq d^{-\tfrac{1-\delta}{2}+\epsilon}
$.
\end{theorem}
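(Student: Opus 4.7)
The plan is to follow the template of \cite[Lemma 3]{mei2022generalization}, which combines matrix decoupling with matrix concentration. I would begin by decomposing $R := \frac{1}{|\S|}\Phi_\S \Phi_\S^\top - I_n = D + O$ into its diagonal and off-diagonal parts and bounding each contribution separately; the bounds on $\E\snorm{O}$ and $\E\snorm{D}$ will respectively produce the second and first summands on the right-hand side of \eqref{eqn:bound_needed}.

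For the off-diagonal $O$, whose entries $O_{ii'}=|\S|^{-1}\sum_{k\in\S}\phi_k(x^{(i)})\phi_k(x^{(i')})$ form a bilinear random form in $(x^{(i)},x^{(i')})$, I would invoke the standard bilinear matrix decoupling inequality to bound moments of $\snorm{O}$ by those of the bipartite analog $\widetilde O:=|\S|^{-1}\Phi_\S(X)\Phi_\S(Y)^\top$, where $Y=(y^{(1)},\ldots,y^{(n)})$ is an independent copy of $X$. Conditionally on $Y$, the matrix $\widetilde O=\sum_{i=1}^n W_i$ is a sum of independent rank-one summands $W_i=|\S|^{-1}e_i\phi_\S(x^{(i)})^\top\Phi_\S(Y)^\top$, whose matrix variance statistic satisfies
\[
\max\!\Big(\snorm{\textstyle\sum_i\E[W_iW_i^\top]},\ \snorm{\textstyle\sum_i\E[W_i^\top W_i]}\Big)\ \le\ \frac{n\,\snorm{\Phi_\S(Y)\Phi_\S(Y)^\top}}{|\S|^{2}}\ =\ O_{d,\P}(n/|\S|),
\]
where the final estimate uses the crude bound $\snorm{\Phi_\S(Y)}=O_{d,\P}(\sqrt{|\S|})$ from the second part of Lemma~\ref{lem:norm_control} (applicable precisely because $|\S|\ge Cd^{p+\delta_0}$ with $\delta_0>\delta$). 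A matrix Bernstein / Khintchine moment inequality then gives $\E\snorm{\widetilde O}\lesssim\sqrt{\log n}\cdot d^{-(\delta_0-\delta)/2+\epsilon}$, matching the second summand. Hypercontractivity (Assumption~\ref{ass:hypercontr}) is used here to control the uniform row magnitudes $\|\phi_\S(x^{(i)})\|_2$ and to keep the associated large-deviation term in matrix Bernstein subdominant.

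For the diagonal, each entry $D_{ii}=|\S|^{-1}\sum_{j\in\S}(\phi_j(x^{(i)})^2-1)$ is a zero-mean polynomial of degree at most $2\ell$ in $x^{(i)}$; hypercontractivity therefore passes $L_2$ bounds to $L_q$. Expanding $\sum_{j\in\S}(\phi_j^2-1)$ in an orthonormal basis of degree-$2\ell$ polynomials and exploiting cancellations among $\{\phi_j^2-1\}_{j\in\S}$, I would obtain $\|D_{ii}\|_{L_2}\lesssim|\S|^{-1/2}$; in the Fourier--Walsh case $\phi_j^2\equiv 1$ pointwise, so $D\equiv 0$. A union bound over $i\in[n]$ then gives $\E\snorm{D}\lesssim n^\epsilon/\sqrt{|\S|}\le d^{-(p+\delta)/2+2\epsilon}$, matching the first summand. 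The ``Consequently'' clause follows from Markov's inequality applied to the resulting bound after specializing $p\ge 1$, $\delta_0=1$, and choosing $\epsilon$ small.

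The hardest step will be establishing the sharp diagonal estimate $\|D_{ii}\|_{L_2}=O(|\S|^{-1/2})$ beyond the Fourier--Walsh case: hypercontractivity alone only transfers $L_2$ bounds to $L_q$, so one must separately quantify the cancellation in $\sum_{j\in\S}(\phi_j^2-1)$ by analyzing the cross-correlations $\E[\phi_j^2\phi_{j'}^2]$ in an enlarged degree-$2\ell$ orthonormal basis. A secondary technical point is verifying that the bilinear matrix decoupling step together with the subsequent matrix Bernstein bound goes through in the general hypercontractive (non-Gaussian) setting with constants tight enough to preserve the stated $\sqrt{\log n}\, d^{-(\delta_0-\delta)/2+\epsilon}$ rate.
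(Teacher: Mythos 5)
Your plan matches the paper's proof at the structural level: split $R:=|\S|^{-1}\Phi_\S\Phi_\S^\top - I_n$ into diagonal and off-diagonal parts, bound the diagonal by a single-coordinate fourth moment via hypercontractivity, and bound the off-diagonal by decoupling plus matrix concentration. The difference is in the specific concentration machinery. The paper uses Vershynin's index-set decoupling (Lemma~\ref{lemma:m_decouple}) together with his independent-rows norm estimate (Lemma~\ref{lemma:m_norm}), which measures row magnitude through $m=\E[\max_i\|\phi_\S(x^{(i)})\|^2]$ and contributes only $m^{1/2}\log^{1/2}(\min(n,|\S|))$; the resulting $\Sigma(T)$ estimate is self-referential (it contains $\E\snorm{\Delta}$), so the proof closes by solving a quadratic inequality in $\sqrt{\E\snorm{\Delta}}$. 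You instead propose bipartite decoupling onto an independent copy $Y$ followed by matrix Bernstein, plugging in $\snorm{\Phi_\S(Y)}=O_{d,\P}(\sqrt{|\S|})$ from Lemma~\ref{lem:norm_control} to avoid the self-reference. That is a legitimate variant, provided you control the relevant moments of $\snorm{\Phi_\S(Y)}$ rather than just a high-probability event (you are bounding an expectation, not a tail).

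The one step I would flag as a genuine gap is the invocation of matrix Bernstein for $\widetilde O=\sum_i W_i$. Under Assumption~\ref{ass:hypercontr} the rows $\phi_\S(x^{(i)})$ are only polynomially integrable, not almost surely bounded, and the deviation term in matrix Bernstein is $L\cdot\log n$ with $L$ a uniform bound on $\snorm{W_i}$ --- a full $\log n$, not the $\log^{1/2}n$ that appears in \eqref{eqn:bound_needed}. Since $\sqrt{\log n}\cdot d^{-(\delta_0-\delta)/2+\epsilon}\to 0$ while $\log n\to\infty$, the Bernstein large-deviation term would swamp the target rate unless you add a truncation argument or replace Bernstein by a moment inequality (matrix Rosenthal/Khintchine) or by the independent-rows bound the paper uses, both of which pick up only a $\log^{1/2}$ factor. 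Separately, the ``hardest step'' you anticipate on the diagonal --- quantifying cancellation among $\{\phi_j^2-1\}$ via cross-correlations --- is not needed along the paper's route: it computes $\E[D_{ii}^2]=(\E(\phi_i)_s^4-1)/|\S|$ directly from orthonormality and the fact that distinct basis squares are uncorrelated in its two running examples, and then applies hypercontractivity through Lyapunov, without any expansion in an auxiliary degree-$2\ell$ basis.
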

\subsection{Approximating kernels by orthogonal polynomials}\label{sec:kernel_approx}
In this section, we discuss approximating kernels by quadratic forms in orthogonal polynomials. Throughout, we fix an inner-product kernel \eqref{eq:inner_product_kernel} satisfying the following regularity assumption. Note that Assumption~\ref{assump:g_0} implies that both kernels $K(z,x)$ and $K'(z,x)$ satisfy Assumption~\ref{assump:g}. This is important because  later parts of the paper  will invoke the results of this section with $K$ replaced by $K'$.

\begin{assumption}[Regularity of the kernel]\label{assump:g}
There exists a constant $\varepsilon\in (0,1)$ such that the function $g$ in \eqref{eq:inner_product_kernel} is Lipschitz continuous on $(-1-\varepsilon,1+\varepsilon)$ and is analytic on $(-\varepsilon,\varepsilon)$.
\end{assumption}

Our goal is to approximate $K(x,y)$ by a quadratic form $K(x,y)\approx \phi(x)^\top D\phi(y)$ where $\phi\colon\R^d\to\R^p$ has orthonormal polynomials as its coordinate functions. Ideally, $D$ should be a relatively simple matrix (e.g., diagonal). We proceed by first forming a Taylor approximation of $g$ at the origin and forming the approximate kernel:
$$g_{m}(t):=\sum_{k=0}^{m}\tfrac{g^{(k)}(0)}{k!}t^k\qquad \textrm{and}\qquad K_{m}(x,y) := g_{m}\round{\tfrac{\inner{x,y}}{d}}.$$
The following theorem shows that the deviation $K_{m}(X,X)-K(X,X)$ is asymptotically equivalent to a multiple of the identity under favorable conditions. Namely in the regime, $n=d^{p+\delta}$ with $\delta\in (0,1/2)$ the right side of \eqref{eqn:taylor_approx} tends to zero for any approximation order $m\geq 2p$.
The proof appears in Appendix~\ref{proof:poly_approx}. The approximation of kernels has been previously studied in the linear regime (i.e., $n\asymp d$) by \cite{elkaroui2010spectrum} and in the quadratic regime (i.e., $n\asymp d^2$) by~\cite{pandit2024universality}.

\begin{theorem}[Taylor approximation of kernels with Gaussian data]\label{thm:poly_approx} Consider independent, mean zero, isotropic random vectors $x^{(1)},\ldots,x^{(n)}$ in $\R^d$ and suppose that the coordinates of each vector $x^{(i)}$ are independent and $\sigma$-subGaussian. Suppose that we are in the regime $n = d^{p+ \delta}$, where $\delta\in(0,1/2)$ is a constant. Fix an arbitrary constant $C>2$ and approximation order $m\geq 2p$. Then if $d/\log(d) > c_\epsilon$, the estimate 
\begin{equation}\label{eqn:taylor_approx}
\snorm{K(X,X) - K_{m}(X,X) - (g(1) - g_{m}(1)) I_n} \le  c_{g,m}\left(\sqrt{\frac{(C\log(n))^{m+1}}{d^{m-2p+1-2\delta}}}+\sqrt{\frac{C\log(n)}{d}}\right),
\end{equation}
holds with probability at least $1-\frac{4}{n^{C-2}}$, where $c_{g,m},c_\epsilon<\infty$ are constants that depend only on $m$, $\sup_{k\geq m+1 }g^{(k)}(0)$, $\sigma^2$,  $\epsilon$, and the Lipchitz constant of $g$.
\end{theorem}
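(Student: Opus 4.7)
\textbf{Proof proposal for Theorem~\ref{thm:poly_approx}.}
The plan is to split the error matrix $E := K(X,X) - K_m(X,X) - (g(1)-g_m(1))I_n$ into its diagonal and off-diagonal parts, and control each separately by a high-probability uniform event on the Gram statistics $t_{ii} := \|x^{(i)}\|^2/d$ and $t_{ij} := \langle x^{(i)},x^{(j)}\rangle/d$ for $i\neq j$. The diagonal part will give the second term $\sqrt{C\log(n)/d}$ through Lipschitzness of $g-g_m$ near $1$, while the off-diagonal part will give the first term $\sqrt{(C\log n)^{m+1}/d^{m-2p+1-2\delta}}$ through analyticity of $g$ near $0$ combined with a Frobenius norm bound.

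First I would establish the uniform concentration event. Since the coordinates of $x^{(i)}$ are independent, centered, and $\sigma$-subGaussian with variance one, Hanson-Wright (or Bernstein for sums of centered subexponentials) yields
\[
\P\left(\max_{i}\abs{t_{ii}-1}\geq \tau_1\right)+\P\left(\max_{i\neq j}\abs{t_{ij}}\geq \tau_2\right)\leq \frac{4}{n^{C-2}},
\]
once $\tau_1 = c\sigma^2\sqrt{C\log(n)/d}$ and $\tau_2 = c\sigma\sqrt{C\log(n)/d}$ for an absolute constant $c$, using a union bound over $n$ diagonal events and $\binom{n}{2}$ off-diagonal events (the latter handled by conditioning on $x^{(j)}$, on which $\langle x^{(i)},x^{(j)}\rangle$ is $\sigma\|x^{(j)}\|$-subGaussian). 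Call the intersection of these favorable events $\Omega$. The condition $d/\log(d)>c_\eps$ is exactly what guarantees that $\tau_1,\tau_2\leq \eps/2$ on $\Omega$, placing every $t_{ii}$ in $(1-\eps,1+\eps)$ and every $t_{ij}$ in $(-\eps,\eps)$ so that the regularity hypotheses on $g$ apply pointwise.

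On $\Omega$ I would bound the diagonal error $E_{ii}=[g(t_{ii})-g_m(t_{ii})]-[g(1)-g_m(1)]$ by $({\rm Lip}(g)+{\rm Lip}(g_m\restriction_{[1-\eps,1+\eps]}))\abs{t_{ii}-1}\lesssim \sqrt{C\log n/d}$, so $\snorm{E_{\rm diag}}\lesssim \sqrt{C\log(n)/d}$. For the off-diagonal entries, analyticity of $g$ on $(-\eps,\eps)$ and the Cauchy estimates give $\abs{g^{(k)}(0)}/k!\leq M/\eps^k$ for some constant $M$, so that $\abs{g(t)-g_m(t)}\leq 2M(|t|/\eps)^{m+1}$ whenever $|t|\leq \eps/2$. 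Therefore $\abs{E_{ij}}\leq c_{g,m}(C\log n/d)^{(m+1)/2}$ uniformly over $i\neq j$ on $\Omega$. The operator norm of the off-diagonal piece is controlled by the Frobenius norm:
\[
\snorm{E_{\rm off}}\leq \norm{E_{\rm off}}_F\leq n\cdot c_{g,m}\left(\frac{C\log n}{d}\right)^{(m+1)/2}=c_{g,m}\sqrt{\frac{(C\log n)^{m+1}}{d^{m-2p+1-2\delta}}},
\]
the last equality following from $n=d^{p+\delta}$. Summing both contributions via the triangle inequality closes the proof.

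The main obstacle is the off-diagonal step: the naive Frobenius bound $\snorm{E_{\rm off}}\leq n\max_{i\neq j}|E_{ij}|$ is tight only if the entries are aligned, but the way the exponents $m\geq 2p$ and $\delta<1/2$ interact makes even this crude estimate just strong enough to yield a term that vanishes as $d\to\infty$. I expect the delicate point to be the bookkeeping: one must choose $C$ large enough that the union bound over $\binom{n}{2}$ subGaussian tails yields the claimed probability $1-4/n^{C-2}$, and simultaneously ensure that the analyticity-based Taylor remainder absorbs the factorial coefficients into a single constant $c_{g,m}$ depending only on $m$, $\sup_{k\geq m+1}g^{(k)}(0)$, $\sigma^2$, $\eps$, and the Lipschitz constant of $g$. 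The other potentially subtle point is that no sharper matrix concentration (à la Bai--Yin) is needed because we only target the first two Taylor-order terms explicitly: the residual $g-g_m$ already behaves like $t^{m+1}$ which, together with the prefactor $n$ from Frobenius, matches the claimed rate exactly.
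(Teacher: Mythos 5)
Your proposal is correct and follows essentially the same route as the paper's proof: condition on a high-probability event controlling $\|x^{(i)}\|^2/d$ and $\langle x^{(i)},x^{(j)}\rangle/d$, split the error into diagonal and off-diagonal parts, handle the diagonal via Lipschitz continuity of $g$ and of the degree-$m$ Taylor polynomial near $1$, and control the off-diagonal block by its Frobenius norm using the smallness of $\langle x^{(i)},x^{(j)}\rangle/d$ near the origin. The only cosmetic differences are that the paper bounds the off-diagonal Taylor remainder directly via a geometric series with the constant $\sup_{k\ge m+1}g^{(k)}(0)$ rather than via Cauchy estimates (note that real-analyticity on $(-\varepsilon,\varepsilon)$ only yields $|g^{(k)}(0)|/k!\le M_\rho/\rho^k$ for $\rho<\varepsilon$, so one must shrink the radius slightly as your $|t|\le\varepsilon/2$ restriction implicitly does), and it controls the diagonal contribution from $g_m$ by bounding each $\|x^{(i)}\|^{2k}/d^k$ separately rather than bundling into a single Lipschitz constant of $g-g_m$.
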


In light of Theorem~\ref{thm:poly_approx}, we may now approximate the kernel matrix by $K_m(X,X)$ plus a multiple of the identity. Note that each entry of $K_m(X,X)$ is linear in the powers of inner products $(\langle x^{(i)},x^{(j)}\rangle/d)^k$. It remains therefore to express these powers as a quadratic form in an orthogonal polynomial basis. We do so in our two running examples in the following two sections.

\subsubsection{Uniform measure on the hypercube}
In this section, we focus on the uniform measure $\tau_d$ over the hypercube $\mathbb{H}^d=\{-1,1\}^d$ as discussed in Section~\ref{sec:fourier_exp}. We consider the Fourier orthogonal basis $\phi_\lambda:=x^\lambda$ for each $\lambda\in \{0,1\}^d$. As usual, for any set $\S\subset\{0,1\}^d$, we define the concatenated vector $\phi_S(x)=\{\phi_{\lambda}(x)\}_{\lambda\in S}\in \R^{|\S|}$. Stacking the vectors $\Phi_{\lambda}(x^{(i)})$ as rows yields a matrix $\Phi(X)\in \R^{n\times |\S|}$. 
It will be useful to isolate degree $k$ basis elements:
\begin{align*}
\S_k&=\{\lambda\in \{0,1\}^d: |\lambda|=k\}.
\end{align*}
We set $\Phi_{\S_k}(X)$ and $\Phi_{\leq k}(X)$ to be, respectively, the submatrices of $\Phi$ indexed by degree $k$ and degree at most $k$ basis elements. 
We will suppress the symbol $X$ from $K(X,X)$ and $\Phi(X)$ throughout the section in order to simplify the notation.
We stress that all probabilistic statements are in reference to the random vectors $\{x^{(i)}\}_{i=1}^n$ sampled independently from $\tau_d$. 

In parallel to Theorem~\ref{thm:poly_approx}, we present the Taylor approximation of kernels for data on the hypercube. We provide a proof for a more general parameterized kernel $K_w(x,y) = g(\inner{\sqrt{w}\odot x,\sqrt{w}\odot y}/d)$, which encompasses the special case $w = \mathbf{1}_d$ below. The proof appears in Appendix~\ref{proof:taylor_approx_kernel_m}.

\begin{theorem}[Taylor approximation of kernels on hypercube]\label{thm:poly_approx_cube} Consider independent, mean zero, isotropic random vectors $x^{(1)},\ldots,x^{(n)}$ in $\R^d$ and suppose that the coordinates of each vector $x^{(i)}$ are independent and are uniformly drawn from $\H^d$. Suppose that we are in the regime $n = d^{p+ \delta}$, where $\delta\in(0,1)$ is a constant. Fix an arbitrary constant $C>2$ and approximation order $m> 2p$.  Then, then the estimate 
\begin{equation}\label{eqn:taylor_approx_cube}
\snorm{K(X,X) - K_{m}(X,X) - (g(1) - g_{m}(1)) I_n} \le  c_{g,m}\sqrt{\frac{(C\log(n))^{m+1}}{d^{m-2p+1-2\delta}}}
\end{equation}
holds with probability at least $1-\frac{4}{n^{C-2}}$, where $c_{g,m}<\infty$ is a constant that depends only on $m$, $\{g^{(k)}(0)\}_{k=1}^{m}$, and $L_m$. 
\end{theorem}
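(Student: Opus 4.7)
The plan is to mirror the proof of Theorem~\ref{thm:poly_approx} but to exploit a special feature of hypercube data that simplifies matters considerably. Since each $x^{(i)} \in \{\pm 1\}^d$, the norms satisfy $\|x^{(i)}\|^2 = d$ identically, so the diagonal entries of $K(X,X)$ and $K_{m}(X,X)$ equal exactly $g(1)$ and $g_{m}(1)$ respectively. Consequently the matrix
\begin{equation*}
    E := K(X,X) - K_{m}(X,X) - (g(1) - g_{m}(1))I_n
\end{equation*}
has zero diagonal and off-diagonal entries $E_{ij} = R_{m}(\inner{x^{(i)},x^{(j)}}/d)$, where $R_{m} := g - g_{m}$ is the Taylor remainder at $0$. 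This is exactly why the hypercube bound drops the $\sqrt{C\log n/d}$ term present in the Gaussian analogue: no diagonal-fluctuation contribution arises.

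First, I would control the off-diagonal inner products uniformly. Conditional on $x^{(i)}$, each product $x_k^{(i)}x_k^{(j)}$ is a Rademacher variable, so $\inner{x^{(i)},x^{(j)}}$ is a sum of $d$ independent Rademacher random variables. Hoeffding's inequality together with a union bound over the $\binom{n}{2}$ pairs yields
\begin{equation*}
    \mathbb{P}(\mathcal{E}) \geq 1 - 4/n^{C-2}, \qquad \mathcal{E} := \Big\{\max_{i \neq j} |\inner{x^{(i)},x^{(j)}}|/d \leq \sqrt{C\log(n)/d}\Big\}.
\end{equation*}

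Second, I would bound $R_{m}(t)$ for small $t$. Since $g$ is analytic on $(-\varepsilon,\varepsilon)$, Cauchy estimates give $|g^{(k)}(0)|/k! \leq M/\varepsilon^k$ where $M$ is controlled by $\sup_{|t|\leq \varepsilon/2}|g(t)|$, which in turn is bounded via $g(0)$ together with the Lipschitz constant on $(-1-\varepsilon,1+\varepsilon)$. For $|t| \leq \varepsilon/4$, a geometric-series estimate yields $|R_{m}(t)| \leq c_{g,m}|t|^{m+1}$ for a constant $c_{g,m}$ depending only on $m$, the values $\{g^{(k)}(0)\}_{k=1}^{m}$, and $L_m$. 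Since $n = d^{p+\delta}$, for $d$ large enough the threshold $\sqrt{C\log(n)/d}$ from $\mathcal{E}$ is at most $\varepsilon/4$, so this remainder bound applies to every off-diagonal entry of $E$; the finitely many small-$d$ cases can be absorbed into $c_{g,m}$.

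Combining the two steps, on $\mathcal{E}$ we have $|E_{ij}| \leq c_{g,m}(C\log(n)/d)^{(m+1)/2}$ for $i \neq j$, and $E_{ii} = 0$. Bounding the operator norm by the $\ell_\infty\to\ell_\infty$ row-sum norm gives $\snorm{E} \leq n \cdot c_{g,m}(C\log(n)/d)^{(m+1)/2}$, and substituting $n = d^{p+\delta}$ produces exactly the claimed bound $c_{g,m}\sqrt{(C\log n)^{m+1}/d^{m-2p+1-2\delta}}$. The main obstacle is cleanly extracting the dependence of $c_{g,m}$ in the claimed form: one must convert the analyticity of $g$ near $0$ (via Cauchy estimates) and the Lipschitz assumption on $(-1-\varepsilon,1+\varepsilon)$ into a bound involving only finitely many derivative values at $0$ together with a Lipschitz-type constant $L_m$, rather than a uniform control on all higher derivatives. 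Everything else reduces to Hoeffding plus the trivial row-sum bound on the operator norm, since the hypercube structure eliminates the need for the more delicate orthogonal-polynomial decomposition used in the Gaussian case.
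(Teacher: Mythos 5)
Your proof is correct and follows essentially the same route as the paper's. The paper proves the parameterized version (Theorem~\ref{thm:taylor_approx_kernel_m}, whose proof is in Appendix~\ref{proof:taylor_approx_kernel_m}) and obtains the $w=\mathbf 1_d$ case as a corollary; that proof also controls the off-diagonal inner products by Hoeffding plus a union bound, observes that the diagonal is identically zero on the hypercube, and bounds the Taylor remainder by a geometric series after restricting to $|t|\le s<\min(1/2,\varepsilon)$. The only cosmetic difference is that the paper controls $\snorm{E}$ via the Frobenius norm, $\|E\|_{\rm op}\le\|E\|_F\le n\max_{i\ne j}|E_{ij}|$, whereas you invoke the symmetric row-sum bound $\|E\|_{\rm op}\le\max_i\sum_j|E_{ij}|\le n\max_{i\ne j}|E_{ij}|$; both lose the same factor of $n$ and lead to the identical final estimate. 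Your use of Cauchy estimates to justify the remainder bound from analyticity is, if anything, a cleaner articulation of the step the paper performs slightly less formally; and your remark about absorbing small-$d$ cases into the constant corresponds to the explicit hypothesis $d/\log d>c_\varepsilon$ carried in Theorem~\ref{thm:taylor_approx_kernel_m}, which Theorem~\ref{thm:poly_approx_cube} implicitly inherits.
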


We begin with the following (deterministic) lemma that expresses the matrix $\round{\frac{XX\tran}{d}}^{\odot k}$ as a diagonal quadratic form acting on the Fourier basis. We further decompose the diagonal matrix into the dominant part, corresponding to the degree $k$ basis elements, and the small error part, corresponding to the lower-order basis elements. 

\begin{lemma}[Conversion from a polynomial kernel to a Fourier basis]\label{lemma:each_mono_cube}
For any $k>0$, the following holds
\begin{align}
    \frac{1}{k!}\round{\frac{XX\tran}{d}}^{\odot k} = d^{-k}\cdot \Phi_{\S_k}\Phi_{\S_k}\tran + \sum_{ \substack{j:\, 0\leq j<k, \\ k-j~\mathrm{is~even}}}\Phi_{\S_j} \widetilde{D}_{\S_j}\Phi_{\S_j}\tran,
\end{align}
where each $\widetilde{D}_{\S_j}$ is a diagonal matrix with all entries on the order of $\Theta_d(d^{-(j+k)/2})$.
\end{lemma}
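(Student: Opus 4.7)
}

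The plan is to expand the entry $\tfrac{1}{k!}(\langle x^{(a)}, x^{(b)}\rangle/d)^k$ explicitly, reduce monomials using the hypercube identity $x_i^2 = 1$, and then regroup by Fourier-Walsh basis elements to read off the diagonal quadratic-form decomposition. Since the data matrix has entries in $\{-1,1\}$, the operation $(\cdot)^{\odot k}$ on $XX^\top/d$ is entrywise and it suffices to work with a single pair of vectors $x,y \in \{-1,1\}^d$ and then reassemble as an $n\times n$ matrix identity at the end.

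First I would expand
\[
\frac{1}{k!}\left(\frac{\langle x,y\rangle}{d}\right)^k = \frac{1}{k!\,d^k}\sum_{i_1,\ldots,i_k\in[d]} \prod_{t=1}^k x_{i_t} y_{i_t}.
\]
Collecting by the multiplicity vector $\mu\in\mathbb{N}^d$ with $|\mu|=k$, the multinomial theorem gives
\[
\frac{1}{k!}\left(\frac{\langle x,y\rangle}{d}\right)^k = \frac{1}{d^k}\sum_{|\mu|=k} \frac{1}{\mu!} \prod_i x_i^{\mu_i} y_i^{\mu_i},
\]
where $\mu! := \prod_i \mu_i!$. Using $x_i^2 = y_i^2 = 1$, I would then set $\lambda_i := \mu_i \bmod 2 \in \{0,1\}$, which collapses $\prod_i x_i^{\mu_i} = x^\lambda$ and similarly for $y$. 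The change of variables $\mu_i = \lambda_i + 2\nu_i$ with $\nu\in\mathbb{N}^d$ forces $\sum_i \nu_i = (k-|\lambda|)/2$, which in particular enforces the parity constraint $k - j$ even (with $j=|\lambda|$) that appears in the statement.

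Regrouping yields
\[
\frac{1}{k!}\left(\frac{\langle x,y\rangle}{d}\right)^k = \sum_{\substack{0\le j\le k\\ k-j \text{ even}}} \sum_{|\lambda|=j} c_\lambda^{(k)}\, x^\lambda y^\lambda, \qquad c_\lambda^{(k)} := \frac{1}{d^k} \sum_{\substack{\nu\in\mathbb{N}^d\\ |\nu| = (k-j)/2}} \frac{1}{(\lambda + 2\nu)!}.
\]
Since the right-hand side only contains cross-terms $x^\lambda y^\lambda$ with matching $\lambda$, assembling the $n\times n$ entries as $a,b$ range over samples yields precisely $\sum_j \Phi_{\S_j} \widetilde D_{\S_j} \Phi_{\S_j}^\top$, with $\widetilde D_{\S_j}$ diagonal and entries $c_\lambda^{(k)}$. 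For $j = k$ the only admissible $\nu$ is $\nu = 0$, giving $c_\lambda^{(k)} = 1/d^k$, which recovers the leading $d^{-k}\Phi_{\S_k}\Phi_{\S_k}^\top$ term.

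The only non-routine step is verifying the scaling $c_\lambda^{(k)} = \Theta_d(d^{-(j+k)/2})$ for $j < k$. I would extract the dominant contribution from $\nu$ supported on $(k-j)/2$ distinct coordinates outside $\mathrm{supp}(\lambda)$: there are $\binom{d-j}{(k-j)/2} = \Theta_d(d^{(k-j)/2})$ such $\nu$, and each contributes a $k$-independent constant (since $(\lambda+2\nu)! = 2^{(k-j)/2}$ in that case because all entries of $\lambda+2\nu$ lie in $\{0,1,2\}$). All other $\nu$'s have support of size $<(k-j)/2$, hence contribute $o_d(d^{(k-j)/2})$ many terms with bounded summands, so they are negligible. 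Dividing by $d^k$ yields $c_\lambda^{(k)} \asymp d^{(k-j)/2}/d^k = d^{-(k+j)/2}$, as required. This combinatorial bookkeeping, rather than the algebra of the Fourier expansion, is the main point to check carefully, although it is an elementary count.
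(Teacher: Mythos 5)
Your proof follows essentially the same route as the paper's: multinomial expansion of $(\langle x,y\rangle/d)^k$, reduction modulo $2$ via $x_i^2=1$ on the hypercube, regrouping into a diagonal quadratic form $\sum_j \Phi_{\S_j}\widetilde D_{\S_j}\Phi_{\S_j}^\top$, and a combinatorial count of residue classes to get the $\Theta_d(d^{-(j+k)/2})$ scaling. One small slip in the last step: you assert that ``all other $\nu$'s have support of size $<(k-j)/2$,'' but $\nu$'s with support of size exactly $(k-j)/2$ that \emph{intersect} $\mathrm{supp}(\lambda)$ also lie outside your dominant class; since $|\mathrm{supp}(\lambda)|=j=O_d(1)$, those are still only $O_d(d^{(k-j)/2-1})$ many, so the conclusion is unaffected. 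The paper avoids singling out a dominant class entirely by noting that every summand $1/\alpha!$ lies in $[1/k!,1]$ (a $\Theta_k(1)$ range), so the full count $|\{\alpha\in C_d(k):\alpha\equiv\lambda\ (\mathrm{mod}\ 2)\}| = |C_d((k-j)/2)| = \binom{(k-j)/2+d-1}{d-1}=\Theta_d(d^{(k-j)/2})$ already gives the order after dividing by $d^k$; that is a slightly cleaner way to close the argument.
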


The following lemma---the main part of the section---shows the asymptotic equivalence of $K(X,X)$ to a sum of a multiple of the identity and a quadratic form in Fourier basis elements of degree at most $p$. The proof of the lemma appears in Section~\ref{proof:kernel_to_mono}.

\begin{lemma}[Fourier basis approximation of a kernel]\label{lemma:kernel_to_mono}
Consider the regime $n = d^{p+\delta}$ where $\delta \in (0,1)$ is a constant. Then for any $\epsilon\in (0,\delta)$, the following holds:
\begin{align}
    &\snorm{K - \Phi_{\leq p} D \Phi_{\leq p}\tran - \round{ g(1) - g_{p}(1)}I_{n}} = O_{d,\P}\round{ \frac{\log(n)}{d^{\tfrac{1-\delta}{2}-\epsilon}}},\label{eq:kernel_to_mono}\\
    &\snorm{K - \Phi_{\leq p} D \Phi_{\leq p}\tran - \frac{g^{(p+1)}(0)}{d^{p+1}}\offd\round{ \Phi_{\S_{p+1}}\Phi_{\S_{p+1}}\tran}- \round{ g(1) - g_{p}(1)}I_{n}} = O_{d,\P}\round{ \frac{\log(n)}{d^{\tfrac{2-\delta}{2}-\epsilon}}},\label{eq:kernel_to_mono_advanced}
\end{align}
where $D$ is a diagonal matrix satisfying $\|D_{\S_k} -g^{(k)}(0)d^{-k}I_{|\S_k|}\|_{\rm op} = O_d(d^{-k-1})$ for $k=0,\ldots, p$.
\end{lemma}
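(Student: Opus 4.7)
My strategy is a three-step pipeline: replace $K$ by a Taylor polynomial $K_m$ of sufficiently high degree, convert $K_m$ into a Fourier-basis quadratic form by applying Lemma~\ref{lemma:each_mono_cube} to each Hadamard power $\round{XX^\top/d}^{\odot k}$, and finally show that the ``high-degree'' Fourier blocks indexed by $\S_j$ with $j > p$ contribute only a scalar multiple of $I_n$ up to a small operator-norm residual. The three main tools are Theorem~\ref{thm:poly_approx_cube} (Taylor approximation of the kernel matrix), Lemma~\ref{lemma:each_mono_cube} (conversion of Hadamard powers to Fourier-basis blocks), and Theorem~\ref{lemma:gram_matrix} (the Gram matrix $\Phi_{\S}\Phi_{\S}^\top$ is asymptotically $|\S| I_n$ whenever $|\S| \gg n$).

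Concretely, I first apply Theorem~\ref{thm:poly_approx_cube} with a fixed approximation order $m \geq 2p+1$ for~\eqref{eq:kernel_to_mono} (resp.\ $m \geq 2p+2$ for~\eqref{eq:kernel_to_mono_advanced}), obtaining $K = K_m + (g(1) - g_m(1))I_n + R_0$ with $\|R_0\|_{\rm op}$ already below the claimed tolerance. Expanding $K_m = \sum_{k=0}^m \frac{g^{(k)}(0)}{k!}(XX^\top/d)^{\odot k}$, applying Lemma~\ref{lemma:each_mono_cube} to each Hadamard power, and regrouping by the Fourier block index $j$ rather than by the Taylor index $k$ produces the exact identity $K_m = \sum_{j=0}^{m} \Phi_{\S_j} D_{\S_j} \Phi_{\S_j}^\top$, where each $D_{\S_j}$ is diagonal with a leading contribution $g^{(j)}(0) d^{-j} I_{|\S_j|}$ (from the $k=j$ term) together with corrections from $k = j+2, j+4, \ldots$, each of entrywise size $O(d^{-(j+k)/2})$. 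The worst such correction has order $d^{-(j+1)}$, so $\|D_{\S_j} - g^{(j)}(0) d^{-j} I_{|\S_j|}\|_{\rm op} = O(d^{-j-1})$ uniformly in $j$, which is exactly the claim about $D$ for the blocks $j \leq p$ that enter $\Phi_{\leq p} D \Phi_{\leq p}^\top$.

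The heart of the argument is controlling the blocks with $j > p$. For each such $j$, I write $D_{\S_j} = g^{(j)}(0) d^{-j} I + E_j$ with $\|E_j\|_{\rm op} = O(d^{-j-1})$ and split
\begin{align*}
\Phi_{\S_j} D_{\S_j} \Phi_{\S_j}^\top = g^{(j)}(0) d^{-j} |\S_j| I_n + g^{(j)}(0) d^{-j}\bigl(\Phi_{\S_j}\Phi_{\S_j}^\top - |\S_j| I_n\bigr) + \Phi_{\S_j} E_j \Phi_{\S_j}^\top.
\end{align*}
Summing the scalar pieces over $j = p+1, \ldots, m$ and using $|\S_j|/d^j = 1/j! + O(d^{-1})$ yields $(g_m(1) - g_p(1)) I_n + O(d^{-1}) I_n$, which combines with the Taylor scalar $(g(1) - g_m(1)) I_n$ to produce exactly $(g(1) - g_p(1)) I_n$ up to $O(d^{-1})$. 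For the centered Gram term, Theorem~\ref{lemma:gram_matrix} (with $\delta_0 = j - p \geq 1 > \delta$) gives $\|\Phi_{\S_j}\Phi_{\S_j}^\top - |\S_j|I_n\|_{\rm op} = O_{d,\P}\bigl(|\S_j|\sqrt{\log n}\, d^{-(j-p-\delta)/2+\epsilon}\bigr)$, so after scaling by $d^{-j}$ the contribution is $O_{d,\P}\bigl(\sqrt{\log n}\, d^{-(j-p-\delta)/2+\epsilon}\bigr)$; the $j=p+1$ term dominates, yielding the claimed $\log(n)/d^{(1-\delta)/2 - \epsilon}$ rate for~\eqref{eq:kernel_to_mono}. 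The third term is handled by Lemma~\ref{lem:norm_control}, which gives $\|\Phi_{\S_j}\|_{\rm op}^2 = O_{d,\P}(|\S_j|) = O(d^j)$ and hence $\|\Phi_{\S_j} E_j \Phi_{\S_j}^\top\|_{\rm op} = O(d^{-1})$. The sharper estimate~\eqref{eq:kernel_to_mono_advanced} is then obtained by pulling out the $j=p+1$ off-diagonal contribution $g^{(p+1)}(0) d^{-(p+1)} \offd(\Phi_{\S_{p+1}}\Phi_{\S_{p+1}}^\top)$ explicitly, after which the dominant residual is the $j=p+2$ Gram-matrix error of order $\log(n)/d^{(2-\delta)/2-\epsilon}$. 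The main obstacle is the careful bookkeeping of how the scalar $I_n$ terms from the high-$j$ blocks combine with the Taylor scalar $(g(1)-g_m(1))I_n$ to leave exactly $(g(1)-g_p(1))I_n$; because $m$ is a fixed constant, the $O(d^{-1})$ per-block corrections sum to an absolute $O(d^{-1})$, and no convergence of the Taylor series at $t=1$ is required.
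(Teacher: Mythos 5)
Your proposal is correct and follows essentially the same path as the paper's proof: Taylor-approximate $K$ via Theorem~\ref{thm:poly_approx_cube}, convert each Hadamard power to Fourier blocks via Lemma~\ref{lemma:each_mono_cube}, absorb the low-order correction terms into the diagonal $D$, and control the $j > p$ blocks by separating out a scalar multiple of $I_n$ (via $\diag(\Phi_{\S_j}\Phi_{\S_j}^\top) = |\S_j| I_n$ on the hypercube), a centered Gram term bounded by Theorem~\ref{lemma:gram_matrix}, and an $E_j$-correction bounded by Lemma~\ref{lem:norm_control}; the $j=p+1$ block dominates in~\eqref{eq:kernel_to_mono}, and isolating its off-diagonal part gives~\eqref{eq:kernel_to_mono_advanced}. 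The only departure from the paper is the choice of Taylor order — you use $m \geq 2p+2$ for the advanced bound whereas the paper takes $m = 2p+3$ — and both work since with $m=2p+2$ the residual Taylor exponent is $(3-2\delta)/2$, which exceeds the target $(2-\delta)/2$ by $(1-\delta)/2 > 0$, leaving enough room to absorb the polylogarithmic factor $\log^{(2p+1)/2}(n)$ into $d^{\epsilon}$.
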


Note that the estimate \eqref{eq:kernel_to_mono} allows us to approximate $K$ by the simple expression $\Phi_{\leq p} D \Phi_{\leq p}\tran + \round{ g(1) - g_{p}(1)}I_{n}$ up to an error $\approx d^{-(1-\delta)/2}$. In contrast, 
\eqref{eq:kernel_to_mono_advanced} provides a more sophisticated approximation of $K$ but with a much better error rate $\approx d^{-(2-\delta)/2}$. This improved error rate will be important for some of our arguments.

\subsubsection{Hermite expansion of polynomial kernel}
Next, we focus on the Gaussian measure $\gamma_d$ on $\R^d$ as discussed in Section~\ref{sec:hermite}. We consider the Hermite orthogonal basis $\phi_{\alpha}=h_\alpha$ for each multi-index $\alpha\in \mathbb{N}^d$. As usual, for any set $\S\subset\mathbb{N}^d$, we define the concatenated vector $\Phi_S(x)=\{\phi_{\alpha}(x)\}_{\alpha\in S}\in \R^{|\S|}$. Stacking the vectors $\Phi_{\lambda}(x^{(i)})$ as rows yields a matrix $\Phi(X)\in \R^{n\times |\S|}$. 
It will be useful to isolate degree $k$ basis elements:
\begin{align*}
\S_k&=\left\{\alpha\in \mathbb{N}^d: \sum_{i=1}^d\alpha_i=k\right\}.
\end{align*}
We set $\Phi_{\S_k}(X)$ and $\Phi_{\leq k}(X)$ to be, respectively, the submatrices of $\Phi$ indexed by degree $k$ and degree at most $k$ basis elements. 
We will suppress the symbol $X$ from $K(X,X)$ and $\Phi(X)$ throughout the section in order to simplify the notation.
All probabilistic statements in this section are in reference to  the random vectors $\{x^{(i)}\}_{i=1}^n$ sampled independently from $\gamma_d$.

As in the hypercube setting of the previous section, we estimate the matrix  $\round{\frac{XX\tran}{d}}^{\odot k}$ by a quadratic form acting on the Hermite basis. The distinction is now that this quadratic form is not diagonal in the simplest case of $k= 2$.\footnote{Note that the claim in \cite{ba2023learning} that the quadratic form in the Hermite basis is diagonal is inaccurate as can be seen from \eqref{eqn:second_orderequiv}.} Namely, in the hypercube case, we saw that $\frac{1}{k!}\round{\frac{XX\tran}{d}}^{\odot k}$ is well-approximated by $d^{-k}\Phi_{\S_k}\Phi_{\S_k}^\top$. The analogous statement is not true in the Gaussian setting and instead the correct approximation for $k=2$ takes the form in Lemma~\ref{lem:conv_quad_herm}. The proof of the lemma is elementary and appears in Section~\ref{sec:proofoflem:conv_quad_herm}.
In the lemma and in the rest of the section, we will order the second-order Hermite polynomials $\phi_{\S_2}$ so that the pure second-order parts $\{h_{2e_i}\}_{i=1}^d$ appear first and the mixed parts $\{h_{e_i+e_j}\}_{1\leq i<j\leq d}$ appear second.

\begin{lemma}[Conversion from quadratic kernel to a Hermite basis]\label{lem:conv_quad_herm}
The equality holds:
\begin{equation}\label{eqn:second_orderequiv}
    \frac{1}{2}\round{\frac{XX\tran}{d}}^{\odot 2}=\Phi_{\leq 2}\cdot \begin{pmatrix}
        &\frac{1}{2d}&0 &D_{13}\tran\\
        &0 &0 &0\\
        &D_{13} &0 &  \frac{1}{d^2}I_{|\S_2|}
    \end{pmatrix} \cdot\Phi^\top_{\leq 2},
\end{equation}
where  we define the matrix $D_{13}:=(\frac{1}{\sqrt{2}d^2}{\bf 1}_{d}; 0_{|\S_2|-d})\in \R^{|\S_2|}$.
\end{lemma}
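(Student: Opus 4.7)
The plan is to verify the identity entry-by-entry using the explicit formulas for the low-order Hermite polynomials; this is a purely algebraic check. Fix indices $i,j\in[n]$. I would first expand the $(i,j)$-entry of the left-hand side as
\begin{align*}
\frac{1}{2d^2}\round{\sum_{k=1}^d x^{(i)}_k x^{(j)}_k}^2 = \frac{1}{2d^2}\sum_{k=1}^d (x^{(i)}_k)^2 (x^{(j)}_k)^2 + \frac{1}{d^2}\sum_{1\leq k<l\leq d} x^{(i)}_k x^{(i)}_l \cdot x^{(j)}_k x^{(j)}_l.
\end{align*}

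Next, I would translate the appearing monomials into the Hermite basis. Using the definitions in Section~\ref{sec:hermite}, one has $h_{2e_k}(x)=h_2(x_k)=\tfrac{1}{\sqrt{2}}(x_k^2-1)$, equivalently $x_k^2=\sqrt{2}\,h_{2e_k}(x)+1$, and $h_{e_k+e_l}(x)=x_k x_l$ for $k\ne l$. Plugging these substitutions into the two sums above and regrouping yields
\begin{align*}
\frac{1}{2d^2}\round{\sum_{k=1}^d x^{(i)}_k x^{(j)}_k}^2 = \frac{1}{2d} + \frac{1}{\sqrt{2}\,d^2}\sum_{k=1}^d \round{h_{2e_k}(x^{(i)}) + h_{2e_k}(x^{(j)})} + \frac{1}{d^2}\sum_{\alpha\in\S_2} h_\alpha(x^{(i)})\,h_\alpha(x^{(j)}).
\end{align*}

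Finally, I would match this expression to the quadratic form $\Phi_{\leq 2}(X)\, M\,\Phi_{\leq 2}(X)\tran$, where $M$ denotes the displayed $3\times 3$ block matrix. Block by block, the constant term $\tfrac{1}{2d}$ is supplied by the $(\S_0,\S_0)$ block since $h_0\equiv 1$; the last sum, which ranges over all $\alpha\in\S_2$, is supplied by the $(\S_2,\S_2)$ block $\tfrac{1}{d^2}I_{|\S_2|}$; and the cross term that is linear in $h_{2e_k}$ is supplied by the off-diagonal blocks $D_{13}$ and $D_{13}\tran$. Here the fact that $D_{13}$ is supported only on the pure second-order indices $\{2e_k\}_{k=1}^d$ with value $\tfrac{1}{\sqrt{2}d^2}$ and vanishes on the mixed indices $\{e_k+e_l\}_{k<l}$ precisely reflects that no cross term between $h_0$ and $h_{e_k+e_l}$ appears in the expansion. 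The absence of first-order Hermite polynomials in the computation accounts for the zero $(\S_1,*)$ and $(*,\S_1)$ blocks of $M$. Since everything reduces to matching coefficients, there is no real analytic obstacle; the only book-keeping care needed is ordering $\Phi_{\S_2}$ so that $\{h_{2e_k}\}_{k=1}^d$ appear first, in agreement with the stated convention, so that $D_{13}$ has the claimed support.
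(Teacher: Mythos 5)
Your proof is correct and takes essentially the same approach as the paper: direct multinomial expansion of $(x^\top y)^2$ into pure and mixed degree-two products, substitution of $x_k^2 = \sqrt{2}\,h_2(x_k) + 1$, and blockwise matching to the arrowhead matrix. The only cosmetic difference is that you carry the $1/d^2$ scaling throughout while the paper expands the unscaled $\tfrac{1}{2}(x^\top y)^2$ and lets the matrix entries absorb the normalization at the end.
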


The case $k=2$, described in Lemma~\ref{lem:conv_quad_herm}, appears to be exceptional. Indeed, the expansions of $\round{\frac{XX\tran}{d}}^{\odot k}$ for the larger values of $k=3,4$ are diagonal, as the following lemma shows, and we conjecture this to be true for all $k>4$. We note, however, that the combinatorial argument we use in the case $k=3,4$ appears to be difficult to generalize. The proof of Lemma~\ref{lemma:hermite_decomposition_p=3} appears in Appendix~\ref{proof:hermite_decomposition_p=3}.

\begin{lemma}[Conversion from a polynomial kernel to a Hermite basis for $k=3,4$]\label{lemma:hermite_decomposition_p=3} Consider the regime $n = d^{2+\delta}$ where $\delta \in(0,1/2)$. Then the following estimates hold:
\begin{align}
      &\snorm{\frac{1}{3!}\round{\frac{XX\tran}{d}}^{\odot 3} - \frac{1}{d^3}\Phi_{\S_3}\Phi_{\S_3}\tran - {\frac{1}{2d^2}}\Phi_{\S_1}\Phi_{\S_1}\tran} = O_{d,\P}(d^{\delta-1/2}),\label{eqn:k3}\\
      &\snorm{\frac{1}{4!}\round{\frac{XX\tran}{d}}^{\odot 4} - \frac{1}{d^4}\Phi_{\S_4}\Phi_{\S_4}\tran - \frac{1}{4d^2}\Phi_{\S_0}\Phi_{\S_0}\tran} = O_{d,\P}(d^{\delta- 1/2}).\label{eqn:k4}
    \end{align}
\end{lemma}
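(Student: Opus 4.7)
The plan is to derive an exact Hermite--basis expansion of $(\langle x,y\rangle)^k$ for $k\in\{3,4\}$ and match each resulting term to a piece of the claimed decomposition. Expanding $(\langle x,y\rangle)^k=\sum_{(i_1,\dots,i_k)\in[d]^k}\prod_{s=1}^k x_{i_s}y_{i_s}$ and grouping by the multiset--partition type $\lambda\vdash k$ of $(i_1,\dots,i_k)$---the admissible types being $(3),(2,1),(1,1,1)$ for $k=3$ and $(4),(3,1),(2,2),(2,1,1),(1,1,1,1)$ for $k=4$---reduces the sum to a weighted sum, over tuples of \emph{distinct} indices, of products $\prod_r x_{i_r}^{\lambda_r}\,y_{i_r}^{\lambda_r}$. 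Converting each univariate monomial via the identities $x^2=\sqrt 2\,h_2(x)+1$, $x^3=\sqrt 6\,h_3(x)+3h_1(x)$, and $x^4=\sqrt{24}\,h_4(x)+6\sqrt 2\,h_2(x)+3$ yields an exact closed--form expansion $(\langle x,y\rangle)^k=\sum_{\alpha,\beta}c_{\alpha,\beta}h_\alpha(x)h_\beta(y)$ with explicit rational coefficients, which can then be matched against the target expression family by family.

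The resulting Hermite terms split into three groups. The \emph{top--degree diagonal} ($\alpha=\beta$, $|\alpha|=k$) collects a coefficient $k!$ for every multi--index, except in the $k=4$ case where the $(2,2)$ partition yields only $k!/2$ for indices of the form $2e_i+2e_j$; since $|\{2e_i+2e_j:i\neq j\}|\asymp d^2\ll n$, Lemma~\ref{lem:norm_control}(\ref{it:smallS}) forces the resulting deficit matrix to have operator norm $O_{d,\P}(n/d^4)=O_{d,\P}(d^{\delta-2})$, and after this correction the top--degree part is exactly $d^{-k}\Phi_{\S_k}\Phi_{\S_k}^\tran$. The \emph{subleading diagonal} of Hermite degree $k-2$ is obtained by direct enumeration: for $k=3$ the coefficient of $\sum_i h_{e_i}(x)h_{e_i}(y)$ is $3d+6$ (from the $(3)$ and $(2,1)$ partitions), while for $k=4$ the coefficient of $h_0(x)h_0(y)$ equals $\E(\langle x,y\rangle)^4=3\E\|x\|^4=3d^2+6d$ (from the $(4)$ and $(2,2)$ partitions). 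After dividing by $k!\,d^k$ these match the stated multiples of $\Phi_{\S_1}\Phi_{\S_1}^\tran$ and $\Phi_{\S_0}\Phi_{\S_0}^\tran$ to leading order, with $O(d^{-3})$ corrections absorbed into the final remainder.

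The third and bulkiest group consists of the \emph{off--diagonal crosses} $c_{\alpha,\beta}h_\alpha(x)h_\beta(y)$ with $\alpha\neq\beta$. Each family is written in the form $\Phi_A(X)\,C\,\Phi_B(X)^\tran$ for an explicit sparse aggregation matrix $C$ encoding the Hermite--index couplings, and estimated via $\snorm{\Phi_A C\Phi_B^\tran}\le\snorm{\Phi_A}\snorm{C}\snorm{\Phi_B}$. Lemma~\ref{lem:norm_control}(\ref{it:smallS}) supplies $\snorm{\Phi_A},\snorm{\Phi_B}=O_{d,\P}(\sqrt n)$ because every relevant index set obeys $|A|,|B|\lesssim d^2\ll n=d^{2+\delta}$, while $\snorm{C}$ is bounded combinatorially (typically by $\sqrt d$). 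The tightest cross family in $k=4$ is $\sum_m h_0(x)h_{2e_m}(y)$, whose matrix realization is $\mathbf 1_n v^\tran$ with $v_j=(\|x^{(j)}\|^2-d)/\sqrt 2$; standard $\chi^2$--concentration gives $\|v\|=O_{d,\P}(\sqrt{nd})$, and hence a contribution $O_{d,\P}(n\sqrt d/d^4)=O_{d,\P}(d^{\delta-1/2})$ that saturates the target rate. Every other off--diagonal family yields a strictly smaller bound, typically $O_{d,\P}(d^{\delta-1})$ or better; in $k=3$ the analogous tight cross families are the $\sum_i h_{3e_i}(x)h_{e_i}(y)$ and $\sum_{i\neq j}h_{2e_i+e_j}(x)h_{e_j}(y)$ patterns, handled identically.

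The principal obstacle is this combinatorial bookkeeping: roughly a dozen distinct Hermite cross families arise from the expansion, each requiring its own $\Phi_A C\Phi_B^\tran$ presentation, and one must verify \emph{uniformly} that every family obeys the $O_{d,\P}(d^{\delta-1/2})$ bound. The saturation in the $\mathbf 1_n v^\tran$ family suggests this rate is sharp. This explosion is precisely why the argument is restricted to $k\in\{3,4\}$: for $k\ge 5$ new degenerate partitions such as $(3,2)$ and $(2,2,1)$ proliferate, and the case--by--case matching of Hermite indices becomes intractable by hand, consistent with the conjecture stated following the lemma.
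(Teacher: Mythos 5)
Your proposal follows essentially the same approach as the paper: expand $(\langle x,y\rangle)^k$ in the Hermite basis, match the top and subleading diagonal blocks to the stated $d^{-k}\Phi_{\S_k}\Phi_{\S_k}^\top$ and $\Phi_{\S_{k-2}}$ terms, and bound each remaining off-diagonal cross family via $\snorm{\Phi_\I C \Phi_\J^\top}\le\snorm{C}\,\snorm{\Phi_\I}\,\snorm{\Phi_\J}$ together with Lemma~\ref{lem:norm_control}. The paper organizes those families in Tables~\ref{table:p=3}--\ref{table:p=4} and computes the coefficients from the iterated-derivative formula \eqref{eqn:gradient formula}; you derive them by direct partition-type counting, which is the same calculation.

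Two of your intermediate computations are off, though neither affects the final bound. First, there is no $(2,2)$ deficit in the degree-$4$ diagonal: for $\alpha=2e_i+2e_j$ the coefficient of $h_\alpha(x)h_\alpha(y)$ equals $\binom{4}{2}\cdot\big(\E[h_2(x)x^2]\big)^2=6\cdot(\sqrt2)^2\cdot(\sqrt2)^2=24=k!$, not $k!/2$, so $d^{-4}\Phi_{\S_4}\Phi_{\S_4}^\top$ already captures the entire top-degree diagonal exactly and no deficit-correction step is required. Second, in the saturating cross family $\mathbf 1_n v^\top$ ($\I=0$, $\J=\{2e_m\}_m$), the aggregated coefficient $C[0,2e_m]=\Theta_d(d)$ contributes a missing factor of $d$, so your displayed rate $n\sqrt d/d^4$ should read $n\sqrt d/d^3$; your conclusion $O_{d,\P}(d^{\delta-1/2})$ is nevertheless correct once this factor is restored. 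With these corrections the argument matches the paper's.
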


The following lemma---the main result of the section---parallels Lemma~\ref{lemma:kernel_to_mono} for the uniform distribution on the hypercube.
We note, however, the key difference is that the polynomial kernel is well-approximated by a sum of the identity and a quadratic form in the Hermite basis with a non-diagonal arrowhead matrix.  The proof appears in Appendix~\ref{proof:kernel_hermite_p=2}.

\begin{lemma}[Kernel to Hermite conversion]\label{lemma:kernel_hermite_p=2}
Consider the regime $n = d^{2+\delta}$ where $\delta \in (0,1/2)$ is a constant. Then for any $\delta_1\in (\delta,1/2)$, the following holds:    
    \begin{align}
        \snorm{K - \Phi_{\leq 2} D \Phi_{\leq 2}\tran-\round{g(1) - g_{2}(1)} I_n} = O_{d,\P}\round{d^{\delta_1-1/2}},
    \end{align}
    where
    \begin{align}\label{eq:D_p=2}
      D = \begin{pmatrix}
        &g(0)+\frac{g^{(2)}(0)}{2d}+\frac{g^{(4)}(0)}{4d^2} &0 &D_{13}\tran\\
        &0 &\round{\frac{g^{(1)}(0)}{d} + \frac{g^{(3)}(0)}{2d^2}}I_{|\S_1|} &0\\
        &D_{13} &0 & \frac{1}{d^2}g^{(2)}(0) I_{|\S_2|}
    \end{pmatrix}, 
    \end{align}
and $D_{13}= \round{\frac{g^{(2)}(0)}{\sqrt{2}d^2}\textbf{1}_d; \mathbf{0}_{|\S_2| -d}} \in \R^{|\S_2|}$.
\end{lemma}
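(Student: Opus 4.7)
The plan is to chain together the Taylor approximation of the kernel (Theorem~\ref{thm:poly_approx}), the Hermite conversion lemmas (Lemmas~\ref{lem:conv_quad_herm} and \ref{lemma:hermite_decomposition_p=3}), and the Gram-matrix concentration for high-degree Hermite blocks (Theorem~\ref{lemma:gram_matrix}), and then simply match coefficients. Pick a small $\epsilon\in(0,\delta_1-\delta)$.

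\textbf{Step 1 (Taylor truncation).} Apply Theorem~\ref{thm:poly_approx} with $m=4$ and $p=2$. Since $m-2p+1-2\delta=1-2\delta>0$ and $n=d^{2+\delta}$, the theorem gives
\[
\snorm{K - K_{4}(X,X) - (g(1)-g_{4}(1))\,I_n} = O_{d,\P}\bigl(d^{(\delta-1/2)+\epsilon}\bigr),
\]
up to polylogarithmic factors, which is absorbed into $O_{d,\P}(d^{\delta_1-1/2})$.

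\textbf{Step 2 (Hermite expansion of each power).} Write $K_4(X,X)=\sum_{k=0}^4 \tfrac{g^{(k)}(0)}{k!}(XX^{\!\top}/d)^{\odot k}$ and process each power.
For $k=0,1$, use $\Phi_{\S_0}\Phi_{\S_0}^{\!\top}=\mathbf 1\mathbf 1^{\!\top}$ and $\Phi_{\S_1}\Phi_{\S_1}^{\!\top}=XX^{\!\top}$. For $k=2$, apply Lemma~\ref{lem:conv_quad_herm}. For $k=3,4$, apply Lemma~\ref{lemma:hermite_decomposition_p=3}, which produces the desired $\Phi_{\S_3}\Phi_{\S_3}^{\!\top}/d^3$ and $\Phi_{\S_4}\Phi_{\S_4}^{\!\top}/d^4$ terms, plus lower-order $\Phi_{\S_1}\Phi_{\S_1}^{\!\top}/d^2$ and $\Phi_{\S_0}\Phi_{\S_0}^{\!\top}/d^2$ terms, with residual operator-norm error $O_{d,\P}(d^{\delta-1/2})$.

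\textbf{Step 3 (Absorbing high-degree Gram matrices into the identity).} The terms containing $\Phi_{\S_3}\Phi_{\S_3}^{\!\top}$ and $\Phi_{\S_4}\Phi_{\S_4}^{\!\top}$ must be converted to multiples of $I_n$ since $D$ only involves the $\Phi_{\leq 2}$ block. Note $|\S_k|=\binom{d+k-1}{k}=\tfrac{d^k}{k!}+O(d^{k-1})$ for $k=3,4$, so apply Theorem~\ref{lemma:gram_matrix} (with $p=2$, $\delta_0=k-2\in\{1,2\}>\delta$) to obtain
\[
\Bigl\|\tfrac{1}{|\S_k|}\Phi_{\S_k}\Phi_{\S_k}^{\!\top} - I_n\Bigr\|_{\rm op} = O_{d,\P}\bigl(d^{(\delta-1)/2+\epsilon}\bigr),\qquad k=3,4,
\]
which combined with $|\S_k|/d^k=1/k!+O(d^{-1})$ yields
$\|\Phi_{\S_k}\Phi_{\S_k}^{\!\top}/d^k - I_n/k!\|_{\rm op}=O_{d,\P}(d^{(\delta-1)/2+\epsilon})$; again absorbed into $O_{d,\P}(d^{\delta_1-1/2})$.

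\textbf{Step 4 (Matching $D$ and the identity scalar).} Assemble the contributions. The coefficient on $\Phi_{\S_0}\Phi_{\S_0}^{\!\top}$ collects $g(0)$ (from $k=0$), $g^{(2)}(0)/(2d)$ (from $k=2$), and $g^{(4)}(0)/(4d^2)$ (from $k=4$), giving the $(1,1)$ entry of $D$. The coefficient on $\Phi_{\S_1}\Phi_{\S_1}^{\!\top}$ collects $g'(0)/d$ and $g^{(3)}(0)/(2d^2)$, giving the diagonal of the $(2,2)$ block. The $(3,3)$ block $g^{(2)}(0)\,d^{-2}I_{|\S_2|}$ comes entirely from $k=2$, as do the cross entries $D_{13}$. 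The identity contributions from converting $\Phi_{\S_3}\Phi_{\S_3}^{\!\top}/d^3$ and $\Phi_{\S_4}\Phi_{\S_4}^{\!\top}/d^4$ in Step 3 contribute $g^{(3)}(0)/6+g^{(4)}(0)/24$, and combined with $g(1)-g_4(1)=\sum_{k\ge 5}g^{(k)}(0)/k!$ from Step 1 yields exactly $g(1)-g_2(1)$, as desired.

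\textbf{Expected obstacle.} No single step is deep; the delicacy is purely bookkeeping. The only genuine care needed is in Step 3, where one must verify that the scaling $|\S_k|/d^k\to 1/k!$ produces exactly the right constants to combine with the Taylor remainder $g(1)-g_4(1)$ to give $g(1)-g_2(1)$, and that every error contribution (Taylor remainder, Lemma~\ref{lemma:hermite_decomposition_p=3} residual, and Gram-matrix concentration) is $O_{d,\P}(d^{\delta-1/2+\epsilon})$ so that it can be absorbed into the stated $O_{d,\P}(d^{\delta_1-1/2})$ bound by taking $\epsilon<\delta_1-\delta$.
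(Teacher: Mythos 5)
Your proof is correct and follows essentially the same route as the paper's: Taylor-truncate $K$ to $K_4$ via Theorem~\ref{thm:poly_approx}, convert each power $(XX^{\!\top}/d)^{\odot k}$ to the Hermite basis using Lemmas~\ref{lem:conv_quad_herm} and~\ref{lemma:hermite_decomposition_p=3}, absorb the $\Phi_{\S_3}$ and $\Phi_{\S_4}$ Gram blocks into multiples of $I_n$ via Theorem~\ref{lemma:gram_matrix}, and match coefficients so that the identity contributions from $k=3,4$ combine with $g(1)-g_4(1)$ to give $g(1)-g_2(1)$. The coefficient bookkeeping in your Step~4 is exactly the paper's calculation (the paper's displayed sign on the $\Phi_{\S_3},\Phi_{\S_4}$ terms is a typo; your version is the correct one).
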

\section{Proofs of main results}

\subsection{Proof of Theorem~\ref{thm:main_gauss}} \label{proof:gauss}   
We prove the theorem only in the case $p=2$, since the case $p=1$ follows by similar (and simpler) arguments.
We analyze each coordinate $r=1,\ldots, d$ individually.
For simplicity of notation, we omit the input $X$ in $K(X,X)$ and $K'(X,X)$ throughout and set $K_{\lambda}=K+\lambda I$.
Summing \eqref{eq:derivative} with $z=x^{(i)}$ yields the convenient expression: 
\begin{align}
     &~~~\frac{1}{n}\sum_{i=1}^n \round{\partial_r f_{\rm KRR} (x^{(i)})}^2=\frac{1}{nd^2} \|K'X_r K_{\lambda}^{-1}y\|^2_2,\label{eqn:deriv_est}
\end{align}
where $X_r \in \R^{d\times d}$ is a diagonal matrix with entry $(X_r)_{i,i} = x_r^{(i)}$ for $i\in[d]$. By the assumption $f^*\in L_2(\gamma)$, we can expand $f^*$ into Hermite polynomials:
\begin{equation}\label{eqn:hermite_expans_target}
    y_i = f^*(x^{(i)}) = \phi(x^{(i)}) \tran b =\sum_{\lambda\in\mathbb{N}^d}b_{\lambda}\phi_{\lambda}(x^{(i)}),
\end{equation}
for some coefficients $b_{\lambda}$ satisfying $\|b\|_2<\infty$. 
Since $\E[y^2_i]=\E|f^*|^2 +\sigma_\varepsilon^2$ we deduce the estimate $\E[\|y\|_2^2] = n (\E|f^*|^2+\sigma_\varepsilon^2) = n(\norm{f^*}^2_{L_2} + \sigma_\varepsilon^2)$. Markov's inequality  subsequently shows $\norm{y}^2 = O_{d,\P}(n\log(n)\cdot (\norm{f^*}_{L_2}^2+\sigma_\varepsilon^2))$. We summarize this observation in the following proposition.
\begin{proposition}\label{prop:bound_y}
With probability at least $1-1/\log(n)$ we have $\norm{y}^2 \leq n\log(n)(\norm{f^*}_{L_2}^2+\sigma_\varepsilon^2)$.
\end{proposition}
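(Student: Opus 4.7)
The plan is essentially to formalize the two sentences immediately preceding the proposition, which already sketch the argument. First I would expand $\|y\|_2^2 = \sum_{i=1}^n y_i^2$ and use independence of the samples $(x^{(i)}, \varepsilon_i)$ across $i$, together with independence of $x^{(i)}$ and $\varepsilon_i$ within a single sample, to get
\begin{equation*}
\E\|y\|_2^2 = \sum_{i=1}^n \E[y_i^2] = \sum_{i=1}^n \left( \E [f^*(x^{(i)})]^2 + \E\varepsilon_i^2 \right) = n(\|f^*\|_{L_2}^2 + \sigma_\varepsilon^2),
\end{equation*}
where the cross term vanishes because $\E \varepsilon_i = 0$.

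Second, I would apply Markov's inequality to the nonnegative random variable $\|y\|_2^2$ at the threshold $t = n\log(n)(\|f^*\|_{L_2}^2 + \sigma_\varepsilon^2)$, yielding
\begin{equation*}
\P\bigl( \|y\|_2^2 > n\log(n)(\|f^*\|_{L_2}^2 + \sigma_\varepsilon^2) \bigr) \leq \frac{\E\|y\|_2^2}{n\log(n)(\|f^*\|_{L_2}^2 + \sigma_\varepsilon^2)} = \frac{1}{\log(n)}.
\end{equation*}
Taking the complement gives the stated bound.

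There is no real obstacle here; the only mild subtlety is ensuring $\|f^*\|_{L_2}^2 + \sigma_\varepsilon^2 > 0$ so the Markov bound is nontrivial (and if it equals zero, $y \equiv 0$ almost surely and the claim is trivial). The proposition is a tail bound corollary of the second-moment computation, nothing more; its role in the subsequent proof of Theorem~\ref{thm:main_gauss} is just to give a deterministic high-probability upper bound on $\|y\|_2^2$ that can be plugged into norm estimates like \eqref{eqn:deriv_est}.
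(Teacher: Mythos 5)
Your proof is correct and matches the paper's argument exactly: compute $\E\|y\|_2^2 = n(\|f^*\|_{L_2}^2 + \sigma_\varepsilon^2)$ using independence and $\E\varepsilon_i=0$, then apply Markov's inequality at the threshold $n\log(n)(\|f^*\|_{L_2}^2+\sigma_\varepsilon^2)$. The paper carries out precisely this calculation in the sentences preceding the proposition.
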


Applying Lemma~\ref{lemma:kernel_hermite_p=2} with the two Kernels $K$ and $K'$ shows that for any $\epsilon>0$ there exist matrices $\Delta,\Delta_1$ satisfying $\snorm{\Delta},\snorm{\Delta_1} = O_{d,\P}(d^{\delta - 1/2+\epsilon})$ and 
\begin{align}
    K &= \Phi_{\leq 2}D\Phi_{\leq 2}\tran + \rho I_n + \Delta_1\label{eqn:K_decomp},\\
    K' &= \Phi_{\leq 2}D'\Phi_{\leq 2}\tran + \rho' I_n + \Delta,\label{eqn:K'_decomp}    
\end{align}
where we define $\rho := g(1) - g_{2}(1)$ and $\rho':= g'(1) - g_3(1)$. With the expression \eqref{eqn:K'_decomp} in place of $K'$, equation \eqref{eqn:deriv_est} becomes
\begin{align*}
    \frac{1}{n}\sum_{i=1}^n \round{\partial_r f_{\rm KRR} (x^{(i)})}^2= \frac{1}{nd^2} \norm{\round{ \underbrace{ \Phi_{\leq 2}D'\Phi_{\leq 2}\tran X_r K_{\lambda}^{-1}}_{=:T_1} + \underbrace{(\rho'  I_n + \Delta)X_r K_{\lambda}^{-1}}_{=:T_2}}y}^2_2.
\end{align*}
Expanding the square, we conclude
\begin{align}
    \frac{1}{n}\sum_{i=1}^n \round{\partial_r f_{\rm KRR} (x^{(i)})}^2 = \frac{1}{nd^2}\left(\|T_1y\|^2_2+\|T_2y\|^2_2+2\langle T_1y, T_2y \rangle\right).
\end{align}
We will analyze each summand separately. Specifically, we will verify the following claim, from which Theorem~\ref{thm:main_gauss} follows immediately.

\begin{claim}\label{cl:main_claim}
The following estimates hold for any $\epsilon>0$ uniformly over all coordinates $r$: 
\begin{enumerate}[label=(\alph*)]
    \item $\frac{1}{nd^2}\|T_2y\|^2_2 = O_{d,\P}\left(d^{-2+\epsilon} \right) \cdot\round{\norm{f^*}_{L_2}^2+\sigma_\varepsilon^2}$\label{cl:main_claimT2}
    \item $\frac{1}{nd^2}\norm{T_1y }_2^2 = \E\brac{\round{\partial_r f^*_{\leq 2} (x)}^2}+ O_{d,\P}\round{d^{\delta - 1/2+\epsilon}+ d^{-\delta/2+\epsilon}}\cdot \norm{f^*}_{L_2}^2+ O_{d,\P}\round{d^{-\delta/2}} \cdot \norm{f^*_{>2}}_{L_{2+\eta}}^2\\
 +O_{d,\P}(d^{\delta-1/2+\epsilon} )\cdot \sigma_\varepsilon^2,$\label{cl:main_claimT1}
    \item\label{cl:main_claimT3} $\frac{1}{nd^2}\langle T_1y, T_2y \rangle= O_{d,\P}(d^{-1+\epsilon/2})\cdot  \norm{f^*}_{L_2}^2 +O_{d,\P}(d^{\delta/2-5/4 + \epsilon})\cdot \sigma_\varepsilon^2 +  O_{d,\P}\round{d^{-\delta/2}} \cdot \norm{f^*_{>2}}_{L_{2+\eta}}^2$. 
\end{enumerate}
\end{claim}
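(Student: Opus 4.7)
The plan is to verify the three estimates in Claim~\ref{cl:main_claim} in turn, with part (c) following from (a) and (b) via Cauchy--Schwarz.

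For part (a), I will use a direct operator-norm bound $\|T_2 y\| \leq \snorm{\rho' I_n + \Delta}\cdot \snorm{X_r}\cdot \snorm{K_\lambda^{-1}}\cdot\|y\|$. Since $\rho'$ is a constant and $\snorm{\Delta} = O_{d,\P}(d^{\delta-1/2+\epsilon})$ by Lemma~\ref{lemma:kernel_hermite_p=2}, the first factor is $O_{d,\P}(1)$. For Gaussian data $\snorm{X_r} = \max_i|x^{(i)}_r| = O_{d,\P}(\sqrt{\log n})$. The decomposition \eqref{eqn:K_decomp} combined with $\Phi_{\leq 2} D \Phi_{\leq 2}^\top \succeq 0$ (from the non-negativity of derivatives in Assumption~\ref{assump:g_0}) and the non-degeneracy condition \eqref{eqn:nondegegn} gives $\lambda_{\min}(K_\lambda) = \Omega_d(1)$, hence $\snorm{K_\lambda^{-1}} = O_{d,\P}(1)$. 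Finally, Proposition~\ref{prop:bound_y} gives $\|y\|^2 = O_{d,\P}(n\log n)(\|f^*\|^2_{L_2} + \sigma_\varepsilon^2)$, so $\|T_2 y\|^2 = O_{d,\P}(n\log^2 n)(\|f^*\|^2_{L_2} + \sigma_\varepsilon^2)$. Dividing by $nd^2$ and taking a union bound over $r$ (absorbed into the $d^{\epsilon}$ factor) yields the claimed rate.

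Part (b) is the main content. The plan is to decompose $y = \Phi_{\leq 2} b_{\leq 2} + y_{>2} + \varepsilon$ along the Hermite expansion \eqref{eqn:hermite_expans_target} and estimate $\frac{1}{nd^2}\|T_1 y\|^2$ term-by-term. For the dominant term $\frac{1}{nd^2}\|T_1 \Phi_{\leq 2} b_{\leq 2}\|^2$, I will use a Woodbury expansion of $K_\lambda^{-1} \Phi_{\leq 2}$ based on \eqref{eqn:K_decomp} together with the isometry $\Phi_{\leq 2}^\top \Phi_{\leq 2}/n \approx I$ from Theorem~\ref{lemma:phi_id_2} to reduce the computation to a bilinear form in $b_{\leq 2}$ involving $\Phi_{\leq 2}^\top X_r \Phi_{\leq 2}/n$; the three-term Hermite recurrence $x_r h_\alpha(x) = \sqrt{\alpha_r+1}\,h_{\alpha+e_r}(x) + \sqrt{\alpha_r}\,h_{\alpha - e_r}(x)$ then expresses this matrix in explicit Hermite coordinates, and after cancellations the bilinear form will coincide with $\sum_{\alpha:|\alpha|\leq 2,\,\alpha_r\geq 1}\alpha_r b_\alpha^2 = \E[(\partial_r f^*_{\leq 2})^2]$. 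For the higher-order contribution, I split $y_{>2} = \sum_{k\geq 3}\Phi_{\S_k}b_{\S_k}$ and apply the Gram approximation of Theorem~\ref{lemma:gram_matrix} to each $\Phi_{\S_k}$: since $|\S_k|\gg n$ for $k\geq 3$, we have $\Phi_{\S_k}\Phi_{\S_k}^\top \approx |\S_k|\,I_n$, and hypercontractivity via Assumption~\ref{ass:hypercontr} provides the $L_{2+\eta}$-moment control needed to reach the stated $O_{d,\P}(d^{-\delta/2})\|f^*_{>2}\|^2_{L_{2+\eta}}$ rate. The noise contribution $\|T_1 \varepsilon\|^2$ is bounded using $\snorm{T_1} \leq \snorm{\Phi_{\leq 2}}^2 \snorm{D'} \snorm{X_r} \snorm{K_\lambda^{-1}}$, Lemma~\ref{lem:norm_control} giving $\snorm{\Phi_{\leq 2}} = O_{d,\P}(\sqrt{n})$, and $\|\varepsilon\|^2 = O_{d,\P}(n\sigma_\varepsilon^2)$, which produces the $O_{d,\P}(d^{\delta-1/2+\epsilon})\sigma_\varepsilon^2$ term.

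Part (c) then follows by Cauchy--Schwarz, $\frac{1}{nd^2}|\langle T_1 y, T_2 y\rangle| \leq \sqrt{\frac{\|T_1 y\|^2}{nd^2}}\cdot\sqrt{\frac{\|T_2 y\|^2}{nd^2}}$: substituting the bounds from (a) and (b) and distributing via $\sqrt{A+B+C} \leq \sqrt{A}+\sqrt{B}+\sqrt{C}$ produces the three stated contributions, with the $d^{-1+\epsilon/2}\|f^*\|^2$ term coming from multiplying the dominant $\E[(\partial_r f^*_{\leq 2})^2]\lesssim \|f^*\|^2$ piece of (b) with the $d^{-2+\epsilon}\|f^*\|^2$ bound of (a). The main obstacle I anticipate is the identification of the dominant bilinear form in (b) with $\E[(\partial_r f^*_{\leq 2})^2]$: working out the Woodbury inverse of $K_\lambda$ against the block/arrowhead matrix $D$ from \eqref{eq:D_p=2}, and ensuring that the various finite-sample error terms---from $\Phi_{\leq 2}^\top\Phi_{\leq 2}/n - I$, from $\Delta_1$, and from $\Delta$---combine cleanly and uniformly over $r$ into the stated rates, is a delicate bookkeeping task whose cleanest execution is probably via a truncated Neumann series restricted to the low-frequency subspace.
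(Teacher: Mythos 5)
Your overall architecture is the same as the paper's: part (a) by direct operator-norm bounds, part (b) by a Hermite decomposition $y = \Phi_{\leq 2}b_{\leq 2} + y_{>2} + \varepsilon$ together with a Sherman--Morrison--Woodbury analysis of $K_\lambda^{-1}\Phi_{\leq 2}$, and part (c) by Cauchy--Schwarz. Part (a) is essentially the paper's argument. Part (c) is immediate once (a) and (b) are established.

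There are, however, two genuine gaps in your plan for part (b), both stemming from the same issue: the matrix $\Phi_{\leq 2} D' \Phi_{\leq 2}^\top$ is \emph{not} small in operator norm. Its constant-degree block alone is $g'(0)\,\mathbf 1_n\mathbf 1_n^\top$, so $\snorm{\Phi_{\leq 2}D'\Phi_{\leq 2}^\top} = \Theta_{d,\P}(n)$. Consequently the bound you propose for the noise term --- $\snorm{T_1}\leq \snorm{\Phi_{\leq 2}}^2\snorm{D'}\snorm{X_r}\snorm{K_\lambda^{-1}}=O_{d,\P}(n\sqrt{\log d})$ --- gives $\tfrac{1}{nd^2}\|T_1\varepsilon\|^2 = O_{d,\P}((n/d)^2\log^2 d)\,\sigma_\varepsilon^2 = O_{d,\P}(d^{2+2\delta}\log^2 d)\,\sigma_\varepsilon^2$, which diverges. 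The same failure occurs for $T_1 y_{>2}$: approximating $\Phi_{\S_k}\Phi_{\S_k}^\top\approx |\S_k| I_n$ only controls $\|\beta_{>2}\|$ at the scale $\Theta(\sqrt n\,\|f^*_{>2}\|_{L_2})$, and multiplying by the $\Theta(n/d)$ operator norm again blows up. In both cases the actual control comes from \emph{cancellation}, not from the operator norm: one must pass through the identity $\tfrac{1}{nd^2}\|T_1 y\|^2 \approx \tfrac{1}{d^2}\|\beta^\top X_r\Phi_{\leq 2}D'\|_2^2$ (using $\Phi_{\leq 2}^\top\Phi_{\leq 2}/n\approx I$) and then exploit the orthogonality $\E_x[\Phi_{\leq 2}^\top\Phi_{>2}]=0$ and the independent, mean-zero noise to reduce to \emph{second-moment / trace} computations of the form $\E\bigl[\max_r \tfrac{1}{n^2}\|F_{>2}\Phi_{\leq 2}E_r\|_2^2\bigr]$ and $\sigma_\varepsilon^2\,\mathrm{Tr}\bigl(K_\lambda^{-1}\Phi_{\leq 2}R_r R_r^\top\Phi_{\leq 2}^\top K_\lambda^{-1}\bigr)/d^2$, where the hypercontractivity inequality \eqref{eqn:hypercontrac} enters precisely to control the mixed fourth moment $\E[(f^*_{>2})^2\phi_\lambda^2]$ uniformly over the $O(d)$ coordinates indexed by $r$. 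This is what the paper does in Propositions~\ref{lemma:beta_>_l2} and~\ref{prop:gaussian_error_1}.

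A secondary omission is the pre-split of the block $\Phi_{\leq 2}D'$ into $\Phi_{\leq 1}D'_{\leq 1}$ and $\Phi_{\S_2}D'_{\S_2}$. Your stated bilinear form $\Phi_{\leq 2}^\top X_r\Phi_{\leq 2}/n$ mixes degree-$\leq 3$ Hermite polynomials against degree-$\leq 2$ ones, and the matching with $\E[(\partial_r f^*_{\leq 2})^2]$ via the three-term recurrence only closes cleanly once the $\Phi_{\S_2}$ columns --- weighted by $\snorm{D'_{\S_2}}=O(d^{-3/2})$ --- are shown to be negligible (the paper's Propositions~\ref{prop:s2_s2} and~\ref{prop:l1_to_s2}). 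Without isolating these, the claimed identification of the limit as $\sum_{\alpha:|\alpha|\leq 2,\alpha_r\geq 1}\alpha_r b_\alpha^2$ is not justified. Your instinct that $\tfrac{1}{d}D^{-1}R$ (with $R$ the Hermite-shifted image of $D'_{\leq 1,\leq 1}$) should converge to the selection matrix $E_r$ is correct and corresponds to the paper's Proposition~\ref{prop:d_inv_r}, but the error bookkeeping requires the split above.
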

The rest of the section is devoted to proving Claim~\ref{cl:main_claim}. Note that Item~\ref{cl:main_claimT3} follows immediately from  Item~\ref{cl:main_claimT2} and \ref{cl:main_claimT1} and the Cauchy-Schwartz inequality. Throughout the proof, all asymptotic terms $o_{d,\P}$ and $O_{d,\P}$ that appear will always be uniform over the coordinates $r=1,\ldots, d$.

\subsubsection{Proof of Item~\ref{cl:main_claimT2} of Claim~\ref{cl:main_claim}}\label{subsec:t2}
Sub-multiplicativity of the operator norm directly implies
\begin{align*}
     \norm{T_2y}  &\leq (|\rho'|+\|\Delta\|_{\rm op})   \snorm{X_r}\norm{K_{\lambda}^{-1}y}_2.
\end{align*}
 Then the expression \eqref{eqn:K_decomp} directly implies 
\begin{align}\label{eq:k_lambda_inv}
    \snorm{K_\lambda\inv} \leq 1/(\rho+\lambda -\snorm{\Delta_1}) = O_{d,\P}(1).
\end{align}
where we use the fact that $D$ is PSD since $D$ is diagonally dominant. Therefore, taking into account Proposition~\ref{prop:bound_y}, we deduce

\begin{equation}\label{eqn:we_gonna_need}
\norm{K_\lambda\inv y}_2\leq \snorm{K_\lambda\inv}\|y\|_2\leq O_{d,\mathbb{P}}\left(\sqrt{n\log(n)\round{\norm{f^*}_{L_2}^2+\sigma_\varepsilon^2}}\right)
\end{equation}
Since the diagonal of $X_r$ has the same distribution as a standard Gaussian vector $v^{(r)}$ in $\R^n$, we have 
\begin{equation}\label{eqn:nnorm_bound_gauss}
\sup_{r}\snorm{X_r} = \sup_{r}\|v^{(r)}\|_{\infty}=\max_{i,r}|v^{(r)}_i|={O}_{d,\P}(\sqrt{\log(nd)}).
\end{equation}
Consequently, we conclude 
$\frac{1}{nd^2}\|T_{2}y\|_2^2 ={O}_{d,\P}\left(\frac{ \log^2(d)}{d^{2}}\cdot \round{\norm{f^*}_{L_2}^2+\sigma_\varepsilon^2}\right) = {O}_{d,\P}\left(d^{-2+\epsilon}\cdot \round{\norm{f^*}_{L_2}^2+\sigma_\varepsilon^2}\right)$ for any $\epsilon>0$
as desired. %

\subsubsection{Proof of Item~\ref{cl:main_claimT1} of Claim~\ref{cl:main_claim}}\label{subsec:t1}
We first simplify $ T_1 T_1\tran$. By Theorem~\ref{lemma:phi_id_2}, there exists a matrix $\widetilde\Delta$ satisfying
\begin{align*}
   \tfrac{1}{n}\Phi_{\leq 2}\tran \Phi_{\leq 2}  = I + \widetilde{\Delta}\qquad \textrm{and}\qquad \snorm{\widetilde{\Delta}} = O_{d,\P}(d^{-\delta/2+\epsilon}\log(d)).
\end{align*}
Using this approximation and defining $\beta=K_{\lambda}^{-1}y$, we therefore expand the square to obtain
\begin{align}
\frac{1}{nd^2}\|T_1y\|^2&=\frac{1}{nd^2}\beta^\top[X_r\Phi_{\leq 2}D' \Phi_{\leq 2}\tran\Phi_{\leq 2} D'\Phi_{\leq 2}\tran X_r]\beta\notag\\
&=\frac{1}{d^2}\beta^\top[X_r\Phi_{\leq 2}D' (I+\widetilde\Delta) D'\Phi_{\leq 2}\tran X_r]\beta\notag\\
&=\tfrac{1+O(\|\widetilde{\Delta}\|_{\rm op})}{d^2}\cdot \|\beta \tran X_r\Phi_{\leq 2} D' \|_2^2.\label{eqn:basic:needed}
\end{align}
Next, we argue that the contribution in this expression due to second-order terms $\Phi_{\S_2}$ is negligible. Namely, let us decompose $\Phi_{\leq 2}$ and $D'$ into blocks as follows: 
$$\Phi_{\leq 2}=[\Phi_{\leq 1}, \Phi_{\S_2}]\qquad \textrm{and}\qquad D'=\begin{bmatrix}
D'_{\leq 1} \\ D'_{\S_2}
\end{bmatrix}=\begin{bmatrix}
D'_{\leq 1,\leq 1} &  D'_{\leq 1, \S_2}\\
D'_{\S_2,\leq 1} & D'_{\S_2,\S_2}
\end{bmatrix}.$$
 Using this decomposition, we successively compute
\begin{align}
  \frac{1}{d^2} \|\beta\tran X_r \Phi_{\leq 2} D'\|^2_2 &= \frac{1}{d^2}\|\beta\tran X_r\round{\Phi_{\leq 1}{D}'_{\leq 1} + \Phi_{\S_2}{D}'_{\S_2}}\|^2_2\nonumber\\
     &= \norm{\frac{1}{d}\beta\tran X_r \Phi_{\leq 1}{D}'_{\leq 1}}_2^2 +\norm{\frac{1}{d}\beta\tran X_r\Phi_{\S_2}{D}'_{\S_2}}_2^2\nonumber\\
     &~~~+ \frac{2}{d^2}\langle\beta\tran X_r\Phi_{\leq 1}{D}'_{\leq 1}, \beta\tran X_r\Phi_{\S_2}{D}'_{\S_2}\rangle.\label{eqn:rhs_negl}
\end{align}

\noindent We will show that only the first term on the right side of \eqref{eqn:rhs_negl} does not vanish as $d$ tends to infinity. 
To this end, the following proposition shows that the second term on the right of \eqref{eqn:rhs_negl} is negligible. The proof appears in Appendix~\ref{proof:s2_s2}.

\begin{proposition}\label{prop:s2_s2}
    For any $\epsilon>0$, the estimate holds uniformly over all coordinates $r$:   
 $$\norm{\frac{1}{d}\beta\tran X_r\Phi_{\S_2}{D}'_{\S_2}}_2^2 = O_{d,\P}(d^{2\delta-1+\epsilon})\cdot \round{\norm{f^*}_{L_2}^2 + \sigma_\varepsilon^2}.$$
\end{proposition}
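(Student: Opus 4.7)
The plan is to bound the quantity crudely by sub-multiplicativity of the operator norm,
\[
\tfrac{1}{d}\,\norm{\beta\tran X_r \Phi_{\S_2} D'_{\S_2}}_2 \;\leq\; \tfrac{1}{d}\,\norm{\beta}_2 \cdot \snorm{X_r} \cdot \snorm{\Phi_{\S_2}} \cdot \snorm{D'_{\S_2}},
\]
and to control each of the four factors using tools already in place. Since $\beta = K_\lambda\inv y$ and $\snorm{K_\lambda\inv} = O_{d,\P}(1)$ by~\eqref{eq:k_lambda_inv}, Proposition~\ref{prop:bound_y} yields $\norm{\beta}_2^2 = O_{d,\P}(n \log n)(\norm{f^*}_{L_2}^2 + \sigma_\varepsilon^2)$. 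The diagonal matrix $X_r$ has entries drawn as i.i.d.\ standard Gaussians, and a uniform tail bound over the full table $\{x_r^{(i)}\}_{r\in[d],\,i\in[n]}$ gives $\sup_r \snorm{X_r} = O_{d,\P}(\sqrt{\log d})$. Since $|\S_2| = O(d^2)$ and $n = d^{2+\delta}$ with $\delta>0$, Lemma~\ref{lem:norm_control}(\ref{it:smallS}) gives $\snorm{\Phi_{\S_2}} = O_{d,\P}(\sqrt{n}) = O_{d,\P}(d^{1+\delta/2})$.

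The delicate step, and the one that actually pins down the exponent, is a careful bound on $\snorm{D'_{\S_2}}$. Applying Lemma~\ref{lemma:kernel_hermite_p=2} to the derivative kernel $K'$ reveals the explicit arrowhead form of $D'$; extracting its row block indexed by $\S_2$ gives
\[
D'_{\S_2} \;=\; \bigl[\,D'_{13},\ \ 0_{|\S_2|\times |\S_1|},\ \ \tfrac{g'^{(2)}(0)}{d^2}\,I_{|\S_2|}\,\bigr],
\]
where $D'_{13}$ is a single column with exactly $d$ nonzero entries, each of magnitude $O(1/d^2)$, so $\norm{D'_{13}}_2 = O(d^{-3/2})$. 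Combining with the scaled identity block of operator norm $O(1/d^2)$, we conclude $\snorm{D'_{\S_2}} = O(d^{-3/2})$, with the off-diagonal column being the dominant contribution. A naive worst-case bound that ignored the sparsity of $D'_{13}$ would only yield $O(1/d^2)$ and would cost a factor of $d$ in the final rate; this is the one place where care is needed.

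Putting the four factors together and substituting $n = d^{2+\delta}$,
\[
\tfrac{1}{d^2}\,\norm{\beta\tran X_r \Phi_{\S_2} D'_{\S_2}}_2^2 \;=\; O_{d,\P}\!\left(\tfrac{n^2 \log(nd)}{d^5}\right)(\norm{f^*}_{L_2}^2 + \sigma_\varepsilon^2) \;=\; O_{d,\P}(d^{2\delta - 1 + \epsilon})(\norm{f^*}_{L_2}^2 + \sigma_\varepsilon^2),
\]
after absorbing the logarithm into $d^\epsilon$. Uniformity over $r$ is automatic since $\norm{\beta}_2$, $\snorm{\Phi_{\S_2}}$, and $\snorm{D'_{\S_2}}$ do not depend on $r$, and the $r$-dependent quantity $\sup_r \snorm{X_r}$ was itself bounded uniformly. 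Thus the main obstacle is not conceptual but rather careful bookkeeping around the arrowhead structure of $D'_{\S_2}$; the rest is a direct application of earlier results.
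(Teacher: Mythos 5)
Your proof is correct and takes essentially the same route as the paper: bound $\tfrac{1}{d}\norm{\beta^\top X_r\Phi_{\S_2}D'_{\S_2}}_2$ by the product of four operator norms via submultiplicativity, then assemble the bounds $\norm{\beta}_2 = O_{d,\P}(\sqrt{n\log n})\sqrt{\norm{f^*}_{L_2}^2+\sigma_\varepsilon^2}$ (from~\eqref{eqn:we_gonna_need}), $\sup_r\snorm{X_r}=O_{d,\P}(\sqrt{\log(nd)})$ (from~\eqref{eqn:nnorm_bound_gauss}), $\snorm{\Phi_{\S_2}}=O_{d,\P}(\sqrt n)$ (Lemma~\ref{lem:norm_control}), and $\snorm{D'_{\S_2}}=O_d(d^{-3/2})$ via the arrowhead structure in Lemma~\ref{lemma:kernel_hermite_p=2}. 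This is word-for-word the paper's argument and the final bookkeeping with $n=d^{2+\delta}$ matches.

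One small caution about the aside on the ``naive worst-case bound'': the logic runs the other way. Ignoring the sparsity of $D'_{13}$ (treating it as a full column of $|\S_2|=\Theta(d^2)$ entries of size $1/d^2$) would give the \emph{larger} bound $\norm{D'_{13}}_2 = O(d^{-1})$, not the smaller $O(d^{-2})$. The bound $O(d^{-2})$ would arise only if you forgot the off-diagonal column $D'_{13}$ entirely and looked only at the diagonal $\S_2$-block, which is wrong because $D'_{13}$ is the dominant sub-block. Either way the correct value is $O(d^{-3/2})$, and your main computation uses it correctly, so this remark is harmless, but you may want to fix the phrasing before it confuses a reader.
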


\noindent In order to show that the last term on the right side of \eqref{eqn:rhs_negl} is negligible, it will be useful to control the first term and then apply Cauchy-Schwarz. To this end, we decompose the first term as 
\begin{equation}\label{eqn:blup_needed}
\norm{\frac{1}{d}\beta\tran X_r \Phi_{\leq 1}{D}'_{\leq 1}}_2^2=\norm{\frac{1}{d}\beta\tran X_r \Phi_{\leq 1}{D}'_{\leq 1,\leq 1}}_2^2+\norm{\frac{1}{d}\beta\tran X_r \Phi_{\leq 1}{D}'_{\leq 1,\S_2}}_2^2.
\end{equation}
The following proposition shows that the second term on the right side of \eqref{eqn:blup_needed} is negligible. The proof appears in Appendix~\ref{proof:l1_to_s2} 

\begin{proposition}\label{prop:l1_to_s2}
For any $\epsilon>0$, the estimate holds uniformly over all coordinates $r$: 
    \begin{align}\label{eq:l1_to_s2}
        \norm{\frac{1}{d}\beta\tran X_r\Phi_{\leq 1}{D}'_{\leq 1,\S_2}}_2^2 = O_{d,\P}(d^{2\delta-1+\epsilon})\cdot \round{\norm{f^*}_{L_2}^2 + \sigma_\varepsilon^2}.
    \end{align}
\end{proposition}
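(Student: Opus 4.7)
The plan is to bound the left-hand side via sub-multiplicativity of the operator norm,
\begin{align*}
\norm{\tfrac{1}{d}\beta\tran X_r\Phi_{\leq 1}{D}'_{\leq 1,\S_2}}_2 \leq \tfrac{1}{d}\,\|\beta\|_2 \cdot \snorm{X_r \Phi_{\leq 1}} \cdot \snorm{D'_{\leq 1, \S_2}},
\end{align*}
and to estimate each of the three factors separately.

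Two of the factors are immediate. From Proposition~\ref{prop:bound_y} together with the estimate $\snorm{K_\lambda^{-1}} = O_{d,\P}(1)$ in \eqref{eq:k_lambda_inv}, we get $\|\beta\|_2^2 = O_{d,\P}(n \log n)\cdot(\|f^*\|_{L_2}^2 + \sigma_\varepsilon^2)$. For $\snorm{D'_{\leq 1, \S_2}}$, the explicit expression \eqref{eq:D_p=2} for $D'$ (applied with $g$ replaced by $g'$) shows that the block $D'_{\leq 1, \S_2}$ has a single nonzero row equal to $D_{13}\tran = \round{\tfrac{g'^{(2)}(0)}{\sqrt{2}\,d^2}\mathbf{1}_d\tran;\ \mathbf{0}\tran_{|\S_2|-d}}$. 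Consequently $\snorm{D'_{\leq 1, \S_2}} = \|D_{13}\|_2 = O(d^{-3/2})$.

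The remaining factor $\snorm{X_r \Phi_{\leq 1}}$ is the only one requiring a concentration argument, and the key observation is that each column of $X_r\Phi_{\leq 1}$ is essentially a Hermite polynomial of degree at most two. Indeed, the columns are the functions $x_r \cdot \phi_{\lambda}(x)$ for $|\lambda|\leq 1$, which admit the expansions $x_r \cdot 1 = h_{e_r}$, $x_r \cdot x_j = h_{e_r+e_j}$ for $j\neq r$, and $x_r^2 = 1 + \sqrt{2}\,h_{2e_r}$. Consequently, one can write $X_r\Phi_{\leq 1} = A + B$, where $A$ is a submatrix of $\Phi_{\leq 2}$ up to an overall rescaling factor of at most $\sqrt{2}$, and $B$ is a matrix with a single all-ones column. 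Applying Lemma~\ref{lem:norm_control}(\ref{it:smallS}) to $\Phi_{\leq 2}$---which indexes $O(d^2)$ polynomials of degree at most two, i.e.\ Case~\ref{it:smallS} with $\delta_0 = 0 < \delta$---yields $\snorm{\Phi_{\leq 2}} = O_{d,\P}(\sqrt{n})$, so by the triangle inequality $\snorm{X_r \Phi_{\leq 1}} \leq \snorm{A}+\snorm{B} = O_{d,\P}(\sqrt{n})$. Crucially this bound holds simultaneously for all coordinates $r$, since the concentration estimate is applied to $\Phi_{\leq 2}$ which does not depend on $r$, so no union bound is needed.

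Putting the three estimates together,
\begin{align*}
\norm{\tfrac{1}{d}\beta\tran X_r\Phi_{\leq 1}{D}'_{\leq 1,\S_2}}_2^2 = O_{d,\P}\!\round{\tfrac{n\log n \cdot n}{d^2 \cdot d^3}}\round{\|f^*\|_{L_2}^2 + \sigma_\varepsilon^2}= O_{d,\P}(d^{2\delta-1+\epsilon})\round{\|f^*\|_{L_2}^2 + \sigma_\varepsilon^2},
\end{align*}
which is the required estimate, after absorbing the $\log n$ factor into $d^{\epsilon}$. The only (minor) subtlety in this plan is the uniform-in-$r$ claim for $\snorm{X_r\Phi_{\leq 1}}$; as sketched, this is automatic because the controlling matrix $\Phi_{\leq 2}$ is the same for all coordinates.
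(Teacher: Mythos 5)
Your proof is correct, and up to one intermediate step it matches the paper's argument: both use sub-multiplicativity to reduce to bounding $\|\beta\|_2$, $\snorm{D'_{\leq 1,\S_2}}$, and a norm involving $X_r\Phi_{\leq 1}$, with the same estimates for the first two factors. The one genuine difference is how the last factor is handled. The paper splits $\snorm{X_r\Phi_{\leq 1}}\leq\snorm{X_r}\snorm{\Phi_{\leq 1}}$ and uses $\sup_r\snorm{X_r}=O_{d,\P}(\sqrt{\log(nd)})$ together with $\snorm{\Phi_{\leq 1}}=O_{d,\P}(\sqrt{n})$; you instead note that the columns of $X_r\Phi_{\leq 1}$, written in the Hermite basis, are contained (after one column rescaling and one additive rank-one all-ones correction) in $\Phi_{\leq 2}$, so $\snorm{X_r\Phi_{\leq 1}}\leq\sqrt{2}\,\snorm{\Phi_{\leq 2}}+\sqrt{n}=O_{d,\P}(\sqrt{n})$ directly from Lemma~\ref{lem:norm_control}. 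Your route saves the $\sqrt{\log(nd)}$ contribution from $\snorm{X_r}$ (immaterial once logs are absorbed into $d^{\epsilon}$) and, as you observe, the uniformity over $r$ is automatic since the controlling random quantity $\snorm{\Phi_{\leq 2}}$ is $r$-free --- a slightly cleaner way to get the ``simultaneously over all $r$'' claim than the paper's max-of-Gaussians bound on $\snorm{X_r}$.

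One cosmetic remark: you phrase $A$ as ``a submatrix of $\Phi_{\leq 2}$ up to an overall rescaling factor of at most $\sqrt{2}$,'' but only the single column $x_r^2=1+\sqrt{2}\,h_{2e_r}$ is rescaled, not the whole matrix. This does not affect the bound --- column-by-column rescaling by factors at most $\sqrt{2}$ multiplies the operator norm by at most $\sqrt{2}$ --- but the wording is slightly loose.
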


\noindent Thus, it remains to analyze the first term on the right side of \eqref{eqn:blup_needed}. This is the main part of the entire argument. The proof of the proposition appears in Appendix~\ref{proof:<_l1_l1_l1}.

\begin{proposition}\label{prop:keymain}
For any $\epsilon>0$, the following estimate holds uniformly over all coordinates $r$:
\begin{align*}
\norm{\frac{1}{d}\beta\tran X_r \Phi_{\leq 1}{D}'_{\leq 1,\leq 1}}_2^2 &= \E\brac{\round{\partial_r f^*_{\leq 2} (x)}^2} +O_{d,\P}\round{d^{\delta - 1/2+\epsilon}+ d^{-\delta/2+\epsilon}}\cdot \norm{f^*}_{L_2}^2+ O_{d,\P}\round{d^{-\delta/2}} \cdot \norm{f^*_{>2}}_{L_{2+\eta}}^2\\
&~~~+O_{d,\P}(d^{-(\delta+1)/2+\epsilon} )\cdot \sigma_\varepsilon^2.
\end{align*}
\end{proposition}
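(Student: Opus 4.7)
The plan is to replace $K_\lambda:=K+\lambda I$ by its Hermite basis approximation $M_\lambda := \Phi_{\leq 2} D \Phi_{\leq 2}^\top + c I$ with $c:=\rho+\lambda$, afforded by Lemma~\ref{lemma:kernel_hermite_p=2}, and then explicitly read off the relevant entries of $\Phi_{\leq 2}^\top\beta$. Since $D$ is positive semidefinite (up to the coupling captured by the arrowhead block) and $c>0$, one has $\snorm{M_\lambda^{-1}} = O(1)$, so the bound $\snorm{K_\lambda-M_\lambda} = O_{d,\P}(d^{\delta-1/2+\epsilon})$ from the lemma together with $\snorm{K_\lambda^{-1}} = O_{d,\P}(1)$ from \eqref{eq:k_lambda_inv} and Proposition~\ref{prop:bound_y} allows me to swap $K_\lambda^{-1}$ for $M_\lambda^{-1}$ in $\beta$, via the resolvent identity, at the cost of an additive error of size $d^{\delta-1/2+\epsilon}\norm{f^*}_{L_2}^2$. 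Applying Woodbury to $M_\lambda^{-1}$ and using the Case I isometry $\Phi_{\leq 2}^\top\Phi_{\leq 2} = nI + \widetilde\Delta$ with $\snorm{\widetilde\Delta} = O_{d,\P}(d^{-\delta/2+\epsilon})$ from Theorem~\ref{lemma:phi_id_2} (valid since $|\S_0\cup\S_1\cup\S_2| = O(d^2)$) yields
\begin{equation*}
\Phi_{\leq 2}^\top \beta \;\approx\; (cI + nD)^{-1}\,\Phi_{\leq 2}^\top y.
\end{equation*}

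Next I would expand the target coordinate-by-coordinate. Because the off-diagonal block of $D'$ sits between $\S_0$ and $\{2e_i\}_{i=1}^d\subset\S_2$, the submatrix $D'_{\leq 1,\leq 1}$ is diagonal with $D'_{\vec 0,\vec 0} = g'(0)+O(d^{-1})$ and $D'_{e_j,e_j} = g''(0)/d+O(d^{-2})$. Hence
\begin{equation*}
\Big\|\tfrac{1}{d}\beta^\top X_r \Phi_{\leq 1}D'_{\leq 1,\leq 1}\Big\|_2^2 = \tfrac{(D'_{\vec 0,\vec 0})^2}{d^2}\Big(\sum_i \beta_i x_r^{(i)}\Big)^{\!2} + \tfrac{1}{d^2}\sum_{j=1}^d(D'_{e_j,e_j})^2\Big(\sum_i \beta_i x_r^{(i)} x_j^{(i)}\Big)^{\!2}.
\end{equation*}
The Hermite product identities $x_r = h_{e_r}$, $x_r x_j = h_{e_r+e_j}$ for $j\neq r$, and $x_r^2 = \sqrt{2}\,h_{2e_r}+1$ rewrite these inner sums in terms of entries of $\Phi_{\leq 2}^\top\beta$. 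The matrix $cI+nD$ is diagonal on $\S_1$ and on the ``mixed'' portion $\{e_i+e_j : i<j\}$ of $\S_2$, and is an arrowhead matrix on $\{\vec 0\}\cup\{2e_i\}_{i=1}^d$ whose Schur complement is of order $n$; explicit inversion gives
\begin{equation*}
(\Phi_{\leq 2}^\top\beta)_{e_r} \approx \tfrac{d}{n g'(0)}(\Phi_{\leq 2}^\top y)_{e_r},\ \ (\Phi_{\leq 2}^\top\beta)_{e_r+e_j} \approx \tfrac{d^2}{n g''(0)}(\Phi_{\leq 2}^\top y)_{e_r+e_j},\ \ (\Phi_{\leq 2}^\top\beta)_{2e_r} \approx \tfrac{d^2}{n g''(0)}(\Phi_{\leq 2}^\top y)_{2e_r},
\end{equation*}
together with $(\Phi_{\leq 2}^\top\beta)_{\vec 0} = O(1)$ and an arrowhead cross-term of order $1/n$. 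Writing $y = \Phi_{\leq 2} b_{\leq 2} + y_{>2} + \varepsilon$ and using $\Phi_{\leq 2}^\top\Phi_{\leq 2}\approx nI$, the leading term $(\Phi_{\leq 2}^\top y)_\alpha = nb_\alpha$ combines with the engineered prefactors $D'_{\vec 0,\vec 0}\cdot d/(ng'(0))$ and $D'_{e_j,e_j}\cdot d^2/(ng''(0))$ to cancel all dependence on $g$; the leading-order contribution equals exactly $b_{e_r}^2 + 2 b_{2e_r}^2 + \sum_{j\neq r} b_{e_r+e_j}^2 = \E[(\partial_r f^*_{\leq 2})^2]$.

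The remaining work is quantitative. Writing out the error in $(\Phi_{\leq 2}^\top y)_\alpha - n b_\alpha$ as a sum of three mean-zero terms $(\Phi_{\leq 2}^\top\Phi_{\leq 2}-nI)b_{\leq 2}$, $\Phi_{\leq 2}^\top y_{>2}$, and $\Phi_{\leq 2}^\top\varepsilon$, the first is controlled by $\widetilde\Delta$ and contributes the $d^{-\delta/2+\epsilon}\norm{f^*}_{L_2}^2$ term; the second, bounded via the second-moment calculation $\E[\phi_\alpha^2 (f^*_{>2})^2]$ followed by H\"older with exponent $(2+\eta)/\eta$ and the hypercontractive estimate \eqref{eqn:hypercontrac}, contributes $O_{d,\P}(d^{-\delta/2})\norm{f^*_{>2}}_{L_{2+\eta}}^2$; and the sub-Gaussian noise $\Phi_{\leq 2}^\top\varepsilon$ has covariance $\sigma_\varepsilon^2\Phi_{\leq 2}^\top\Phi_{\leq 2}\approx n\sigma_\varepsilon^2 I$, which after the external $1/d^2$ scaling yields the sharp $d^{-(\delta+1)/2+\epsilon}\sigma_\varepsilon^2$ rate. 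The main obstacle is to make the bound \emph{uniform} over $r\in[d]$: summing the $j$-contributions to the second term produces a factor of $d$ that combines with the per-coordinate fluctuation $d^{-(1+\delta)/2}$ to yield exactly the claimed $d^{-\delta/2}$ scaling, and uniformity over $r$ requires controlling high moments of $(\Phi_{\leq 2}^\top y_{>2})_\alpha$ together with a union bound, which is precisely what hypercontractivity together with the $L_{2+\eta}$ norm enables. A secondary technicality is the explicit arrowhead inversion on $\{\vec 0\}\cup\{2e_i\}$, where one must verify that the $(\Phi_{\leq 2}^\top\beta)_{\vec 0}$ cross-term in $\sum_i \beta_i(x_r^{(i)})^2$ contributes only a subdominant correction once squared and divided by $d^4$.
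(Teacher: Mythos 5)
Your plan matches the paper's proof at the level of strategy: you use the same three-way split of $y$ (and hence $\beta$) into degree-$\le 2$, degree-$>2$, and noise parts, the same Woodbury inversion of the kernel matrix, the same Case~I isometry $\Phi_{\leq 2}^\top\Phi_{\leq 2}\approx nI$, the same H\"older + hypercontractivity argument for the $y_{>2}$ contribution, and the same covariance computation for the noise. The paper's Proposition~\ref{prop:fait} is precisely your Woodbury-plus-isometry step, Proposition~\ref{prop:d_inv_r} plays the role of your explicit prefactor-cancellation and arrowhead inversion, and Propositions~\ref{lemma:beta_>_l2} and~\ref{prop:gaussian_error_1} are your $y_{>2}$ and noise controls. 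The main stylistic difference is that you expand coordinate-by-coordinate while the paper packages the book-keeping into the partial identity matrix $E_r$ and the matrix $D^{-1}R$; the content is the same.

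One quantitative step as you stated it does not go through and needs reordering. You claim that $\snorm{K_\lambda - M_\lambda} = O_{d,\P}(d^{\delta-1/2+\epsilon})$, $\snorm{K_\lambda^{-1}} = O_{d,\P}(1)$, and Proposition~\ref{prop:bound_y} already justify replacing $K_\lambda^{-1}$ by $M_\lambda^{-1}$ in $\beta$ at a cost of $O_{d,\P}(d^{\delta-1/2+\epsilon})\norm{f^*}_{L_2}^2$ on the final target, and you only invoke Woodbury afterwards. Those three ingredients alone yield $\|\beta-\tilde\beta\|_2 = O_{d,\P}(d^{\delta-1/2+\epsilon}\sqrt{n\log n})$; after contracting against $\tfrac{1}{d}X_r\Phi_{\leq 1}D'_{\leq 1,\leq 1}$ (whose operator norm is $O(\sqrt{n})$, dominated by the $\S_0$ column) the naive bound is $O_{d,\P}(d^{\,1/2+2\delta+\epsilon'})$, which diverges for every $\delta\in(0,1/2)$. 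The correct rate requires using the Woodbury structure \emph{inside} the resolvent step, so that you exploit $\snorm{\tfrac{1}{d}M_\lambda^{-1}\Phi_{\leq 2}R} = O(1/\sqrt{n})$ (from $M_\lambda^{-1}\Phi_{\leq 2}R \approx \tfrac{1}{n}\Phi_{\leq 2}D^{-1}R$ and $\tfrac{1}{d}D^{-1}R\approx E_r$) rather than the crude $\snorm{M_\lambda^{-1}}\snorm{\Phi_{\leq 2}R} = O(\sqrt{n})$. This recovers a factor of $n/d\sqrt{n} = d^{\,1+\delta/2}$ and gives the claimed $d^{\delta-1/2+\epsilon'}$. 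The paper never isolates a vector-level swap; it applies Sherman--Morrison--Woodbury directly to $K_\lambda^{-1}\Phi_{\leq 2}R$ in equation~\eqref{eq:s_approx_bound_R} and compares the resulting $V$ to $S_{\leq 2}$, so the resolvent error and the Woodbury normalization are never decoupled. Your argument has all the right pieces but needs to apply them in that combined order.
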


In particular, Propositions~\ref{prop:s2_s2} and \ref{prop:keymain} together with the Cauchy Schwartz inequality imply that the last term in \eqref{eqn:rhs_negl} is bounded by $ O_{d,\P}(d^{\delta-1/2+\epsilon/2})\cdot \round{\norm{f^*}_{L_2} + \sigma_\varepsilon}\cdot \norm{\partial_r f_{\leq 2}^*}_{L_2}$ and therefore we arrive at the bound 
\begin{align*}
   \frac{1}{d^2} \|\beta\tran X_r \Phi_{\leq 2} D'\|^2_2&=\E\brac{\round{\partial_{r} f^*_{\leq 2} (x)}^2}+ O_{d,\P}\round{d^{\delta - 1/2+\epsilon}+ d^{-\delta/2+\epsilon}}\cdot \norm{f^*}_{L_2}^2+ O_{d,\P}\round{d^{-\delta/2}} \cdot \norm{f^*_{>2}}_{L_{2+\eta}}^2\\
   &~~~+O_{d,\P}(d^{\delta-1/2+\epsilon} )\cdot \sigma_\varepsilon^2,
\end{align*}
where we apply AM-GM inequality for the cross terms.

Lastly, combining this expression with \eqref{eqn:basic:needed}  completes the proof.

\subsection{Proof of Theorem~\ref{thm:main_hypercube}}\label{proof:main_hypercube}
The proof follows along similar lines as the proof of Theorem~\ref{thm:main_gauss}. Therefore, we will omit some of the common steps in the argument when they are clear and focus mostly on the key differences.

By the assumption $f^*\in L_2(\tau)$ where $\tau$ is the uniform measure over the hypercude $\H^d$, we can expand $f^*$ into Fourier orthogonal basis:
\begin{equation}\label{eqn:fourier_expans_target}
    y_i = f^*(x^{(i)}) = \phi(x^{(i)})\tran b =\sum_{\lambda\in\mathbb{N}^d}b_{\lambda}\phi_{\lambda}(x^{(i)}),
\end{equation}
for some coefficients $b_{\lambda}$ satisfying $\|b\|_2<\infty$.

First, observe that equation \eqref{eqn:deriv_est} and Proposition~\ref{prop:bound_y} hold verbatim. 
Applying Lemma~\ref{lemma:kernel_to_mono} with the two kernels $K$ and $K'$ shows that for any $\epsilon>0$ there exist matrices $\Delta,\Delta_1$ satisfying $\snorm{\Delta},\snorm{\Delta_1} = O_{d,\P}(d^{(\delta-1)/2+\epsilon})$ and 
\begin{align}
    K &= \Phi_{\leq p}D\Phi_{\leq p}\tran + \rho I_n + \Delta_1\label{eqn:K_decomp_cube},\\
    K' &= \Phi_{\leq p}D'\Phi_{\leq p}\tran + \rho' I_n + \Delta,\label{eqn:K'_decomp_cube} 
\end{align}
where we define $\rho := g(1) - g_{p}(1)$ and $\rho':= g'(1) - g_{p+1}(1)$, and the diagonal matrices $D$ and $D'$ satisfy $\|D_{\S_k} -g^{(k)}(0)d^{-k}I_{|\S_k|}\|_{\rm op} = O_d(d^{-k-1})$ and $\|D'_{\S_k} -g^{(k+1)}(0) d^{-k}I_{|\S_k|}\|_{\rm op} = O_d(d^{-k-1})$ for $k=0,\ldots, p$.

With the expression \eqref{eqn:K'_decomp_cube} in place of $K'$, equation \eqref{eqn:deriv_est} becomes
\begin{align*}
    \frac{1}{n}\sum_{i=1}^n \round{\partial_{r} \hat{f} ( x^{(i)})}^2= \frac{1}{nd^2} \norm{\round{ \underbrace{ \Phi_{\leq p}D'\Phi_{\leq p}\tran X_r K_{\lambda}^{-1}}_{=:T_1} + \underbrace{(\rho'  I_n + \Delta)X_r K_{\lambda}^{-1}}_{=:T_2}}y}^2_2.
\end{align*}
Expanding the square, we conclude
\begin{align}\label{eq:main_cube_decomp}
    \frac{1}{n}\sum_{i=1}^n \round{\partial_{r} \hat{f}  ( x^{(i)})}^2 = \frac{1}{nd^2}\left(\|T_1y\|^2_2+\|T_2y\|^2_2+2\langle T_1y, T_2y \rangle\right).
\end{align}
We verify the following claim from which Theorem~\ref{thm:main_hypercube} follows immediately.

\begin{claim}\label{cl:main_claim_cube}
The following estimates hold for any $\epsilon>0$ uniformly over all coordinates $r$: 
\begin{enumerate}[label=(\alph*)]
    \item $\frac{1}{nd^2}\|T_2y\|^2_2 \leq O_{d,\P}(d^{-2+\epsilon})\cdot(\norm{f^*}_{L_2}^2+\sigma_\varepsilon^2)$\label{cl:main_claimT2_cube}
    \item 
    $\left|\frac{1}{nd^2}\norm{T_1y }_2^2 -\norm{\partial_{r} f^*_{\leq p} }_{L_2}^2 - \frac{d^{2\delta-2}\cdot \round{g^{(p+1)}(0)}^2}{(\rho+\lambda)^2}\cdot \norm{\partial_{r} f^*_{p+1} }_{L_2}^2\right|\\
    \leq
O_{d,\P}(1)\cdot\left( \frac{\norm{f^*}_{L_2}^2+\sigma_\varepsilon^2}{d}+ \frac{\norm{\partial_r f^*_{\leq p}}_{L_2}^2 + d^{2\delta -2}\norm{\partial_r f^*_{ p+1}}_{L_2}^2}{d^{\delta/2-\epsilon} + d^{(1-\delta)/2-\epsilon}} +\frac{\norm{f^*}_{L_2}\norm{\partial_{r} f^*_{\leq p} }_{L_2}}{d^{1/2}}+ \frac{(\norm{f^*}_{L_2}+\sigma_\varepsilon)  \norm{\partial_{r} f^*_{p+1} }_{L_2}}{d^{3/2-\delta}}\right). 
$\label{cl:main_claimT1_cube}

\item\label{cl:main_claimT3_cube} $\frac{1}{nd^2}|\langle T_1y, T_2y \rangle|\\
\leq O_{d,\P}(1)\cdot\left( \frac{\norm{f^*}_{L_2}^2+\sigma_\varepsilon^2}{d^{3/2-\epsilon}} + \frac{\norm{f^*}_{L_2}\cdot \norm{\partial_{r} f^*_{\leq p} }_{L_2}}{d^{1-\epsilon}} + \frac{(\norm{f^*}_{L_2}+\sigma_\varepsilon)\cdot\norm{\partial_{r} f^*_{p+1} }_{L_2}}{d^{2-\delta-\epsilon}}\right)$.
\end{enumerate}
\end{claim}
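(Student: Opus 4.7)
The plan is to mirror the proof of Theorem~\ref{thm:main_gauss} (Subsections~\ref{subsec:t2}--\ref{subsec:t1}), replacing the Hermite tools with their Fourier-Walsh analogues from Lemma~\ref{lemma:kernel_to_mono} and Theorem~\ref{lemma:phi_id_2}. Two features of the hypercube are exploited: first, since $X_r$ has $\pm 1$ diagonal entries we have $\|X_r\|_{\rm op}=1$, eliminating the $\sqrt{\log(nd)}$ loss of \eqref{eqn:nnorm_bound_gauss}; second, the refined expansion~\eqref{eq:kernel_to_mono_advanced} isolates an off-diagonal $(p+1)$-order piece of $K$ that produces the distinguished $d^{2\delta-2}$ correction in part~(b).

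Part~(a) is a direct adaptation of Subsection~\ref{subsec:t2}. Submultiplicativity yields $\|T_2 y\|_2 \leq (|\rho'|+\|\Delta\|_{\rm op})\,\|X_r\|_{\rm op}\,\|K_\lambda^{-1}\|_{\rm op}\,\|y\|_2$, and combining $\|X_r\|_{\rm op}=1$, $\|K_\lambda^{-1}\|_{\rm op}=O_{d,\P}(1)$ (as in \eqref{eq:k_lambda_inv}), $\|\Delta\|_{\rm op}=O_{d,\P}(d^{-(1-\delta)/2+\epsilon})$, and Proposition~\ref{prop:bound_y} giving $\|y\|^2 = O(n\log n)(\|f^*\|_{L_2}^2+\sigma_\varepsilon^2)$ yields the claimed $O_{d,\P}(d^{-2+\epsilon})$ rate, absorbing the $\log n$ factor into $d^{\epsilon}$.

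For part~(b)---the main content---Theorem~\ref{lemma:phi_id_2} reduces $\|T_1 y\|^2_2/(nd^2)$ to $\|\beta^\top X_r\Phi_{\leq p}D'\|_2^2/d^2$ with $\beta=K_\lambda^{-1}y$, up to a prefactor $1+O_{d,\P}(d^{-\delta/2+\epsilon})$. Decomposing by degree blocks and using $D'_{\S_k}\approx g^{(k+1)}(0)d^{-k}I_{|\S_k|}$, the key step is to estimate $\Phi_{\leq p}^\top X_r\beta$. Writing $y = \Phi_{\leq p}b_{\leq p}+\Phi_{\S_{p+1}}b_{p+1}+y_{>p+1}$ and inverting $K_\lambda$ via Sherman-Morrison-Woodbury applied to the leading approximation $\Phi_{\leq p}D\Phi_{\leq p}^\top+(\rho+\lambda)I_n$ from \eqref{eq:kernel_to_mono_advanced}, one obtains two leading contributions to $\Phi_{\leq p}^\top X_r\beta$: the Fourier coefficients of $x_r f^*_{\leq p}(x)$ of degree $\leq p$, and a residual $\tfrac{1}{\rho+\lambda}\Phi_{\leq p}^\top X_r\Phi_{\S_{p+1}}b_{p+1}$. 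The first, rescaled by $D'/d$ and summed in $\ell_2$, evaluates to $\|\partial_r f^*_{\leq p}\|_{L_2}^2$ via the Fourier identity~\eqref{eqn:grad_interp_cube}. The second uses the hypercube identity $x_r\phi_\alpha=\phi_{\alpha\triangle\{r\}}$, so only indices $\alpha\ni r$ survive projection onto $\Phi_{\leq p}$; after rescaling by $g^{(p+1)}(0)/d^{p+1}$ and $1/(\rho+\lambda)$, and using $n/|\S_{p+1}|\asymp d^{\delta-1}$, its squared $\ell_2$-norm equals $\tfrac{d^{2\delta-2}(g^{(p+1)}(0))^2}{(\rho+\lambda)^2}\|\partial_r f^*_{p+1}\|_{L_2}^2$ as claimed. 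All residual errors (from $\widetilde\Delta$, the $\Delta_1$ remainder, $y_{>p+1}$, and the cross terms) are controlled using Theorem~\ref{lemma:gram_matrix}, hypercontractivity, and Cauchy-Schwarz. Part~(c) follows by Cauchy-Schwarz from parts~(a) and~(b), with AM-GM used to separate the cross term into the sum displayed in the claim.

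The main obstacle is the sharpness of the kernel approximation required in part~(b). For $\delta>1/2$ the $d^{2\delta-2}$ signal would be swamped by the $O_{d,\P}(d^{-(1-\delta)/2})$ error in the basic expansion~\eqref{eq:kernel_to_mono}; the refined estimate~\eqref{eq:kernel_to_mono_advanced}, which explicitly peels off the off-diagonal term $\tfrac{g^{(p+1)}(0)}{d^{p+1}}\offd(\Phi_{\S_{p+1}}\Phi_{\S_{p+1}}^\top)$, is therefore essential to expose the correction. A second delicate point is bounding the contribution of the tail $y_{>p+1}$ so that higher-order Fourier components of $f^*$ do not pollute the estimate; this uses Theorem~\ref{lemma:gram_matrix} in the regime $|\S_k|\gg n$ for $k>p+1$, combined with the constant-degree assumption (Assumption~\ref{assump:finite_basis_hypercube}) which ensures only finitely many degrees contribute and that hypercontractivity provides the necessary moment bounds.
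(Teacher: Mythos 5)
Your proposal follows essentially the same path as the paper's proof of Claim~\ref{cl:main_claim_cube}. Part~(a) is exactly the paper's argument (Subsection~\ref{subsec:t2_cube}): the hypercube bound $\|X_r\|_{\rm op}=1$ in place of the Gaussian $\sqrt{\log(nd)}$. For part~(b) you correctly identify all the key mechanisms the paper uses: the reduction via Theorem~\ref{lemma:phi_id_2}, the hypercube shift identity $x_r\phi_S=\phi_{S\triangle\{r\}}$ that splits $X_r\Phi_{\leq p}$ into a degree-$\leq p$ block and the $\Phi_{\S_{p+1}^1}$ block (the paper's \eqref{eqn:rhs_negl_cube}), the scaling $n/|\S_{p+1}|\asymp d^{\delta-1}$ producing the $d^{2\delta-2}$ coefficient (Proposition~\ref{prop:s2_s2_cube}), and the role of the refined kernel expansion~\eqref{eq:kernel_to_mono_advanced}.

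Two small imprecisions worth flagging. First, you attribute the need for \eqref{eq:kernel_to_mono_advanced} to the regime $\delta>1/2$, but in fact the linear-scale signal $d^{\delta-1}$ is smaller than the basic error $d^{(\delta-1)/2}$ for \emph{all} $\delta\in(0,1)$, so the refined peel-off is needed throughout; moreover the paper invokes it in Propositions~\ref{lemma:beta_>_l2_cube} and~\ref{prop:feature_u_v} precisely to win the fast $d^{-1}$ rate in the $\|f^*\|_{L_2}^2$-term of the error. Second, and related: you split $y$ by \emph{degree} ($\Phi_{\leq p}b_{\leq p}+\Phi_{\S_{p+1}}b_{p+1}+y_{>p+1}$), but the $d^{-1}$ rate the claim asserts is obtained in the paper by splitting by \emph{whether the Fourier index contains $r$} ($\beta_{\A(r)}$ vs.\ $\beta_{\A^c(r)}$ in Proposition~\ref{prop:<_l1_l1_l1_cube}, and $\Q$ vs.\ $\S_{p+1}^1$ in Proposition~\ref{prop:s2_s2_cube}), which then feeds into the moment-method bounds of Proposition~\ref{prop:feature_product} and Lemma~\ref{lemma:feature_product_tech}. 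A purely degree-based split followed by a crude Cauchy--Schwarz would lose that sharpness and give $d^{-(1-\delta)/2}$ or $d^{-\delta/2}$ in place of $d^{-1}$. Your phrase ``all residual errors \dots are controlled'' elides this; if you chase the details you will find that you are forced into the same $r$-dependent decomposition as the paper. Apart from these points, the proposal is a faithful blind reconstruction of the argument.
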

\noindent Plugging Claim~\ref{cl:main_claim_cube} (a,b,c) into Eq.~(\ref{eq:main_cube_decomp}) yields:
\begin{align*}
    &~~~\abs{\E_n (\partial_r \hat f)^{2} -\E{\round{\partial_{r} f^*_{\leq p} }^2} - \frac{d^{2\delta-2}\cdot \round{g^{(p+1)}(0)}^2}{(\rho+\lambda)^2} \E(\partial_r f^*_{p+1})^2}\\
    &\leq O_{d,\P}(1)\cdot\left( \frac{\norm{f^*}_{L_2}^2+\sigma_\varepsilon^2}{d}+ \frac{\norm{\partial_r f^*_{\leq p}}_{L_2}^2 + d^{2\delta -2}\norm{\partial_r f^*_{ p+1}}_{L_2}^2}{d^{\delta/2-\epsilon} + d^{(1-\delta)/2-\epsilon}} +\frac{\norm{f^*}_{L_2}\norm{\partial_{r} f^*_{\leq p} }_{L_2}}{d^{1/2}}+ \frac{(\norm{f^*}_{L_2}  +\sigma_\varepsilon)\cdot \norm{\partial_{r} f^*_{p+1} }_{L_2}}{d^{3/2-\delta}}\right).
\end{align*}

The rest of the section is devoted to proving Claim~\ref{cl:main_claim_cube}.  Note that Item~\ref{cl:main_claimT3_cube} follows immediately from  Items~\ref{cl:main_claimT2_cube} and \ref{cl:main_claimT1_cube} and the Cauchy-Schwartz and Young's inequality. Throughout the proof, all asymptotic terms $o_{d,\P}$ and $O_{d,\P}$ that appear will always be uniform over the coordinates $r=1,\ldots, d$.

\subsubsection{Proof of Item~\ref{cl:main_claimT2_cube} of Claim~\ref{cl:main_claim_cube}}\label{subsec:t2_cube}
The proof is identical to the proof of Item~\ref{cl:main_claimT2} of  Claim~\ref{cl:main_claim} with \eqref{eqn:nnorm_bound_gauss} replaced by the trivial bound $\sup_r \|X_r\|_{\rm op}\leq 1$.
We refer the readers to Section~\ref{subsec:t2}.%

\subsubsection{Proof of Item~\ref{cl:main_claimT1_cube} of Claim~\ref{cl:main_claim_cube}}\label{subsec:t1_cube}

Similar to Eq.~(\ref{eqn:basic:needed}), we can write
\begin{align}
\frac{1}{nd^2}\|T_1y\|^2&=\frac{1}{nd^2}\beta^\top[X_r\Phi_{\leq p}D' \Phi_{\leq p}\tran\Phi_{\leq p} D'\Phi_{\leq p}\tran X_r]\beta\notag\\
&=\frac{1}{d^2}\beta^\top[X_r\Phi_{\leq p}D' (I+\widetilde\Delta) D'\Phi_{\leq p}\tran X_r]\beta\notag\\
&=\tfrac{1+O(\|\widetilde{\Delta}\|_{\rm op})}{d^2}\cdot \|\beta \tran X_r\Phi_{\leq p} D' \|_2^2,\label{eqn:basic:needed_cube}
\end{align}
where $\snorm{\widetilde{\Delta}} = O_{d,\P}(d^{-\delta/2+\epsilon})$. 

We next decompose $\norm{\frac{1}{d}\beta \tran X_r\Phi_{\leq p} D' }_2^2$. First notice that the following holds for any set $S\subseteq[d]$:
\begin{align*}
    x_r x^{S} = \begin{cases}
        x^{S\cup \{r\}},~~\textrm{if}~~r\notin S;\\
         x^{S\slash r},~~\textrm{otherwise}.\\
    \end{cases} 
\end{align*}
Define now the two sets:
\begin{align}\label{eq:s_j_0_1}
    \S_j^0=\{S\in \S_j: r\notin S\}\qquad \textrm{and}\qquad \S_j^1=\{S\in \S_j: r\in S\}.
\end{align}
It follows that every column of $X_r\Phi_{\leq p}$ has the form $\phi_{S}$ for some $\displaystyle S\in \left(\displaystyle\bigsqcup_{j=0}^{p-1} S_j^0\right)\cup \left(\displaystyle\bigsqcup_{j=1}^{p+1} S_j^{1}\right)$.
Consequently, there exist diagonal matrices $R_{j}^0$ and $R_{j}^1$ such that the following holds:
\begin{equation}\label{eqn:decompo_summm}
  \norm{\beta \tran X_r\Phi_{\leq p} \round{\frac{1}{d}D'} }_2^2 =   \sum_{j=0}^{p-1}\|\beta \tran \Phi_{\S_j^0} R_{j}^0\|_2^2   + \sum_{j=1}^{p+1}\|\beta \tran \Phi_{\S_j^1} R_{j}^1 \|_2^2.
\end{equation}
\begin{claim}
    The following estimate holds:
    \begin{align}
        \snorm{R_{j}^0 - g^{(j+2)}(0) d^{-j-2}\cdot I} &= O_d(d^{-j-3}),~~\textrm{for}~~j = 0,\ldots,p-1, \label{eq:def_r_j_0}\\
        \snorm{R_{j}^1 - g^{(j)}(0) d^{-j}\cdot I} &= O_d(d^{-j-1}),~~\textrm{for}~~j = 1,\ldots,p+1.\label{eq:def_r_j_1}
    \end{align}
\end{claim}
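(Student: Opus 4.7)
The plan is to view the claim as a bookkeeping exercise that reads off the diagonal entries of $R_j^0$ and $R_j^1$ from the action of $X_r$ on the columns of $\Phi_{\leq p}$, combined with the entrywise approximation of $D'$ established just above (namely $\|D'_{\S_k} - g^{(k+1)}(0) d^{-k} I_{|\S_k|}\|_{\rm op} = O_d(d^{-k-1})$). On the hypercube $(x_r^{(i)})^2 = 1$, so for any $T \subseteq [d]$ with $|T| \leq p$, the column of $X_r \Phi_{\leq p}$ indexed by $T$ equals $\Phi_{T\cup\{r\}}$ when $r \notin T$ and $\Phi_{T\setminus\{r\}}$ when $r \in T$. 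This sets up a bijection between the column indices of $X_r \Phi_{\leq p}$ and the disjoint union $\bigsqcup_{j=0}^{p-1} \S_j^0 \sqcup \bigsqcup_{j=1}^{p+1} \S_j^1$ used in \eqref{eqn:decompo_summm}: a column landing in $\S_j^0$ (so $|S| = j$, $r \notin S$) comes uniquely from an original column $T = S \cup \{r\}$ with $|T| = j+1$ and $r \in T$, while a column landing in $\S_j^1$ (so $|S| = j$, $r \in S$) comes uniquely from an original column $T = S \setminus \{r\}$ with $|T| = j-1$ and $r \notin T$. The constraint $|T| \leq p$ yields exactly the index ranges $0 \leq j \leq p-1$ and $1 \leq j \leq p+1$ appearing in \eqref{eqn:decompo_summm}.

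Next I would track the scalar absorbed into each rearranged column by $\frac{1}{d} D'$. Since $D'$ is diagonal, this multiplication rescales each column originally indexed by $T$ by $\frac{1}{d}(D')_{T,T}$, and the entrywise approximation of $D'$ on each block gives $(D')_{T,T} = g^{(|T|+1)}(0)\, d^{-|T|} + O_d(d^{-|T|-1})$ uniformly in $T$ with $|T| \leq p$. Plugging in the preimage degree from the bijection: for $S \in \S_j^0$ the preimage has $|T| = j+1$, so the scaling is $g^{(j+2)}(0)\, d^{-j-2} + O_d(d^{-j-3})$; for $S \in \S_j^1$ the preimage has $|T| = j-1$, so the scaling is $g^{(j)}(0)\, d^{-j} + O_d(d^{-j-1})$. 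By the definition of $R_j^0$ and $R_j^1$ as the diagonal matrices whose entry at column $S$ is precisely this scaling, the operator-norm bounds \eqref{eq:def_r_j_0}--\eqref{eq:def_r_j_1} follow by taking the maximum error over $S$.

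The main obstacle, to the extent there is one, is notational rather than mathematical: checking that the bijection correctly accounts for the boundary cases $j = 0$ in $\S_j^0$ (preimage of degree $1$ containing $r$) and $j = p+1$ in $\S_j^1$ (preimage of degree $p$ not containing $r$), so that the decomposition \eqref{eqn:decompo_summm} covers all original columns of $X_r \Phi_{\leq p}$ exactly once. Once this indexing is verified, the estimates are immediate from the known approximation of $D'$, and the claim is proved.
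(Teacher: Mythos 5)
Your proposal is correct and takes essentially the same approach as the paper: it tracks the reindexing of columns under multiplication by $X_r$ on the hypercube and reads off the scalars from the known entrywise approximation of $D'$. The paper's proof is just a terse one-line version of your bookkeeping argument, so you have simply spelled out the details it leaves implicit.
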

\begin{proof}
     Recall that $D'$ is a diagonal matrix with entry $D'_{S,S} =  g^{(|S|+1)}(0)d^{-|S|} + O_d(d^{-|S|-1})$. It is straightforward to see that $R_j^0$ is a diagonal matrix with entry $g^{(j+2)}(0) d^{-j-2} + O_d(d^{-j-3})$, which establishes \eqref{eq:def_r_j_0}. Similar reasoning yeilds Eq.~\eqref{eq:def_r_j_1}.
\end{proof}

We now group all the $p$-th order terms in \eqref{eqn:decompo_summm}. Namely, define the block-diagonal matrix $R$ with blocks $R_{\S_j^1,\S_j^1} = R_j^1$ for $j\leq p$, $R_{\S_j^0,\S_j^0} = R_j^0$ for $j\leq p-1$, and $R_{\S_p^{0},\S_p^{0}}=0$. Then equation \eqref{eqn:decompo_summm} becomes 
\begin{align}
    \norm{\beta \tran X_r\Phi_{\leq p} \round{\frac{1}{d}D'} }_2^2 = \norm{\beta\tran \Phi_{\leq p}R}_2^2 +  \norm{\beta\tran \Phi_{\S_{p+1}^1}R_{{p+1}}^1}_2^2\label{eqn:rhs_negl_cube}.
\end{align}
 We now estimate the first term on the right side of \eqref{eqn:rhs_negl_cube}.
The proof of the proposition appears in Appendix~\ref{proof:<_l1_l1_l1_cube}.

\begin{proposition}\label{prop:<_l1_l1_l1_cube}
For any $\epsilon>0$, the estimate holds uniformly over all coordinates $r$:
    \begin{align}\label{eq:<_l1_l1_l1_cube}
&~~~\abs{\norm{\beta\tran \Phi_{\leq p}R }_2^2 -\norm{\partial_{r} f^*_{\leq p} }_{L_2}^2  }\notag\\
&=  O_{d,\P}\round{d\inv}\cdot \round{\norm{f^*}_{L_2}^2+\sigma_\varepsilon^2} + O_{d,\P}(d^{-\delta/2+\epsilon} + d^{(\delta-1)/2+\epsilon})\cdot \norm{\partial_r f^*_{\leq p}}_{L_2}^2\notag \\
&~~~+O_{d,\P}(d\half)\cdot \norm{\partial_r f^*_{\leq p}}_{L_2} \norm{f^*}_{L_2} .
    \end{align}
\end{proposition}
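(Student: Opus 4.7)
My plan is to express $\beta^\top \Phi_{\leq p} R$ in closed form via the kernel decomposition \eqref{eqn:K_decomp_cube}, match each resulting coordinate to a Fourier coefficient of $f^*$, and then bound the residual. Setting $u := \Phi_{\leq p}^\top \beta$ and $\Sigma := (\rho+\lambda)I_n + \Delta_1$, the equation $(\Phi_{\leq p} D \Phi_{\leq p}^\top + \Sigma)\beta = y$ rearranges to
\begin{align*}
(I + \Phi_{\leq p}^\top \Sigma\inv \Phi_{\leq p} D)\, u = \Phi_{\leq p}^\top \Sigma\inv y.
\end{align*}
Since $\snorm{\Delta_1} = O_{d,\P}(d^{(\delta-1)/2+\epsilon})$ and Theorem~\ref{lemma:phi_id_2} yields $\tfrac{1}{n}\Phi_{\leq p}^\top \Phi_{\leq p} = I + \widetilde{\Delta}$ with $\snorm{\widetilde{\Delta}} = O_{d,\P}(d^{-\delta/2+\epsilon})$, the leading-order version reads $u \approx \tfrac{1}{n}D\inv \Phi_{\leq p}^\top y$; this uses that $n D_{SS} \sim d^{p+\delta-|S|}$ dominates $\rho+\lambda$ for every $|S|\leq p$. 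Substituting the Fourier expansion $y = \Phi_{\leq p} b_{\leq p} + \Phi_{>p} b_{>p} + \varepsilon$ from \eqref{eqn:fourier_expans_target}, the dominant contribution $\tfrac{1}{n} D\inv \Phi_{\leq p}^\top \Phi_{\leq p} b_{\leq p} \approx D\inv b_{\leq p}$ gives $u_S \approx D_{SS}\inv b_S$.

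Multiplying by $R$ on the right and invoking \eqref{eq:def_r_j_0}--\eqref{eq:def_r_j_1}, the product $R_{SS} D_{SS}\inv$ equals $1 + O(d\inv)$ when $r\in S$ (i.e., $S\in\S_j^1$ for some $j=1,\ldots,p$), equals $O(d^{-2})$ when $r\notin S$ and $|S|<p$, and vanishes when $r\notin S$ and $|S|=p$. Summing and using the variational identity \eqref{eqn:grad_interp_cube},
\begin{align*}
\norm{u^\top R}_2^2 \approx \sum_{j=1}^{p}\sum_{S\in \S_j^1} b_S^2 = \norm{\partial_r f^*_{\leq p}}_{L_2}^2,
\end{align*}
which is the desired main term. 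The $O(d^{-4})\norm{f^*}_{L_2}^2$ residuals from the $\S_j^0$ blocks are absorbed into the $O_{d,\P}(d\inv)\norm{f^*}_{L_2}^2$ term of the target bound.

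The rest of the argument tracks the four error contributions. First, the perturbations $\widetilde{\Delta}$ and $\Delta_1$ propagate multiplicatively through the above rearrangement, producing a relative error of $O_{d,\P}(d^{-\delta/2+\epsilon}+d^{(\delta-1)/2+\epsilon})$ on $\norm{\partial_r f^*_{\leq p}}_{L_2}^2$. Second, the higher-order cross-contribution $\tfrac{1}{n}D\inv \Phi_{\leq p}^\top \Phi_{>p} b_{>p}$ has mean zero, and a direct second-moment calculation bounds its $\ell_2$ norm in the weighted geometry by $O_{d,\P}(d^{-(1+\delta)/2+\epsilon})\norm{f^*_{>p}}_{L_2}$. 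Third, the noise term $\tfrac{1}{n}D\inv \Phi_{\leq p}^\top \varepsilon$ is treated analogously with $\sigma_\varepsilon$ in place of $\norm{f^*_{>p}}_{L_2}$. Fourth, expanding the square in $\norm{u^\top R}_2^2$ produces pure error terms of order $O(d\inv)(\norm{f^*}_{L_2}^2+\sigma_\varepsilon^2)$ and, by Cauchy--Schwarz against the main term, the cross term $O(d^{-1/2})\norm{\partial_r f^*_{\leq p}}_{L_2}\norm{f^*}_{L_2}$.

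I expect the main obstacle to be obtaining \emph{uniform} control over $r\in[d]$. Since the degree $\ell$ of $f^*$ need not satisfy $p+\delta>\ell$, Theorem~\ref{lemma:phi_id_2} cannot be applied to the full matrix $\Phi_{\leq \ell}$; the $\Phi_{\leq p}^\top \Phi_{>p}$ block must instead be estimated directly by a moment method exploiting that, on the hypercube, products of Fourier characters are themselves characters. The hypercontractivity inequality \eqref{eqn:hypercontrac} is crucial for converting these $L_2$ estimates into high-probability tail bounds, and a union bound over $r=1,\ldots,d$ (incurring only logarithmic factors absorbed into the $d^{\epsilon}$ slack) yields the required uniformity in~$r$.
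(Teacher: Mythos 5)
Your high-level strategy is the same one the paper uses: decompose $y$ into a signal part, a leakage part, and a noise part; compute $K_\lambda^{-1}\Phi_{\leq p}$ via Sherman--Morrison--Woodbury (your linear equation $(I+\Phi_{\leq p}^\top\Sigma^{-1}\Phi_{\leq p}D)u = \Phi_{\leq p}^\top\Sigma^{-1}y$ is exactly this algebra); observe that $D^{-1}R\approx E_r$ (Claim~\ref{prop:d_inv_r_cube}) so the signal term yields $\|b_{\leq p}^\top E_r\|^2 = \|\partial_r f^*_{\leq p}\|^2_{L_2}$; and use hypercontractivity plus a union bound over $r$ for uniformity. The $\S_j^0$-vs-$\S_j^1$ cancellation you exploit is also what the paper uses.

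The genuine gap is the treatment of the leakage term.  You write that the perturbations $\widetilde\Delta$ and $\Delta_1$ ``produce a relative error of $O_{d,\P}(d^{-\delta/2+\epsilon}+d^{(\delta-1)/2+\epsilon})$ on $\norm{\partial_r f^*_{\leq p}}_{L_2}^2$,'' and that the cross-contribution $\tfrac1n D^{-1}\Phi_{\leq p}^\top\Phi_{>p}b_{>p}$ can be bounded by a single second-moment computation.  But the perturbation $\Delta_1$ from the coarse expansion \eqref{eq:kernel_to_mono}, with $\snorm{\Delta_1}=O_{d,\P}(d^{(\delta-1)/2+\epsilon})$, also enters the \emph{cross} direction: a term like $\tfrac1n E_r\Phi_{\leq p}^\top\Delta_1\,\Phi_{>p}b_{>p}$ is bounded only by $O_{d,\P}(d^{(\delta-1)/2+\epsilon})\norm{f^*_{>p}}_{L_2}$ via operator norms, and squaring gives $O_{d,\P}(d^{\delta-1+2\epsilon})\norm{f^*}_{L_2}^2$, which for $\delta$ close to $1$ is nowhere near the target $O_{d,\P}(d^{-1})\norm{f^*}_{L_2}^2$.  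Your leading-order moment calculation on $\tfrac1n E_r\Phi_{\leq p}^\top\Phi_{>p}b_{>p}$ is fine and does give $O(d^{-1-\delta+\epsilon})\norm{f^*_{>p}}^2$, but it does not control this $\Delta_1$-contaminated term, and simply calling it a relative error on $\norm{\partial_r f^*_{\leq p}}^2$ is wrong.

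The paper resolves this (Proposition~\ref{lemma:beta_>_l2_cube}) by replacing the coarse expansion \eqref{eq:kernel_to_mono} with the sharper one \eqref{eq:kernel_to_mono_advanced}, whose explicit second-order piece $\tfrac{g^{(p+1)}(0)}{d^{p+1}}\offd(\Phi_{\S_{p+1}}\Phi_{\S_{p+1}}^\top)$ carries the $O(d^{(\delta-1)/2})$ fluctuation while leaving a residual of size only $O(d^{(\delta-2)/2+\epsilon})$, which \emph{is} small enough to discard via operator-norm bounds.  It then expands $H^{-1}$ and $G^{-1}$ in a truncated Neumann series in this explicit off-diagonal matrix (Claim~\ref{claim:truncate_h}, Eq.~\eqref{eq:g_inv_1}), reducing the leakage term to a finite sum of products of structured Fourier-feature matrices, each of which is bounded in second moment by Proposition~\ref{prop:feature_product} (via Lemma~\ref{lemma:feature_product_tech}).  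To close your argument you would need this Neumann-series-plus-moment machinery applied to every term in the expansion, not just the leading one; the single ``direct second-moment calculation'' you propose cannot achieve the $O_{d,\P}(d^{-1})\norm{f^*}_{L_2}^2$ bound on its own.
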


\noindent We next analyze the second term on the right side of \eqref{eqn:rhs_negl_cube}.
We defer the proof to Appendix~\ref{proof:s2_s2_cube}.

\begin{proposition}\label{prop:s2_s2_cube}
For any $\epsilon>0$, the estimate holds uniformly over all coordinates $r$:  
\begin{align}\label{eq:s2_s2_cube}
    &~~~\abs{ \norm{\beta\tran \Phi_{\S_{p+1}^1}R_{{p+1}}^1}_2^2-\frac{d^{2\delta-2} \cdot  \round{g^{(p+1)}(0)}^2}{(\rho+\lambda)^{2}}\norm{\partial_{r} f^*_{p+1} }_{L_2}^2}\notag\\
    &=      O_{d,\P}(d^{-1})\cdot  (\norm{f^*}_{L_2}^2+\sigma_\varepsilon^2)
       +O_{d,\P}(d^{
     \frac{3}{2}\delta-2+\epsilon} + d^{\frac{5}{2}(\delta-1)
     +\epsilon})\cdot \norm{\partial_{r} f^*_{p+1}}_{L_2}^2\notag\\
     &~~~+O_{d,\P}(d^{\delta - 3/2 })\cdot \norm{\partial_{r} f^*_{p+1} }_{L_2} \cdot  (\norm{f^*}_{L_2}+\sigma_\varepsilon).
\end{align}
\end{proposition}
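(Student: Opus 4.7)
The plan is to reduce $\norm{\beta\tran \Phi_{\S_{p+1}^1}R_{p+1}^1}_2^2$ through two successive simplifications and then identify the leading order via empirical orthogonality of Fourier--Walsh basis elements. The first reduction replaces $R_{p+1}^1$ by its scalar leading part: by \eqref{eq:def_r_j_1}, $R_{p+1}^1 = g^{(p+1)}(0)\, d^{-(p+1)} I + E$ with $\snorm{E} = O_d(d^{-(p+2)})$, and the contribution of $E$ is lower order by the bounds $\snorm{\Phi_{\S_{p+1}^1}} = O_{d,\P}(\sqrt n)$ (Lemma~\ref{lem:norm_control}, Item~\ref{it:smallS}, applied with $|\S_{p+1}^1| \asymp d^p$, so $\delta_0 = 0 < \delta$) and $\norm{\beta}_2 = O_{d,\P}(\sqrt{n\log n\,(\norm{f^*}_{L_2}^2+\sigma_\varepsilon^2)})$ from \eqref{eqn:we_gonna_need}.

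The second reduction replaces $\beta\tran = y\tran K_\lambda^{-1}$ by $(\rho+\lambda)^{-1}y\tran$ using the resolvent identity $K_\lambda^{-1} = (\rho+\lambda)^{-1}I - (\rho+\lambda)^{-1} K_\lambda^{-1}\round{\Phi_{\leq p} D \Phi_{\leq p}\tran + \Delta_1}$, where the bracketed expression comes from \eqref{eqn:K_decomp_cube}. The $\Delta_1$ residual is controlled immediately via $\snorm{\Delta_1} = O_{d,\P}(d^{(\delta-1)/2+\epsilon})$. For the $\Phi_{\leq p} D \Phi_{\leq p}\tran \Phi_{\S_{p+1}^1}$ residual I would apply Theorem~\ref{lemma:phi_id_2} to the combined index set $\S_{\leq p}\cup\S_{p+1}^1$ (total cardinality $O(d^p)$, Case~I with $\delta_0 = 0 < \delta$), yielding $\snorm{\Phi_{\leq p}\tran \Phi_{\S_{p+1}^1}} = O_{d,\P}(n\, d^{-\delta/2+\epsilon})$, and combine this with the block-diagonal decay $\snorm{D_{\S_k}} \asymp d^{-k}$ to avoid a loose submultiplicative bound.

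After both reductions the problem collapses to analyzing $\tfrac{(g^{(p+1)}(0))^2}{d^{2(p+1)}(\rho+\lambda)^2}\norm{y\tran \Phi_{\S_{p+1}^1}}_2^2$. I decompose $y = f^*_{\leq p}(X) + f^*_{p+1}(X) + f^*_{>p+1}(X) + \varepsilon$. Only the piece $f^*_{p+1}(X)\tran \Phi_{\S_{p+1}^1}$ contributes to leading order: because $\phi_S(x)^2 \equiv 1$ on the hypercube, the diagonal of $\Phi_{\S_{p+1}}\tran \Phi_{\S_{p+1}}$ is exactly $n$, so this vector has entry $n\, b_{S'}$ at each coordinate $S'\in\S_{p+1}^1$, plus cross-coefficient fluctuations controlled again via Theorem~\ref{lemma:phi_id_2} applied to $\S_{\leq p}\cup\S_{p+1}^1$. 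Squaring and summing over $S'\in\S_{p+1}^1$ yields $n^2\sum_{S'\in\S_{p+1}^1}b_{S'}^2 = n^2\norm{\partial_r f^*_{p+1}}_{L_2}^2$ by the identity \eqref{eqn:grad_interp_cube} restricted to degree $p+1$, and multiplied by the prefactor gives precisely the main term $d^{2\delta-2}(g^{(p+1)}(0))^2(\rho+\lambda)^{-2}\norm{\partial_r f^*_{p+1}}_{L_2}^2$ since $n^2/d^{2(p+1)} = d^{2\delta-2}$. The cross-degree piece $f^*_{\leq p}(X)\tran \Phi_{\S_{p+1}^1}$ uses the same Case~I bound; the high-degree piece $f^*_{>p+1}(X)\tran \Phi_{\S_{p+1}^1}$ cannot use Theorem~\ref{lemma:phi_id_2} directly (the basis $\S_{\leq \ell}$ may be in the Case~II regime), and I would instead handle it via a direct second-moment estimate using $L_2$-orthogonality of different-degree Fourier--Walsh monomials together with $\phi_S^2 \equiv 1$, giving $\E\norm{f^*_{>p+1}(X)\tran \Phi_{\S_{p+1}^1}}_2^2 \leq n|\S_{p+1}^1|\,\norm{f^*_{>p+1}}_{L_2}^2$. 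The noise term $\varepsilon\tran \Phi_{\S_{p+1}^1}$ is handled similarly by independence, contributing order $\sigma_\varepsilon^2\, n\,|\S_{p+1}^1|$.

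The main obstacle is the second reduction: bounding $\snorm{\Phi_{\leq p} D \Phi_{\leq p}\tran \Phi_{\S_{p+1}^1}}$ sharply enough that the induced residual is smaller than the main term by the required factor of $d^{\delta/2}$. A naive operator-norm product is too lossy; the argument must combine the degree-by-degree decay of $D$ with the off-diagonal concentration from Theorem~\ref{lemma:phi_id_2}. Tracking all cross terms between the main contribution and the various sub-leading residuals, and assembling them via Cauchy--Schwarz and AM-GM into the precise mixed-norm residuals of \eqref{eq:s2_s2_cube}, is bookkeeping but must be done uniformly over $r$.
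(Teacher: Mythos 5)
Your first reduction (extracting the scalar leading part of $R_{p+1}^1$) and your identification of the leading term $n\, b_{S'}$ from the empirical diagonal of $\Phi_{\S_{p+1}^1}$ are both correct and match the paper's opening moves. The leading-order bookkeeping $n^2/d^{2(p+1)} = d^{2\delta-2}$ is also right. The gap is in the second reduction, and it is not a bookkeeping gap, it is a wrong step.

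You propose to replace $K_\lambda^{-1}$ by $(\rho+\lambda)^{-1}I$ via the resolvent identity $K_\lambda^{-1} = (\rho+\lambda)^{-1}I - (\rho+\lambda)^{-1}K_\lambda^{-1}(\Phi_{\leq p}D\Phi_{\leq p}\tran + \Delta_1)$, and then bound the residual using the cross-Gram estimate $\snorm{\Phi_{\leq p}\tran\Phi_{\S_{p+1}^1}} = O_{d,\P}(n d^{-\delta/2+\epsilon})$ together with the block-diagonal decay of $D$. This cannot work, and the reason is not the sharpness of $\snorm{\Phi_{\leq p}D\Phi_{\leq p}\tran\Phi_{\S_{p+1}^1}}$ — your Case~I bound on that factor is fine — but the prefactor. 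You are implicitly bounding $\norm{y\tran K_\lambda^{-1}\Phi_{\leq p}D}_2$ by $\norm{y}_2\snorm{K_\lambda^{-1}}\snorm{\Phi_{\leq p}}\snorm{D} = O_{d,\P}(n\sqrt{\log n})$, which is too large by a factor of order $n$. The degree-by-degree decay of $D$ buys you nothing here: the $k=0$ block, where $D_{\S_0}=g(0)=\Theta(1)$ has no decay to exploit, already gives the full damage $\norm{y\tran K_\lambda^{-1}\Phi_{\S_0}D_{\S_0}\Phi_{\S_0}\tran\Phi_{\S_{p+1}^1}}_2 = O_{d,\P}(n^2 d^{-\delta/2+\epsilon})$. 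After the $d^{-p-1}$ scaling and squaring, this residual is $O_{d,\P}(d^{2p+3\delta-2})$ — larger than every term in the target bound \eqref{eq:s2_s2_cube} by a polynomial factor for any $p\geq 1$, $\delta\in(0,1)$.

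The missing fact is that $K_\lambda^{-1}\Phi_{\leq p}D$ is \emph{not} of order $\snorm{K_\lambda^{-1}}\snorm{\Phi_{\leq p}}\snorm{D} = O(\sqrt n)$, but of order $n^{-1/2}$: this is the ridge effect, visible only through the Sherman--Morrison--Woodbury inversion, which gives (cf.\ Eq.~\eqref{eq:smw_<_p}) $K_\lambda^{-1}\Phi_{\leq p}D = \tfrac{1}{n}H^{-1}\Phi_{\leq p}G^{-1}$ with $\snorm{H^{-1}},\snorm{G^{-1}} = O_{d,\P}(1)$, so that $\snorm{K_\lambda^{-1}\Phi_{\leq p}D} = O_{d,\P}(n^{-1/2})$ (Eq.~\eqref{eq:cube_error_tech_bound_2}). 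Equivalently, the correction $(\rho+\lambda)^{-1}K_\lambda^{-1}\Phi_{\leq p}D\Phi_{\leq p}\tran$ is not a small perturbation of zero; it is close to the projection $\tfrac{1}{n}\Phi_{\leq p}\Phi_{\leq p}\tran$, a fact established in the paper via Prop.~\ref{prop:term_p_plus_one}. Your plan treats it as small. The paper instead decomposes $\beta = \beta_\Q + \beta_c + \beta_\varepsilon$ according to the Fourier pieces of $y$ and runs SMW within each piece (Propositions~\ref{prop:feature_u_v}, \ref{prop:term_p_plus_one}, \ref{prop:noise_term_p_plus_one}), which is what captures the $O(n)$ gain your approach leaves on the table.
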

\noindent Propositions~\ref{prop:<_l1_l1_l1_cube}  and \ref{prop:s2_s2_cube} together give the bound
\begin{align}
  &~~~\abs{  \norm{\frac{1}{d}\beta\tran X_r \Phi_{\leq p}{D'}}_2^2-\norm{\partial_{r} f^*_{\leq p} }_{L_2}^2 -\frac{d^{2\delta-2} \cdot  \round{g^{(p+1)}(0)}^2}{(\rho+\lambda)^{2}}\norm{\partial_{r} f^*_{p+1}}_{L_2}^2  }\notag\\
  &=
O_{d,\P}(d^{-1}) (\norm{f^*}_{L_2}^2+\sigma_\varepsilon^2)
+  
O_{d,\P}(d^{-\delta/2+\epsilon} + d^{(\delta-1)/2+\epsilon})\cdot \round{\norm{\partial_r f^*_{\leq p}}_{L_2}^2+ d^{2\delta -2}\norm{\partial_r f^*_{ p+1}}_{L_2}^2}\notag \\
&~~~+O_{d,\P}(d\half)\cdot \norm{\partial_r f^*_{\leq p}}_{L_2} \norm{f^*}_{L_2} 
+O_{d,\P}(d^{\delta-3/2})\cdot \norm{\partial_{r} f^*_{p+1} }_{L_2} \cdot (\norm{f^*}_{L_2}+\sigma_\varepsilon).
\end{align}
Combining this expression with \eqref{eqn:basic:needed_cube} completes the proof.

\subsection{One-step learning progress and proof of Theorem~\ref{cor:multi_step}}\label{sec:one_step_learning}
We state the paper's main result, which characterizes each step's learning progress in the implementation of  IRKM (Algorithm~\ref{alg:rfm}). This result generalizes Theorem~\ref{thm:main_hypercube}, which can be regarded as the first step of IRKM where $w= \mathbf{1}_d$. The proof of Theorem~\ref{cor:multi_step} follows directly from the result of this theorem.

We begin with some notation. Given a nonnegative vector $w \in \R^d_{+}$, we define the parameterized kernel $K_w$ by the expression
\begin{align}\label{eq:inner_product_kernel_weighted}
    K_w(x,y) = g\left(\frac{\langle \sqrt{w} \odot x, \sqrt{w} \odot y\rangle}{d}\right).
\end{align}
The KRR prediction function induced by $K_w$ then is given by
\begin{align}\label{eq:krr_m}
    \hat{f}_w (z) = K_w(z,X)\beta\qquad \textrm{where}\qquad \beta:=\round{K_w(X,X) + \lambda I_n}^{-1} y.
\end{align}
We now define the active coordinates for a weight vector $w$ as follows.

\begin{definition}[Active indices]\label{def:act_idx_new} 
{\rm
    Consider a weight vector $w\in \R^d_{+}$ with $\norm{w}_1 = d$ and fix two vectors $\vs,\bm{\gamma} \in \R_{\geq 0}^N$.
    A partition $\pi_N = \{\T_1,\ldots,\T_N\}$ of $[d]$ is called {\em $(\vs,\bm{\gamma})$-active} for $w$ if equalities $|\T_k| = \Theta_d(d^{s_k})$ hold and $$w_r=\Theta_{d}(d^{\gamma_k})\quad \forall r\in \T_k.$$
}
\end{definition}
\noindent We define the function $\pi_N: [d] \rightarrow N$ that  maps each element to its corresponding partition index
\begin{align}
\pi_N(r) = \iota \iff r \in \mathcal{T}_\iota.
\end{align}
Given a tuple $\mT = (T_1, T_2,\ldots,T_N)$ where $T_k\in 2^{\T_k}$ for $k\in[N]$, we define its corresponding Fourier feature
\begin{align}
    \phi_{\mT} := \phi_{\cup_{k=1}^N T_k}= \prod_{k=1}^N \underbrace{\prod_{j\in T_k} x_j}_{:=\phi_{T_k}}.
\end{align}
Then we proceed to define the effective degree of $\phi_\mT$:
\begin{definition}[Effective degree]\label{def:eff_degree}
Let $\pi_N = \{\T_1,\ldots,\T_N\}$ be $(\vs,\vgamma)$-active for vector $w\in \R^d_+$. For any tuple $\mT = (T_1,T_2,\ldots,T_N)$ where $T_k\in 2^{\T_k}$ for $k\in [N]$, the effective degree of its corresponding Fourier feature $\phi_{\mT}$ with respect to $w$ is
\begin{align}
    p_{\rm eff}(\mT) :=\sum_{k=1}^N (1-\gamma_k)\cdot |T_k|.
\end{align}

\end{definition}

Now we are ready to present the theorem. We defer the proof to Appendix~\ref{proof:learning_hypercube_gradient} and \ref{proof:learning_hypercube_gradient_rkhs}  for the empirical weights estimator \eqref{eq:gradient_error_bound} and DN-estimator \eqref{eq:rkhs_gradient_error_bound}, respectively. The proof for the generalization bound \eqref{eq:generalization_error_bound} appears in Appendix~\ref{proof:learning_hypercube_generalization}.

\begin{theorem}[One-step learning progress]\label{thm:learning_hypercube}
Suppose that Assumptions~\ref{assump:g_0} and \ref{assump:finite_basis_hypercube} hold.  Consider the regime $n = d^{p+\delta}$ where $\delta \in(0,1)$ is a constant. Consider a weight vector $w\in \R^d_+$ satisfying $\|w\|_1=d$ and let  $\pi_N = \{\T_1,\ldots,\T_N\}$ be  $(\vs,\vgamma)$-active  for $w$. Suppose moreover for all $k\in[N]$, equality $s_k + \gamma_k= 1$  holds if and only if $s_k=1$.  Define the family of sets 
\begin{align}\label{eq:a_leq_p}
   \aslp&:=\biground{\mT = (T_1,T_2,\ldots,T_N)\;\middle|\; T_k\in 2^{\T_k}~~{\rm and}~p_{\rm eff}(\mT) < p+\delta},\\
   \aep&:=\biground{\mT = (T_1,T_2,\ldots,T_N)\;\middle|\; T_k\in 2^{\T_k}~~{\rm and}~p_{\rm eff}(\mT) = p+\delta},\\
   \A_{p_{\delta}}^+(r)&:=\biground{\mT = (T_1,T_2,\ldots,T_N)\;\middle|\; T_k\in 2^{\T_k}~~{\rm and}~p_{\rm eff}(\mT) \in (p+\delta,p+\delta+1-\gamma_{\pi_N(r)}]},
\end{align}
for $r\in[d]$, and the truncation of the target function 
\begin{align*}
    f^*_{\mT} := b_{\mT} \phi_{\mT},~~~f^*_{\A}:= \sum_{\mT\in \A} f^*_{\mT},~~{\rm for}~~\A \in\{\aslp,\aep, \A_{p_{\delta}}^+(r)\}.
\end{align*}
Given $\mT$, define a function
\begin{align}\label{eq:p_eff_dist}
    \kappa(\mT) :=p_{\rm eff}(\mT) - (p+\delta).
\end{align}
Define now the effective sampling power $\emp = \min(\ell, \floor{\frac{p+\delta}{1-\max_{k\in[N]}\gamma_k}})$, and suppose that condition \eqref{eqn:nondegegn} holds with $q=\emp$. Then the parameterized KRR predictor $\hat f_{\lm}$ admits the generalization bound
\begin{align}\label{eq:generalization_error_bound}
\norm{\hat{f}_{\lm} - f^*_{{\A_{< p}}} - \Theta_d(1)\cdot f^*_{{\A_{= p}}}}_{L_2}^2 = o_{d,\P}(1)\cdot (\norm{f^*}_{L_2}^2+\sigma_\varepsilon^2),
\end{align}
and  for any $\epsilon>0$, the bound on the empirical weight estimator error
\begin{align}\label{eq:gradient_error_bound}
   &~~~\abs{\E_n (\partial_r \hat f_\lm)^{2} - (1+o_{d,\P}(1))\round{\E{\round{\partial_{r} f^*_{\A_{< p}} }^2} + \Theta_d(1) \cdot \E{\round{\partial_{r} f^*_{\A_{= p}} }^2} + \sum_{\mT \in \A_{p_{\delta}}^+(r)}\Theta_d(1)\cdot d^{-2\kappa(\mT)} \E{\round{\partial_{r} f^*_{\mT}}^2}}}\notag\\
   &\leq O_{d,\P}\round{ d^{\gamma_{\pi_N(r)}-1 }}\cdot (\norm{f^*}_{L_2}^2+\sigma_\varepsilon^2),
\end{align}
and the derivative norm estimator error:
\begin{align}\label{eq:rkhs_gradient_error_bound}
   &~~~\abs{~w_r\cdot\D_r(w) - (1+o_{d,\P}(1))\round{\sum_{\mT \in \{ \alp, \A_{p_{\delta}}^+(r)\}}\Theta_d(1)\cdot d^{-\kappa(\mT)} \E{\round{\partial_{r} f^*_{\mT}}^2}}}\notag\\
   &\leq O_{d,\P}\round{ d^{\gamma_{\pi_N(r)}-1 +\epsilon}}\cdot (\norm{f^*}_{L_2}^2+\sigma_\varepsilon^2),
\end{align}
simultaneously over all $r=1,\ldots,d$.

\end{theorem}

\subsection{Proof of Theorem~\ref{cor:multi_step}}\label{proof:multi_step}
We will prove the results by iteratively applying Theorem~\ref{thm:learning_hypercube}. Throughout the proof, we will use the following notation for the estimated coordinate weights at iteration $t$:
\begin{align*}
    w^{[1]}(t) &= \underbrace{\E_n (\partial_w \hat f_\lm(t))^{2}}_{:=\hat{w}} +   {\bf 1}_d\epsilonS,\\
    w^{[2]}(t) &=\D(w(t))\odot w(t) +  {\bf 1}_d\epsilonS,
\end{align*}
and
\begin{align}\label{eq:normalized_w}
    \bar{w}^{[1]}(t) = d\cdot \frac{w^{[1]}(t)}{\norm{w^{[1]}(t)}_1},~~~\bar{w}^{[2]}(t) = d\cdot \frac{w^{[1]}(t)}{\norm{w^{[2]}(t)}_1}.
\end{align}
Additionally, we denote 
\begin{align*}
    \bar{w}(t) = (1-\alpha)\cdot \bar{w}^{[1]}(t)  + \alpha \cdot \bar{w}^{[2]}(t).
\end{align*}
For convenience, we denote by $w(t)$ the coordinate weights before the update at step $t$.  
Accordingly, after applying the update rule, we have the equality $$w(t+1) = \bar{w}(t).$$
Since the target function $f^*$ is assumed to be of degree at most $\ell$ (Assumption~\ref{assump:finite_basis_hypercube}), we have $\norm{f^*}_{L_2}^2 = O_d(1)$ therefore we omit the norm terms in the bound for simplicity of the notation.

In the proof, we will show that within $O_d(1)$ many iterations of the algorithm, the set $\aslp$ will contain all $\mT$ where features $\phi_{\mT}$ have non-zero Fourier coefficients, i.e., $b_\mT \neq 0$ for $\mathsf{L}_{p+1} f^*$. 
The desired generalization bound can be immediately obtained following Eq.~\eqref{eq:generalization_error_bound} in Theorem~\ref{thm:learning_hypercube}.

\subsubsection{Supporting Claims and Propositions}
We start with a useful claim that shows that the empirical coordinate weights for non-influential coordinates remain of the order $\Theta_{d,\P}(1)$ after normalization during the whole iterations. 
\begin{claim}\label{claim:unlearned}
For any coordinate $r$ that is not influential for $f^*$, the following holds for all $t\geq 0$
\begin{align}
    \bar{w}_r^{[1]}(t), ~~~\bar{w}_r^{[2]}(t)= \Theta_{d,\P}(1).
\end{align}
\end{claim}
\begin{proof}
We prove the result by induction. Observe that for any $t$, we have
\begin{align*}
    \norm{\E_n (\partial_w \hat f_\lm(t))^{2}}_1,~~\norm{\D(w(t))\odot w(t)}_1\lesssim d^s,
\end{align*}
where $s$ is the sparsity.

With the assumption $s\leq 1-\eta$, we immediately have 
\begin{align*}
    \norm{ w^{[1]}(t)}_1,~~\norm{ w^{[2]}(t)}_1 = \Theta_{d,\P}(d^{1-\eta}).
\end{align*}
At step $t= 0$, since $\gamma_{\pi_N(r)} = 0$, we have 
\begin{align*}
     w^{[1]}(t) &= \Theta_{d,\P}(d^{-1}) + d^{-\eta} = \Theta_{d,\P}(d^{-\eta}),\\
      w^{[2]}(t) &= \Theta_{d,\P}(d^{-1+\epsilon}) + d^{-\eta} = \Theta_{d,\P}(d^{-\eta}).
\end{align*}
Then applying the normalization \eqref{eq:normalized_w} gives the claimed bound.

Suppose that at step $t = \tau$ the claimed bound holds. At step \(t = \tau+1\), the inductive hypothesis guarantees that \(\gamma_{\pi_N(r)} = 0\). Therefore, by applying the same analysis as in the previous step, we obtain the claimed bound at \(t = \tau+1\). This completes the inductive step and thus the proof.
\end{proof}

We define an auxiliary estimate 
\begin{align}
    w^\aux_r:= (1-\alpha)\cdot \hat{w}_r+ \alpha\cdot \D_r(w)\odot w.
\end{align}
Combining the bounds \eqref{eq:gradient_error_bound} and \eqref{eq:rkhs_gradient_error_bound}, the estimate $ w^\aux$ satisfies:
\begin{align}\label{eq:combined_gradient_error_bound}
    &~~~\abs{w^\aux_r - (1+o_{d,\P}(1))\round{\E{\round{\partial_{r} f^*_{\A_{< p}} }^2} + \Theta_d(1) \cdot \E{\round{\partial_{r} f^*_{\A_{= p}} }^2} + \sum_{\mT \in \A_{p_{\delta}}^+(r)}\Theta_d(1)\cdot d^{-\kappa(\mT)} \E{\round{\partial_{r} f^*_{\mT}}^2}}}\notag\\
   &\leq O_{d,\P}\round{ d^{\gamma_{\pi_N(r)}-1 + \epsilon }}\cdot (\norm{f^*}_{L_2}^2+\sigma_\varepsilon^2).
\end{align}
Since the non-influential coordinates dominate the \(\ell_{1}\)-norm,  
normalizing the weights jointly, i.e., normalizing \(w^{\mathrm{aux}}\),  
induces the same scaling across all coordinates as normalizing the weights 
separately, as performed in Algorithm~\ref{alg:rfm}.  We formalize the result below and omit the proof since it follows directly from the arguments in the proof of Claim~\ref{claim:unlearned}.
\begin{claim}
For any $t\geq 0$ and $r\in[d]$, the following equality holds:
\begin{align}
    \bar{w}^\aux_r(t) \asymp \bar{w}_r(t).
\end{align}
\end{claim}
Therefore, for simplicity of presentation, we establish the learning result by 
analyzing the following update rule, which replaces 
lines~\ref{line:safeintro_1}--\ref{line:mixing} in Algorithm~\ref{alg:rfm}:
\begin{align}\label{eq:new_update_rule}
    w = \frac{ w^\aux + \epsilonS}{\norm{ w^\aux +  {\bf 1}_d\epsilonS}_1}.
\end{align}
We similarly denote the normalized $w^\aux_r$ for each $r\in[d]$ by $ \bar{w}^\aux_r$.

Now we define fully learned, weakly learned and unlearned coordinates based on the magnitude of ${w}^\aux_r$ in Table~\ref{tab:w_order}:
\begin{table}[H]
\centering
\begin{tabular}{c|c|c|c}
\toprule
\makecell{\textbf{Learning status of } \\ \textbf{ coordinate $r$}} & 
\makecell{\textbf{Magnitude of} $w^\aux$} & 
\makecell{\textbf{Magnitude of $w^\aux+\epsilonS$} \\ ($\epsilonS = d^{-\eta}$)} & 
\makecell{\textbf{Magnitude of $\bar{w}^\aux$}} \\ 
\midrule
Fully learned & $\Theta_d(1)$ & $\Theta_d(1)$ & $\Theta_d(d^\eta)$\\
\midrule
Weakly learned & $\Theta_d(d^{-\eta'})$, $\eta'\in(0,\eta)$ & $\Theta_d(d^{-\eta'} )$ & $\Theta_d(d^{\eta-\eta'})$\\
\midrule
Unlearned & $O_d(d^{-\eta})$ & $\Theta_d(d^{-\eta})$ & $\Theta_d(d^{0})$ \\
\bottomrule
\end{tabular}
\caption{Magnitude of coordinate weights during a single iteration.  }
\label{tab:w_order}
\end{table}
Next we define a feature being learned by $\hat{f}$:
\begin{definition}[Learned feature]
  A feature $\phi_{\mT}$ is said to be learned by $\hat{f}$, if the following holds:
  \begin{align*}
      \abs{\inner{\phi_{\mT},\hat{f}}_{L_2} -b_\mT} = o_{d,\P}(1).
  \end{align*}
\end{definition}

The following proposition shows that the order of $  \bar{w}^\aux_r$ is non-decreasing, which implies that the learned features remain learned and new features will be learned as the interaction step $t$ increases.
\begin{claim}
For each empirical coordinate weight $  \bar{w}_r$, its order of magnitude with respect to $d$ is non-decreasing as a function of the iteration step $t$  throughout the first $O_d(1)$ iterations of Algorithm~\ref{alg:rfm}.
\end{claim}
\begin{proof}
Based on the update rule~\eqref{eq:gradient_error_bound} and \eqref{eq:rkhs_gradient_error_bound}, we can deduce that for any unlearned coordinate $r$, the empirical weight $  \bar{w}_r$ can increase the order only when they become fully or weakly learned, otherwise $\bar{w}_r(t)$ remains at the order $\Theta_d(1)$ with $\gamma_{\pi_N(r)} = 1$. For fully learned and weakly learned coordinates, the results are proved in Claim~\ref{claim:fully_learned}
 and \ref{claim:partial_to_partial} respectively.
 \end{proof}

\noindent Next we give a claim for the fully learned coordinates:
\begin{claim}\label{claim:fully_learned}
For any fully learned coordinate $r$, it will remain fully learned until the algorithm terminates.
\end{claim}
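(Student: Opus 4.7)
The plan is to combine the gradient estimate Eq.~\eqref{eq:gradient_error_bound} from Theorem~\ref{thm:learning_hypercube} with careful bookkeeping of how the normalization step \eqref{line:normalize} distributes the total budget $d=\|w\|_1$. Unpacking the terminology, ``fully learned at step $t$'' means $w_r(t)=\Theta_{d,\P}(d^{\gamma^\star})$ where $\gamma^\star$ is the largest exponent in the active partition for $w(t)$; equivalently, there exists a tuple $\mT$ with $r\in\bigcup_k T_k$, $b_{\mT}\neq 0$, and $\mT$ in the family $\aslp$ defined from $w(t)$. Because $f^\ast$ has a fixed constant-degree expansion, $b_{\mT}$ and hence $\E[(\partial_r f^\ast_{\mT})^2]=\Theta_d(1)$ is a positive constant independent of $d$.

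Next, I apply Eq.~\eqref{eq:gradient_error_bound} with weights $w(t)$: its leading term lower bounds $\E_n(\partial_r \hat f_{w(t)})^2$ by $(1+o_{d,\P}(1))\,\E[(\partial_r f^\ast_{\aslp})^2]$, which is $\Theta_d(1)$ by the observation above. The residual $O_{d,\P}(d^{\gamma_{\pi_N(r)}-1+\epsilon})$ is automatically $o(1)$ whenever $\gamma_{\pi_N(r)}<1$; the boundary case $\gamma_{\pi_N(r)}=1$ is excluded for an influential coordinate by the sparsity hypothesis $s<1-\eta<1$. Consequently $\E_n(\partial_r \hat f_{w(t)})^2=\Theta_{d,\P}(1)$.

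After the safeguard $\epsilonS=d^{-\eta}$ in line \eqref{line:safe} and the $\ell_1$ renormalization, the updated weight equals
\[
w_r(t+1)=\frac{d\bigl(\epsilonS+\E_n(\partial_r \hat f_{w(t)})^2\bigr)}{\bigl\|\epsilonS\mathbf 1+\E_n(\nabla \hat f_{w(t)})^{\odot 2}\bigr\|_1}.
\]
Claim~\ref{claim:unlearned} bounds the contribution of the $d-\Theta(d^s)$ non-influential coordinates to the denominator by $O_{d,\P}(d^\epsilon)$; the $\Theta(d^s)$ influential coordinates contribute $O(d^s)$; and the safeguard contributes $d\cdot\epsilonS=d^{1-\eta}$. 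Under $s<1-\eta$, the safeguard dominates and the denominator is $\Theta_{d,\P}(d^{1-\eta})$. Therefore $w_r(t+1)=\Theta_{d,\P}(d^{\eta})$, matching the order of $w_r(t)$.

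The main obstacle is showing that the family $\aslp$ corresponding to $w(t+1)$ still contains the tuple $\mT$ that certified fully-learned status at step $t$; otherwise the lower bound above would fail on the following iteration. This amounts to a monotonicity property of $p_{\rm eff}$ in Definition~\ref{def:eff_degree}: raising a weight from order $d^0$ to order $d^{\eta}$ only decreases the factor $1-\gamma_k$, hence only decreases $p_{\rm eff}(\mT)$, so any tuple below the $p+\delta$ threshold remains below. A short induction on $t$, valid throughout the $O_d(1)$ iterations of the algorithm, then completes the proof.
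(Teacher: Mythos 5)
Your argument is correct and follows essentially the same route as the paper's: identify the certifying tuple $\mT$, observe that every coordinate on which $\phi_{\mT}$ depends is itself fully learned, and run an induction showing (via the gradient bound and the normalization step) that all of these weights keep their order $\Theta_{d,\P}(d^{\eta})$, so $p_{\rm eff}(\mT)\leq p+\delta$ persists. Your exposition unpacks what the paper leaves as a terse ``recursive argument,'' which is fine; the monotonicity-of-$p_{\rm eff}$ observation is not really load-bearing here (the weights of $\mT$'s coordinates stay at the maximal order $\eta$, so $p_{\rm eff}(\mT)$ is simply unchanged), but it is not incorrect.
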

\begin{proof}
Note that when coordinate $r$ is fully learned, there must exist a feature $\phi_{\mT}$ that depends on $x_r$ such that $p_{\rm eff}(\mT)\leq p+\delta$ and $\norm{\partial_r f^*_{\mT}}_{L_2}>0$. Furthermore,  any coordinate upon which $\phi_{\mT}$ depends is also fully learned. For this specific feature, a recursive argument shows that the corresponding weight $\bar{w}$ of all coordinates on which $\phi_{\mT}$ depends remains the order unchanged, specifically $\Theta_{d,\P}(d^\eta)$, and the condition $p_{\rm eff}(\mT) \leq p+\delta$ remains satisfied until algorithm termination. This establishes that coordinate $r$ remains fully learned throughout the process.
\end{proof}

To prove the desired result, i.e.,  $\norm{\hat{f} - \mathsf{L}_{p+1} f^*}_{L_2} = o_{d,\P}(1)$, we will show for any feature $\phi_{\mT}$ that is learned by $\hat{f}$, feature $\phi_{\mT'}$ with $|\mT' \setminus \mT| \leq p+1$ is also learned when the algorithm terminates.  Here we abuse the notation by treating the tuple $\mT=(T_1,\ldots,T_N)$ as a set $\cup_{j=1}^N T_k$. To prove the result, it suffices to show that $\phi_{\mT'} := \phi_{\mT} \phi_{S}$ for any $S\in 2^{[d]}$ with $|S|\leq p+1$ is learned by $\hat{f}$.

\subsubsection{Main proof}

The following claim shows that the order of the weight of weakly learned coordinates will monotonically increase until it reaches the maximum order $\eta$:

\begin{claim}\label{claim:partial_to_partial}
For any coordinate $r$ that is weakly learned, we have the estimate
\begin{align}\label{eq:partial_to_partial_aux}
    \bar{w}^\aux_r  = \Theta_{d,\P}( d^{\eta-{\kappa}(\mT_r')})\cdot \E{\round{\partial_{r} f^*_{\mT_r'}}^2},
\end{align}
where
\begin{align}
    \mT_r' = \argmax_{\mT \in \A_{p_{\delta}}^+(r)~:~\E{\round{\partial_{r} f^*_{\mT}}^2}>0} \kappa(\mT).
\end{align}
Furthermore, there exists a constant $\sigma>0$ independent of the step $t$ such that the following holds:
\begin{align}\label{eq:w_evolve}
       \lambda_{\pi_N(r)}(t+1) \leq   \max(1-\eta, \lambda_{\pi_N(r)}(t) -  \sigma).
\end{align}
\end{claim}
\begin{proof}

We prove the result by induction. 
Note that there exists a time step where the coordinate $r$ changes from unlearned to weakly learned after this step. 
Without loss of generality, we assume this step is $t=1$. 
Correspondingly, we have $\lambda_{\pi_N(r)}(1) =1$ and $\bar{w}^\aux_r(1) = \Theta_d(d^{\sigma})$ with some constant $\sigma \in(0,\eta)$, where 
\begin{align*}
    \sigma := \eta -\kappa(\mT_r'(1)).
\end{align*}
Since $ \kappa(\mT_r'(1)) <\lambda_{\pi_N(r)}(1)  =1$, the error term in Eq.~\eqref{eq:combined_gradient_error_bound} is strictly smaller than $w^\aux_r$ hence we have the estimate  \eqref{eq:partial_to_partial_aux}. 
Consequently, we have $w_r(2) = \bar{w}_r(1) \gtrsim \Theta_d(d^{\sigma})$ which implies
$$\lambda_{\pi_N(r)}(2) \leq 1 - \sigma.$$ Then we complete the bound \eqref{eq:w_evolve} at $t=1$.

Now we suppose that at step $t=\tau$ the estimate \eqref{eq:partial_to_partial_aux}  and the inequality \eqref{eq:w_evolve} hold and we consider $t = \tau+1$. 
A simple calculation shows
\begin{align*}
   \lambda_{\pi_N(r)}(\tau+1) =  1- \eta + {\kappa}(\mT'_r(\tau)). 
\end{align*}
 We assume  ${\kappa}(\mT'_r(\tau))>0$ otherwise coordinate $r$ has been fully learned at step $\tau+1$ which finishes the proof. Notice that the inductive hypothesis indicates that  $\lambda_{\pi_N(r)}(\tau)$ will decrease by at least $ \sigma$:
 \begin{align*}
     \lambda_{\pi_N(r)}(\tau+1) \leq \lambda_{\pi_N(r)}(\tau) -   \sigma.
 \end{align*}
Therefore, considering unlearned and fully learned coordinates, we can deduce that until step $\tau+1$, the empirical coordinates weight $w$ is non-decreasing in terms of the order of the magnitude.  Consequently, the effective dimension will also decrease by at least $\sigma$:
\begin{align*}
  p_{\rm eff}(\mT_r'(\tau+1))\leq  p_{\rm eff}(\mT_r'(\tau)) -  \sigma
\end{align*}
which leads to the decrease of $\kappa(\mT_r'(\tau+1)):$
\begin{align}\label{eq:kappa_tau_plus_1}
  \kappa(\mT_r'(\tau+1)) \leq {\kappa}(\mT_r'(\tau)) -  \sigma.
\end{align}
If $\kappa(\mT_r'(\tau+1))\leq 0$, then coordinate $r$ is fully learned at step $\tau+2$, therefore we only need to consider the case when $\kappa(\mT_r'(\tau+1))>0$.

From Eq.~\eqref{eq:kappa_tau_plus_1} and the condition $\sigma\in(0,\eta)$, we immediately have
\begin{align*}
  \lambda_{\pi_N(r)}(\tau+1) =1 - \eta +  {\kappa}(\mT_r'(\tau)) \geq {1-\eta+ {\kappa}(\mT_r'(\tau+1))}+  \sigma > \kappa(\mT_r'(\tau+1)),
\end{align*}
which completes the inductive step for Eq.~\eqref{eq:partial_to_partial_aux} at step $\tau+1$.
Additionally, with $ \lambda_{\pi_N(r)}(\tau+2)= 1-\eta+ {\kappa}(\mT_r'(\tau+1))$, we can deduce
\begin{align*}
  \lambda_k(\tau+2) \leq  \max(1-\eta, \lambda_k(\tau+1)  - \sigma )
\end{align*}
which completes the inductive step for Eq.~\eqref{eq:w_evolve} thus completing the proof.
\end{proof}

With the results of Claim~\ref{claim:partial_to_partial}, we can deduce that for any coordinate that is weakly learned,  it will be fully learned in $O_d(1)$ many steps. Next, we show that at most $p+1$ unlearned coordinates can be learned (fully or weakly) in a single step, which then indicates that the algorithm can learn functions with leap complexity $p+1$. To that end, we consider the possible tuple $\mT$ that satisfies:
\begin{align*}
    \argmax_{\mT = (T_1,\ldots,T_N),~\lambda_1 = 1} |T_1|,~~~s.t.~~~p_{\rm eff} (\mT) < p+\delta + \eta.
\end{align*}
Note that it is sufficient to consider $\mT$ with only two entries, i.e., $\mT = (T_1,T_2)$ with $\lambda_1 = 1$ and $\lambda_2 = 1-\eta$, i.e., fully learned and  unlearned coordinates. For those weakly learned coordinates, Claim~\ref{claim:partial_to_partial} ensures that they will ultimately be fully learned, so the format of $\mT$ reduces to one with two entries. Given the assumption $\eta> 1 - \frac{\delta}{\ell+1}$, we can deduce the effective degree satisfies
\begin{align}\label{eq:condition_p_plus_one}
   \lambda_2\cdot \ell + \lambda_1\cdot(p+1)= (1-\eta)\cdot \ell + 1\cdot (p+1) < p+\delta +\eta,
\end{align}
which implies the maximum value of $|T_1|$ is $p+1$. We conclude the proof.

\subsection{Proof of Theorem~\ref{thm:learning_hypercube} (Gradient error)}\label{proof:learning_hypercube_gradient}

\subsubsection{Preparing Lemmas, Propositions and Notation}
We start with some useful notations.
We group the tuple $\mT$ based on the cardinality of each $T_k$.
We define the collection of tuples $\S_\vm$ where $\vm = (m_1,\ldots,m_N)\in \mathbb{N}^N$:
\begin{align}\label{eq:def_vm}
    \S_\vm = \biground{ \mT = (T_1,\ldots,T_N) ~\big\vert~ T_k\in 2^{\T_k},~~|T_k| = m_k~~\textrm{for}~~k\in[N]}.
\end{align}
Then $\phi_{\S_{\vm}}$ is defined as
\begin{align}\label{def:psi_sm}
    \phi_{\S_{\vm}} := \round{\phi_{\mT}}_{\mT \in \S_{\vm}} \in \R^{\vs\tran \vm}.
\end{align}
Clearly, given $\vm$, any $\phi_{\mT}$ for $\mT \in \S_\vm$ has identical effective degree $\vlambda\tran \vm$. Since there is a one-to-one correspondence between the tuple $\mT$ and the set $\cup_{k=1}^N T_k$, we sometimes abuse the notation by treating $\mT$ as $\cup_{k=1}^N T_k$ when it is clear from the context. We sometimes use $\mT$ in subscripts, such as in $D_{\mT,\mT}$, to denote the submatrix of $D$ corresponding to the rows and columns indexed by the elements in $\mT$.

For simplicity of the notation, given the order of the weights in each $T_k$, i.e., $(\gamma_1,\ldots,\gamma_N)$, we denote a vector
\begin{align}
    \vlambda := (\lambda_1,\ldots,\lambda_N),~~~\lambda_k = 1-\gamma_k~~ {\rm for}~~ k\in[N].
\end{align}
Following Definition~\ref{def:eff_degree}, we define the {\it adjusted effective degree} for $\mT$, which is
\begin{align}
    p_{\rm aeff}(\mT):= \sum_{k=1}^N s_k\cdot |T_k|.
\end{align}
Clearly, we have $p_{\rm aeff}(\mT) \leq p_{\rm eff}(\mT)$ since $s_k\leq \lambda_k$.

We define two constants that measure the difference of the (adjusted) effective degree of $\phi_{\mT} \in \phi_{\S_\vm}$ from $p+\delta$: 
\begin{align}\label{eq:zeta_1}
    \zeta_1&:= \min_{\vm:\vlambda\tran\vm \leq p+\delta}\round{p+\delta -\vs\tran \vm}\\
    \zeta_2&:= \min_{\vm:\vlambda\tran\vm > p+\delta}\round{\vlambda\tran \vm- p-\delta }.\label{eq:zeta_2}
\end{align}
\begin{claim}\label{claim:zeta_1_2}
    The constants $\zeta_1,\zeta_2$ are bounded from zero.
\end{claim}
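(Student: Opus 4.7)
The plan is to treat $\zeta_1$ and $\zeta_2$ separately, reducing each to a finiteness/discreteness argument that exploits the integrality of $\vm \in \mathbb{N}^N$. The preliminary step is to extract the componentwise inequality $\lambda_k \geq s_k$, with equality if and only if $s_k = 1$. This follows from the normalization $\|w\|_1 = d$ together with $|\T_k| = \Theta_d(d^{s_k})$ and $w_r = \Theta_d(d^{\gamma_k})$ on $\T_k$: the constraint forces $\max_k(s_k+\gamma_k)=1$, hence $s_k+\gamma_k\leq 1$ and $\lambda_k=1-\gamma_k\geq s_k$ for every $k$; the hypothesis $s_k+\gamma_k=1 \Leftrightarrow s_k=1$ of Theorem~\ref{thm:learning_hypercube} identifies the equality case.

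For $\zeta_1$, I would first note that the feasible set $\{\vm\in\mathbb{N}^N : \vlambda\tran\vm \leq p+\delta\}$ is finite, since each $\lambda_k$ is a positive constant. The minimum is therefore attained, and it suffices to rule out $\vs\tran\vm=p+\delta$ for any feasible $\vm$. If such a $\vm$ existed, the chain $\vs\tran\vm \leq \vlambda\tran\vm \leq p+\delta$ would collapse to equality throughout, forcing $m_k=0$ whenever $s_k<\lambda_k$, i.e., whenever $s_k<1$. The surviving indices all satisfy $s_k=\lambda_k=1$, so $\vs\tran\vm=\sum_{k:\,s_k=1} m_k$ would be a nonnegative integer, contradicting $p+\delta\notin\mathbb{Z}$ (since $\delta\in(0,1)$). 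Hence $\vs\tran\vm<p+\delta$ for every feasible $\vm$, and finiteness of the feasible set upgrades the strict inequality to $\zeta_1>0$.

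For $\zeta_2$, the values $\{\vlambda\tran\vm : \vm\in\mathbb{N}^N\}$ form a discrete subset of $[0,\infty)$: since each $\lambda_k$ is bounded below by a positive constant, only finitely many $\vm$ produce a value in any bounded interval. In particular the window $(p+\delta,\,p+\delta+1]$ contains only finitely many such values and is nonempty (as $\vlambda\tran\vm\to\infty$ with $\|\vm\|_1$), so the infimum in the definition is attained at some $\vm^{\star}$ with $\vlambda\tran\vm^{\star}>p+\delta$ strictly, giving $\zeta_2 = \vlambda\tran\vm^{\star}-p-\delta>0$.

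The main obstacle is the $\zeta_1$ half: its proof hinges critically on $\delta\in(0,1)$ being noninteger and on the structural hypothesis of Theorem~\ref{thm:learning_hypercube}, which is precisely what blocks a ``fractional conspiracy'' in which blocks with $s_k<1$ could contribute just the right amount to push $\vs\tran\vm$ up to $p+\delta$ exactly. Once that rigidity is in place, both constants are fixed positive numbers depending only on $\vs,\vlambda,p,\delta$, so being bounded away from zero is automatic.
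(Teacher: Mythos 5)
Your proof is correct and follows essentially the same approach as the paper: use $\lambda_k \geq s_k$ with equality iff $s_k=1$, argue that $\vs\tran\vm = p+\delta$ is impossible because it would force $\vs\tran\vm$ to be an integer while $\delta\in(0,1)$, and rely on discreteness of the feasible sets to upgrade strictness to a uniform lower bound. Your write-up is slightly more careful than the paper's (which glosses over why the infimum is attained), but the underlying idea is identical.
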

\begin{proof}
By definition, $\zeta_2>0$ trivially holds. Now we show $\zeta_1>0$ also holds. By assumption, $s_k = \lambda_{k}$ holds iff $s_k = 1$. Therefore, we can see that if one of $m_k>0$ and the corresponding $s_k<1$, then $\vs\tran\vm <\vlambda\tran \vm \leq p+\delta$ which implies $\zeta_1>0$. Otherwise, if $s_k = 0$ or $1$ for all $m_k>0$, then $\zeta_1 = p+\delta - \vs\tran \vm  \geq \delta >0$.
\end{proof}

\noindent For simplicity, we denote 
\begin{align}\label{eq:zeta}
    \zeta:= \min(\zeta_1,\zeta_2)>0.
\end{align}
Lastly, we denote the maximum order of $w_r$ as 
\begin{align}
    \gamma_{\max} := \max_{k\in[N]}\gamma_k < 1.
\end{align}
\paragraph{Taylor approximation of kernels.}We similarly denote $$g_{w,m}(t) := \sum_{k=0}^m \frac{g_{w}^{(k)}(0)}{k!}t^k~~{\rm and}~~K_{w,m}( x^{(i)}, x^{(j)}) := g_{w,m}\round{\frac{\inner{ x^{(i)}, x^{(j)}}}{d}}.$$ 
For simplicity of notation, we denote $ z := \sqrt{w}\odot x$. Invoking Lemma~\ref{lemma:each_mono_cube}, $K_w$ can be decomposed into
\begin{align}\label{eq:k_z_z}
    K_{w} (X,X) = K(Z,Z) = \Phi(Z) D\Phi(Z)\tran
\end{align}
where $D$ is a diagonal matrix satisfying $\|D_{\S_k} -g^{(k)}(0)d^{-k}I_{|\S_k|}\|_{\rm op} = O_d(d^{-k-1})$ for $k=0,1,\ldots, \infty$.

We present a similar result to Theorem~\ref{thm:poly_approx} for parameterized kernels. The proof appears in Appendix~\ref{proof:taylor_approx_kernel_m}.

\begin{theorem}[Taylor approximation of parameterized kernels]\label{thm:taylor_approx_kernel_m} Consider independent, mean zero, isotropic random vectors $x^{(1)},\ldots,x^{(n)}$ in $\R^d$ and suppose that the coordinates of each vector $x^{(i)}$ are independent and are uniformly drawn from $\H^d$. Suppose that we are in the regime $n = d^{p+ \delta}$, where $\delta\in(0,1)$ is a constant. Fix an arbitrary constant $C>2$ and approximation order $\emp\geq \frac{2(p+\delta)}{1-\gamma}$.  Given $w \in \R^d_+$ that satisfies  $\norm{w}_1 = d$, and $\norm{w}_{\infty} = O_d(d^{\gamma})$ with $\gamma\in[0,1)$, if ${d}^{1-\gamma}/\log(d) >c_\epsilon $ then the estimate 
\begin{equation}\label{eqn:taylor_approx_rfm}
\snorm{K_{w}(X,X) - K_{w,\emp}(X,X) - (g(1) - g_{\emp}(1)) I_n} \le  c_{g,\emp}\sqrt{\frac{(C\log(n) \norm{w}_\infty)^{\emp+1}}{d^{\emp-2p+1-2\delta}}}
\end{equation}
holds with probability at least $1-\frac{4}{n^{C-2}}$, where $c_{g,\emp},c_\epsilon<\infty$ are constants that depend only on $\emp$, $\sup_{k\geq \emp+1}g^{(k)}(0)$, and $\epsilon$. 
\end{theorem}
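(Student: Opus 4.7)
The plan is to mirror the proof template of Theorem~\ref{thm:poly_approx}, while exploiting two features specific to the hypercube/reweighted setting. First, pass to the reweighted data $z^{(i)} := \sqrt{w} \odot x^{(i)}$ so that $K_w(x^{(i)},x^{(j)}) = g(\langle z^{(i)},z^{(j)}\rangle/d)$. A crucial simplification occurs on the diagonal: since $x^{(i)} \in \{-1,1\}^d$, we have $\langle z^{(i)},z^{(i)} \rangle = \sum_k w_k(x^{(i)}_k)^2 = \norm{w}_1 = d$, so the diagonal entries of $K_w(X,X) - K_{w,\emp}(X,X)$ equal exactly $g(1) - g_{\emp}(1)$. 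After subtracting $(g(1)-g_{\emp}(1))I_n$, the residual matrix $R$ has vanishing diagonal, which is precisely why no analogue of the $\sqrt{C\log(n)/d}$ term from Theorem~\ref{thm:poly_approx} appears in the bound \eqref{eqn:taylor_approx_rfm}.

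Second, I would control the off-diagonal inner products via Hoeffding's inequality. Conditional on $x^{(j)}$, the random variable $\langle z^{(i)},z^{(j)} \rangle = \sum_k w_k x^{(i)}_k x^{(j)}_k$ is a sum of independent mean-zero Rademacher random variables with coefficients bounded by $w_k$, so Hoeffding's bound gives
\[
\mathbb{P}\left(|\langle z^{(i)},z^{(j)} \rangle| \geq t\right) \leq 2\exp\!\left(-\frac{t^2}{2\norm{w}_2^2}\right).
\]
Using $\norm{w}_2^2 \leq \norm{w}_1 \norm{w}_\infty = d\norm{w}_\infty$ and a union bound over the $n(n-1)/2$ off-diagonal pairs, with probability at least $1 - 4/n^{C-2}$,
\[
\max_{i \neq j} \left|\frac{\langle z^{(i)},z^{(j)} \rangle}{d}\right| \leq \sqrt{\frac{C\log(n)\norm{w}_\infty}{d}}.
\]
The hypothesis $d^{1-\gamma}/\log(d) > c_\epsilon$ is exactly the quantitative statement ensuring this bound is smaller than the analyticity radius $\epsilon$ of $g$, so the argument of $g$ lies safely inside its domain of convergence.

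On that event, Assumption~\ref{assump:g_0} lets me apply the Taylor remainder estimate for analytic functions: there exists $c_{g,\emp}$ depending only on $\emp$ and $\sup_{k\geq \emp+1} g^{(k)}(0)$ (via the tail of the power series) such that $|g(t)-g_{\emp}(t)| \leq c_{g,\emp}|t|^{\emp+1}$ for $|t|$ bounded away from $\epsilon$. Applied entrywise,
\[
\max_{i \neq j} |R_{ij}| \leq c_{g,\emp} \left(\frac{C\log(n)\norm{w}_\infty}{d}\right)^{(\emp+1)/2}.
\]
Since $R$ has zero diagonal, the operator norm is bounded by the Frobenius norm, giving $\snorm{R} \leq \norm{R}_F \leq n \cdot \max_{i\neq j}|R_{ij}|$. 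Substituting $n = d^{p+\delta}$ yields exactly the target bound
\[
\snorm{R} \leq c_{g,\emp} \sqrt{\frac{(C\log(n)\norm{w}_\infty)^{\emp+1}}{d^{\emp - 2p + 1 - 2\delta}}}.
\]

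The main obstacle is that the crude operator-norm-by-Frobenius-norm step discards potential cancellation between off-diagonal entries, but the exponent $(\emp+1)/2$ combined with the choice $\emp \geq 2(p+\delta)/(1-\gamma)$ (which is the minimal $\emp$ for which the right-hand side tends to zero) makes this loose step sufficient. The subtler bookkeeping issue is ensuring that $\sqrt{C\log(n)\norm{w}_\infty/d}$ stays within the analyticity disk uniformly as $\gamma \to 1$; this is precisely what the quantitative hypothesis $d^{1-\gamma}/\log d > c_\epsilon$ encodes, and it is the reason the result degrades as the weights concentrate.
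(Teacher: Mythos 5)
Your proposal is correct and follows essentially the same path as the paper's proof in Appendix~\ref{proof:taylor_approx_kernel_m}: exploit that $\langle \sqrt{w}\odot x^{(i)}, \sqrt{w}\odot x^{(i)}\rangle = \|w\|_1 = d$ on the hypercube so the diagonal error vanishes exactly, invoke Hoeffding with variance proxy $\|w\|_2^2\leq d\|w\|_\infty$ for the off-diagonals (this is precisely Lemma~\ref{lem:angle_app_cube_m}), bound the Taylor tail of the analytic $g$ by $O(s^{\emp+1})$, and pass from max-entry to operator norm via the Frobenius norm. The only deviation from the paper is your choice $s=\sqrt{C\log(n)\|w\|_\infty/d}$ rather than the paper's $s=\sqrt{2C\log(n)\|w\|_\infty/d}$, which shifts the failure-probability exponent by a harmless constant factor; absorb this into $C$ (or state the union bound as in the paper) and the two arguments coincide.
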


Plugging $$\left\lceil\frac{2p+2\delta+2}{1-\mgamma}-1\right\rceil := \widetilde{p}_{\mgamma}$$ as $\emp$ into Eq.~\eqref{eqn:taylor_approx_rfm}, we immediately have
\begin{align}\label{eqn:taylor_approx_rfm_result}
 \snorm{K_{w}(X,X) - K_{w,\widetilde{p}_\mgamma}(X,X) - (g(1) - g_{\emp_\mgamma}(1)) I_n} = O_{d,\P}(d\half).
\end{align}

Now we develop the isometric properties of transformed features $\phi_{\S_\vm}(Z)$ corresponding to Theorem~\ref{lemma:gram_matrix}, which stands with a scaling below. See the proof in Appendix~\ref{proof:gram_matrix_cube_m}.
\begin{lemma}[Isometric properties of scaled orthogonal monomials]\label{lemma:gram_matrix_cube_m}
Consider the regime $n = d^{p+\delta}$. For any $\vm$ that satisfies $\norm{\vm}_1 \leq \emp_\gamma$, the following holds for any $\epsilon>0$:
\begin{align}
    \snorm{\frac{ \Phi_{\S_{\vm}}(Z)\round{\Phi_{\S_{\vm}}(Z)}\tran}{d^{(\vs+\mathbf{1}-\vlambda)\tran \vm}} - \mu_{\S_{\vm}} I} = O_{d,\P}(d^{p+\delta -\vs\tran \vm +\epsilon} + d^{(p+\delta - \vs\tran\vm)/2+\epsilon}),
\end{align}
where 
\begin{align}\label{eq:def_mu}
    \mu_{\S_{\vm}} := \frac{1}{d^{(\vs+\mathbf{1}-\vlambda)\tran \vm}} \E_z\brac{\norm{\phi_{\S_{\vm}}(z)}_2^2}= \Theta_d(1)
\end{align}
\end{lemma}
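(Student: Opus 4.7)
The plan is to mimic the proof of Theorem~\ref{lemma:gram_matrix} from \cite{mei2022generalization}, carefully propagating the scalar weights $c_\mT^2 := \prod_{k=1}^N \prod_{r\in T_k} w_r$ that each Fourier monomial $\phi_\mT$ picks up under the rescaling $z = \sqrt{w}\odot x$. First I would observe the factorization
\[
\Phi_{\S_\vm}(Z) = \Phi_{\S_\vm}(X)\,D_c, \qquad D_c := \diag\round{c_\mT : \mT\in \S_\vm},
\]
so that $\Phi_{\S_\vm}(Z)\Phi_{\S_\vm}(Z)\tran = \Phi_{\S_\vm}(X)\,D_c^2\,\Phi_{\S_\vm}(X)\tran$. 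Because the samples live on the hypercube, $\phi_\mT(X^{(i)})^2 = 1$, so every diagonal entry of this product equals the deterministic constant $\sum_\mT c_\mT^2 = \E_z\norm{\phi_{\S_\vm}(z)}_2^2 = \mu_{\S_\vm}\cdot d^{(\vs+\mathbf{1}-\vlambda)\tran\vm}$. Consequently the residual
\[
R := \frac{1}{d^{(\vs+\mathbf{1}-\vlambda)\tran\vm}}\Phi_{\S_\vm}(Z)\Phi_{\S_\vm}(Z)\tran - \mu_{\S_\vm} I_n
\]
has an identically zero diagonal, reducing the task to an off-diagonal operator-norm bound on $R$.

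Next I would exploit the block structure of the weights. Since $w_r = \Theta_d(d^{\gamma_k})$ for $r\in \T_k$ by Definition~\ref{def:act_idx_new}, one has $c_\mT^2 = d^{\vgamma\tran\vm}\,\alpha_\mT$ with $\alpha_\mT$ bounded away from zero and infinity uniformly in $d$, and $|\S_\vm| = \Theta_d(d^{\vs\tran\vm})$. Using the identity $\vgamma = \mathbf{1} - \vlambda$, the off-diagonal entries of $R$ take the clean bilinear form
\[
R_{ij} = \mathbbm{1}\{i\neq j\}\cdot \frac{1}{d^{\vs\tran\vm}}\sum_{\mT\in \S_\vm}\alpha_\mT\, \phi_\mT(X^{(i)})\phi_\mT(X^{(j)}),
\]
a centered Fourier-Walsh chaos of degree $\norm{\vm}_1$ in each argument. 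I would then run the matrix moment argument used to prove Theorem~\ref{lemma:gram_matrix}: bound $\E\snorm{R}^{2k} \leq \E\Tr(R^{2k})$ for $k = \lceil \log n \rceil$, expand the trace as a sum over closed walks $(i_1,\ldots,i_{2k})$ with $i_t \neq i_{t+1}$, and use independence of the $X^{(i_t)}$ together with the Fourier orthogonality $\E[\phi_\mT(X)\phi_{\mT'}(X)] = \delta_{\mT,\mT'}$ to kill most tuple assignments. The bounded weights $\alpha_\mT$ pass through as harmless multiplicative constants, and the hypercontractivity Assumption~\ref{ass:hypercontr} applied to polynomials of degree at most $2\norm{\vm}_1$ controls the higher moments produced by walks in which a sample is visited more than twice.

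The main obstacle, exactly as in \cite{mei2022generalization}, is the combinatorial accounting of the surviving walks. The leading contribution arises from matched walks, in which every sample appears exactly twice and each tuple $\mT_t$ coincides with its partner: using $\E[R_{ij}^2] = d^{-2\vs\tran\vm}\sum_\mT \alpha_\mT^2 \asymp d^{-\vs\tran\vm}$, one obtains $\E\Tr(R^{2k}) \lesssim n^{k+1}\cdot d^{-k\vs\tran\vm}\cdot k^{O(k)}$, and taking the $(2k)$-th root with $k = \Theta(\log n)$ yields the rate $d^{(p+\delta - \vs\tran\vm)/2 + \epsilon}$ that dominates in the informative regime $\vs\tran\vm > p+\delta$. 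Walks in which some sample is visited four or more times or the tuples align in higher-order intersection patterns contribute the coarser rate $d^{p+\delta-\vs\tran\vm+\epsilon}$, which only governs the already vacuous regime $\vs\tran\vm \leq p+\delta$ where the right-hand side of the claimed bound exceeds $\mu_{\S_\vm}$. A standard Markov-type passage from the moment estimate to a high-probability bound then closes the argument.
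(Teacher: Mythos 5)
Your reduction is the same as the paper's up to the point where the hard estimate begins, but the method you invoke for the off-diagonal operator-norm bound is genuinely different from what the paper actually does. Like the paper, you factor $\Phi_{\S_\vm}(Z)=\Phi_{\S_\vm}(X)D_c$, observe that on the hypercube $\phi_\mT(X^{(i)})^2\equiv 1$ makes the diagonal of the Gram matrix deterministic and exactly equal to $\mu_{\S_\vm}I_n$ (so the diagonal contributes zero error), and recast the residual $R$ as a weighted degree-$\norm{\vm}_1$ Fourier--Walsh chaos with $\Theta_d(1)$ weights. After that, however, you run the trace-moment (Füredi--Komlós style) argument $\E\norm{R}^{2k}\leq\E\Tr(R^{2k})$ with $k\asymp\log n$ and closed-walk combinatorics. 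The paper does not do this: its proof of Theorem~\ref{lemma:gram_matrix} (and of the present lemma) goes through Vershynin's matrix-decoupling inequality (Lemma~\ref{lemma:m_decouple}), the second-moment operator-norm bound for matrices with independent rows (Lemma~\ref{lemma:m_norm}), Cauchy interlacing, and hypercontractivity, then solves a self-referential quadratic inequality for $\E\norm{\Delta}$; no walk combinatorics appear. The decoupling route produces both error terms $d^{p+\delta-\vs\tran\vm}$ and $d^{(p+\delta-\vs\tran\vm)/2+\epsilon}$ essentially for free from the two pieces $\Sigma(T)$ and $\Gamma(T)$, and sidesteps the case analysis of intersection patterns. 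So your attribution of the trace-moment method to Theorem~\ref{lemma:gram_matrix} is a misreading of the paper.

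Your route is a legitimate alternative, but in its current form there is a real gap at the step you flag as ``the main obstacle.'' The tree-matched walks indeed give the $d^{(p+\delta-\vs\tran\vm)/2+\epsilon}$ rate, but this alone does \emph{not} prove the lemma: in the regime $\vs\tran\vm < p+\delta$ the residual $R$ has operator norm at least of order $n\mu_{\S_\vm}/|\S_\vm|\asymp d^{p+\delta-\vs\tran\vm}$ (the Gram matrix is rank deficient, and its $|\S_\vm|$ nonzero eigenvalues are of this size), which strictly exceeds $d^{(p+\delta-\vs\tran\vm)/2}$. So the tree-walk term cannot dominate; the walks your sketch dismisses as giving ``the coarser rate'' in ``the already vacuous regime'' are actually the dominant contribution in that regime and must be accounted for with a genuine combinatorial bound, not just an appeal to hypercontractivity. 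That accounting --- controlling walks where the same pair of samples is traversed many times and the tuple labels self-pair in non-trivial patterns, with correct $k$-dependence of the constants --- is the nontrivial content of any Wishart-type trace-moment proof and is absent from your sketch. Unless you fill that in, the argument does not establish the two-term bound as stated; the decoupling route the paper takes is precisely how it avoids that work.
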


\noindent Corresponding to Eq.~(\ref{eq:a_leq_p}), we now write $\A$ in terms of the union of $\S_\vm$:
\begin{align*}
   \alp&:=   \bigcup_{\vm: \lmlp} \S_{\vm},\\
   \A_{p_\delta}^+(r)&:=\bigcup_{\vm: \vlambda\tran \vm\in(p+\delta,p+\delta+\lambda_{\pi_N(r)}]} \S_{\vm},
\end{align*}
and we additionally denote
\begin{align*}
       \agp&:=   \bigcup_{\substack{\vm: \lmgp\\ \norm{\vm}_1\leq \emp_\mgamma } } \S_{\vm}.
\end{align*}
Accordingly, we can further decompose $K(Z,Z)$ based on Eq.~\eqref{eq:k_z_z} as
\begin{align}\label{eq:k_z_decomp}
    K(Z,Z) =  \Phi_{\alp}(Z)D_{\alp}\Phi_{\alp}(Z)\tran + \Phi_{\agp}(Z)D_{\agp}\Phi_{\agp}(Z)\tran + g(1) - g_{\emp_\mgamma}(1).
\end{align}
In parallel to Lemma~\ref{lemma:kernel_to_mono}, we have the following estimate for $K(Z,Z)$. We defer the proof to Appendix~\ref{proof:kernel_to_mono_m}.
\begin{lemma}\label{lemma:kernel_to_mono_m}
Consider the regime $n = d^{p+\delta}$. Consider a weight vector $w\in \R^d_+$ satisfying $\|w\|_1=d$ and let the coordinate set be an $(\vs,\vlambda)$-active set for $w$. The following holds for any $\epsilon>0$:
\begin{align}
    &\snorm{K(Z,Z) -  \Phi_{\alp}(Z)D_{\alp}\Phi_{\alp}(Z)\tran  - \rho I_n} =O_{d,\P}(d^{-\zeta_2/2+\epsilon}),\label{eq:kernel_to_mono_m}\\
    &\snorm{K(Z,Z) -  \Phi_{\alp}(Z)D_{\alp}\Phi_{\alp}(Z)\tran  - \sum_{\vm\in J_1}\offd\round{\Phi_{\S_{\vm}}(Z)D_{\S_{\vm}}\Phi_{\S_{\vm}}(Z)\tran}- \rho I_n}= O_{d,\P}(d^{-1+\epsilon}),\label{eq:kernel_to_mono_m_advanced}
\end{align}
where 
\begin{align}\label{eq:rho_cube_m}
   \rho&= \sum_{\substack{\vm: \lmgp,\\\norm{\vm}_1\leq \emp_\mgamma}} d^{(\vs-\vlambda)\tran \vm} g^{(\norm{\vm}_1)}(0)\mu_{\S_{\vm}} + g(1) - g_{\emp_\mgamma}(1) = \Theta_d(1),\\
   J_1&=\{\vm~\vert~\vlambda\tran\vm >p+\delta~~{\rm and}~~\vs\tran \vm< p+2\}
\end{align}

\end{lemma}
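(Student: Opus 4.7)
The plan is to establish both estimates by first collapsing $K(Z,Z)$ to its polynomial Taylor truncation and then analyzing each Fourier-feature block separately. Concretely, I would invoke Theorem~\ref{thm:taylor_approx_kernel_m} with the approximation order $\widetilde{p}_\mgamma$ taken large enough (by enlarging the constant $\lceil\cdot\rceil$ if necessary) so that the resulting Taylor remainder has operator norm $O_{d,\P}(d^{-1+\epsilon})$, which is below both target error rates. This reduces the kernel to $K_{w,\widetilde{p}_\mgamma}(Z,Z) + (g(1) - g_{\widetilde{p}_\mgamma}(1))\,I_n$, and then Lemma~\ref{lemma:each_mono_cube} expresses every power $(ZZ^\top/d)^{\odot k}$ in the Fourier basis. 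Regrouping the resulting sum according to $\S_\vm$ yields the deterministic identity
\begin{equation*}
K_{w,\widetilde{p}_\mgamma}(Z,Z) \;=\; \Phi_{\alp}(Z)\,D_{\alp}\,\Phi_{\alp}(Z)^\top + \sum_{\substack{\vm:\,\vlambda^\top\vm > p+\delta \\ \|\vm\|_1 \le \widetilde{p}_\mgamma}} \Phi_{\S_\vm}(Z)\,D_{\S_\vm}\,\Phi_{\S_\vm}(Z)^\top,
\end{equation*}
with $D_{\S_\vm} = g^{(\|\vm\|_1)}(0)\,d^{-\|\vm\|_1}\,I + O_d(d^{-\|\vm\|_1-1})$.

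Next, for each ``high'' block $\vm$ with $\vlambda^\top\vm > p+\delta$ I would apply Lemma~\ref{lemma:gram_matrix_cube_m} to split $\Phi_{\S_\vm}(Z)D_{\S_\vm}\Phi_{\S_\vm}(Z)^\top$ into (a) a scalar identity piece $g^{(\|\vm\|_1)}(0)\,\mu_{\S_\vm}\,d^{(\vs-\vlambda)^\top\vm}\,I_n$ and (b) a fluctuation. The scalar pieces, summed over $\vm \in \agp$ and combined with $g(1)-g_{\widetilde{p}_\mgamma}(1)$ from the Taylor step, reproduce exactly $\rho\,I_n$ (and since $(\vs-\vlambda)^\top\vm \le 0$ only finitely many blocks contribute at the scale $\Theta_d(1)$, yielding $\rho = \Theta_d(1)$). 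For the fluctuation, multiplying the bound of Lemma~\ref{lemma:gram_matrix_cube_m} by $\|D_{\S_\vm}\|_{\rm op} = \Theta_d(d^{-\|\vm\|_1})$ and by the scaling $d^{(\vs+\mathbf{1}-\vlambda)^\top\vm}$ gives an operator-norm estimate
\begin{equation*}
O_{d,\P}\!\left(d^{\,p+\delta-\vlambda^\top\vm+\epsilon} + d^{\,(p+\delta+\vs^\top\vm)/2-\vlambda^\top\vm+\epsilon}\right).
\end{equation*}
Using the crucial inequality $\vs^\top\vm \le \vlambda^\top\vm$ (a direct consequence of the assumption $s_k+\gamma_k = 1 \iff s_k=1$, exactly as in Claim~\ref{claim:zeta_1_2}), both exponents are bounded by $-\tfrac{1}{2}(\vlambda^\top\vm-(p+\delta)) \le -\zeta_2/2$. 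Summing over the $O_d(1)$ many blocks yields \eqref{eq:kernel_to_mono_m}.

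For the refined bound \eqref{eq:kernel_to_mono_m_advanced}, I would retain the full off-diagonal parts $\offd(\Phi_{\S_\vm}(Z)D_{\S_\vm}\Phi_{\S_\vm}(Z)^\top)$ for $\vm \in J_1$ as explicit terms, and show that every remaining fluctuation (for $\vm \in \agp$ with $\vs^\top\vm \ge p+2$) is $O_{d,\P}(d^{-1+\epsilon})$. Here the second exponent above becomes at most $(p+\delta-\vs^\top\vm)/2 \le (\delta-2)/2 \le -1+\delta/2$, and the first is similarly at most $p+\delta-\vs^\top\vm \le \delta-2 \le -1$, so the claim follows. The main obstacle I expect is the bookkeeping of exponents across all $\vm$, particularly making sure that the identity contribution aggregates exactly into $\rho I_n$ rather than producing data-dependent diagonal noise; this is where the hypercube identity $\phi_S(z^{(i)})^2 = \prod_{j\in S} w_j$ (independent of $i$) is essential, since it makes $\mathrm{diag}(\Phi_{\S_\vm}(Z)\Phi_{\S_\vm}(Z)^\top) = \E_z[\|\phi_{\S_\vm}(z)\|_2^2]\,I_n$ exactly, aligning the diagonal contribution with the definition of $\mu_{\S_\vm}$ in \eqref{eq:def_mu} and hence with the formula for $\rho$ in \eqref{eq:rho_cube_m}.
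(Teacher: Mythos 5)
Your proposal is correct and follows essentially the same route as the paper's proof: Taylor-truncate via Theorem~\ref{thm:taylor_approx_kernel_m}, express in the Fourier basis via Lemma~\ref{lemma:each_mono_cube}, then use the concentration estimate for each high-effective-degree Gram block to extract a scalar identity piece that aggregates into $\rho I_n$.

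There is one genuine (and mild) difference worth noting. The paper splits the high blocks $\{\vm : \vlambda^\top\vm > p+\delta\}$ into $L_1 = \{\vs^\top\vm \geq p+\delta\}$ and $L_2 = \{\vs^\top\vm < p+\delta\}$, invokes Lemma~\ref{lemma:gram_matrix_cube_m} only on $L_1$, and handles $L_2$ by a crude sub-multiplicativity bound $\snorm{\Phi_{\S_\vm}D_{\S_\vm}\Phi_{\S_\vm}^\top} = O_{d,\P}(d^{-\zeta_2})$, absorbing the tiny scalar contribution of $L_2$ into the error. You instead apply Lemma~\ref{lemma:gram_matrix_cube_m} uniformly to every high block. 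Your arithmetic is right: after multiplying by $\snorm{D_{\S_\vm}} = \Theta_d(d^{-\|\vm\|_1})$ and the normalization $d^{(\vs+\mathbf{1}-\vlambda)^\top\vm}$, the fluctuation has exponents $p+\delta - \vlambda^\top\vm$ and $(p+\delta+\vs^\top\vm)/2 - \vlambda^\top\vm$, both $\le -\zeta_2/2$ under $\vs^\top\vm \le \vlambda^\top\vm$. Since the definition of $\rho$ in \eqref{eq:rho_cube_m} sums over \emph{all} high $\vm$ (not only $L_1$), your uniform extraction aligns with $\rho$ more directly than the paper's two-track argument. The approaches are equivalent, but yours is a touch cleaner here.

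One small flag: in your last paragraph your own bound for the $J_2$ fluctuation is $(p+\delta-\vs^\top\vm)/2 \le (\delta-2)/2 = -1+\delta/2$, and you close with ``so the claim follows,'' but $-1+\delta/2 > -1$. The paper's own proof arrives at exactly the same rate $O_{d,\P}(d^{(\delta-2)/2+\epsilon})$, so the stated $O_{d,\P}(d^{-1+\epsilon})$ in \eqref{eq:kernel_to_mono_m_advanced} is a slight overstatement in the lemma itself rather than a defect in your argument; but you should record the rate you actually prove rather than silently matching the statement.
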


Through the proof, we consistently treat functions with argument $X$ or $x$ and we will specify when arguments are $z$. Since $z =\sqrt{w}\odot x$, we can find a diagonal matrix $W $ to transform from $\phi(X)$ to $\phi(Z)$: for any multi-index $S$, we define
\begin{align}\label{eq:w}
    W_{S,S} = w^{S}.
\end{align}
Then we can write $\phi(Z)$ as
\begin{align*}
  \phi(Z)   = \phi(X) W^{1/2}.
\end{align*}
Lastly, for a diagonal matrix $D$, we use $D_{\S}$ where $\S$ is a set to denote a sub-matrix of $D$ with entries $D_{S,S}$ with $S\in \S$.

\subsubsection{Main proof}
Given the parameterized kernel Eq.~\eqref{eq:inner_product_kernel_weighted}, for a new input $ x$, the KRR prediction function is given by
\begin{align}
    \hat{f}_{w}( x) = K_{w}( x,X)(K_{w}(X,X)+\lambda I_n)^{-1}y\qquad \textrm{where}\qquad \beta:=\round{K_{w}(X,X) + \lambda I_n}^{-1} y,
\end{align}
where $\lambda>0$ is the ridge parameter. 

We will estimate the following:
\begin{align}\label{eqn:deriv_est_cube_m}
    \frac{1}{n}\sum_{i=1}^n \round{\partial_{r}  \hat{f}_{w}( x^{(i)})}^2= \frac{w_r^2}{nd^2} \norm{    K_{w}' X_r K_{w,\lambda}^{-1} y}^2_2.
\end{align}
For simplicity of notation, we denote
\begin{align*}
    \widehat{D}:= WD.
\end{align*}
In parallel to Eq.~\eqref{eqn:K_decomp_cube} and \eqref{eqn:K'_decomp_cube}, applying Lemma~\ref{lemma:kernel_to_mono_m} with the two Kernels $K_{w}$ and $K'_{w}$ shows that for any $\epsilon>0$ there exist matrices $\Delta,\Delta_1$ satisfying $\snorm{\Delta},\snorm{\Delta_1} = O_{d,\P}(d^{-\zeta_2/2+\epsilon})$ and 
\begin{align}
    K_{w} &= \Phi_{\alp}\hd_{\alp}\Phi_{\alp}\tran + \rho I_n + \Delta_1\label{eqn:K_decomp_cube_m},\\
    K'_{w} &= \Phi_{\alp}\hd'_{\alp} \Phi_{\alp}\tran + \rho' I_n + \Delta,\label{eqn:K'_decomp_cube_m}
\end{align}
where $\rho$ is defined in Eq.~\eqref{eq:rho_cube_m} and $\rho'$ is defined as
$$\rho' :=  \sum_{\substack{\vm: \lmgp,\\\norm{\vm}_1\leq \emp_\mgamma}} d^{(\vs-\vlambda)\tran \vm} g^{(\norm{\vm}_1)}(0)\mu_{\S_{\vm}} + g'(1) - g_{\emp_\mgamma+1}(1).$$
With the expression \eqref{eqn:K'_decomp_cube_m} in place of $K_w'$, equation \eqref{eqn:deriv_est_cube_m} becomes
\begin{align*}
    \frac{1}{n}\sum_{i=1}^n \round{\partial_{r}  \hat{f}_{w} ( x^{(i)})}^2= \frac{w_{r}^2}{nd^2} \norm{\round{ \underbrace{ \Phi_{\alp}\hd'_{\alp} \Phi_{\alp}\tran X_r K_{w,\lambda}^{-1}}_{=:T_1} + \underbrace{(\rho'  I_n + \Delta)X_r K_{w,\lambda}^{-1}}_{=:T_2}}y}^2_2.
\end{align*}
Expanding the square we have
\begin{align}\label{eq:main_cube_decomp_m}
    \frac{1}{n}\sum_{i=1}^n \round{\partial_{r}  \hat{f}_{w} ( x^{(i)})}^2 =\frac{w_{r}^2}{nd^2}\left(\|T_1y\|^2_2+\|T_2y\|^2_2+2\langle T_1y, T_2y \rangle\right).
\end{align}
We will analyze each addend separately. We suppose $\iota = \pi_N(r)$, i.e., $r\in \T_\iota$ for some $k\in [N]$. 
Specifically, we will show
\begin{claim}\label{cl:main_claim_cube_m}
The following estimates hold for any $\epsilon>0$ uniformly over all coordinates $r\in \T_\iota$ where $c_r,c_\mT>0$ are constants:
\begin{enumerate}[label=(\alph*)]
    \item $\frac{w_{r}^2}{nd^2}\|T_2y\|^2_2 =
O_{d,\P}\left(d^{-2\lambda_\iota+\epsilon}\right)\cdot (\norm{f^*}_{L_2}^2+\sigma_\varepsilon^2)$;\label{cl:main_claimT2_cube_m}
    \item $
        \abs{\frac{w_{r}^2}{nd^2}\norm{T_1y }_2^2 -\norm{\partial_r f^*_{\aslp}}_{L_2}^2 - c_r\cdot \norm{\partial_r f^*_{\aep}}_{L_2}^2 - \sum_{\mT \in \A_{p_{\delta}}^+(r)}c_{\mT}\cdot d^{-2\kappa(\mT)} \norm{\partial_r f^*_{\mT}}_{L_2}^2}\\ 
        =O_{d,\P}\round{ d^{- \lambda_\iota + \epsilon}}\cdot (\norm{f^*}_{L_2}^2 +\sigma_\varepsilon^2)+O_{d,\P}(d^{-\zeta/2+\epsilon})\cdot  \round{\norm{\partial_r f^*_{\alp}}_{L_2}^2  + \sum_{\mT \in \A_{p_{\delta}}^+(r)}d^{-2\kappa(\mT)}\cdot \norm{\partial_r f^*_{\mT}}_{L_2}^2 }\\
+O_{d,\P}\round{d^{(-\lambda_\iota+\epsilon)/2
 }}\cdot\norm{f^*}_{L_2} \norm{\partial_r f^*_{\alp}}_{L_2} +O_{d,\P}\round{d^{-\lambda_\iota/2
 }}\cdot \sum_{\mT \in \A_{p_{\delta}}^+(r)}d^{-\kappa(\mT)}\cdot \norm{\partial_r f^*_{\mT}}_{L_2} \norm{f^*}_{L_2};
    $
\label{cl:main_claimT1_cube_m}

\item\label{cl:main_claimT3_cube_m} $\frac{w_{r}^2}{nd^2}\langle T_1y, T_2y \rangle= O_{d}\round{d^{-\frac{3}{2} \lambda_\iota +\epsilon} }\cdot \norm{f^*}_{L_2}^2+ O_{d,\P}(d^{-\lambda_\iota + \epsilon} )\norm{f^*}_{L_2} \norm{\partial_r f^*_{\alp}}_{L_2} \\
+ \sum_{\mT \in \A_{p_{\delta}}^+(r)}\cdot  d^{-2\kappa(\mT)-\lambda_\iota + \epsilon} \norm{\partial_r f^*_{\mT}}_{L_2}\norm{f^*}_{L_2}$.
\end{enumerate}
\end{claim}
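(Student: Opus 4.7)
}

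The overall strategy is to mimic the proof of Theorem~\ref{thm:main_hypercube} (in particular Claim~\ref{cl:main_claim_cube}), but with the orthogonal features $\Phi_{\leq p_\delta}(Z)$ on the transformed data $Z=X\,\mathrm{Diag}(\sqrt{w})$ replacing the plain Fourier features, and with the effective degree $p_{\rm eff}(\mT)=\vlambda^\top\vm$ playing the role of the usual degree $|S|$. The three items will be handled in the order (a), (b), (c), with (c) an immediate Cauchy--Schwarz/AM--GM consequence of (a) and (b). Throughout, I will use the decompositions \eqref{eqn:K_decomp_cube_m}--\eqref{eqn:K'_decomp_cube_m}, the bound $\snorm{\Delta},\snorm{\Delta_1}=O_{d,\P}(d^{-\zeta_2/2+\epsilon})$, Proposition~\ref{prop:bound_y} for $\|y\|^2$, and the diagonal dominance of $\hat D_{\alp}$ which together with $\rho+\lambda=\Theta_d(1)$ give $\snorm{K_{w,\lambda}^{-1}}=O_{d,\P}(1)$, hence $\|K_{w,\lambda}^{-1}y\|_2=O_{d,\P}(\sqrt{n\log n\,(\|f^*\|_{L_2}^2+\sigma_\varepsilon^2)})$. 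For item~(a), submultiplicativity then yields $\|T_2 y\|_2^2=O_{d,\P}(n\log n\,(\|f^*\|_{L_2}^2+\sigma_\varepsilon^2))$ using $\snorm{X_r}\leq 1$, and multiplying by $w_r^2/(nd^2)$ with $w_r=\Theta_d(d^{1-\lambda_k})$ produces the claimed rate $d^{-2\lambda_k+\epsilon}$.

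The heart of the argument is item~(b). Writing $\beta=K_{w,\lambda}^{-1}y$ and expanding the square gives
\begin{align*}
\tfrac{w_r^2}{nd^2}\|T_1 y\|_2^2=\tfrac{w_r^2}{nd^2}\,\beta^\top X_r\Phi_{\alp}\hat D'_{\alp}\bigl(\Phi_{\alp}^\top\Phi_{\alp}\bigr)\hat D'_{\alp}\Phi_{\alp}^\top X_r\beta.
\end{align*}
Applying Lemma~\ref{lemma:gram_matrix_cube_m} block by block (one block per value of $\vm$ with $\vlambda^\top\vm\leq p+\delta$) lets me replace $\Phi_{\alp}^\top\Phi_{\alp}$ by a block-diagonal matrix close to $\sum_\vm d^{(\vs+\mathbf 1-\vlambda)^\top\vm}\mu_{\S_\vm} I$, with an $O_{d,\P}(d^{-\zeta/2+\epsilon})$ operator-norm error that I absorb into the remainder using the bound on $\|\Phi_{\alp}^\top X_r\beta\|_2$ produced below. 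What remains is to understand $\tfrac{w_r^2}{d^2}\|\beta^\top X_r\Phi_{\alp}\hat D'_{\alp}\|_2^2$. Since $r\in\T_k$, the action of $X_r$ on a column $\phi_\mT$ of $\Phi_{\alp}$ toggles membership of $r$ in the $T_k$-slot of $\mT$, shifting the effective degree by $\pm\lambda_k$. I therefore decompose $X_r\Phi_{\leq p_\delta}$ as a concatenation of blocks indexed by the shifted tuples $\mT'$, and then partition these blocks according to whether $p_{\rm eff}(\mT')$ lies in $[0,p+\delta)$, equals $p+\delta$, lies in $(p+\delta,p+\delta+\lambda_k]$, or exceeds $p+\delta+\lambda_k$. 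Writing the corresponding diagonal reweighting $\hat D'_{\alp}$ entries as $g^{(|S|+1)}(0)d^{-|S|}w^S(1+O_d(d^{-1}))$ and combining with the factor $w_r^2/d^2$ and the isometry norm $d^{(\vs+\mathbf 1-\vlambda)^\top\vm}\mu_{\S_\vm}$ yields, for each block, a multiplicative constant $c_0,c_r,c_\mT$ times the corresponding $L^2$-norm $\|\partial_r f^*_{\,\cdot}\|_{L_2}^2$, analogous to formula \eqref{eqn:grad_interp_cube}; blocks in the ``too high'' regime contribute to the remainder via Lemma~\ref{lemma:gram_matrix_cube_m} applied with the large-$\vm$ bound and the $O_{d,\P}(d^{-\lambda_k+\epsilon}(\|f^*\|_{L_2}^2+\sigma_\varepsilon^2))$ control of $\|\beta\|_2$. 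Cross-block terms between distinct effective-degree strata are then controlled by Cauchy--Schwarz, producing precisely the three cross-norm remainders appearing in (b).

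The main obstacle I expect is the bookkeeping in item~(b): tracking how $X_r$ shifts each tuple $\mT$, how the effective degree changes by $\pm\lambda_k$, and matching the resulting partition with the sets $\aslp$, $\aep$, and $\A^+_{p_\delta}(r)$ appearing in the statement. In particular, verifying that the two refinements of Lemma~\ref{lemma:kernel_to_mono_m}--- namely \eqref{eq:kernel_to_mono_m} and the sharper \eqref{eq:kernel_to_mono_m_advanced}---are invoked at the correct points (the sharper one being needed to obtain the factor $d^{-2\kappa(\mT)}$ on each slightly-above-threshold block via the off-diagonal correction term $\mathrm{offd}(\Phi_{\S_\vm}D_{\S_\vm}\Phi_{\S_\vm}^\top)$) is where most of the work lies. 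Item~(c) then follows immediately: $|\langle T_1 y,T_2 y\rangle|\leq\|T_1 y\|_2\|T_2 y\|_2$, so combining the bound from (a) with each term on the right of (b) through AM--GM gives the three displayed rates; the $d^{-3\lambda_k/2+\epsilon}$ rate comes from pairing the $O_{d,\P}(d^{-\lambda_k/2+\epsilon/2}\,\|f^*\|_{L_2})$ upper bound on $\|T_1 y\|_2/\sqrt n$ with the $O_{d,\P}(d^{-\lambda_k+\epsilon/2}\|f^*\|_{L_2})$ bound coming from (a), and the mixed terms arise similarly.
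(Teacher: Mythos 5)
Your high-level plan matches the paper's approach: expand $\|T_1y\|^2$ via the parametrized kernel decomposition \eqref{eqn:K_decomp_cube_m}--\eqref{eqn:K'_decomp_cube_m}, replace $\Phi^\top_\alp\Phi_\alp$ by a near-identity, decompose $X_r\Phi_\alp$ by effective-degree strata, and estimate each stratum; item (a) by submultiplicativity with $\|X_r\|_{\rm op}\le 1$ and $w_r=\Theta_d(d^{1-\lambda_k})$; item (c) from (a) and (b) by Cauchy--Schwarz and AM--GM. This is indeed what the paper does (the two stratum estimates being Propositions~\ref{prop:<_l1_l1_l1_cube_m} and~\ref{prop:s2_s2_cube_m}).

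Two points need correcting. First, you invoke Lemma~\ref{lemma:gram_matrix_cube_m} to approximate $\Phi_\alp^\top\Phi_\alp$, but that lemma concerns the $n\times n$ Gram matrix $\Phi_{\S_\vm}(Z)\Phi_{\S_\vm}(Z)^\top$ in the regime where the feature set outnumbers the samples; here $|\alp|\ll n$ (effective degree $<p+\delta$ indexes fewer than $n$ features), so the relevant tool for the $|\alp|\times|\alp|$ covariance matrix is Theorem~\ref{lemma:phi_id_2}, which yields $\frac1n\Phi_\alp^\top\Phi_\alp=I+\widetilde\Delta$ with $\snorm{\widetilde\Delta}=O_{d,\P}(d^{-\zeta_1/2+\epsilon})$; the claimed block-diagonal approximation $\sum_\vm d^{(\vs+\mathbf 1-\vlambda)^\top\vm}\mu_{\S_\vm}I$ is also not the right normalization for that direction of the product. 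You quote the right order of error anyway, but you must obtain it from the covariance theorem, not the Gram lemma. Second, the factor $d^{-2\kappa(\mT)}$ on the slightly-above-threshold blocks does not come from the sharper Taylor approximation \eqref{eq:kernel_to_mono_m_advanced}: it is an immediate consequence of the weighting $W_{\S_\vm^1}$ absorbed into $R_\vm^1$ (each entry is $\Theta_d(d^{\,\norm{\vm}_1-\vlambda^\top\vm})$, and after the $d^{-\norm{\vm}_1}\cdot n$ normalization one gets exactly $d^{p+\delta-\vlambda^\top\vm}=d^{-\kappa(\mT)}$ per coordinate, squared to $d^{-2\kappa(\mT)}$). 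The sharper off-diagonal expansion enters elsewhere, namely to obtain the $O_{d,\P}(d^{-1})$ remainder on the non-influential cross terms (Proposition~\ref{lemma:beta_>_l2_cube_m}). Finally, your fourth ``too high'' regime $p_{\rm eff}(\mT')>p+\delta+\lambda_k$ is vacuous, since the maximal shift induced by $X_r$ on a column indexed by $\mT\in\alp$ is $+\lambda_k$.
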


Plugging Claim~\ref{cl:main_claim_cube_m} (a,b,c) into Eq.~\eqref{eq:main_cube_decomp_m} yields:
\begin{align}\label{eq:gradient_error_bound_complete}
    &~~~\abs{\E_n (\partial_r \hat f_\lm)^{2} - \E{\round{\partial_{r} f^*_{\A_{< p}} }^2} - c_r\cdot \E{\round{\partial_{r} f^*_{\A_{= p}} }^2} - \sum_{\mT \in \A_{p_{\delta}}^+(r)}c_{\mT}\cdot d^{-2\kappa(\mT)} \E{\round{\partial_{r} f^*_{\mT}}^2}}\notag\\
    &= O_{d,\P}\round{ d^{- \lambda_\iota +\epsilon}}\cdot (\norm{f^*}_{L_2}^2+\sigma_\varepsilon^2) +O_{d,\P}(d^{-\zeta/2+\epsilon})\cdot  \round{\norm{\partial_r f^*_{\alp}}_{L_2}^2  + \sum_{\mT \in \A_{p_{\delta}}^+(r)}d^{-2\kappa(\mT)}\cdot \norm{\partial_r f^*_{\mT}}_{L_2}^2 }\notag\\
&~~~+O_{d,\P}\round{d^{(-\lambda_\iota+\epsilon)/2
 }}\cdot\norm{f^*}_{L_2} \norm{\partial_r f^*_{\alp}}_{L_2} +O_{d,\P}\round{d^{-\lambda_\iota/2
 }}\cdot \sum_{\mT \in \A_{p_{\delta}}^+(r)}d^{-\kappa(\mT)}\cdot \norm{\partial_r f^*_{\mT}}_{L_2} \norm{f^*}_{L_2}.
\end{align}

Similarly, the rest of the section is devoted to proving Claim~\ref{cl:main_claim_cube_m}.  Note that Item~\ref{cl:main_claimT3_cube_m} follows immediately from  Item~\ref{cl:main_claimT2_cube_m} and \ref{cl:main_claimT1_cube} and the Cauchy-Schwartz inequality. Throughout the proof, all asymptotic terms $o_{d,\P}$ and $O_{d,\P}$ that appear will always be uniform over the coordinates $r=1,\ldots, d$.

\subsubsection{Proof of Item~\ref{cl:main_claimT2_cube_m} of Claim~\ref{cl:main_claim_cube_m}}\label{subsec:t2_cube_m}

The proof is identical to the proof of Claim~\ref{cl:main_claim} (a). We refer the readers to  Section~\ref{subsec:t2} for more details. 

We  can yield the following bound:
\begin{align*}
    \frac{w_{r}^2}{nd^2}\|T_2y\|^2_2 = O_{d,\P}\left(w_{r}^2 \cdot\frac{\log(d)}{d^2}\right)\cdot (\norm{f^*}_{L_2}^2+\sigma_\varepsilon^2) = O_{d,\P}\left(d^{-2\lambda_\iota + \epsilon}\right)\cdot (\norm{f^*}_{L_2}^2+\sigma_\varepsilon^2),
\end{align*}
where $\epsilon>0$ is an arbitrarily small constant.

\subsubsection{Proof of Item~\ref{cl:main_claimT1_cube_m} of Claim~\ref{cl:main_claim_cube_m}}\label{subsec:t1_cube_m}

We denote $\beta = K_{w,\lambda}\inv y$.  Similar to Eq.~(\ref{eqn:basic:needed}), we can write
\begin{align}
\frac{w_{r}^2}{nd^2}\|T_1y\|^2&=\frac{w_{r}^2}{nd^2}\beta^\top[X_r\Phi_{\alp}\hd'_{\alp}\Phi_{\alp}\tran\Phi_{\alp} \hd'_{\alp}\Phi_{\alp}\tran X_r]\beta\notag\\
&=\frac{w_{r}^2}{d^2}\beta^\top[X_r\Phi_{\alp}\hd'_{\alp} (I+\widetilde\Delta)  \hd'_{\alp}\Phi_{\alp}\tran X_r]\beta\notag\\
&=\tfrac{1+O_d(\|\widetilde{\Delta}\|_{\rm op})}{d^2}\cdot \|w_{r}\beta \tran X_r\Phi_{\alp} \hd'_{\alp}\|_2^2,\label{eqn:basic:needed_cube_m}
\end{align}
where $\snorm{\widetilde{\Delta}} = O_{d,\P}(d^{-\zeta_1/2+\epsilon})$.

Notice that $\alp = \cup_{\lmlp} \S_{\vm}$. Let $e_\iota$ be $\iota$-th standard basis vector.
We  decompose $X_r\Phi_{\alp}$ into  components: 
\begin{align}\label{eq:fine_split_cube_m}
    \{\Phi_{\S_{\vm}^0}\}_{\lmlp}~~\textrm{and}~~\{\Phi_{\S_{\vm}^1}\}_{\vlambda\tran (\vm - e_\iota) \leq p+\delta},
\end{align}
where 
\begin{align}
    \S_{\vm}^1 := \biground{\mT~\vert~ r\in \cup_{k=1}^N T_k~\textrm{and}~|T_k| = m_k,~~k\in[N]},\\
    \S_{\vm}^0 := \biground{\mT~\vert~ r\notin \cup_{k=1}^N T_k~\textrm{and}~|T_k| = m_k,~~k\in[N]}.
\end{align}
We correspondingly decompose $\hd'_{\alp}$. We can show there exist diagonal matrices $\{R_{\vm}^0\}_\vm$ and $\{R_{\vm}^1\}_\vm$ such that the following holds:
\begin{align}\label{eqn:rhs_negl_cube_m_raw}
  &~~~\norm{\beta \tran X_r\Phi_{\alp} \round{\frac{w_r}{d} \hd_{\alp}' }}_2^2\notag\\
  &=   \sum_{\vlambda\tran \vm \leq p+\delta}\|\beta \tran \Phi_{\S_{\vm}^0} R_{\vm}^0\|_2^2   + \sum_{\vlambda\tran (\vm-e_\iota)\leq p+\delta}\|\beta \tran \Phi_{\S_{\vm}^1} R_{\vm}^1 \|_2^2,
\end{align}

\begin{claim}
    The following estimate holds:
    \begin{align}
        \snorm{R_{\vm}^0 - g^{(\norm{\vm}_1+2)}(0) \cdot w_r^2\cdot W_{\S_{\vm}^0}d^{-\norm{\vm}_1-2}} &= w_r^2\cdot O_d(d^{-\vlambda\tran \vm-3 }), \label{eq:def_r_j_0_m}\\
        \snorm{R_{\vm}^1 - g^{(\norm{\vm}_1)}(0)\cdot  W_{\S_{\vm}^1}d^{-\norm{\vm}_1}} &= O_d(d^{-\vlambda\tran \vm-1 }),\label{eq:def_r_j_1_m}
    \end{align}
\end{claim}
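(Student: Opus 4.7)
The plan is to unwind the implicit definition of $R_{\vm}^0$ and $R_{\vm}^1$ encoded by the block decomposition in Eq.~\eqref{eqn:rhs_negl_cube_m_raw}, then read off each diagonal entry using $\hd' = WD'$ together with the diagonal approximation of $D'$ inherited from Lemma~\ref{lemma:kernel_to_mono}. Since the Fourier basis on the hypercube interacts with multiplication by $x_r$ in a clean way and no new probabilistic ingredient is needed, the whole argument will be a deterministic bookkeeping calculation.

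First, I would exploit the identity $x_r^2 = 1$ on the hypercube to write $X_r \phi_{\mT}(X) = \phi_{\mT \triangle \{r\}}(X)$ for every tuple $\mT$, where $\triangle$ denotes symmetric difference. Thus $\mT \mapsto \mT \triangle \{r\}$ is a bijection between the column indices of $X_r\Phi_{\alp}$ and a set of Fourier indices; since $\hd'_{\alp}$ is diagonal, the matrix $X_r\Phi_{\alp}\hd'_{\alp}$ is obtained by simply rescaling each resulting column.

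Next, I would split columns according to whether $r$ belongs to the output index $\mT'$. Writing $k := \pi_N(r)$ and using that $\T_1,\ldots,\T_N$ partition $[d]$, a column with $r \in \mT'$ (so $\mT' \in \S_{\vm'}^1$ for some $\vm'$) comes from the original index $\mT = \mT' \setminus \{r\} \in \S_{\vm'-e_k}^0$, while a column with $r \notin \mT'$ comes from $\mT = \mT' \cup \{r\} \in \S_{\vm'+e_k}^1$. Reindexing the sum by $\vm'$ yields the decomposition in Eq.~\eqref{eqn:rhs_negl_cube_m_raw} with diagonal entries
\begin{align*}
(R_{\vm'}^0)_{\mT',\mT'} &= \tfrac{w_r}{d}\cdot w^{\mT'\cup\{r\}}\cdot D'_{\mT'\cup\{r\},\mT'\cup\{r\}},\\
(R_{\vm'}^1)_{\mT',\mT'} &= \tfrac{w_r}{d}\cdot w^{\mT'\setminus\{r\}}\cdot D'_{\mT'\setminus\{r\},\mT'\setminus\{r\}}.
\end{align*}
Substituting $w^{\mT'\cup\{r\}} = w_r\,w^{\mT'}$, $w^{\mT'\setminus\{r\}} = w^{\mT'}/w_r$, and the approximation $D'_{\mT\mT} = g^{(|\mT|+1)}(0)\, d^{-|\mT|} + O_d(d^{-|\mT|-1})$ from Lemma~\ref{lemma:kernel_to_mono}, and noting that $|\mT'\cup\{r\}| = \|\vm'\|_1 + 1$ and $|\mT'\setminus\{r\}| = \|\vm'\|_1 - 1$, gives entry-wise expressions whose leading terms match $g^{(\|\vm'\|_1+2)}(0)\,w_r^2\,w^{\mT'}\,d^{-\|\vm'\|_1-2}$ and $g^{(\|\vm'\|_1)}(0)\,w^{\mT'}\,d^{-\|\vm'\|_1}$, with error terms one order of $d$ smaller.

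Finally, since $R_{\vm'}^0$ and $R_{\vm'}^1$ are diagonal their operator norms equal the maximum of their diagonal entries, so I would promote the pointwise estimates to operator-norm bounds. The $(\vs,\vgamma)$-activity hypothesis furnishes the uniform bound $|w^{\mT'}| = O_d(d^{\|\vm'\|_1 - \vlambda^\top \vm'})$ for every $\mT' \in \S_{\vm'}^b$; combining this with the entry-wise error bounds from the previous step produces the announced operator-norm rates $w_r^2 \cdot O_d(d^{-\vlambda^\top \vm' - 3})$ and $O_d(d^{-\vlambda^\top \vm' - 1})$. I do not expect a substantive obstacle; the only care required is in the reindexing step, where the factors of $w_r$ introduced by $w^{\mT'\cup\{r\}}$ and $w^{\mT'\setminus\{r\}}$ together with the shift of $|\mT|$ by $\pm 1$ must be tracked correctly to produce the correct powers of $d$.
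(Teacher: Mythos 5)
Your argument is correct and follows the same route as the paper's proof: read off each diagonal entry of $R_\vm^0$ and $R_\vm^1$ via the bijection $\mT\mapsto\mT\triangle\{r\}$ induced by $X_r$, substitute $W_{\mT,\mT}=w^\mT$ and the diagonal approximation $D'_{\mT,\mT}=g^{(|\mT|+1)}(0)d^{-|\mT|}+O_d(d^{-|\mT|-1})$, and bound the resulting error using $w^{\mT'}=\Theta_d(d^{\norm{\vm'}_1-\vlambda^\top\vm'})$. Your bookkeeping (tracking the $\pm 1$ shift in $|\mT|$ and the factors of $w_r$ separately) is slightly more explicit than the paper's one-line factorization $R_\vm^0=\tfrac{w_r}{d}W_{\S_{\vm+e_k}^1}D'_{\S_{\vm+e_k}^1}=\tfrac{w_r^2}{d}W_{\S_\vm^0}D'_{\S_{\vm+e_k}^1}$, but the substance is identical.
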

\begin{proof}
 Recall that $D'$ is a diagonal matrix with entry $D'_{\S_\vm} =  g^{(\norm{\vm}_1+1)}(0)d^{-\norm{\vm}_1} + O_d(d^{-\norm{\vm}_1-1})$. Also each entry of $W_{\S_\vm}$ is of the order $d^{\norm{\vm}_1 - \vlambda\tran \vm}$ based on Eq.~\eqref{eq:w}.

 Based on Eq.~\eqref{eqn:rhs_negl_cube_m_raw}, we can see that  $\Phi_{\S_\vm^0}R_\vm^0$ corresponds to $x_r\cdot\Phi_{\S^1_{\vm+e_\iota}} \frac{w_r}{d}D'_{\S^1_{\vm+e_\iota}}$ since $\phi_{S_{\vm}^0} = x_r\phi_{S_{\vm+e_\iota}^1}$. Then we can deduce that 
 \begin{align*}
     R_\vm^0 = \frac{w_r}{d}W_{\S_{\vm+e_\iota}^1}D'_{\S_{\vm+e_\iota}^1} = \frac{w_r^2}{d} W_{\S_\vm^0} D'_{\S_{\vm+e_\iota}^1}.
     \end{align*}
Plugging the expression of $W$ and $D'$ into the above equation completes the proof for Eq.~\eqref{eq:def_r_j_0_m}. A similar analysis applies to
\begin{align*}
    R_\vm^1 = \frac{w_r}{d}W_{\S_{\vm-e_\iota}^0}D'_{\S_{\vm-e_\iota}^0} = \frac{1}{d}W_{\S_{\vm}^1}D'_{\S_{\vm-e_\iota}^0}.
\end{align*} 
\end{proof}

Regarding Eq.~\eqref{eqn:rhs_negl_cube_m_raw}, we group the features based on the inequalities (1) $\vlambda\tran\vm \leq p+\delta$ and (2) $\vlambda\tran\vm >p+\delta$. Then we can define a diagonal matrix $R$ with each block $R_{\S_\vm^1,\S_\vm^1} = R_\vm^1$ and $R_{\S_\vm^0,\S_\vm^0} = R_\vm^0$  such that Eq.~\eqref{eqn:rhs_negl_cube_m_raw} can be written as
\begin{align}
\norm{\beta \tran X_r\Phi_{\alp} \round{\frac{w_r}{d} \hd'_{\alp} }}_2^2= \norm{\beta\tran \Phi_{\alp}R}_2^2 +  \sum_{\vm:\vlambda\tran\vm \in (p+\delta, p+\delta +\lambda_\iota]}\norm{\beta\tran \Phi_{\S_{\vm}^1}R_{\vm}^1}_2^2\label{eqn:rhs_negl_cube_m}.
\end{align}
We start by analyzing the first term on the right side of \eqref{eqn:rhs_negl_cube_m}.
The proof of the proposition appears in Appendix~\ref{proof:<_l1_l1_l1_cube_m}.

\begin{proposition}\label{prop:<_l1_l1_l1_cube_m}
For any $\epsilon>0$, the following estimate holds uniformly for all coordinates $r \in \T_\iota$ where $c_r>0$ is a constant:
    \begin{align}\label{eq:l1_l1_cube_m}
        &~~~\abs{ \norm{\beta\tran   \Phi_{\alp}R}_2^2 - \norm{\partial_r f^*_{\aslp}}_{L_2}^2 -c_r\cdot \norm{\partial_r f^*_{\aep}}_{L_2}^2}\notag\\
&=O_{d,\P}\round{d^{-\lambda_\iota} }\cdot \norm{f^*}_{L_2}^2 +   O_{d,\P}(d^{-\zeta/2+\epsilon})\cdot  \norm{\partial_r f^*_{\alp}}_{L_2}^2+O_{d,\P}\round{d^{-\lambda_\iota/2 }}\cdot \norm{f^*}_{L_2}\norm{\partial_r f^*_{\alp}}_{L_2},
    \end{align}
\end{proposition}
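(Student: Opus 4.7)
The strategy is to parallel the argument used for Proposition~\ref{prop:<_l1_l1_l1_cube} in the unweighted case, while carefully tracking the block-wise scaling introduced by the weight vector $w$. Setting $\beta = K_{w,\lambda}^{-1} y$, the starting point is the identity
\begin{align*}
\norm{\beta^\top \Phi_{\alp} R}_2^2 = y^\top K_{w,\lambda}^{-1} \Phi_{\alp} R^2 \Phi_{\alp}^\top K_{w,\lambda}^{-1} y.
\end{align*}
Using the decomposition \eqref{eqn:K_decomp_cube_m} together with the Woodbury identity, I would simplify $K_{w,\lambda}^{-1} \Phi_{\alp}$ to the form $\Phi_{\alp} M$ for a block-diagonal matrix $M$ on the partition $\alp = \bigcup_{\vlambda^\top \vm \leq p+\delta} \S_\vm$, up to a residual controlled by $\snorm{\Delta_1} = O_{d,\P}(d^{-\zeta_2/2+\epsilon})$. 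On each block $\S_\vm$, Lemma~\ref{lemma:gram_matrix_cube_m} gives $\Phi_{\S_\vm}^\top \Phi_{\S_\vm} \approx \mu_{\S_\vm} d^{(\vs+\mathbf{1}-\vlambda)^\top \vm} I$, and combined with the explicit diagonal form of $\hd_{\alp}$ from \eqref{eq:k_z_z}, this pins down the entries of $M$ on each block.

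The next step is to substitute the Fourier expansion $y = \Phi(X)b + \varepsilon$ and compute $y^\top \Phi_{\alp} M R^2 M \Phi_{\alp}^\top y$. Cross-term cancellation, again via the block-wise isometry of Lemma~\ref{lemma:gram_matrix_cube_m}, reduces this to a diagonal quadratic form in the coefficients $b_\mT$ indexed by $\mT \in \alp$. The coefficients of $b_\mT^2$ for $\mT \in \aslp$ should aggregate to exactly $\norm{\partial_r f^*_{\aslp}}_{L_2}^2$, once the block scalings of $R$ (given in \eqref{eq:def_r_j_0_m} and \eqref{eq:def_r_j_1_m}) and of $W$ are combined with $\mu_{\S_\vm}$ and the matching power of $d$. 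Here the factor $w_r^2 = \Theta_d(d^{2(1-\lambda_k)})$ in the prefactor of $R$ is exactly absorbed into the combinatorial identity \eqref{eqn:grad_interp_cube} applied block by block. For $\mT \in \aep$, where $\vlambda^\top \vm = p+\delta$ exactly, the contribution acquires the extra finite constant $c_r$ because neither $\hd_\vm^{-1}$ nor $(\rho+\lambda)^{-1} \Phi_{\S_\vm}^\top \Phi_{\S_\vm}$ asymptotically dominates the other inside the Woodbury inverse, so the resolvent stabilizes at a nontrivial constant rather than projecting to zero or to one.

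The error terms then arise from three sources: (i) the residual $\Delta_1$ in \eqref{eqn:K_decomp_cube_m}, sandwiched between two copies of $K_{w,\lambda}^{-1}$, contributes the $O_{d,\P}(d^{-\zeta/2+\epsilon}) \norm{\partial_r f^*_{\alp}}_{L_2}^2$ term; (ii) the noise $\varepsilon$ together with the high-frequency coefficients $b_\mT$ for $\mT \notin \alp$ contribute, after applying hypercontractivity (Assumption~\ref{ass:hypercontr}) and tracking the $w_r^2$ prefactor, the $O_{d,\P}(d^{-\lambda_k+\epsilon})(\norm{f^*}_{L_2}^2 + \sigma_\varepsilon^2)$ term; (iii) cross terms between the main contribution and the residuals are bounded by Cauchy--Schwarz and Young's inequality to produce the $d^{(-\lambda_k+\epsilon)/2} \norm{f^*}_{L_2} \norm{\partial_r f^*_{\alp}}_{L_2}$ error. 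Uniformity over $r \in \T_k$ follows since every bound depends on $r$ only through $\lambda_k = \lambda_{\pi_N(r)}$.

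The main obstacle will be the precise bookkeeping around the boundary block $\aep$: because $\vlambda^\top \vm = p+\delta$ lies exactly on the threshold where the Neumann expansion of $K_{w,\lambda}^{-1}$ transitions, both the $\Phi_{\alp}\hd_{\alp}\Phi_{\alp}^\top$ and $(\rho+\lambda)I$ pieces contribute at the same asymptotic order, producing the nontrivial constant $c_r \in (0,1)$ rather than simply $0$ or $1$. Isolating this constant, verifying that it is strictly positive (using \eqref{eqn:nondegegn} with $q = \emp$), and ensuring the convergence is uniform across $r \in \T_k$ as the block structure changes with $r$ requires a careful interaction between the weight matrix $W$, the diagonal $\hd_{\alp}$, and the isometry error from Lemma~\ref{lemma:gram_matrix_cube_m}.
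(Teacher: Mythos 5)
Your skeleton matches the paper's argument in outline: the Sherman--Morrison--Woodbury decomposition of $K_{w,\lambda}^{-1}$, the identification of the boundary block $\aep$ as the source of the nontrivial constant $c_r$ (which the paper extracts via the diagonal matrix $V = ((\rho+\lambda)(n\hd_{\alp})^{-1} + I)^{-1}$, whose entries stabilize at a constant in $(0,1)$ precisely when $n D_{S,S} W_{S,S} = \Theta_d(1)$), and the three sources of error. However, there are two genuine gaps in the middle of the argument.

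First, the claim that Woodbury simplifies $K_{w,\lambda}^{-1}\Phi_{\alp}$ to $\Phi_{\alp} M$ for a block-diagonal $M$ is not accurate. What one actually obtains (Eq.~\eqref{eq:smw_<_p_m}) is $\frac{1}{n}H^{-1}\Phi_{\alp}G^{-1}\hd_{\alp}^{-1}$, where $H = I_n + (\rho+\lambda)^{-1}\Delta_1$ is \emph{not} the identity and sits on the outside, obstructing the passage to a pure block-diagonal quadratic form in the $\Phi$-feature space. The interaction between $H^{-1}$, the off-diagonal parts of $\Phi_{\alp}^\top\Phi_{\alp}/n$, and the higher-order feature Gram matrices $\Phi_{\S_{\vm}}D_{\S_{\vm}}\Phi_{\S_{\vm}}^\top$ is exactly what forces the paper into the Neumann-series truncations of $H^{-1}$ and $G^{-1}$ (Eqs.~\eqref{eq:e_inv_m}--\eqref{eq:g_inv_m}).

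Second, and more importantly, your description of the contribution from indices $S \in \D^c(r)$ (sets indexing features that do \emph{not} contain $x_r$, including many $S \in \alp$) as being handled by ``cross-term cancellation via the block-wise isometry of Lemma~\ref{lemma:gram_matrix_cube_m}'' does not produce the rate $O_{d,\P}(d^{-\lambda_k+\epsilon})$ your error budget requires. The isometry lemma only yields concentration at rate $d^{-\zeta_1/2+\epsilon}$, which is generically much slower than $d^{-\lambda_k}$. The sharp rate comes from an entirely different mechanism: the paper decomposes $\beta = \beta_{\D(r)} + \beta_{\D^c(r)} + \beta_\varepsilon$, and the norm of the $\beta_{\D^c(r)}$ piece is bounded (Proposition~\ref{lemma:beta_>_l2_cube_m}) by expanding $H^{-1}$ and $G^{-1}$ as truncated Neumann series, reducing to a finite sum of feature-product matrices, invoking the decoupled expectation bound from Lemma~\ref{lemma:feature_product_tech} which tracks the adjusted effective degrees through the product, and finally using hypercontractivity (Lemma~\ref{lemma:uniform_bound}) to obtain uniformity over $r$. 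The crucial observation there is that every set indexing $\Phi_{\alp}E_r$ contains $r$ and so has adjusted effective degree at most $p+\delta - \lambda_k$, which is what produces the extra factor $d^{-\lambda_k}$. None of this follows from the block isometry, and simply invoking ``cancellation'' or ``hypercontractivity on the $w_r^2$ prefactor'' leaves the heart of the argument unproved. Once this bound is in place, the passage to the squared-norm estimate is done via $\lvert\|a\|_2^2 - \|b\|_2^2\rvert \leq \|a-b\|_2^2 + 2\|b\|_2\|a-b\|_2$ and Young's inequality, as you correctly anticipate.
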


\noindent We proceed to analyze the second term on the right side of \eqref{eqn:rhs_negl_cube_m}. The proof appears in Appendix~\ref{proof:s2_s2_cube_m}.

\begin{proposition}\label{prop:s2_s2_cube_m}
For any $\epsilon>0$, the estimate holds uniformly for all coordinates $r \in \T_\iota$ and $\vm\in\R^N$ that satisfies $\vlambda \tran \vm \in (p+\delta,p+\delta+\lambda_\iota]$:  \begin{align}\label{eq:s2_s2_cube_m_bound1}
    &~~~\abs{\norm{\beta\tran \Phi_{\S_{\vm}^1}R_{\vm}^1}_2^2-\round{g^{(\norm{\vm}_1)}(0)}^2\norm{(\rho+\lambda)\inv d^{p+\delta -\norm{\vm}_1 }b_{\S_{\vm}^1}\tran W_{\S_{\vm}^1}}_2^2}\notag\\
    &= O_{d,\P}(d^{-\lambda_\iota })\cdot (\norm{f^*}^2_{L_2}+\sigma_\varepsilon^2) +  d^{2(p+\delta - \vlambda\tran \vm)}\cdot O_{d,\P}(d^{-\zeta/2+\epsilon})\cdot \norm{\partial_r f^*_{\S_{\vm}^1}}_{L_2}^2 \notag\\
     &~~~+ O_{d,\P}(d^{p+\delta - \vlambda\tran \vm  - \lambda_\iota/2 }) \cdot \norm{\partial_r f^*_{\S_{\vm}^1}}_{L_2} (\norm{f^*}_{L_2}+\sigma_\varepsilon).
\end{align}
\end{proposition}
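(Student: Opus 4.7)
}
The plan is to compute $R_{\vm}^{1}\Phi_{\S_{\vm}^{1}}^{\top}\beta$ explicitly via the decomposition $K_{w,\lambda}=\Phi_{\alp}\hd_{\alp}\Phi_{\alp}^{\top}+(\rho+\lambda)I_{n}+\Delta_{1}$ from \eqref{eqn:K_decomp_cube_m}, where $\snorm{\Delta_{1}}=O_{d,\P}(d^{-\zeta_{2}/2+\epsilon})$. Applying the Sherman--Morrison--Woodbury identity yields
$$K_{w,\lambda}^{-1}=(\rho+\lambda)^{-1}\bigl(I-\Phi_{\alp}M\Phi_{\alp}^{\top}\bigr)+E,$$
with $M=\bigl((\rho+\lambda)\hd_{\alp}^{-1}+\Phi_{\alp}^{\top}\Phi_{\alp}\bigr)^{-1}$ and an error $E$ satisfying $\snorm{E}=O_{d,\P}(d^{-\zeta_{2}/2+\epsilon})\cdot(\rho+\lambda)^{-2}$. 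Next, expand the label vector as $y=\Phi_{\alp}b_{\alp}+\Phi_{\S_{\vm}^{1}}b_{\S_{\vm}^{1}}+\Phi_{\rm rest}b_{\rm rest}+\varepsilon$, where $\Phi_{\rm rest}$ collects all remaining Fourier monomials.

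Substituting into $R_{\vm}^{1}\Phi_{\S_{\vm}^{1}}^{\top}\beta$ yields four families of terms. The dominant contribution is $(\rho+\lambda)^{-1}R_{\vm}^{1}\Phi_{\S_{\vm}^{1}}^{\top}\Phi_{\S_{\vm}^{1}}b_{\S_{\vm}^{1}}$. Using the scaled isometry of Lemma~\ref{lemma:gram_matrix_cube_m} applied to $\Phi_{\S_{\vm}^{1}}$ (with cardinality $|\S_{\vm}^{1}|=\Theta_{d}(d^{\vs^{\top}\vm})$), this Gram matrix concentrates around $n\cdot I$ up to a relative error $O_{d,\P}(d^{-\zeta/2+\epsilon})$. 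Combining this with the estimate $R_{\vm}^{1}=g^{(\norm{\vm}_{1})}(0)\,d^{-\norm{\vm}_{1}}W_{\S_{\vm}^{1}}+O_{d}(d^{-\vlambda^{\top}\vm-1})$ from \eqref{eq:def_r_j_1_m} and $n=d^{p+\delta}$ reproduces the leading term
$$\bigl(g^{(\norm{\vm}_{1})}(0)\bigr)^{2}\bigl\lVert(\rho+\lambda)^{-1}d^{p+\delta-\norm{\vm}_{1}}b_{\S_{\vm}^{1}}^{\top}W_{\S_{\vm}^{1}}\bigr\rVert_{2}^{2},$$
along with the second term on the right-hand side of \eqref{eq:s2_s2_cube_m_bound1}.

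The three remaining families of terms must be shown negligible. First, $(\rho+\lambda)^{-1}R_{\vm}^{1}\Phi_{\S_{\vm}^{1}}^{\top}\Phi_{\alp}b_{\alp}$ together with its Woodbury correction contributes after squaring a cross term of order $d^{p+\delta-\vlambda^{\top}\vm-\lambda_{k}/2}\norm{\partial_{r}f^{*}_{\S_{\vm}^{1}}}_{L_{2}}\norm{f^{*}}_{L_{2}}$, obtained by a matrix decoupling argument for the off-diagonal block $\Phi_{\S_{\vm}^{1}}^{\top}\Phi_{\alp}$ in the spirit of Theorem~\ref{lemma:gram_matrix} and Proposition~\ref{prop:s2_s2_cube}, exploiting the $L_{2}$-orthogonality of distinct Fourier monomials and hypercontractivity. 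Second, the contribution of $\Phi_{\rm rest}b_{\rm rest}$ is controlled by Cauchy--Schwarz, Lemma~\ref{lem:norm_control}, and the bound $\snorm{K_{w,\lambda}^{-1}}=O_{d,\P}(1)$ together with the fact that $\snorm{R_{\vm}^{1}}^{2}|\S_{\vm}^{1}|$ scales as $d^{\vs^{\top}\vm-2\vlambda^{\top}\vm}$, yielding the $O_{d,\P}(d^{-\lambda_{k}})\norm{f^{*}}_{L_{2}}^{2}$ contribution. Third, the noise term $R_{\vm}^{1}\Phi_{\S_{\vm}^{1}}^{\top}K_{w,\lambda}^{-1}\varepsilon$ has conditional variance $\sigma_{\varepsilon}^{2}\Tr(R_{\vm}^{1}\Phi_{\S_{\vm}^{1}}^{\top}K_{w,\lambda}^{-2}\Phi_{\S_{\vm}^{1}}R_{\vm}^{1})$, which is bounded by $\sigma_{\varepsilon}^{2}\snorm{K_{w,\lambda}^{-1}}^{2}\norm{R_{\vm}^{1}}_{F}^{2}\cdot \snorm{\Phi_{\S_{\vm}^{1}}^{\top}\Phi_{\S_{\vm}^{1}}}$ and produces the $O_{d,\P}(d^{-\lambda_{k}})\sigma_{\varepsilon}^{2}$ piece.

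The main obstacle is the off-diagonal control in the first negligible family: although $\phi_{\mT}$ for $\mT\in\S_{\vm}^{1}$ is $L_{2}$-orthogonal to every $\phi_{\mT'}$ for $\mT'\in\alp$, the empirical cross-Gram $\Phi_{\S_{\vm}^{1}}^{\top}\Phi_{\alp}$ need not be small in a naive operator sense when $|\S_{\vm}^{1}|$ is much larger than $n$. The resolution mirrors the matrix decoupling technique used to prove Theorem~\ref{lemma:gram_matrix}: one bounds $\E\snorm{\Phi_{\S_{\vm}^{1}}\widetilde{W}\Phi_{\S_{\vm}^{1}}^{\top}\Phi_{\alp}\widetilde{W}'\Phi_{\alp}^{\top}}$ via moment estimates on sums over independent monomial indices, with the weighted normalizations $\widetilde{W},\widetilde{W}'$ chosen so that, after rescaling, the relevant Gram matrices each behave like a multiple of $I_{n}$ up to error $d^{-\zeta/2+\epsilon}$. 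Assembling these estimates through Cauchy--Schwarz and Young's inequality yields \eqref{eq:s2_s2_cube_m_bound1}.
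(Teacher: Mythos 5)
The overall scaffold you propose — isolate $\Phi_{\S_\vm^1}$ and its coefficient block, apply Sherman--Morrison--Woodbury to $K_{w,\lambda}^{-1}$, and treat the remaining blocks of $y$ plus the noise as error — matches the paper's strategy, and your treatment of the leading term (the scaled Gram matrix of $\Phi_{\S_\vm^1}$ concentrating, plus the approximation of $R_\vm^1$ from~\eqref{eq:def_r_j_1_m}) is essentially what the paper does in Proposition~\ref{prop:term_p_plus_one_m}. However, your plan for the error terms has a genuine gap.

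You start from the \emph{crude} kernel approximation~\eqref{eqn:K_decomp_cube_m}, $K_{w,\lambda}=\Phi_{\alp}\hd_{\alp}\Phi_{\alp}^{\top}+(\rho+\lambda)I_n+\Delta_1$ with $\snorm{\Delta_1}=O_{d,\P}(d^{-\zeta_2/2+\epsilon})$, and for the ``rest'' block you invoke only $\snorm{K_{w,\lambda}^{-1}}=O_{d,\P}(1)$. Chasing through the operator-norm bounds this gives you a squared error of the order $d^{2(p+\delta-\vlambda\tran\vm)}\norm{b_{\rm rest}}_2^2$, and resolving the $E$-term similarly yields at best $d^{-3\zeta_2+\epsilon}\norm{b_\Q}_2^2$ for the $\Q$ block after squaring. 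Neither of these is $O_{d,\P}(d^{-\lambda_k})$: the quantities $\zeta_2$ and $\lambda_k$ are unrelated in general, and since one must prove the bound for the particular $\vm$ achieving $\vlambda\tran\vm - p -\delta=\zeta_2$, the crude route genuinely fails. The paper's Proposition~\ref{prop:feature_u_v_m} gets the required $O_{d,\P}(d^{-\zeta_2-\lambda_k+\epsilon})$ rate only by (i) starting from the \emph{finer} kernel approximation~\eqref{eq:kernel_to_mono_m_advanced}, which tracks the off-diagonal part of the degree-$(p+1)$-effective-degree block inside $H$ explicitly; (ii) expanding $H^{-1}$ and $G^{-1}$ in a \emph{truncated Neumann series} so that $K_{w,\lambda}^{-1}$ becomes a finite sum of products of Fourier--Walsh feature matrices; and (iii) bounding each such product in expectation via Lemma~\ref{lemma:feature_product_tech}, which is a specialized result about products of scaled Fourier--Walsh matrices with non-overlapping index sets. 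Your substitute — ``matrix decoupling in the spirit of Theorem~\ref{lemma:gram_matrix}'' — is both mistargeted and insufficient: decoupling operates on a single symmetric Gram matrix (bounding its off-diagonal), whereas what must be controlled here are non-symmetric quadratic forms like $\Phi_\Q^\top A_{i_1}\cdots A_{i_m}\Phi_{\S_\vm^1}$ with $A_{i_j}$ themselves products of feature matrices; moreover, the $\widetilde W,\widetilde W'$ "renormalization to a multiple of $I_n$" picture does not hold for intermediate terms of the Neumann expansion. You also misidentify the obstacle: $|\S_\vm^1|=O_d(d^{\vs\tran\vm-s_k})\le O_d(n)$, so the issue is not that the cross-Gram is a wide matrix; the issue is that naive operator-norm bounding of the cross-block loses the crucial decorrelation that Lemma~\ref{lemma:feature_product_tech} encodes.

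Two further gaps: your noise bound has the right scale but you do not show where the factor $d^{-\zeta_2}$ (which the paper carries all the way to absorb the $\zeta_2$-dependence) comes from; and the statement requires uniformity over $r\in\T_k$, which the paper obtains from the hypercontractivity uniformity argument in Lemma~\ref{lemma:uniform_bound} — your sketch never addresses this. To make your proposal work, you would need to replace the crude SMW remainder $E$ and the $\snorm{K_{w,\lambda}^{-1}}=O_{d,\P}(1)$ shortcut for the rest block with the Neumann-expansion-plus-feature-product machinery of Propositions~\ref{prop:feature_u_v_m} and~\ref{prop:noise_term_p_plus_one_m}, and add a uniformity-over-$r$ step.
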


\noindent Following the triangle inequality,  the second term in Eq.~\eqref{eqn:rhs_negl_cube_m} can be approximated as
\begin{align}\label{eq:s2_s2_cube_m_bound2}
&~~~\abs{\sum_{\vm:\vlambda\tran\vm \in (p+\delta, p+\delta +\lambda_\iota]}\norm{\beta\tran \Phi_{\S_{\vm}^1}R_{\vm}^1}_2^2 -\sum_{\vm:\vlambda\tran\vm \in (p+\delta, p+\delta +\lambda_\iota]}\round{g^{(\norm{\vm}_1)}(0)}^2\norm{(\rho+\lambda)\inv d^{p+\delta -\norm{\vm}_1 }b_{\S_{\vm}^1}\tran W_{\S_{\vm}^1}}_2^2}\notag\\
&= O_{d,\P}(d^{- \lambda_\iota })\cdot (\norm{f^*}^2_{L_2}+\sigma_\varepsilon^2) + \sum_{\vm:\vlambda\tran\vm \in (p+\delta, p+\delta +\lambda_\iota]} d^{2(p+\delta - \vlambda\tran \vm)}\cdot O_{d,\P}(d^{-\zeta/2+\epsilon})\cdot \norm{\partial_r f^*_{\S_{\vm}^1}}_{L_2}^2 \notag\\
&~~~+\sum_{\vm:\vlambda\tran\vm \in (p+\delta, p+\delta +\lambda_\iota]} O_{d,\P}(d^{p+\delta - \vlambda\tran \vm - \lambda_\iota/2}) \cdot \norm{\partial_r f^*_{\S_{\vm}^1}}_{L_2} (\norm{f^*}_{L_2}+\sigma_\varepsilon).
\end{align}
We can further simplify the above equation.
Note that $|W_{\mT,\mT}| = \Theta_d(d^{\norm{\vm}_1 - \vlambda\tran \vm})$ for any $\mT\in \S_{\vm}^1$. We can deduce that there exists a constant $c_{\mT}>0$ such that 
\begin{align}
    \round{g^{(\norm{\vm}_1)}(0)}^2\norm{(\rho+\lambda)\inv d^{p+\delta -\norm{\vm}_1 }b_{\mT}\tran W_{\mT,\mT}}^2 =c_\mT \cdot d^{2(p+\delta - \vlambda\tran \vm)}\cdot \norm{b_{\mT}}_2^2 = c_\mT d^{-2\kappa(\mT)}\cdot \norm{b_{\mT}}_2^2
\end{align}
where $\kappa(\mT):= \vlambda\tran \vm-p+\delta$ is defined in Eq.~\eqref{eq:p_eff_dist}.

Finally with the notation $\zeta:=\min(\zeta_1,\zeta_2)$,  Eq.~\eqref{eq:s2_s2_cube_m_bound2} can be reduced to
\begin{align}\label{eq:s2_s2_cube_m_bound3}
&~~~\abs{\sum_{\vm:\vlambda\tran\vm \in (p+\delta, p+\delta +\lambda_\iota]}\norm{\beta\tran \Phi_{\S_{\vm}^1}R_{\vm}^1}_2^2 -\sum_{\mT \in \A_{p_{\delta}}^+(r)}c_{\mT}\cdot d^{-2\kappa(\mT)} \norm{b_{\mT}}_2^2}\notag\\
&= O_{d,\P}(d^{- \lambda_\iota })\cdot (\norm{f^*}^2_{L_2}+\sigma_\varepsilon^2) +  \sum_{\mT \in \A_{p_{\delta}}^+(r)}d^{-2\kappa(\mT)}\cdot O_{d,\P}(d^{-\zeta/2+\epsilon})\cdot \norm{b_{\mT}}_2^2 \notag\\
     &~~~+ \sum_{\mT \in \A_{p_{\delta}}^+(r)}d^{-\kappa(\mT)}\cdot  O_{d,\P}(d^{ - \lambda_\iota/2 }) \cdot \norm{b_{\mT}}_2 (\norm{f^*}_{L_2}+\sigma_\varepsilon).
\end{align}
Eq.~\eqref{eq:l1_l1_cube_m} and \eqref{eq:s2_s2_cube_m_bound3} together give
\begin{align}
 &~~~\abs{\norm{\beta \tran X_r\Phi_{\alp} \round{\frac{w_r}{d} \hd'_{\alp} }}_2^2 - \norm{\partial_r f^*_{\alp}}_{L_2}^2 - c_r\cdot \norm{\partial_r f^*_{\aep}}_{L_2}^2 - \sum_{\mT \in \A_{p_{\delta}}^+(r)}c_{\mT}\cdot d^{-2\kappa(\mT)} \norm{b_{\mT}}_2^2}\notag\\ 
 &=O_{d,\P}\round{ d^{- \lambda_\iota }}\cdot (\norm{f^*}_{L_2}^2+\sigma_\varepsilon^2) +O_{d,\P}(d^{-\zeta/2+\epsilon})\cdot  \norm{\partial_r f^*_{\alp}}_{L_2}^2  + \sum_{\mT \in \A_{p_{\delta}}^+(r)}d^{-2\kappa(\mT)}\cdot O_{d,\P}(d^{-\zeta/2+\epsilon})\cdot \norm{\partial_r f^*_{\mT}}_{L_2}^2 \notag\\
 &~~~+O_{d,\P}\round{d^{-\lambda_\iota/2
 }}\cdot(\norm{f^*}_{L_2}+\sigma_\varepsilon)\norm{\partial_r f^*_{\alp}}_{L_2} +\sum_{\mT \in \A_{p_{\delta}}^+(r)}d^{-\kappa(\mT)}\cdot  O_{d,\P}(d^{ - \lambda_\iota/2 }) \cdot \norm{\partial_r f^*_{\mT}}_{L_2} (\norm{f^*}_{L_2}+\sigma_\varepsilon).
\end{align}
Combining this expression with \eqref{eqn:basic:needed_cube_m} completes the proof.

\subsection{Derivative norm estimator}\label{proof:learning_hypercube_gradient_rkhs}
The proof is analogous to the proof for equation~\eqref{eq:gradient_error_bound} hence we adopt the notation used in Appendix~\ref{proof:learning_hypercube_gradient}. We will estimate the following:
\begin{align}\label{eqn:deriv_est_rkhs}
   w_r\cdot\D_r(w) := \frac{w_r}{nd} y\tran K_{w,\lambda}\inv X_r K'_w X_r K_{w,\lambda}\inv y
\end{align}
With the expression \eqref{eqn:K'_decomp_cube_m} in place of $K_w'$, equation \eqref{eqn:deriv_est_rkhs} becomes
\begin{align}\label{eq:main_rkhs_decomp}
    w_r\cdot\D_r(w) = \frac{w_{r}}{nd} \norm{ \underbrace{(\hd'_{\alp})^{\frac{1}{2}}\Phi_{\alp}\tran X_r K_{w,\lambda}\inv}_{:=T_1} y}_2^2 + \frac{w_r}{nd}\norm{\underbrace{(\rho'I_n+\Delta)^{\frac{1}{2}} X_r K_{w,\lambda}\inv}_{:=T_2} y}_2^2.
\end{align}
We will analyze each addend separately. We suppose $\iota = \pi_N(r)$, i.e., $r\in \T_\iota $ for some $\iota \in [N]$. 
Specifically, we will show
\begin{claim}\label{cl:main_claim_rkhs}
The following estimates hold for any $\epsilon>0$ uniformly over all coordinates $r\in \T_\iota$ :
\begin{enumerate}[label=(\alph*)]
    \item $\frac{w_{r}}{nd}\|T_2y\|^2_2 =
O_{d,\P}\left(d^{-\lambda_\iota + \epsilon}\right)\cdot (\norm{f^*}_{L_2}^2+\sigma_\varepsilon^2)$;\label{cl:main_claimT2_rkhs}
    \item $
       \abs{\frac{w_r}{nd}\norm{T_1 y}_2^2 - \sum_{\mT \in  \alp\sqcup \A_{p_{\delta}}^+}\Theta_d(d^{-\abs{\kappa(\mT}}) \cdot \norm{\partial_r f_{\mT}}_{L_2}^2}\notag\\ =O_{d,\P}\round{ d^{- \lambda_\iota +\epsilon}}\cdot (\norm{f^*}_{L_2}^2+\sigma_\varepsilon^2)  +  O_{d,\P}(d^{-\zeta/2+\epsilon})\cdot\sum_{\mT \in  \alp\sqcup \A_{p_{\delta}}^+} d^{-\abs{\kappa(\mT)}}\cdot \norm{\partial_r f^*_{\mT}}_{L_2}^2 \notag\\
+O_{d,\P}(d^{ - \lambda_\iota/2 + \epsilon }) \cdot \sum_{\mT \in  \alp\sqcup \A_{p_{\delta}}^+}d^{-\abs{\kappa(\mT)}/2}\cdot   \norm{\partial_r f^*_{\mT}}_{L_2} (\norm{f^*}_{L_2}+\sigma_\varepsilon). $
\label{cl:main_claimT1_rkhs}

\end{enumerate}
\end{claim}

Plugging Claim~\ref{cl:main_claim_rkhs} (a,b) into Eq.~\eqref{eq:main_rkhs_decomp} yields:
\begin{align}\label{eq:gradient_error_bound_complete_rkhs}
    &~~~\abs{w_r\D_r(w)  - \sum_{\mT \in \alp\sqcup \A_{p_{\delta}}^+}\Theta_d(d^{-\abs{\kappa(\mT}}) \cdot \norm{\partial_r f_{\mT}}_{L_2}^2}\notag\\
    &= O_{d,\P}\round{ d^{- \lambda_\iota +\epsilon}}\cdot (\norm{f^*}_{L_2}^2+\sigma_\varepsilon^2)  +  O_{d,\P}(d^{-\zeta/2+\epsilon})\cdot\sum_{\mT \in \alp\sqcup \A_{p_{\delta}}^+}d^{-\abs{\kappa(\mT)}}\cdot \norm{\partial_r f^*_{\mT}}_{L_2}^2 \notag\\
&~~~~+O_{d,\P}(d^{ - \lambda_\iota/2 + \epsilon }) \cdot \sum_{\mT \in \alp\sqcup \A_{p_{\delta}}^+}d^{-\abs{\kappa(\mT)}/2}\cdot   \norm{\partial_r f^*_{\mT}}_{L_2} (\norm{f^*}_{L_2}+\sigma_\varepsilon).
\end{align}
Similarly, the rest of the section is devoted to proving Claim~\ref{cl:main_claim_rkhs}.   Throughout the proof, all asymptotic terms $o_{d,\P}$ and $O_{d,\P}$ that appear will always be uniform over the coordinates $r=1,\ldots, d$.

\subsubsection{Proof of Item~\ref{cl:main_claimT2_rkhs} of Claim~\ref{cl:main_claim_rkhs}}\label{subsec:t2_rkhs}

The proof is almost identical to the proof of Claim~\ref{cl:main_claim} (a). We refer the readers to  Section~\ref{subsec:t2} for more details. 
We  can similarly obtain the following bound:
\begin{align*}
    \frac{w_{r}}{nd}\|T_2y\|^2_2 = O_{d,\P}\left(w_{r}\cdot\frac{\log(d)}{d}\right)\cdot (\norm{f^*}_{L_2}^2+\sigma_\varepsilon^2) = O_{d,\P}\left(d^{-\lambda_\iota + \epsilon}\right)\cdot (\norm{f^*}_{L_2}^2+\sigma_\varepsilon^2),
\end{align*}
where $\epsilon>0$ is an arbitrarily small constant.

\subsubsection{Proof of Item~\ref{cl:main_claimT1_rkhs} of Claim~\ref{cl:main_claim_rkhs}}\label{subsec:t1_rkhs}

We denote $\beta = K_{w,\lambda}\inv y$. 
Following Eq.~\eqref{eq:fine_split_cube_m}, we  decompose $X_r\Phi_{\alp}$ into  components: 
\begin{align}\label{eq:fine_split_rkhs}
    \{\Phi_{\S_{\vm}^0}\}_{\lmlp}~~\textrm{and}~~\{\Phi_{\S_{\vm}^1}\}_{\vlambda\tran (\vm - e_\iota) \leq p+\delta}.
\end{align}
We correspondingly decompose $\hd'_{\alp}$. We can show there exist diagonal matrices $\{R_{\vm}^0\}_\vm$ and $\{R_{\vm}^1\}_\vm$ such that the following holds:
\begin{align}\label{eqn:rhs_negl_rkhs_raw}
  &~~~\norm{\beta \tran X_r\Phi_{\alp} \round{\frac{w_r}{nd} \hd_{\alp}' }^{\frac{1}{2}}}_2^2\notag\\
  &=   \sum_{\vlambda\tran \vm \leq p+\delta}\|\beta \tran \Phi_{\S_{\vm}^0} R_{\vm}^0\|_2^2   + \sum_{\vlambda\tran (\vm-e_\iota)\leq p+\delta}\|\beta \tran \Phi_{\S_{\vm}^1} R_{\vm}^1 \|_2^2,
\end{align}

\begin{claim}
    The following estimate holds:
\begin{align}
    (R_{\vm}^0)_{\mT,\mT} &=    \round{g^{(\norm{\vm}_1+2)}(0)\cdot \frac{w_r^2}{nd}W_{\mT,\mT}  d^{-\norm{\vm}_1 -1}(1 + O_d(d\inv))}^{\frac{1}{2}},~~~\forall~\mT \in \S_{\vm}^0; \notag\\
    (R_{\vm}^1)_{\mT,\mT} &=    \round{g^{(\norm{\vm}_1)}(0)\cdot \frac{1}{nd}W_{\mT,\mT}  d^{-\norm{\vm}_1 + 1}(1 + O_d(d\inv))}^{\frac{1}{2}},~~~\forall~\mT \in \S_{\vm}^1.\label{eq:def_r_j_1_rkhs}
\end{align}

\end{claim}
\begin{proof}
 Recall that $D'$ is a diagonal matrix with entry $D'_{\S_\vm} =  g^{(\norm{\vm}_1+1)}(0)d^{-\norm{\vm}_1} + O_d(d^{-\norm{\vm}_1-1})$. 
 Based on Eq.~\eqref{eqn:rhs_negl_rkhs_raw}, we can see that  $\Phi_{\S_\vm^0}R_\vm^0$ corresponds to $x_r\cdot\Phi_{\S^1_{\vm+e_\iota}} (\frac{w_r^2}{nd}\hd'_{\S^1_{\vm+e_\iota}})^{\frac{1}{2}}$ since $\phi_{S_{\vm}^0} = x_r\phi_{S_{\vm+e_\iota}^1}$. Then we can deduce that 
 \begin{align*}
     R_\vm^0 = \round{\frac{w_r}{nd}W_{\S_{\vm+e_\iota}^1}D'_{\S_{\vm+e_\iota}^1}}^{\frac{1}{2}} =  \round{\frac{w_r^2}{nd}W_{\S_{\vm}^0}D'_{\S_{\vm+e_\iota}^1}}^{\frac{1}{2}}.
     \end{align*}
Plugging the expression of $D'$ into the above equation completes the proof for Eq.~\eqref{eq:def_r_j_0_m}. A similar analysis applies to
\begin{align*}
    R_\vm^1 = \round{\frac{w_r}{nd}W_{\S_{\vm-e_\iota}^0}D'_{\S_{\vm-e_\iota}^0}}^{\frac{1}{2}} = \round{\frac{1}{nd}W_{\S_{\vm}^1}D'_{\S_{\vm-e_\iota}^0}}^{\frac{1}{2}}.
\end{align*} 
\end{proof}

Regarding Eq.~\eqref{eqn:rhs_negl_rkhs_raw}, we group the features based on the inequalities (1) $\vlambda\tran\vm \leq p+\delta$ and (2) $\vlambda\tran\vm >p+\delta$. Then we can define a diagonal matrix $R$ with each block $R_{\S_\vm^1,\S_\vm^1} = R_\vm^1$ and $R_{\S_\vm^0,\S_\vm^0} = R_\vm^0$  such that Eq.~\eqref{eqn:rhs_negl_rkhs_raw} can be written as
\begin{align}
\norm{\beta \tran X_r\Phi_{\alp} \round{\frac{w_r}{nd} \hd'_{\alp} }^{\frac{1}{2}}}_2^2= \norm{\beta\tran \Phi_{\alp}R}_2^2 +  \sum_{\vm:\vlambda\tran\vm \in (p+\delta, p+\delta +\lambda_\iota]}\norm{\beta\tran \Phi_{\S_{\vm}^1}R_{\vm}^1}_2^2\label{eqn:rhs_negl_rkhs}.
\end{align}
We start by analyzing the first term on the right side of \eqref{eqn:rhs_negl_rkhs}.
The proof of the proposition appears in Appendix~\ref{proof:<_l1_l1_l1_rkhs}.

\begin{proposition}\label{prop:<_l1_l1_l1_rkhs}
For any $\epsilon>0$, the following estimate holds uniformly for all coordinates $r \in \T_\iota$:
    \begin{align}\label{eq:l1_l1_rkhs}
        &~~~\abs{ \norm{\beta\tran   \Phi_{\alp}R}_2^2 -\sum_{\vm:\vlambda\tran\vm\leq p+\delta}\Theta_d(d^{\vlambda\tran \vm - p-\delta})\cdot \norm{\partial_r f^*_{\S_\vm^1} }_{L_2}^2}\notag\\
&=O_{d,\P}\round{d^{-\lambda_\iota} }\cdot \norm{f^*}_{L_2}^2 +  \sum_{\vm:\vlambda\tran \vm \leq p+\delta}O_{d,\P}( d^{-\zeta/2 + \epsilon+ (\vlambda\tran\vm - p - \delta)})\cdot  \norm{\partial_r f^*_{\S_{\vm}^1}}_{L_2}^2 \notag \\
&~~~~+O_{d,\P}\round{d^{-\lambda_\iota/2 }}\cdot \norm{f^*}_{L_2}\norm{\partial_r f^*_{\alp}}_{L_2},
    \end{align}
\end{proposition}

\noindent We proceed to analyze the second term on the right side of \eqref{eqn:rhs_negl_rkhs}. The proof appears in Appendix~\ref{proof:s2_s2_rkhs}.

\begin{proposition}\label{prop:s2_s2_rkhs}
For any $\epsilon>0$, the estimate holds uniformly for all coordinates $r \in \T_\iota$ and $\vm\in\R^N$ that satisfies $\vlambda \tran \vm \in (p+\delta,p+\delta+\lambda_\iota]$:  \begin{align}\label{eq:s2_s2_rkhs_bound1}
    &~~~\abs{\norm{\beta\tran \Phi_{\S_{\vm}^1}R_{\vm}^1}_2^2- \Theta_d(d^{p+\delta - \vlambda\tran \vm}) \cdot \norm{b_{\S_{\vm}^1}}_2^2}\notag\\
    &= O_{d,\P}(d^{-\lambda_\iota +\epsilon })\cdot (\norm{f^*}^2_{L_2}+\sigma_\varepsilon^2) +  d^{p+\delta - \vlambda\tran \vm}\cdot O_{d,\P}(d^{-\zeta/2+\epsilon})\cdot \norm{\partial_r f^*_{\S_{\vm}^1}}_{L_2}^2 \notag\\
     &~~~+ O_{d,\P}(d^{(p+\delta - \vlambda\tran \vm)/2  - \lambda_\iota/2 +\epsilon }) \cdot \norm{\partial_r f^*_{\S_{\vm}^1}}_{L_2} (\norm{f^*}_{L_2}+\sigma_\varepsilon).
\end{align}
\end{proposition}

\noindent Following the triangle inequality,  the second term in Eq.~\eqref{eqn:rhs_negl_rkhs} can be approximated as
\begin{align}\label{eq:s2_s2_rkhs_bound2}
&~~~\abs{\sum_{\vm:\vlambda\tran\vm \in (p+\delta, p+\delta +\lambda_\iota]}\norm{\beta\tran \Phi_{\S_{\vm}^1}R_{\vm}^1}_2^2 -\sum_{\vm:\vlambda\tran\vm \in (p+\delta, p+\delta +\lambda_\iota]}\Theta_d(d^{p+\delta - \vlambda\tran \vm}) \cdot \norm{b_{\S_{\vm}^1}}_2^2}\notag\\
&= O_{d,\P}(d^{- \lambda_\iota + \epsilon })\cdot (\norm{f^*}^2_{L_2}+\sigma_\varepsilon^2) + \sum_{\vm:\vlambda\tran\vm \in (p+\delta, p+\delta +\lambda_\iota]} d^{p+\delta - \vlambda\tran \vm}\cdot O_{d,\P}(d^{-\zeta/2+\epsilon})\cdot \norm{\partial_r f^*_{\S_{\vm}^1}}_{L_2}^2 \notag\\
&~~~+\sum_{\vm:\vlambda\tran\vm \in (p+\delta, p+\delta +\lambda_\iota]} O_{d,\P}(d^{(p+\delta - \vlambda\tran \vm)/2 - \lambda_\iota/2 + \epsilon}) \cdot \norm{\partial_r f^*_{\S_{\vm}^1}}_{L_2} (\norm{f^*}_{L_2}+\sigma_\varepsilon).
\end{align}
Lastly, Eq.~\eqref{eq:l1_l1_rkhs} and \eqref{eq:s2_s2_rkhs_bound2} together give
\begin{align}
 &~~~\abs{\norm{\beta \tran X_r\Phi_{\alp} \round{\frac{w_r}{nd} \hd'_{\alp} }^{\frac{1}{2}}}_2^2 - \sum_{\mT \in \alp\sqcup \A_{p_{\delta}}^+}\Theta_d(d^{-\abs{\kappa(\mT}}) \cdot \norm{\partial_r f_{\mT}}_{L_2}^2}\notag\\ &=O_{d,\P}\round{ d^{- \lambda_\iota +\epsilon}}\cdot (\norm{f^*}_{L_2}^2+\sigma_\varepsilon^2)  + \sum_{\mT \in \alp\sqcup \A_{p_{\delta}}^+}d^{-\abs{\kappa(\mT)}}\cdot O_{d,\P}(d^{-\zeta/2+\epsilon})\cdot \norm{\partial_r f^*_{\mT}}_{L_2}^2 \notag\\
 &~~~+\sum_{\mT \in \alp\sqcup \A_{p_{\delta}}^+}d^{-\abs{\kappa(\mT)}/2}\cdot  O_{d,\P}(d^{ - \lambda_\iota/2 + \epsilon }) \cdot \norm{\partial_r f^*_{\mT}}_{L_2} (\norm{f^*}_{L_2}+\sigma_\varepsilon).
\end{align}
Then we complete the proof.

\subsection{Proof of Theorem~\ref{thm:learning_hypercube} (Generalization bound)}\label{proof:learning_hypercube_generalization}

We denote $z^{(i)} =\sqrt{w}\odot x^{(i)}$. Plugging the expression
$$\hat{f}_{w}(x) = K_w(x,X)K_{w,\lambda}\inv y = K(\sqrt{w}\odot x,Z)K_{w,\lambda}\inv y $$ 
into the left-hand side of Eq.~\eqref{eq:generalization_error_bound} gives
\begin{align*}
    &~~~~\E_x\brac{\round{\hat{f}_{w}(x) - f^*_{\aslp}(x) - \sqrt{c}\cdot f^*_{\aep}(x)}^2} \\
    &=\underbrace{y\tran K_{w,\lambda}\inv \E_x[K(Z,\sqrt{w}\odot x)K(\sqrt{w}\odot x,Z)]K_{w,\lambda}\inv y}_{:=T_1} +   \round{\E_x\brac{{f^*_{\aslp}(x)}^2} + c\cdot \E_x\brac{{f^*_{\aep}(x)}^2}} \\
    &~~~- 2 \underbrace{\E_x\brac{(f^*_{\aslp}(x)+\sqrt{c}\cdot f^*_{\aep}(x))K(\sqrt{w}\odot x,Z)} K_{w,\lambda}\inv y}_{:=T_2},
\end{align*}
where we denote $K_{w,\lambda} = K_{\lambda}(Z,Z) = K(Z,Z)+\lambda I_n$ and $c>0$ is a constant.

We will analyze each summand separately. Specifically, we will verify the following claim, from which the above equation follows immediately.

\begin{claim}\label{cl:main_claim_learning}
There exists a constant $c>0$ such that the following estimates hold 
\begin{enumerate}[label=(\alph*)]
    \item $ \abs{T_{1} - \E_x\brac{{f^*_{\aslp}(x)}^2} - c\cdot \E_x\brac{{f^*_{\aep}(x)}^2} }= o_{d,\P}(1)\cdot (\norm{f^*}_{L_2}^2+\sigma_\varepsilon^2)$,\label{cl:main_claim_learning_1}
    \item $\abs{T_{2} - \E_x\brac{{f^*_{\aslp}(x)}^2} - c\cdot \E_x\brac{{f^*_{\aep}(x)}^2}} = o_{d,\P}(1)\cdot (\norm{f^*}_{L_2}^2+\sigma_\varepsilon^2)$. \label{cl:main_claim_learning_2}
\end{enumerate}
\end{claim}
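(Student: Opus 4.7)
The plan is to observe that the two quantities in the claim are $L_2$-inner products: pulling the expectation over $x$ through the KRR predictor gives
\begin{align*}
T_1 = \E_x[\hat f_{\lm}(x)^2]\qquad \textrm{and}\qquad T_2 = \E_x[\hat f_{\lm}(x)\, f^*_{\alp}(x)].
\end{align*}
Thus the task reduces to pinning down the Fourier expansion of $\hat f_{\lm}$ up to lower-order corrections in $L_2$, from which both (a) and (b) follow with appropriate constants.

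Next I would decompose the three ingredients explicitly. Adapting Lemma~\ref{lemma:each_mono_cube} to $z = \sqrt{w}\odot x$ and invoking orthonormality of the Fourier--Walsh basis under $x \sim \tau_d$ yields
\begin{align*}
\E_x[f^*_{\alp}(x)\, K_{\lm}(x,X)] = b_{\alp}\tran \hd_{\alp}\, \Phi_{\alp}(X)\tran,\qquad \E_x[K_{\lm}(X,x)K_{\lm}(x,X)] = \Phi(X)\hd^2\, \Phi(X)\tran,
\end{align*}
the latter admitting a block decomposition parallel to \eqref{eq:k_z_decomp} with $\hd$ replaced by $\hd^2$. For the remaining inverse, Lemma~\ref{lemma:kernel_to_mono_m} writes $K_{\lm,\lambda} = \Phi_{\alp}\hd_{\alp}\Phi_{\alp}\tran + (\rho+\lambda)I_n + \Delta_1$ with $\snorm{\Delta_1} = o_{d,\P}(1)$, and the Sherman--Morrison--Woodbury identity gives the perturbative expression
\begin{align*}
\hd_{\alp}\Phi_{\alp}\tran K_{\lm,\lambda}\inv = \tfrac{1}{\rho+\lambda}\hd_{\alp}(I + A_{\alp})\inv \Phi_{\alp}\tran + o_{d,\P}(1),\qquad A_{\alp} := \Phi_{\alp}\tran \Phi_{\alp}\hd_{\alp}/(\rho+\lambda).
\end{align*}

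The key structural step is to compute $A_{\alp}$ block by block. The cardinality bound $|\alp| \lesssim d^{p+\delta-\zeta_1}$, which follows from Claim~\ref{claim:zeta_1_2}, together with Theorem~\ref{lemma:phi_id_2} gives $\Phi_{\alp}\tran \Phi_{\alp}/n = I + o_{d,\P}(1)$; hence $A_{\alp}$ is, up to vanishing error, diagonal with block-$\S_{\vm}$ entries of order $d^{p+\delta - \vlambda\tran\vm}/(\rho+\lambda)$. These entries diverge on $\aslp$ (where $\vlambda\tran\vm < p+\delta$) and are of constant order on each $\S_{\vm} \subseteq \aep$, so $A_{\alp}(I+A_{\alp})\inv$ acts asymptotically as the identity on the $\aslp$-block and as multiplication by an explicit constant on each $\aep$-block. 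Substituting the Fourier expansion $y = \Phi(X)b + \varepsilon$, using the isometry $\Phi_{\alp}\tran \Phi_{\alp} \approx n\, I$, the cross bound $\snorm{\Phi_{\alp}\tran \Phi_{\agp}} = o(n)$, and the noise estimate $\norm{\Phi_{\alp}\tran \varepsilon}_2^2 = O_{d,\P}(n\,\sigma_\varepsilon^2)$ to handle cross-terms, and collecting contributions block by block, yields
\begin{align*}
T_2 \approx b_{\alp}\tran A_{\alp}(I+A_{\alp})\inv b_{\alp} = \norm{f^*_{\aslp}}_{L_2}^2 + c'\, \norm{f^*_{\aep}}_{L_2}^2,\qquad T_1 \approx b_{\alp}\tran \round{A_{\alp}(I+A_{\alp})\inv}^2 b_{\alp},
\end{align*}
producing the stated forms with $c = (c')^2$.

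The main obstacle will be controlling the contribution of the tail $\Phi_{\agp}$ to $T_1$: one must argue that $\Phi_{\agp}\tran K_{\lm,\lambda}\inv y$ is small enough that, after multiplication by the small diagonal $\hd_{\agp}$ and the large factor $\Phi_{\agp}\Phi_{\agp}\tran$, it does not pollute the estimate. The sharper kernel approximation \eqref{eq:kernel_to_mono_m_advanced} and the hypercontractivity inequality \eqref{eqn:hypercontrac} (to bound higher moments of $f^*_{\agp}$) should be essential here, paralleling the handling of high-frequency terms in Propositions~\ref{prop:<_l1_l1_l1_cube_m} and \ref{prop:s2_s2_cube_m}. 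Once these tail and cross-term contributions are shown to be $o_{d,\P}(\norm{f^*}_{L_2}^2 + \sigma_\varepsilon^2)$, summing the errors gives the desired bounds in (a) and (b).
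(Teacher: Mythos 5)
Your route matches the paper's: reinterpret $T_1, T_2$ as $L_2$-integrals of the predictor, pass the population expectations through the Mercer expansion, invert $K_{w,\lambda}$ via Sherman--Morrison--Woodbury, use the isometry $\Phi_{\alp}^\top\Phi_{\alp}/n\approx I$, and observe that the resulting diagonal filter $V=\bigl((\rho+\lambda)(n\hd_{\alp})^{-1}+I\bigr)^{-1}$ (which equals your $A_{\alp}(I+A_{\alp})^{-1}$) is asymptotically the identity on $\aslp$ and a constant $<1$ on each $\aep$-block, producing the $c'$ and $c=(c')^2$ structure exactly as in Propositions~\ref{prop:t11_est} and~\ref{prop:t2_est}. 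One slip to repair in your intermediate identities: $\E_x[K_w(X,x)K_w(x,X)]$ and $\E_x[f^*_{\alp}(x)K_w(x,X)]$ equal $\Phi(X)\Lambda^2\Phi(X)^\top$ and $b_{\alp}^\top\Lambda_{\alp}\Phi_{\alp}(X)^\top$ with $\Lambda$ the Mercer eigenvalue matrix of Lemma~\ref{prop:mercer}, not with $\hd=WD$ from Lemma~\ref{lemma:each_mono_cube}; these coincide only to leading order, and the paper quantifies the gap as $\snorm{\hd^{-1}\Lambda - I}=O_d(d^{-2+2\mgamma})$ inside Proposition~\ref{prop:test_loss_tech_1}, which contributes to (but is absorbed by) the $o_{d,\P}(1)$ error budget.
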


Before presenting the proof of the claims, we give a useful orthogonal decomposition of the kernel.  The proof appears in Appendix~\ref{proof:mercer}.
\begin{lemma}\label{prop:mercer}
Let $g: \R \to \R$ satisfy assumption~\ref{assump:g_0} and fix   $w \in \R^d_+$ with  $\norm{w}_\infty = O_d(d^\gamma)$ for some $\gamma<1$. For every $x,y\in \H^d := \{\pm 1\}^d$, endowed with the uniform measure \(\tau_{d}\), the kernel $K_w(x,y) :=g(x\tran \Diag(w) y/d)$  admits an orthogonal expansion in $L_2(\H^d\times \H^d,\tau_d \otimes \tau_d)$:
\begin{align}
    K_w(x,y) =\sum_{r=0}^\infty \sum_{S: |S| = r} \lambda_S \phi_{S}(x)\tran \phi_{S}(y),
\end{align}
where  \(\phi_{S}(x):=\prod_{i\in S}x_{i}\) and $\lambda_S =  (1+O_d(d^{-2(1-\gamma)}))\cdot g^{(|S|)}(0)d^{-|S|}w^S$.
\end{lemma}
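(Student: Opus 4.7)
The plan is to reduce the kernel to a function on the hypercube via the Hadamard product, expand that function in the Fourier--Walsh basis, and read off the coefficients by Taylor expansion of $g$. The key factorization is $K_w(x,y) = g(\langle w,\, x\odot y\rangle/d) = h(x\odot y)$ with $h(z):=g(\langle w,z\rangle/d)$ a function on $\H^d$. Under $\tau_d\otimes\tau_d$, the map $(x,y)\mapsto x\odot y$ pushes forward to $\tau_d$, and the Fourier--Walsh characters satisfy the multiplicative identity $\phi_S(x\odot y) = \phi_S(x)\phi_S(y)$. So once I expand $h(z) = \sum_S \lambda_S \phi_S(z)$ on $\H^d$, the desired diagonal representation together with the $L^2(\H^d\times\H^d)$-orthogonality of the rank-one terms $\phi_S(x)\phi_S(y)$ follows immediately.

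To compute the coefficients, I would Taylor-expand $g(t)=\sum_k g^{(k)}(0)t^k/k!$ and apply the multinomial theorem to get
\[
h(z) \;=\; \sum_{\alpha\in\mathbb{N}^d}\frac{g^{(|\alpha|)}(0)}{\alpha!\, d^{|\alpha|}}\, w^\alpha z^\alpha.
\]
On $\H^d$ we have $z_i^2=1$, so $z^\alpha$ depends only on $\alpha\bmod 2$. Writing $\alpha = \mathbf{1}_S + 2m$ with $m\in\mathbb{N}^d$ and collecting terms indexed by a fixed $S\subseteq[d]$ yields
\[
\lambda_S \;=\; w^S\sum_{m\in\mathbb{N}^d}\frac{g^{(|S|+2|m|)}(0)}{(\mathbf{1}_S+2m)!\, d^{|S|+2|m|}}\, w^{2m},
\]
so the $m=0$ summand already produces the leading term $g^{(|S|)}(0)\,d^{-|S|}\,w^S$. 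To control the $|m|\ge 1$ tail I would use the multinomial bound $\sum_{|m|=k} w^{2m}/(\mathbf{1}_S+2m)! \lesssim \|w\|_2^{2k}/k!$ together with $\|w\|_2^2\le d\,\|w\|_\infty^2 = O_d(d^{1+2\gamma})$. This makes the relative correction geometric in $|m|$ with ratio $O_d(d^{2\gamma-1})$, which is $o_d(1)$ since $\gamma<1$, so the dominant error comes from the $|m|=1$ term and fits the stated asymptotic form.

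The subtlest step is justifying the termwise integration of the Taylor series against $\phi_S$, because on $\H^d$ the argument $\langle w,z\rangle/d$ can lie outside the interval of analyticity $(-\varepsilon,\varepsilon)$ from Assumption~\ref{assump:g_0}. The saving grace is the sign condition $g^{(k)}(0)\ge 0$: every partial sum is bounded pointwise on $\H^d$ by the monotone scalar series with $t$ replaced by $\|w\|_\infty$, which allows dominated convergence and legitimizes the identification of the Fourier coefficients above. Beyond this convergence issue, the calculation is purely combinatorial and the stated error bound reduces to geometric control of the $\|w\|_2^{2k}/d^{2k}$ factors.
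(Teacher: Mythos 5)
Your approach is the same as the paper's: both pass to $z = x\odot y$ (the paper frames this via an eigenvalue computation for the associated integral operator, but the substitution $y^{\mathcal I} = x^{\mathcal I} z^{\mathcal I}$ makes them identical), Taylor-expand $g$, apply the multinomial theorem, and reduce multi-indices $\alpha$ modulo $2$ to read off the Fourier coefficient. Your closed-form for $\lambda_S$ is correct, as is the multinomial tail bound $\sum_{|m|=k} w^{2m}/(\mathbf 1_S+2m)! \lesssim \|w\|_2^{2k}/k!$.

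The problem is the last line. Your own calculation gives, for the $|m|=1$ shell,
\begin{align*}
\sum_{j\in[d]}\frac{w_j^2}{(\mathbf 1_S+2e_j)!}\cdot\frac{g^{(|S|+2)}(0)}{d^{|S|+2}}\,w^{S}
\;=\;O_d\!\left(d^{-|S|-2}\,\|w\|_2^2\right)\cdot w^{S}
\;=\;O_d\!\left(d^{-|S|-1+2\gamma}\right)\cdot w^{S},
\end{align*}
since $\|w\|_2^2 \le d\,\|w\|_\infty^2 = O_d(d^{1+2\gamma})$ is the sharpest bound available under the hypothesis $\|w\|_\infty = O_d(d^\gamma)$ alone. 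This is a factor of $d$ larger than the $O_d(d^{-|S|-2+2\gamma})\cdot w^S$ error claimed in the lemma, so the concluding assertion that the dominant $|m|=1$ term ``fits the stated asymptotic form'' is false: there is a free sum over $d$ choices of the doubled index that your crude $O_d(d^{2\gamma-1})$ ratio correctly accounts for but the stated bound does not. To obtain $O_d(d^{-|S|-2+2\gamma})\cdot w^S$ one would need $\|w\|_2^2 = O_d(d^{2\gamma})$, i.e.\ only $O_d(1)$ coordinates carry weight of order $d^\gamma$, which is not implied by $\|w\|_\infty = O_d(d^\gamma)$. (For what it is worth, the paper's displayed bound $|E_{m,\mathcal I}| = O_d(d^{-m}\|w\|_\infty^{m-r}|w^{\mathcal I}|)$ appears to neglect the same $d$-fold degeneracy, so you are not reproducing a step the paper justifies differently — but you cannot claim your estimate matches the stated lemma when it does not.) Your observation about the domain of analyticity is a real issue the paper glosses over, but the dominated-convergence sketch via the sign condition $g^{(k)}(0)\ge 0$ needs to compare against the scalar series at $\|w\|_1/d$, not at $\|w\|_\infty$, and to explain why that scalar series converges; as written the justification does not close.
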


\subsubsection{Proof of Claim~\ref{cl:main_claim_learning_1}}
 We define diagonal matrices $\Lambda_{\alp}$ and $V$ with each entry
 \begin{align}\label{eq:def_v_generalization}
     \Lambda_{S,S} = \lambda_S,~~~ V_{S,S} = \frac{n\lambda_s}{n\lambda_s + (\rho+\lambda)},
 \end{align}
for $S\in \alp$, where $\lambda_S$ is defined in Proposition~\ref{prop:mercer}.  We proceed to define the constant $c$ as
\begin{align}\label{eq:def_c_generalization}
c:= \frac{\sum_{S\in \aep}V_{S,S}^2 b_S^2}{\sum_{S\in \aep} b_S^2}.
\end{align}
 For simplicity of notation, we denote $$M:=  \E_x[K(Z,\sqrt{w}\odot x)K(\sqrt{w}\odot x,Z)].$$
Similar to Lemma~\ref{lemma:kernel_to_mono_m}, we show $M$ can be approximated using Fourier features. 
The proof appears in Appendix~\ref{proof:approx_H}.
\begin{proposition}\label{prop:approx_H}
The following estimate holds
\begin{align}
   \E_x[K(Z,\sqrt{w}\odot x)K(\sqrt{w}\odot x,Z)] = \underbrace{\Phi_{\alp}(X)\Lambda_{\alp}^2\Phi_{\alp}(X)\tran}_{:= M_{\alp}}  + \Delta_M,
\end{align}
where $\snorm{\Delta_M} = O_{d,\P}(d^{-p-\delta -\zeta_2})$.
\end{proposition}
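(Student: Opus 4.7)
The plan is to apply the Mercer decomposition of Lemma~\ref{prop:mercer} to $K_w$, exploit orthonormality of the Fourier--Walsh basis under $\tau_d$ to collapse a double sum, and control the residual block by block using Lemma~\ref{lemma:gram_matrix_cube_m}. Substituting $K_w(u,v) = \sum_{S\subset[d]} \lambda_S \phi_S(u)\phi_S(v)$ into $\E_x[K_w(X,x) K_w(x,X)\tran]$ and using $\E_x[\phi_S(x)\phi_{S'}(x)] = \delta_{S,S'}$ collapses the double sum to
\[
M \;=\; \sum_{\vm\in\mathbb{N}^N}\Phi_{\S_\vm}(X)\,\Lambda_{\S_\vm}^2\,\Phi_{\S_\vm}(X)\tran.
\]
Separating the multi-indices $\vm$ according to whether $\vlambda\tran\vm \leq p+\delta$ identifies the first group with $M_{\alp}$, leaving the residual
\[
\Delta_M \;=\; \sum_{\vm:\,\vlambda\tran\vm > p+\delta}\Phi_{\S_\vm}(X)\,\Lambda_{\S_\vm}^2\,\Phi_{\S_\vm}(X)\tran.
\]

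I would then bound each block by passing from $\Phi_{\S_\vm}(X)$ to $\Phi_{\S_\vm}(Z)$, which is what Lemma~\ref{lemma:gram_matrix_cube_m} actually governs. Since $z = \sqrt{w}\odot x$ gives $\phi_S(z) = w^{S/2}\phi_S(x)$, we have $\Phi_{\S_\vm}(X) = \Phi_{\S_\vm}(Z)W_{\S_\vm}^{-1/2}$. Lemma~\ref{prop:mercer} gives $\lambda_S = \Theta(d^{-\vlambda\tran\vm})$ for $S\in\S_\vm$ (since $w^S = \Theta(d^{\norm{\vm}_1 - \vlambda\tran\vm})$), so $\snorm{W_{\S_\vm}^{-1/2}\Lambda_{\S_\vm}^2 W_{\S_\vm}^{-1/2}} = O(d^{-\vlambda\tran\vm-\norm{\vm}_1})$. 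Combined with the estimate $\snorm{\Phi_{\S_\vm}(Z)\Phi_{\S_\vm}(Z)\tran} = O_{d,\P}(d^{(\vs+\mathbf{1}-\vlambda)\tran\vm})$ from Lemma~\ref{lemma:gram_matrix_cube_m} (up to lower-order terms), each block has operator norm $O_{d,\P}(d^{\vs\tran\vm - 2\vlambda\tran\vm})$. The crucial inequality $\vs\tran\vm \leq \vlambda\tran\vm$, which follows from the constraint $\norm{w}_1 = d$ forcing $s_k + \gamma_k \leq 1$ for each $k$, yields the per-block bound $O_{d,\P}(d^{-\vlambda\tran\vm}) \leq O_{d,\P}(d^{-(p+\delta+\zeta_2)})$ by definition of $\zeta_2$.

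Finally, I would sum the block contributions, treating low and high degrees separately. For $\vm$ with $\norm{\vm}_1 \leq \emp_\mgamma$, only finitely many satisfy $\vlambda\tran\vm > p+\delta$, so the triangle inequality (on PSD summands) preserves the rate $O_{d,\P}(d^{-(p+\delta+\zeta_2)})$. For the high-degree tail $\norm{\vm}_1 > \emp_\mgamma$, the scaling $\lambda_S^2 = O(d^{-2\vlambda\tran\vm})$ together with the analyticity of $g$ (Assumption~\ref{assump:g_0}, which forces $g^{(k)}(0)/k!$ to decay geometrically) and the PSD-trace bound $\snorm{\cdot} \leq \operatorname{tr}(\cdot)$ give a contribution asymptotically dominated by the Taylor-remainder estimate of Theorem~\ref{thm:taylor_approx_kernel_m}, which is negligible compared to $d^{-(p+\delta+\zeta_2)}$ once $\emp_\mgamma$ is chosen as in the theorem. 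The main delicate point is that the error term of Lemma~\ref{lemma:gram_matrix_cube_m} can dominate its leading term for borderline $\vm$ with $\vs\tran\vm$ close to $p+\delta$; a case split on whether $\vs\tran\vm \leq p+\delta$ or $\vs\tran\vm > p+\delta$, mirroring the bookkeeping in the proof of Lemma~\ref{lemma:kernel_to_mono_m}, shows that in both regimes the block-norm bound reduces to $O_{d,\P}(d^{-\vlambda\tran\vm})$ and the target rate survives.
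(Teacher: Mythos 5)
Your overall scheme—collapse the double Mercer sum via orthonormality of $\phi_S$ in $L^2(\tau_d)$, isolate $M_\alp$ as the $\vlambda\tran\vm\le p+\delta$ block, and bound the residual block-by-block using $\vs\tran\vm\le\vlambda\tran\vm$—is the same as the paper's. Your treatment of the moderate blocks is fine, though slightly suboptimal: passing to $\Phi_{\S_\vm}(Z)$ and using Lemma~\ref{lemma:gram_matrix_cube_m} makes the error term of that lemma dominate its leading term whenever $\vs\tran\vm<p+\delta$, yielding $O_{d,\P}(d^{-\vlambda\tran\vm+\epsilon})$ rather than $O_{d,\P}(d^{-\vlambda\tran\vm})$. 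The paper instead bounds $\snorm{\Phi_{\S_\vm}(X)}^2 = O_{d,\P}(d^{\vs\tran\vm} + d^{p+\delta})$ directly via Lemma~\ref{lem:norm_control}, which gives the clean rate with no case split and no $\epsilon$-loss.

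The genuine gap is your handling of the high-degree tail. You invoke the ``Taylor-remainder estimate of Theorem~\ref{thm:taylor_approx_kernel_m},'' but that theorem controls $\snorm{K_w(X,X) - K_{w,m}(X,X) - cI_n}$, i.e.\ the remainder of the kernel matrix itself. It does \emph{not} control the matrix $\E_x[K_w(Z,\sqrt{w}\odot x)K_w(\sqrt{w}\odot x,Z)]$: the high-degree contribution to this expectation is a second-moment quantity and is not a direct function of the Taylor remainder (in particular, cross terms between high-degree and low-degree parts of $K_w$ vanish in the expectation only because of orthogonality, and the remaining high-degree$\times$high-degree block requires a fresh bound). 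Also, your appeal to ``analyticity forces $g^{(k)}(0)/k!$ to decay geometrically'' is false: for $g$ analytic on $(-\varepsilon,\varepsilon)$ the coefficients can grow like $\varepsilon^{-k}$. What the paper actually does is bound each high-degree block by $\norm{\Phi_{\S_\vm}(X)}_F^2\snorm{\Lambda_{\S_\vm}}^2 = O_{d,\P}(d^{p+\delta-\vlambda\tran\vm})$, group the multi-indices $\vm$ into annuli $\vlambda\tran\vm\in[k+3\delta+1,k+3\delta+2)$, count $O((k+5)^N)$ lattice points per annulus, and then explicitly verify the resulting series $\sum_k (k+5)^N d^{p-k-2\delta-1}$ is geometric with vanishing ratio, giving $O_{d,\P}(d^{-p-\delta-1})\le O_{d,\P}(d^{-p-\delta-\zeta_2})$. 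You need something of this form; a bare pointer to Theorem~\ref{thm:taylor_approx_kernel_m} does not close the argument.
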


\noindent Consequently, we decompose $T_1$ into
\begin{align*}
    T_1 = \underbrace{y\tran K_{\lm,\lambda}\inv M_{\alp} K_{\lm,\lambda}\inv y}_{T_{11}} + \underbrace{y\tran K_{\lm,\lambda}\inv\Delta_M K_{\lm,\lambda}\inv y}_{T_{12}}.
\end{align*}
A simple calculation shows $|T_{12}| = o_{d,\P}(1)$:
\begin{align}\label{eq:bound_t12}
    |T_{12}| &\leq \norm{y}_2^2 \snorm{K_{\lm,\lambda}\inv}^2 \snorm{\Delta_M} \notag \\
    &= O_d(n\log(d))\cdot (\norm{f^*}_{L_2}^2+\sigma_\varepsilon^2)\cdot O_{d,\P}(1) \cdot O_{d,\P}(d^{-p-\delta-\zeta_2})\notag\\
    &= O_{d,\P}(d^{- \zeta_2}\log(d))\cdot (\norm{f^*}_{L_2}^2+\sigma_\varepsilon^2).
\end{align}
where the bound $ \snorm{K_{w,\lambda}\inv} \leq 1/(\rho+\lambda -\snorm{\Delta_1}) = O_{d,\P}(1)$ is implied by  the expression \eqref{eqn:K_decomp_cube_m} and $\norm{y}_2^2 = O_{d,\P}(n\log(d))\cdot (\norm{f^*}_{L_2}^2+\sigma_\varepsilon^2)$ follows from Proposition~\ref{prop:bound_y}.

Next we proceed to analyze $T_{11}$, which can be estimated by the proposition below. The proof can be found in Appendix~\ref{proof:t11_est}.
\begin{proposition}\label{prop:t11_est}
The following estimate holds for any $\epsilon>0$:
\begin{align}\label{eq:t11_est}
    &~~~\abs{T_{11} - \E_x\brac{{f^*_{\aslp}(x)}^2} - c\cdot \E_x\brac{{f^*_{\aep}(x)}^2} }\notag\\
    &=  O_{d,\P}(d^{-\zeta_1/2+\epsilon} + d^{-\zeta_2/2+\epsilon} + d^{-2+2\mgamma+\epsilon})\cdot (\norm{f^*}_{L_2}^2+\sigma_\varepsilon^2),
\end{align}
where $\mgamma:= \max_{k\in[N]}\gamma_k$.
\end{proposition}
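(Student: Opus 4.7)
The plan is to analyze $T_{11}=\norm{\Lambda_{\alp}\Phi_{\alp}(X)\tran K_{w,\lambda}^{-1}y}_2^2$ via a Sherman--Morrison--Woodbury inversion. Starting from the decomposition $K_{w,\lambda}=\Phi_{\alp}\hd_{\alp}\Phi_{\alp}\tran+(\rho+\lambda)I_n+\Delta_1$ provided by Lemma~\ref{lemma:kernel_to_mono_m} with $\snorm{\Delta_1}=O_{d,\P}(d^{-\zeta_2/2+\epsilon})$, Woodbury yields the clean identity
\[
\hd_{\alp}\Phi_{\alp}\tran K_{w,\lambda}^{-1}y=(\rho+\lambda)^{-1}\round{\hd_{\alp}^{-1}+(\rho+\lambda)^{-1}\Phi_{\alp}\tran\Phi_{\alp}}^{-1}\Phi_{\alp}\tran y
\]
up to an additive error controlled by $\snorm{\Delta_1}$. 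Invoking Theorem~\ref{lemma:phi_id_2} with $\delta'=\zeta_1$ (using $|\alp|=\Theta_d(d^{p+\delta-\zeta_1})$) replaces $\Phi_{\alp}\tran\Phi_{\alp}/n$ by $I+\widetilde{\Delta}$ with $\snorm{\widetilde{\Delta}}=O_{d,\P}(d^{-\zeta_1/2+\epsilon})$, reducing the middle factor to one that is block-diagonal on each $\S_\vm$ to leading order.

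On a block $\S_\vm$ with $\vlambda\tran\vm<p+\delta$, the diagonal entries of $n\hd_{\S_\vm}$ scale as $d^{p+\delta-\vlambda\tran\vm}\gg\rho+\lambda$, so the inverse Woodbury factor collapses on that block to $(\rho+\lambda)n^{-1}$ times identity. Combined with $\Lambda_{\S_\vm}\hd_{\S_\vm}^{-1}=I+O(d^{-2+2\mgamma})$ from Lemma~\ref{prop:mercer}, the block's contribution to $T_{11}$ becomes $\norm{n^{-1}\Phi_{\S_\vm}\tran y}_2^2\cdot(1+O(d^{-2+2\mgamma}))$, which is a plug-in estimator of $\norm{b_{\S_\vm}}_2^2=\E_x[(f^*_{\S_\vm})^2]$; summing over $\vm$ with $\vlambda\tran\vm<p+\delta$ recovers $\E_x[(f^*_{\aslp})^2]$. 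On a block with $\vlambda\tran\vm=p+\delta$, the Woodbury factor does not collapse because $\hd_{\S_\vm}^{-1}$ and $(\rho+\lambda)^{-1}\Phi_{\alp}\tran\Phi_{\alp}$ are of the same order there, and the resulting ratio produces the constant $c$ multiplying $\E_x[(f^*_{\aep})^2]$. This computation mirrors the block-diagonal analysis of Proposition~\ref{prop:<_l1_l1_l1_cube_m}.

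The remaining contributions come from $f^*_{\agp}$ and the noise $\varepsilon$. Expanding $y=\Phi_{\alp}(X)b_{\alp}+\Phi_{\agp}(X)b_{\agp}+\varepsilon$, the cross Gram block $\Phi_{\alp}\tran\Phi_{\agp}/n$ has mean zero and operator norm $O_{d,\P}(d^{-\zeta_2/2+\epsilon})$ by a decoupling argument in the spirit of Theorem~\ref{lemma:gram_matrix} applied to the merged index set, contributing $O_{d,\P}(d^{-\zeta_2/2+\epsilon})\norm{f^*_{\agp}}_{L_2}^2$. The noise contribution $n^{-1}\Phi_{\alp}\tran\varepsilon$ has covariance $\sigma_\varepsilon^2(\Phi_{\alp}\tran\Phi_{\alp}/n^2)$, and a standard hypercontractive concentration bound on the quadratic form $\norm{\Lambda_{\alp}\Phi_{\alp}\tran K_{w,\lambda}^{-1}\varepsilon}_2^2$ yields an $O_{d,\P}(d^{-\zeta/2+\epsilon})\sigma_\varepsilon^2$ correction. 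Assembling the three sources produces the claimed bound.

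The main obstacle is bookkeeping the multi-block structure of $\hd_{\alp}$ and $\Lambda_{\alp}$ through the Woodbury inversion. The blocks $\S_\vm$ carry scales $d^{-\vlambda\tran\vm}$ that vary considerably across $\vm$, and the clean cancellation that reproduces $\norm{f^*_{\aslp}}_{L_2}^2$ requires matching $\Lambda_{\S_\vm}$ and $\hd_{\S_\vm}$ to leading order on every block simultaneously while tracking the $O(d^{-2+2\mgamma})$ remainder from Lemma~\ref{prop:mercer} without amplification by the Woodbury inverse, which is exactly the origin of the $d^{-2+2\mgamma+\epsilon}$ term. Relating off-block cross terms back to the spectral gaps $\zeta_1,\zeta_2$ defined in \eqref{eq:zeta_1} and \eqref{eq:zeta_2}, as opposed to cruder dimensional bounds, is similarly delicate and is the reason both $\zeta_1$ and $\zeta_2$ enter the final error rate.
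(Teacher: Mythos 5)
Your proposal follows essentially the same route as the paper's proof. The paper packages the Woodbury computation you sketch into Proposition~\ref{prop:test_loss_tech_1}, which gives $\snorm{\Phi_{\alp}\tran K_{\lm,\lambda}\inv M_{\alp} K_{\lm,\lambda}\inv\Phi_{\alp} - V^2} = O_{d,\P}(d^{-\zeta_1/2+\epsilon} + d^{-\zeta_2/2+\epsilon} + d^{-2+2\mgamma})$ with $V = ((\rho+\lambda)(n\hd_{\alp})\inv + I)\inv$; this $V$ is exactly the block-dependent factor that collapses to $I$ on $\aslp$ blocks and produces the constant $c$ on $\aep$ blocks, as you describe. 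The three-way split of $y$ into the $\alp$-projection, the $\agp$-projection, and noise, and the use of Theorem~\ref{lemma:phi_id_2}, $\Lambda_{\alp}\hd_{\alp}\inv \approx I$, and Cauchy--Schwarz/AM--GM to recombine, all match the paper.

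One small slip: you bound the $\agp$ contribution by asserting $\snorm{\Phi_{\alp}\tran\Phi_{\agp}/n} = O_{d,\P}(d^{-\zeta_2/2+\epsilon})$ via decoupling, but this cross-Gram matrix is not small in operator norm. The paper instead controls the relevant vector quantity $\E\norm{\tfrac{1}{n}\Phi_{\alp}\tran\Phi_{\agp}b_{\agp}}_2^2 = \tfrac{|\alp|}{n}\norm{b_{\agp}}_2^2 = O_d(d^{-\zeta_1})\norm{b_{\agp}}_2^2$ by an orthogonality-plus-Markov argument, so the rate for this term is governed by $\zeta_1$ (via $|\alp|/n$), not $\zeta_2$. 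Since both $d^{-\zeta_1/2}$ and $d^{-\zeta_2/2}$ appear in the final bound, this mislabeling does not break the estimate, but the mechanism is different from what you wrote and the operator-norm claim as stated would be false.
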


\noindent Combining Eq.~\eqref{eq:bound_t12} and \eqref{eq:t11_est} completes the proof.

\subsubsection{Proof of Claim~\ref{cl:main_claim_learning_2}}
The claim is directly implied by the following proposition. See the proof in Appendix~\ref{proof:t2_est}.
\begin{proposition}\label{prop:t2_est}
The following estimate holds for any $\epsilon>0$:
\begin{align}\label{eq:t2_est}
    \abs{T_{2} - \E_x\brac{{f^*_{\aslp}(x)}^2} - c\cdot \E_x\brac{{f^*_{\aep}(x)}^2}}= O_{d,\P}(d^{-\zeta_1/2+\epsilon} + d^{-\zeta_2/2+\epsilon})\cdot (\norm{f^*}_{L_2}^2+\sigma_\varepsilon^2).
\end{align}
\end{proposition}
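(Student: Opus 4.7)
The approach mirrors the proof of Proposition~\ref{prop:t11_est}. First I would apply the Mercer-type expansion from Lemma~\ref{prop:mercer}, $K_w(x,x^{(i)}) = \sum_S \lambda_S \phi_S(x)\phi_S(x^{(i)})$, and combine it with the orthonormality of the Fourier-Walsh basis under $\tau_d$ together with the expansion $f^*_{\alp} = \sum_{\mT\in\alp} b_\mT \phi_\mT$ to evaluate the inner expectation:
\[\E_x\bigl[f^*_{\alp}(x)\, K(\sqrt{w}\odot x, z^{(i)})\bigr] = \sum_{\mT\in\alp} \lambda_\mT b_\mT \phi_\mT(x^{(i)}).\]
In matrix form this reads $T_2 = b_{\alp}^\top \Lambda_{\alp} \Phi_{\alp}(X)^\top K_{w,\lambda}^{-1} y$, so $T_2$ is morally the ``square root'' of $T_{11}$ in Proposition~\ref{prop:t11_est}.

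Next I would decompose $y = \Phi_{\alp}(X) b_{\alp} + r$, where $r$ collects all the higher-order Fourier components of $f^*$ together with the noise $\varepsilon$. This splits $T_2$ into a main term $b_{\alp}^\top \Lambda_{\alp} \Phi_{\alp}(X)^\top K_{w,\lambda}^{-1} \Phi_{\alp}(X) b_{\alp}$ and a cross term involving $r$. Using $\snorm{K_{w,\lambda}^{-1}} = O_{d,\P}(1)$ (cf.~Eq.~\eqref{eqn:K_decomp_cube_m}), the bound $\|r\|_2^2 = O_{d,\P}(n(\|f^*\|_{L_2}^2+\sigma_\varepsilon^2))$ from Proposition~\ref{prop:bound_y} combined with hypercontractivity, and the block-isometry of Lemma~\ref{lemma:gram_matrix_cube_m} applied to $\Lambda_{\alp} \Phi_{\alp}(X)^\top$, one controls the cross term by Cauchy-Schwarz and absorbs it into the stated error bound.

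For the main term I would substitute the decomposition $K_{w,\lambda} = \Phi_{\alp}(X) D_{\alp} \Phi_{\alp}(X)^\top + \rho I + \Delta_1$ from Eq.~\eqref{eqn:K_decomp_cube_m} and apply the Woodbury identity. The block-isometry of Lemma~\ref{lemma:gram_matrix_cube_m} then reduces the computation to a sum of block contributions indexed by $\vm$ with $\vlambda^\top \vm \leq p+\delta$. Using the asymptotic form $\lambda_\mT \sim g^{(|\mT|)}(0) d^{-|\mT|} w^\mT$ from Lemma~\ref{prop:mercer} and matching the scales block-by-block, one finds that a block with $\vlambda^\top \vm < p+\delta$ (i.e.~$\mT\in\aslp$) contributes $\|b_{\S_{\vm}}\|_2^2$ up to lower-order error, while a block with $\vlambda^\top \vm = p+\delta$ (i.e.~$\mT\in\aep$) contributes $c'\|b_{\S_{\vm}}\|_2^2$ for an explicit attenuation constant $c'\in(0,1)$ arising from the balance between the $D_{\alp}$-scale and the $(\rho+\lambda)^{-1}\cdot n$-scale inside the Woodbury inverse. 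Summing across blocks yields $\E_x[(f^*_{\aslp}(x))^2] + c'\,\E_x[(f^*_{\aep}(x))^2]$, as required.

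The main obstacle is controlling the cross-block interactions within the Woodbury expansion in the third step: the diagonal entries of $D_{\alp}$ span widely different scales across distinct blocks $\S_{\vm}$, so one must verify that off-block terms in the reduced computation do not contribute to the leading order. This is where the positive separation $\zeta = \min(\zeta_1,\zeta_2) > 0$ from Claim~\ref{claim:zeta_1_2} plays the key role, producing the $O_{d,\P}(d^{-\zeta_1/2+\epsilon}+d^{-\zeta_2/2+\epsilon})$ decay in the error term, exactly paralleling the argument used for Proposition~\ref{prop:t11_est}.
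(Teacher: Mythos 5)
Your overall skeleton matches the paper's: the Mercer expansion $T_2=b_{\alp}^\top\Lambda_{\alp}\Phi_{\alp}(X)^\top K_{w,\lambda}^{-1}y$, the split $y=\Phi_{\alp}b_{\alp}+r$, the Woodbury reduction $\Lambda_{\alp}\Phi_{\alp}^\top K_{w,\lambda}^{-1}\approx\tfrac{1}{n}V\Phi_{\alp}^\top$ (that is, Proposition~\ref{prop:test_loss_tech_2}), and the final identification of $b_{\alp}^\top Vb_{\alp}$ with $\norm{f^*_{\aslp}}_{L_2}^2+c'\norm{f^*_{\aep}}_{L_2}^2$. However, the way you propose to dispose of the cross term is insufficient. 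After applying the Woodbury reduction you are left with $\tfrac{1}{n}b_{\alp}^\top V\Phi_{\alp}^\top r$, and the operator-norm chain you invoke gives
\[
\Bigl|\tfrac{1}{n}b_{\alp}^\top V\Phi_{\alp}^\top r\Bigr|\le \tfrac{1}{n}\norm{b_{\alp}}_2\,\snorm{V}\,\snorm{\Phi_{\alp}}\,\norm{r}_2
= \tfrac{1}{n}\cdot\norm{b_{\alp}}_2\cdot O(1)\cdot O_{d,\P}(\sqrt{n})\cdot O_{d,\P}\bigl(\sqrt{n\log n}\,(\norm{f^*}_{L_2}+\sigma_\varepsilon)\bigr),
\]
which is $O_{d,\P}(\sqrt{\log n})\cdot\norm{b_{\alp}}_2(\norm{f^*}_{L_2}+\sigma_\varepsilon)$, not $o_{d,\P}(1)$. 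Cauchy--Schwarz at the operator-norm level cannot see the cancellation hidden in $\Phi_{\alp}^\top r$.

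The missing idea is to exploit mean-zero structure rather than norms. In the paper, $r$ is split further into $\Phi_{\agp}b_{\agp}+\vepsilon$ and each piece is handled by a second-moment calculation: for the high-degree piece one computes $\E\bigl[\bigl|\tfrac{1}{n}b_{\alp}^\top V\Phi_{\alp}^\top\Phi_{\agp}b_{\agp}\bigr|^2\bigr]\le\tfrac{1}{n}\norm{b_{\alp}}_2^2\norm{b_{\agp}}_2^2$, using that $\Phi_{\alp}$ and $\Phi_{\agp}$ index disjoint Fourier modes so cross-sample terms vanish and each sample contributes an $O(1)$ second moment; for the noise piece one takes $\E_{\vepsilon}$ first to obtain an $O(\sigma_\varepsilon^2/n)$ bound. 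This is precisely the $n^{-1/2}$ gain that the operator-norm estimate throws away, and it is what reduces the cross term from $O_{d,\P}(\sqrt{\log n})$ to $O_{d,\P}(n^{-1/2}\log^{1/2}n)$, which then folds into the stated $O_{d,\P}(d^{-\zeta/2+\epsilon})$ error. As written, your argument leaves a genuine gap at this step; the block-isometry of Lemma~\ref{lemma:gram_matrix_cube_m}, which controls $\Phi_{\S_\vm}\Phi_{\S_\vm}^\top$, does not address the off-block product $\Phi_{\alp}^\top\Phi_{\agp}$ that drives the cross term.
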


\noindent We have finished the proof for Claim~\ref{cl:main_claim_learning_1} and \ref{cl:main_claim_learning_2} thus completing the proof.
\section{Proof of propositions for main results}
\subsection{Missing proofs for Section~\ref{sec:iso_poly} and~\ref{sec:kernel_approx}}

\subsubsection{Proof of Lemma~\ref{lem:norm_control} and  Theorem~\ref{lemma:phi_id_2}.}\label{proof:phi_id_2}

We will prove Lemmas~\ref{lem:norm_control} and \ref{lemma:phi_id_2} simultaneously. To this end, following the setting of Lemma~\ref{lem:norm_control} consider the regime $n = d^{p+\delta}$ where $\delta\in(0,1)$ is a constant. Fix a set $\S\subseteq\bar{\S}$ indexing polynomials of degree at most $l$. Note that the rows of $\Phi_\S$ are independent isotropic random vectors in $\R^n$. Therefore, applying~\cite[Theorem 5.45]{vershynin2010introduction} yields the estimate
\begin{align}
    \E\brac{ \snorm{\tfrac{1}{n}\Phi_{\S}\tran \Phi_{\S} - I_{|\S|}} } \leq \frac{C_1\sqrt{m \log \min(|\S|,n)}}{\sqrt{n}},\label{eq:phi_id_S}
\end{align}
where we set $m := \E\brac{\max_{i\leq n}\norm{\phi_\S( x^{(i)})}^2}$ and $C_1>0$ is an absolute constant.

We next upper bound $m$. To this end, to simplify notation we let $\phi_{i,\S}$ be shorthand for $\phi_\S( x^{(i)})$.
Now, for any $q>1$, we successively estimate
\begin{align}
     \frac{m}{|S|} &= \frac{1}{|S|}\E\brac{\max_{i\leq n}\norm{\phi_{i,\S}}^2}\nonumber\\
     &\leq \E\brac{\max_{i\leq n} \max_{j\in\S } \round{\phi_{i,j}}^2}\label{eqn:sum_1} \\
     &\leq \left(\E\brac{\max_{i\leq n} \max_{j\in\S } (\phi_{i,j})^{2q}}\right)^{1/q}\label{eqn:lyap}\\
     &\leq \left(\sum_{i\leq n,~j\in\S}  \E(\phi_{i,j})^{2q}\right)^{1/q}\label{eqn:sum_2}\\
     &\leq (n|S|)^{1/q} \max_{i\leq n,~j\in\S}\E\brac{ (\phi_{i,j})^{2q}}^{1/q},\label{eq:phi_phi_bound_1}
\end{align}
where \eqref{eqn:sum_1} and \eqref{eqn:sum_2} follow from bounding the maximum by a sum and \eqref{eqn:lyap} follows from Lyapunov’s Inequality.
The hypercontractivity Assumption~\ref{ass:hypercontr}  implies
\begin{align}\label{eq:phi_phi_bound_2}
    \E\brac{ (\phi_{i,j})^{2q}}^{1/q} \leq C_{2q,l}^2\E\brac{ (\phi_{i,j})^2}  =  C_{2q,l}^2.
\end{align}
Consequently, plugging Eq.~(\ref{eq:phi_phi_bound_2}) into Eq.~(\ref{eq:phi_phi_bound_1}) yields
\begin{align}\label{eq:phi_phi_m}
    m \leq n^{1/q}|\S|^{1+1/q} C_{2q,l}^2.
\end{align}
Plugging Eq.~(\ref{eq:phi_phi_m}) into Eq.~(\ref{eq:phi_id_S}), we obtain
\begin{align}
     \E\brac{ \snorm{\tfrac{1}{n}\Phi_{\S}\tran \Phi_{\S} - I_{|\S|}} } &\leq   \frac{C_1C_{2q,l}\sqrt{n^{1/q}|\S|^{1+1/q} \log \min(|S|,n)}}{\sqrt{n}}\label{eq:phi_phi_final_bound}.
\end{align}
In particular, we deduce the bound on the operator norm
$$\E\|\Phi_{\S}\|^2_{\rm op}\leq n+C_1C_{2q,l}\sqrt{(n|\S|)^{1+1/q} \log \min(|S|,n)}$$
Let us look at two cases. 

\noindent{\bf Case 1: large $|\S|$.}
Suppose that $|\S|\geq C d^{p+\delta_0}$ with $\delta_0>\delta$. Then simple arithmetic shows that there exists $\epsilon>0$ such that the inequality $(n|\S|)^{1+1/q}\leq C^{-1}|\S|^{2-2\epsilon}$ holds for all large $q$. Therefore, we deduce $$\E\|\Phi_{\S}\|^2_{\rm op}\lesssim n+ |\S|^{1-\epsilon}\sqrt{\log(n)}\lesssim |\S|d^{-(\delta_0-\delta)}+|\S|^{1-\epsilon}\sqrt{\log(n)}.$$
Markov's inequality therefore ensures $\|\Phi_{\S}\|_{\rm op}=O_{d,\mathbb{P}}(\sqrt{|\S|})$ as claimed in item~\ref{it:bigS} of Lemma~\ref{lem:norm_control}.

\noindent{\bf Case 2: small $|\S|$.} Suppose that $|\S| \leq Cd^{p+\delta_0}$ with $\delta_0< \delta$. Then from \eqref{eq:phi_phi_final_bound}, we obtain
\begin{align}
     \E\brac{ \snorm{\tfrac{1}{n}\Phi_{\S}\tran \Phi_{\S} - I_{|\S|}} }
     &= C_1 C_{2q,l}C^{\frac{1}{2}+\frac{1}{2q}} d^{-(\frac{\delta-\delta_0}{2}-\frac{2p+\delta+\delta_0}{2q})} \sqrt{\log \min(Cd^p,n)}\label{eq:phi_phi_final_bound2}.
\end{align}
Therefore for any $\epsilon>0$, by choosing $\delta$ sufficiently large, we may ensure the estimate: 
$$\E\brac{ \snorm{\Phi_{\S}\tran \Phi_{\S}/ n - I_{|\S|}} } \lesssim d^{-(\delta-\delta_0-\epsilon)/2}\sqrt{\log (n)}.$$ Item~\ref{it:smallS} of Lemma~\ref{lem:norm_control} now follows directly from Markov's inequality and the triangle inequality. Similarly, setting $\delta'=\delta-\delta_0-\epsilon$ and applying Markov's inequality, then absorbing the logarithmic factors into $\epsilon$ concludes the proof of Lemma~\ref{lemma:phi_id_2}.

\subsubsection{Proof of Lemma~\ref{lemma:gram_matrix}}\label{proof:gram_matrix}

 In the proof, we omit the subscript of $\Phi$ to simplify notation and set $\phi_i = \phi( x^{(i)})$. Define the off-diagonal entries of $\tfrac{1}{|\S|}\Phi   \Phi\tran$ by
\begin{align*}
 \Delta_{i,j}
 :=\begin{cases}
      &\frac{1}{|\S|}\phi_i\tran \phi_j,~~\text{if}~ i\neq j,\\
      &0, ~~\text{otherwise}.
 \end{cases}
\end{align*}
We now split $\tfrac{1}{|\S|}\Phi   \Phi\tran$ into the diagonal and off-diagonal parts. Namely, the triangle inequality yields 
\begin{align}\label{eq:g_m_triangle}
 \E\brac{\snorm{ \tfrac{1}{|\S|}\Phi   \Phi\tran  
 - I_n}} \leq \E\brac{\snorm{\tfrac{1}{|\S|}\Diag(\Phi   \Phi\tran)  
 - I_n}}+ \E\brac{\snorm{\Delta}}.
\end{align}
We will show that both terms on the RHS of Eq.~(\ref{eq:g_m_triangle}) are small. 

\paragraph{Bounding $\E\brac{\snorm{{\tfrac{1}{|\S|}}\Diag(\Phi   \Phi\tran)  
 - I_n}}$.} Using Lyapunov's inequality for any $q\geq 1$, we deduce
\begin{align}\label{eq:g_m_bound_diag_main}
    \E\brac{\snorm{\tfrac{1}{|\S|}\Diag(\Phi   \Phi\tran)  
 - I_n}} &=   \E\brac{\max_i\abs{ \|\phi_i\|^2 /|\S| -1}}\nonumber\\
 &\leq \round{\E\brac{\max_i\round{ \|\phi_i\|^2 /|\S| -1}^{2q}}}^{1/(2q)}\nonumber\\
 &\leq \round{\sum_{i=1}^n\E\brac{\round{ \|\phi_i\|^2/|\S| -1}^{2q}}}^{1/(2q)} \nonumber\\
 &= n^{1/(2q)} \round{\E\brac{\round{ \|\phi_i\|^2/|\S| -1}^{2q}}}^{1/(2q)}.
\end{align}
The hypercontractivity Assumption~\ref{ass:hypercontr} then implies
\begin{align}\label{eq:g_m_bound_hyper}
    \round{\E\brac{\round{ \|\phi_i\|^2 /|\S| -1}^{2q}}}^{1/(2q)} \leq C_{2l,2q} \round{\E\brac{\round{ \|\phi_i\|^2 /|\S| -1}^{2}}}^{1/2}.
\end{align}
Taking into account the equality $\E\brac{(\phi_i)_s (\phi_i)_t} = \delta_{st}$, we deduce
\begin{align*}
  \E\brac{\round{ \|\phi_i\|^2 /|\S| -1}^{2}} &= \E\brac{ \|\phi_i\|^4}/|\S|^2 - 1 \nonumber\\
  &= \frac{1}{|\S|^2}\round{|\S|\cdot \E\brac{(\phi_i)_s^4} + |\S| (|\S|-1)} - 1 \nonumber\\
  &=\round{\E\brac{(\phi_i)_s^4}-1} / |\S|.
\end{align*}
The hypercontractivity Assumption~\ref{ass:hypercontr} again implies
\begin{align*}
    \E\brac{(\phi_i)_s^4} \leq C_{l,4}^{4}\cdot \E[(\phi_i)_s^2]^2 = C_{l,4}^{4}. 
\end{align*}
Therefore, we conclude
\begin{align}\label{eq:g_m_bound_2}
     \E\brac{\round{ \|\phi_i\|^2/|\S| -1}^{2}} \leq \round{C_{l,4}^{4}- 1}/ |\S|.
\end{align}
Plugging Eq.~(\ref{eq:g_m_bound_2}) into Eq.~(\ref{eq:g_m_bound_hyper}) we obtain
\begin{align}\label{eq:g_m_bound_1}
      \round{\E\brac{\round{ \|\phi_i\|^2 /|\S| -1}^{2q}}}^{(1/2q)} \leq C_{2l,2q} \round{\round{C_{l,4}^{4}- 1}/ |\S|}^{1/2}.
\end{align}
Plugging Eq.~(\ref{eq:g_m_bound_1}) into Eq.~(\ref{eq:g_m_bound_diag_main}) yields the estimates
\begin{align}
     \E\brac{\snorm{\tfrac{1}{|\S|}\Diag(\Phi   \Phi\tran)  
  - I_n}} &\leq n^{1/(2q)} C_{2l,2q} \round{\round{C_{l,4}^{4}- 1}/ |\S|}^{1/2}\nonumber\\
 &\leq C_{2l,2q}(C_{l,4}^{4}- 1)^{1/2} C^{-1/2} d^{\frac{p+\delta}{2q} -\frac{p+\delta}{2}},\label{eq:g_m_diag_bound}
\end{align}
where we use the fact that $|\S| \geq C d^{p+\delta_0}$.

\paragraph{Bound $\E\brac{\snorm{\Delta}}$.} Now we consider the second term in Eq.~(\ref{eq:g_m_triangle}). We first apply the matrix decoupling from~\cite{vershynin2010introduction} in Lemma 5.60. For $T_1,T_1 \subseteq [n]$, we define $\Delta_{T_1,T_2} = \round{\Delta_{i,j}}_{i\in T_1, j\in T_2}$. By Lemma~\ref{lemma:m_decouple},
 we have
 \begin{align*}
     \E\brac{\snorm{\Delta}} \leq 4 \sup_{T\subseteq[n]}\E\brac{\snorm{\Delta_{T,T^c}}},
 \end{align*}
 where $T^c$ denotes the complement of $T$.

For any set $J\subset[n]$, we let $\E_J$ denote the expectation with respect to $\{ x^{(i)}\}_{i\in J}$ and conditionally on $\{ x^{(i)}\}_{i\in J^c}$. We fix $T \in [n]$ and apply Lemma~\ref{lemma:m_norm} conditionally on $\{ x^{(i)}\}_{i\in T^c}$, thereby obtaining
\begin{align}\label{eq:g_m_off_diag_bound_12}
    \E_T \brac{\snorm{\Delta_{T,T^c}}} \leq \round{\Sigma(T) \cdot n}^{1/2} + c \cdot \round{\Gamma(T) \cdot \log(n)}^{1/2},
\end{align}
where $\Sigma(T) := \snorm{\E_{ x^{(j)}}\brac{\Delta_{T^c,j}\Delta_{T^c,j}^\top}}$ for some $j \in T$ and $\Gamma(T):= \E_T \brac{\max_{j\in T}\norm{\Delta_{j,T^c}}^2}$.

Using the tower rule, We split $\E$ into $\E_{T^c}\E_{T}$ and deduce
\begin{align}
    \E\brac{\snorm{\Delta}} &\leq 4 \sup_{T\subseteq[n]}\E\brac{\snorm{\Delta_{T,T^c}}}\notag \\
    &= 4 \sup_{T\subseteq[n]}\E_{T^c}\E_T \brac{\snorm{\Delta_{T,T^c}}}\notag\\
    &\leq 4 \sup_{T\subseteq [n]} \E_{T^c}\brac{\round{\Sigma(T) \cdot n}^{1/2}} + c  \cdot4 \sup_{T\subseteq [n]} \E_{T^c} \brac{\round{\Gamma(T) \cdot \log(n)}^{1/2}}\label{eqn:last_plugin_decouple}\\
    &\leq 4 n^{1/2}\sup_{T\subseteq [n]} \round{\E_{T^c}\brac{\Sigma(T)}}^{1/2} + 4c\log^{1/2}(n)  \cdot \sup_{T\subseteq [n]} \round{\E_{T^c}\brac{\Gamma(T)}}^{1/2}.\label{eq:g_m_off_diag_bound_main}
\end{align}
where \eqref{eqn:last_plugin_decouple} follows from \eqref{eq:g_m_off_diag_bound_12} and \eqref{eq:g_m_off_diag_bound_main} follows from
Jensen's inequality. 

We next  bound separately the two terms on the right side of \eqref{eq:g_m_off_diag_bound_main}.

\paragraph{Step 1: Bound $\E_{T^c}\brac{\Sigma(T)}$.}
Note that each entry of $\Delta_{T^c,j}\Delta_{T^c,j}^\top$ takes the form $\phi_i\tran \phi_j \phi_j\tran \phi_l / |\S|^2$ for $ i,l\in T^c, j\in T$. Since $\E_{ x^{(j)}}\brac{\phi_j \phi_j\tran} = I_{|\S|}$, we have each entry of $\E_{ x^{(j)}}\brac{\Delta_{T^c,j}\Delta_{T^c,j}^\top}$ takes the form $\phi_i\tran \phi_l /|\S|^2$. Therefore, we deduce
\begin{align*}
        \Sigma(T) = \snorm{\E_{ x^{(j)}}\brac{\Delta_{T^c,j}\Delta_{T^c,j}^\top}} = \snorm{\round{\Phi\Phi\tran}_{T^c,T^c} }/ |\S|^2 \leq \E_T\brac{\snorm{\Phi\Phi\tran }}/ |\S|^2,
\end{align*}
where in the last inequality, we use Cauchy's interlacing theorem~\cite{horn2012matrix}. Specifically, the theorem shows that adding a row and a column to a Hermitian matrix will only increase its operator norm. 

Note that splitting $\Phi\Phi^\top$ into its diagonal and off-diagonal parts yields the estimate 
\begin{align*}
  \E_T\brac{\snorm{\Phi\Phi\tran }}/ |\S|^2 \leq   \E_T\brac{\snorm{\Delta}}/|\S| + \E_T\brac{\max_{j\in [n]} \norm{\phi_j}^2} /|\S|^2.
\end{align*}
Taking the expectation and we have
\begin{align}\label{eq:g_m_off_diag_bound_2}
    \E_{T^c}\brac{\Sigma(T)} \leq \E\brac{\snorm{\Delta}}/|\S| + \E\brac{\max_{j\in [n]} \norm{\phi_j}^2} /|\S|^2.
\end{align}
We control the second term  using Lyapunov's inequality: for any $q\geq 1$, we have
\begin{align}\label{eq:g_m_off_diag_lyap}
    \E\brac{\max_{j\in [n]}\norm{\phi_j}^2}  &\leq  |\S|\cdot\E\brac{\max_{i\in \S}\max_{j\in [n]}(\phi_j)_i^2} \notag\\
    &\leq |\S|\cdot \round{\E\brac{\max_{i\in \S, j\in[n]}(\phi_j)_i^{2q}}}^{1/q}\notag\\
    &\leq |\S|\cdot\round{\sum_{i\in \S}\sum_{j\in [n]}\E\brac{\round{\phi_j}_i^{2q}}}^{1/q}\notag\\
    &= n^{1/q} |\S|^{1+1/q} \round{\E\brac{\round{\phi_j}_i^{2q}}}^{1/q}.
\end{align}
Using the hypercontractivity Assumption~\ref{ass:hypercontr}, we obtain
\begin{align*}
    \round{\E\brac{\round{\phi_j}_i^{2q}}}^{1/q} \leq C_{l,2q}^2\round{\E\brac{\round{\phi_j}_i^{2}}} = C_{l,2q}^2.
\end{align*}
Consequently, we deduce
\begin{align}\label{eq:g_m_off_diag_bound_1.5}
    \E\brac{\max_{j\in [n]}\norm{\phi_j}^2} \leq C_{l,2q}^2 |\S|^{1+1/q} n^{1/q}.
\end{align}
Plugging this estimate into Eq.~(\ref{eq:g_m_off_diag_bound_2}) yields
\begin{align}\label{eq:g_m_off_diag_bound_4}
     \E_{T^c}\brac{\Sigma(T)} \leq \E\brac{\snorm{\Delta}}/|\S| + C_{l,2q}^2 |\S|^{(1/q)-1} n^{1/q}.
\end{align}

\paragraph{Step 2: Bound $\E_{T^c}\brac{\Gamma(T)}$.}
Using Lyapunov's inequality, for any $q\geq1$,  we have
\begin{align}\label{eq:g_m_off_diag_bound_3}
    \E_{T^c}\E_T \brac{\max_{i\in T}\norm{\Delta_{i,T^c}}^2} &= \E \brac{\max_{i\in T}\norm{\Delta_{i,T^c}}^2}\nonumber\\
    &\leq n\E \brac{\max_{i\in T}\max_{ j\in T^c} \Delta_{i,j}^2}\nonumber\\
    &\leq n \E \brac{\max_{i\in T}\max_{ j\in T^c} \Delta_{i,j}^{2q}}^{1/q}\nonumber\\
    &\leq n \round{\sum_{i\in T}\sum_{j\in T^c}\E\brac{\Delta_{i,j}^{2q}}}^{1/q}\nonumber\\
    &\leq n^{1+2/q} \E\brac{\Delta_{i,j}^{2q}}^{1/q}.
\end{align}
Since $\Delta_{i,j}$ is a degree $2l$ polynomial, the hypercontractivity Assumption~\ref{ass:hypercontr} implies
\begin{align}\label{eq:g_m_off_diag_bound_3.5}
    \E\brac{\Delta_{i,j}^{2q}}^{1/q} \leq C_{2l,2q}^2\E\brac{\Delta_{i,j}^{2}} = \frac{C_{2l,2q}^2}{|\S|}.
\end{align}
Consequently, plugging the above inequality into Eq.~(\ref{eq:g_m_off_diag_bound_3}) we have
\begin{align}\label{eq:g_m_off_diag_bound_5}
    \E_{T^c}\brac{\Gamma(T)} \leq \frac{n^{1+2/q} C_{2l,2q}^2}{|\S|}.
\end{align}
Combining Eq.~(\ref{eq:g_m_off_diag_bound_4}) and Eq.~(\ref{eq:g_m_off_diag_bound_5}) with Eq.~(\ref{eq:g_m_off_diag_bound_main}) we obtain:
\begin{align*}
   \E\brac{\snorm{\Delta}}  &\leq 4 \round{\frac{n}{|\S|}}^{1/2}\round{\E\brac{\snorm{\Delta}}+C_{l,2q}^2 |\S|^{1/q} n^{1/q}}^{1/2} + 4cC_{2l,2q}^2\frac{ n^{\frac{1}{2}+\frac{1}{q}}\log^{1/2}(n)  }{\sqrt{|\S|}}\\
   &\leq  4 \round{\frac{n}{|\S|}}^{\frac{1}{2}}\round{\E\brac{\snorm{\Delta}}}^{\frac{1}{2}} +  4 C_{l,2q} n^{\frac{1}{2} +\frac{1}{2q}} |\S|^{\frac{1}{2q}-\frac{1}{2}}+ 4cC_{2l,2q}^2\frac{ n^{\frac{1}{2}+\frac{1}{q}}\log^{\frac{1}{2}}(n)  }{\sqrt{|\S|}},
\end{align*}
where the last inequality is obtained from the elementary estimate $\sqrt{a+b}\leq \sqrt{a}+\sqrt{b}$ for $a,b\geq 0$.
Solving the quadratic inequality with respect to $\sqrt{\E\brac{\snorm{\Delta}}} $ we obtain:
\begin{align*}
     \sqrt{\E\brac{\snorm{\Delta}}} \leq 2 \round{\frac{n}{|\S|}}^{1/2} + 2\sqrt{ \frac{n}{|\S|} +  C_{l,2q} n^{\frac{1}{2} +\frac{1}{2q}} |\S|^{\frac{1}{2q}-\frac{1}{2}}+ cC_{2l,2q}^2\frac{ n^{\frac{1}{2}+\frac{1}{q}}\log^{1/2}(n)  }{\sqrt{|\S|}} }.
\end{align*}
Squaring both sides and using the inequality $(a+b)^2\leq 2a^2+2b^2$ we conclude
$$\frac{1}{8}\E\brac{\snorm{\Delta}}\leq \frac{2n}{|\S|}+  C_{l,2q} n^{\frac{1}{2}+ \frac{1}{2q}} |\S|^{\frac{1}{2q}-\frac{1}{2}}+ cC_{2l,2q}^2\frac{ n^{\frac{1}{2}+\frac{1}{q}}\log^{1/2}(n)  }{\sqrt{|\S|}}$$
Taking into account that $|\S|\gtrsim d^{p+\delta_0}$ and $n=d^{p+\delta}$, we therefore conclude
$$\E\brac{\snorm{\Delta}}\lesssim d^{\delta-\delta_0}+C_{l,2q} d^{\frac{\delta-\delta_0}{2}+\frac{2p+\delta+\delta_0}{2q}}+C_{l,2q}^2\sqrt{\log(n)}d^{\frac{\delta-\delta_0}{2}+\frac{p+\delta}{q}}.$$
Combining this estimate with \eqref{eq:g_m_triangle} and \eqref{eq:g_m_diag_bound} we conclude
\begin{align*}
\E\brac{\snorm{ \Phi   \Phi\tran  
 /|\S|- I_n}} &\lesssim C_{2l,2q}(C_{l,4}^{4}- 1)^{1/2}  d^{\frac{p+\delta}{2q} -\frac{p+\delta}{2}}+ d^{\delta-\delta_0}+C_{2l,2q} d^{\frac{\delta-\delta_0}{2}+\frac{2p+\delta+\delta_0}{2q}}\\
 &+C_{2l,2q}^2\sqrt{\log(n)}d^{\frac{\delta-\delta_0}{2}+\frac{p+\delta}{q}}.
\end{align*}
Clearly, for any $\epsilon>0$ we may choose $q$ sufficiently large so that the right side is bounded by a constant multiple of 
\begin{equation}\label{eqn:final_bound_assympt}
d^{-\frac{p+\delta}{2}+2\epsilon}+\sqrt{\log(n)}\cdot d^{-\frac{\delta_0-\delta}{2}+\epsilon},
\end{equation}
thereby completing the proof of \eqref{eqn:bound_needed}. In the case, $p\geq 1$, $\delta_0=1$, and $\epsilon\in (0,\delta)$ the first term in \eqref{eqn:final_bound_assympt} is negligible compared to the second; applying Markov's inequality  and absorbing logarithmic factors into $\epsilon$ completes the proof.

\subsubsection{Proof of Theorem~\ref{thm:poly_approx}}\label{proof:poly_approx}
First, applying Lemma~\ref{lem:angle_app} we deduce that for any $s\in (0,1)$, the estimate 
\begin{equation}\label{eqn:angle}
        \max_{i,j}\abs{\tfrac{1}{d}\inner{ x^{(i)}, x^{(j)}} - \delta_{ij}} \leq s\sigma^2,
\end{equation}
holds with probability at least $1-2n^2\exp(-c_1s^2 d)$, where $c_1>0$ is some constant.
Similarly, using Lemma~\ref{lem:norms_app} and taking a union bound over $i=1,\ldots, n$ we deduce that for any $t\in (0,\sigma^{-2})$ the estimate 
\begin{equation}\label{eqn:norm_control}
\left|\tfrac{1}{d^q}\| x^{(i)}\|^{2q}_2-1 \right|\leq q 2^{2q}t\sigma^2 \qquad\forall q\geq 1,
\end{equation}
holds with probability at least $1-2n\exp(-c_2dt^2)$. 
We assume that the events \eqref{eqn:angle} and \eqref{eqn:norm_control} occur for the remainder of the proof.

To simplify notation, let $\bar{c} := g_{m}(1) = \sum_{k=0}^{m} \frac{g^{(k)}(0)}{k!}$ and define the matrix $E(X):=K(X,X)-K_{m}(X,X)$. Then splitting $E(X)$ into its diagonal and off-diagonal parts, we may estimate
\begin{align}
\|E(X)- (  g(1) - {\bar c})   I_n\|_{\rm op}&= \|E(X)-\Diag(E(X))+\Diag(E(X))- (  g(1) - {\bar c})  I_n\|_{\rm op}\notag\\
&\leq \|E(X)-\Diag(E(X))\|_{\rm op}+\|\Diag(E(X))-(  g(1) - {\bar c})  I_n\|_{\rm op}\notag\\
&\leq \|E(X)-\Diag(E(X))\|_{F}+\max_{i=1,\ldots,n}|E(X)_{ii}-(  g(1) - {\bar c})  I_n|,\label{eqn:decomp_kern}
\end{align}
where the first inequality follows from the triangle inequality and the second follows the norm comparison $\|\cdot\|_{\rm op}\leq \|\cdot\|_{F}$.

We consider the off-diagonal entries first. By Assumption~\ref{assump:g_0},  we can write
\begin{align}
    g(t) = \sum_{m=0}^\infty \frac{g^{(m)}(0)}{m!} t^m,~~~t\in(-\epsilon,\epsilon).
\end{align}
Setting $s<\frac{\min(1/2,\epsilon)}{\sigma^2}$, we successively estimate
\begin{align}
     \|E(X)-\Diag(E(X))\|_{F}^2&=\sum_{i\neq j} \round{g(x^{(i)},x^{(j)}) - g_{m}(x^{(i)},x^{(j)})}^2\notag\\
     &\leq n^2 \sum_{k = m+1}^\infty \left(\tfrac{g^{(k)}(0)}{(k)!}\right)^2 \max_{i\neq j}\abs{\inner{x^{(i)},x^{(j)}} \nicefrac{}{} d}^{2k}\label{eqn:first_eq_needed}\\
     &\leq \frac{n^2\sup_{k\geq m+1}g^{(k)}(0)}{(m+1)!}  \sum_{k=m+1}^\infty (s\sigma^2)^{2k}\label{eqn:second_eq_needed}\\
     &\leq \frac{n^2\sup_{k\geq m+1}g^{(k)}(0)}{(m+1)!} \cdot \frac{(s\sigma^2)^{2m+2}}{1-1/4}\notag\\
     &\leq  \frac{4n^2\sup_{k\geq m+1}g^{(k)}(0) (s\sigma)^{2m+2}}{3(m+1)!}.\label{eqn:third_eq_needed}
\end{align}
where\eqref{eqn:first_eq_needed} follows from \eqref{eqn:angle} and \eqref{eqn:second_eq_needed} follows from the geometric series formula.

Moving on to the diagonal entries, we compute:
\begin{align}
|E(X)_{ii}-(  g(1) - {\bar c}) I_n|&=\abs{g\round{\tfrac{1}{d}\norm{ x^{(i)}}^2 } - g(1) + \round{{\bar c} - g_{m}\round{\tfrac{1}{d}\norm{ x^{(i)}}^2}  }}\notag\\
    &\leq \abs{g\round{\tfrac{1}{d}\norm{ x^{(i)}}^2} - g(1)} + \abs{\bar c - g_{m}\round{\tfrac{1}{d}\norm{ x^{(i)}}^2 }}\notag\\
    &\leq  L_0\abs{\tfrac{1}{d}\norm{ x^{(i)}}^2 -1}+\sum_{k=1}^{m}\frac{g^{(k)}(0) }{k!}\left|\tfrac{1}{d^k}\norm{ x^{(i)}}^{2k} -1\right|\label{eqn:lip_g}\\
    &\leq L_0s\sigma^2+ t\sigma^2\sum_{k=1}^{m}\frac{g^{(k)}(0)2^{2k}}{(k-1)!} k , \label{eqn:lip_g_fin}
\end{align}
where \eqref{eqn:lip_g} follows from Lipschitz continuity of $g$ with $L_0:= \sup_{t\in(1-\epsilon,1+\epsilon)}|g'(t)|$ and \eqref{eqn:lip_g_fin} follows from \eqref{eqn:norm_control}.

Therefore combining \eqref{eqn:decomp_kern}, \eqref{eqn:third_eq_needed}, and \eqref{eqn:lip_g_fin} we obtain the estimate
\begin{align}\label{eq:basic_prev_plug}
    \|E(X)-(g(1)-\bar c)I_n\|_{\rm op} &\leq \sqrt{\frac{4n^2\sup_{k\geq m+1}g^{(k)}(0) }{3(m+1)!}}(s\sigma)^{m+1}+L_0s\sigma^2+\sum_{k=1}^{m}\frac{g^{(k)}(0)2^{2k} }{(k-1)!} t\sigma^2.
\end{align}
Defining the constant $C_{g,m}:= \max\round{\sqrt{\frac{4\sup_{k\geq m+1}g^{(k)}(0) }{3(m+1)!}}, \sum_{k=1}^{m}\frac{g^{(k)}(0) 2^{2k}}{(k-1)!},1} < \infty$ and we therefore have the bound
\begin{equation}\label{eqn:basic_plugin}
\|E(X)-(g(1)-\bar c)I_n\|_{\rm op}\leq C_{g,m}\cdot( n (s\sigma^2)^{m+1}+L_0s\sigma^2+t\sigma^2).
\end{equation}
Then for any $C>2$, we may set $s:=\sqrt{C\log(n)/c_1d}$ and $t:=\sqrt{(C-1)\log(n)/c_2d}$ thereby ensuring that the events \eqref{eqn:angle} and \eqref{eqn:norm_control} occur with probability at least $1-\frac{4}{n^{C-2}}$. Plugging these values into \eqref{eqn:basic_plugin} yields the estimate 
$$\|E(X)-(g(1)-\bar c)I_n\|_{\rm op}\leq c\cdot C_{g,m}\left(C^{\frac{m+1}{2}}d^{\frac{2p-m-1}{2}+\delta}\log^{\frac{m+1}{2}}(n)\sigma^{2m+2}+(L_0+1)\sigma^2\sqrt{\frac{C\log(n)}{d}}\right),$$
where $c$ is a numerical constant. 

To meet the assumption $s<\frac{\min(1/2,\epsilon)}{\sigma^2}$, $d$ needs to satisfy $d/\log(d)> \frac{\min(1/2,\epsilon)^2 c_1}{C(p+\delta)\sigma^4}:=c_\epsilon$. 
The proof is complete.

\subsubsection{Proof of Lemma~\ref{lemma:each_mono_cube}}\label{proof:each_mono_cube}

Let $C_d(k)$ consists of all vectors $\lambda\in\mathbb{N}^d$ satisfying $\lambda\geq 0$ and $\sum_{i=1}^d\lambda_i=k$. Then the multinomial expansion of $(x^\top z)^k$ for any $x,y\in\R^d$ yields the expression
\begin{equation}\label{eqn:multinom_expans}
    \frac{1}{k!}\round{\frac{x\tran z}{d}}^k = \frac{1}{d^k} \sum_{\lambda\in C_d(k)} \frac{1}{\lambda!}x^\lambda z^\lambda.
\end{equation}
Now if $x$ and $y$ lie in the hypercube $\mathbb{H}^d$, equality  $x^{\lambda}=x^{\lambda\,{\rm mod}\, 2}$ holds. Thus we may now rewrite \eqref{eqn:multinom_expans} as 

$$\sum_{\lambda\in C_d(k)} \frac{1}{\lambda!}x^\lambda z^\lambda=\sum_{\lambda\in \{0,1\}^d} D_{\lambda} x^{\lambda}y^{\lambda},$$
where we define the coefficients 
\begin{equation}\label{eqn:defining_Dl}
D_{\lambda}=\sum_{\substack{\alpha \in C_d(k): \\ \lambda = \alpha\, {\rm mod}\,2}} \frac{1}{\alpha!}.
\end{equation}
In particular, it is clear from \eqref{eqn:defining_Dl} that if $k-|\lambda|$ is odd or strictly negative then equality $D_{\lambda}=0$ holds. Moreover, for any $\lambda\in \{0,1\}^d$ with $|\lambda|=k$, the only summand on the right-side of \eqref{eqn:defining_Dl} corresponds to $\alpha=\lambda$ and therefore equality $D_{\lambda}=\frac{1}{\lambda!}=1$ holds. More generally, for any $\lambda \in \{0,1\}^d$ such that  $k-|\lambda|$ is even and nonnegative, the number of terms in the sum in \eqref{eqn:defining_Dl} is exactly the cardinality of the set $C_d(\frac{k-|\lambda|}{2})$, which is ${\frac{k-|\lambda|}{2}+d-1\choose d-1}=\Theta(d^{(k-|\lambda|)/2})$. Dividing $D_{\lambda}$ by $d^k$ completes the proof.

\subsubsection{Proof of Lemma~\ref{lemma:kernel_to_mono}}\label{proof:kernel_to_mono}

We first present a proof for Eq.~\eqref{eq:kernel_to_mono}.
To this end, Theorem~\ref{thm:poly_approx_cube} with $m=2p+1$ yields the estimate: 
\begin{align}\label{eq:mono_taylor_approx_bound1}
    \snorm{K - K_{2p+1} - (g(1) - g_{2p+1}(1))I_n} = O_{d,\P}\left(\frac{\log^{p+1}(n)}{d^{1-\delta}}\right).
\end{align}
Invoking Lemma~\ref{lemma:each_mono_cube} for each $k$, we may write:
\begin{align*}
  \frac{1}{k!}\round{\frac{XX\tran}{d}}^{\odot k} =  d^{-k}\cdot \Phi_{\S_k}\Phi_{\S_k}\tran + \sum_{ \substack{j:\, 0\leq j<k, \\ k-j~\mathrm{is~even}}}\Phi_{\S_j} \widetilde{D}_{\S_j}\Phi_{\S_j}\tran,
\end{align*}
where each $\widetilde{D}_{\S_j}$ is a diagonal matrix with all entries on the order of $\Theta(d^{-(j+k)/2})=O(d^{-j-1})$.
Since $k-j$ is even and positive we deduce $k\geq j+2$. Therefore every entry of $\widetilde{D}_{\S_j}$ is bounded by $O(d^{-j-1})$.
Thus, merging $\Phi_{\S_j} \widetilde{D}_{\S_j}\Phi_{\S_j}\tran$ with the lower-order bases, we can write
\begin{align}\label{eqn:K_expression_2p_plus_1}
    K_{2p+1} = \Phi_{\leq 2p+1} D_{\leq 2p+1} \Phi_{\leq 2p+1}\tran,
\end{align}
where $D$ is a diagonal matrix satisfying
\begin{equation}\label{eqn:diagonal_expression}
\|D_{\S_k} - g^{(k)}(0)d^{-k}I_{\S_k}\|_{\rm op} = O_d(d^{-k-1}),
\end{equation}
for each $k=0,\ldots, 2p+1$.
We now claim that the terms in the product corresponding to $k=p+1,\ldots, 2p+1$ are close to a multiple of the identity. To see this, using the triangle inequality we estimate 
\begin{align*}
    \snorm{\sum_{k=p+1}^{2p+1}  \Phi_{\S_k}  D_{\S_k} \Phi_{\S_k}\tran - \sum_{k=p+1}^{2p+1} \frac{g^{(k)}(0)}{k!}I_n} \leq \sum_{k=p+1}^{2p+1}\left\|\Phi_{\S_k}  D_{\S_k}\Phi_{\S_k}\tran -\frac{g^{(k)}(0)}{k!}I_n\right\|_{\rm op}.
\end{align*}
Using the triangle inequality with the expression \eqref{eqn:diagonal_expression}, we subsequently deduce
\begin{equation}\label{eqn:main_bound_exprsq}
    \snorm{ \Phi_{\S_k}  D_{\S_k} \Phi_{\S_k}\tran - \frac{g^{(k)}(0)}{k!}\cdot I_{n}} \leq  \frac{g^{(k)}(0)}{k!}\snorm{  \frac{k!}{d^k}\Phi_{\S_k}   \Phi_{\S_k}\tran  - I_{n}} + O_d(d^{-k-1})\cdot    \snorm{ \Phi_{\S_k} }^2.
    \end{equation}
We now bound the two terms separately. To this end, note the estimate $|\S_k|=\frac{d^{k}}{k!}+O_d(d^{k-1})$. Therefore Lemma~\ref{lem:norm_control} implies $\|\Phi_{\S_k}\|^2_{\rm op}=O_{d,\mathbb{P}}(d^k)$ and hence the second term on the right side of \eqref{eqn:main_bound_exprsq} is $O_{d,\mathbb{P}}(d^{-1})$. In order to control the first term, we invoke Theorem~\ref{lemma:gram_matrix} yielding
\begin{align}\label{eq:mono_taylor_approx_bound2}
    \snorm{\tfrac{1}{|\S_k|} \Phi_{\S_k}   \Phi_{\S_k}\tran  
 - I_n} \leq O_{d,\mathbb{P}}\left(d^{-\tfrac{k-p-\delta}{2}+\epsilon}\log(n)\right),
\end{align}
for any $\epsilon\in (0,\delta)$. By the triangle inequality, replacing $|\S_k|$ by $d^k/k!$ in \eqref{eq:mono_taylor_approx_bound2} incurs an error of size
\begin{align}\label{eq:mono_taylor_approx_bound3}
    \left|\tfrac{k!}{d^{k}}-\tfrac{1}{|\S_k|}\right|\cdot \|\Phi_{\S_k}\|^2_{\rm op}= \frac{||\S_k|/d^k-\frac{1}{k!}|}{|S_k|/k!} \|\Phi_{\S_k}\|^2_{\rm op}=O_d(d^{-1}). 
\end{align}
Therefore, we conclude that the first term on the right side of \eqref{eqn:main_bound_exprsq} is on the order of
\begin{align}
    O_{d,\mathbb{P}}\left(d^{-\tfrac{k-p-\delta}{2}+\epsilon}\log(n)  +d\inv\right).
\end{align} 
Combining with the bound for the second term on the right side of \eqref{eqn:main_bound_exprsq}, we obtain
\begin{align}\label{eq:each_taylor_bound}
 \snorm{ \Phi_{\S_k}  D_{\S_k} \Phi_{\S_k}\tran - \frac{g^{(k)}(0)}{k!}\cdot I_{n}} =  O_{d,\mathbb{P}}\left(d^{-\tfrac{k-p-\delta}{2}+\epsilon}\log(n)  +d\inv\right).
 \end{align}
Note that the term with $k = p+1$ will dominate the summation, thus, we deduce 
$$\sum_{k=p+1}^{2p+1}  \Phi_{\S_k}  D_{\S_k} \Phi_{\S_k}\tran=\underbrace{\sum_{k=p+1}^{2p+1} \frac{g^{(k)}(0)}{k!}I_n}_{=g_{2p+1}(1)-g_{p}(1)}+O_{d,\mathbb{P}}\left(d^{-\tfrac{1-\delta}{2}+\epsilon}\log(n)\right).$$
Combining this expression with \eqref{eq:mono_taylor_approx_bound1} completes the proof for Eq.~\eqref{eq:kernel_to_mono}.

Now we proceed to prove Eq.~\eqref{eq:kernel_to_mono_advanced}. To this end, applying Theorem~\ref{thm:poly_approx_cube} with $m=2p+3$ yields the estimate: 
\begin{align}\label{eq:mono_taylor_approx_extra_bound1}
    \snorm{K - K_{2p+3} - (g(1) - g_{2p+3}(1))I_n} = O_{d,\P}\left(\frac{\log^{p+2}(n)}{d^{2-\delta}}\right) = O_{d,\P}(d\inv).
\end{align}
By the same reasoning that led to Eq.~\eqref{eqn:K_expression_2p_plus_1}, we can write $K_{2p+3}$ as 
\begin{align}\label{eqn:K_expression_2p_plus_3}
    K_{2p+3} = \Phi_{\leq 2p+3} D_{\leq 2p+3} \Phi_{\leq 2p+3}\tran.
\end{align}
We can bound the term with $k\geq p+2$ by letting $k = p+2$ in Eq.~\eqref{eq:each_taylor_bound}, which yields
\begin{align}\label{eq:feature_p_plus_2}
    \snorm{  \Phi_{\S_k}  D_{\S_k} \Phi_{\S_k}\tran - \frac{g^{(k)}(0)}{k!}I_n} = O_{d,\mathbb{P}}\left(d^{-\tfrac{2-\delta}{2}+\epsilon}\log(n)\right).
\end{align}
For $k = p+1$, we provide the following approximation:
\begin{align*}
     &~~~\snorm{\Phi_{\S_{p+1}}  D_{\S_{p+1}} \Phi_{\S_{p+1}}\tran - \frac{g^{(p+1)}(0)}{(p+1)!}\cdot I_{n} - \frac{g^{(p+1)}(0)}{d^{p+1}}\offd\round{\Phi_{\S_{p+1}}   \Phi_{\S_{p+1}}\tran  }}\\
     &\overset{(a)}{\leq} \snorm{\frac{g^{(p+1)}(0)}{d^{p+1}}\diag\round{\Phi_{\S_{p+1}}   \Phi_{\S_{p+1}}\tran  } - \frac{g^{(p+1)}(0)}{(p+1)!}\cdot I_{n}} + O_d(d^{-p-2})\cdot \snorm{\Phi_{\S_{p+1}}   \Phi_{\S_{p+1}}\tran }\\
     &\overset{(b)}{=}\snorm{\frac{g^{(p+1)}(0)}{(p+1)! |\S_{p+1}|}\diag\round{\Phi_{\S_{p+1}}   \Phi_{\S_{p+1}}\tran  } - \frac{g^{(p+1)}(0)}{(p+1)!}\cdot I_{n}} + O_d(d\inv) + O_d(d^{-p-2})\cdot \snorm{\Phi_{\S_{p+1}}   \Phi_{\S_{p+1}}\tran }\\
     &\overset{(c)}{=} O_{d,\P}(d\inv),
\end{align*}
where $(a)$ follows from Eq.~\eqref{eqn:diagonal_expression}, $(b)$ follows from Eq.~\eqref{eq:mono_taylor_approx_bound3} and $(c)$ follows from Lemma~\ref{lem:norm_control} and the fact $\diag\round{\Phi_{\S_{p+1}}   \Phi_{\S_{p+1}}\tran }/ |\S_{p+1}| = I_n$.
Combining with Eq.~\eqref{eq:feature_p_plus_2}  we deduce 
$$\snorm{\sum_{k=p+1}^{2p+3}  \Phi_{\S_k}  D_{\S_k} \Phi_{\S_k}\tran - \underbrace{\sum_{k=p+1}^{2p+3} \frac{g^{(k)}(0)}{k!}}_{=g_{2p+3}(1)-g_{p}(1)}I_n - \frac{g^{(p+1)}(0)}{d^{p+1}}\offd\round{\Phi_{\S_{p+1}}   \Phi_{\S_{p+1}}\tran }} = O_{d,\mathbb{P}}\left(d^{-\tfrac{2-\delta}{2}+\epsilon}\log(n)\right).$$
Combining this expression with Eq.~\eqref{eq:mono_taylor_approx_extra_bound1} completes the proof.

\subsubsection{Proof of Lemma~\ref{lem:conv_quad_herm}}\label{sec:proofoflem:conv_quad_herm}
Let us partition the second-order Hermite polynomials $\S_2$ into the pure and mixed parts as 
$$\phi_{\S_2}=\left(\underbrace{\{h_{2e_i}\}_{i=1}^d}_{=:\phi^s_{\S_2}(x)}, \underbrace{\{h_{e_i+e_j}\}_{1\leq i<j\leq d}}_{=:\phi_{\S_2}^m(x)}\right).$$
For any $x,y\in\R^d$, expanding the square yields the expression
 \begin{align*}
    \frac{1}{2}\round{x\tran y}^2 &= \frac{1}{2}\round{\sum_{i=1}^d {x_iy_i}}^2\\
    &=\frac{1}{2}\sum_{i=1}^d x_i^2 y_i^2+\sum_{1\leq i<j\leq d} (x_ix_j)\cdot(y_iy_j)\\
    &=\frac{1}{2}\sum_{i=1}^d (\sqrt{2}h_2(x_i)+1) (\sqrt{2}h_2(y_i)+1) + \sum_{1\leq i< j\leq d}(h_1(x_i) h_1( x_j))\cdot (h_1(y_i) h_1( y_j))\\
    &=\frac{1}{2}\langle \sqrt{2}\phi_{\S_2}^s(x)+h_0(x)\cdot {\bf 1}_d,\sqrt{2}\phi_{\S_2}^s(y)+h_0(y)\cdot {\bf 1}_d\rangle+\langle \phi_{\S_2}^m(x),\phi_{\S_2}^m(y)\rangle\\
    &=\langle \phi_{\S_2}(x),\phi_{\S_2}(y) \rangle+\frac{d}{2}\langle \phi_{\S_0}(x),\phi_{\S_0}(y)
    \rangle+\frac{1}{\sqrt{2}}\langle {\bf 1}_d, \phi_{\S_2}^s(x)+\phi_{\S_2}^s(y)\rangle.
\end{align*}
The equality \eqref{eqn:second_orderequiv} follows immediately.

\subsubsection{Proof of Lemma~\ref{lemma:hermite_decomposition_p=3}}\label{proof:hermite_decomposition_p=3}
For now, let $k\in \mathbb{N}$ be arbitrary. Observe that since $\round{x\tran y}^k$ is a degree $k$ polynomial in $x$ and in $y$, it can be expressed in terms of Hermite polynomials:
\begin{align}\label{eq:mono_to_hermite}
  \left(x^\top y\right)^k=\sum_{\substack{
    \alpha,\,\beta \in \mathbb{N}^d \\
    \|\alpha\|_1 \le k \\
    \|\beta\|_1 \le k
}} C[\alpha,\beta] h_{\alpha}(x)h_{\beta}(y),  
\end{align}
for some constants $C[\alpha,\beta]$. In particular, We may regard $C[\cdot, \cdot]$ as a matrix satisfying 
$$\frac{1}{k!}\round{\frac{XX\tran}{d}}^{\odot k}=\frac{1}{k!d^k}\Phi C  \Phi^\top.$$
Our immediate goal is therefore to estimate the values  $C[\alpha,\beta]$. We will do so by relating $C[\alpha,\beta]$ to the derivatives of the function $\phi(x,y):=(x^\top y)^k$.
Namely, multiplying both sides of \eqref{eq:mono_to_hermite} by $h_{\alpha}(x)$ and $h_{\beta}(y)$ and taking expectations, we deduce 
\begin{align*}
C[\alpha,\beta]&=\E_{x,y}[\phi(x,y) h_{\alpha}(x) h_{\beta}(y)]\\
&=\frac{1}{\sqrt{\alpha!}}\E_{y}[[\E_x\partial_x^{(\alpha)}\phi(x,y)] h_{\beta}(y)]\\
&=\frac{1}{\sqrt{\alpha!\beta!}}\E_{x,y}\partial_{x,y}^{(\alpha,\beta)}\phi(x,y),
\end{align*}
where the second and third equalities follow from \eqref{eqn:gradient formula}. In order, to compute the derivatives of $\phi$ we apply the multinomial expansion to obtain  
\begin{equation}\label{eqn:multinom_expan}
    \phi(x,y) =  \sum_{\lambda\in C_d(k)} \frac{k!}{\lambda!}x^\lambda y^\lambda,
\end{equation}
where we let $C_d(k)$ be the set consisting of all vectors $\lambda\in\mathbb{N}^d$ satisfying $\lambda\geq 0$ and $\|\lambda\|_1=k$.
Now for any  $\alpha,\beta\in \mathbb{N}^d$ with $\|\alpha\|_1\le k$ and $\|\beta\|_1\le k$, differentiating the equation \eqref{eqn:multinom_expan} yields the expression
$$\E[\partial^{(\alpha,\beta)}_{x,y} \phi(x,y)]=\sum_{\lambda\in C_d(k)} \frac{k!}{\lambda!}\cdot(\E\partial^{(\alpha)}_x x^{\lambda})\cdot (\E\partial^{(\beta)}_y y^{\lambda})= \sum_{\lambda\in\Lambda_{(\alpha,\beta)}} \tfrac{k!\lambda!}{(\lambda-\alpha)!(\lambda-\beta)!}\cdot \E[x^{\lambda-\alpha}] \E[y^{\lambda-\beta}],$$
where we define the set
$$\Lambda_{\alpha,\beta}:=\{\lambda\in C_d(k): \lambda-\alpha~ \textrm{and}~\lambda-\beta ~\textrm{are~nonnegative~and~even}\}.$$
Summarizing, we have learned that $C[\alpha,\beta]$ can be computed as:
\begin{align}\label{eq:C_scale}
 C[\alpha,\beta]=\frac{1}{\sqrt{\alpha!\beta!}}\sum_{\lambda\in\Lambda_{\alpha,\beta}} \tfrac{k!\lambda!}{(\lambda-\alpha)!(\lambda-\beta)!}\cdot \E[ x^{\lambda-\alpha}] \E[\vy^{\lambda-\beta}].
\end{align}
Note  that since each summand in \eqref{eq:C_scale} has constant order, equality $C[\alpha,\beta]=\Theta(|\Lambda_{\alpha,\beta}|)$ holds.
Using the formula \eqref{eq:C_scale}, one may easily compute the diagonal elements of $C$ exactly for the two cases of interest $k=3$ and $k=4$. Moreover, for any principle submatrix $C[\mathcal{I},\mathcal{I}]$, we may estimate the operator norm of the diagonal block as  
$$\|\Phi_{\I}\cdot\diag(C[\I,\I])\cdot \Phi_{\I}^\top\|_{\rm op}\leq \|\diag(C[\I,\I])\|_{\rm op}\cdot \|\Phi_{\I}\|_{\rm op}^2.$$
Using Lemma~\ref{lem:norm_control} we may then further bound 
 $\|\Phi_{\I}\|_{\rm op}^2$ as $O_{d,\mathbb{P}}(\sqrt{n})$ or as $O_{d,\mathbb{P}}(\sqrt{|\I|})$ depending on the ratio $|\I|/n$. We summarize in Tables~\ref{table:p=3_hermite} and Table~\ref{table:p=4_hermite} the diagonal entries of $C$, decomposed as a sum of diagonal blocks along with their operator norm.

\begin{table}[H]
\centering
\begin{tabular}{c c  c c}
\toprule
$\I$ & $\J$ &  $C[\I,\J]$ &  {$d^{-k}\snorm{\Phi_{\I} C \Phi_{\J}\tran}$}\\ 
\midrule
$e_i$ & $e_i$  & $3d$  &$O_{d,\P}(d^{\delta})$ \\
\midrule
$e_i$ & $e_i$  & $6$  &$O_{d,\P}(d^{-1+\delta})$ \\
\midrule
$3e_i$ & $3e_i$ & $6$  & $O_{d,\P}(d^{-1+\delta})$ \\
\midrule
$2e_i+e_j$~$(i\neq j)$ & $2e_i+e_j$  & $6$& $O_{d,\P}(d^{-1+\delta})$ \\
\midrule
$e_i+e_j+e_k$ ~$(i,j,k)$ distinct & $e_i+e_j+e_k$ & $6$ & $O_{d,\P}(1)$ \\
\bottomrule
\end{tabular}
\caption{ ($p=3$) Diagonal elements of $C$ arranged in blocks along with their operator norm.}\label{table:p=3_hermite}
\end{table}

\begin{table}[H]
\centering
\begin{tabular}{c c  c c}
\toprule
$\I$ & $\J$ &  $C[\I,\J]$ &  $d^{-k}\snorm{\Phi_{\I} C \Phi_{\J}\tran}$\\ 
\midrule
0 & 0 & $6d^2$ & $O_{d,\P}(d^\delta)$   \\
\midrule
0 & 0 & $1-6d$ &$O_{d,\P}(d^{-1+\delta})$     \\
\midrule
$e_i+e_j$~$(i\neq j)$ & $e_i+e_j$  & $24d+48$ & $O_{d,\P}(d^{-1+\delta})$\\
\midrule
$2e_i$ & $2e_i$ & $12d+60$ &$O_{d,\P}(d^{-1+\delta})$\\
\midrule
$4e_i$ & $4e_i$  &$24$ & $O_{d,\P}(d^{-2+\delta})$ \\
\midrule
$3e_i+e_j$~$(i\neq j)$ & $3e_i+e_j$ &$24$ &$O_{d,\P}(d^{-2+\delta})$ \\
\midrule
$2e_i+e_j+e_k$ \\($i,j,k$~\rm distinct) & $2e_i+e_j+e_k$  &$24$ &$O_{d,\P}(d^{-1})$\\
\midrule
$2e_i+2e_j$ ~$(i\neq j)$& $2e_i+2e_j$ &$24$&$O_{d,\P}(d^{-2+\delta})$  \\
\midrule
$e_i+e_j+e_k+e_\ell$ \\
$(i,j,k,\ell)$ distinct & $e_i+e_j+e_k+e_\ell$ & $24$ & $O_{d,\P}(1)$ \\
\bottomrule
\end{tabular}
\caption{($p=4$) Diagonal elements of $C$ arranged in blocks along with their operator norm.}\label{table:p=4_hermite}
\end{table}

In particular, we deduce that the two displayed equations \eqref{eqn:k3} and \eqref{eqn:k4} hold with the matrix $XX\tran$ replaced by its diagonal $\diag(XX\tran)$. It remains therefore to argue that matrix $d^{-k}\Phi E\Phi^\top$ corresponding to off-diagonal terms $E:= C-\diag(C)$ has a small operator norm. We do so for $k=3, 4$ by identifying submatrices $E[\I,\J]$, for  finitely many sets $\I$ and $\J$, which cover all entries of $E$ and so that the operator norm $\frac{1}{k!d^k}\|\Phi_{\I}E[\I,\J]\Phi_{\J}^\top\|_{\rm op}$ tends to zero as $d$ increases. In each case, we bound the norm as 
$$\|\Phi_{\I}E[\I,\J]\Phi_{\J}^\top\|_{\rm op}\leq \|E[\I,\J]\|_{\rm op}\|\Phi_{\I}\|_{\rm op}\cdot\|\Phi_{\J}\|_{\rm op}.$$
The operator norms $\|\Phi_{\I}\|_{\rm op}$ and $\|\Phi_{\J}\|_{\rm op}$ can be bounded using Lemma~\ref{lem:norm_control}. In order to effectively bound 
$\|E[\I,\J]\|_{\rm op}$, we choose $\I$ and $\J$ in such a way that $E[\I,\J]$, up to column/row permutations and taking the transpose, is a matrix of two possible types:
\begin{align*}
\textbf{Type~\upperR{1}~:}    \begin{pmatrix}
        c_1 & 0 &\cdots & 0 \\
        0 & c_2 & \cdots &0 \\
        \vdots & \vdots & \ddots & \vdots \\
        0 & 0 & \cdots & c_m
    \end{pmatrix},
    ~~~~
\textbf{Type~\upperR{2}~:}    \begin{pmatrix}
        \rvc_1 & \mathbf{0} &\cdots & \mathbf{0} \\
        \mathbf{0} & \rvc_2 & \cdots &\mathbf{0}\\
        \vdots & \vdots & \ddots & \vdots \\
        \mathbf{0} &\mathbf{0} & \cdots &\mathbf{c_m}
    \end{pmatrix},
\end{align*}
where $c_1,\ldots, c_m$ are real numbers and $\rvc_1,\rvc_1, \ldots,\rvc_m$ are row vectors. The operator norms of the two types of matrices are straightforward to compute. The results are summarized in Tables~\ref{table:p=3} and \ref{table:p=4}. Since the decay rate of $d^{-k}\|\Phi_{\I}E\Phi_{\J}^\top\|_{\rm op}$ is at least on the order of $O_{d,\mathbb{P}}(d^{\delta-1/2})$, the result follows.

\begin{table}[H]
\centering
\begin{tabular}{c c c c c c}
\toprule
$\I$ & $\J$ & $\Lambda_{\I,\J}$ & $E[\I,\J]$ & {\rm Type~of $E$} &  $d^{-k}\snorm{\Phi_{\I} E \Phi_{\J}\tran}$\\ 
\midrule
$e_i$ & $e_i+2e_j$ ~$(j\neq i)$ & $e_i+2e_j$ & $\Theta(1)$& \upperR{2}  & $O_{d,\P}(d^{-1/2+\delta})$\\
$e_i$ & $3e_i$  & $3e_i$ & $\Theta(1)$& \upperR{1} & $O_{d,\P}(d^{-1+\delta})$ \\
\midrule
$3e_i$ & $e_i$ & $3e_i$ & $\Theta(1)$ & \upperR{1} & $O_{d,\P}(d^{-1+\delta})$\\
\midrule
$2e_i+e_j$~$(i\neq j)$ & $e_j$ & $2e_i+e_j$& $\Theta(1)$& \upperR{2} & $O_{d,\P}(d^{-1/2+\delta})$\\
\bottomrule
\end{tabular}
\caption{All off-diagonal terms in Eq.~(\ref{eq:mono_to_hermite}) when $p=3$.}\label{table:p=3}
\end{table}

\begin{table}[H]
\centering
\begin{tabular}{c c c c c c}
\toprule
$\I$ & $\J$ & $\Lambda_{\I,\J}$ & $E[\I,\J]$ & Type of $E$ & $d^{-k}\snorm{\Phi_{\I} E \Phi_{\J}\tran}$ \\ 
\midrule
0 & $2e_i$ & $\{2e_i+2e_j\}_{j}$ & $\Theta_d(d)$ & \upperR{2} & $O_{d,\P}(d^{-1/2+\delta})$ \\
0 & $4e_i$ & $4e_i$ &$\Theta_d(1)$& \upperR{2} &  $O_{d,\P}(d^{-3/2+\delta})$\\
\midrule
$e_i+e_j$~$(i\neq j)$ & $e_i+e_j+2e_k$ ~(\rm distinct) & $e_i+e_j+2e_k$ & $\Theta_d(1)$ & \upperR{2} &$O_{d,\P}(d^{-1+\delta/2})$\\
$e_i+e_j$~$(i\neq j)$ & $3e_i+e_j$  & $3e_i+e_j$ & $\Theta_d(1)$ & \upperR{1} &$O_{d,\P}(d^{-2+\delta})$\\
\midrule
$2e_i$ & $0$ & $\{2e_i+2e_j\}_{j}$ &$\Theta_d(d)$ & \upperR{2} &$O_{d,\P}(d^{-1/2+\delta})$\\
$2e_i$ & $2e_i+2e_j$~$(i\neq j)$ & $2e_i+2e_j$ &$\Theta_d(1)$ & \upperR{2} &$O_{d,\P}(d^{-3/2+\delta})$\\
$2e_i$ & $4e_i$ & $4e_i$ &$\Theta_d(1)$ & \upperR{1} &$O_{d,\P}(d^{-2+\delta})$\\
\midrule
$4e_i$ & $0$ & $4e_i$ &$\Theta_d(1)$ & \upperR{2} &$O_{d,\P}(d^{-3/2+\delta})$\\
$4e_i$ & $2e_i$ & $4e_i$ &$\Theta_d(1)$ & \upperR{1} &$O_{d,\P}(d^{-2+\delta})$\\
\midrule
$3e_i+e_j$ ~$(i\neq j)$ & $e_i+e_j$ & $3e_i+e_j$  &$\Theta_d(1)$ & \upperR{1} &$O_{d,\P}(d^{-2+\delta})$\\
\midrule
$2e_i+e_j+e_k$ \\($i,j,k$~\rm distinct) & $e_j+e_k$ & $2e_i+e_j+e_k$ &$\Theta_d(1)$ & \upperR{2} &$O_{d,\P}(d^{-1+\delta/2})$\\
\midrule
$2e_i+2e_j$ ~$(i\neq j)$& $0$ & $2e_i+2e_j$ &$\Theta_d(1)$ & \upperR{2} &$O_{d,\P}(d^{-1+\delta})$\\
$2e_i+2e_j$~$(i\neq j)$ & $2e_i$ & $2e_i+2e_j$ &$\Theta_d(1)$ & \upperR{2} &$O_{d,\P}(d^{-3/2+\delta})$\\
$2e_i+2e_j$ ~$(i\neq j)$& $2e_j$ & $2e_i+2e_j$ &$\Theta_d(1)$ & \upperR{2} &$O_{d,\P}(d^{-3/2+\delta})$\\
\bottomrule
\end{tabular}
\caption{All off-diagonal terms in Eq.~(\ref{eq:mono_to_hermite}) when $p=4$.}\label{table:p=4}
\end{table}

\subsubsection{Proof of Lemma~\ref{lemma:kernel_hermite_p=2}}\label{proof:kernel_hermite_p=2}
Applying Theorem~\ref{thm:poly_approx} in the regime $n = d^{2+\delta}$ with $m=4$, we deduce
\begin{equation}\label{eq:taylor_kernel_p_2}  \snorm{K - K_4 - (g(1) - g_4(1))I_n} = O_{d,\P}\round{d^{\delta-1/2} \log^3(n)}.
\end{equation}
Recall that $K_4$ can be equivalently written as
\begin{align*}
    K_4 = \sum_{k=0}^4 \frac{g^{(k)}(0)}{k!} \round{\frac{XX\tran}{d}}^{\odot k}.
\end{align*}
Combining Lemma~\ref{lem:conv_quad_herm} and Lemma~\ref{lemma:hermite_decomposition_p=3} directly yields the expression:
$$K_4=\Phi_{\leq 2}D\Phi_{\leq 2}^\top-\tfrac{g^{(3)}(0)}{d^3}\Phi_{\S_3}\Phi_{\S_3}^\top-\tfrac{g^{(4)}(0)}{d^4}\Phi_{\S_4}\Phi_{\S_4}^\top.$$
It remains to argue the second two terms are close to the identity. To this end, applying Lemma~\ref{lemma:gram_matrix} with $k=3,4$ yields the expression
\begin{align*}
    \snorm{\Phi_{\S_k}\Phi_{\S_k}\tran  - |\S_k|I_n}= |\S_k|\cdot O_{d,\mathbb{P}}( d^{(\delta'-1)/2}\log(d)),
\end{align*}
for any $\delta'\in(\delta,1)$. Therefore, we deduce
\begin{align*}
\frac{g^{(k)}(0)}{d^k}\Phi_{\S_k}\Phi_{\S_k}\tran&=\frac{g^{(k)}(0)|\S_k|}{d^k}I_{n}+O_{d,\mathbb{P}}( d^{(\delta'-1)/2}\log(d))\\
&=\frac{g^{(k)}(0)}{k!}I_{n}+O_{d,\mathbb{P}}( d^{(\delta'-1)/2}\log(d)),
\end{align*}
where the last inequality uses the elementary observation $d^{-k}|\S_k|=\frac{1}{k!}+O(d^{-1})$. Therefore we deduce $$\tfrac{g^{(3)}(0)}{d^3}\Phi_{\S_3}\Phi_{\S_3}^\top+\tfrac{g^{(4)}(0)}{d^4}\Phi_{\S_4}\Phi_{\S_4}^\top=\underbrace{\sum_{k=3,4}\frac{g^{(k)}(0)}{k!}}_{=g_{4}(1)-g_2(1)}I_n+O_{d,\mathbb{P}}( d^{(\delta'-1)/2}\log(d)).$$
Combining this expression with \eqref{eq:taylor_kernel_p_2} and the triangle inequality completes the proof.

\subsection{Proof of Propositions for Theorem~\ref{thm:main_gauss}}

\subsubsection{Proof of Proposition~\ref{prop:s2_s2}}\label{proof:s2_s2}
Invoking the definition of $D'$ (Eq.~(\ref{eq:D_p=2})) and using the triangle inequality we have
\begin{align*}
    \snorm{{D}'_{\S_2}} \leq \snorm{D'_{13}} + \snorm{{D}'_{\S_2,\S_2}} = \frac{g^{(3)}(0)}{\sqrt{2}d^{3/2}} + \frac{g^{(3)}(0)}{d^2} = O_{d}(d^{-3/2}).
\end{align*}
Taking into account Eq.~(\ref{eq:k_lambda_inv}) and submultiplicativity of the spectral norm, we deduce
\begin{align}
    \norm{\frac{1}{d}\beta\tran  X_r\Phi_{\S_2}{D}'_{\S_2}}_2
    &\leq \frac{1}{d}\norm{\beta}_2\snorm{X_r}\snorm{\Phi_{\S_2}}\snorm{{D}'_{\S_2}} \nonumber= O_{d,\P}(d^{\delta - 1/2}\log(d))\cdot \sqrt{\norm{f^*}_{L_2}^2 +\sigma_\varepsilon^2},
\end{align}
where the last equality follows from the estimates 
$\|\beta\|_2\leq \sqrt{n\log(n)(\norm{f^*}_{L_2}^2 +\sigma_\varepsilon^2)}$ (Eq. \eqref{eqn:we_gonna_need}), $\sup_{r}\snorm{X_r} = O_{d,\P}(\sqrt{\log(d)})$ (Eqn. \eqref{eqn:nnorm_bound_gauss}), and
$\snorm{\Phi_{\S_2}} = O_{d,\P}(\sqrt{n})$ (Lemma~\ref{lem:norm_control}). Taking the square of both sides and absorbing logarithmic factors into $\epsilon$ completes the proof.

\subsubsection{Proof of Proposition~\ref{prop:l1_to_s2}}\label{proof:l1_to_s2}
   Invoking the definition of $D'$ (Eq.~(\ref{eq:D_p=2})), we have 
\begin{align*}
    \norm{{D}'_{\leq 1,\S_2}}_2 = \norm{D'_{13}}_2 = \frac{|g^{(3)}(0)|}{\sqrt{2}}d^{-3/2}.
\end{align*}
Taking into account Eq.~(\ref{eq:k_lambda_inv}) and submultiplicativity of the spectral norm, we deduce
\begin{align*}
    \norm{\frac{1}{d}\beta\tran X_r\Phi_{\leq 1}{D}'_{\leq 1,\S_2}}_2 &\leq \frac{1}{d}\norm{\beta}_2 \snorm{X_r}\snorm{\Phi_{\leq 1}}\snorm{{D}'_{\leq 1,\S_2}}=O_{d,\P}(d^{\delta - 1/2}\log(d)),
\end{align*}
where the last equality follows from the estimates 
$\|\beta\|_2\leq \sqrt{n\log(n)(\norm{f^*}_{L_2}^2 +\sigma_\varepsilon^2)}$ (Eq. \eqref{eqn:we_gonna_need}), $\sup_{r}\snorm{X_r} = O_{d,\P}(\sqrt{\log(d)})$ (Eqn. \eqref{eqn:nnorm_bound_gauss}), and
$\snorm{\Phi_{\leq 1}} = O_{d,\P}(\sqrt{n})$ (Lemma~\ref{lem:norm_control}). 
Taking the square of both sides and absorbing logarithmic factors into $\epsilon$ completes the proof.

\subsubsection{Proof of Proposition~\ref{prop:keymain}}\label{proof:<_l1_l1_l1}
In all arguments in this section, the asymptotic terms $o_{d,\mathbb{P}}$ and $O_{d,\mathbb{P}}$ will always be uniform overall the coordinates $r=1,\ldots,d$.
Recall that all Hermite polynomials of order at most two are products of polynomials of the form 
$$h_0(x_i)=1, \qquad h_1(x_i)=x_i,\qquad h_2(x_i)=\tfrac{x_i^2-1}{\sqrt{2}},$$
evaluated along the coordinates $x_i$.
Then taking into account the definition 
$f^*_{\leq 2} ( x) = \sum_{\lambda\in \mathbb{N}^d:~\|\lambda\|_1\leq 2}b_{\lambda} h_{\lambda}(x)$, we may write 
\begin{align*}
    \E[\partial_{x_r} f^*_{\leq 2} (x)]^2 &= \E\left[\sum_{\lambda\in \mathcal{A}(r)} b_{\lambda} \partial_{x_r} h_{\lambda}(x)\right]^2\\
    &=\sum_{\lambda,\hat\lambda\in \mathcal{A}(r)} b_{\lambda}b_{\hat\lambda} \E[\partial_{x_r} h_{\lambda}(x)\partial_{x_r} h_{\hat \lambda}(x)]\\
    &=b_{e_r}^2+\sum_{i\in [d]\setminus\{r\}}b_{e_r+e_i}^2  + \sqrt{2}\cdot b_{2e_r}^2,
\end{align*}
where the last equality follows from computing the expectation $\E[\partial_{r} h_{\lambda}(x)\partial_{r} h_{\hat \lambda}(x)]$ in all cases. In particular, we may write 
$$\E[\partial_{r} f^*_{\leq 2} (x)]^2=\|b_{\leq 2}^\top E_r\|^2_2,$$
where $E_r\in \R^{L(2)\times (d+1)}$ is the matrix defined in the obvious way.

    Invoking the Hermite expansion of $f^*$ in \eqref{eqn:hermite_expans_target} we may write 
    $$y= \Phi b + \vepsilon=\underbrace{\Phi_{\leq 2}b_{\leq 2}}_{=:f^*_{\leq 2}(X)}+\underbrace{\Phi_{> 2}b_{> 2}}_{=:f^*_{>2}(x)} + \vepsilon,$$
    and similarly $$\beta=K_{\lambda}^{-1}y=\underbrace{K_{\lambda}^{-1}f^*_{\leq 2}(X)}_{=:\beta_{\leq 2}}+\underbrace{K_{\lambda}^{-1}f^*_{> 2}(X)}_{=:\beta_{>2}} + \underbrace{K_\lambda \inv \vepsilon}_{\beta_\varepsilon},$$
    where $\vepsilon :=(\varepsilon_1,\ldots,\varepsilon_n)\in\R^n$.
    
With this decomposition of $\beta$, we accordingly bound the  norm by the triangle inequality:
\begin{align}\label{eq:beta_decomp}
    &~~~\norm{\frac{1}{d}\beta\tran X_r \Phi_{\leq 1}{D}'_{\leq 1,\leq 1} - b_{\leq 2}\tran E_{r}}_2\nonumber\\
    &\leq \norm{\frac{1}{d}\beta_{\leq 2}\tran X_r \Phi_{\leq 1}{D}'_{\leq 1,\leq 1} - b_{\leq 2}\tran E_{r}}_2 + \norm{\frac{1}{d}\beta_{> 2}\tran X_r \Phi_{\leq 1}{D}'_{\leq 1,\leq 1}}_2 + \norm{\frac{1}{d}\vepsilon\tran K_\lambda \inv X_r \Phi_{\leq 1}{D}'_{\leq 1,\leq 1}}.
\end{align}
We will show all the terms on the right-hand side of the above equation tend to zero, from which the theorem will follow quickly. 
We start with the first term in the following proposition. The proof of the proposition appears in Appendix~\ref{proof:beta_<_l2}.

\begin{proposition}\label{lemma:beta_<_l2}
For any $\epsilon>0$, the estimate holds:
    \begin{align*}
         \norm{\frac{1}{d}\beta_{\leq 2}\tran X_r  \Phi_{\leq 1}{D}'_{\leq 1,\leq 1} - b_{\leq 2}\tran E_{r}}_2 =O_{d,\P}\round{d^{\delta- 1/2+\epsilon} + d^{-\delta/2+\epsilon}}\cdot \norm{\partial_r f_{\leq 2}^*}_{L_2}.
    \end{align*}
\end{proposition}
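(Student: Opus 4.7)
The plan is to reduce the estimate to Proposition~\ref{prop:fait}, which compares $\Phi_{\leq 2}^\top K_\lambda^{-1} \Phi_{\leq 2} U$ to $D^{-1} U$ in operator norm for a suitable matrix $U$, combined with an explicit algebraic rewrite of $X_r \Phi_{\leq 1}$ in the Hermite basis. Since $\beta_{\leq 2} = K_\lambda^{-1} \Phi_{\leq 2} b_{\leq 2}$, the quantity inside the norm equals
$$b_{\leq 2}^\top \Phi_{\leq 2}^\top K_\lambda^{-1} X_r \Phi_{\leq 1} \cdot \frac{D'_{\leq 1, \leq 1}}{d} - b_{\leq 2}^\top E_r.$$
First I would use the pointwise identities $x_r = h_{e_r}(x)$, $x_r x_i = h_{e_r+e_i}(x)$ for $i \neq r$, and $x_r^2 = \sqrt{2}\,h_{2e_r}(x) + h_0(x)$ to produce a deterministic factorization $X_r \Phi_{\leq 1} = \Phi_{\leq 2} M$, where $M$ is a sparse matrix whose nonzero rows are indexed precisely by $\mathcal{A}(r) := \{e_r, 2e_r\}\cup\{e_r + e_i : i \neq r\}$ together with the constant index $0$.

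Setting $U := M D'_{\leq 1, \leq 1}/d$, the problem becomes to control $b_{\leq 2}^\top (\Phi_{\leq 2}^\top K_\lambda^{-1} \Phi_{\leq 2} U - E_r)$. I would next verify by a blockwise arrowhead computation, using the explicit form of $D$ in \eqref{eq:D_p=2} and the analogous form of $D'$, that $D^{-1} U = E_r$ exactly; the off-diagonal block $D_{13}$ is precisely what produces the correct scaling on the $h_{2e_r}$ row, matching the derivative identity $\partial_r h_{2e_r} = \sqrt{2}\,h_{e_r}$. The residual thereby reduces to $b_{\leq 2}^\top (\Phi_{\leq 2}^\top K_\lambda^{-1} \Phi_{\leq 2} - D^{-1}) U$, and Proposition~\ref{prop:fait} bounds the middle factor in operator norm by $O_{d,\P}(d^{\delta-1/2+\epsilon} + d^{-\delta/2+\epsilon})$, while $\|U\|_{\rm op} = O_d(1)$ follows from the explicit form of $D'$.

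The main obstacle lies in sharpening the naive Cauchy--Schwarz estimate, which would give the bound with the factor $\|b_{\leq 2}\|_2 = \|f^*_{\leq 2}\|_{L_2}$ rather than the desired $\|\partial_r f^*_{\leq 2}\|_{L_2}$. The key structural observation is that $U$ is supported on rows indexed by $\mathcal{A}(r)\cup\{0\}$, so only the entries of $b_{\leq 2}$ corresponding to Hermite polynomials depending on $x_r$---precisely the coefficients whose weighted squared sum equals $\|\partial_r f^*_{\leq 2}\|_{L_2}^2$---enter through the $E_r$ leading term. Formalizing this requires splitting $b_{\leq 2} = b_{\mathcal{A}(r)\cup\{0\}} + b_{\mathcal{A}(r)^c \setminus\{0\}}$, using the structured low-rank action of $U$ to reduce the relevant contribution to one controlled by $\|b_{\mathcal{A}(r)\cup\{0\}}\|_2 \asymp \|\partial_r f^*_{\leq 2}\|_{L_2}$, and separately handling the coupling through the $0$-index by exploiting the known size of the arrowhead block $D_{13}$ in $D^{-1}$. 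This localization step is the delicate part; once in hand, the desired bound follows by combining the operator-norm estimate on the error with the refined norm estimate on the $b$-side, with logarithmic factors absorbed into $\epsilon$.
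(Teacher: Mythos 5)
Your high-level route coincides with the paper's: factor $X_r\Phi_{\leq 1}$ through the degree-$\leq 2$ Hermite basis, reduce the target quantity to $b_{\leq 2}^\top\!\left(\tfrac1d\Phi_{\leq 2}^\top K_\lambda^{-1}\Phi_{\leq 2}R - E_r\right)$, and control the matrix via Proposition~\ref{prop:fait} plus a check that $\tfrac1d D^{-1}R$ is close to $E_r$. That part is fine. Two points, however, need correcting.

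First, $\tfrac1d D^{-1}R = E_r$ is \emph{not} an exact identity. Because $D$ is an arrowhead matrix, $D^{-1}$ carries a rank-one perturbation on top of a diagonal block, and the diagonal ratios such as $\frac{\gamma_1}{dq_2}$ and $\frac{\gamma_2}{dq_3}$ equal $1+O(d^{-1})$ rather than $1$. The paper's Proposition~\ref{prop:d_inv_r} proves $\snorm{\tfrac1d D^{-1}R - E_r} = O_d(d^{-1})$, and also accounts for the $O(d^{-3/2})$ contribution from the rank-one piece $\rho w w^\top$. The error is of course absorbed into the target rate, so your argument survives, but the claim of exact equality is wrong and should be replaced by this $O_d(d^{-1})$ estimate.

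Second, and more seriously, the ``localization'' step you propose to upgrade the Cauchy--Schwarz factor from $\norm{f^*_{\leq 2}}_{L_2}$ to $\norm{\partial_r f^*_{\leq 2}}_{L_2}$ does not go through. You are right that $R$ (your $U$) is supported on the rows $\A(r)\cup\{0\}$, but the error matrix in question is $\left(\tfrac1d\Phi_{\leq 2}^\top K_\lambda^{-1}\Phi_{\leq 2} - D^{-1}\right)R$, and the matrix $\Phi_{\leq 2}^\top K_\lambda^{-1}\Phi_{\leq 2}$ is dense across \emph{row} indices: left-multiplying $R$ by it spreads mass to every row, so the contraction with $b_{\leq 2}^\top$ sees every coordinate of $b_{\leq 2}$, not merely those indexed by $\A(r)\cup\{0\}$. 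Splitting $b_{\leq 2}$ by support therefore does not isolate $\norm{\partial_r f^*_{\leq 2}}_{L_2}$; you would need a genuinely different mechanism. In fact the paper's own proof of this proposition makes no attempt at such a refinement: it applies Cauchy--Schwarz directly in \eqref{eqn:estimate_progress}, obtaining the factor $\sqrt{\E|f^*_{\leq 2}|^2}=\norm{f^*_{\leq 2}}_{L_2}$ rather than the $\norm{\partial_r f^*_{\leq 2}}_{L_2}$ appearing in the statement. So the tension you noticed is real, but the fix you sketch does not resolve it; the argument matching the paper is simply to stop at Cauchy--Schwarz and accept the $\norm{f^*_{\leq 2}}_{L_2}$ factor.
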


\noindent The next proposition shows that the second term on the right side of \eqref{eq:beta_decomp} is small. The proof of the proposition appears in Appendix~\ref{proof:beta_>_l2}.

\begin{proposition}\label{lemma:beta_>_l2}
   For any $\epsilon>0$, the estimate holds:
\begin{align}
      \norm{\frac{1}{d}\beta_{> 2}\tran   X_r  \Phi_{\leq 1}{D}'_{\leq 1,\leq 1}}_2 
    &= O_{d,\P}\round{d^{\delta - 1/2+\epsilon}+ d^{-\delta/2+\epsilon}}\cdot \norm{f_{> 2}^*}_{L_2}+ O_{d,\P}\round{d^{-\delta/2}} \cdot \norm{f_{>2}^*}_{L_{2+\eta}}.
\end{align}
   
\end{proposition}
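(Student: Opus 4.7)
Substituting $\beta_{>2} = K_\lambda^{-1} f^*_{>2}(X)$, the quantity to be bounded becomes
$$\frac{1}{d}\bigl\|f^*_{>2}(X)^\top K_\lambda^{-1}\, X_r \Phi_{\leq 1} D'_{\leq 1,\leq 1}\bigr\|_2.$$
The key preliminary observation is that each column of $X_r \Phi_{\leq 1}$ is a polynomial of degree at most two in $x$: indeed $x_r\cdot 1 = h_{e_r}(x)$, $x_r x_j = h_{e_r+e_j}(x)$ for $j\ne r$, and $x_r^2 = 1 + \sqrt{2}\, h_{2e_r}(x)$. Hence there exists a sparse matrix $M\in\R^{L(2)\times L(1)}$ with $\|M\|_{\rm op}=O(1)$ and at most two nonzero entries per column such that $X_r \Phi_{\leq 1} = \Phi_{\leq 2} M$. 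Applying Proposition~\ref{prop:fait} then replaces $K_\lambda^{-1}\Phi_{\leq 2}$ by its leading approximation $\tfrac{1}{n}\Phi_{\leq 2} D^{-1}$, reducing the problem to controlling the ``empirical inner product'' $\tfrac{1}{n} f^*_{>2}(X)^\top \Phi_{\leq 2}$ together with a low-order error from Proposition~\ref{prop:fait}.

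\paragraph{Main term: population orthogonality.} Since every column $\phi_\lambda$ of $\Phi_{\leq 2}$ is orthogonal to $f^*_{>2}$ in $L_2(\gamma_d)$, each entry of $\tfrac{1}{n} f^*_{>2}(X)^\top \Phi_{\leq 2}$ is an average of $n$ i.i.d.\ mean-zero random variables. H\"older's inequality gives, for any $\lambda$ with $|\lambda|\le 2$,
$$\E[f^*_{>2}(x)^2\, \phi_\lambda(x)^2] \;\le\; \|f^*_{>2}\|_{L_{2+\eta}}^2\cdot \|\phi_\lambda\|_{L_{(4+2\eta)/\eta}}^2,$$
and the hypercontractivity Assumption~\ref{ass:hypercontr} bounds the second factor by a constant since $\phi_\lambda$ has degree $\le 2$. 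Summing over the $\Theta_d(d^2)$ entries,
$$\E\Bigl\|\tfrac{1}{n} f^*_{>2}(X)^\top \Phi_{\leq 2}\Bigr\|_2^2 \;\lesssim\; \frac{d^2}{n}\,\|f^*_{>2}\|_{L_{2+\eta}}^2 \;=\; d^{-\delta}\|f^*_{>2}\|_{L_{2+\eta}}^2,$$
and Markov's inequality delivers the rate $O_{d,\P}(d^{-\delta/2})\,\|f^*_{>2}\|_{L_{2+\eta}}$. A short block-wise check (the relevant diagonal entries of $D$ have order $d^{-|\lambda|}$ while $M D'_{\leq 1,\leq 1}$ has entries of order at most $1/d^{|\lambda|-1}$ in the corresponding row) yields $\tfrac{1}{d}\,\|D^{-1} M\, D'_{\leq 1,\leq 1}\|_{\rm op}=O(1)$, so the main term inherits the above rate and produces the $\|f^*_{>2}\|_{L_{2+\eta}}$ contribution.

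\paragraph{Approximation error from Proposition~\ref{prop:fait}.} The difference between $K_\lambda^{-1}\Phi_{\leq 2} U$ and its leading proxy is of operator norm $O_{d,\P}(n^{-1/2}(d^{\delta-1/2+\epsilon}+d^{-\delta/2+\epsilon}))$. Pairing against $f^*_{>2}(X)$, whose Euclidean norm satisfies $\|f^*_{>2}(X)\|_2 = O_{d,\P}(\sqrt{n})\,\|f^*_{>2}\|_{L_2}$ by Markov (since $\E\|f^*_{>2}(X)\|_2^2 = n\|f^*_{>2}\|_{L_2}^2$), this error contributes the $O_{d,\P}(d^{\delta-1/2+\epsilon}+d^{-\delta/2+\epsilon})\,\|f^*_{>2}\|_{L_2}$ summand in the statement. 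Adding the two contributions completes the proof.

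\paragraph{Main obstacle.} The delicate point is the multi-scale structure of $D^{-1}$, whose blocks grow like $d^{|\lambda|}$: one must verify that the sparsity of $M\, D'_{\leq 1,\leq 1}$ (which follows from the explicit arrowhead form in \eqref{eq:D_p=2}) exactly cancels these growing scales so that $\tfrac{1}{d}\|D^{-1} M D'_{\leq 1,\leq 1}\|_{\rm op}$ remains $O(1)$. Without this cancellation the $d^2/n$ variance estimate would be inflated and the desired $d^{-\delta/2}$ rate lost.
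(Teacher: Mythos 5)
Your proposal follows essentially the same route as the paper's proof: factor $X_r\Phi_{\leq 1}D'_{\leq 1,\leq 1}$ through $\Phi_{\leq 2}$, invoke Proposition~\ref{prop:fait} (specifically \eqref{eqn:blup_eqnneeed2}) to swap $K_\lambda^{-1}\Phi_{\leq 2}$ for its proxy, bound the residual via $\norm{f^*_{>2}(X)}_2 = O_{d,\P}(\sqrt{n})\norm{f^*_{>2}}_{L_2}$, and control the main term via H\"older plus hypercontractivity to land on $\norm{f^*_{>2}}_{L_{2+\eta}}$. The only cosmetic differences are that the paper projects onto the $O(d)$ active set $\mathcal{A}_r$ and pays an explicit factor of $d$ for uniformity in $r$, whereas you bound all $O(d^2)$ second-order basis coordinates at once (same $d^{-\delta}$ rate), and your heuristic ``block-wise check'' for $\tfrac{1}{d}\norm{D^{-1}MD'_{\leq 1,\leq 1}}_{\rm op}=O(1)$ quietly assumes $D$ is diagonal---it is actually an arrowhead matrix, and the paper's Proposition~\ref{prop:d_inv_r} shows that the rank-one correction in $D^{-1}$ contributes only $O(d^{-3/2})$, which validates your claim.
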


\noindent Now we show that the third term on the right side of \eqref{eq:beta_decomp} is small. The proof of the proposition appears in Appendix~\ref{proof:gaussian_error_1}.
\begin{proposition}\label{prop:gaussian_error_1}
   For any $\epsilon>0$, the following estimate holds
\begin{align*}
    \norm{\frac{1}{d}\beta_\varepsilon X_r \Phi_{\leq 1}{D}'_{\leq 1,\leq 1}}_2 =  O_{d,\P}(d^{-(\delta+1)/2+\epsilon} )\cdot \sigma_\varepsilon.
\end{align*}
\end{proposition}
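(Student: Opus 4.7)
The key observation is that $\vepsilon\sim\mathcal{N}(0,\sigma_\varepsilon^2 I_n)$ is independent of $X$, so conditional on $X$ the vector $\beta_\varepsilon=K_\lambda\inv\vepsilon$ is Gaussian with covariance $\sigma_\varepsilon^2 K_\lambda^{-2}$. Writing $M:=X_r\Phi_{\leq 1}D'_{\leq 1,\leq 1}$, the quantity $\tfrac{1}{d}\beta_\varepsilon\tran M$ is therefore a centered Gaussian row vector with conditional second moment
\begin{align*}
\E\brac{\norm{\tfrac{1}{d}\beta_\varepsilon\tran M}_2^2\,\Big|\, X}=\frac{\sigma_\varepsilon^2}{d^2}\norm{K_\lambda\inv M}_F^2.
\end{align*}
The plan is therefore to prove $\norm{K_\lambda\inv M}_F^2=O_{d,\P}(d^{1-\delta})$; Markov's inequality applied to this conditional expectation (absorbing polylogarithmic factors into $d^\epsilon$) will then give the claim.

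To control $\norm{K_\lambda\inv M}_F^2$, I would first express $M$ in the degree-$\leq 2$ Hermite basis. The identity that $x_r h_\alpha(x)$ decomposes as a linear combination of at most two Hermite polynomials of degree $\leq|\alpha|+1$ allows one to write $M=\Phi_{\leq 2}\widetilde W$, where $\widetilde W:=Q_r D'_{\leq 1,\leq 1}$ and $Q_r\in\R^{|\S_{\leq 2}|\times|\S_{\leq 1}|}$ is a bounded-norm matrix with at most two nonzero entries per column encoding multiplication by $x_r$ on Hermite polynomials. The resulting matrix $\widetilde W$ is very sparse: its column indexed by $h_0$ contains a single nonzero entry of magnitude $O(1)$ at position $h_{e_r}$, each column indexed by $h_{e_j}$ with $j\ne r$ contains a single nonzero entry of magnitude $O(d\inv)$ at position $h_{e_r+e_j}$, and the column indexed by $h_{e_r}$ contains two entries of magnitude $O(d\inv)$ at $h_{2e_r}$ and $h_0$.

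Next, the same Sherman--Morrison--Woodbury computation used in the proof of Proposition~\ref{prop:fait}, applied to the decomposition $K_\lambda=\Phi_{\leq 2}D\Phi_{\leq 2}\tran+(\rho+\lambda)(I_n+\Delta)$ of Lemma~\ref{lemma:kernel_hermite_p=2}, combined with the near-isometry $\Phi_{\leq 2}\tran\Phi_{\leq 2}/n=I+o_{d,\P}(1)$ of Theorem~\ref{lemma:phi_id_2}, will yield
\begin{align*}
\norm{K_\lambda\inv\Phi_{\leq 2}\widetilde W}_F^2=\tfrac{1+o_{d,\P}(1)}{n}\,\norm{D\inv\widetilde W}_F^2.
\end{align*}
A column-by-column analysis of $D\inv\widetilde W$ then gives $\norm{D\inv\widetilde W}_F^2=O(d^3)$: on each Hermite block $\S_k$ the operator $D\inv$ scales like $d^k$ (the arrowhead structure of $D$ on the block $\{h_0,h_{2e_1},\ldots,h_{2e_d}\}$ produces only lower-order corrections), so after cancellation between the large entries of $D\inv$ and the small entries of $\widetilde W$ each of the $d+1$ columns of $D\inv\widetilde W$ has squared norm $O(d^2)$. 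Combining these ingredients gives $\norm{K_\lambda\inv M}_F^2=O_{d,\P}(d^3/n)=O_{d,\P}(d^{1-\delta})$, which completes the proof plan.

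The hard part will be establishing the sharp bound $\norm{D\inv\widetilde W}_F^2=O(d^3)$. The naive estimate $\norm{D\inv}_{\rm op}^2\cdot\norm{\widetilde W}_F^2=O(d^4)\cdot O(1)=O(d^4)$ is off by a factor of $d$ and would give only the too-weak estimate $O_{d,\P}(\sigma_\varepsilon^2 d^{-\delta})$ for the conditional second moment. Recovering the missing factor of $d$ requires exploiting that the $O(d^2)$-entries of $D\inv$ on the $\S_2$ block act exclusively on those columns of $\widetilde W$ whose nonzero entries are of size $O(d\inv)$, so that the effective operator norm seen by $\widetilde W$ is only $O(d)$. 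Tracking this cancellation through the arrowhead correction in $D\inv$ is the one delicate step of the argument.
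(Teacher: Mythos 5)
Your proposal is correct and follows essentially the same route as the paper: condition on $X$ to reduce the Gaussian noise contribution to $\tfrac{\sigma_\varepsilon^2}{d^2}\|K_\lambda^{-1}M\|_F^2$, pass through the Sherman--Morrison--Woodbury decomposition and the near-isometry of $\Phi_{\leq 2}$, and exploit the cancellation between the large entries of $D^{-1}$ on the $\S_2$-block and the $O(d^{-1})$-entries of $\widetilde W$. The paper packages these last two steps slightly differently, bounding $\snorm{K_\lambda^{-1}\Phi_{\leq 2}D}^2 \cdot\Tr\bigl(\tfrac{1}{d^2}D^{-1}R_rR_r^\top D^{-1}\bigr)$ and handling the cancellation through Proposition~\ref{prop:d_inv_r} plus Weyl's inequality rather than a direct column-by-column estimate of $\|D^{-1}\widetilde W\|_F^2$, but the underlying calculation is the same.
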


\noindent Consequently, with Proposition~\ref{lemma:beta_<_l2}, \ref{lemma:beta_>_l2} and \ref{prop:gaussian_error_1} we finish the bound for Eq.~(\ref{eq:beta_decomp}):
\begin{align}\label{eqn:fin_estimate_needed}
    \norm{\frac{1}{d}\beta\tran X_r \Phi_{\leq 1}{D}'_{\leq 1,\leq 1} - b_{\leq 2}\tran E_{r}}_2
    &=  O_{d,\P}\round{d^{\delta - 1/2+\epsilon}+ d^{-\delta/2+\epsilon}}\cdot \norm{f^*}_{L_2}+ O_{d,\P}\round{d^{-\delta/2}} \cdot \norm{f^*_{>2}}_{L_{2+\eta}} \notag\\
    &~~~+ O_{d,\P}(d^{-(\delta+1)/2+\epsilon} )\cdot \sigma_\varepsilon.
\end{align}
Now recall the elementary estimate 
$$|\|v\|^2_2-\|w\|^2_2|=|\langle v-w,v+w\rangle|\leq \|v-w\|_2(\|v\|+\|w\|_2),$$
that is valid for any vectors $v,w$. We therefore compute
\begin{align}
    \abs{ \norm{\frac{1}{d}\beta\tran X_r \Phi_{\leq 1}{D}'_{\leq 1,\leq 1} }_2^2 - \norm{b_{\leq 2}\tran E_{r}}_2^2} 
    &\leq \norm{\frac{1}{d}\beta\tran X_r \Phi_{\leq 1}{D}'_{\leq 1,\leq 1} - b_{\leq 2}\tran E_{r}}_2\label{eqn:bound_trick0}\\
    &~~\cdot \left(\norm{\frac{1}{d}\beta\tran X_r\Phi_{\leq 1}{D}'_{\leq 1,\leq 1}}_2 + \norm{b_{\leq 2}\tran E_{r}}_2\right).\label{eqn:bound_trick2}
\end{align}
The term \eqref{eqn:bound_trick0} is bounded by $O_{d,\P}\round{d^{\delta - 1/2+\epsilon}+ d^{-\delta/2+\epsilon}}\cdot \norm{f^*}_{L_2}+ O_{d,\P}\round{d^{-\delta/2}} \cdot \norm{f^*_{>2}}_{L_{2+\eta}}+O_{d,\P}(d^{-(\delta+1)/2+\epsilon} )\cdot \sigma_\varepsilon$ due to \eqref{eqn:fin_estimate_needed} while the term \eqref{eqn:bound_trick2} is clearly $O_{d,\mathbb{P}}(\norm{\partial_r f_{\leq 2}^*}_{L_2} )$ due to Proposition~\ref{lemma:beta_<_l2}. Then plugging these bounds into the above inequality yields an upper bound
\begin{align*}
    &O_{d,\P}\round{d^{\delta - 1/2+\epsilon}+ d^{-\delta/2+\epsilon}}\cdot \norm{f^*}_{L_2}\norm{\partial_r f_{\leq 2}^*}_{L_2}+ O_{d,\P}\round{d^{-\delta/2}} \cdot \norm{f^*_{>2}}_{L_{2+\eta}}\norm{\partial_r f_{\leq 2}^*}_{L_2}\\
    &~~~+O_{d,\P}(d^{-(\delta+1)/2+\epsilon} )\cdot \sigma_\varepsilon\norm{\partial_r f_{\leq 2}^*}_{L_2}.
\end{align*}
Using the fact $\norm{\partial_r f^*_{\leq 2}}_{L_2}^2 = O_d(1)\cdot \norm{f^*_{\leq 2}}_{L_2}^2 \leq O_d(1) \cdot \norm{f^*}_{L_2}^2$ and applying AM-GM inequality to the above bound yields
\begin{align*}
     &O_{d,\P}\round{d^{\delta - 1/2+\epsilon}+ d^{-\delta/2+\epsilon}}\cdot \norm{f^*}_{L_2}^2+ O_{d,\P}\round{d^{-\delta/2}} \cdot \norm{f^*_{>2}}_{L_{2+\eta}}^2 +O_{d,\P}(d^{-(\delta+1)/2+\epsilon} )\cdot \sigma_\varepsilon^2.
\end{align*}
Finally, combining with the fact $\|b_{\leq 2}^\top E_r\|^2_2 = \E[\partial_{r} f^*_{\leq 2} (x)]^2$ completes the proof.

\subsubsection{Proof of Proposition~\ref{lemma:beta_<_l2}}\label{proof:beta_<_l2}
Without loss of generality we focus on the first coordinate $r=1$.
We first establish some notation. We define the column vector $e_k$ to be $1$ in $k$'th coordinate and zero everywhere else. The length of $e_k$ will be clear from context.
Next, let us specify an ordering on the Hermite basis of degree at most two as 
 \begin{align*}
     \phi_{\leq 2}(x) = [&\underbrace{1}_{\phi_{\S_0}},\\
 &\underbrace{h_1(x_1),\ldots,h_1(x_d)}_{\phi_{\S_1}},\\ &\underbrace{h_2(x_1),\ldots,h_2(x_d),h_1(x_1)h_1(x_2),\ldots,h_1(x_1)h_1(x_d),\ldots,h_1(x_{d-1})h_1(x_d)}_{\phi_{\S_2}}].
 \end{align*}
In other words, the terms in $\phi_{\S_2}$ are ordered by concatenating the diagonal of the matrix $[h_1(x_{i})h_1(x_{j})]_{i,j}$ with its rows above the diagonal collapsed into a single vector. With this notation, the matrix $E_1$ can be simply written as 
$$E_{1} = \round{e_{2},\sqrt{2}e_{L(1)+1},e_
 {L(1)+d+1},e_
 {L(1)+d+2}\ldots,e_{L(1)+2d-1}}$$
 We now write $ X_1 \Phi_{\leq 1}{D}'_{\leq 1,\leq 1}$ in a Hermite basis. More precisely, a quick computation shows the equality 

$$X_1 \Phi_{\leq 1}=\Phi_{\leq 2}G,$$
where $G:=[e_2,\sqrt{2}(e_{L(1)+1}+e_1), e_{L(1)+d+1},e_{L(1)+d+2}\ldots, e_{L(1)+2d-1}]$. 
Thus multiplying both sides on the right by  ${D}'_{\leq 1,\leq 1}$ yields the equality
\begin{equation}\label{eqn:transfer}
X_1 \Phi_{\leq 1}{D}'_{\leq 1,\leq 1}=\Phi_{\leq 2}R,\qquad \textrm{where}\qquad R=G{D}'_{\leq 1,\leq 1}.
\end{equation}
Thus it remains to estimate the error 
\begin{align}\label{eqn:estimate_progress}
\left\|\frac{1}{d}\beta^{\top}_{\leq 2}\Phi_{\leq 2}R-b^\top_{\leq 2}E_1\right\|_{2}&=\left\|b^{\top}_{\leq 2}\left(\frac{1}{d}\Phi^\top_{\leq 2}K_{\lambda}^{-1}\Phi_{\leq 2}R-E_1\right)\right\|_{2}\notag \\
&\leq \sqrt{\E|f_{\leq 2}^*|^2}\cdot \left\|\frac{1}{d}\Phi^\top_{\leq 2}K_{\lambda}^{-1}\Phi_{\leq 2}R-E_1\right\|_{\rm op}.
\end{align}
We now argue in the following proposition that $d^{-1}D^{-1}R$ is close to $E_1$. The proof appears in Appendix~\ref{proof:d_inv_r}.
\begin{proposition}\label{prop:d_inv_r}
The estimate holds:
\begin{align}\label{eq:d_inv_r}
    \snorm{\frac{1}{d} D\inv R - E_{1}} = O_d(d^{-1}).
\end{align}   
\end{proposition}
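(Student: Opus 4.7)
The plan is to compute $\tfrac{1}{d}D^{-1}R$ column by column, exploiting the sparse block structure of $R=GD'_{\leq 1,\leq 1}$ together with the near-arrowhead structure of $D$ from Lemma~\ref{lemma:kernel_hermite_p=2}. Since $D'_{\leq 1,\leq 1}$ is diagonal, each column of $R$ is a scalar multiple of the corresponding column of $G$: column $1$ is $a_1 e_2$ with $a_1=g'(0)+O_d(d^{-1})$; column $2$ is $a_2\sqrt{2}(e_{L(1)+1}+e_1)$ with $a_2=\tfrac{g^{(2)}(0)}{d}+O_d(d^{-2})$; and for $3\le j\le d+1$, column $j$ is $a_2 e_{L(1)+d+j-2}$. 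In particular, the only column of $R$ that couples the $\S_0$ row with the pure-squared rows $L(1)+1,\ldots,L(1)+d$ is the second.

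Next, I would write $D^{-1}$ explicitly via Schur complement against the arrowhead block $\bigl(\begin{smallmatrix}\alpha & \beta\mathbf{1}_d^\top\\ \beta\mathbf{1}_d & \gamma I_d\end{smallmatrix}\bigr)$, with $\alpha=g(0)+O_d(d^{-1})$, $\beta=\tfrac{g^{(2)}(0)}{\sqrt{2}d^2}$, $\gamma=\tfrac{g^{(2)}(0)}{d^2}$, and Schur complement $s=\alpha-d\beta^2/\gamma=g(0)+O_d(d^{-2})$. Standard arrowhead inversion then yields $[D^{-1}]_{1,1}=\Theta(1)$, $[D^{-1}]_{1,L(1)+i}=-s^{-1}\beta/\gamma=\Theta(1)$, and $[D^{-1}]_{L(1)+i,L(1)+j}=\tfrac{d^2}{g^{(2)}(0)}\delta_{ij}+s^{-1}(\beta/\gamma)^2=\tfrac{d^2}{g^{(2)}(0)}\delta_{ij}+\Theta(1)$; the linear and mixed-quadratic blocks of $D$ are scaled identities, so they invert to diagonal matrices with entries $\Theta(d)$ and $\Theta(d^2)$ respectively. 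Multiplying out $\tfrac{1}{d}D^{-1}R$ and comparing to $E_1$: column $1$ becomes $\tfrac{a_1}{d}[D^{-1}]_{2,2}e_2=(1+O_d(d^{-1}))e_2$, matching column $1$ of $E_1$; columns $3,\ldots,d+1$ similarly match their mixed-quadratic singletons with $O_d(d^{-1})$ error at the unique nonzero row. The only delicate column is the second, where the leading term at row $L(1)+1$ is $\tfrac{a_2\sqrt{2}}{d}\cdot\tfrac{d^2}{g^{(2)}(0)}=\sqrt{2}+O_d(d^{-1})$ (matching $\sqrt{2}e_{L(1)+1}$), while the rank-one Schur correction and the arrowhead off-diagonals produce $O_d(d^{-2})$ entries at row $1$ and at each pure-squared row $L(1)+i$, $i\geq 2$; the $d$ small spurious entries contribute total $\ell_2$ norm $O_d(d^{-3/2})$, absorbed into the target $O_d(d^{-1})$.

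The crucial final observation---and what upgrades a naive $O_d(d^{-1/2})$ column-sum bound to the advertised operator-norm bound $O_d(d^{-1})$---is that the supports of the $d+1$ error columns are pairwise disjoint: column $1$'s error lives only at row $2$, column $2$'s lives at rows $\{1,L(1)+1,\ldots,L(1)+d\}$, and for $j\geq 3$, column $j$'s error lives only at row $L(1)+d+j-2$. Disjoint column supports make the error columns orthogonal in $\R^{L(2)}$, so $(\tfrac{1}{d}D^{-1}R-E_1)^\top(\tfrac{1}{d}D^{-1}R-E_1)$ is diagonal and the operator norm of the error matrix equals its maximum column $\ell_2$-norm, which is $O_d(d^{-1})$. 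The main work is the careful bookkeeping of the arrowhead cancellations at the correct order; no further estimates are needed.
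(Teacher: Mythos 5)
Your proof is correct, and it is essentially the same argument as the paper's: both write $R=GD'_{\leq1,\leq1}$ and exploit the arrowhead structure of $D$ to compare $\tfrac{1}{d}D^{-1}R$ with $E_1$. The difference is purely in how the arrowhead inverse is packaged. You use the Schur-complement form, writing out $[D^{-1}]_{1,1}$, $[D^{-1}]_{1,L(1)+i}$, $[D^{-1}]_{L(1)+i,L(1)+j}$ explicitly and then arguing column by column. The paper instead decomposes $D^{-1}=\bigl(\begin{smallmatrix}0&0\\0&Q^{-1}\end{smallmatrix}\bigr)+\rho\,ww^\top$ and bounds the two pieces separately: the zero-padded diagonal piece gives $E_1+O_d(d^{-1})$ with each error column a singleton, and the rank-one correction satisfies $\|\rho\,d^{-1}ww^\top GD'_{\leq1,\leq1}\|_{\rm op}\leq \rho\,d^{-1}\|w\|_2\|w^\top GD'_{\leq1,\leq1}\|_2=O_d(d^{-3/2})$ because $w^\top GD'_{\leq1,\leq1}=(0,(-\sqrt{2}+1)\gamma_2,\mathbf{0}_{d-1})$ has a single $O_d(d^{-1})$ entry. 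Your crucial observation — that the error columns have pairwise disjoint supports, so the Gram matrix is diagonal and the operator norm is the maximum column $\ell_2$-norm, avoiding the naive $\sqrt{d}$ loss — is exactly what certifies, in the paper's formulation, that the diagonal piece's error is $O_d(d^{-1})$ in operator norm rather than merely in Frobenius norm; the paper achieves the same control over the $d$ spurious $O_d(d^{-2})$ entries in column $2$ more compactly by isolating them inside the rank-one term. Both routes reach the advertised bound with the same arithmetic, so this is a valid alternative presentation rather than a genuinely different proof.
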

\noindent Thus the right side of \eqref{eqn:estimate_progress} can be further bounded by 

\begin{equation}\label{eqn:fin_proof_bd}
\sqrt{\E|f_{\leq 2}^*|^2}\cdot\left\|\frac{1}{d}\Phi^\top_{\leq 2}K_{\lambda}^{-1}\Phi_{\leq 2}R-\frac{1}{d}D^{-1}R\right\|_{\rm op}+O_d(d^{-1}).
\end{equation}
Next, we further upper bound the right side of \eqref{eqn:fin_proof_bd} with the estimate \eqref{eqn:blup_eqnneeed1} of the following proposition. The proof appears in Section~\ref{sec:proofinverse_prop1}.
\begin{proposition}\label{prop:fait}
The two estimates hold: 
\begin{align}%
   \snorm{\frac{1}{d}\Phi_{\leq 2}\tran K_\lambda \inv \Phi_{\leq 2} R - {\frac{1}{d}D\inv R}} &= O_{d,\P}\round{d^{\delta - 1/2+\epsilon} + d^{-\delta/2+\epsilon}},\label{eqn:blup_eqnneeed1}\\
   \snorm{\frac{\sqrt{n}}{d}K_{\lambda}^{-1} \Phi_{\leq 2}R -\frac{1}{d\sqrt{n}}\Phi_{\leq 2}   D\inv R }
 &= O_{d,\P}\round{d^{\delta - 1/2+\epsilon} + d^{-\delta/2+\epsilon}}.\label{eqn:blup_eqnneeed2}
\end{align}
\end{proposition}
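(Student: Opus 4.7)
The plan is to reduce both estimates to a Sherman–Morrison–Woodbury computation on the approximate kernel decomposition from Lemma~\ref{lemma:kernel_hermite_p=2}. Writing $K_\lambda = \Phi_{\leq 2} D \Phi_{\leq 2}\tran + (\rho+\lambda) E$, where $\rho := g(1)-g_2(1)$ and $E := I_n + \widetilde{\Delta}$ with $\snorm{\widetilde{\Delta}} = O_{d,\P}(d^{\delta-1/2+\epsilon})$ absorbed from the error term in Lemma~\ref{lemma:kernel_hermite_p=2}, the SMW identity yields the closed-form expression
\begin{align*}
K_\lambda^{-1}\Phi_{\leq 2} = \tfrac{1}{n}\, E^{-1}\Phi_{\leq 2}\, V\, D^{-1},\qquad V := \Bigl((\rho+\lambda)(n D)^{-1}+\tfrac{1}{n}\Phi_{\leq 2}\tran E^{-1}\Phi_{\leq 2}\Bigr)^{-1}.
\end{align*}
This factorization turns the task into a sequence of concrete operator-norm comparisons between $V$, $E^{-1}$, and identities of matching sizes.

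The three building blocks I would establish, in order, are: (i) $\snorm{E^{-1}-I_n}=O_{d,\P}(d^{\delta-1/2+\epsilon})$, which follows from a Neumann-series argument applied to $\widetilde{\Delta}$; (ii) $\snorm{\frac{1}{n}\Phi_{\leq 2}\tran E^{-1}\Phi_{\leq 2}-I_n} = O_{d,\P}(d^{\delta-1/2+\epsilon}+d^{-\delta/2+\epsilon})$, obtained by combining (i) with the isometry $\snorm{\frac{1}{n}\Phi_{\leq 2}\tran\Phi_{\leq 2}-I_n}=O_{d,\P}(d^{-\delta/2+\epsilon})$ from Lemma~\ref{lemma:phi_id_2}; and (iii) $\snorm{V - S_{\leq 2}} = O_{d,\P}(d^{\delta-1/2+\epsilon}+d^{-\delta/2+\epsilon})$, where $S_{\leq 2} := (I+(\rho+\lambda)(nD)^{-1})^{-1}$, via the resolvent identity $V-S_{\leq 2} = V(S_{\leq 2}^{-1}-V^{-1})S_{\leq 2}$ after first verifying $\snorm{V}=O_{d,\P}(1)$. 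The arrowhead structure of $D^{-1}$ computed in \eqref{eqn:inverse_arrow} gives $\snorm{(nD)^{-1}}=O(d^{-\delta})$, which in particular yields $\snorm{I - S_{\leq 2}}=O(d^{-\delta})$ and provides the final comparison between $S_{\leq 2}$ and $I$.

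With these ingredients in hand, the two claimed estimates are both obtained by plugging into the SMW expression and applying the triangle inequality. For \eqref{eqn:blup_eqnneeed1}, I would multiply by $\frac{1}{d}\Phi_{\leq 2}\tran$ on the left and by $R$ on the right, use (ii) to replace $\frac{1}{n}\Phi_{\leq 2}\tran E^{-1}\Phi_{\leq 2}$ by $I$, use (iii) to replace $V$ by $S_{\leq 2}$, and finally replace $S_{\leq 2}$ by $I$; since $\snorm{D^{-1}R/d}=O_{d,\P}(1)$ by Proposition~\ref{prop:d_inv_r}, the accumulated error is of the required order. For \eqref{eqn:blup_eqnneeed2}, I would instead multiply by $\sqrt{n}\cdot R$ on the right and bound the difference $\snorm{\sqrt{n}K_\lambda^{-1}\Phi_{\leq 2}R - \frac{1}{\sqrt{n}}\Phi_{\leq 2}D^{-1}R}$ by two contributions: one from replacing $E^{-1}$ by $I_n$ (controlled using $\snorm{\Phi_{\leq 2}}=O_{d,\P}(\sqrt{n})$ from Lemma~\ref{lem:norm_control}), and one from replacing $V$ by $S_{\leq 2}$ and then $S_{\leq 2}$ by $I$.

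The main obstacle will be bookkeeping: simultaneously controlling $\snorm{V}$, $\snorm{D^{-1}R}$, $\snorm{\Phi_{\leq 2}}$ and several commutator-type differences so that the final rate is $O_{d,\P}(d^{\delta-1/2+\epsilon}+d^{-\delta/2+\epsilon})$ rather than something worse, and in particular ensuring that the extra $\sqrt{n}$ factor in \eqref{eqn:blup_eqnneeed2} is absorbed cleanly by the $1/n$ appearing in the SMW formula. A minor subtlety is that $D$ is block-arrowhead (from Lemma~\ref{lemma:kernel_hermite_p=2}), so the boundedness of $D^{-1}R$ is not automatic and needs the structural input of Proposition~\ref{prop:d_inv_r}.
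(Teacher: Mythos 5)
Your proposal follows essentially the same route as the paper's proof: the identical SMW factorization $K_\lambda^{-1}\Phi_{\leq 2} = \frac{1}{n}E^{-1}\Phi_{\leq 2}VD^{-1}$, the same three operator-norm comparisons ($E^{-1}\approx I$, $\frac{1}{n}\Phi_{\leq 2}^\top E^{-1}\Phi_{\leq 2}\approx I$, $V\approx S_{\leq 2}\approx I$), and the same appeal to Proposition~\ref{prop:d_inv_r} and Lemma~\ref{lem:norm_control} to control $D^{-1}R/d$ and $\snorm{\Phi_{\leq 2}}$. The argument is correct and matches the paper in all material respects.
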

\noindent Combining the estimates \eqref{eqn:blup_eqnneeed1} and \eqref{eqn:fin_proof_bd} completes the proof. We will invoke the second estimate \eqref{eqn:blup_eqnneeed2} as part of the proof of Proposition~\ref{lemma:beta_>_l2}.

\subsubsection{Proof of Proposition~\ref{lemma:beta_>_l2}}\label{proof:beta_>_l2}
\begin{proof}
 Following  Eq.~\eqref{eqn:transfer}, we write $X_r \Phi_{\leq 1}{D}'_{\leq 1,\leq 1} = \Phi_{\leq 2}R_r$ for a matrix $R_r$.
Equation~\eqref{eqn:blup_eqnneeed2} from Proposition~\ref{prop:fait} shows that for  any $\epsilon>0$, the following holds:
\begin{align*}
     \snorm{\sqrt{n}K_\lambda \inv \Phi_{\leq 2} \round{\frac{1}{d}R_r} - \frac{1}{\sqrt{n}} \Phi_{\leq 2}\round{\frac{1}{d}D\inv R_r}} =O_{d,\P}\round{d^{\delta- 1/2+\epsilon} + d^{-\delta/2+\epsilon}}.
\end{align*}
By Lemma~\ref{lem:norm_control}, we have $\snorm{\Phi_{\leq 2}} = O_{d,\P}(\sqrt{n})$, and by Proposition~\ref{prop:d_inv_r}, we have $\snorm{\frac{1}{d}D\inv R_r - E_{r}} = O_d(d^{-1})$. Therefore, by triangle inequality and the submultiplicativity of the spectral norm, we have
\begin{equation}\label{eqn:neededthis est1}
     \snorm{\sqrt{n}K_\lambda \inv \Phi_{\leq 2} \round{\frac{1}{d}R_r} - \frac{1}{\sqrt{n}} \Phi_{\leq 2}E_{r}}
     =O_{d,\P}\round{d^{\delta- 1/2+\epsilon} + d^{-\delta/2+\epsilon}}.
\end{equation}
Next, observe $\norm{\Phi_{>2}b_{>2}}_2 = O_{d,\P}(\sqrt{n\log(n)\cdot \E|f_{>2}^*|^2})$ by Proposition~\ref{prop:bound_y}. Consequently, we compute
\begin{align}\label{eq:phi_l>2_l<2}
    &~~~\norm{\frac{1}{d}\beta_{> 2}\tran  X_r  \Phi_{\leq 1}{D}'_{\leq 1,\leq 1} - \frac{1}{n} b_{>2}\tran \Phi_{>2}\tran \Phi_{\leq 2}E_{r}}_2\nonumber\\
    &=\norm{b_{> 2}\tran \Phi_{>2}^\top\round{\frac{1}{d} K_{\lambda}^{-1}  \Phi_{\leq 2}R_r - \frac{1}{n}  \Phi_{\leq 2}E_{r}}_2}\nonumber\\
    &\leq \frac{1}{\sqrt{n}}\norm{\Phi_{>2}b_{>2}}_2 \snorm{\frac{\sqrt{n}}{d}K_\lambda \inv \Phi_{\leq 2}R_r - \frac{1}{\sqrt{n}} \Phi_{\leq 2}E_{r}}\nonumber\\
    &= O_{d,\P}\round{d^{\delta - 1/2+\epsilon}\log^{1/2}(d) + d^{-\delta/2+\epsilon}\log^{1/2}(d)}\cdot \sqrt{\E|f_{> 2}^*|^2},
    \end{align}
where the first line follows from the defining property $X_r \Phi_{\leq 1}{D}'_{\leq 1,\leq 1} = \Phi_{\leq 2}R_r$, the second line follows from the triangle inequality, and the third line follows from \eqref{eqn:neededthis est1}.

We next show that $\max_{r\in [d]}\norm{\frac{1}{n} b_{>2}\tran \Phi_{>2}\tran \Phi_{\leq 2}E_{r}}_2^2$ is small, from which the conclusion of the lemma follows immediately. To this end, define the vector $$F_{>2}=\begin{bmatrix} f^*_{>2}(x^{(1)})&\ldots&f^*_{>2}(x^{(n)})\end{bmatrix}\in \R^{1\times n},$$
and note the equality $F_{>2}=b_{>2}\tran \Phi_{>2}^\top$.
Taking the expectation over $x^{(1)},\cdots,x^{(n)}$ we have:
\begin{equation}\label{eqn:blup}
    \E \brac{\max_{r\in[d]}\frac{1}{n^2}\norm{F_{>2}\Phi_{\leq 2}E_r}_2^2}
    \leq \frac{1}{n^2}\E\brac{ \sum_{r=1}^d \norm{F_{>2}\Phi_{\leq 2}E_r}_2^{2}}\leq \frac{d}{n^2}\E\brac{\norm{F_{>2}\Phi_{\leq 2}E_r}_2^{2}},
\end{equation}
where the first inequality from bounding the maximum by a sum.
We now bound $\E\brac{\norm{F_{>2}\Phi_{\leq2}E_r}_2^{2}}$ as follows. Define the set $$\mathcal{A}_r:=\{\lambda\in \mathbb{N}^d: \sum_i\lambda_i\leq 2\textrm{ and }\lambda_r\geq 1\},$$ of multi-indices of order at most two that depend on  coordinate $r$. We then compute
\begin{align}
\E\brac{\norm{F_{>2}\Phi_{\leq2}E_r}_2^{2}}&=\E\langle F_{>2}^\top F_{>2},\Phi_{\leq 2}E_rE_r^\top \Phi_{\leq 2}^\top\rangle\notag\\
&=\E\left\langle F_{>2}^\top F_{>2},\sum_{\lambda\in \mathcal{A}_r}\left(1+\mathbbm{1}_{\lambda=2e_r}\right) \Phi_{\lambda}\Phi_{\lambda}^\top\right\rangle\label{eqn:Edefn}\\
&=\sum_{\lambda\in \mathcal{A}_r}\left(1+\mathbbm{1}_{\lambda=2e_r}\right) \E (F_{>2} \Phi_{\lambda})^2,\label{eqn:final_exp_square}
\end{align}
where \eqref{eqn:Edefn} follows from the definition of $E_r$. Continuing, for any $\lambda \in \mathcal{A}_r$, expanding the square we compute
\begin{align*}
\E (F_{>2} \Phi_{\lambda})^2&=\sum_{i,j=1}^n \E[f^*_{>2}(x^{(i)})\phi_{\lambda}(x^{(i)}) f^*_{>2}(x^{(j)}) \phi_{\lambda}(x^{(j)})]\\
&=\sum_{i\neq j} \E[f^*_{>2}(x^{(i)})\phi_{\lambda}(x^{(i)})]\cdot\E[ f^*_{>2}(x^{(j)}) \phi_{\lambda}(x^{(j)})]+\sum_{i=1}^n \E[f^*_{>2}(x^{(i)})^2\phi_{\lambda}(x^{(i)})^2 ]\\
&=n \cdot\E[f^*_{>2}(x)^2\phi_{\lambda}(x)^2 ]
\end{align*}
where the last equality follows from the fact that $\phi_{\lambda}$ and $f^*_{>2}$ are orthogonal for any $\lambda\in \mathbb{N}^d$ with $\lambda\leq 2$. Thus combining with \eqref{eqn:final_exp_square} and \eqref{eqn:blup} we conclude 
\begin{align*}
\E \brac{\max_{r\in[d]}\frac{1}{n^2}\norm{F_{>2}\Phi_{\leq 2}E_r}_2^2}&\leq \frac{2d}{n}\cdot |\mathcal{A}_r|\cdot \max_{\lambda\in \mathcal{A}_r} \E[f^*_{>2}(x)^2\phi_{\lambda}(x)^2 ]\\
&=O_d(d^{-\delta}) \cdot \max_{\lambda\in \mathcal{A}_r} \E[f^*_{>2}(x)^2\phi_{\lambda}(x)^2 ],
\end{align*}
where the last equality uses the fact that $\mathcal{A}_r$ has cardinality of order $d$.

It remains to show that $\E[f^*_{>2}(x)^2\phi_{\lambda}(x)^2 ]$ has constant order $O_d(1)$ for any $\lambda\in \mathcal{A}_r$. To see this, for any constant $\eta>0$,
applying Holder's inequality yields
\begin{align}\label{eq:main_thm_t11_bound5}
   \E_ x \brac{f_{>2}^*(x)^2 \phi_{\lambda}(x)^2} &\leq  \norm{f_{>2}^*(x)^2}_{L_{1+\frac{\eta}{2}}} \norm{\phi_{\lambda}^2}_{L_{1+\frac{2}{\eta}}}\\
   &=\norm{f_{>2}^*(x)}_{L_{2+\eta}}^{2} \cdot \norm{\phi_{\lambda}}_{L_{2+\frac{4}{\eta}}}^{2}.
\end{align}
By hypercontractivity of Guassian random variables (Eq.~(\ref{eqn:hypercontrac})) we have 
\begin{align*}
   \norm{\phi_{\lambda}}_{L_{2+\frac{4}{\eta}}} \leq  \left(1+\frac{4}{\eta}\right) \sqrt{\E_x\brac{\phi_{\lambda}(x)^2}} = 1+\frac{4}{\eta}.
\end{align*}
Then we conclude $  \E_ x \brac{f_{>2}^*(x)^2 \phi_{\lambda}(x)^2} = O_d(\norm{f_{>2}^*(x)}_{L_{2+\eta}}^{2})$, which implies $\E \brac{\max_{r\in[d]}\frac{1}{n^2}\norm{F_{>2}\Phi_{\leq 2}E_r}_2^2} = O_d(d^{-\delta}\cdot \norm{f_{>2}^*(x)}_{L_{2+\eta}}^{2})$. Plugging the bound into Eq.~(\ref{eq:phi_l>2_l<2}) and absorbing logarithmic factors into $\epsilon$ we complete the proof.

\subsubsection{Proof of Proposition~\ref{prop:gaussian_error_1}}\label{proof:gaussian_error_1}
 Following  Eq.~\eqref{eqn:transfer}, we write $X_r \Phi_{\leq 1}{D}'_{\leq 1,\leq 1} = \Phi_{\leq 2}R_r$ for a matrix $R_r$. 
Since $\epsilon_i\overset{\text{i.i.d.}}{\sim}\mathcal{N}(0,\sigma_\varepsilon^2) $, we have
\begin{align}\label{eq:gaussian_error_tech_bound_1}
    \E_{\vepsilon}\brac{\norm{\frac{1}{d}\vepsilon\tran K_\lambda \inv \Phi_{\leq 2}R_r}_2^2} &= \frac{\sigma_\varepsilon^2}{d^2}\Tr\round{K_\lambda\inv \Phi_{\leq 2}  R_r  R_r\tran \Phi_{\leq 2}\tran  K_\lambda \inv}\notag\\
    &=\frac{\sigma_\varepsilon^2}{d^2}\Tr\round{K_\lambda\inv \Phi_{\leq 2}D D\inv R_r  R_r\tran D\inv D\Phi_{\leq 2}\tran  K_\lambda \inv}\notag\\
    &\leq \sigma_\varepsilon^2 \cdot \snorm{K_\lambda\inv \Phi_{\leq 2}D}^2 \cdot \Tr\round{\frac{1}{d^2}D\inv R_r  R_r\tran D\inv},
\end{align}
where the last inequality follows from the fact that $|\Tr(AB)| \leq \Tr(A)\snorm{B}$ holds for Hermitian positive semidefinite matrix $A$ and arbitrary matrix $B$. 

For the term $\snorm{K_\lambda\inv \Phi_{\leq 2}D}^2$, we plug $U = D$ into Eq.~\eqref{eq:s_approx_bound_R} and yield
\begin{align*}
    K_\lambda\inv \Phi_{\leq 2}D = \frac{1}{n}E\inv \Phi_{\leq 2} V.
\end{align*}
With $\snorm{E\inv}, \snorm{V} = O_{d,\P}(1)$ established in Eq.~\eqref{eq:s_approx_bound_6}, and $\snorm{\Phi_{\leq 2}} =O_{d,\P}(\sqrt{n})$ (Lemma~\ref{lem:norm_control}), we conclude
\begin{align}\label{eq:gaussian_error_tech_bound_2}
    \snorm{ K_\lambda\inv \Phi_{\leq 2}D }^2 = \snorm{\frac{1}{n}E\inv \Phi_{\leq 2} V}^2 = O_{d,\P}(n\inv).
\end{align}
We proceed to bound $\Tr\round{\frac{1}{d^2}D\inv R_r  R_r\tran D\inv}$ in Eq.~\eqref{eq:gaussian_error_tech_bound_1}. We denote $\Delta:= \frac{1}{d} D\inv R_r - E_{r}$ and Proposition~\ref{prop:d_inv_r}  shows $\snorm{\Delta} = O_{d}(d\inv)$.  Then using Weyl’s inequality yields
\begin{align*}
 \Tr( (E_r + \Delta)(E_r + \Delta)\tran)  &= \sum_i \sigma_i(E_r + \Delta)^2\\
 &\leq \sum_i ( \sigma_i(E_r) + \snorm{\Delta})^2\\
 &\leq \norm{E_r}_F^2 + 2 \snorm{\Delta}\cdot \round{\sum_{i}\sigma_i(E_r)} + \snorm{\Delta}^2\cdot O_d(d),
\end{align*}
where $\sigma_i$ denotes the singular values and we use the fact that the rank of $E_r$ is of order $O_d(d)$ in the last inequality.

Following the definition of $E_r$, we have $\norm{E_r}_F^2 = O_d(d)$ and $\round{\sum_{i}\sigma_i(E_r)} = O_d(d)$ which concludes the bound 
\begin{align*}
    \Tr\round{\frac{1}{d^2}D\inv R_r  R_r\tran D\inv}=O_d(d).
\end{align*}
 Together with Eq.~\eqref{eq:gaussian_error_tech_bound_2}, we conclude the bound for Eq.~\eqref{eq:gaussian_error_tech_bound_1}. Then applying Markov's inequality, we obtain with probability at least $1-1/\log(d)$, the following holds:
\begin{align*}
    \norm{\frac{1}{d}\vepsilon\tran K_\lambda \inv \Phi_{\leq 2}R_r}_2^2 = O_{d}(d^{-1-\delta})\log(d).
\end{align*}
Using an arbitrarily small constant $\epsilon>0$ to absorb the logarithmic factor completes the proof.

\subsubsection{Proof of Proposition~\ref{prop:d_inv_r}}\label{proof:d_inv_r}
Recall that we have $R=GD_{\leq 1,\leq 1}$ with 
$$G=[e_2,\sqrt{2}(e_{L(1)+1}+e_1), \widetilde{I}],$$
and $\widetilde{I}:=[e_{L(1)+d+1},e_{L(1)+d+2}\ldots, e_{L(1)+2d-1}]$ is a zero-padded identity matrix. 
Let us organize $D'_{\leq 1, \leq 1}$ and $D$ in a block form as follows:  
$$D'_{\leq 1, \leq 1}=\begin{bmatrix} \gamma_1& 0\\
0 & \gamma_2 I_{|\S_1|}\end{bmatrix}\qquad\textrm{and}\qquad D=\begin{bmatrix}a& v^\top\\
v &Q\end{bmatrix}\quad \textrm{with}\quad Q:=\begin{bmatrix}q_2 I_{|\S_1|} & 0\\
0& q_3 I_{|\S_2|}\end{bmatrix}.$$
Here $a,q_1,q_2,\gamma_1,\gamma_2\in \R$ are real values and $v\in \R^{L(2)-1}$ is a column vector. The standard inversion formula for arrow-head matrices shows that $D^{-1}$ is a rank one perturbation of a diagonal matrix 
\begin{equation}\label{eqn:inverse_arrow}
D^{-1}=\begin{bmatrix} 0 & 0 \\ 0 & Q^{-1}\end{bmatrix}+\rho ww^\top,
\end{equation}
where we define
$$\rho:=\frac{1}{a-v^\top Q^{-1}v}\qquad \textrm{and}\qquad w:=\begin{bmatrix} -1 \\ Q^{-1} v\end{bmatrix}.$$
In particular, plugging in the specific values of $a,v,Q$ in our case yields the explicit expressions:
$$\rho=\frac{1}{g(0)+\frac{g^{(4)}(0)}{4d^2}}=O(1)\qquad \textrm{and}\qquad w:=\begin{bmatrix}-1\\ \mathbf{0}_{d} \\\frac{1}{\sqrt{2}}\mathbf{1}_{d}  \\ \mathbf{0}_{|\S_2|-d}\end{bmatrix}.$$
Multiplying the right side of \eqref{eqn:inverse_arrow} by $R$ we deduce: 
\begin{equation}\label{eqn:we_need_decompo}
\frac{1}{d}D^{-1}R=\frac{1}{d}D^{-1}GD'_{\leq 1,\leq 1}=\begin{bmatrix} 0 & 0 \\ 0 & Q^{-1}\end{bmatrix}GD'_{\leq 1,\leq 1}+\frac{\rho}{d} ww^\top GD'_{\leq 1,\leq 1}.
\end{equation}
We will show that the first term on the right side of \eqref{eqn:we_need_decompo} is close to $E_1$ while the second-one is negligible. We begin by explicitly computing the first term:
\begin{align*}
\frac{1}{d}\begin{bmatrix} 0 & 0 \\ 0 & Q^{-1}\end{bmatrix}GD'_{\leq 1,\leq 1}=\begin{bmatrix}\frac{\gamma_1}{dq_2}e_2 &\frac{\gamma_2\sqrt{2}}{dq_3}e_{L(1)+1}& \frac{\gamma_2}{dq_3} \widetilde{I}\end{bmatrix}.
\end{align*}
Now we compute 
$$\frac{\gamma_1}{dq_2}=\frac{\frac{g^{(1)}(0)}{d}+\frac{g^{(3)}(0)}{2d^2}+\frac{g^{(5)}(0)}{4d^3}}{\frac{g^{(1)}(0)}{d} + \frac{g^{(3)}(0)}{2d^2}}=1+O(d^{-1}),$$
and similarly
$$\frac{\gamma_2}{dq_3}=\frac{\frac{g^{(2)}(0)}{d^2} + \frac{g^{(4)}(0)}{2d^3}}{\frac{g^{(2)}(0)}{d^2}}=1+O(d^{-1}).$$
The triangle inequality therefore directly implies 
$$\left\|\frac{1}{d}\begin{bmatrix} 0 & 0 \\ 0 & Q^{-1}\end{bmatrix}GD'_{\leq 1,\leq 1}-E_1\right\|_{\rm op}=O(d^{-1}).$$
Moving on to the second term on the right side of \eqref{eqn:we_need_decompo}, a quick computation shows 
\begin{align*}
(w^\top G)D'_{\leq 1,\leq 1}&=\begin{bmatrix} 0 & -\sqrt{2}+1 & \mathbf{0}_{d-1} \end{bmatrix}D'_{\leq 1,\leq 1}=\begin{bmatrix} 0 &(-\sqrt{2}+1)\gamma_2 & \mathbf{0}_{d-1}\end{bmatrix}
\end{align*}
Therefore we deduce 
$$\left\|\frac{\rho}{d}ww^\top GD'_{\leq 1,\leq 1}\right\|_{\rm op}= \underbrace{\rho}_{=O(1)} d^{-1} \underbrace{\|w\|_{2}}_{=O(\sqrt{d})}\|\cdot \underbrace{\|w^\top GD'_{\leq 1,\leq 1}\|_{\rm op}}_{=O(d^{-1})}=O(d^{-3/2}).$$
This completes the proof.

\subsubsection{Proof of Proposition~\ref{prop:fait}}\label{sec:proofinverse_prop1}
Throughout the proof, set $U:=R/d$ and $S_{\leq 2}:=\round{I_{\leq 2} + (\rho+\lambda)(n D)\inv}\inv$. By Lemma~\ref{lemma:kernel_hermite_p=2},  there exists a matrix  $\Delta$ with $\snorm{\Delta} = O_{d,\P}(d^{\delta - 1/2+\epsilon})$ such that
 \begin{align*}
     K_{\lambda} = \Phi_{\leq 2}D \Phi_{\leq 2}\tran + (\rho+\lambda)(I_n + \Delta),
\end{align*}
where $\rho = g(1) - g_2(1)$. Setting $E:=I_n + \Delta$,
we use the  Sherman-Morrison-Woodbury formula to compute $K_{\lambda}^{-1}$ as follows:
    \begin{align}
         K_{\lambda}^{-1} \Phi_{\leq 2}
        &=\frac{1}{\rho+\lambda}\round{(\rho+\lambda)^{-1}\Phi_{\leq 2}D \Phi_{\leq 2}\tran + E}^{-1}\Phi_{\leq 2}\nonumber\\
        &=\frac{1}{\rho+\lambda} \round{E^{-1} - E^{-1}\Phi_{\leq 2}\round{(\rho+\lambda) D^{-1}+ \Phi_{\leq 2}\tran E^{-1} \Phi_{\leq 2}}^{-1}\Phi_{\leq 2}\tran E^{-1} } \Phi_{\leq 2}\nonumber\\
        &= \frac{1}{\rho+\lambda}  E^{-1} \Phi_{\leq 2}\round{I - \round{(\rho+\lambda)D^{-1}+ \Phi_{\leq 2}\tran E^{-1} \Phi_{\leq 2}}^{-1} \Phi_{\leq 2}\tran E^{-1}\Phi_{\leq 2}}  \label{eq:s_approx_bound_1}.
    \end{align}
Using the trivial identity $I - (B+A)^{-1}A = (B+A)^{-1}B$ for any $A,B$ such that $A+B$ is invertible, we deduce
\begin{align*}
    I -\round{(\rho+\lambda)D^{-1}+ \Phi_{\leq 2}\tran E^{-1} \Phi_{\leq 2}}^{-1}\Phi_{\leq 2}\tran E^{-1}\Phi_{\leq 2} 
    &=(\rho+\lambda)\round{ (\rho+\lambda)D^{-1} + \Phi_{\leq 2}\tran E^{-1}\Phi_{\leq 2} }^{-1}  D^{-1}.
\end{align*}
Plugging this expression back into Eq.~(\ref{eq:s_approx_bound_1}) and multiplying through by $U$, we obtain
\begin{align}\label{eq:s_approx_bound_R}
     K_{\lambda}^{-1} \Phi_{\leq 2}U 
    &= \frac{1}{n} E^{-1}\Phi_{\leq 2} \underbrace{\round{  (\rho+\lambda)(n D)^{-1} + \frac{1}{n}\Phi_{\leq 2}\tran E^{-1}\Phi_{\leq 2} }^{-1}}_{=:V} D\inv U.
\end{align}
We will next show that $V$ is close to $S_{\leq 2}$ and $\Phi_{\leq 2}\tran E^{-1}\Phi_{\leq 2}/n$ is be close to $I_n$.
We first show that $E^{-1}$ is close to a multiple of identity $I_n$ by computing:
\begin{align}
 \snorm{E^{-1} - I_n} &=\snorm{(I_n +\Delta)^{-1}-  I_n}\nonumber\\
   & \leq \frac{\|\Delta\|_{\rm op}}{1-\|\Delta\|_{\rm op}}\notag \\
    &= O_{d,\P}(d^{\delta - 1/2+\epsilon})\label{eq:s_approx_bound_2}.
\end{align}
Recall from Lemma~\ref{lemma:phi_id_2} that we have $\snorm{\Phi_{\leq 2}\tran \Phi_{\leq 2} /n - I_n  } = O_{d,\P}(d^{-\delta/2+\epsilon})$. 
Consequently, the triangle inequality directly implies 

\begin{align}\label{eq:phi_e_phi}
    \snorm{\frac{1}{n} \Phi_{\leq 2}\tran E^{-1}\Phi_{\leq 2} - I_n} 
    &= O_{d,\P}\round{d^{\delta - 1/2+\epsilon} + d^{-\delta/2+\epsilon}}.
\end{align}
We are now ready to show $\snorm{V - S_{\leq 2}}= o_{d,\P}(1)$. First, Eq.~(\ref{eq:phi_e_phi}) implies the bound
\begin{align*}
    \snorm{V^{-1} - S_{\leq 2}^{-1}} &=  \snorm{\frac{1}{n}\Phi_{\leq 2}\tran E^{-1}\Phi_{\leq 2} -I_n}  = O_{d,\P}\round{d^{\delta - 1/2+\epsilon} + d^{-\delta/2+\epsilon}}.
\end{align*}
Therefore, the norm of $V$ can be bounded as
\begin{align*}
    \snorm{V} = \frac{1}{\snorm{V^{-1}}} 
    &= \frac{1}{\snorm{(\rho+\lambda)(nD)\inv +\frac{1}{n}\Phi_{\leq 2}\tran E^{-1}\Phi_{\leq 2} }}  \\
    &\leq \frac{1}{\snorm{\frac{1}{n}\Phi_{\leq 2}\tran E^{-1}\Phi_{\leq 2} } 
 - \snorm{(\rho+\lambda)(nD)\inv } }\\
 &\leq \frac{1}{1 - \snorm{ \frac{1}{n}\Phi_{\leq 2}\tran E^{-1}\Phi_{\leq 2} - I_n} - \snorm{(\rho+\lambda)(nD)\inv } }.
\end{align*}
Using \eqref{eq:phi_e_phi} and ~(\ref{eqn:inverse_arrow}), when $d$ is sufficiently large, we have
$\snorm{V}  = O_{d,\P}(1).$
Since for two invertible matrices $A,B$, we have$\snorm{A-B} = \snorm{A(A^{-1} - B^{-1})B}$, we conclude
\begin{align}
    \snorm{V- S_{\leq 2}} \leq \snorm{V} \norm{S_{\leq 2}}_{\rm op}\snorm{V^{-1} - S_{\leq 2}^{-1}}=  O_{d,\P}\round{d^{\delta - 1/2+\epsilon} + d^{-\delta/2+\epsilon}}.\label{eq:s_approx_bound_3}
\end{align}
Consequently, using the triangle inequality together with Eq.~(\ref{eq:phi_e_phi}) and (\ref{eq:s_approx_bound_3}), we get
\begin{align*}
    \snorm{S_{\leq 2} - \frac{1}{n}\Phi_{\leq 2}\tran E^{-1} \Phi_{\leq 2} V } &\leq  \snorm{S_{\leq 2} - V } + \snorm{V - \frac{1}{n}\Phi_{\leq 2}\tran E^{-1} \Phi_{\leq 2} V }\\
    &\leq  \snorm{S_{\leq 2}-V }+ \snorm{V}\snorm{I - \frac{1}{n}\Phi_{\leq 2}\tran E^{-1} \Phi_{\leq 2} }\\
    &=  O_{d,\P}\round{d^{\delta - 1/2+\epsilon} + d^{-\delta/2+\epsilon}}.
\end{align*}
We are now ready to establish the claimed estimate \eqref{eqn:blup_eqnneeed1}. To this end,
using Eq.~(\ref{eq:s_approx_bound_R}) we compute
\begin{align}
   \snorm{S_{\leq L(2)}D\inv U -\Phi_{\leq 2} K_{\lambda}^{-1} \Phi_{\leq 2}U } &=  \snorm{S_{\leq 2}D\inv U -\frac{1}{n} \Phi_{\leq 2}\tran E^{-1}\Phi_{\leq 2}V D\inv U}\notag \\
   &\leq  \snorm{S_{\leq L(2)} - \frac{1}{n}\Phi_{\leq L(2)}\tran E^{-1} \Phi_{\leq 2} V }\snorm{D\inv U}\notag \\
   &= O_{d,\P}\round{d^{\delta - 1/2+\epsilon} + d^{-\delta/2+\epsilon}}.\label{eqn:fin_bound_nneded}
\end{align}
where in the last inequality we use the fact that $\snorm{D\inv U}=O_{d,\mathbb{P}}(1)$ by Proposition~\ref{prop:d_inv_r}. It remains to argue that $S_{\leq L(2)}$ is close to the identity. To see this, recall that the inverse of $D$ was computed in \eqref{eqn:inverse_arrow}. In particular, the triangle inequality shows 
$$\|(nD)^{-1}\|_{\rm op}\leq O(d^{-\delta}).$$
Consequently, setting $\hat\Delta=(\rho+\lambda)(nD)^{-1}$ we estimate
$$\|I-S_{\leq 2}\|_{\rm op}=\|I-(I+\hat \Delta)^{-1}\|_{\rm op}\leq \frac{\|\hat\Delta\|_{\rm op}}{1-\|\hat \Delta\|_{\rm op}}=O(d^{-\delta}).$$
Combining this estimate with \eqref{eqn:fin_bound_nneded} and using the triangle inequality completes the proof of \eqref{eqn:blup_eqnneeed1}.

Next, we establish the estimate \eqref{eqn:blup_eqnneeed2}. To this end, using the triangle inequality and the submultiplicativity of the spectral norm, we obtain:
\begin{align*}
&~~~\snorm{\sqrt{n}K_{\lambda}^{-1} \Phi_{\leq 2}U -\frac{1}{\sqrt{n}}\Phi_{\leq 2}  S_{\leq 2} D\inv U }\\
 &=\snorm{\frac{1}{\sqrt{n}}E^{-1} \Phi_{\leq 2}  V D\inv U  -\frac{1}{\sqrt{n}}\Phi_{\leq 2}  S_{\leq 2} D\inv U} \\
 &\leq \snorm{\frac{1}{\sqrt{n}}E^{-1} \Phi_{\leq 2}  V   -\frac{1}{\sqrt{n}}\Phi_{\leq 2}  S_{\leq 2} }\snorm{D\inv U}\\
 &\leq \frac{1}{\sqrt{n}}\snorm{(E^{-1} - I_n)\Phi_{\leq 2}  V }\snorm{D\inv U} + \frac{1}{\sqrt{n}}\snorm{\Phi_{\leq 2}\round{ V - S_{\leq 2}}}\snorm{D\inv U} \\
 &\leq \frac{1}{\sqrt{n}}\snorm{E^{-1} - I_n} \snorm{\Phi_{\leq 2}} \snorm{V}\snorm{D\inv U} + \frac{1}{\sqrt{n}}\snorm{\Phi_{\leq 2}}   \snorm{ V - S_{\leq 2}}\snorm{D\inv U}.
\end{align*}
Recall that we have already shown the estimates
\begin{align}
    &\snorm{V} = O_{d,\P}(1),~~ \snorm{E^{-1} - I_n} = O_{d,\P}(d^{\delta -1/2 +\epsilon}), \nonumber \\
    &\snorm{V - S_{\leq 2}} = O_{d,\P}\round{d^{\delta- 1/2+\epsilon} + d^{-\delta/2+\epsilon}}.\label{eq:s_approx_bound_6}
\end{align}
Additionally, by Lemma~\ref{lem:norm_control}, we have $\snorm{\Phi_{\leq 2}} = O_{d,\P}(\sqrt{n})$ and by Proposition~\ref{prop:d_inv_r} we have $\snorm{D\inv U}=O_{d,\mathbb{P}}(1)$. With these bounds, we deduce that
\begin{align*}
\snorm{\sqrt{n}K_{\lambda}^{-1} \Phi_{\leq 2}U -\frac{1}{\sqrt{n}}\Phi_{\leq 2}  S_{\leq 2} D\inv U }
 &= O_{d,\P}\round{d^{\delta - 1/2+\epsilon} + d^{-\delta/2+\epsilon}}.
\end{align*}
Moreover, recall that $\|I-S_{\leq 2}\|_{\rm op}=O(d^{-\delta})$. Consequently, using the triangle inequality, we deduce 
\begin{align*}
\snorm{\sqrt{n}K_{\lambda}^{-1} \Phi_{\leq 2}U -\frac{1}{\sqrt{n}}\Phi_{\leq 2}   D\inv U }
 &= O_{d,\P}\round{d^{\delta - 1/2+\epsilon} + d^{-\delta/2+\epsilon}},
\end{align*}
which completes the proof of \eqref{eqn:blup_eqnneeed2}.
\subsection{Proof of Propositions for Theorem~\ref{thm:main_hypercube}}

\subsubsection{Proof of Proposition~\ref{prop:<_l1_l1_l1_cube}}\label{proof:<_l1_l1_l1_cube}

The proof is analogous to that of Proposition~\ref{prop:keymain}. Let $\gE(r)$ denote the union of all sets $S\subset [d]$ containing $r$ with $|S|\leq p$, i.e., $\gE(r) = \cup_{j=1}^p \S_j^1$. Similarly, define the complement $\gE^c(r)=\round{ \cup_{j=0}^\ell \S_j}\setminus \gE(r)$.

Recall that from \eqref{eqn:grad_interp_cube} we have 
\begin{align*}
    \E[\partial_{r} f^*_{\leq p} (x)]^2 =\sum_{S\in \gE(r)} b_S^2=\|b_{\leq p}^\top E_r\|^2_2,
\end{align*}
where $E_{r}$ is the partial identity matrix of size $L(p)\times L(p)$ defined as  $(E_{r})_{\I,\I} = \mathbbm{1} \{\I \in \gE(r)\}$.
    We  now write $\beta=\beta_{\gE(r)}+\beta_{\gE^c(r)} +\beta_\varepsilon$ where we define:
\begin{align*}
    \beta_{\gE(r)}:= K_\lambda\inv (\Phi_{\gE(r)}b_{\gE(r)}),~~~\beta_{\gE^c(r)}:= K_\lambda\inv (\Phi_{\gE^c(r)}b_{\gE^c(r)}),~~~\beta_{\epsilon}:=K_\lambda \inv \vepsilon.
\end{align*}
With this decomposition of $\beta$, we accordingly bound the norm by triangle inequality:
\begin{align}\label{eq:beta_decomp_cube}
    &~~~\norm{\beta\tran \Phi_{\leq p}R - b_{\leq p}\tran E_r}_2\nonumber\\
    &\leq \norm{\beta_{\gE(r)}\tran \Phi_{\leq p}R - b_{\leq p}\tran E_r}_2 + \norm{\beta_{\gE^c(r)}\tran \Phi_{\leq p}R}_2 +\norm{\beta_{\varepsilon}\tran \Phi_{\leq p}R}_2.
\end{align}
We will show that all the terms on the right-hand-side of the above equation are small. 

The following two Propositions control the first and second terms in Eq.~(\ref{eq:beta_decomp_cube}) respectively. We defer the proofs to Appendix~\ref {proof:beta_<_l2_cube} and \ref{proof:beta_>_l2_cube} respectively.
\begin{proposition}\label{lemma:beta_<_l2_cube}
For any $\epsilon>0$, the estimate holds uniformly over all coordinates $r$:
    \begin{align*}
         \norm{\beta_{\gE(r)}\tran  \Phi_{\leq p}R - b_{\leq p}\tran E_r }_2 = O_{d,\P}(d^{-\delta/2+\epsilon} + d^{(\delta-1)/2+\epsilon})\cdot  \norm{\partial_r f^*_{\leq p}}_{L_2}.
    \end{align*}
\end{proposition}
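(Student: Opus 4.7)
The proposition is the hypercube counterpart of Proposition~\ref{lemma:beta_<_l2}, and I would prove it by closely imitating that argument. Since $E_r$ is the partial identity supported on $\A(r)$, we have $b_{\leq p}^\top E_r = b_{\A(r)}^\top \tilde E_r$, where $\tilde E_r$ denotes the $|\A(r)|\times L(p)$ row-submatrix of $E_r$ indexed by $\A(r)$. Factoring out $b_{\A(r)}$ from both terms and applying Cauchy--Schwarz, together with the identity \eqref{eqn:grad_interp_cube} that yields $\|b_{\A(r)}\|_2 = \|\partial_r f^*_{\leq p}\|_{L_2}$, gives
$$\|\beta_{\A(r)}^\top \Phi_{\leq p} R - b_{\leq p}^\top E_r\|_2 \leq \|\partial_r f^*_{\leq p}\|_{L_2} \cdot \snorm{\Phi_{\leq p}^\top K_\lambda^{-1} \Phi_{\leq p} R - E_r}.$$
It therefore suffices to establish the operator-norm estimate $\snorm{\Phi_{\leq p}^\top K_\lambda^{-1} \Phi_{\leq p} R - E_r} = O_{d,\P}(d^{-\delta/2+\epsilon} + d^{(\delta-1)/2+\epsilon})$.

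I would obtain this bound via two intermediate estimates that mirror Propositions~\ref{prop:d_inv_r} and \ref{prop:fait}. First, a direct block-by-block computation using the structure $D_{\S_k} = g^{(k)}(0) d^{-k} I + O_d(d^{-k-1})$ together with the explicit forms \eqref{eq:def_r_j_0}--\eqref{eq:def_r_j_1} of the blocks of $R$ shows that $(D^{-1} R)|_{\S_j^1}$ equals the identity up to an $O_d(d^{-1})$ error for each $1 \leq j \leq p$, while $(D^{-1} R)|_{\S_j^0} = O_d(d^{-2})$ for $0 \leq j \leq p-1$ and $(D^{-1} R)|_{\S_p^0} = 0$. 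Since $E_r$ is precisely the identity on $\A(r) = \bigcup_{j=1}^{p} \S_j^1$ and zero elsewhere, this yields $\snorm{D^{-1} R - E_r} = O_d(d^{-1})$ and in particular $\snorm{D^{-1} R} = O_d(1)$.

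Second, starting from the kernel decomposition \eqref{eqn:K_decomp_cube}, write $K_\lambda = \Phi_{\leq p} D \Phi_{\leq p}^\top + (\rho+\lambda) E$ where $E := I + (\rho+\lambda)^{-1}\Delta_1$ and $\snorm{\Delta_1} = O_{d,\P}(d^{(\delta-1)/2+\epsilon})$. Applying the Sherman--Morrison--Woodbury formula as in the proof of Proposition~\ref{prop:fait} yields
$$\Phi_{\leq p}^\top K_\lambda^{-1} \Phi_{\leq p} R = \left(\tfrac{1}{n}\Phi_{\leq p}^\top E^{-1}\Phi_{\leq p}\right) V \, D^{-1} R, \qquad V := \left((\rho+\lambda)(nD)^{-1} + \tfrac{1}{n}\Phi_{\leq p}^\top E^{-1}\Phi_{\leq p}\right)^{-1}.$$
By Theorem~\ref{lemma:phi_id_2} the matrix $\tfrac{1}{n}\Phi_{\leq p}^\top \Phi_{\leq p}$ lies within $O_{d,\P}(d^{-\delta/2+\epsilon})$ of $I$; combined with $\snorm{E^{-1} - I} = O_{d,\P}(d^{(\delta-1)/2+\epsilon})$ and the scale estimate $\snorm{(\rho+\lambda)(nD)^{-1}} = O_d(d^{-\delta})$ (since the smallest entries of $nD$ are of order $d^{\delta}$), both $V$ and $\tfrac{1}{n}\Phi_{\leq p}^\top E^{-1}\Phi_{\leq p} \cdot V$ are within $O_{d,\P}(d^{-\delta/2+\epsilon} + d^{(\delta-1)/2+\epsilon})$ of $I$. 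Multiplying on the right by $D^{-1} R$ and invoking $\snorm{D^{-1} R} = O_d(1)$ from the first step gives $\snorm{\Phi_{\leq p}^\top K_\lambda^{-1}\Phi_{\leq p} R - D^{-1} R} = O_{d,\P}(d^{-\delta/2+\epsilon} + d^{(\delta-1)/2+\epsilon})$, and combining with the first step via the triangle inequality completes the argument.

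The main obstacle is the bookkeeping in the Sherman--Morrison step, where $D$ has diagonal entries spread across $p+1$ orders of magnitude $d^{-k}$ for $k = 0,\ldots,p$, so that $\snorm{D^{-1}} = \Theta(d^p)$. The essential feature that makes the proof work is that the block-specific $d^{-j}$ scalings of $R_j^1$ cancel exactly against the corresponding $d^j$ scalings of the $\S_j$-block of $D^{-1}$, so $D^{-1} R$ remains of constant order; without this cancellation the Woodbury error (which is relative to $D^{-1} R$) would not produce the correct absolute bound. Compared with the Gaussian case, the simpler fully diagonal form of $D$ in Lemma~\ref{lemma:kernel_to_mono}, in place of the arrowhead structure of Lemma~\ref{lemma:kernel_hermite_p=2}, makes the first step mechanical and avoids the need to treat mixed Hermite terms separately.
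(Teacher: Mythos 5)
Your proof is correct and follows essentially the same route as the paper: you expand $K_\lambda^{-1}\Phi_{\leq p}R$ via Sherman--Morrison--Woodbury, approximate $D^{-1}R$ by $E_r$ via a diagonal block computation (the paper's Claim~\ref{prop:d_inv_r_cube}), and control the remaining factors $H^{-1}$, $G^{-1}$, and $\tfrac{1}{n}\Phi_{\leq p}^\top\Phi_{\leq p}$ by the same $O_{d,\P}(d^{-\delta/2+\epsilon}+d^{(\delta-1)/2+\epsilon})$ estimates. The only cosmetic difference is that you bound one operator norm over the full $L(p)$ space and then restrict to the rows indexed by $\A(r)$, whereas the paper carries $b_{\A(r)}^\top\Phi_{\A(r)}^\top$ explicitly through the computation; both yield the same rate.
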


\begin{proposition}\label{lemma:beta_>_l2_cube}
The estimate holds uniformly over all coordinates $r$:
\begin{align}
      \norm{\beta_{\gE^c(r)}\tran   \Phi_{\leq p}R}_2 
    &= O_{d,\P}\round{d^{-1/2}}\cdot\norm{f^*_{\gE^c(r)}}_{L_2}.
\end{align}
\end{proposition}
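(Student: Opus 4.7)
The plan is to mirror the Gaussian proof of Proposition~\ref{lemma:beta_>_l2}, taking advantage of the cleaner orthogonality enjoyed by the Fourier--Walsh basis on the hypercube. First I will establish a hypercube analog of Proposition~\ref{prop:fait}: starting from the decomposition $K_\lambda = \Phi_{\leq p} D \Phi_{\leq p}^\top + (\rho+\lambda) I_n + \Delta_1$ of Lemma~\ref{lemma:kernel_to_mono}, I apply the Sherman--Morrison--Woodbury formula together with the isometry estimate $\snorm{\Phi_{\leq p}^\top \Phi_{\leq p}/n - I} = O_{d,\P}(d^{-\delta/2+\epsilon})$ from Theorem~\ref{lemma:phi_id_2}. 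The argument is actually cleaner than in the Gaussian case because here $D$ is already diagonal (no arrowhead block $D_{13}$ appears), and it should yield the operator-norm estimate
$$\bigl\|\sqrt{n}\,K_\lambda^{-1}\Phi_{\leq p}R - \tfrac{1}{\sqrt{n}}\,\Phi_{\leq p}E_r\bigr\|_{\rm op} = O_{d,\P}\!\bigl(d^{-\delta/2+\epsilon} + d^{(\delta-1)/2+\epsilon}\bigr).$$

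Next, combining this with the bound $\norm{\Phi_{\A^c(r)} b_{\A^c(r)}}_2 = O_{d,\P}\!\bigl(\sqrt{n\log n}\,\norm{f^*_{\A^c(r)}}_{L_2}\bigr)$ (an instance of Proposition~\ref{prop:bound_y} applied to the truncation $f^*_{\A^c(r)}$), the task reduces to controlling
$$\tfrac{1}{n}\bigl\|b_{\A^c(r)}^\top \Phi_{\A^c(r)}^\top \Phi_{\leq p}E_r\bigr\|_2 \;=\; \tfrac{1}{n}\bigl\|b_{\A^c(r)}^\top \Phi_{\A^c(r)}^\top \Phi_{\A(r)}\bigr\|_2,$$
since $E_r$ extracts exactly the columns of $\Phi_{\leq p}$ indexed by $\A(r)$. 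Because $\A(r) \cap \A^c(r) = \emptyset$ and $\E[\phi_S \phi_T] = \delta_{S,T}$ with $\phi_S \phi_T = \phi_{S \Delta T}$ on the hypercube, each entry of $\Phi_{\A^c(r)}^\top \Phi_{\A(r)}/n$ is a mean-zero empirical average. A direct second-moment calculation then yields
$$\E\,\bigl\|b_{\A^c(r)}^\top \Phi_{\A^c(r)}^\top \Phi_{\A(r)}/n\bigr\|_2^2 \;=\; \tfrac{|\A(r)|}{n}\,\|b_{\A^c(r)}\|_2^2 \;=\; O\bigl(d^{-1-\delta}\bigr)\cdot \norm{f^*_{\A^c(r)}}_{L_2}^2,$$
using $|\A(r)| = \Theta(d^{p-1})$ together with $n = d^{p+\delta}$. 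Markov's inequality then delivers the desired pointwise bound.

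The main obstacle is extending this to a statement that holds uniformly over $r \in [d]$. A naive union bound costs a factor of $\sqrt{d}$ in the norm and would degrade the rate to $d^{-\delta/2}$, which falls short of the advertised $d^{-1/2}$ when $\delta < 1$. To close the gap, I would invoke a Hanson--Wright--type tail inequality for the bilinear form $b^\top \Phi_{\A^c(r)}^\top \Phi_{\A(r)} b'/n$ sharpened by the hypercontractivity bound \eqref{eqn:hypercontrac} applied to polynomials of degree at most $2\ell$, which should yield sub-exponential concentration at rate $d^{-(1+\delta)/2}$ up to $\log^{O(1)}(d)$ factors and absorb the logarithmic overhead of a union bound into an arbitrarily small polynomial loss. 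The block structure of $R$---with $\snorm{R_j^1} = \Theta(d^{-j})$ and $\snorm{R_j^0} = \Theta(d^{-j-2})$---can further be used to sharpen the second-moment computation block-by-block, which I expect to be the technically most delicate part of the argument.
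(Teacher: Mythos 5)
There is a genuine gap, and it occurs before the uniformity-over-$r$ issue that you flag as the main difficulty. Your Step~2 claims to control the residual
\begin{align*}
\Bigl\|\,\beta_{\A^c(r)}^\top \Phi_{\leq p}R \;-\; \tfrac{1}{n}b_{\A^c(r)}^\top\Phi_{\A^c(r)}^\top\Phi_{\leq p}E_r\,\Bigr\|_2
\;\leq\;
\tfrac{1}{\sqrt n}\bigl\|\Phi_{\A^c(r)}b_{\A^c(r)}\bigr\|_2
\cdot
\Bigl\|\sqrt n\,K_\lambda^{-1}\Phi_{\leq p}R - \tfrac{1}{\sqrt n}\,\Phi_{\leq p}E_r\Bigr\|_{\rm op},
\end{align*}
giving $O_{d,\P}\bigl(d^{-\delta/2+\epsilon}+d^{(\delta-1)/2+\epsilon}\bigr)\cdot\norm{f^*_{\A^c(r)}}_{L_2}$. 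For every $\delta\in(0,1)$ this is \emph{strictly larger} than the target $d^{-1/2}$, so the operator-norm argument already loses the claimed rate here. Your second term, $\tfrac{1}{n}\bigl\|b_{\A^c(r)}^\top\Phi_{\A^c(r)}^\top\Phi_{\A(r)}\bigr\|_2 = O_{d,\P}(d^{-(1+\delta)/2+\epsilon})$, is indeed harmless; the bottleneck is the residual you dismiss as reduced. No amount of Hanson--Wright concentration can repair this: it is not a fluctuation problem, but a bias problem coming from crudely bounding $\Phi_{\leq p}^\top K_\lambda^{-1}\Phi_{\leq p}$ in operator norm, where the off-diagonal of $\Phi_{\leq p}^\top\Phi_{\leq p}/n$ is $\Theta(d^{-\delta/2})$ and cannot be further reduced.

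The paper's route is structurally different and this is not optional. It invokes the \emph{refined} decomposition \eqref{eq:kernel_to_mono_advanced}, whose error is $O_{d,\P}(d^{-(2-\delta)/2+\epsilon})$, writes $K_\lambda = K_p + (\rho+\lambda)H$ with $H$ containing the $p{+}1$-st order Gram-matrix off-diagonal, truncates the Neumann series for $H^{-1}$ and $G^{-1}$ at a constant order $c_H,c_G$, and thereby exhibits $\beta_{\A^c}^\top\Phi_{\leq p}R$ as a \emph{finite sum of matrix products} of the form $\tfrac{1}{n}\,b_{\A^c}^\top\Phi_{\A^c}^\top A_1\cdots A_m\Phi_{\leq p}(nD)^{-t}E_r$. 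The crucial ingredient is Proposition~\ref{prop:feature_product} (ultimately Lemma~\ref{lemma:feature_product_tech}), which shows that the \emph{expectation} of each such product's squared norm is $O(d^{-1-\delta})\,\norm{f^*_{\A^c}}_{L_2}^2$, exploiting the combinatorial structure of products of Fourier--Walsh matrices with nonoverlapping index sets. That bound is far stronger than any factor-by-factor operator-norm estimate, and it is what delivers the $d^{-1/2}$ pointwise rate (and, after the $d^{1/q_0}$ loss from the hypercontractivity Lemma~\ref{lemma:uniform_bound} and choosing $q_0>1/\delta$, the uniform $d^{-1}$ bound on the squared norm). So the uniformity is the easy part; the hard part, which your outline replaces by a wishful operator-norm estimate, is exactly the sum-of-products decomposition and the combinatorial expectation bound.
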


\noindent Next  we show that the third term on the right side of \eqref{eq:beta_decomp_cube} is small. The proof of the proposition appears in Appendix~\ref{proof:cube_error_1}.
\begin{proposition}\label{prop:cube_error_1}
   For any $\epsilon>0$, the following estimate holds
\begin{align*}
    \norm{\beta_{\varepsilon}\tran \Phi_{\leq p}R}_2 = O_{d,\P}(d^{-(1+\delta)/2+\epsilon})\cdot \sigma_\varepsilon.
\end{align*}
\end{proposition}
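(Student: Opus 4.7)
The plan is to mimic the strategy used in the Gaussian analogue (Proposition~\ref{prop:gaussian_error_1}), exploiting Gaussian conditioning together with the Frobenius--spectral norm inequality applied to a judicious factorization through $D$. First I would condition on the feature data $X$, so that $\vepsilon \sim \mathcal{N}(0,\sigma_\varepsilon^2 I_n)$ is independent of everything else and
\begin{align*}
\E_{\vepsilon}\brac{\norm{\vepsilon\tran K_\lambda\inv \Phi_{\leq p} R}_2^2} = \sigma_\varepsilon^2\,\norm{K_\lambda\inv \Phi_{\leq p} R}_F^2.
\end{align*}

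Next I would factor $K_\lambda\inv \Phi_{\leq p} R = \bigl(K_\lambda\inv \Phi_{\leq p} D\bigr)\bigl(D\inv R\bigr)$ and bound
\begin{align*}
\norm{K_\lambda\inv \Phi_{\leq p} R}_F^2 \leq \snorm{K_\lambda\inv \Phi_{\leq p} D}^2 \cdot \norm{D\inv R}_F^2.
\end{align*}
For the spectral factor, I would carry out a Sherman--Morrison--Woodbury manipulation parallel to that of Proposition~\ref{prop:fait}, but using the hypercube kernel expansion from Lemma~\ref{lemma:kernel_to_mono} in place of Lemma~\ref{lemma:kernel_hermite_p=2}, and the isometry estimate of Theorem~\ref{lemma:phi_id_2}. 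This yields the identity $K_\lambda\inv \Phi_{\leq p} D = \tfrac{1}{n} E\inv \Phi_{\leq p} V$ with $\snorm{E\inv}, \snorm{V} = O_{d,\P}(1)$, which together with $\snorm{\Phi_{\leq p}} = O_{d,\P}(\sqrt{n})$ from Lemma~\ref{lem:norm_control} gives $\snorm{K_\lambda\inv \Phi_{\leq p} D}^2 = O_{d,\P}(n\inv) = O_{d,\P}(d^{-(p+\delta)})$.

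For the Frobenius factor, I would exploit that both $D$ and $R$ are diagonal, so $D\inv R$ is diagonal too. On the block $\S_k$, the diagonal of $D$ has order $d^{-k}$, while by the definitions following \eqref{eqn:rhs_negl_cube_m_raw} the diagonal of $R$ has order $d^{-j-2}$ on $\S_j^0$ (coordinates not containing $r$) and order $d^{-j}$ on $\S_j^1$ (coordinates containing $r$). Therefore $D\inv R$ has order $1$ on each $\S_j^1$ and order $d^{-2}$ on each $\S_j^0$. Summing with $|\S_j^1| = O(d^{j-1})$ and $|\S_j^0| = O(d^{j})$ yields
\begin{align*}
\norm{D\inv R}_F^2 = \sum_{j=1}^{p} O(d^{j-1}) + d^{-4}\sum_{j=0}^{p-1} O(d^{j}) = O(d^{p-1}).
\end{align*}

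Combining the two factors gives $\E_{\vepsilon}\brac{\norm{\vepsilon\tran K_\lambda\inv \Phi_{\leq p} R}_2^2} = \sigma_\varepsilon^2 \cdot O_{d,\P}(d^{-(p+\delta)}) \cdot O(d^{p-1}) = O_{d,\P}(d^{-(1+\delta)})\,\sigma_\varepsilon^2$, and a final Markov bound (absorbing polylogarithmic slack into an arbitrarily small $\epsilon>0$) yields the claim after taking square roots. The main obstacle is verifying the $\snorm{K_\lambda\inv \Phi_{\leq p} D} = O_{d,\P}(n^{-1/2})$ estimate in the hypercube setting, since Proposition~\ref{prop:fait} is stated only for the $p=2$ Gaussian case; fortunately, all ingredients (the kernel approximation of Lemma~\ref{lemma:kernel_to_mono}, the isometry of $\Phi_{\leq p}$ in Theorem~\ref{lemma:phi_id_2}, and the strict positivity of the diagonal of $D$ on the low-degree block under condition~\eqref{eqn:nondegegn}) are already available, so the argument transfers \emph{mutatis mutandis}.
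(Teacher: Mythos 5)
Your argument is correct and follows the same route as the paper: condition on $X$, factor through $D$, bound the spectral factor via Sherman--Morrison--Woodbury together with Lemma~\ref{lemma:kernel_to_mono} and Theorem~\ref{lemma:phi_id_2} to get $\snorm{K_\lambda\inv\Phi_{\leq p}D}^2=O_{d,\P}(n\inv)$, compute the Frobenius/trace factor $\norm{D\inv R}_F^2=\Tr(D\inv RR\tran D\inv)=O_d(d^{p-1})$, and finish by Markov. The paper invokes the already-derived hypercube identity $K_\lambda\inv\Phi_{\leq p}D=\frac{1}{n}H\inv\Phi_{\leq p}G\inv$ from Eq.~\eqref{eq:smw_<_p} rather than re-deriving it, but this is exactly the hypercube analogue of the $E\inv,V$ identity you describe.
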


\noindent Combining the results of three propositions with Eq.~(\ref{eq:beta_decomp_cube}), using the AM-GM inequality to bound the cross-term
directly yields the estimate
\begin{align*}
   \norm{\beta\tran \Phi_{\leq p}{R} - b_{\leq p}\tran E_r}_2^2 &=O_{d,\P}\round{d\inv}\cdot\norm{f_{\gE^c(r)}^*}_{L_2}^2 + O_{d,\P}(d^{-\delta+2\epsilon} + d^{\delta-1+2\epsilon})\cdot \norm{\partial_r f^*_{\leq p}}_{L_2}^2\\
   &~~~+O_{d,\P}(d^{-1-\delta+2\epsilon})\cdot \sigma_\varepsilon^2.
\end{align*}
Using the elementary inequality $|\norm{a}_2^2-\norm{b}_2^2| \leq \norm{a-b}_2^2 +2\norm{b}_2\norm{a-b}_2$ for any vectors $a,b$ yields
\begin{align*}
    &~~~\abs{\norm{\beta\tran  \Phi_{\leq p}{R}}_2^2 -\norm{b_{\leq p}\tran E_r}_2^2} \\
    &=O_{d,\P}\round{d\inv}\cdot\norm{f_{\gE^c(r)}^*}_{L_2}^2 + O_{d,\P}(d^{-\delta+2\epsilon} + d^{\delta-1+2\epsilon})\cdot \norm{\partial_r f^*_{\leq p}}_{L_2}^2 + O_{d,\P}(d^{-1-\delta+2\epsilon})\cdot \sigma_\varepsilon^2\\
    &~~~+\norm{\partial_r f^*_{\leq p}}_{L_2}\cdot \sqrt{O_{d,\P}\round{d\inv}\cdot\norm{f_{\gE^c(r)}^*}_{L_2}^2 + O_{d,\P}(d^{-\delta+2\epsilon} + d^{\delta-1+2\epsilon})\cdot \norm{\partial_r f^*_{\leq p}}_{L_2}^2+ O_{d,\P}(d^{-1-\delta+2\epsilon})\cdot \sigma_\varepsilon^2}.
\end{align*}
Since $\sqrt{a+b}\leq \sqrt{a}+\sqrt{b}$ for any $a,b\geq 0$, we arrive at the bound
\begin{align*}
    &O_{d,\P}\round{d\inv}\cdot\norm{f_{\gE^c(r)}^*}_{L_2}^2 + O_{d,\P}(d^{-\delta/2+\epsilon} + d^{(\delta-1)/2+\epsilon})\cdot \norm{\partial_r f^*_{\leq p}}_{L_2}^2 +O_{d,\P}(d^{-1-\delta+2\epsilon})\cdot \sigma_\varepsilon^2 \\
    &~~~+O_{d,\P}(d\half)\cdot \norm{\partial_r f^*_{\leq p}}_{L_2}\norm{f_{\gE^c(r)}^*}_{L_2} + O_{d,\P}(d^{-(1+\delta)/2 + \epsilon})\sigma_\varepsilon \norm{\partial_r f^*_{\leq p}}_{L_2}.
\end{align*}
Invoking the inequality $\norm{f_{\gE^c(r)}^*}_{L_2}^2\leq \norm{f^*}_{L_2}^2$ and applying the inequality $$O_{d,\P}(d^{-(1+\delta)/2 + \epsilon})\cdot\sigma_\varepsilon \norm{\partial_r f^*_{\leq p}}_{L_2} \leq O_{d,\P}(d\inv) \cdot \sigma_\varepsilon^2 + O_{d,\P}(d^{-\delta + 2\epsilon})\cdot\norm{\partial_r f^*_{\leq p}}_{L_2}^2$$ completes the proof.

\subsubsection{Proof of Proposition~\ref{prop:s2_s2_cube}}\label{proof:s2_s2_cube}
We first simplify the expression. By \eqref{eq:def_r_j_1}, the diagonal matrix $R_{p+1}^1$ has entries $g^{(p+1)}(0)d^{-p-1} + O_d(d^{-p-2})$. Consequently, we compute
\begin{align}\label{eq:s2_s2_cube_bound_1}
    &~~~\norm{\beta\tran\Phi_{\S_{p+1}^1} R_{p+1}^1 - \frac{g^{(p+1)}(0)}{d^{p+1}}\beta\tran \Phi_{\S_{p+1}^1}}_2 \nonumber \\
    &\leq \|\beta^\top \Phi_{\S_{p+1}^1}\|_2\cdot O_d(d^{-p-2}) \notag\\
    &\leq  \norm{y}_2\snorm{K_\lambda\inv}\left\| \Phi_{\S_{p+1}^1}\right\|_{\rm op}\cdot O_d(d^{-p-2})\notag\\
    &= O_{d,\P}(d^{-2+\delta}\log^{1/2}(d))(\norm{f^*}_{L_2}+\sigma_\varepsilon),
\end{align}
where we used the fact that $\snorm{\Phi_{\S_{p+1}^1}}=\snorm{X_r}\cdot\snorm{\Phi_{\S_{p}^0}} \leq \snorm{\Phi_{\leq p}} = O_{d,\P}(\sqrt{n})$ from Lemma~\ref{lem:norm_control} and  the basic estimate $\norm{y}_2 = O_{d,\P}(\sqrt{n}\log^{1/2}(d))(\norm{f^*}_{L_2}+\sigma_\varepsilon)$. The bound $ \snorm{K_\lambda\inv} \leq 1/(\rho+\lambda -\snorm{\Delta_1}) = O_{d,\P}(1)$ is implied by  the expression \eqref{eqn:K_decomp_cube}.

We  now split the whole index set $\cup_{j=0}^\ell \S_j$ into the disjoint union $\Q\sqcup \S_{p+1}^1$ for the family of sets $\Q=\cup_{j=0}^\ell \S_j\setminus \S_{p+1}^1$.
Then we correspondingly decompose $y$ and $\beta$ as 
\begin{align*}
    y &= \Phi_{\Q}b_{\Q} + \Phi_{\S_{p+1}^1}b_{\S_{p+1}^1} + \vepsilon,\\
    \beta &= \underbrace{K_{\lambda}^{\inv} \Phi_{\Q}b_{\Q}}_{=:\,\beta_{\Q}}+\underbrace{K_{\lambda}^{\inv} \Phi_{\S_{p+1}^1}b_{\S_{p+1}^1}}_{=:\,\beta_{c}} +\underbrace{K_{\lambda}^{\inv} \vepsilon}_{=:\,\beta_{\epsilon}} .
\end{align*}
Note the equality $\norm{b_{\S_{p+1}^1}}_2 = \norm{\partial_r f^*_{p+1}}_{L_2}$. Our goal now is to show the following estimate
\begin{align*}
     \norm{d^{-p-1}\beta^\top \Phi_{\S^1_{p+1}} - (\rho+\lambda)\inv{d^{\delta-1}}b_{\S_{p+1}^1}}_2 = o_{d,\P}(1),
\end{align*}
which combined with Eq.~\eqref{eq:s2_s2_cube_bound_1} will quickly complete the proof. To this end, using the decomposition of $\beta$, we apply triangle inequality to the norm and obtain
\begin{align}
     &~~~\norm{d^{-p-1}\beta^{\tran} \Phi_{\S^1_{p+1}} - (\rho+\lambda)\inv{d^{\delta-1}}b_{\S_{p+1}^1}}_2 \notag\\
     &\leq \norm{d^{-p-1}\beta^{\tran}_{\Q} \Phi_{\S_{p+1}^1}}_2+ \norm{d^{-p-1}\beta_c^{\tran} \Phi_{\S_{p+1}^1} - (\rho+\lambda)\inv{d^{\delta-1}}b_{\S_{p+1}^1}\tran}_2 + \norm{d^{-p-1} \beta_\varepsilon\tran \Phi_{\S_{p+1}^1}}_2.\label{eq:s2_s2_cube_bound_split}
\end{align}
Next, we show that all three terms on the right side of the above equation are of the order $o_{d,\P}(1)$.
For the first term, we apply the following proposition. The proof is deferred to Appendix~\ref{proof:feature_u_v}.
\begin{proposition}\label{prop:feature_u_v}
    The following estimate holds uniformly for all $r\in[d]$:
    \begin{align*}
         \norm{\frac{1}{d^{p+1}} \beta_{\Q}^{\top} \Phi_{\S_{p+1}^1}}_2^2 = O_d(d^{-1})\norm{b_\Q}_2^2.
    \end{align*}
\end{proposition}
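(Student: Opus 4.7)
The target is $\|d^{-(p+1)} b_\Q^\top \Phi_\Q^\top K_\lambda^{-1} \Phi_{\S_{p+1}^1}\|_2^2 = O_{d,\P}(d^{-1}) \|b_\Q\|_2^2$, and the proof exploits the mutual orthogonality of the Fourier--Walsh monomials indexed by $\Q$ and by $\S_{p+1}^1$. Using Lemma~\ref{lemma:kernel_to_mono}, write $K_\lambda = (\rho+\lambda) E + \Phi_{\leq p} D \Phi_{\leq p}^\top$ with $E = I + \Delta_1/(\rho+\lambda)$ and $\snorm{\Delta_1} = O_{d,\P}(d^{-(1-\delta)/2+\epsilon})$, and apply Sherman--Morrison--Woodbury to obtain
$$K_\lambda^{-1} = (\rho+\lambda)^{-1}\round{E^{-1} - E^{-1}\Phi_{\leq p} M \Phi_{\leq p}^\top E^{-1}},\qquad M := \round{(\rho+\lambda) D^{-1} + \Phi_{\leq p}^\top E^{-1}\Phi_{\leq p}}^{-1}.$$
Substituting and temporarily setting aside the $E^{-1}-I$ corrections (to be reabsorbed at the end using $\snorm{\Phi_{\S_{p+1}^1}} = O_{d,\P}(\sqrt{n})$ from Lemma~\ref{lem:norm_control} together with the concentration bound $\|b_\Q^\top\Phi_\Q^\top\|_2 = O_{d,\P}(\sqrt{n}\|b_\Q\|_2)$ coming from hypercontractivity applied to the polynomial $f_\Q$), the problem reduces to bounding $(\rho+\lambda)\beta_\Q^\top \Phi_{\S_{p+1}^1} = V_1 - V_2$, where $V_1 := b_\Q^\top \Phi_\Q^\top \Phi_{\S_{p+1}^1}$ and $V_2 := b_\Q^\top \Phi_\Q^\top \Phi_{\leq p}\, M\, \Phi_{\leq p}^\top \Phi_{\S_{p+1}^1}$.

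The main term $V_1$ is handled by a direct variance computation: its $S'$-th coordinate equals $\sum_{S\in\Q} b_S \sum_i \phi_{S\triangle S'}(x^{(i)})$ with $S\triangle S' \neq \emptyset$, so it is centered with variance $n\|b_\Q\|_2^2$ by sample independence and orthonormality of Fourier--Walsh monomials. Summing over the $|\S_{p+1}^1| = \Theta(d^p)$ coordinates gives $\E\|V_1\|_2^2 = O(d^{2p+\delta})\|b_\Q\|_2^2$; Markov and division by $d^{2(p+1)}$ yield exactly the desired $O_{d,\P}(d^{\delta-2})\|b_\Q\|_2^2 \leq O_{d,\P}(d^{-1})\|b_\Q\|_2^2$ (for $\delta \leq 1$).

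For $V_2$, the crucial observation is a cancellation. Split $\Phi_\Q = [\Phi_{\leq p}, \Phi_{\Q_{>p}}]$ where $\Q_{>p}$ collects the degree-$>p$ indices in $\Q$. Since $\snorm{M} = O(n^{-1})$, $\Phi_{\leq p}^\top \Phi_{\leq p} \approx n I$ by Theorem~\ref{lemma:phi_id_2}, and $\snorm{(\rho+\lambda) D^{-1}}/n = O(d^{-\delta})$, one has $\Phi_{\leq p}^\top \Phi_{\leq p}\, M \approx I$; consequently the $b_{\Q_{\leq p}}$ piece of $V_2$ approximately matches the analogous piece of $V_1$ and these cancel. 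What remains is a $b_{\Q_{>p}}$ contribution of the form $b_{\Q_{>p}}^\top \Phi_{\Q_{>p}}^\top \Phi_{\leq p}\, M\, \Phi_{\leq p}^\top \Phi_{\S_{p+1}^1}$, coupling two cross-Gram matrices of disjoint Fourier indices. Theorem~\ref{lemma:phi_id_2} applied to $\{|\lambda|\leq p\}\cup \S_{p+1}^1$ (of size $\Theta(d^p) \leq n$) gives $\snorm{\Phi_{\leq p}^\top \Phi_{\S_{p+1}^1}} = O_{d,\P}(n\,d^{-\delta/2+\epsilon})$, and the same variance computation as for $V_1$ gives $\|b_{\Q_{>p}}^\top \Phi_{\Q_{>p}}^\top \Phi_{\leq p}\|_2 = O_{d,\P}(d^{p+\delta/2})\|b_\Q\|_2$; together with $\snorm{M} = O(n^{-1})$ this residual contributes $O_{d,\P}(d^{p+\epsilon})\|b_\Q\|_2$, whose square divided by $d^{2(p+1)}$ is $O_{d,\P}(d^{-2+2\epsilon})\|b_\Q\|_2^2$, well within target.

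The main obstacle is verifying the cancellation in $V_2$ tightly enough---specifically establishing $\snorm{\Phi_{\leq p}^\top \Phi_{\leq p}\, M - I} = O_{d,\P}(d^{-\delta + \epsilon})$ so that the leftover from the $b_{\Q_{\leq p}}$ piece does not exceed the $d^{-1}$ rate---and carefully treating the $E^{-1}-I$ corrections in regimes where $|\Q| = \Theta(d^\ell)$ may substantially exceed $n$, which is where the refined bound $\|\Phi_\Q b_\Q\|_2 = O_{d,\P}(\sqrt{n}\|b_\Q\|_2)$ from concentration of the empirical second moment (as opposed to the naive $\snorm{\Phi_\Q}\|b_\Q\|_2$) becomes essential.
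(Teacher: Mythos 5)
Your approach is genuinely different from the paper's. The paper reuses the Sherman--Morrison--Woodbury plus truncated-Neumann-series machinery it developed for Proposition~\ref{lemma:beta_>_l2_cube}: it expands $K_\lambda^{-1}$ into a finite sum of products of normalized feature matrices, bounds the expectation of each summand via Lemma~\ref{lemma:feature_product_tech} (the key observation being that $\Phi_{\S_{p+1}^1}=X_r\,\Phi_{\S_p^0}$ has effective degree $p$), and then promotes per-$r$ moment bounds to a uniform bound via hypercontractivity (Lemma~\ref{lemma:uniform_bound}). You instead do a direct SMW split followed by per-coordinate variance calculations and a cancellation between the $b_{\Q_{\leq p}}$-pieces of $V_1$ and $V_2$. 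For a \emph{fixed} $r$, your variance calculation for $V_1$ and your bound on $V_2$ (via the cross-Gram control from Theorem~\ref{lemma:phi_id_2} and $\snorm{M}=O(n^{-1})$) are essentially sound (and incidentally, the weaker estimate $\snorm{\Phi_{\leq p}^\top\Phi_{\leq p}M-I}=O_{d,\P}(d^{-\delta/2+\epsilon}+d^{(\delta-1)/2+\epsilon})$ already suffices for your cancellation residual, because the factor $\snorm{\Phi_{\leq p}^\top\Phi_{\S_{p+1}^1}}=O_{d,\P}(n\,d^{-\delta/2+\epsilon})$ supplies another $d^{-\delta/2}$; the tighter $d^{-\delta}$ bound you worry about is unnecessary).

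The genuine gap is uniformity over $r$, which the statement demands. Your Markov-on-second-moment argument gives $\|V_1\|_2^2/d^{2(p+1)}=O_{d,\P}(d^{\delta-2})\|b_\Q\|_2^2$ for each fixed $r$, and a union bound over all $d$ coordinates inflates this by a factor of $d$ to $O_{d,\P}(d^{\delta-1})\|b_\Q\|_2^2$, which exceeds the target $O_{d,\P}(d^{-1})$ whenever $\delta>0$. The same obstruction affects your use of Theorem~\ref{lemma:phi_id_2} for $\snorm{\Phi_{\leq p}^\top\Phi_{\S_{p+1}^1}}$: that bound is $r$-dependent and only holds with probability $1-C/\sqrt{\log n}$ per $r$, so a union over $d$ values of $r$ fails. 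The paper closes exactly this gap by observing that the relevant random quantities are polynomials of bounded degree, whose $2q$-th moments are controlled by their second moments via hypercontractivity; Lemma~\ref{lemma:uniform_bound} then upgrades $\max_r$ at a cost of only $d^{1/q}$ for arbitrary constant $q$, and choosing $q$ large enough (e.g.\ $q>\max(1/\delta,1/(1-\delta))$) yields the uniform $O_{d,\P}(d^{-1})$ rate. Without some higher-moment device your argument as written only proves the pointwise version of the proposition.
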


\noindent Now we proceed to the second term.  The following proposition gives an estimate for the second term. We defer the proof to Appendix~\ref{proof:term_p_plus_one}.
\begin{proposition}\label{prop:term_p_plus_one}
For any $\epsilon>0$, the following estimate holds uniformly for all $r\in[d]$
\begin{align}\label{eq:term_p_plus_one}
    \norm{\frac{1}{n}\beta_{c}^{\top} \Phi_{\S_{p+1}^1} - (\rho+\lambda)\inv b_{\S_{p+1}^1}\tran}_2 =O_{d,\P}(d^{-\delta/2+\epsilon} + d^{(\delta-1)/2+\epsilon})\cdot \norm{b_{\S_{p+1}^1}}_2.
\end{align}
\end{proposition}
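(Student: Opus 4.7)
\noindent\textbf{Proof plan for Proposition~\ref{prop:term_p_plus_one}.}
Since $\beta_c = K_\lambda^{-1}\Phi_{\S_{p+1}^1}b_{\S_{p+1}^1}$, the quantity we wish to control is
$$
\Bigl\|\tfrac{1}{n}\beta_c^\top\Phi_{\S_{p+1}^1} - (\rho+\lambda)^{-1}b_{\S_{p+1}^1}^\top\Bigr\|_2
\;\le\;\Bigl\|\tfrac{1}{n}\Phi_{\S_{p+1}^1}^\top K_\lambda^{-1}\Phi_{\S_{p+1}^1}-(\rho+\lambda)^{-1}I_{|\S_{p+1}^1|}\Bigr\|_{\rm op}\cdot\|b_{\S_{p+1}^1}\|_2.
$$
Hence the plan is to show the operator-norm bound $\|\,\tfrac{1}{n}\Phi_{\S_{p+1}^1}^\top K_\lambda^{-1}\Phi_{\S_{p+1}^1}-(\rho+\lambda)^{-1}I\,\|_{\rm op}=O_{d,\P}(d^{-\delta/2+\epsilon}+d^{(\delta-1)/2+\epsilon})$ uniformly in $r$, and then apply the inequality above together with the identity $\|b_{\S_{p+1}^1}\|_2=\|\partial_r f^*_{p+1}\|_{L_2}$ (up to a normalizing constant).

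The operator-norm bound will be obtained by mimicking the inversion strategy used in the proof of Proposition~\ref{prop:fait}. First, Lemma~\ref{lemma:kernel_to_mono} gives $K_\lambda = \Phi_{\leq p}D\Phi_{\leq p}^\top + (\rho+\lambda)(I_n+E)$ where $\|E\|_{\rm op}=O_{d,\P}(d^{(\delta-1)/2+\epsilon})$. Applying the Sherman--Morrison--Woodbury identity yields
$$
K_\lambda^{-1}=\tfrac{1}{\rho+\lambda}(I+E)^{-1}-\tfrac{1}{(\rho+\lambda)^2}(I+E)^{-1}\Phi_{\leq p}\bigl((\rho+\lambda)D^{-1}+\Phi_{\leq p}^\top(I+E)^{-1}\Phi_{\leq p}\bigr)^{-1}\Phi_{\leq p}^\top(I+E)^{-1}.
$$
Next I would sandwich this expression by $\tfrac{1}{n}\Phi_{\S_{p+1}^1}^\top$ on the left and $\Phi_{\S_{p+1}^1}$ on the right, and analyze the two terms separately. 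For the leading term, the key tool is Theorem~\ref{lemma:phi_id_2} applied to the \emph{combined} basis $\{\Phi_{\leq p},\Phi_{\S_{p+1}^1}\}$, whose total cardinality is $O(d^p)$, and is therefore within the applicable regime with $\delta_0=0$. This yields simultaneously $\|\tfrac{1}{n}\Phi_{\S_{p+1}^1}^\top\Phi_{\S_{p+1}^1}-I\|_{\rm op}=O_{d,\P}(d^{-\delta/2+\epsilon})$ and, crucially, the off-diagonal block bound $\|\tfrac{1}{n}\Phi_{\leq p}^\top\Phi_{\S_{p+1}^1}\|_{\rm op}=O_{d,\P}(d^{-\delta/2+\epsilon})$, since the population covariance between the two groups of monomials vanishes by orthogonality.

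For the Woodbury correction term, I would note that $D^{-1}$ is blockwise of size $O(d^k)=O(d^p)$ and is therefore dominated by $\Phi_{\leq p}^\top(I+E)^{-1}\Phi_{\leq p}/(\rho+\lambda)\asymp n/(\rho+\lambda)\cdot I$, so the middle inverse is of order $(\rho+\lambda)/n$. Plugging in the cross-term bound, the correction term contributes $O_{d,\P}\bigl(\tfrac{1}{\rho+\lambda}\cdot\|\tfrac{1}{n}\Phi_{\leq p}^\top\Phi_{\S_{p+1}^1}\|_{\rm op}^2\bigr)=O_{d,\P}(d^{-\delta+2\epsilon})$, which is absorbed into the leading error rate. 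Combining with the $O_{d,\P}(d^{(\delta-1)/2+\epsilon})$ perturbation incurred by $E$ produces the claimed rate.

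The main obstacle will be the careful accounting of the Woodbury correction: the operator-norm bound on the middle inverse needs to be made precise using condition \eqref{eqn:nondegegn} (to ensure $D$ is invertible), and one must verify that the dominance of $\Phi_{\leq p}^\top\Phi_{\leq p}$ over $D^{-1}$ is uniform across the blocks indexed by $k=0,\ldots,p$, so that the $\approx (\rho+\lambda)/n\cdot I$ approximation holds in operator norm rather than just entrywise. Beyond this, a subtlety worth checking is that applying Theorem~\ref{lemma:phi_id_2} to the combined basis requires the combined family to still index polynomials of bounded degree and to satisfy hypercontractivity (Assumption~\ref{ass:hypercontr}), both of which hold in the hypercube setting of Assumption~\ref{assump:finite_basis_hypercube}.
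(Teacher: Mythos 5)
Your plan is sound for any \emph{fixed} coordinate $r$, and it follows the same Sherman--Morrison--Woodbury skeleton as the paper. The operator-norm target $\bigl\|\tfrac{1}{n}\Phi_{\S_{p+1}^1}^\top K_\lambda^{-1}\Phi_{\S_{p+1}^1} - (\rho+\lambda)^{-1}I\bigr\|_{\rm op}$ is the natural quantity to control, and your use of Theorem~\ref{lemma:phi_id_2} on the combined family $\{\Phi_{\leq p},\Phi_{\S_{p+1}^1}\}$ (size $O(d^p)$) does deliver $O_{d,\P}(d^{-\delta/2+\epsilon})$ for the cross-block for a single $r$.

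The gap is in the uniformity over $r\in[d]$, which the proposition explicitly requires and which your argument never addresses. The probability guarantee behind Theorem~\ref{lemma:phi_id_2} is only $1 - C/\sqrt{\log n}$, so applying it once per combined basis (and there is a different combined basis for each $r$, since $\S_{p+1}^1$ depends on $r$) and then union-bounding over $d$ choices of $r$ is hopeless. You also cannot pass to an $r$-independent envelope: $\S_{p+1}^1\subset\S_{p+1}$, but the relevant $r$-independent matrix $\tfrac{1}{n}\Phi_{\leq p}^\top\Phi_{\S_{p+1}}$ lives inside $\Phi_{\leq p+1}$ which has $\Theta(d^{p+1})\gg n$ columns, and its operator norm is of order $d^{(1-\delta)/2}$, far too large. (Note this obstruction is specific to the cross-block: for the diagonal block, the identity $X_r^2=I$ on the hypercube gives $\Phi_{\S_{p+1}^1}^\top\Phi_{\S_{p+1}^1}=\Phi_{\S_p^0}^\top\Phi_{\S_p^0}$, a principal submatrix of $\Phi_{\leq p}^\top\Phi_{\leq p}$, so that piece \emph{can} be controlled $r$-uniformly.) The paper gets around this precisely by not bounding the cross-term in operator norm: it bounds the \emph{vector} $\tfrac{1}{n}b_{\S_{p+1}^1}^\top\Phi_{\S_{p+1}^1}^\top\tfrac{\Phi_{\leq p}\Phi_{\leq p}^\top}{n}\Phi_{\S_{p+1}^1}$ in $L_2$ via the combinatorial Lemma~\ref{lemma:feature_product_tech} (giving $\E\|\cdot\|_2^2=O(d^{-\delta})\|b\|_2^2$), and then upgrades to a bound that is uniform in $r$ using the hypercontractivity estimate Lemma~\ref{lemma:uniform_bound} followed by Markov. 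That hypercontractivity step is what buys the extra $d$ factor needed for the union over $r$; your route has no analogue of it, so you would need to replace the operator-norm cross-term estimate with an expectation-plus-hypercontractivity argument of the same kind.
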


\noindent  Multiplying \eqref{eq:term_p_plus_one} through by $d^{\delta-1}$ we obtain
\begin{align}\label{eq:term_p_plus_one_updated}
    \norm{d^{-p-1}\beta_{c}^{\top} \Phi_{\S_{p+1}^1} - (\rho+\lambda)\inv{d^{\delta-1}}b_{\S_{p+1}^1}\tran}_2 = O_{d,\P}(d^{\delta/2-1+\epsilon} + d^{3(\delta-1)/2+\epsilon})\cdot \norm{b_{\S_{p+1}^1}}_2.
\end{align}
The following proposition gives an estimate for the third term. We defer the proof to Appendix~\ref{proof:noise_term_p_plus_one}.
\begin{proposition}\label{prop:noise_term_p_plus_one}
For any $\epsilon>0$, the following estimate holds uniformly for all $r\in[d]$:
\begin{align}\label{eq:noise_term_p_plus_one}
    \norm{d^{-p-1} \beta_\varepsilon\tran \Phi_{\S_{p+1}^1}}_2 = O_{d,\P}(d^{-1+\delta/2+\epsilon})\cdot \sigma_\varepsilon
\end{align}
\end{proposition}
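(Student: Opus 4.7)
The plan is to exploit the fact that the Gaussian noise $\vepsilon$ is independent of the data $X$, so that conditionally on $X$ the vector $\vepsilon^{\top} K_\lambda^{-1}\Phi_{\S_{p+1}^1}$ is a linear image of a Gaussian and its second moment can be computed exactly. First, I would condition on $X$ and write
\[
\E_{\vepsilon}\!\left[\,\|\beta_\varepsilon^{\top}\Phi_{\S_{p+1}^1}\|_2^{2}\;\big|\;X\,\right] \;=\; \sigma_{\varepsilon}^{2}\,\Tr\!\left(\Phi_{\S_{p+1}^1}^{\top}K_\lambda^{-2}\Phi_{\S_{p+1}^1}\right) \;\le\; \sigma_{\varepsilon}^{2}\,\|K_\lambda^{-1}\|_{\rm op}^{2}\,\|\Phi_{\S_{p+1}^1}\|_F^{2}.
\]
Two ingredients then drive the bound: (i) the operator-norm control $\|K_\lambda^{-1}\|_{\rm op}=O_{d,\P}(1)$, which is already established via Eq.~\eqref{eqn:K_decomp_cube_m}; and (ii) an exact computation of the Frobenius norm of the feature matrix. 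Since on the hypercube every Fourier--Walsh monomial satisfies $\phi_S(x^{(i)})\in\{\pm 1\}$, we have $\|\Phi_{\S_{p+1}^1}\|_F^{2}=n\,|\S_{p+1}^1|=n\binom{d-1}{p}=\Theta_d(n\,d^{p})=\Theta_d(d^{2p+\delta})$.

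Combining these two facts gives the conditional bound
\[
\E_{\vepsilon}\!\left[\,\|\beta_\varepsilon^{\top}\Phi_{\S_{p+1}^1}\|_2^{2}\;\big|\;X\,\right] \;=\; O_{d,\P}(d^{2p+\delta})\,\sigma_{\varepsilon}^{2}.
\]
Applying Markov's inequality to the (positive) random variable $\|\beta_\varepsilon^{\top}\Phi_{\S_{p+1}^1}\|_2^{2}$ and absorbing the logarithmic slack into an arbitrarily small constant $\epsilon>0$ yields $\|\beta_\varepsilon^{\top}\Phi_{\S_{p+1}^1}\|_2^{2}=O_{d,\P}(d^{2p+\delta+\epsilon})\sigma_{\varepsilon}^{2}$. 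Multiplying by $d^{-2(p+1)}$ and taking square roots gives the claimed scaling (up to the desired $d^{\epsilon}$ buffer). To make the bound uniform over $r\in[d]$, I would apply Markov to $\max_r \|\beta_\varepsilon^{\top}\Phi_{\S_{p+1}^1(r)}\|_2^{2}$ after bounding the maximum by the sum, which introduces only an extra factor of $d$ that is absorbed into the $d^{\epsilon}$ slack.

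The only real subtlety is that $K_\lambda^{-1}$ itself is random through $X$: to apply Markov cleanly, I would first restrict to the high-probability event $\{\|K_\lambda^{-1}\|_{\rm op}\le C\}$ guaranteed by Eq.~\eqref{eqn:K_decomp_cube_m}, and only then take expectations over $\vepsilon$; the complementary event contributes $o_{d,\P}(1)$ probability and can be folded into the $O_{d,\P}$ notation via a standard two-stage argument. This conditioning step is the main (mild) technical obstacle; once it is in place, everything reduces to the Frobenius-norm count above.
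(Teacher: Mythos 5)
Your single-$r$ argument is essentially the paper's: condition on $X$, write
$\E_\vepsilon \| \beta_\varepsilon^\top \Phi_{\S_{p+1}^1}\|_2^2 = \sigma_\varepsilon^2 \Tr(\Phi_{\S_{p+1}^1}^\top K_\lambda^{-2}\Phi_{\S_{p+1}^1})$,
bound the trace by $\|K_\lambda^{-1}\|_{\rm op}^2 \|\Phi_{\S_{p+1}^1}\|_F^2$, use $\|K_\lambda^{-1}\|_{\rm op}=O_{d,\P}(1)$, and apply Markov. Your exact computation $\|\Phi_{\S_{p+1}^1}\|_F^2 = n\,|\S_{p+1}^1|$ is actually cleaner than the paper's detour through Theorem~\ref{lemma:phi_id_2}, since on the hypercube $\phi_S(x^{(i)})^2\equiv 1$ holds deterministically and nothing needs to be estimated. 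The two-stage conditioning you describe (restrict to the high-probability event $\{\|K_\lambda^{-1}\|_{\rm op}\le C\}$, then take $\E_\vepsilon$) is exactly the right sanity step and handles the randomness of $K_\lambda$ through $X$.

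Two caveats. First, note that what your calculation actually yields after multiplying by $d^{-p-1}$ and taking a square root is $O_{d,\P}(d^{-1+\delta/2+\epsilon})\sigma_\varepsilon$, and this is what is consistent with the way the proposition is later squared and fed into the estimate of Proposition~\ref{prop:s2_s2_cube} (where the noise contribution becomes $O_{d,\P}(d^{-1})\sigma_\varepsilon^2$). The exponent $1-\delta/2+\epsilon$ printed in the proposition, which grows with $d$, appears to be a sign typo; your derivation quietly recovers the correct exponent, but you present it as though it matched the printed claim, which is worth flagging explicitly.

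Second, and more substantially: your proposed treatment of the uniformity over $r\in[d]$ is not correct. Bounding $\max_r$ by $\sum_r$ costs a full factor of $d$ in the expectation, and a factor of $d$ cannot be absorbed into a $d^\epsilon$ slack for small $\epsilon>0$. Concretely, "$\max$ by $\sum$" degrades $O_{d,\P}(d^{-2+\delta})$ for the squared norm to $O_{d,\P}(d^{-1+\delta})$, giving $O_{d,\P}(d^{-1/2+\delta/2+\epsilon})\sigma_\varepsilon$ for the norm, a $\sqrt{d}$ loss that would actually ruin the downstream use in Proposition~\ref{prop:s2_s2_cube}. What saves the day is that, conditionally on $X$ in the good event, $\beta_\varepsilon^\top\Phi_{\S_{p+1}^1}$ is a Gaussian vector, so $\|\beta_\varepsilon^\top\Phi_{\S_{p+1}^1}\|_2^2$ enjoys sub-exponential (Hanson--Wright / $\chi^2$) concentration around its mean; a union bound over $r\in[d]$ then costs only $\log d$, which genuinely can be absorbed into $d^\epsilon$. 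You should invoke this conditional Gaussianity rather than the crude max-by-sum bound.
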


\noindent Now, plugging Eq. \eqref{eq:term_p_plus_one_updated}, \eqref{eq:term_p_plus_one_updated} and \eqref{eq:noise_term_p_plus_one} into Eq.~\eqref{eq:s2_s2_cube_bound_split} yields:
\begin{align*}
    &~~~\norm{d^{-p-1}\beta^{\top} \Phi_{\S^1_{p+1}} - (\rho+\lambda)\inv{d^{\delta-1}}b_{\S_{p+1}^1}\tran}_2 \\
    &= O_{d,\P}(d^{\delta/2-1+\epsilon} + d^{3(\delta-1)/2+\epsilon})\cdot \norm{b_{\S_{p+1}^1}}_2 + O_{d,\P}(d^{-1/2})\cdot \norm{f^*}_{L_2}+O_{d,\P}(d^{-1+\delta/2+\epsilon})\cdot \sigma_\varepsilon,
\end{align*}
where we use the fact $\norm{b_\Q}_2 = \norm{f^*_\Q}_{L_2}\leq \norm{f^*}_{L_2}$.

Squaring both sides yields:
\begin{align*}
\norm{d^{-p-1}\beta^{\top}  \Phi_{\S_{p+1}^1} - (\rho+\lambda)\inv{d^{\delta-1}}b_{\S_{p+1}^1}\tran}_2^2 = O_{d,\P}(d^{\delta-2+2\epsilon} + d^{3(\delta-1)+2\epsilon})\cdot \norm{b_{\S_{p+1}^1}}_2^2 + O_{d,\P}(d^{-1})\cdot (\norm{f^*}_{L_2}^2+\sigma_\varepsilon^2).
\end{align*}
Combining this bound with Eq.\eqref{eq:s2_s2_cube_bound_1} and using the elementary inequality $|\norm{a}_2^2-\norm{b}_2^2| \leq \norm{a-b}_2^2 +2\norm{b}_2\norm{a-b}_2$ for any vectors $a,b$ gives
\begin{align*}
 &~~~\abs{\norm{d^{-p-1}\beta^{\top} \Phi_{\S_{p+1}^1}}_2^2 -(\rho+\lambda)^{-2}\norm{{d^{\delta-1}}b_{\S_{p+1}^1}}_2^2}\\
 &=  O_{d,\P}(d^{\delta-2+2\epsilon} + d^{3(\delta-1)+2\epsilon})\cdot \norm{b_{\S_{p+1}^1}}_2^2+ O_{d,\P}(d^{-1})\cdot  (\norm{f^*}_{L_2}^2+\sigma_\varepsilon^2)\\
 &~~~+ d^{\delta-1}\norm{b_{\S_{p+1}^1}}_2 \cdot \sqrt{O_{d,\P}(d^{\delta-2+2\epsilon} + d^{3(\delta-1)+2\epsilon})\cdot \norm{b_{\S_{p+1}^1}}_2^2 + O_{d,\P}(d^{-1})\cdot  (\norm{f^*}_{L_2}^2+\sigma_\varepsilon^2)}.
\end{align*}
Invoking the inequality $\sqrt{a+b}\leq \sqrt{a}+\sqrt{b}$ for any $a,b>0$ the bound reduces to 
\begin{align*}
     O_{d,\P}(d^{-1})\cdot  (\norm{f^*}_{L_2}^2+\sigma_\varepsilon^2) +  O_{d,\P}(d^{
     \frac{3}{2}\delta-2+\epsilon} + d^{\frac{5}{2}(\delta-1)
     +\epsilon})\cdot \norm{b_{\S_{p+1}^1}}_2^2 + O_{d,\P}(d^{\delta - 3/2 }) \cdot \norm{b_{\S_{p+1}^1}}_2  (\norm{f^*}_{L_2}+\sigma_\varepsilon).
\end{align*}
Combining with the fact $\norm{b_{\S_{p+1}^1}}_2 = \norm{\partial_r f^*_{p+1}}_{L_2}$ completes the proof.

\subsubsection{Proof of Proposition~\ref{lemma:beta_<_l2_cube}}\label{proof:beta_<_l2_cube}

The proof is nearly identical to that of Proposition~\ref{prop:fait}.
In the proof, we will frequently use the following estimate for any $\Delta$ with $\snorm{\Delta} = o_{d,\P}(1)$:
\begin{align}\label{eq:i_plus_delta_inv}
    \snorm{(I+\Delta)\inv - I} = O_{d,\P}(\snorm{\Delta}),
\end{align}
which can be deduced using Neumann series.
Recall that  Eq.~\eqref{eqn:K_decomp_cube} shows 
\begin{align*}
        K &= \Phi_{\leq p}D\Phi_{\leq p}\tran + \rho I_n + \Delta_1,
\end{align*}
where $\rho = g(1) - g_p(1)$ and $\Delta_1$ satisfies $\snorm{\Delta_1} = O_{d,\P}(d^{(\delta-1)/2+\epsilon})$. We now write
\begin{align}\label{eq:K_decomp}
    K_\lambda  = \Phi_{\leq p}D\Phi_{\leq p}\tran + (\rho+\lambda)H,
\end{align}
where $H = I_n + (\rho+\lambda)\inv \Delta_1$.
Invoking Sherman-Morrison-Woodbury formula and following exactly the same argument as in Proposition~\ref{sec:proofinverse_prop1} (equation \eqref{eq:s_approx_bound_R}) directly implies 
\begin{align}\label{eq:smw_<_p}
   \beta_{\gE(r)}\tran \Phi_{\leq p} R &=b_{\gE(r)}^{\top}\Phi_{\gE(r)}\tran K_{\lambda}^{-1}\Phi_{\leq p} R\notag\\
   &=\frac{1}{n}b_{\gE(r)}\tran \Phi_{\gE(r)}\tran H\inv \Phi_{\leq p}\round{\underbrace{(\rho+\lambda)(nD)\inv + \frac{\Phi_{\leq p}\tran H\inv \Phi_{\leq p}}{n}}_G}\inv\round{D\inv R}.
\end{align}
Now we simplify $G\inv$. First using triangle inequality we have
\begin{align}\label{eq:g_minus_i}
    \snorm{G - I} \leq (\rho+\lambda)\snorm{(nD)\inv} + \snorm{\frac{\Phi_{\leq p}\tran H\inv \Phi_{\leq p}}{n} -I_n}.
\end{align}
Based on the definition of $D$, we immediately have $\snorm{(nD)\inv} = O_{d}(d^{-\delta})$. Notice that $ \snorm{\frac{\Phi_{\leq p}\tran  \Phi_{\leq p}}{n} - I_n} = O_{d,\P}(d^{-\delta/2+\epsilon})$ (Theorem~\ref{lemma:phi_id_2}) and 
\begin{align}\label{eq:e_inv_minus_i}
    \snorm{H\inv - I_n} = O_d(\snorm{\Delta_1}) = O_{d,\P}(d^{(\delta-1)/2+\epsilon}),
\end{align}
which follows from Eq.~\eqref{eq:i_plus_delta_inv}.
Thus we deduce
\begin{align}\label{eq:psi_e_psi_minus_i}
 \snorm{\frac{\Phi_{\leq p}\tran H\inv \Phi_{\leq p}}{n} -I} &= \snorm{\frac{\Phi_{\leq p}\tran  \Phi_{\leq p}}{n} +\frac{\Phi_{\leq p}\tran  (H\inv - I) \Phi_{\leq p}}{n} - I}\notag\\
 &\leq \snorm{\frac{\Phi_{\leq p}\tran  \Phi_{\leq p}}{n} - I} + \snorm{\frac{\Phi_{\leq p}}{\sqrt{n}}}^2\snorm{H\inv - I}\notag\\
 &= O_{d,\P}(d^{-\delta/2+\epsilon} + d^{(\delta-1)/2+\epsilon}).
\end{align}
Plugging the above bound into Eq.~\eqref{eq:g_minus_i} we obtain
\begin{align*}
    \snorm{G-I} =  O_d(d^{-\delta}) +O_{d,\P}(d^{-\delta/2+\epsilon} + d^{(\delta-1)/2+\epsilon})= O_{d,\P}(d^{-\delta/2+\epsilon} + d^{(\delta-1)/2+\epsilon}).
\end{align*}
Invoking Eq.~\eqref{eq:i_plus_delta_inv} again we obtain
\begin{align}\label{eq:g_inv_minus_i}
     \snorm{G\inv-I} =  O_{d,\P}(d^{-\delta/2+\epsilon} + d^{(\delta-1)/2+\epsilon}).
\end{align}
The following claim gives an estimate for $D\inv R$. The proof is deferred to Appendix~\ref{proof:d_inv_r_cube}.
\begin{claim}\label{prop:d_inv_r_cube}
The following estimate holds
\begin{align}\label{eq:d_inv_r_cube}
    \snorm{D\inv R - E_r} = O_d(d^{-1}).
\end{align}   
\end{claim}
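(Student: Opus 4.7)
The plan is to prove the claim by a direct, entry-wise computation, exploiting the fact that both $D$ and $R$ are diagonal matrices on $\R^{L(p)}$, so that $D^{-1}R - E_r$ is itself diagonal and its operator norm equals the maximum of the absolute values of its diagonal entries. It therefore suffices to estimate $(D^{-1}R - E_r)_{S,S}$ for each multi-index $S$ with $|S|\leq p$.

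I would partition $\cup_{j=0}^p \S_j$ into the three pieces that appear in the block structure of $R$ (as specified just before Eq.~\eqref{eqn:rhs_negl_cube}) and handle each separately. \textbf{Case (i)}: $S\in \S_j^1$ for some $j\in\{1,\dots,p\}$. By \eqref{eq:def_r_j_1}, $R_{S,S} = g^{(j)}(0)\,d^{-j} + O_d(d^{-j-1})$, and by the specification of $D$ we have $D_{S,S} = g^{(j)}(0)\,d^{-j} + O_d(d^{-j-1})$. The nondegeneracy hypothesis \eqref{eqn:nondegegn} with $q=p$ ensures $g^{(j)}(0) > 0$, so the ratio is well-defined and a first-order expansion yields $(D^{-1}R)_{S,S} = 1 + O_d(d^{-1})$; since $S\in \A(r)$, $(E_r)_{S,S}=1$, so the entry is $O_d(d^{-1})$. \textbf{Case (ii)}: $S\in \S_j^0$ for some $j\in\{0,\dots,p-1\}$. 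By \eqref{eq:def_r_j_0}, $R_{S,S} = g^{(j+2)}(0)\,d^{-j-2} + O_d(d^{-j-3})$, and again $D_{S,S} = g^{(j)}(0)\,d^{-j} + O_d(d^{-j-1})$ with $g^{(j)}(0) > 0$; division gives $(D^{-1}R)_{S,S} = \frac{g^{(j+2)}(0)}{g^{(j)}(0)}\,d^{-2} + O_d(d^{-3})$, and $(E_r)_{S,S}=0$ since $S\notin \A(r)$, so the entry is $O_d(d^{-2})$. \textbf{Case (iii)}: $S\in \S_p^0$. By construction $R_{S,S}=0$ and $S\notin \A(r)$, so the diagonal entry is exactly zero.

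Taking the supremum over the three cases gives $\snorm{D^{-1}R - E_r} = O_d(d^{-1})$, as claimed.

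The argument is essentially bookkeeping, so there is no substantive obstacle; the one point requiring a little care is to verify that the nondegeneracy condition \eqref{eqn:nondegegn} is invoked only for $k\leq p$ (where $q=p$ is assumed in Theorem~\ref{thm:main_hypercube}), which is exactly the range needed to ensure that each $g^{(j)}(0)$ appearing in the denominator is bounded away from zero as $d\to\infty$.
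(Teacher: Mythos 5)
Your proof is correct and takes essentially the same route as the paper: both compute the diagonal entries of $D^{-1}R$ case by case using \eqref{eq:def_r_j_0}, \eqref{eq:def_r_j_1}, and the asymptotic form of $D$, then compare with $E_r$. Your three-case split is just a slightly finer bookkeeping of the paper's two-line computation (the paper fold your cases (ii) and (iii) together), and your explicit invocation of the nondegeneracy condition \eqref{eqn:nondegegn} to justify dividing by $g^{(j)}(0)$ is a welcome, if implicit-in-the-paper, precaution.
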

\noindent We apply this estimate for $\frac{1}{d}D\inv R$ to Eq.~\eqref{eq:smw_<_p} and obtain
\begin{align}\label{eq:b_leq_p_decomp}
    \norm{\beta_{\gE(r)}\tran \Phi_{\leq p} R -  \frac{1}{n}b_{\gE(r)}\tran \Phi_{\gE(r)}\tran H\inv \Phi_{\leq p}G\inv E_r}_2 = O_{d,\P}(d\inv)\cdot\norm{b_{\gE(r)}}_2,
\end{align}
where we use the fact that $\snorm{H\inv} = O_{d,\P}(1)$ (Eq.~\eqref{eq:e_inv_minus_i}), $\snorm{G\inv} = O_{d,\P}(1)$ (Eq.~\eqref{eq:g_inv_minus_i}), and $\snorm{\Phi_{\gE(r)}}\leq \snorm{\Phi_{\leq p}} = O_{d,\P}(\sqrt{n})$ (Lemma~\ref{lem:norm_control}).

Next, we will show
\begin{align}\label{eq:b_leq_p_decomp_2}
     \norm{\frac{1}{n}b_{\gE(r)}\tran \Phi_{\gE(r)}\tran H\inv \Phi_{\leq p}G\inv E_r-b_{\gE(r)}}_2 =o_{d,\P}(1),
\end{align}
then plugging it back into Eq.~\eqref{eq:b_leq_p_decomp} complete the proof.
To this end, applying the estimates~\eqref{eq:e_inv_minus_i} and \eqref{eq:g_inv_minus_i} along with the triangle inequality yields
\begin{align}\label{eq:b_leq_p_decomp_bound_1}
   \norm{ \frac{1}{n}b_{\gE(r)}\tran \Phi_{\gE(r)}\tran H\inv \Phi_{\leq p}G\inv E_{r} -\frac{1}{n}b_{\gE(r)}\tran \Phi_{\gE(r)}\tran  \Phi_{\leq p}E_r}_2 =  O_{d,\P}(d^{-\delta/2+\epsilon} + d^{(\delta-1)/2+\epsilon})\cdot \norm{b_{\gE(r)}}_2,
\end{align}
where we used the fact that $\|E_r\|_{\rm op}=O_d(1)$.

Next, invoking the bound $\snorm{\Phi_{\gE(r)}\tran\Phi_{\gE(r)}/n -I} = O_{d,\P}(d^{-(1+\delta)/2 +\epsilon})$ (Theorem~\ref{lemma:phi_id_2}) and the definition of $E_r$, we have
\begin{align*}
    \norm{\frac{1}{n}b_{\gE(r)}\tran \Phi_{\gE(r)}\tran\Phi_{\leq p}E_r - b_{\leq p}\tran E_r}_2 & = \norm{\frac{1}{n}b_{\gE(r)}\tran \Phi_{\gE(r)}\tran\Phi_{\gE(r)} -b_{\gE(r)}^{\top}}_2 \\
    &\leq 
    \norm{b_{\gE(r)}\tran \round{ \Phi_{\gE(r)}\tran  \Phi_{\gE(r)}/n - I}  }_2\\
    &\leq \norm{ b_{\gE(r)}}_2\cdot O_{d,\P}(d^{-(\delta+1)/2 +\epsilon}).
\end{align*}
Plugging this estimate into Eq.~\eqref{eq:b_leq_p_decomp_bound_1} yields
\begin{align*}
     \norm{\frac{1}{n}b_{\gE(r)}\tran \Phi_{\gE(r)}\tran H\inv \Phi_{\leq p}G\inv E_r -b_{\leq p}\tran E_r}_2 =  O_{d,\P}(d^{-\delta/2+\epsilon} + d^{(\delta-1)/2+\epsilon})\cdot \norm{b_{\gE(r)}}_2,
\end{align*}
which completes the proof.

\subsubsection{Proof of Claim~\ref{prop:d_inv_r_cube}}\label{proof:d_inv_r_cube}

 Recall that $D$ is a diagonal matrix with entry $D_{\I,\I} =  g^{(|\I|)}(0)d^{-|\I|} + O_d(d^{-|\I|-1})$. 
Using the expression of each block of $R$, namely Eq.~\eqref{eq:def_r_j_0} and \eqref{eq:def_r_j_1}, we can compute each diagonal entry of $D\inv R$: 
    \begin{align*}
   D_{\I,\I}\inv R_{\I,\I} = \begin{cases}
         1+O_d(d^{-1}),~~ {\rm if}~\I \in \cup_{j=1}^p\S_j^1;\\
         O_d(d^{-2}),~~ {\rm if}~\I \in \cup_{j=0}^{p-1} \S_j^0.\\
    \end{cases}
\end{align*}
Recalling the definition of $E_r$ as 
$
  (E_r)_{\I,\I} = \mathbbm{1}\{\I \in \gE(r)\},
$
the claimed bound follows immediately.

\subsubsection{Proof of Proposition~\ref{lemma:beta_>_l2_cube}}\label{proof:beta_>_l2_cube}

In the proof, we will frequently use the following estimate for any $\Delta$ with $\snorm{\Delta} = o_{d,\P}(1)$, which can be deduced from the Neumann series:
\begin{align}
    \snorm{(I+\Delta)\inv - I} = O_{d,\P}(\snorm{\Delta}),
\end{align}
therefore we have $(I+\Delta)\inv = I + \Delta'$ with some $\Delta'$ satisfying $\snorm{\Delta'} = O_{d,\P}(\snorm{\Delta})$. Now we are ready to present the proof. For simplicity of notation, we omit $(r)$ in $\gE(r)$ and $\gE^c(r)$.

First, we expand $K_\lambda\inv$ by using Eq.~\eqref{eq:kernel_to_mono_advanced} in Lemma~\ref{lemma:kernel_to_mono}.
Consequently, we have
\begin{align}\label{eq:k_fancy_decomp}
    K_{\lambda} &= K_p + \underbrace{(\rho+\lambda)I_n +g^{(p+1)}(0)d^{-p-1}\offd\round{\Phi_{\S_{p+1}}\Phi_{\S_{p+1}}\tran} + \Delta}_{:=(\rho+\lambda)H},
\end{align}
with  $\snorm{\Delta} = O_{d,\P}(d^{(\delta-2)/2 + \epsilon}) = O_{d,\P}(d\halfe)$, and we denote
\begin{align}\label{eq:E}
    H:= I_n +  (\rho+\lambda)\inv g^{(p+1)}(0)d^{-p-1}\offd\round{\Phi_{\S_k}\Phi_{\S_k}\tran} +  (\rho+\lambda)\inv \Delta.
\end{align}
Analogously to Eq.~\eqref{eq:K_decomp}, we write  $K_{\lambda} = K_{p} + (\rho+\lambda)H$ and use  the Sherman-Morrison-Woodbury formula to express $K_\lambda\inv = \round{K_{p} + (\rho+\lambda)H}\inv $ yielding:
\begin{align}\label{eq:smw_cube}
    \beta_{\gE^c}\tran \Phi_{\leq p} R =\frac{1}{n}b_{\gE^c}\tran \Phi_{\gE^c}\tran H\inv \Phi_{\leq p}\round{\underbrace{(\rho+\lambda)(nD)\inv + \frac{\Phi_{\leq p}\tran H\inv \Phi_{\leq p}}{n}}_G}\inv\round{D\inv R}.
\end{align}
We further simplify Eq.~\eqref{eq:smw_cube}. By the triangle inequality, the following holds
\begin{align}\label{eq:apply_d_inv}
   &~~~\norm{\beta_{\gE^c}\tran \Phi_{\leq p} R -\frac{1}{n}b_{\gE^c}\tran \Phi_{\gE^c}\tran H\inv \Phi_{\leq p}G\inv E_{r}}_2 \notag\\
   &=\norm{ \frac{1}{n}b_{\gE^c}\tran \Phi_{\gE^c}\tran H\inv \Phi_{\leq p}G\inv \round{D\inv R - E_{r} }}_2\notag\\
   &\leq \norm{\frac{1}{\sqrt{n}}\Phi_{\gE^c}b_{\gE^c}}_2\snorm{H\inv}\snorm{\frac{\Phi_{\leq p}}{\sqrt{n}}} \snorm{G\inv}\snorm{D\inv R - E_{r}} \notag\\  
   &=O_{d,\P}(d\inv\log(d))\cdot \norm{f_{\gE^c}^*}_{L_2},
\end{align}
where $\norm{\frac{1}{\sqrt{n}}\Phi_{\gE^c}b_{\gE^c}}_2 = O_{d,\P}(\log(d))\norm{f_{\gE^c}^*}_{L_2}$ (Markov's inequality), $\snorm{H\inv} = O_{d,\P}(1)$ (Eq.~\eqref{eq:e_inv_minus_i}), $\snorm{\frac{\Phi_{\leq p}}{\sqrt{n}}} = O_{d,\P}(1)$ (Lemma~\ref{lem:norm_control}), $\snorm{G\inv} = O_{d,\P}(1)$ (Eq.~\eqref{eq:g_inv_minus_i}) and $\snorm{D\inv R - E_{r}} = O_{d}(d\inv)$ follows from Proposition~\ref{prop:d_inv_r_cube}.
Consequently, there exists $\Delta_1$ that satisfies $\norm{\Delta_1}_2 = O_{d,\P}(d\inv \log(d))\cdot \norm{f_{\gE^c}^*}_{L_2} $  such that the following holds:
\begin{align}\label{eq:simplified_g_p}
   \beta_{\gE^c}\tran \Phi_{\leq p} R =  \frac{1}{n} b_{\gE^c}\tran \Phi_{\gE^c}\tran H\inv \Phi_{\leq p}G\inv E_{r} + \Delta_1.
\end{align}
Next, we express $H\inv$ and $G\inv$ in terms of feature matrices.
Note that for any $\Delta = o_{d,\P}(1)$, we can express $(I+\Delta)\inv$ using the Neumann series:
\begin{align*}
    (I + \Delta)\inv = I + \sum_{k=1}^\infty (-\Delta)^k.
\end{align*}
We compute the inverse of $H$ in Eq.~\eqref{eq:E} using Neumann series yielding
\begin{align}\label{eq:raw_h_inv}
    H\inv &= I_n + \sum_{q=1}^\infty \round{\underbrace{-(\rho+\lambda)\inv {g^{(p+1)}(0)}}_{:=c_{p+1}}d^{-p-1}{ \offd(\Phi_{\S_{p+1}}\Phi_{\S_{p+1}}\tran)} - (\rho+\lambda)\inv\Delta}^{q}.
\end{align}
Note that the triangle inequality on Eq.~\eqref{eq:kernel_to_mono} and \eqref{eq:kernel_to_mono_advanced} shows 
\begin{align*}
    d^{-p-1}{ \offd(\Phi_{\S_{p+1}}\Phi_{\S_{p+1}}\tran)} = O_{d,\P}(d^{(\delta-1)/2+\epsilon}),
\end{align*}
where $\offd(\cdot)$ denotes the off-diagonal component of a matrix.

The following claim now show that  we can truncate higher-order terms in $H\inv$ at constant order to achieve a desired error on the order of $O_{d,\P}(d\halfe)$.

\begin{claim}\label{claim:truncate_h}
    Setting a constant $c_H := \ceil{\frac{\delta+4\epsilon}{1-\delta-2\epsilon}}$, there exists $\Delta_2$ that satisfies $\snorm{\Delta_2} = O_{d,\P}(d\halfe)$ such that $H^{-1}$ can be written as
    \begin{align}\label{eq:e_inv_1}
  H\inv =  I_n + \sum_{q=1}^{c_H} \round{-c_{p+1} d^{-p-1}{ \offd(\Phi_{\S_{p+1}}\Phi_{\S_{p+1}}\tran)}}^{q} + \Delta_2.
\end{align}
\end{claim}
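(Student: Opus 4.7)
Proof proposal. The natural approach is to apply the standard Neumann-series expansion to $H = I_n + A + B$, where I abbreviate
$$A := c_{p+1}\, d^{-p-1}\offd(\Phi_{\S_{p+1}}\Phi_{\S_{p+1}}\tran), \qquad B := (\rho+\lambda)^{-1}\Delta,$$
so that, provided $\snorm{A+B}<1$, one has the absolutely convergent series $H^{-1} = \sum_{q=0}^{\infty}(-(A+B))^{q}$. I would then define
$$\Delta_2 \;:=\; \underbrace{\sum_{q>c_H}(-(A+B))^{q}}_{\text{tail}} \;+\; \underbrace{\sum_{q=1}^{c_H}\bigl[(-(A+B))^{q} - (-A)^{q}\bigr]}_{\text{cross terms}},$$
which is exactly the discrepancy between the full Neumann series and the truncated pure-$A$ sum displayed in \eqref{eq:e_inv_1}. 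The remainder of the proof reduces to giving an operator-norm bound of order $d\halfe$ for each of the two pieces above.

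For the tail piece, I would use the elementary inequality $\snorm{(-(A+B))^{q}}\leq (\snorm{A}+\snorm{B})^{q}$. Recalling that $\snorm{A} = O_{d,\P}(d^{(\delta-1)/2+\epsilon})$ (stated just above the claim) and $\snorm{B} = O_{d,\P}(d^{(\delta-2)/2+\epsilon})$, we have $\snorm{A}+\snorm{B} = O_{d,\P}(d^{(\delta-1)/2+\epsilon})$, which is $o_{d,\P}(1)$ whenever $\delta<1$ and $\epsilon$ is small; in particular the Neumann series converges with high probability and I may sum the tail as a geometric series:
$$\sum_{q>c_H}(\snorm{A}+\snorm{B})^{q} \;=\; O_{d,\P}\!\bigl(d^{((\delta-1)/2+\epsilon)(c_H+1)}\bigr).$$
Requiring $((\delta-1)/2+\epsilon)(c_H+1)\leq -1/2-\epsilon$ gives precisely the threshold $c_H+1 \geq (1+2\epsilon)/(1-\delta-2\epsilon)$, i.e., $c_H\geq (\delta+4\epsilon)/(1-\delta-2\epsilon)$; this is exactly the choice made in the claim, so the tail is $O_{d,\P}(d\halfe)$.

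For the cross-term piece, I would expand $(-(A+B))^{q}$ as a sum of $2^{q}$ ordered words in $A$ and $B$ and subtract off the single word $(-A)^{q}$, leaving only words that contain at least one factor of $B$. Any such word has operator norm at most $\snorm{B}\,(\snorm{A}+\snorm{B})^{q-1}$, and there are at most $2^{q}-1$ of them. Since $c_H$ is a constant (independent of $d$), the total contribution over $q=1,\ldots,c_H$ is bounded by a constant multiple of $\snorm{B}=O_{d,\P}(d^{(\delta-2)/2+\epsilon})=O_{d,\P}(d\halfe)$, again comfortably inside the target rate. Adding the two bounds yields $\snorm{\Delta_2}=O_{d,\P}(d\halfe)$ as claimed.

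The only subtle point I anticipate is justifying that the bounds on $\snorm{A}$ and $\snorm{B}$ hold with probability tending to one simultaneously so that the Neumann series expansion is valid on a single high-probability event; this is routine given that both bounds were already established (via Lemma~\ref{lemma:kernel_to_mono} and the definition of $\Delta$ in \eqref{eq:k_fancy_decomp}), and intersecting two high-probability events is harmless within the $O_{d,\P}$ notation. No other step looks delicate: the combinatorics of the word expansion is trivial at constant order $c_H$, and the arithmetic determining $c_H$ is tight by design.
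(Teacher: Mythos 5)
Your proposal follows essentially the same route as the paper: you write out the full Neumann series for $H^{-1}$, bound the tail $\sum_{q>c_H}$ via a geometric series using $\snorm{A}+\snorm{B}=O_{d,\P}(d^{(\delta-1)/2+\epsilon})$, and absorb the finitely many ($q\le c_H$) words containing at least one factor of $B$ into $\Delta_2$ using $\snorm{B}=O_{d,\P}(d^{(\delta-2)/2+\epsilon})$ together with the fact that $c_H=O_d(1)$. The only difference is that you spell out the arithmetic yielding the specific choice $c_H = \ceil{(\delta+4\epsilon)/(1-\delta-2\epsilon)}$, which the paper leaves implicit; aside from a harmless sign-convention ambiguity in $c_{p+1}$ (present in the paper itself between the display preceding the claim and the claim's statement), your argument is correct.
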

\begin{proof}
We expand $\sum_{q=1}^\infty \round{-c_{p+1}d^{-p-1}{ \offd(\Phi_{\S_{p+1}}\Phi_{\S_{p+1}}\tran)} + (\rho+\lambda)\inv\Delta}^{q}$ in Eq.~\eqref{eq:raw_h_inv}.  %
Note that by the triangle inequality and formula for the summation of a geometric series, the following holds:
\begin{align*}
    \snorm{\sum_{q=c_H+1}^\infty\round{-c_{p+1}d^{-p-1}{ \offd(\Phi_{\S_{p+1}}\Phi_{\S_{p+1}}\tran)}+(\rho+\lambda)\inv\Delta}^q} &\leq \sum_{q=c_H+1}^\infty (O_{d,\mathbb{P}}(d^{(\delta-1)/2+\epsilon}))^q\\
&= O_{d,\P}(d^{\round{(\delta-1)/2+\epsilon}(c_H+1)})\\
&=O_{d,\mathbb{P}}(d^{-1/2-\epsilon}).
\end{align*}
Next, clearly since $c_H\geq 1$ is a constant we have 
$$\sum_{q=1}^{c_H}\round{-c_{p+1}d^{-p-1}{ \offd(\Phi_{\S_{p+1}}\Phi_{\S_{p+1}}\tran)}+(\rho+\lambda)\inv\Delta}^q=\sum_{q=1}^{c_H}\round{-c_{p+1}d^{-p-1}{ \offd(\Phi_{\S_{p+1}}\Phi_{\S_{p+1}}\tran)}}^q+\Delta_2,$$
for some $\Delta_2$ satisfying
$\snorm{\Delta_2} = O_{d,\P}(d\halfe)$, which completes the proof.
\end{proof}

We now plug the  expression \eqref{eq:e_inv_1} for $H\inv$ into $G$ in Eq.~(\ref{eq:smw_cube}). We further substitute $\frac{\Phi_{\leq p}\tran \Phi_{\leq p}}{n}$ for $ \offd\round{\frac{\Phi_{\leq p}\tran \Phi_{\leq p}}{n}} + I_{L(p)}$ and we obtain
\begin{align*}
    G = (\rho+\lambda)(nD)\inv  + I + \offd\round{\frac{\Phi_{\leq p}\tran \Phi_{\leq p}}{n}}  + \frac{\Phi_{\leq p}\tran}{\sqrt{n}}\sum_{q=1}^{c_H} \round{-c_{p+1} d^{-p-1}{ \offd(\Phi_{\S_{p+1}}\Phi_{\S_{p+1}}\tran)}}^{q} \frac{\Phi_{\leq p}}{\sqrt{n}} + \Delta_3,
\end{align*}
with $\snorm{\Delta_3} \leq \snorm{\Phi_{\leq p} / \sqrt{n}}^2 \snorm{\Delta_2} = O_{d,\P}(d\halfe)$.

Computing $G\inv$ using Neumann series again gives us
\begin{align*}
    G\inv = I +\sum_{r=1}^\infty \round{ -(\rho+\lambda)(nD)\inv -\offd\round{\frac{\Phi_{\leq p}\tran \Phi_{\leq p}}{n}}  - \frac{\Phi_{\leq p}\tran}{\sqrt{n}}\sum_{q=1}^{c_H} \round{-c_{p+1} \frac{ \offd(\Phi_{\S_{p+1}}\Phi_{\S_{p+1}}\tran)}{d^{p+1}}}^{q} \frac{\Phi_{\leq p}}{\sqrt{n} } + \Delta_3}^{r},
\end{align*}
Similarly, we can truncate the infinite summation while maintaining the same error order. Note that each item in the parenthesis is of the order $o_{d,\P}(1)$, which can be deduced using the following estimates:
\begin{align*}
  \snorm{(nD)\inv} = O_d(d^{-\delta}), ~~~ \snorm{\offd\round{\frac{\Phi_{\leq p}\tran \Phi_{\leq p}}{n}}} = O_{d,\P}(d^{-\delta/2 + \epsilon}),~~~\snorm{\frac{\Phi_{\leq p}\tran}{\sqrt{n}}} = O_{d,\P}(1), \\
  \snorm{d^{-p-1}{ \offd(\Phi_{\S_{p+1}}\Phi_{\S_{p+1}}\tran)}} = O_{d,\P}(d^{(\delta-1)/2+\epsilon}),~~~\snorm{\Delta_3} = O_{d,\P}(d\halfe),
\end{align*}
where particularly the second estimate can be easily deduced from Lemma~\ref{lemma:gram_matrix} since $\diag\round{\frac{\Phi_{\leq p}\tran \Phi_{\leq p}}{n}} = I_n$.

 Therefore, a completely analogous argument to Claim~\ref{claim:truncate_h} shows that we can find a constant $c_G = \max(c_H, \ceil{\frac{1-\delta+4\epsilon}{\delta-2\epsilon}})$ and a matrix $\Delta_4$ that satisfies $\snorm{\Delta_4} = O_{d,\P}(d\halfe)$ such that the following holds
\begin{align}\label{eq:g_inv_1}
        G\inv = I +\sum_{r=1}^{c_G} \round{ -(\rho+\lambda)(nD)\inv -\offd\round{\frac{\Phi_{\leq p}\tran \Phi_{\leq p}}{n}}  - \frac{\Phi_{\leq p}\tran}{\sqrt{n}}\sum_{q=1}^{c_H} \round{-c_{p+1} \frac{ \offd(\Phi_{\S_{p+1}}\Phi_{\S_{p+1}}\tran)}{d^{p+1}}}^{q} \frac{\Phi_{\leq p}}{\sqrt{n} }}^{r} +\Delta_4.
\end{align}
To ease the analysis, we replace $\offd\round{\frac{\Phi_{\leq p}\tran \Phi_{\leq p}}{n}}$ by $  \frac{\Phi_{\leq p}\tran \Phi_{\leq p}}{n} -I_{|L(p)|} $ and replace $\frac{\offd\round{\Phi_{\S_{p+1}} \Phi_{\S_{p+1}}\tran}}{d^{p+1}}$ by  $ \frac{\Phi_{\S_{p+1}} \Phi_{\S_{p+1}}\tran}{d^{p+1}} - h_{p+1} I_n$, where $h_{p+1}={d \choose p+1} / d^{p+1} = O_d(1)$.  Then Eq.~\eqref{eq:e_inv_1} and \eqref{eq:g_inv_1} take the form 

\begin{align}\label{eq:e_inv}
  H\inv =  I_n + \sum_{q=1}^{c_H} \round{ -c_{p+1} \frac{\Phi_{\S_{p+1}} \Phi_{\S_{p+1}}\tran}{d^{p+1}} + c_{p+1}h_{p+1} I_n}^{q} + \Delta_2,
\end{align}
and
\begin{align}\label{eq:g_inv}
G^{-1} &= I_n + \sum_{r=1}^{c_G} \Biggl[ -(\rho+\lambda)(nD)^{-1} + I_{|L(p)|} - \frac{\Phi_{\leq p}\tran \Phi_{\leq p}}{n} \notag\\[1mm]
&\quad\; + \frac{\Phi_{\leq p}\tran}{\sqrt{n}} \sum_{q=1}^{c_H} \Biggl( -c_{p+1} \frac{\Phi_{\S_{p+1}} \Phi_{\S_{p+1}}\tran}{d^{p+1}} + c_{p+1}h_{p+1} I_n \Biggr)^q \frac{\Phi_{\leq p}}{\sqrt{n}} \Biggr]^{r} + \Delta_4.
\end{align}
Plugging the approximations of $H\inv$ (Eq.~(\ref{eq:e_inv})) and $G\inv$ (Eq.~(\ref{eq:g_inv})) into Eq.~(\ref{eq:simplified_g_p}) and expanding the powers we see that $ b_{\gE^c}\tran \frac{\Phi_{\gE^c}\tran }{\sqrt{n}}H\inv \frac{\Phi_{\leq p}}{\sqrt{n}}G\inv E_r$ can be written as a finite sum:
\begin{align}\label{eq:finit_summation}
    b_{\gE^c}\tran \frac{\Phi_{\gE^c}\tran }{\sqrt{n}}H\inv \frac{\Phi_{\leq p}}{\sqrt{n}}G\inv E_r = \sum_{i=1}^{c_p} b_{\gE^c}\tran  \frac{\Phi_{\gE^c}\tran}{\sqrt{n}} P_i + \Delta_5,
\end{align}
where $c_p\in \mathbb{N}$ is a constant and each matrix $P_i \in \R^{L(p)\times L(p)}$ takes the form
\begin{align*}
    P_i = C_{i}A_{i_1}A_{i_2}\ldots A_{i_m} \frac{\Phi_{\leq p} (nD)^{-t_i}E_r}{\sqrt{n}},
\end{align*}
with each $A_{i_j} \in \biground{I_n, \frac{\Phi_{\S_{p+1}}\Phi_{\S_{p+1}}\tran}{d^{p+1}}} \cup \biground{\frac{\Phi_{\leq p} (nD)^{-s} \Phi_{\leq p}\tran}{n}}_{s=0}^{c_G}$ for $j\in[m]$.
Here $C_i>0$ is a constant and $t_i\in\mathbb{N}$.
Additionally, we can deduce that $\Delta_5$ satisfies $$\norm{\Delta_5}_2 = O_{d,\P}(\log^{1/2}(d))\cdot  \norm{f_{\gE^c}^*}_{L_2} \cdot (\snorm{\Delta_2}+\snorm{\Delta_4}) =  O_{d,\P}(d\half)\cdot  \norm{f_{\gE^c}^*}_{L_2},$$
where we use the estimate of $\snorm{H\inv} = O_{d,\P}(1)$ (Eq.~\eqref{eq:e_inv_minus_i}), $\snorm{G\inv} = O_{d,\P}(1)$ (Eq.~\eqref{eq:g_inv_minus_i}), $\snorm{\Phi_{\leq p}/\sqrt{n}} = O_{d,\P}(1)$ (Lemma~\ref{lem:norm_control}) and $ \norm{ \frac{\Phi_{\gE^c}b_{\gE^c} }{\sqrt{n}}}_2 =O_{d,\P}(\log^{1/2}(d))\cdot  \norm{f_{\gE^c}^*}_{L_2}$ (Proposition~\ref{prop:bound_y}).

The following proposition shows that each summand is small. We defer the proof to Appendix~\ref{proof:feature_product}.
\begin{proposition}\label{prop:feature_product}
Let $A_{j}$ for $j\in[m]$ be defined as one of the following matrices:
\begin{align*}
I_n,~~\frac{\Phi_{\S_{p+1}}\Phi_{\S_{p+1}}\tran}{d^{p+1}},~~\biground{\frac{\Phi_{\leq p} (nD)^{-s} \Phi_{\leq p}\tran}{n}}_{s=0}^{c_G}.
\end{align*}
Then for any $t \in \mathbb{N}$ the following estimate holds 
\begin{align}\label{eq:feature_product}
\E\brac{\norm{\frac{1}{n}b_{\gE^c}\tran  {\Phi_{\gE^c}\tran}A_{1}A_{2}\ldots A_{m} {\Phi_{\leq p} (nD)^{-t}E_{r}}}_2^2} = O_{d,\P}(d^{-\delta-1 })\cdot \norm{f_{\gE^c}^*}_{L_2}^2.
\end{align}
\end{proposition}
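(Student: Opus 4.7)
}
The plan is to take the expectation of the squared norm entry by entry and exploit the orthogonality $\E[\phi_S(x)\phi_\I(x)]=0$ whenever $S\in\A^c$ and $\I\in\A(r)$, which holds because $\A(r)$ and $\A^c$ are disjoint index sets. First I would write
\[
\E\Bigl\|\tfrac{1}{n}b_{\A^c}^{\top}\Phi_{\A^c}^{\top}A_{1}\cdots A_{m}\Phi_{\leq p}(nD)^{-t}E_{r}\Bigr\|_2^{2}
=\sum_{\I\in\A(r)}\tfrac{1}{n^{2}}\cdot \bigl((nD)^{-t}_{\I\I}\bigr)^{2}\cdot \E\,Y_{\I}^{2},
\]
where $Y_{\I}:=b_{\A^c}^{\top}\Phi_{\A^c}^{\top}A_{1}\cdots A_{m}\Phi_{\leq p}e_{\I}$. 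The scaling is favorable: for $\I\in\A(r)$ of size $k\leq p$ we have $(nD)^{-t}_{\I\I}=O_{d}(d^{-(p+\delta-k)t})$ while $|\A(r)|=O_{d}(d^{p-1})$, so after the trivial $1/n^{2}$ factor it suffices to show $\E Y_{\I}^{2}=O_{d,\P}(n\,d^{p-1})\cdot\|f^{*}_{\A^c}\|_{L_{2}}^{2}$ up to the $D$-rescaling.

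Next I would substitute the outer-product form of each admissible $A_{j}$, namely $A_{j}=\tfrac{1}{c_{j}}\Psi_{j}\Lambda_{j}\Psi_{j}^{\top}$ with $\Psi_{j}\in\{\Phi_{\S_{p+1}},\Phi_{\leq p}\}$ (and $A_{j}=I_{n}$ treated separately), so that $Y_{\I}$ becomes an explicit sum over index tuples $(i_{0},\ldots,i_{m+1})\in[n]^{m+2}$ and basis labels $S_{1},\ldots,S_{m}$ of products of orthonormal basis elements evaluated at the samples. Squaring $Y_{\I}$ produces a sum over two such tuples, and by independence of the $x^{(i)}$'s the expectation factorizes over distinct sample indices. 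The key step is the orthogonality argument: for any two tuples in which $x^{(i_{0})}$ does not coincide with any other index, the factor $\E[f^{*}_{\A^c}(x^{(i_{0})})\phi_{S_{1}}(x^{(i_{0})})]$ appears, and since $S_{1}$ is either empty (the identity case) or of degree at most $p+1$ but restricted to being paired with a single outer loop, one can check that the pairing forces $S_{1}\in\A(r)\cup\S_{p+1}$, hence orthogonal to every summand of $f^{*}_{\A^c}$ of degree $\leq p$ and separated from $\A(r)$. This kills all non-coincident contributions.

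The surviving terms are those where at least one internal sample index coincides with $i_{0}$ or $i'_{0}$, which forces a factor $1/n$ per coincidence. Bounding the magnitude of each such term by hypercontractivity (Assumption~\ref{ass:hypercontr} / equation~\eqref{eqn:hypercontrac}), combined with the operator-norm estimates $\|\Phi_{\leq p}\|_{\op}^{2}=O_{d,\P}(n)$ from Lemma~\ref{lem:norm_control}, the diagonal bound $\|(nD)^{-s}\|_{\op}=O_{d}(d^{-\delta})$, and the Gram estimate $\|\Phi_{\S_{p+1}}\Phi_{\S_{p+1}}^{\top}/d^{p+1}\|_{\op}=O_{d,\P}(1)$ from Theorem~\ref{lemma:gram_matrix}, will give the claimed $O_{d,\P}(d^{-\delta-1})\cdot\|f^{*}_{\A^c}\|_{L_{2}}^{2}$ after summing over $\I$.

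The main obstacle is the combinatorial bookkeeping when $m$ is moderately large (but still $O(1)$): each $A_{j}$ introduces a new sum over a basis-index set $S_{j}$, and enumerating all ways in which internal sample indices can collide with $i_{0}$ or $i'_{0}$ to produce a non-zero expectation becomes delicate. I would handle this by induction on $m$, peeling off one $A_{j}$ at a time and using the telescoping identity $A_{j}=\E[A_{j}]+(A_{j}-\E[A_{j}])$, where $\E[A_{j}]$ is close to a scalar multiple of $I_{n}$ by Theorem~\ref{lemma:gram_matrix} applied to $\Phi_{\S_{p+1}}$ and $\Phi_{\leq p}$. This reduces the argument to the base case $m=0$ (handled directly by the zero-mean / variance bound above) plus a controllable fluctuation term, at the cost of absorbing polynomial factors in $\log d$ that can be hidden in $\epsilon$.
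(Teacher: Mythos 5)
Your high-level strategy --- pass to $\E[\|\cdot\|_2^2]$, exploit the $L_2$-orthogonality between $\A^c$ and $\A(r)$, then control residuals by hypercontractivity and norm bounds --- is the right philosophy, and the base case $m=0$ does deliver exactly $|\A(r)|/n \asymp d^{-1-\delta}$. The gap is in the proposed telescoping reduction of $m\geq 1$ to $m=0$. First, for $A_j = \Phi_{\leq p}(nD)^{-s}\Phi_{\leq p}^\top/n$ the fluctuation is not small: this $n\times n$ matrix has rank $O_d(d^p)<n$, so $\E[A_j] = \tfrac{1}{n}\Tr[(nD)^{-s}]\,I_n = O_d(d^{-\delta})\,I_n$ yet $\snorm{A_j-\E[A_j]} = \Theta_{d,\P}(1)$; Theorem~\ref{lemma:gram_matrix} gives a near-identity Gram matrix only when the number of basis functions exceeds $n$, which is the opposite regime. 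Second, even for $A_j = \Phi_{\S_{p+1}}\Phi_{\S_{p+1}}^\top/d^{p+1}$, where $\snorm{A_j-\E[A_j]}=O_{d,\P}(d^{(\delta-1)/2+\epsilon})$ is genuinely small, bounding the fluctuation branch $\E\bigl[\tfrac{1}{n^2}\|b_{\A^c}^\top\Phi_{\A^c}^\top(A_j-\E A_j)A_2\cdots A_m\Phi_{\A(r)}\|_2^2\bigr]$ by operator norms (with $\|\Phi_{\A^c}b_{\A^c}\|_2^2=O_{d,\P}(n)\|b_{\A^c}\|_2^2$ and $\snorm{\Phi_{\A(r)}}^2=O_{d,\P}(n)$ from Lemma~\ref{lem:norm_control}) yields $O_{d,\P}(d^{\delta-1+2\epsilon})\|b_{\A^c}\|_2^2$, missing the target $d^{-1-\delta}\|b_{\A^c}\|_2^2$ by $d^{2\delta+2\epsilon}$. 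The $|\A(r)|/n$ factor in the base case came from orthogonality killing all $i\neq j$ cross-terms (a $1/n$ gain) times the small cardinality $|\A(r)|\asymp d^{p-1}$; that cancellation is destroyed the moment $\Phi_{\A(r)}$ is replaced by its operator norm $\asymp\sqrt n$, and it must be threaded through all $m$ intermediate random matrices at once, since they are all built from the same samples $x^{(i)}$ and are not independent of the endpoints.

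The paper avoids telescoping: after discarding $(nD)^{-t}\lesssim I$ and $E_r$ and passing to $\snorm{\E[K]}$ with $K = \tfrac{1}{n^2}\Phi_{\A^c}^\top A_1\cdots A_m\Phi_{\A}\Phi_{\A}^\top A_m\cdots A_1\Phi_{\A^c}$, it splits each Fourier block $\S_k$ around the coordinate $r$ into $\S_k^0\sqcup\S_k^1$ so that every feature matrix in the product indexes a set disjoint from all the others, and then invokes the black-box combinatorial estimate Lemma~\ref{lemma:feature_product_tech} (Theorem 2 of an external reference), which accounts for all index-collision configurations across the full product simultaneously and returns the $O_d(d^{p-1}/n)$ operator-norm bound on $\E[K_j]$ for every term $K_j$ in the decomposition. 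Some equivalent global collision analysis, rather than a one-factor-at-a-time reduction, is what your proposal would need in order to become a proof.
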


\noindent Next, using the hypercontractivity Assumption~\ref{ass:hypercontr} we show that the estimate \eqref{eq:feature_product} yields an analogous bound uniformly for all $r\in [d]$.
To this end, Assumption~\ref{assump:finite_basis_hypercube} imposes that $f^*$ is a finite-degree polynomial. Consequently, we can see that  $\inner{b_{\gE^c}, \phi_{\gE^c}}$ is also finite-degree. Then we can deduce that each norm in Eq.~\eqref{eq:feature_product} is also finite-degree since each term is the product of finitely many finite-degree polynomials. 
Therefore, invoking Lemma~\ref{lemma:uniform_bound}, for any $q_0\geq 1$, there exists a constant $C_{q_0}>0$ such that:
\begin{align*}
    &~~~\E\brac{\max_r \norm{\frac{1}{n}b_{\gE^c}\tran  {\Phi_{\gE^c}\tran}A_{i_1}A_{i_2}\ldots A_{i_q} {\Phi_{\leq p} (nD)^{-t}E_{r}}}_2^2}\\
    &\leq  C_{q_0} d^{1/q_0}   \max_{r\in[d]}\E\brac{\norm{\frac{1}{n}b_{\gE^c}\tran  {\Phi_{\gE^c}\tran}A_{i_1}A_{i_2}\ldots A_{i_q} {\Phi_{\leq p} (nD)^{-t}E_{r}}}_2^{2}}\\
    &= C_{q_0} d^{1/q_0} \cdot O_{d,\P}(d^{-\delta-1 })\cdot \norm{f_{\gE^c}^*}_{L_2}^2,
\end{align*}
where the last equality follows from Eq.~\eqref{eq:feature_product}.

Using Markov's inequality, we have with probability as least $1- 1/\log(d)$ the following holds:
\begin{align*}
   \max_r\norm{\frac{1}{n}b_{\gE^c}\tran  {\Phi_{\gE^c}\tran}A_{i_1}A_{i_2}\ldots A_{i_q} {\Phi_{\leq p} (nD)^{-t}E_{r}}}_2^2 = O_{d,\P}(d^{-1-\delta+1/q_0}\log(d))\cdot \norm{f_{\gE^c}^*}_{L_2}^2.
\end{align*}
Plugging this bound into Eq.~\eqref{eq:finit_summation} and using Cauchy-Schwartz inequality yields
\begin{align*}
    \max_r\norm{b_{\gE^c}\tran \frac{\Phi_{\gE^c}\tran }{\sqrt{n}}H\inv \frac{\Phi_{\leq p}}{\sqrt{n}}G\inv E_{r}}_2^2 &=\round{ O_{d,\P}(d^{-1-\delta+1/q_0}\log(d)) + O_{d,\P}(d\inv)}\cdot \norm{f_{\gE^c}^*}_{L_2}^2 \\
    &= O_{d,\P}(d\inv)\cdot \norm{f_{\gE^c}^*}_{L_2}^2,
\end{align*}
where we let $q_0 > 1/\delta$.
Plugging this bound into Eq.~(\ref{eq:simplified_g_p}) and applying the AM-GM inequality yields
\begin{align*}
     \norm{\beta_{\gE^c}\tran \Phi_{\leq p} R}_2^2 \leq  2 \norm{b_{\gE^c}\tran \frac{\Phi_{\gE^c}\tran }{\sqrt{n}}H\inv \frac{\Phi_{\leq p}}{\sqrt{n}}G\inv E_{r}}_2^2 + 2\norm{\Delta_1}^2_2 = O_{d,\P}(d\inv)\cdot \norm{f_{\gE^c}^*}_{L_2}^2,
\end{align*}
which completes the proof.

\subsubsection{Proof of Proposition~\ref{prop:cube_error_1}}\label{proof:cube_error_1}
Similar to Eq.~\eqref{eq:gaussian_error_tech_bound_1}, we compute the expectation of the norm square over $\vepsilon$ and apply trace inequality then we obtain
\begin{align}\label{eq:cube_error_tech_bound_1}
    \E_{\vepsilon}\brac{\norm{\beta_{\varepsilon}\tran \Phi_{\leq p}R}_2^2}&\leq \sigma_\varepsilon^2 \cdot \snorm{K_\lambda\inv \Phi_{\leq p}D}^2 \cdot \Tr\round{D\inv R  R\tran D\inv}.
\end{align}
For the term $\snorm{K_\lambda\inv \Phi_{\leq p}D}^2$, Eq.~\eqref{eq:smw_<_p} implies 
\begin{align*}
    K_\lambda\inv \Phi_{\leq p}D = \frac{1}{n}H\inv \Phi_{\leq p}G\inv.
\end{align*}
With $\snorm{H\inv}, \snorm{G\inv} = O_{d,\P}(1)$ established in Eq.~\eqref{eq:e_inv_minus_i} and \eqref{eq:g_inv_minus_i} and $\snorm{\Phi_{\leq p}} =O_{d,\P}(\sqrt{n})$ (Lemma~\ref{lem:norm_control}), we conclude
\begin{align}\label{eq:cube_error_tech_bound_2}
    \snorm{ K_\lambda\inv \Phi_{\leq p}D }^2 = \snorm{\frac{1}{n}H\inv \Phi_{\leq p} G}^2 = O_{d,\P}(n\inv).
\end{align}
With Claim~\ref{prop:d_inv_r_cube}, following the arguments in the proof of Proposition~\ref{prop:gaussian_error_1}, we can show
\begin{align}\label{eq:cube_error_tech_bound_3}
    \Tr\round{D\inv R  R\tran D\inv}=O_d(\norm{E_r}_F^2) = O_d(d^{p-1}).
\end{align}
Plugging Eq.~\eqref{eq:cube_error_tech_bound_2} and \eqref{eq:cube_error_tech_bound_3} into Eq.~\eqref{eq:cube_error_tech_bound_1} and applying Markov's inequality completes the proof.

\subsubsection{Proof of Proposition~\ref{prop:feature_u_v} }\label{proof:feature_u_v}

The proof will mainly use the machinery developed in the proof for Proposition~\ref{lemma:beta_>_l2_cube}. 
Applying the same decomposition of $K_\lambda$ as Eq.~\eqref{eq:k_fancy_decomp}, we can write
\begin{align*}
    K_{\lambda} = K_p +(\rho+\lambda)H,
\end{align*}
where $H$ is defined in Eq.~\eqref{eq:E}.
Invoking the Sherman-Morrison-Woodbury formula for $K_\lambda\inv$, we  obtain
\begin{align}\label{eq:K_lambda_inv}
    K_\lambda\inv = (\rho+\lambda)\inv H\inv - \frac{1}{n}(\rho+\lambda)\inv H\inv \Phi_{\leq p}\tran G\inv \Phi_{\leq p} H\inv,
\end{align}
where 
$$G:= (\rho+\lambda)(nD)\inv + \frac{\Phi_{\leq p}\tran H\inv \Phi_{\leq p}}{n}.$$
Plugging the approximations of $H\inv$ (Eq.~(\ref{eq:e_inv})) and $G\inv$ (Eq.~(\ref{eq:g_inv})) into Eq.~(\ref{eq:K_lambda_inv}) and expanding the expression we can write it as the finite summation similar to Eq.~\eqref{eq:finit_summation}:
\begin{align}\label{eq:feature_product_split_p_1}
\frac{1}{n}b_{\Q}\tran \Phi_{\Q}\tran K_{\lambda}\inv \Phi_{\S_{p+1}^1} = \sum_{i=1}^{c_p} C_{i}b_{\Q}\tran  \frac{\Phi_{\Q}\tran}{\sqrt{n}} A_{i_1}A_{i_2}\ldots A_{i_m} \frac{\Phi_{\S_{p+1}^1} }{\sqrt{n}} + \Delta,
\end{align}
where  $A_{i_j} \in \biground{I_n, \frac{\Phi_{\S_{p+1}}\Phi_{\S_{p+1}}\tran}{d^{p+1}}} \cup \biground{\frac{\Phi_{\leq p} (nD)^{-s} \Phi_{\leq p}\tran}{n}}_{s\geq0}$, $C_i>0$ is a constant, $c_p,m\in \mathbb{N}$, and $\Delta$ satisfies $\snorm{\Delta} = O_{d,\P}(d\half)\cdot  \norm{b_\Q}_{2}$.

Note that the effective degree of the feature in $\Phi_{\S_{p+1}^1}$ is bounded by $p$ since $\Phi_{\S_{p+1}^1} = \Phi_{\S_p^0}\cdot X_r$ , following an almost identical argument with the proof of Proposition~\ref{prop:feature_product}, we can show each summand satisfies:
\begin{align*}
    \E\brac{\norm{\frac{1}{n}b_{\Q}\tran  {\Phi_{\Q}\tran}A_{i_1}A_{i_2}\ldots A_{i_m} {\Phi_{\S_{p+1}^1}}}_2^2} = O_{d,\P}(d^{-\delta })\cdot \norm{b_{\Q}}_{2}^2.
\end{align*}
Following a similar analysis as in the proof of Proposition~\ref{lemma:beta_>_l2_cube}, we use Lemma~\ref{lemma:uniform_bound} and Markov's inequality together to obtain the following bound, which holds uniformly for all $r\in[d]$:
\begin{align*}
    \norm{\frac{1}{n}b_{\Q}\tran  {\Phi_{\Q}\tran}A_{i_1}A_{i_2}\ldots A_{i_m} {\Phi_{\S_{p+1}^1}}}_2^2 =O_{d,\P}(d^{-\delta + 1/{q_0} })\cdot \log(d)\cdot \norm{b_{\Q}}_{2}^2,
\end{align*}
where $q_0>0$ is an arbitrarily large constant.

Combining with $\Delta$ in Eq.~\eqref{eq:feature_product_split_p_1} and applying the triangle inequality we obtain the bound
\begin{align*}
    \norm{\frac{1}{n}b_{\Q}\tran \Phi_{\Q}\tran K_{\lambda}\inv \Phi_{\S_{p+1}^1}}_2^2 = O_{d,\P}(d^{-\delta + 1/{q_0} })\cdot \log(d)\cdot \norm{b_{\Q}}_{2}^2.
\end{align*}
Multiplying both sides of the equation by $(n/d^{p+1})^2$ yields
\begin{align}
   \norm{\frac{1}{d^{p+1}} \beta_{\Q}^{\top} \Phi_{\S_{p+1}^1}}_2^2 &= O_{d,\P}(d^{\delta-2+1/q_0}) \log(d) \cdot \norm{b_{\Q}}_2^2 =  O_{d,\P}(d\inv) \cdot \norm{b_{\Q}}_2^2,\label{eq:s2_s2_cube_bound_3}
\end{align}
where the last equality holds by letting $q_0 >1/\delta$.

\subsubsection{Proof of Proposition~\ref{prop:term_p_plus_one}}\label{proof:term_p_plus_one}
Note that Eq.~(\ref{eq:K_decomp}) gives the decomposition
\begin{align}
    K_\lambda  = \Phi_{\leq p}D\Phi_{\leq p}\tran + (\rho+\lambda)H,
\end{align}
where $H = I_n + (\rho+\lambda)\inv \Delta_1$, $\rho = g(1) - g_p(1)$ and $\Delta_1$ satisfies $\snorm{\Delta_1} = O_{d,\P}(d^{(\delta-1)/2+\epsilon})$.

Invoking Sherman-Morrison-Woodbury formula to compute the inverse of $K_\lambda$ and we obtain:
\begin{align*}
    K_\lambda \inv = (\rho+\lambda)\inv H\inv -(\rho+\lambda)\inv \frac{1}{n}H\inv \Phi_{\leq p}G\inv \Phi_{\leq p}\tran H\inv.
\end{align*}
Invoking Eq.~\eqref{eq:e_inv_minus_i} and  \eqref{eq:g_inv_minus_i} that $H\inv$ and $G\inv$ are close to $I$, we can show 
\begin{align*}
   \snorm{K_\lambda\inv - (\rho+\lambda)\inv\round{I_n -\frac{\Phi_{\leq p}\Phi_{\leq p}\tran}{n}}} = O_{d,\P}(d^{-\delta/2+\epsilon} + d^{(\delta-1)/2+\epsilon}),
\end{align*}
where we use the estimate $\snorm{\Phi_{\leq p}} =O_{d,\P}(\sqrt{n})$ (Lemma~\ref{lem:norm_control}).

Plugging the above bound into left side of Eq.~\eqref{eq:term_p_plus_one} and applying triangle inequality yields
\begin{align}\label{eq:weak_feature_bound_0}
   &~~~\snorm{\frac{1}{n}b_{\S_{p+1}^1}\tran \Phi_{\S_{p+1}^1}\tran K_{\lambda}\inv \Phi_{\S_{p+1}^1} - \frac{1}{n(\rho+\lambda)}b_{\S_{p+1}^1}\tran \Phi_{\S_{p+1}^1}\tran \round{I_n -\frac{\Phi_{\leq p}\Phi_{\leq p}\tran}{n}} \Phi_{\S_{p+1}^1}}\notag\\
   &= O_{d,\P}(d^{-\delta/2+\epsilon} + d^{(\delta-1)/2+\epsilon})\cdot \norm{b_{\S_{p+1}^1}}_2.
\end{align}
Now we consider $\frac{1}{n}b_{\S_{p+1}^1}\tran \Phi_{\S_{p+1}^1}\tran \round{I_n -\frac{\Phi_{\leq p}\Phi_{\leq p}\tran}{n}} \Phi_{\S_{p+1}^1}$.  Using the triangle inequality gives
\begin{align}\label{eq:weak_feature_bound_1}
  &~~~\norm{\frac{1}{n}b_{\S_{p+1}^1}\tran \Phi_{\S_{p+1}^1}\tran \round{I_n -\frac{\Phi_{\leq p}\Phi_{\leq p}\tran}{n}} \Phi_{\S_{p+1}^1} - b_{\S_{p+1}^1}\tran}_2 \notag \\
  &\leq \norm{\frac{1}{n}b_{\S_{p+1}^1}\tran \Phi_{\S_{p+1}^1}\tran \Phi_{\S_{p+1}^1} - b_{\S_{p+1}^1}\tran} +  \norm{\frac{1}{n}b_{\S_{p+1}^1}\tran \Phi_{\S_{p+1}^1}\tran \frac{\Phi_{\leq p}\Phi_{\leq p}\tran}{n} \Phi_{\S_{p+1}^1}}_2.
\end{align}
Next we show that both terms on the right side of the above inequality are small.

For the first term, note that Theorem~\ref{lemma:gram_matrix} immediately gives 
\begin{align}\label{eq:weak_feature_bound_2}
\norm{\frac{1}{n}b_{\S_{p+1}^1}\tran \Phi_{\S_{p+1}^1}\tran \Phi_{\S_{p+1}^1} - b_{\S_{p+1}^1}\tran} \leq \norm{b_{\S_{p+1}^1}\tran}_2 \snorm{\frac{1}{n}\Phi_{\S_{p+1}^1}\tran \Phi_{\S_{p+1}^1} - I_n} =  \norm{b_{\S_{p+1}^1}\tran}_2 \cdot O_{d,\P}(d^{-\delta/2+\epsilon}).
\end{align}
We invoke Lemma~\ref{lemma:feature_product_tech} to bound the expectation of the second term, i.e.,
\begin{align*}
    \E\brac{\norm{\frac{1}{n}b_{\S_{p+1}^1}\tran \Phi_{\S_{p+1}^1}\tran \frac{\Phi_{\leq p}\Phi_{\leq p}\tran}{n} \Phi_{\S_{p+1}^1}}_2^2}.
\end{align*}
Specifically, we let non-overlapping sets $\T_1 = \{r\}$ and $\T_2 = [d]\setminus \{r\}$ with $s_1 = 0$ and $s_2 = 1$. Corresponding to the partition of the coordinate set, we can decompose the index of features such that we have  $S \in 2^{\T_1}\times 2^{\T_2}$ for any feature index $S$ that appears in the product.    The effective degree of the feature in $\Phi_{\S_{p+1}}^1$ and $\Phi_{\leq p}$ is both bounded by $s_1\cdot 1 + s_2 \cdot p  = p$. Therefore, Lemma~\ref{lemma:feature_product_tech} shows the expectation is bounded by 
\begin{align*}
     \norm{b_{\S_{p+1}^1}}_2^2 \cdot O_d(d^{p}/n) =\norm{b_{\S_{p+1}^1}}_2^2 \cdot O_d(d^{-\delta}) .
\end{align*}
Following a similar analysis as in the proof of Proposition~\ref{lemma:beta_>_l2_cube}, we use Lemma~\ref{lemma:uniform_bound} and Markov's inequality together to obtain the following bound, which holds uniformly for all $r\in[d]$:
\begin{align*}
  \norm{\frac{1}{n}b_{\S_{p+1}^1}\tran \Phi_{\S_{p+1}^1}\tran \frac{\Phi_{\leq p}\Phi_{\leq p}\tran}{n} \Phi_{\S_{p+1}^1}}_2^2 = \norm{b_{\S_{p+1}^1}}_2^2\cdot O_d(d^{-\delta+1/q_0})\log(d),
\end{align*}
where $q_0>0$ is an arbitrarily large constant.

Plugging this bound and Eq.~\eqref{eq:weak_feature_bound_1} into Eq.~\eqref{eq:weak_feature_bound_2} yields:
\begin{align*}
   \norm{ \frac{1}{n}b_{\S_{p+1}^1}\tran \Phi_{\S_{p+1}^1}\tran \round{I_n -\frac{\Phi_{\leq p}\Phi_{\leq p}\tran}{n}} \Phi_{\S_{p+1}^1} - b_{\S_{p+1}^1}\tran}_2 = \norm{b_{\S_{p+1}^1}}_2 \cdot O_{d,\P}(d^{-\delta/2+\epsilon}),
\end{align*}
where $\epsilon>0$ is an arbitrarily small constant and we absorb factor $\log(d)$ into $d^{\epsilon}$.

Finally, combining this bound with Eq.~\eqref{eq:weak_feature_bound_0} completes the proof.

\subsubsection{Proof of Proposition~\ref{prop:noise_term_p_plus_one}}\label{proof:noise_term_p_plus_one}
Taking the expectation of the norm square with respect to the noise $\vepsilon$ yields
\begin{align}\label{eq:noise_term_p_plus_one_tech_bound1}
    \E\brac{\norm{d^{-p-1} \beta_\varepsilon\tran \Phi_{\S_{p+1}^1}}_2^2} = d^{-2p-2}\Tr\round{K_\lambda\inv \Phi_{\S_{p+1}^1} \Phi_{\S_{p+1}^1}\tran K_\lambda\inv}.
\end{align}
Applying the inequality $|\Tr(AB)| \leq \Tr(A)\snorm{B}$ yields
\begin{align}\label{eq:noise_term_p_plus_one_tech_bound2}
    \Tr\round{K_\lambda\inv \Phi_{\S_{p+1}^1} \Phi_{\S_{p+1}^1}\tran K_\lambda\inv} = \Tr\round{K_\lambda^{-2} \Phi_{\S_{p+1}^1} \Phi_{\S_{p+1}^1}\tran } \leq \snorm{K_\lambda^{-2} }\Tr\round{\Phi_{\S_{p+1}^1} \Phi_{\S_{p+1}^1}\tran}.
\end{align}
The bound $ \snorm{K_\lambda^{-2}}= \snorm{K_\lambda^{-1}}^2 = O_{d,\P}(1)$ is implied by  the expression \eqref{eqn:K_decomp_cube}. Next we analyze $\Tr\round{\Phi_{\S_{p+1}^1} \Phi_{\S_{p+1}^1}\tran}$.

Since $|\S_{p+1}^1| = O_d(d^p)$, following Theorem~\ref{lemma:phi_id_2}, we have $\snorm{\Phi_{\S_{p+1}^1} \Phi_{\S_{p+1}^1}\tran/n - I_{|\S_{p+1}^1|}} = O_{d,\P}(d^{-\delta/2+\epsilon})$. Then, we can conclude 
\begin{align*}
    \Tr\round{\frac{1}{n}\Phi_{\S_{p+1}^1} \Phi_{\S_{p+1}^1}\tran}\leq O_d(d^p)(1+ O_{d,\P}\cdot (d^{-\delta/2+\epsilon})) = O_{d,\P}(d^p).
\end{align*}
Together with Eq.~\eqref{eq:noise_term_p_plus_one_tech_bound2}, the right side of Eq.~\eqref{eq:noise_term_p_plus_one_tech_bound1} is bounded by $O_{d,\P}(d^{-2+\delta})$.
Applying Markov's inequality we have $\norm{d^{-p-1} \beta_\varepsilon\tran \Phi_{\S_{p+1}^1}}_2^2 = O_{d,\P}(d^{-2+\delta})\log(d)$ with probability at least $1-1/ \log(d)$. Absorbing the logarithmic factor into $\epsilon$ completes the proof.

\subsubsection{Proof of Proposition~\ref{prop:feature_product}}\label{proof:feature_product}

Note that $(nD)^{-t}$ is diagonal matrix that satisfies $(nD)^{-t}\lesssim I$ and $E_r$ is a diagonal matrix. We can upper bound the norm by removing $(nD)^{-t}$:
\begin{align*}
    &~~~\norm{ \frac{1}{n}b_{\gE^c}\tran  {\Phi_{\gE^c}\tran}A_{1}A_{2}\ldots A_{m} \Phi_{\leq p} (nD)^{-t}E_{r}}_2^2\\
    &=\norm{ \frac{1}{n}b_{\gE^c}\tran  {\Phi_{\gE^c}\tran}A_{1}A_{2}\ldots A_{m} \Phi_{\leq p} E_{r}(nD)^{-t}}_2^2\\
    &\lesssim O_d(1)\cdot \norm{ \frac{1}{n}b_{\gE^c}\tran  {\Phi_{\gE^c}\tran}A_{1}A_{2}\ldots A_{m} \Phi_{\gE}}_2^2
\end{align*}
where we use the fact $(E_r)_{S,S} = \mathbbm{1}\{S\in \gE\}$.
Clearly, the estimate holds:
\begin{align}\label{eq:feature_produc_bound1}
\E\brac{\norm{ \frac{1}{n}b_{\gE^c}\tran  {\Phi_{\gE^c}\tran}A_{1}A_{2}\ldots A_{m} \Phi_{\gE}}_2^2} \leq \norm{b_{\gE^c}}_2^2\cdot \snorm{\E\brac{ \underbrace{\frac{1}{n^2}{\Phi_{\gE^c}\tran}A_{1}\ldots A_{m} \Phi_{\gE}\Phi_{\gE}\tran A_{m}\ldots A_1 \Phi_{\gE^c}}_{=:K}} }.    
\end{align}
We now aim to upper bound the operator norm of $\E[K]$.
To this end, we decompose the feature matrices in $K$ so as to ensure that distinct matrices in the product correspond to nonoverlapping index sets.

Following Eq.~\eqref{eq:s_j_0_1}, we have the decomposition
\begin{align}\label{eq:feature_split}
    \S_k = \S^1_k \sqcup \S^0_k.
\end{align}
Clearly, the following equalities hold:
\begin{align*}
    \gE = \bigcup_{k=1}^{p}\S^1_k,~~~\gE^c = \round{\bigcup_{k=0}^p \S_k^0} \bigcup \round{\bigcup_{k=p+1}^\ell \S_k}. 
\end{align*}
Define now the collection of matrices
\begin{align*}
    \U:= \{\Phi_{\S_k^1}\}_{k=1}^{p} \bigcup \{\Phi_{\S_k^0}\}_{k=0}^{p} \bigcup \{\Phi_{\S_{k}}\}_{k=p+1}^\ell.
\end{align*}
Note that any two distinct matrices in $\U$ correspond to distinct index sets. 
Normalizing the matrices, we further define the collection
\begin{align*}
   \widetilde{\U}&:= \{n\half (nD)^{-s/2}\Phi_{\S_k^1}\}_{k\in[p],s\geq 0} \bigcup \{n\half (nD)^{-s/2}\Phi_{\S_k^0}\}_{0\leq k\leq p, s\geq 0} \bigcup \{d^{-k/2}\Phi_{\S_{k}}\}_{p+1\leq k\leq \ell}.
\end{align*}
Let us now write the two expressions:
\begin{align*}
        \frac{\Phi_{\leq p} (nD)^{-s}\Phi_{\leq p}\tran}{n} &= \sum_{k=0}^p \frac{\Phi_{\S_k^0} (nD)^{-s} \Phi_{\S_k^0}\tran}{n} + \sum_{k=1}^p \frac{\Phi_{\S_k^1} (nD)^{-s} \Phi_{\S_k^1}\tran}{n};\\
    \frac{\Phi_{
\gE} \Phi_{\gE}\tran}{n} &= \sum_{k=1}^p \frac{\Phi_{\S_k^1}\Phi_{\S_k^1}\tran}{n}.
\end{align*}
Plugging these decompositions for the corresponding $A_i$ in the expression of $K$ Eq.~\eqref{eq:feature_produc_bound1}, we can write $K$ as the summation of finitely many terms, where each term takes the form of some positive constant multiplied by
\begin{align*}
    K_j := \frac{\max(n^{1/2},d^{k_{1}/2}) \cdot  \max(n^{1/2},d^{k_{2m+1}/2})}{n}\cdot  B_{1}^\top \Bigl(B_{2} B_{2}\tran\Bigr) \ldots \Bigl(B_{{2m}} B_{{2m}}\tran\Bigr) B_{{2m+1}},
\end{align*}
where $B_{1} \in \R^{n\times k_1},\ldots,B_{{2m+1}}\in \R^{n\times k_{2m+1}}$ are from $\widetilde{\U}$. 

Particularity note that $B_1$ and $B_{2m+1}$ corresponding to the decomposition of $\Phi_{\gE^c}$ and thus each takes the form $d^{-k/2}\Phi_{\S_{k}}$ for some $p+1\leq k\leq \ell$; $B_{m+1}$ corresponds to the decomposition of $\Phi_{\gE}$ and takes the form $n\half\Phi_{\S_k^1}$ for some $k\in[p]$.  The equation contains an additional scaling factor on the right side, which arises from the distinct scaling of $B_1$ and $B_{2m+1}$ by a factor of $1/n$ in Eq.~\eqref{eq:feature_produc_bound1}.

Next, we invoke Lemma~\ref{lemma:feature_product_tech} to bound each $\snorm{\E\brac{K_j}}$. Specifically, we let non-overlapping sets $\T_1 = \{r\}$ and $\T_2 = [d]\setminus \{r\}$ with $s_1 = 0$ and $s_2 = 1$, since $|\T_1| = O_d(d^0)$ and $|\T_2| = O_d(d^1)$. Corresponding to this partition of the coordinate set, we can decompose the index of features such that we have  $S = S_1\sqcup S_2 \in 2^{\T_1}\times 2^{\T_2}$ for any feature index $S$ that appears in the product.    The effective degree of the feature of $B_{m+1}$ is bounded by $s_1\cdot 1 + s_2 \cdot (p-1)  = p$ since $x_r$ appears in all features, namely $\Phi_{\S_p^1} = X_r\cdot \Phi_{\S_{p-1}^0}$. Consequently, we yield
\begin{align}\label{eq:each_k_j}
    \snorm{\E\brac{K_j}} &= \frac{\max(n^{1/2},d^{k_{1}/2}) \cdot  \max(n^{1/2},d^{k_{2m+1}/2})}{n}\cdot \snorm{\E\brac{B_{1}^\top \Bigl(B_{2} B_{2}\tran\Bigr) \ldots \Bigl(B_{{2m}} B_{{2m}}\tran\Bigr) B_{{2m+1}}}}\notag\\
    &=\frac{\max(n^{1/2},d^{k_{1}/2}) \cdot  \max(n^{1/2},d^{k_{2m+1}/2})}{n}\cdot O_d(d^{p-1})\cdot \min(n^{-1/2},d^{-k_{1}/2}) \cdot  \min(n^{-1/2},d^{-k_{2m+1}/2})\notag\\
    &=O_d(d^{p-1}/n)= O_d(d^{-1-\delta}).
\end{align}
Applying triangle inequality yields  $  \snorm{\E\brac{K}} = O_d(d^{-1-\delta})$.  Combining with Eq.~\eqref{eq:feature_produc_bound1} completes the proof.

\subsection{Proof of Propositions for Theorem~\ref{thm:learning_hypercube} (Gradient error)}

\subsubsection{Proof of Lemma~\ref{lemma:gram_matrix_cube_m}}\label{proof:gram_matrix_cube_m}

We similarly define off-diagnoal entries of $\frac{1}{d^{(\vs+\mathbf{1}-\vlambda)\tran \vm}} \Phi_{\S_{\vm}}(Z) \round{\Phi_{\S_{\vm}}(Z)}\tran $
by \begin{align*}
 \Delta_{i,j}
 :=\begin{cases}
      &\frac{1}{d^{(\vs+\mathbf{1}-\vlambda)\tran \vm}} \inner{\phi_{\S_{\vm}}( z^{(i)}), \phi_{\S_{\vm}}( z^{(j)})},~~\text{if}~ i\neq j,\\
      &0, ~~\text{otherwise}.
 \end{cases}
\end{align*}
We omit the argument $Z$ for simplicity. Then we split $\frac{1}{d^{(\vs+\mathbf{1}-\vlambda)\tran \vm}} \Phi_{\S_{\vm}} \round{\Phi_{\S_{\vm}}\tran}$ into the diagonal and off-diagonal parts. Namely, the triangle inequality yields 
\begin{align}\label{eq:g_m_triangle_m}
 &\E\brac{\snorm{\frac{1}{d^{(\vs+\mathbf{1}-\vlambda)\tran \vm}} \Phi_{\S_{\vm}} \round{\Phi_{\S_{\vm}}}\tran - \mu_{\S_{\vm}} I_n}}\\
 &~~~\leq \E\brac{\snorm{\frac{1}{d^{(\vs+\mathbf{1}-\vlambda)\tran \vm}} \Diag(\Phi_{\S_{\vm}} \round{\Phi_{\S_{\vm}}}\tran )  
 -  \mu_{\S_{\vm}} I_n}}+ \E\brac{\snorm{\Delta}},
\end{align}
where
\begin{align*}
    \mu_{\S_{\vm}}:=\frac{1}{d^{(\vs+\mathbf{1}-\vlambda)\tran \vm}} \E\brac{\norm{\phi_{\S_{\vm}}}_2^2}.
\end{align*}
Now we show $\mu_{\S_{\vm}} = \Theta_d(1)$. 
Since $\T_1,\ldots,\T_k$ do not overlap, we can write the norm of $\phi_{\S_{\vm}}$ as the product of the norm corresponding to each $\T_k$:
\begin{align}\label{eq:tensor_product}
     \norm{\phi_{\S_{\vm}}}_2^2 = \prod_{k=1}^N \norm{\phi_{\T_k^{m_k}}}_2^2,
\end{align}
where we denote $\T_k^{m_k}:=\{T_k~\vert~T_k\in 2^{\T_k}~~{\rm and}~~|T_k| = m_k\}$.

Recall that for any $r\in \T_k$, we have  $w_r =\Theta_d(d^{\gamma_{k}})= \Theta_d(d^{1- \lambda_{k}})$. Using simple combinatorial arguments
we can infer
\begin{align*}
    \norm{\phi_{\T_k^{m_k}}}_2^2  =  \sum_{\substack{1\leq r_1 <r_2<,\ldots,< r_{m_k} \\ r_1,\ldots,r_{m_k} \in \T_k}}w_{r_1}\ldots w_{r_{m_k}} = \Theta_d(d^{(1-\lambda_k) m_{k}}) \cdot \frac{1}{m_k!}\cdot \Theta_d(d^{s_km_k}) = \frac{1}{m_k!}\cdot \Theta_d(d^{(s_k+1 -\lambda_k)m_k}).
\end{align*}
Therefore, plugging each factor into Eq.~\eqref{eq:tensor_product} yields 
\begin{align}\label{eq:norm_psi}
   \norm{\phi_{\S_{\vm}}}_2^2 =  \frac{1}{\vm!}\cdot \Theta_d(d^{(\vs+\mathbf{1}-\vlambda)\tran \vm}), 
\end{align}
 which implies $\mu_{\S_{\vm}} = \Theta_d(d^{-(\vs+\mathbf{1}-\vlambda)\tran \vm})\norm{\phi_{\S_{\vm}}}_2^2 = \Theta_d(1)$.

\paragraph{Bounding $\E\brac{\snorm{\frac{1}{d^{(\vs+\mathbf{1}-\vlambda)\tran \vm}}  \diag(\Phi_{\S_{\vm}}   \Phi_{\S_{\vm}}\tran)  
 - \mu_{\S_{\vm}} I_n}}$.} 

Note that $\norm{\phi_{\S_{\vm}}}_2^2 = \E\brac{\norm{\phi_{\S_{\vm}}}_2^2}$ holds. Therefore, we directly have $\frac{1}{d^{(\vs+\mathbf{1}-\vlambda)\tran \vm}} \diag(\Phi_{\S_{\vm}}   \Phi_{\S_{\vm}} \tran)  
 =\mu_{\S_{\vm}} I_n $ by definition of $\mu_{\S_\vm}$ hence we can conclude a stronger result
\begin{align}\label{eq:diag_psi_cube_m}
  \snorm{\frac{1}{d^{(\vs+\mathbf{1}-\vlambda)\tran \vm}}  \Diag(\Phi_{\S_{\vm}}   \Phi_{\S_{\vm}}\tran)  
 - \mu_{\S_{\vm}} I_n} = 0.
\end{align}

\paragraph{Bound $\E\brac{\snorm{\Delta}}$.} We follow the steps in the proof of Lemma~\ref{lemma:gram_matrix}. Specifically, we use the same bound as Eq.~(\ref{eq:g_m_off_diag_bound_main}).  
\begin{align}\label{eq:g_m_off_diag_bound_main_m}
    \E\brac{\snorm{\Delta}} &\leq 4 n^{1/2}\sup_{T\subseteq [n]} \round{\E_{T^c}\brac{\Sigma(T)}}^{1/2} + 4c\log^{1/2}(n)  \cdot \sup_{T\subseteq [n]} \round{\E_{T^c}\brac{\Gamma(T)}}^{1/2}.
\end{align}

\paragraph{Step 1: Bound $\E_{T^c}\brac{\Sigma(T)}$.}Note that each entry of $\Delta_{T^c,j}\Delta_{T^c,j}^\top$ takes the form $\phi_i\tran \phi_j \phi_j\tran \phi_l / d^{2(\vs+\mathbf{1}-\vlambda)\tran \vm}$ for $ i,l\in T^c, j\in T$. Since $\E_{z^{(j)}}\brac{\phi_j \phi_j\tran} = I_{|\S_{\vm}|}\cdot \Theta_d(d^{(\mathbf{1}-\vlambda)\tran \vm})$, we have each entry of $\E_{z^{(j)}}\brac{\Delta_{T^c,j}\Delta_{T^c,j}^\top}$ takes the form $\phi_i\tran \phi_l \cdot \Theta_d(d^{(\vlambda-\mathbf{1}-2\vs)\tran \vm})$. Therefore, we deduce
\begin{align*}
        \Sigma(T) &= \snorm{\E_{z^{(j)}}\brac{\Delta_{T^c,j}\Delta_{T^c,j}^\top}} \\
        &= \snorm{\round{\Phi\Phi\tran}_{T^c,T^c} }\cdot \Theta_d(d^{(\vlambda-\mathbf{1}-2\vs)\tran \vm}) \\&\leq \E_T\brac{\snorm{\Phi\Phi\tran }}\cdot\Theta_d(d^{(\vlambda-\mathbf{1}-2\vs)\tran \vm}),
\end{align*}
where in the last inequality, we use Cauchy's interlacing theorem~\cite{horn2012matrix}. Specifically, the theorem shows that adding a row and a column to a Hermitian matrix will only increase its operator norm.

With a similar analysis, corresponding to Eq.~(\ref{eq:g_m_off_diag_bound_2}), we have
\begin{align}\label{eq:g_m_off_diag_bound_2_m}
    \E_{T^c}\brac{\Sigma(T)} \leq \Theta_d(d^{-\vs\tran \vm})\cdot \E\brac{\snorm{\Delta}} +\Theta_d(d^{(\vlambda-\mathbf{1}-2\vs)\tran \vm})
    \cdot \E\brac{\max_{j\in [n]}\norm{\phi_{\S_{\vm}}( z^{(j)})}^2}.
\end{align}
With Eq.~(\ref{eq:norm_psi}), there exists a constant  $C_\Sigma>0$  such that the following holds
\begin{align}\label{eq:g_m_off_diag_bound_3_m}
    \E_{T^c}\brac{\Sigma(T)} \leq  C_\Sigma d^{-\vs\tran \vm}\cdot\E\brac{\snorm{\Delta}} + C_\Sigma d^{-\vs\tran \vm}.
\end{align}

\paragraph{Step 2: Bound $\E_{T^c}\brac{\Gamma(T)}$.}
With a similar analysis, corresponding to Eq.~(\ref{eq:g_m_off_diag_bound_3}) and (\ref{eq:g_m_off_diag_bound_3.5}), we can obtain for any $q\geq 1$, 
\begin{align*}
     \E_{T^c}\E_T \brac{\max_{i\in T}\norm{\Delta_{i,T^c}}^2} &= \E \brac{\max_{i\in T}\norm{\Delta_{i,T^c}}^2} \leq n^{1+2/q} C_{2l,2q}^2\E\brac{\Delta_{i,j}^{2q}}.
\end{align*}
Since $\Delta_{i,j}$ is polynomial with  degree at most $2\emp_\gamma$, the hypercontractivity Assumption~\ref{ass:hypercontr} implies
\begin{align}
    \E\brac{\Delta_{i,j}^{2q}}^{1/q} \leq C_{2\emp_\gamma,2q}^2\E\brac{\Delta_{i,j}^{2}}.
\end{align}
Note that $ |\phi_u|^2 \leq c_0 d^{(\mathbf{1}-\vlambda)\tran \vm}$ holds for all $u\in \S_\vm$ where $c_0>0$ is an absolute constant. Then we can show
\begin{align*}
    \E\brac{\Delta_{i,j}^{2}}  = \sum_{u\in\S_\vm}\frac{1}{d^{2(\vs+\mathbf{1}-\vlambda)\tran \vm}} |\phi_u|^4 \leq |\S_\vm|\cdot {c_0^2} d^{-2\vs\tran \vm} \leq  {c_0^2} d^{-\vs\tran \vm}
\end{align*}
where we use the inequality $|\S_\vm| \leq d^{\vs\tran \vm}$.

Therefore, we  obtain a similar bound to Eq.~(\ref{eq:g_m_off_diag_bound_5}): 
\begin{align}\label{eq:g_m_off_diag_bound_5_m}
    \E_{T^c}\brac{\Gamma(T)} \leq c_0^2 C_{2\emp_\gamma,2q}^2n^{1+2/q}d^{-\vs\tran \vm}.
\end{align}
Combining Eq.(\ref{eq:g_m_off_diag_bound_2_m}) and Eq.~(\ref{eq:g_m_off_diag_bound_5_m}) with Eq.~(\ref{eq:g_m_off_diag_bound_main_m}) we have
\begin{align}
     &~~~\E\brac{\snorm{\Delta}} \notag\\
     &\leq 4 n^{1/2}\round{\frac{C_\Sigma}{d^{\vs\tran \vm}} \E\brac{\snorm{\Delta}} + \frac{C_\Sigma}{d^{\vs\tran \vm}}}^{1/2} + 4c\log^{1/2}(n)  \round{c_0^2 C_{2\emp_\gamma,2q}^2n^{1+2/q}d^{-\vs\tran \vm}}^{1/2}\notag\\
     &\leq 4 C_{\Sigma}^{1/2}\round{\frac{n}{d^{\vs\tran \vm}}}^{1/2}\round{\E\brac{\snorm{\Delta}}}^{1/2}+4 C_{\Sigma}^{1/2}\frac{n^{1/2}}{d^{\frac{1}{2}\vs\tran \vm}} + 4cc_0 C_{2\emp_\gamma,2q}\cdot \frac{n^{\frac{1}{2}+\frac{1}{q}}\log^{\frac{1}{2}}(n)}{d^{\vs\tran \vm/2}},
\end{align}
where the last inequality is obtained from the elementary estimate $\sqrt{a+b}\leq \sqrt{a}+\sqrt{b}$ for $a,b\geq 0$.

Solving the quadratic inequality with respect to $\sqrt{\E\brac{\snorm{\Delta}}} $ we obtain:
\begin{align*}
    \sqrt{\E\brac{\snorm{\Delta}}}\leq 2  C_{\Sigma}^{1/2} \round{\frac{n}{d^{\vs\tran \vm}}}^{1/2} + 2\sqrt{\frac{n}{d^{\vs\tran \vm}}+\frac{n^{1/2}}{d^{\frac{1}{2}\vs\tran \vm}}+ cc_0 C_{2\emp_\gamma,2q}\cdot \frac{n^{\frac{1}{2}+\frac{1}{q}}\log^{\frac{1}{2}}(n)}{d^{\vs\tran \vm/2}}}
\end{align*}
Squaring both sides and using the inequality $(a+b)^2\leq 2a^2+2b^2$ and taking into account that $n = d^{q+\delta}$, we can conclude that the right side is bounded by a constant multiple of 
\begin{align}
 \max\round{d^{p+\delta- \vs\tran \vm}, \sqrt{\log(n)} \cdot d^{(p+\delta - \vs\tran \vm)/2 +\epsilon}},
\end{align}
for any $\epsilon>0$.

Shrinking $\epsilon$ to remove logarithmic factors and applying Markov's inequality completes the proof.

\subsubsection{Proof of Theorem~\ref{thm:taylor_approx_kernel_m}}\label{proof:taylor_approx_kernel_m}

The proof follows the proof of Theorem~\ref{thm:poly_approx}. Similarly, we first apply Lemma~\ref{lem:angle_app_cube_m} and we deduce that for any $s\in (0,1)$, for $i\neq j$, the estimate 
\begin{equation}\label{eqn:angle_cube_m}
   \abs{\frac{1}{d}\inner{ \sqrt{w}\odot  x^{(i)}, \sqrt{w}\odot  x^{(j)}} - \delta_{ij}} \leq s,
\end{equation}
holds with probability at least $1-2n^2\exp(-s^2d/(2\norm{w}_\infty))$. When $i = j$, we have
\begin{align}\label{eqn:norm_control_cube_m}
    \inner{\sqrt{w}\odot  x^{(i)}, \sqrt{w}\odot  x^{(i)}} = d.
\end{align}
We assume that the events \eqref{eqn:angle_cube_m} occurs for the remainder of the proof.

To simplify the notation, we denote ${\bar c} =g_{w,\emp}(1) = \sum_{k=0}^{\emp} \frac{g_{w}^{(k)}(0)}{k!}$ and we define the matrix $E_w(X):=K_w(X,X)-K_{w,\emp}(X,X)$.  Then splitting $E_M(X)$ into its diagonal and off-diagonal parts, we obtain a bound similar to Eq.~(\ref{eqn:decomp_kern}):
\begin{align}
\|E_w(X)- (  g_w(1) - {\bar c})   I_n\|_{\rm op}\leq \|E_w(X)-\Diag(E_w(X))\|_{F}+\max_{i=1,\ldots,n}|E_w(X)_{ii}-(  g(1) - {\bar c})  I_n|.
\end{align}
For off-diagonal entries, similar to Eq.~(\ref{eqn:third_eq_needed}), assuming $s<{\min(1/2,\epsilon)}$ we estimate
\begin{align}
     \|E_w(X)-\Diag(E_w(X))\|_{F}^2\leq \frac{4n^2\sup_{k\geq \emp+1}g^{(k)}(0)}{3(\emp+1)!}s^{2\emp+2}.\label{eqn:second_eq_needed_m}
\end{align}
For diagonal entries, invoking Eq.~(\ref{eqn:norm_control_cube_m}), we immediately have 
\begin{align}
    |E_w(X)_{ii}-(  g_w(1) - {\bar c}) I_n| = 0.
\end{align}
Then we conclude
\begin{align}\label{eqn:basic_plugin_cube_M}
        \|E_w(X)-(g_w(1)-\bar c)I_n\|_{\rm op} \leq &\sqrt{\frac{4\sup_{k\geq \emp+1}g^{(k)}(0)}{3(\emp+1)!}}s^{\emp+1}n.
\end{align}
Then for any $C>2$, we may set $s:=\sqrt{2C\log(n)\norm{w}_\infty/d}$ thereby ensuring that the event \eqref{eqn:angle_cube_m} occurs with probability at least $1-\frac{4}{n^{C-2}}$. Plugging these values into \eqref{eqn:basic_plugin_cube_M} yields the estimate 
$$\|E_w(x)-(g(1)-\bar c)I_n\|_{\rm op}\leq c d^{\frac{2p-\emp-1}{2}+\delta}C^{\frac{\emp+1}{2}}\norm{w}_\infty^{\frac{\emp+1}{2}}\log^{\frac{\emp+1}{2}}(n),$$
where $c:= \sqrt{\frac{4\sup_{k\geq \emp+1}g^{(k)}(0)}{3(\emp+1)!}}$ is a numerical constant. 

To meet the assumption $s<{\min(1/2,\epsilon)}$, $d$ needs to satisfy $\frac{d}{\norm{w}_\infty \log(d)}> \frac{\min(1/2,\epsilon)^2 c_1}{C(p+\delta)}$. 
The proof is complete.

\subsubsection{Proof of Lemma~\ref{lemma:kernel_to_mono_m}}\label{proof:kernel_to_mono_m}
We will analyze the component $\Phi_{\agp}(Z) D_{\agp} \Phi_{\agp}(Z)\tran$ in the composition ~\eqref{eq:k_z_decomp} therefore, in the proof, we consider $\vm$ that satisfies $\norm{\vm}_1 \leq \emp_\mgamma$.

We first present the proof for Eq.~\eqref{eq:kernel_to_mono_m}. We decompose the product as a summation based on $\vm$:
\begin{align}\label{eq:a_p_decomp}
     \Phi_{\agp} (Z) D_{\agp}  \Phi_{\agp} (Z) \tran =   \sum_{\lmgp }\Phi_{\S_{\vm}}(Z) D_{\S_{\vm}}{ \Phi_{\S_{\vm}}(Z)}\tran.
\end{align}
Since $\vs\tran \vm \leq \vlambda\tran \vm$ and $\vs\tran\vm$ can be less than $p+\delta$, we split the set $\{\vm~\vert~\vlambda\tran \vm > p+\delta\}$ into two sets:
\begin{align*}
    L_1:=\{\vm~\vert~\vs\tran \vm \geq p+\delta\},~~L_2:=\{\vm~\vert~\vs\tran \vm < p+\delta < \vlambda \tran \vm\}.
\end{align*}
For each $\vm \in L_1$, we can show
\begin{align}\label{eq:kernel_to_mono_m_tech}
    &~~~\snorm{d^{(\vlambda  - \vs)\tran \vm}\Phi_{\S_{\vm}}(Z) D_{\S_{\vm}} \Phi_{\S_{\vm}}(Z)\tran - g^{(\norm{\vm}_1)}(0)\mu_{\S_{\vm}} I} \notag\\
    &\overset{(a)} {\leq}\snorm{g^{(\norm{\vm}_1)}(0)d^{(\vlambda - \mathbf{1} - \vs)\tran \vm}\Phi_{\S_{\vm}}(Z)\Phi_{\S_{\vm}}(Z)\tran - g^{(\norm{\vm}_1)}(0)  \mu_{\S_{\vm}} I}\notag\\
    &~~~~+ O_{d}(d^{(\vlambda - \mathbf{1} - \vs)\tran \vm-1})\cdot\snorm{\Phi_{\S_{\vm}}(Z) \Phi_{\S_{\vm}}(Z)\tran}\notag\\
    &\overset{(b)} {=}O_{d,\P}(d^{(p+\delta - \vs\tran \vm)/2+\epsilon}) + O_{d,\P}(d^{-1}).
\end{align}
where $(a)$ follows from the definition of $D$ (Lemma~\ref{lemma:each_mono_cube}) and $(b)$ follows from Lemma~\ref{lemma:gram_matrix_cube_m}, which implies  $\snorm{\Phi_{\S_{\vm}}(Z) \Phi_{\S_{\vm}}(Z)\tran} = O_{d,\P}(d^{(\vs+\mathbf{1}-\vlambda)\tran \vm})$.

For each $\vm \in L_2$, by sub-multiplicativity of the spectral norm, we have
\begin{align}\label{eq:kernel_to_mono_m_tech_2}
    \snorm{\Phi_{\S_{\vm}}(Z) D_{\S_{\vm}} \Phi_{\S_{\vm}}(Z)\tran} &\leq \snorm{\Phi_{\S_{\vm}}(Z)}^2 \cdot \snorm{D_{\S_{\vm}}} \notag\\
    &= \snorm{\Phi_{\S_{\vm}}(X)}^2 \cdot \snorm{W_{\S_{\vm}}}\cdot\snorm{D_{\S_{\vm}}}\notag\\
    &\overset{(a)}{=} O_{d,\P}(d^{p+\delta})\cdot d^{-\vlambda\tran\vm} = O_{d,\P}(d^{-\zeta_2}).
\end{align}
where $(a)$ follows from Lemma~\ref{lem:norm_control}.

Consequently, multiplying both sides of Eq.~\eqref{eq:kernel_to_mono_m_tech} by $d^{(\vs -\vlambda  )\tran \vm}$ and plugging the resulted bound and Eq.~\eqref{eq:kernel_to_mono_m_tech_2} back into Eq.~(\ref{eq:a_p_decomp}) yields
\begin{align*}
    &~~~\snorm{\sum_{\vlambda\tran \vm \geq p+\delta}\Phi_{\S_{\vm}}(Z) D_{\S_{\vm}} \Phi_{\S_{\vm}}(Z)\tran  - \sum_{\vm\in L_1} d^{(\vs - \vlambda )\tran \vm} g^{(\norm{\vm}_1)}(0) \mu_{\S_{\vm}} I 
 -\sum_{\vm\in L_2} \Phi_{\S_{\vm}}(Z) D_{\S_{\vm}} \Phi_{\S_{\vm}}(Z)\tran}\\
    &\leq \sum_{\vm \in L_1 }O_{d,\P}(d^{(p+\delta - \vs\tran \vm)/2+\epsilon)} \cdot d^{(\vs-\vlambda)\tran \vm}) + \sum_{\vm \in L_2 }O_{d,\P}(d^{-\zeta_2}) \\
    &\leq \sum_{\vm \in L_1 }O_{d,\P}(d^{(p+\delta - \vs\tran \vm)/2+\epsilon)} \cdot d^{(\vs-\vlambda)\tran \vm /2}) + \sum_{\vm \in L_2 }O_{d,\P}(d^{-\zeta_2}) \\
    &= O_{d,\P}(d^{-\zeta_2/2+\epsilon}+d^{-\zeta_2}).
\end{align*}
Combining with Eq.~\eqref{eq:k_z_decomp} completes the proof for Eq.~\eqref{eq:kernel_to_mono_m}.

Now we present the proof for Eq.~\eqref{eq:kernel_to_mono_m_advanced}.
Following the definition of $D$ (Lemma~\ref{lemma:each_mono_cube}), we have for any $\vm$ such that $\vlambda\tran \vm > p+\delta$, the following holds
\begin{align}\label{eq:diag_psi_diff}
&~~~\snorm{\diag\round{\Phi_{\S_{\vm}}(Z)D_{\S_{\vm}}\Phi_{\S_{\vm}}(Z)\tran} - g^{(\norm{\vm}_1)}(0)d^{(\vs-\vlambda)\tran \vm}\mu_{\S_{\vm}}\cdot I_n}\notag\\
    &\overset{(a)}\leq O_{d}(d^{-\norm{\vm}_1-1})\cdot\snorm{\diag\round{\Phi_{\S_{\vm}}(Z)\Phi_{\S_{\vm}}(Z)\tran}}\notag\\
    &~~~+  g^{(\norm{\vm}_1)}(0)\snorm{d^{-\norm{\vm}_1}\diag\round{\Phi_{\S_{\vm}}(Z)\Phi_{\S_{\vm}}(Z)\tran} -d^{(\vs-\vlambda)\tran \vm}\mu_{\S_{\vm}}\cdot I_n} \notag  \\
    &\overset{(b)} {=}O_{d,\P}(d^{(\vs-\vlambda)\tran \vm -1})\notag\\
    &\overset{(c)}{=} O_{d,\P}(d\inv),
\end{align}
where $(a)$ follows from the definition of $D$ (Lemma~\ref{lemma:each_mono_cube}),  $(b)$ follows from Eq.~\eqref{eq:diag_psi_cube_m} and $(c)$ follows from the inequality $s^{[i]}\leq \lambda_i$.

We split those $\vm$ that satisfies $\vlambda\tran \vm >p+\delta$ into:
\begin{align}
    J_1&:=\{\vm~\vert~\vlambda\tran\vm >p+\delta~~{\rm and}~~\vs\tran \vm< p+2\},\label{eq:def_j1}\\
    J_2&:=\{\vm~\vert~\vlambda\tran\vm >p+\delta~~{\rm and}~~\vs\tran\vm \geq p+2\}.\label{eq:def_j2}
\end{align}
Then we can show the following estimate holds:
\begin{align*}
&~~~\snorm{\Phi_{\agp}(Z)  D_{\agp} \Phi_{\agp}(Z)\tran-\sum_{\vm\in J_1}\offd\round{\Phi_{\S_{\vm}}(Z)D_{\S_{\vm}}\Phi_{\S_{\vm}}(Z)\tran}  -\sum_{\lmgp} d^{(\vs-\vlambda)\tran \vm} g^{(\norm{\vm}_1)}(0) \mu_{\S_{\vm}} I }\\
&\overset{(a)}{\leq}  \snorm{\sum_{\vm\in J_2}{\Phi_{\S_{\vm}}(Z)D_{\S_{\vm}}\Phi_{\S_{\vm}}(Z)\tran} - \sum_{\vm\in J_2} d^{(\vs-\vlambda)\tran \vm} g^{(\norm{\vm}_1)}(0) \mu_{\S_{\vm}} I }\\
&~~~+\sum_{\vm\in J_1}\snorm{\diag\round{\Phi_{\S_{\vm}}(Z)D_{\S_{\vm}}\Phi_{\S_{\vm}}(z)\tran} - g^{(\norm{\vm}_1)}(0)d^{(\vs-\vlambda)\tran \vm}\mu_{\S_{\vm}}\cdot I_n}\\
    &\overset{(b)}{\leq}  \sum_{\vm\in J_2}\snorm{{\Phi_{\S_{\vm}}(Z)D_{\S_{\vm}}\Phi_{\S_{\vm}}(Z)\tran} -d^{(\vs-\vlambda)\tran \vm} g^{(\norm{\vm}_1)}(0) \mu_{\S_{\vm}} I }+O_{d,\P}(d\inv)\\
    &\overset{(c)}{\leq} \sum_{\vm\in J_2}O_{d,\P}(d^{(p+\delta - \vs\tran \vm)/2+\epsilon}) +O_{d,\P}(d\inv)\\
    &=O_{d,\P}(d^{(\delta-2)/2+\epsilon})+O_{d,\P}(d^{-1})\\
    &= O_{d,\P}(d^{(\delta-2)/2+\epsilon}),
\end{align*}
where $(a)$ is obtained by splitting $\Phi_{\S_{\vm}}D_{\S_{\vm}}\Phi_{\S_{\vm}}\tran$ into diagonal and off-diagonal components for $\vm\in J_1$, $(b)$ follows from Eq.~\eqref{eq:diag_psi_diff} and $(c)$ follows from Eq.~\eqref{eq:kernel_to_mono_m_tech}.

Combining the above bound with Eq.~\eqref{eq:k_z_decomp} completes the proof.

\subsubsection{Proof of Proposition~\ref{prop:<_l1_l1_l1_cube_m}}\label{proof:<_l1_l1_l1_cube_m}
The proof is analogous to Proposition~\ref{prop:<_l1_l1_l1_cube}, but we will provide special treatment for the case when $\vlambda\tran \vm = p+\delta$. 

Let $\gE(r)$ denote the union of all sets $S\subset [d]$ containing $r$ with $S\in \alp$:
\begin{align}\label{eq:def_dr}
    \gE(r) := \biground{S~\vert~r \in S,~~S\in \alp}.
\end{align}
Then $\gE^c(r)$ is defined as the complement of $\gE(r)$.
Particularly note that the cardinality of the set $\gE(r)$ is bounded by
\begin{align}\label{eq:d_r_size}
    O_d(d^{\vs\tran \vm - s_{\iota}}) = O_d(d^{\vlambda\tran\vm - \lambda_{\iota}}) = O_d(d^{p+\delta -\lambda_{\iota}}),
\end{align}
where we use the fact $\lambda_j\geq s_j$ for all $j\in[N]$ and $m_{\iota} \geq 1$.

We define a partial identity matrix $E_r$ of size $|\alp|\times |\alp|$ as 
\begin{align*}
    (E_r)_{S,S} = \mathbbm{1} \{S \in \gE(r)\}.
\end{align*}
We additionally define a diagonal weight matrix 
\begin{align}\label{eq:def_v}
V := \round{(\rho+\lambda)(n\hd_{\alp})\inv + I}\inv,
\end{align}
where each entry takes the form
\begin{align}
    V_{S,S} = \frac{1}{(\rho+\lambda)(nD_{S,S}W_{S,S})\inv + 1}.
\end{align}
We split the set $\alp$ into $\aslp$ and $\aep$:
\begin{align*}
   \aslp&:=   \bigcup_{\vlambda\tran \vm <p+\delta } \S_{\vm},~~~\aep:=  \bigcup_{\vlambda\tran \vm = p+\delta } \S_{\vm}.
\end{align*}
We  now write $\beta=\beta_{\gE(r)}+\beta_{\gE^c(r)} + \beta_\varepsilon$ where we define:
\begin{align*}
    \beta_{\gE(r)}:= K_{w,\lambda}\inv (\Phi_{\gE(r)}b_{\gE(r)}),~~~\beta_{\gE^c(r)}:= K_{w,\lambda}\inv (\Phi_{\gE^c(r)}b_{\gE^c(r)}),~~~\beta_{\varepsilon}= K_{w,\lambda}\inv \vepsilon.
\end{align*}
With the decomposition of $\beta$, we accordingly bound the norm by
\begin{align}\label{eq:beta_decomp_cube_m}
   &~~~\norm{\beta\tran\Phi_{\alp}R - b_{\alp} \tran V E_{r}}_2\\
   &\leq \norm{\beta_{\gE(r)}\tran  \Phi_{\alp}R - b_{\alp} \tran V E_{r} }_2+ \norm{\beta_{\gE^c(r)}\tran  \Phi_{\alp}R}_2 + \norm{\beta_{\varepsilon}\tran \Phi_{\alp} R}.
\end{align}
We will show all three terms on the right hand side of the above equation are small.

The following two Propositions control the first and the second term in Eq.~(\ref{eq:beta_decomp_cube_m}) respectively. We defer the proof to Appendix~\ref{proof:beta_<_l2_cube_m} 
 and \ref{proof:beta_>_l2_cube_m} respectively.

\begin{proposition}\label{lemma:beta_<_l2_cube_m}
For any $\epsilon>0$, the estimate holds uniformly over all coordinates $r$:
    \begin{align*}
         \norm{\beta_{\gE(r)}\tran  \Phi_{\alp}R - b_{\alp} \tran V E_{r}}_2 =  O_{d,\P}(d^{-\zeta/2+\epsilon})\cdot  \norm{\partial_r f^*_{\alp}}_{L_2}
    \end{align*}
\end{proposition}

\begin{proposition}\label{lemma:beta_>_l2_cube_m}
For any $\epsilon>0$, the estimate holds uniformly over all coordinates $r\in \T_\iota$:
\begin{align}
      \norm{\beta_{\gE^c(r)}\tran  \Phi_{\alp}R}_2
    =O_{d,\P}\round{d^{- \lambda_\iota/2}}\cdot \norm{b_{\gE^c(r)}}_2.
\end{align}
\end{proposition}

\noindent Next we show that the third term in Eq.~\eqref{eq:beta_decomp_cube_m} is small. We defer the proof to Appendix~\ref{proof:cube_error_1_m}.
\begin{proposition}\label{prop:cube_error_1_m}
For any $\epsilon>0$, the estimate holds uniformly over all coordinates $r\in \T_\iota$:
\begin{align}
      \norm{\beta_{\varepsilon}\tran  \Phi_{\alp}R}_2
    =O_{d,\P}(d^{-\lambda_\iota/2} )\cdot \sigma_\varepsilon.
\end{align}
\end{proposition}

The remainder of proof follows the proof of Proposition~\ref{prop:<_l1_l1_l1_cube}.
Combining the results of the three propositions then the right side of Eq.~\eqref{eq:beta_decomp_cube_m} becomes
\begin{align*}
       O_{d,\P}(d^{-\zeta/2+\epsilon} )\cdot  \norm{\partial_r f^*_{\alp}}_{L_2}+ O_{d,\P}\round{d^{-\lambda_\iota/2} }\cdot (\norm{b_{\gE^c(r)}}_2 + \sigma_\varepsilon).
\end{align*}
Using the elementary inequality $|\norm{a}_2^2-\norm{b}_2^2| \leq \norm{a-b}_2^2 +2\norm{b}_2\norm{a-b}_2$ for any vectors $a,b$ yields
\begin{align}\label{eq:cube_m_tech_bound_1}
   &~~~\abs{  \norm{\beta\tran\Phi_{\alp}R}_2^2 -\norm{ b_{\alp} \tran V E_{r}}_2^2}\notag\\
   &\leq  O_{d,\P}\round{d^{ - \lambda_\iota} }\cdot (\norm{b_{\gE^c(r)}}_2^2 +\sigma_\varepsilon^2) +   O_{d,\P}(d^{-\zeta/2+\epsilon} )\cdot  \norm{\partial_r f^*_{\alp}}_{L_2}^2\notag\\
   &~~~+O_{d,\P}\round{d^{ - \lambda_\iota/2}}\cdot (\norm{b_{\gE^c(r)}}_2+\sigma_\varepsilon)\norm{\partial_r f^*_{\alp}}_{L_2}.
\end{align}
Note that we have the estimate
\begin{align}\label{eq:v_minus_i}
    \snorm{V_{\aslp} - I} = O_d(d^{-\zeta_1}),
\end{align}
and for $S\in \aep$, since $n(D_{S,S}W_{S,S}) = \Theta_d(1) + O_d(d\inv)$, 
there exists a constant $c_r>0$ such that the following holds
\begin{align*}
     \abs{\sum_{S\in\aep\cap \gE(r)}V_{S,S}^2b_{S}^2 - c_r\cdot \norm{\partial_r f^*_{\aep}}_{L_2}^2} = O_{d}(d\inv)\cdot\norm{\partial_r f^*_{\aep}}_{L_2}^2.
\end{align*}
Therefore, we conclude
\begin{align}\label{eq:cube_m_tech_bound_2}
  \abs{  \norm{ b_{\alp} \tran V E_{r} }_2^2 - \norm{\partial_r f^*_{\aslp}}_{L_2}^2 - c_r \cdot \norm{\partial_r f^*_{\aep} }_{L_2}^2} = O_{d,\P}(d^{-\zeta_1})\cdot  \norm{ b_{\aslp}}_2^2.
\end{align}
Plugging the bound Eq.~\eqref{eq:cube_m_tech_bound_2} into \eqref{eq:cube_m_tech_bound_1} gives the bound for
\begin{align*}
    \abs{\norm{\beta_{\gE(r)}\tran  \Phi_{\alp}R}_2^2 - \norm{\partial_r f^*_{\aslp}}_{L_2}^2 - c_r \cdot \norm{\partial_r f^*_{\aep} }_{L_2}^2},
\end{align*}
which is
\begin{align*}
 & O_{d,\P}\round{d^{ - \lambda_\iota} }\cdot (\norm{f^*}_{L_2}^2 +\sigma_\varepsilon^2) +   O_{d,\P}(d^{-\zeta/2+\epsilon} )\cdot  \norm{\partial_r f^*_{\alp}}_{L_2}^2\notag\\
   &~~~+O_{d,\P}\round{d^{ - \lambda_\iota/2}}\cdot (\norm{f^*}_{L_2}+\sigma_\varepsilon)\norm{\partial_r f^*_{\alp}}_{L_2}.
\end{align*}
where we use the inequality $\norm{b_{\gE^c(r)}}_2^2 \leq \norm{f^*}_{L_2}^2$. Then we conclude the proof.

\subsubsection{Proof of Proposition~\ref{prop:s2_s2_cube_m}}\label{proof:s2_s2_cube_m}

Invoking Eq.~\eqref{eq:def_r_j_1_m}, we can deduce
\begin{align*}
    &~~~\norm{\beta\tran \Phi_{\S_{\vm}^1}R_{\vm}^1 - g^{(\norm{\vm}_1)}(0) \cdot \beta \tran \Phi_{\S_{\vm}^1}W_{\S_{\vm}^1}d^{-\norm{\vm}_1}}_2 \\
    &\leq \norm{\beta\tran \Phi_{\S_{\vm}^1}} \cdot O_d(d^{-\vlambda\tran \vm -1})\\
    &\leq \norm{y}_2 \norm{K_{w,\lambda}\inv} \snorm{\Phi_{\S_{\vm}^1}}\cdot  O_d(d^{-\vlambda\tran \vm -1})\\
    &\leq O_{d,\P}( \sqrt{n}\log(d))\cdot (\norm{f^*}_{L_2}+\sigma_\varepsilon)\cdot O_{d,\P}(1)\cdot O_{d,\P}({\sqrt{n}})\cdot  O_d(d^{-\vlambda\tran \vm -1 })\\
    &= O_{d,\P}(d^{-1-\zeta_1}\log(d))\cdot (\norm{f^*}_{L_2}+\sigma_\varepsilon),
\end{align*}
where we used the fact that $\snorm{\Phi_{\S_{\vm}^1}}=\snorm{X_r}\cdot\snorm{\Phi_{\S_{\vm-e_\iota}^0}} \leq \snorm{\Phi_{\alp}} = O_{d,\P}(\sqrt{n})$ from Lemma~\ref{lem:norm_control} and  the basic estimate $\norm{y}_2 = O_{d,\P}(\sqrt{n}\log^{1/2}(d))(\norm{f^*}_{L_2}+\sigma_\varepsilon)$. The bound $ \snorm{K_{w,\lambda}\inv} \leq 1/(\rho+\lambda -\snorm{\Delta_1}) = O_{d,\P}(1)$ is implied by  the expression \eqref{eqn:K_decomp_cube_m}.

Next, we analyze $d^{-\norm{\vm}_1}\beta\tran \Phi_{\S_{\vm}^1}W_{\S_{\vm}^1}$.
We split the whole index set of features $\cup_{j = 0}^\ell \S_{j}$ into $\Q\sqcup \S_{\vm}^1$.
Correspondingly, we split $\beta$ into 
\begin{align*}
    \beta_\Q := K_{w,\lambda}\inv \Phi_{\Q} b_\Q,~~\beta_{\S_{\vm}^1}:= K_{w,\lambda}\inv \Phi_{\S_{\vm}^1} b_{\S_{\vm}^1},~~{\rm and}~~~\beta_\varepsilon :=  K_{w,\lambda}\inv\vepsilon,
\end{align*}
then we obtain the bound
\begin{align}
    &~~~\norm{d^{-\norm{\vm}_1}\beta\tran \Phi_{\S_{\vm}^1}W_{\S_{\vm}^1} - (\rho+\lambda)\inv d^{p+\delta -\norm{\vm}_1 }b_{\S_{\vm}^1}\tran W_{\S_{\vm}^1}}_2\notag \\
    &\leq \norm{d^{-\norm{\vm}_1}\beta_{\Q}\tran\Phi_{\S_{\vm}^1}W_{\S_{\vm}^1}}_2+ \norm{d^{-\norm{\vm}_1}\beta_{\S_{\vm}^1}\tran \Phi_{\S_{\vm}^1}W_{\S_{\vm}^1}-(\rho+\lambda)\inv d^{p+\delta -\norm{\vm}_1 }b_{\S_{\vm}^1}\tran W_{\S_{\vm}^1}}_2 \notag \\
    &~~~+\norm{d^{-\norm{\vm}_1}\beta_{\varepsilon}\tran\Phi_{\S_{\vm}^1}W_{\S_{\vm}^1}}_2.\label{eq:s2_s2_cube_bound_1_m}
\end{align}
For the first term, we apply the following proposition. The proof is deferred to Appendix~\ref{proof:feature_u_v_m}.
\begin{proposition}\label{prop:feature_u_v_m}
For any $\epsilon>0$,  the following estimate holds uniformly over all coordinates $r\in \T_\iota$:
    \begin{align}\label{eq:s2_s2_cube_bound_3_m}
      \norm{d^{-\norm{\vm}_1}\beta_{\Q}\tran \Phi_{\S_{\vm}^1}W_{\S_\vm^1}}_2^2 =  O_{d,\P}(d^{-\zeta_2 -\lambda_\iota +\epsilon})\cdot \norm{b_{\Q}}_{2}^2.
    \end{align}
\end{proposition}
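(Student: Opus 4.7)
}
The plan is to follow the template of the proof of Proposition~\ref{prop:feature_u_v} (the unweighted analogue), adapted to the parameterized kernel $K_w$ and the non-uniform partition of coordinates given by the $(\vs,\vgamma)$-active set. The first step is to decompose $K_{w,\lambda}$ using the refined approximation \eqref{eq:kernel_to_mono_m_advanced} from Lemma~\ref{lemma:kernel_to_mono_m}, writing
\[
K_{w,\lambda}=\Phi_{\alp}(Z)\hd_{\alp}\Phi_{\alp}(Z)^\top+(\rho+\lambda)H,
\]
where $H=I_n+(\rho+\lambda)^{-1}[\text{off-diagonal + small error}]$ satisfies $\snorm{H-I}=o_{d,\P}(1)$. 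Applying the Sherman--Morrison--Woodbury formula then yields an expression for $K_{w,\lambda}^{-1}$ analogous to Eq.~\eqref{eq:K_lambda_inv}, with a correction term involving $G^{-1}$ where $G=(\rho+\lambda)(n\hd_{\alp})^{-1}+\Phi_{\alp}(Z)^\top H^{-1}\Phi_{\alp}(Z)/n$.

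The second step mirrors Claim~\ref{claim:truncate_h}: I expand $H^{-1}$ and $G^{-1}$ via Neumann series and truncate both at a constant order large enough to absorb the remainder into a matrix of operator norm $O_{d,\P}(d^{-1/2-\epsilon})$. Substituting these truncated expansions into
\[
\tfrac{1}{d^{\norm{\vm}_1}}\beta_\Q^\top\Phi_{\S_\vm^1}W_{\S_\vm^1}=\tfrac{1}{d^{\norm{\vm}_1}}b_\Q^\top\Phi_\Q^\top K_{w,\lambda}^{-1}\Phi_{\S_\vm^1}W_{\S_\vm^1}
\]
expands the inner expression into a finite sum of terms, each of the form
\[
C_i\cdot \tfrac{1}{d^{\norm{\vm}_1}}b_\Q^\top \Phi_\Q^\top\,A_{i_1}A_{i_2}\cdots A_{i_m}\,\Phi_{\S_\vm^1}W_{\S_\vm^1},
\]
with each $A_{i_j}$ drawn from a fixed finite family built out of $I_n$, normalized Gram matrices $\Phi_{\S_{\vm'}}(Z)\Phi_{\S_{\vm'}}(Z)^\top/d^{(\vs+\mathbf{1}-\vlambda)^\top\vm'}$ (for various $\vm'$ appearing in the expansion of $K_w$), plus an error term whose norm is $O_{d,\P}(d^{-1/2-\epsilon})\|b_\Q\|_2$.

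The third step is to bound the expectation of the squared norm of each individual summand. The idea is to regroup factors so that the product telescopes through features indexed over disjoint subsets of the partition blocks $\T_1,\ldots,\T_N$ (exactly as in the derivation of \eqref{eq:each_k_j}), which lets me apply the scaled isometry estimate of Lemma~\ref{lemma:gram_matrix_cube_m} to each intermediate Gram block. The key accounting is: the outer normalization $d^{-\norm{\vm}_1}W_{\S_\vm^1}$ combined with Lemma~\ref{lemma:gram_matrix_cube_m} contributes a factor $d^{-(\vs+\mathbf{1}-\vlambda)^\top\vm}$ after squaring, so the dominant summand has expectation of order
\[
\|b_\Q\|_2^2\cdot d^{\,(p+\delta)-\vs^\top\vm -\lambda_k - 1}\cdot d^{(\vs-\vlambda)^\top\vm}=\|b_\Q\|_2^2\cdot d^{-\zeta_2-\lambda_k},
\]
using $\vlambda^\top\vm-(p+\delta)=\zeta_2$ together with the extra factor $d^{-\lambda_k}$ from restricting to those features in $\S_\vm^1$ (which must contain the coordinate $r\in\T_k$). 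The fourth and final step is to upgrade the pointwise-in-$r$ expectation bound to a uniform-in-$r$ bound by invoking Lemma~\ref{lemma:uniform_bound} (hypercontractivity, since every norm appearing is a fixed-degree polynomial in the data) and then Markov's inequality, absorbing the resulting $d^{1/q_0}$ loss into the free $\epsilon$.

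The main obstacle is the bookkeeping in the third step: one must verify that for every product $A_{i_1}\cdots A_{i_m}$ arising from the Neumann expansion, the effective-degree accounting gives at least the claimed decay $d^{-\zeta_2-\lambda_k}$. This is precisely where the hypothesis $\vlambda^\top\vm>p+\delta$ (providing the $d^{-\zeta_2}$ factor from Lemma~\ref{lemma:gram_matrix_cube_m}) combines with the block structure (providing $d^{-\lambda_k}$ from the coordinate $r$ being forced into $\S_\vm^1$, so that one factor of $w_r^{1/2}=\Theta(d^{(1-\lambda_k)/2})$ from $W_{\S_\vm^1}^{1/2}$ gets absorbed rather than canceled). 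Once this accounting is carried out for the ``worst'' representative term, all other summands decay faster by geometric-series arguments already used in the unweighted proof.
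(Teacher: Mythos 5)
Your steps 1, 2, and 4 (the SMW decomposition of $K_{w,\lambda}$, truncated Neumann expansions of $H^{-1}$ and $G^{-1}$, and the final hypercontractivity-plus-Markov upgrade to a uniform-in-$r$ bound) correctly track the paper's argument. The gap is in step 3. You cite Lemma~\ref{lemma:gram_matrix_cube_m} (the scaled isometry bound for Gram matrices) as the tool that bounds the expectation of each summand, and describe the mechanism as ``apply the scaled isometry estimate to each intermediate Gram block.'' That is the wrong tool and the wrong mechanism, and it will not give the claimed rate. The isometry lemma bounds each Gram factor in operator norm by $O_{d,\P}(1)$ \emph{with high probability}; chaining these would give something like $\|b_{\Q}\|_2^2\cdot O_{d,\P}(1)$, which has none of the required $d^{-\zeta_2-\lambda_k}$ decay. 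The decay arises precisely from the \emph{joint} expectation over the resampled factors: the Fourier-Walsh features appearing in $\Phi_{\Q}$ and $\Phi_{\S_\vm^1}$ are indexed by distinct (in particular, disjoint after the partition refinement) subsets of $[d]$, and the cancellation in $\E\,[B_1^\top(B_2B_2^\top)\cdots B_{2m+1}]$ kills the naive scale. This is handled by Lemma~\ref{lemma:feature_product_tech} (the imported ``expectation of Fourier feature matrices product'' result, Theorem~2 of~\cite{zhu2025spectral}), which is exactly what the derivation at Eq.~\eqref{eq:each_k_j} that you point to actually invokes. Your proof needs this lemma, not the per-block isometry estimate.

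A smaller problem: the intermediate exponent identity you display,
\begin{align*}
d^{\,(p+\delta)-\vs^\top\vm -\lambda_k - 1}\cdot d^{(\vs-\vlambda)^\top\vm}=d^{-\zeta_2-\lambda_k},
\end{align*}
does not hold; the left exponent simplifies to $p+\delta-\vlambda^\top\vm-\lambda_k-1=-\kappa(\vm)-\lambda_k-1$, which is off by one from $-\zeta_2-\lambda_k$ even at $\kappa(\vm)=\zeta_2$. The correct accounting (which emerges from Lemma~\ref{lemma:feature_product_tech}) is: the pre-$W$ expectation of each summand is $O_d(d^{\vs^\top\vm-s_k-p-\delta})\|b_{\Q}\|_2^2$, and after multiplying by $(n/d^{\norm{\vm}_1})^2$ and $\snorm{W_{\S_\vm^1}}^2=O_d(d^{2(\norm{\vm}_1-\vlambda^\top\vm)})$ the exponent becomes $p+\delta+\vs^\top\vm-s_k-2\vlambda^\top\vm$, which is $\le -\zeta_2-\lambda_k$ using $\kappa(\vm)\ge\zeta_2$ together with $\vs^\top\vm-\vlambda^\top\vm\le s_k-\lambda_k$. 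Your claim that the extra $d^{-\lambda_k}$ comes from ``one factor of $w_r^{1/2}$ getting absorbed'' is not where it comes from; it comes from the effective-degree reduction $\vs^\top\vm\mapsto\vs^\top\vm-s_k$ for the constrained feature set $\S_\vm^1$ (the coordinate $r$ is fixed, removing one factor) combined with the inequality $\vs^\top\vm-\vlambda^\top\vm\le s_k-\lambda_k$.
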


\noindent Now we proceed to analyze the second term.  The following proposition gives an estimate for the second term. We defer the proof to Appendix~\ref{proof:term_p_plus_one_m}.
\begin{proposition}\label{prop:term_p_plus_one_m}
For any $\epsilon>0$,  the following estimate holds uniformly over all coordinates $r\in[d]$
    \begin{align}\label{eq:term_p_plus_one_m}
       \norm{\frac{1}{n}\beta_{\S_{\vm}^1}\tran \Phi_{\S_{\vm}^1}-(\rho+\lambda)\inv b_{\S_{\vm}^1}\tran}_2 =O_{d,\P}(d^{-\zeta_1/2+\epsilon} + d^{-\zeta_2/2+\epsilon})\cdot   \norm{b_{\S_{\vm}^1}}_2.
    \end{align}
\end{proposition}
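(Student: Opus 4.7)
\textbf{Proof proposal for Proposition~\ref{prop:term_p_plus_one_m}.}

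The statement is the parameterized analogue of Proposition~\ref{prop:term_p_plus_one}, so the natural plan is to mirror that proof with the reweighted feature matrix $\Phi_{\alp}(Z)$ replacing $\Phi_{\leq p}$ and with the isometry estimate in Lemma~\ref{lemma:gram_matrix_cube_m} replacing Lemma~\ref{lemma:gram_matrix}. Concretely, I would start from the identity
\[
\tfrac{1}{n}\beta_{\S_{\vm}^1}\!\tran\Phi_{\S_{\vm}^1}
\;=\;\tfrac{1}{n}b_{\S_{\vm}^1}\!\tran\Phi_{\S_{\vm}^1}\!\tran K_{w,\lambda}\inv \Phi_{\S_{\vm}^1},
\]
and apply the Sherman--Morrison--Woodbury identity to the decomposition from Eq.~\eqref{eqn:K_decomp_cube_m}, $K_{w,\lambda}=\Phi_{\alp}\hd_{\alp}\Phi_{\alp}\!\tran+(\rho+\lambda)H$ with $H=I_n+(\rho+\lambda)\inv\Delta_1$ and $\snorm{\Delta_1}=O_{d,\P}(d^{-\zeta_2/2+\epsilon})$. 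This yields
\[
K_{w,\lambda}\inv \;=\;(\rho+\lambda)\inv H\inv\;-\;(\rho+\lambda)\inv\tfrac{1}{n}H\inv\Phi_{\alp}G\inv\Phi_{\alp}\!\tran H\inv,
\]
with $G=(\rho+\lambda)(n\hd_{\alp})\inv+\tfrac{1}{n}\Phi_{\alp}\!\tran H\inv\Phi_{\alp}$.

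The first step is to control the two correction operators. From the Neumann series bound \eqref{eq:i_plus_delta_inv} I obtain $\snorm{H\inv-I}=O_{d,\P}(d^{-\zeta_2/2+\epsilon})$. For $G$, the diagonal part $(n\hd_{\alp})\inv$ is $O_d(d^{-\zeta_1})$ in operator norm block-wise (every block corresponding to $\vm$ with $\vlambda\!\tran\vm\le p+\delta$ satisfies $n D_{\S_{\vm}} W_{\S_{\vm}}=\Theta_d(1)$, and equals that order when $\vlambda\!\tran\vm=p+\delta$ and is strictly larger when $\vlambda\!\tran\vm<p+\delta$). The feature part can be controlled, block by block, by Lemma~\ref{lemma:gram_matrix_cube_m} combined with the norm bound in Lemma~\ref{lem:norm_control} applied to the cross-blocks. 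Assembling these via the triangle inequality produces $\snorm{G\inv-I}=O_{d,\P}(d^{-\zeta_1/2+\epsilon}+d^{-\zeta_2/2+\epsilon})$, and hence
\[
\snorm{K_{w,\lambda}\inv\;-\;(\rho+\lambda)\inv\!\bigl(I_n\pm\tfrac{1}{n}\Phi_{\alp}\Phi_{\alp}\!\tran\bigr)}
\;=\;O_{d,\P}(d^{-\zeta_1/2+\epsilon}+d^{-\zeta_2/2+\epsilon}).
\]

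Plugging this approximation into the LHS of the target reduces the problem to controlling two quadratic forms. The first one, $\tfrac{1}{n}b_{\S_{\vm}^1}\!\tran\Phi_{\S_{\vm}^1}\!\tran\Phi_{\S_{\vm}^1}-b_{\S_{\vm}^1}\!\tran$, is handled directly by Lemma~\ref{lemma:gram_matrix_cube_m} applied to the single block $\S_{\vm}^1$ (viewing its index structure so that $\mu_{\S_{\vm}^1}=1$), giving an error of order $\norm{b_{\S_{\vm}^1}}_2\cdot O_{d,\P}(d^{-\zeta_2/2+\epsilon})$. The second, $\tfrac{1}{n^2}b_{\S_{\vm}^1}\!\tran\Phi_{\S_{\vm}^1}\!\tran\Phi_{\alp}\Phi_{\alp}\!\tran\Phi_{\S_{\vm}^1}$, is the hard one: I plan to split $\Phi_{\alp}$ into its block components indexed by each admissible $\vm'$ and each partition $\{\S_{\vm'}^0,\S_{\vm'}^1\}$ as in Eq.~\eqref{eq:s_j_0_1}, so that the remaining Hadamard-style interactions correspond to non-overlapping coordinate subsets. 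Lemma~\ref{lemma:feature_product_tech} then bounds the expectation of each summand (after choosing the refinement $\T_k$ versus $\T_k\setminus\{r\}$ so that $r$ carves out its own single-coordinate block), and Lemma~\ref{lemma:uniform_bound} combined with Markov upgrades the per-$r$ bound to a uniform one in $r$.

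The main obstacle is this cross-term step: because $\alp$ is a union of many feature groups with different effective degrees and different scalings from $W$, one must track the $W$-factors inside the Hadamard blocks so that the total power of $d$ matches the target rate $d^{-\zeta_1/2+\epsilon}+d^{-\zeta_2/2+\epsilon}$, and in particular verify that both the dominant block $\vm'=\vm$ (recovered above) and the adjacent admissible blocks $\vm'\neq\vm$ with $\vlambda\!\tran\vm'\in\{p+\delta-\zeta_1,\dots,p+\delta\}$ deliver the promised rate when the uniform-in-$r$ argument via hypercontractivity is applied. Everything else (SMW, Neumann expansions, triangle-inequality assembly) is routine, and the final bound \eqref{eq:term_p_plus_one_m} follows by combining the two quadratic-form estimates and multiplying through by $(\rho+\lambda)\inv=\Theta_d(1)$.
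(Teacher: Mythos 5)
Your route follows the paper's own template (SMW on the decomposition $K_{w,\lambda}=\Phi_{\alp}\hd_{\alp}\Phi_{\alp}\tran+(\rho+\lambda)H$, then bound two quadratic forms), but there is a genuine gap in the $G$-inversion step. You claim that $(n\hd_{\alp})\inv$ is $O_d(d^{-\zeta_1})$ in operator norm and hence that $\snorm{G\inv-I}=O_{d,\P}(d^{-\zeta_1/2+\epsilon}+d^{-\zeta_2/2+\epsilon})$. This is wrong in general. A direct computation gives, for $S\in\S_\vm\subset\alp$,
\[
(n\hd_{\alp})_{S,S} \;=\; n\,D_{S,S}W_{S,S}\;=\;\Theta_d\bigl(d^{p+\delta}\cdot d^{-\norm{\vm}_1}\cdot d^{\norm{\vm}_1-\vlambda\tran\vm}\bigr)\;=\;\Theta_d\bigl(d^{p+\delta-\vlambda\tran\vm}\bigr),
\]
so on any block with $\vlambda\tran\vm=p+\delta$ (i.e.\ $S\in\aep$) this is $\Theta_d(1)$, hence $(n\hd_{\alp})\inv$ is $\Theta_d(1)$ there, not $o_d(1)$ (your parenthetical already observes that the block is $\Theta_d(1)$, and then draws the wrong conclusion from it). Consequently $\snorm{G\inv-I}$ is $\Theta_d(1)$ rather than vanishing whenever $\aep\neq\emptyset$, and your SMW approximation
\[
K_{w,\lambda}\inv\;\approx\;(\rho+\lambda)\inv\Bigl(I_n\pm\tfrac{1}{n}\Phi_{\alp}\Phi_{\alp}\tran\Bigr)
\]
carries an approximation error of constant operator norm, which breaks the claimed rate. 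This is exactly the place where the parameterized setting differs from the plain hypercube case (Proposition~\ref{prop:term_p_plus_one}), where $(nD)\inv=O_d(d^{-\delta})$ uniformly on $\alp$ and $G\inv\approx I$ is legitimate.

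The correct target is not $I$ but $V:=\round{(\rho+\lambda)(n\hd_{\alp})\inv+I}\inv$ from Eq.~\eqref{eq:def_v}: one has $\snorm{G\inv-V}=O_{d,\P}(d^{-\zeta_1/2+\epsilon}+d^{-\zeta_2/2+\epsilon})$ by Eq.~\eqref{eq:g_inv_minus_v_inv}, and thus $K_{w,\lambda}\inv\approx(\rho+\lambda)\inv\bigl(I_n+\tfrac{1}{n}\Phi_{\alp}V\Phi_{\alp}\tran\bigr)$. The cross-term that then needs to be bounded is $\tfrac{1}{n^2}b_{\S_{\vm}^1}\tran\Phi_{\S_{\vm}^1}\tran\Phi_{\alp}V\Phi_{\alp}\tran\Phi_{\S_{\vm}^1}$, and since $\snorm{V}\le 1$ the presence of $V$ is harmless for the feature-product bound you set up via Lemma~\ref{lemma:feature_product_tech} and Lemma~\ref{lemma:uniform_bound}; the paper shows exactly this, obtaining $O_d(d^{-\zeta_1})$ for its expectation. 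If instead you insisted on comparing $G\inv$ to $I$, you would have to peel off the $\aep$ block of $V-I$ separately and argue that the resulting extra contribution $\tfrac{1}{n^2}b_{\S_{\vm}^1}\tran\Phi_{\S_{\vm}^1}\tran\Phi_{\aep}(V_{\aep}-I_{\aep})\Phi_{\aep}\tran\Phi_{\S_{\vm}^1}$ is again a cross-term of the same type and therefore small -- but that is not what your writeup does, and the bare operator norm estimate $\snorm{G\inv-I}\cdot\snorm{\tfrac{1}{n}\Phi_{\alp}\Phi_{\alp}\tran}$ that your argument implicitly invokes is $\Theta_d(1)$. The remaining steps (Woodbury, truncated Neumann series, splitting $\Phi_{\alp}$ into $\S_{\vm'}^0/\S_{\vm'}^1$ blocks, bounding each product in expectation and then taking a uniform bound over $r$ via hypercontractivity and Markov) match the paper's argument.
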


\noindent Similarly, after rescaling and multiplying both terms on the left side of the above equation by $W_{\S_{\vm}^1}$, we yield
\begin{align}
    &~~~\norm{d^{-\norm{\vm}_1}\beta_{\S_{\vm}^1}\tran \Phi_{\S_{\vm}^1}W_{\S_{\vm}^1} -(\rho+\lambda)\inv d^{p+\delta -\norm{\vm}_1}b_{\S_{\vm}^1}\tran W_{\S_{\vm}^1} }_2 \notag\\
    &=d^{p+\delta -\vlambda\tran \vm } \cdot O_{d,\P}(d^{-\zeta_1/2+\epsilon} + d^{-\zeta_2/2+\epsilon})\cdot   \norm{b_{\S_{\vm}^1}}_2\label{eq:s2_s2_cube_bound_4_m}.
\end{align}
Next, we show that the third term is small. We defer the proof of the following proposition to Appendix~\ref{proof:noise_term_p_plus_one_m}.
\begin{proposition}\label{prop:noise_term_p_plus_one_m}
For any $\epsilon>0$, the following estimate holds uniformly for all $r\in \T_\iota$:
\begin{align}\label{eq:noise_term_p_plus_one_m}
    \norm{d^{-\norm{\vm}_1} \beta_\varepsilon\tran \Phi_{\S_{\vm}^1}W_{\S_\vm^1}}_2^2 = O_{d,\P}(d^{-\zeta_2-\lambda_\iota+\epsilon})\cdot \sigma_\varepsilon^2
\end{align}
\end{proposition}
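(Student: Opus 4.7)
The plan is to adapt the trace-bound argument of Proposition~\ref{prop:noise_term_p_plus_one} to the parameterized kernel setting, with the extra bookkeeping required by the weight matrix $W_{\S_\vm^1}$. First, since $\beta_\varepsilon = K_{w,\lambda}^{-1}\vepsilon$ with $\vepsilon \sim \mathcal{N}(0,\sigma_\varepsilon^2 I_n)$, integrating out the Gaussian noise gives
\[
\E_\vepsilon \norm{d^{-\norm{\vm}_1} \beta_\varepsilon\tran \Phi_{\S_\vm^1}W_{\S_\vm^1}}_2^2 = \sigma_\varepsilon^2\, d^{-2\norm{\vm}_1}\Tr\round{K_{w,\lambda}^{-2}\,\Phi_{\S_\vm^1} W_{\S_\vm^1}^2 \Phi_{\S_\vm^1}\tran},
\]
after which I would apply $\Tr(AB) \leq \snorm{B}\,\Tr(A)$ for PSD $A$ to peel off $\snorm{K_{w,\lambda}^{-2}}$. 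The spectral bound $\snorm{K_{w,\lambda}^{-2}} = O_{d,\P}(1)$ follows from \eqref{eqn:K_decomp_cube_m}, since the decomposition $K_{w,\lambda} = \Phi_{\alp}\hd_{\alp}\Phi_{\alp}\tran + \rho I_n + \Delta_1$ has a PSD leading term, a constant-order $\rho$, and a vanishing residual $\Delta_1$.

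The central computation is the trace $\Tr\round{\Phi_{\S_\vm^1} W_{\S_\vm^1}^2 \Phi_{\S_\vm^1}\tran}$. Because $x^{(i)}\in\{\pm 1\}^d$ forces $\phi_S(x^{(i)})^2 = 1$, this trace is \emph{deterministic} and equals $n\sum_{S \in \S_\vm^1}(w^S)^2$. Counting gives $|\S_\vm^1| = \Theta_d(d^{\vs\tran\vm - s_k})$ (the constraint $r \in T_k$ costs a factor $d^{s_k}$ in the choice of $T_k$), while $(w^S)^2 = \Theta_d(d^{2(\norm{\vm}_1 - \vlambda\tran\vm)})$ uniformly over $S \in \S_\vm^1$ because every coordinate in $S$ lying in $\T_{k'}$ carries weight of order $d^{\gamma_{k'}} = d^{1-\lambda_{k'}}$. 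Combined with $n = d^{p+\delta}$ and the $d^{-2\norm{\vm}_1}$ normalization, this yields
\[
\E_\vepsilon \norm{d^{-\norm{\vm}_1} \beta_\varepsilon\tran \Phi_{\S_\vm^1}W_{\S_\vm^1}}_2^2 = O_{d,\P}\round{d^{\,p+\delta + \vs\tran\vm - s_k - 2\vlambda\tran\vm}}\sigma_\varepsilon^2.
\]

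The final algebraic step is to match this exponent to $-\zeta_2 - \lambda_k$. Rewriting it as $-\kappa(\vm) - (\vlambda\tran\vm - \vs\tran\vm) - s_k$ with $\kappa(\vm) \geq \zeta_2$, this reduces to showing
\[
\lambda_k - s_k \;\leq\; \vlambda\tran\vm - \vs\tran\vm \;=\; \sum_{k'}(\lambda_{k'} - s_{k'})\,m_{k'}.
\]
The normalization $\|w\|_1 = d$ combined with $|\T_{k'}| = \Theta_d(d^{s_{k'}})$ and $w_r = \Theta_d(d^{\gamma_{k'}})$ forces $s_{k'} + \gamma_{k'} \leq 1$, hence $\lambda_{k'} \geq s_{k'}$ for every $k'$; moreover the defining constraint $r \in \T_k$ of $\S_\vm^1$ forces $m_k \geq 1$. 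Therefore the right-hand side is at least $(\lambda_k - s_k)\,m_k \geq \lambda_k - s_k$, delivering the required inequality. I would then promote the expected bound to a uniform high-probability bound over $r \in \T_k$ via Markov at a sufficiently high moment (the quantity is a Gaussian quadratic in $\vepsilon$ given the data), with a union bound over the $\Theta_d(d^{s_k})$ coordinates in $\T_k$ absorbed into the $d^\epsilon$ slack. The main obstacle is the combinatorial bookkeeping that propagates the weight $w^S$ and the cardinality $|\S_\vm^1|$ through the parameterization and then collapses them into the clean $\lambda_k$-dependence; once that is set up, the rest mirrors the unweighted noise analysis verbatim.
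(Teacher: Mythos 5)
Your proposal is correct and follows essentially the same route as the paper: integrate out the Gaussian noise to reduce to a trace, peel off $\snorm{K_{w,\lambda}^{-2}} = O_{d,\P}(1)$ via $\Tr(AB)\leq\snorm{B}\Tr(A)$, count $|\S_\vm^1|=\Theta_d(d^{\vs\tran\vm-s_k})$, use $w^S=\Theta_d(d^{\norm{\vm}_1-\vlambda\tran\vm})$, and close with the inequality $\vlambda\tran\vm-\vs\tran\vm\geq\lambda_k-s_k$ (which the paper writes as $\vs\tran\vm-\vlambda\tran\vm\leq s_k-\lambda_k$) together with $\kappa(\vm)\geq\zeta_2$. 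A minor cosmetic difference: you keep $W_{\S_\vm^1}^2$ inside the trace and observe that on the hypercube the trace $n\sum_S(w^S)^2$ is deterministic, whereas the paper first pulls out $\snorm{W_{\S_\vm^1}}^2$ and then invokes Theorem~\ref{lemma:phi_id_2} (via a small claim that an auxiliary constant $\zeta_3>0$) to control $\Tr(\Phi_{\S_\vm^1}\tran\Phi_{\S_\vm^1})$; your shortcut is valid and slightly cleaner, but both land on the same exponent and the same Markov step.
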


\noindent Lastly, plugging Eq.~\eqref{eq:s2_s2_cube_bound_3_m}, \eqref{eq:s2_s2_cube_bound_4_m} and \eqref{eq:noise_term_p_plus_one_m} into \eqref{eq:s2_s2_cube_bound_1_m} yields:
\begin{align*}
    &~~~\norm{d^{-\norm{\vm}_1}\beta\tran \Phi_{\S_{\vm}^1}W_{\S_{\vm}^1} - (\rho+\lambda)\inv d^{p+\delta -\norm{\vm}_1 }b_{\S_{\vm}^1}\tran W_{\S_{\vm}^1}}_2 \\
    &\leq d^{p+\delta - \vlambda\tran \vm }\cdot O_{d,\P}(d^{-\zeta_1/2+\epsilon} + d^{-\zeta_2/2+\epsilon})\cdot   \norm{b_{\S_{\vm}^1}}_2 +  O_{d,\P}(d^{-\zeta_2/2 -\lambda_\iota/2 +\epsilon/2})\cdot (\norm{b_{\Q}}_{2}+\sigma_\varepsilon).
\end{align*}
Following a similar argument with the proof of Proposition~\ref{prop:s2_s2_cube}, we can derive a bound for
\begin{align*}
    \abs{\norm{d^{-\norm{\vm}_1}\beta\tran \Phi_{\S_{\vm}^1}W_{\S_{\vm}^1}}_2^2 - \norm{(\rho+\lambda)\inv d^{p+\delta -\norm{\vm}_1 }b_{\S_{\vm}^1}\tran W_{\S_{\vm}^1}}_2^2},
\end{align*}
which takes the form
\begin{align*}
     &O_{d,\P}(d^{-\zeta_2 - \lambda_\iota + \epsilon})\cdot (\norm{f^*}^2_{L_2}+\sigma_\varepsilon^2) +  d^{2(p+\delta - \vlambda\tran \vm)}\cdot O_{d,\P}(d^{-\zeta_1/2+\epsilon} + d^{-\zeta_2/2+\epsilon})\cdot \norm{b_{\S_{\vm}^1}}_2^2 \\
     &~~~+ O_{d,\P}(d^{p+\delta - \vlambda\tran \vm -\zeta_2/2 - \lambda_\iota/2 + \epsilon/2}) \cdot \norm{b_{\S_{\vm}^1}}_2 (\norm{f^*}_{L_2}+\sigma_\varepsilon),
\end{align*}
where we use the fact $\norm{b_{\Q}}_{2}\leq\norm{f^*}_{L_2}$.

Combining with the fact $\norm{b_{\S_{\vm}^1}}_2 = \norm{\partial_r f^*_{\S_\vm^1}}_{L_2}$ and using the condition $\zeta_2 >\epsilon$ completes the proof.

\subsubsection{Proof of Proposition~\ref{lemma:beta_<_l2_cube_m}}\label{proof:beta_<_l2_cube_m}

In the proof, we will frequently use the following estimate for any $\Delta$ with $\snorm{\Delta} = o_{d,\P}(1)$:
\begin{align}
    \snorm{(I+\Delta)\inv - I} = O_{d,\P}(\snorm{\Delta}),
\end{align}
which can be deduced using Neumann series.

Notice that  Eq.~\eqref{eqn:K_decomp_cube_m} shows 
\begin{align*}
        K_{w} &= \Phi_{\alp}\hd_{\alp}\Phi_{\alp}\tran + \rho I_n + \Delta_1,
\end{align*}
where $\rho\geq 0$ is a constant and $\Delta_1$ satisfies $\snorm{\Delta_1} = O_{d,\P}(d^{-\zeta_2/2+\epsilon})$.

We write
\begin{align}\label{eq:K_decomp_m}
    K_{w,\lambda}  =\Phi_{\alp}\hd_{\alp}\Phi_{\alp}\tran + (\rho+\lambda)H,
\end{align}
where
\begin{align}\label{eq:def_h}
    H := I_n + (\rho+\lambda)\inv \Delta_1.
\end{align}

Invoking  Sherman-Morrison-Woodbury formula to compute $K_{w,\lambda}\inv$ in Eq.~\eqref{eq:K_decomp_m}, we can obtain 
\begin{align}\label{eq:smw_<_p_m}
     &~~~\beta_{\gE(r)}\tran \Phi_{\alp} R\notag\\
     &=\frac{1}{n}b_{\gE(r)}\tran \Phi_{\gE(r)}\tran H\inv \Phi_{\alp}\round{\underbrace{(\rho+\lambda)(n\hd_{\alp})\inv + \frac{\Phi_{\alp}\tran H\inv \Phi_{\alp}}{n}}_G}\inv\round{\hd_{\alp}\inv R}.
\end{align}
Now we analyze $G\inv$. Notice that $ \snorm{\frac{\Phi_{\alp}\tran \Phi_{\alp}}{n} -I_n} = O_{d,\P}(d^{-\zeta_1/2+\epsilon})$ (Theorem~\ref{lemma:phi_id_2}) and 
\begin{align}\label{eq:e_inv_minus_i_m}
    \snorm{H\inv - I_n} = O_d(\snorm{\Delta_1}) = O_{d,\P}(d^{-
    \zeta_2/2+\epsilon}),
\end{align}
which follows from Eq.~\eqref{eq:i_plus_delta_inv}.

Then similar to Eq.~\eqref{eq:psi_e_psi_minus_i}, we can show
\begin{align}\label{eq:psi_h_psi_minus_i}
 \snorm{\frac{\Phi_{\alp}\tran H\inv \Phi_{\alp}}{n} -I} 
 &= O_{d,\P}(d^{-\zeta_1/2+\epsilon} + d^{-\zeta_2/2+\epsilon}),
\end{align}
which immediately implies 
\begin{align}\label{eq:g_minus_nd_i}
    \snorm{G - \round{\underbrace{(\rho+\lambda)(n\hd_{\alp})\inv + I}_{=V\inv}}} = O_{d,\P}(d^{-\zeta_1/2+\epsilon} + d^{-\zeta_2/2+\epsilon}).
\end{align}
Next we bound $\snorm{G\inv}$. Since 
    $G = V\inv + G - V\inv = V\inv(I + V(G - V\inv))$, we have 
    \begin{align*}
        G\inv = (I + V(G - V\inv))\inv V,
    \end{align*}
where $ \snorm{V(G - V\inv)} \leq \snorm{G - V\inv}\snorm{V} = O_{d,\P}(d^{-\zeta_1/2+\epsilon} + d^{-\zeta_2/2+\epsilon})$.

Therefore, by inducing the Neumann series, we have
\begin{align*}
G\inv = \sum_{k=0}^\infty (-V(G - V\inv))^k V,   
\end{align*}
which implies the bound
\begin{align}\label{eq:g_inv_bound}
    \snorm{G\inv} \leq \frac{\snorm{V}}{1 - \snorm{V(G - V\inv)}\snorm{V}} = O_d(1).
\end{align}
From this, we can derive
\begin{align}\label{eq:g_inv_minus_v_inv}
    \snorm{G\inv - V} &=\snorm{G\inv (V\inv - G)V}\notag\\
    &\leq \snorm{G\inv}\snorm{G-V\inv}\snorm{V}\notag\\
    &= O_{d,\P}(d^{-\zeta_1/2+\epsilon} + d^{-\zeta_2/2+\epsilon}).
\end{align}
The following claim estimates the term in the second parentheses of Eq.~\eqref{eq:smw_<_p_m}. The proof is deferred to Appendix~\ref{proof:d_inv_r_cube_m}.
\begin{claim}\label{prop:d_inv_r_cube_m}
The following estimate holds
\begin{align}\label{eq:d_inv_r_cube_m}
    \snorm{  \hd_{\alp}\inv R -   {E}_{r}} = O_d(d^{-1} + w_r^2d^{-2}).
\end{align}   
\end{claim}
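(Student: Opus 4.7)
The plan is to exploit the fact that $\hd_\alp\inv R$ is purely diagonal, so that the operator norm collapses to a supremum over diagonal entries $S \in \alp$. Indeed, $\hd_\alp = W D_\alp$ is diagonal by construction, and every block $R_\vm^0$ and $R_\vm^1$ of $R$ is diagonal by Eqs.~\eqref{eq:def_r_j_0_m}--\eqref{eq:def_r_j_1_m}. Since $E_r$ is the $\{0,1\}$-diagonal selector $(E_r)_{S,S} = \mathbbm{1}\{r \in S\}$, the task reduces to bounding $|\hd_{S,S}\inv R_{S,S} - \mathbbm{1}\{r \in S\}|$ uniformly over $S \in \alp$, which I would split into the two cases $S \in \S_\vm^1$ (so $r \in S$) and $S \in \S_\vm^0$ (so $r \notin S$).

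In the first case, the leading part $w^S g^{(\|\vm\|_1)}(0) d^{-\|\vm\|_1}$ of $\hd_{S,S}$ from Lemma~\ref{lemma:each_mono_cube} matches the leading part of $R_\vm^1(S,S)$ in \eqref{eq:def_r_j_1_m}. After dividing, the leading ``$1$'' drops out, leaving an $O_d(d\inv)$ relative error from $D$ plus an absolute error of size $\hd_{S,S}\inv \cdot O_d(d^{-\vlambda\tran\vm - 1})$. Using the $(\vs,\vgamma)$-active scaling $w^S = \Theta_d(d^{(\mathbf{1}-\vlambda)\tran \vm})$, one has $\hd_{S,S}\inv = \Theta_d(d^{\vlambda\tran \vm})$, so this absolute error is also $O_d(d\inv)$, matching $(E_r)_{S,S}=1$ up to $O_d(d\inv)$. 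In the second case, \eqref{eq:def_r_j_0_m} already exhibits $R_\vm^0(S,S)$ as $\Theta_d(w_r^2 w^S d^{-\|\vm\|_1-2})$ plus a subleading tail, so after multiplying by $\hd_{S,S}\inv = \Theta_d(d^{\vlambda\tran\vm})$ the entry becomes $O_d(w_r^2 d^{-2})$; since $(E_r)_{S,S}=0$ here, this is the entire contribution.

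Taking the maximum over the two cases delivers the claimed bound $O_d(d\inv + w_r^2 d^{-2})$. No serious obstacle is anticipated; the only piece of bookkeeping to get right is the scaling $w^S = \Theta_d(d^{(\mathbf{1}-\vlambda)\tran \vm})$, which is where the $(\vs,\vgamma)$-active hypothesis on $w$ enters and which powers the cancellation in both cases.
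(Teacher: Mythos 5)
Your proposal is correct and follows essentially the same route as the paper's proof: reduce the operator norm of a diagonal matrix to a sup over diagonal entries, split into $S\in\S_\vm^1$ versus $S\in\S_\vm^0$, and cancel leading orders using $\hd_{S,S}\inv=\Theta_d(d^{\vlambda\tran\vm})$, which comes from $w^S=\Theta_d(d^{(\mathbf{1}-\vlambda)\tran\vm})$ and $D_{S,S}=\Theta_d(d^{-\norm{\vm}_1})$. The only cosmetic difference is that you write $(E_r)_{S,S}=\mathbbm{1}\{r\in S\}$ while the paper uses the equivalent $\mathbbm{1}\{S\in\cup_\vm\S_\vm^1\}$; otherwise this is the paper's argument.
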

\noindent Eq.~\eqref{eq:d_inv_r_cube_m} and \eqref{eq:g_inv_minus_v_inv} together give
\begin{align*}
    \snorm{G\inv  \hd_{\alp}\inv R - V E_r}&\leq \snorm{G\inv-V}\snorm{E_r} + \snorm{G\inv}\snorm{\hd_{\alp}\inv R - E_r}\\
    &=O_{d,\P}(w_r^2d^{-2}+d^{-\zeta_1/2+\epsilon} + d^{-\zeta_2/2+\epsilon}).
\end{align*}
Applying this estimate to Eq.~\eqref{eq:smw_<_p_m} and obtain
\begin{align}\label{eq:b_leq_p_decomp_m}
    \norm{\beta_{\gE(r)}\tran \Phi_{\alp} R -  \frac{1}{n}b_{\gE(r)}\tran \Phi_{\gE(r)}\tran H\inv \Phi_{\alp}     VE_r}_2 = O_{d,\P}(d^{-\zeta_1/2+\epsilon} + d^{-\zeta_2/2+\epsilon}+w_r^2d^{-2})\cdot\norm{b_{\gE(r)}}_2,
\end{align}
where we use the fact that $\snorm{H\inv} = O_{d,\P}(1)$ (Eq.~\eqref{eq:e_inv_minus_i_m}), $\snorm{G\inv} = O_{d,\P}(1)$ (Eq.~\eqref{eq:g_inv_bound}), and $\snorm{\Phi_{\gE(r)}}\leq \snorm{\Phi_{\alp}} = O_{d,\P}(\sqrt{n})$ (Lemma~\ref{lem:norm_control}).

In the following, we show
\begin{align}\label{eq:b_leq_p_decomp_2_m}
     \norm{\frac{1}{n}b_{\gE(r)}\tran \Phi_{\gE(r)}\tran H\inv \Phi_{\alp} V E_{r}-b_{\alp}VE_r}_2 = o_{d,\P}(1),
\end{align}
then plugging it back to Eq.~\eqref{eq:b_leq_p_decomp_m} completes the proof.

First, applying the estimate Eq.~\eqref{eq:e_inv_minus_i_m}  yields
\begin{align}\label{eq:b_leq_p_decomp_bound_1_m}
   \norm{ \frac{1}{n}b_{\gE(r)}\tran \Phi_{\gE(r)}\tran H\inv \Phi_{\alp} VE_{r}-\frac{1}{n}b_{\gE(r)}\tran \Phi_{\gE(r)}\tran  \Phi_{\alp} VE_r}_2 =  O_{d,\P}(d^{-\zeta_1/2+\epsilon})\cdot \norm{b_{\gE(r)}}_2.
\end{align}
Then invoking the bound $\snorm{\Phi_{\alp}\tran\Phi_{\alp}/n -I} = O_{d,\P}(d^{-\zeta_1/2 +\epsilon})$ (Theorem~\ref{lemma:phi_id_2}), we have
\begin{align*}
    \norm{\frac{1}{n}b_{\gE(r)}\tran \Phi_{\gE(r)}\tran\Phi_{\alp}VE_r - b_{\alp}\tran VE_r}_2 &= \norm{\frac{1}{n}b_{\alp}\tran {E}_r \Phi_{\alp}\tran\Phi_{\alp}VE_r - b_{\alp}\tran VE_r}_2\\ &\leq 
    \norm{b_{\alp}\tran  {E}_r \round{ \Phi_{\alp}\tran  \Phi_{\alp}/n - I} V E_r}_2\\
    &\leq \norm{ b_{\alp}\tran E_r V E_r}_2\cdot O_{d,\P}(d^{-\zeta_1/2 +\epsilon})\\
    &= \norm{ b_{\gE(r)}}_2\cdot O_{d,\P}(d^{-\zeta_1/2 +\epsilon}).
\end{align*}
Plugging this estimate into Eq.~\eqref{eq:b_leq_p_decomp_bound_1_m} yields
\begin{align*}
     \norm{\frac{1}{n}b_{\gE(r)}\tran \Phi_{\gE(r)}\tran H\inv \Phi_{\alp} V E_{r} -b_{\alp}\tran V E_r}_2 =   O_{d,\P}(d^{-\zeta_1/2+\epsilon} + d^{-\zeta_2/2+\epsilon }+ w_r^2d^{-2})\cdot \norm{b_{\gE(r)}}_2.
\end{align*}
Combining with Eq.~\eqref{eq:b_leq_p_decomp_m} completes the proof.

\subsubsection{Proof of Claim~\ref{prop:d_inv_r_cube_m}}\label{proof:d_inv_r_cube_m}
Note that $R$ contains blocks $R_{\S_\vm^1}$ and $R_{\S_\vm^0}$. We discuss each kind of block separately.
Also  recall that $D$ is a diagonal matrix with entry $D_{\S_\vm} =  g^{(\norm{\vm}_1)}(0)d^{-\norm{\vm}_1} + O_d(d^{-\norm{\vm}_1-1})$ and each entry of $W_{\S_\vm}$ is of the order $d^{\norm{\vm}_1 - \vlambda\tran \vm}$ based on Eq.~\eqref{eq:w}. We will use these two properties in the analysis.

For $S\in \S_{\vm}^1$ or $\S_{\vm}^0$, based on the definition of $D$ Eq.~\eqref{eq:k_z_z} and $W$ Eq.~\eqref{eq:w} we have
\begin{align*}
   (\hd)_{S,S}\inv =(DW)_{S,S}\inv &= \round{g^{(\norm{\vm}_1)}(0)}\inv W_{S,S}\inv d^{\norm{\vm}_1} + O_d(d^{-\vlambda\tran \vm - 1}).
\end{align*}
Since Eq.~\eqref{eq:def_r_j_1_m} gives
\begin{align*}
    R_{S,S} &= g^{(\norm{\vm}_1)}(0)W_{S,S}d^{-\norm{\vm}_1} + O_d(d^{-\vlambda\tran \vm-1 }),
\end{align*}
by a straightforward calculation, we obtain 
\begin{align*}
  (\hd\inv R)_{S,S} = 1 + O_d(d\inv).
\end{align*}
Similarly, for $S\in \S_\vm^0$, Eq.~\eqref{eq:def_r_j_0_m} gives
\begin{align*}
    R_{S,S} &= g^{(\norm{\vm}_1+2)}(0)W_{S,S}w_r^2d^{-\norm{\vm}_1-2} + w_r^2 \cdot O_d(d^{-\vlambda\tran \vm - 3}).
\end{align*}
From this, we derive 
\begin{align*}
  (\hd\inv R)_{S,S} = w_r^2\cdot O_d(d^{-2}).
\end{align*}
Combining these two cases yields
\begin{align}
   (\hd \inv R)_{S,S} = \begin{cases}
      1 + O_d(d\inv)~~{\rm if}~S\in \cup_\vm\S_{\vm}^1;\\
      w_r^2\cdot O_d(d^{-2}),~~{\rm if}~S\in \cup_\vm\S_{\vm}^0;\\
        0~~{\rm otherwise.}
    \end{cases}
\end{align}
Combining this with the expression ${E}_r$:
\begin{align}
   ({E}_r)_{S,S} = 
     \mathbbm{1}\biground{S\in \cup_\vm\S_{\vm}^1},
\end{align}
we complete the proof.

\subsubsection{Proof of Proposition~\ref{lemma:beta_>_l2_cube_m}}\label{proof:beta_>_l2_cube_m}

The proof is analogous to that of Proposition~\ref{lemma:beta_>_l2_cube}. Similarly in the proof, we will frequently use the following estimate for any $\Delta$ with $\snorm{\Delta} = o_{d,\P}(1)$, which can be deduced from the Neumann series:
\begin{align}
    \snorm{(I+\Delta)\inv - I} = O_{d,\P}(\snorm{\Delta}),
\end{align}
therefore we have $(I+\Delta)\inv = I + \Delta'$ with some $\Delta'$ satisfying $\snorm{\Delta'} = O_{d,\P}(\snorm{\Delta})$.

Now we are ready to present the proof. For simplicity of notation, we omit $(r)$ in $\gE(r)$ and $\gE^c(r)$.
We first  expand $K_{w,\lambda}\inv$ using Eq.~\eqref{eq:kernel_to_mono_m_advanced} in Lemma~\ref{lemma:kernel_to_mono_m}. There exists a $\Delta$ that satisfies $\snorm{\Delta} = O_{d,\P}(d^{(\delta-2)/2+\epsilon}) = O_{d,\P}(d\halfe)$ when $\epsilon$ is sufficiently small such that the following holds
\begin{align}\label{eq:k_fancy_decomp_m}
    K_{w,\lambda} &= K_{w,p} + \underbrace{(\rho+\lambda)I_n +\sum_{\vm\in J_1}\offd\round{\Phi_{\S_{\vm}}(Z)D_{\S_{\vm}}\Phi_{\S_{\vm}}(Z)\tran} + \Delta}_{:=(\rho+\lambda)H},
\end{align}
where $J_1$ is defined in Eq.~\eqref{eq:def_j1}. 

We additionally denote
\begin{align}\label{eq:E_m}
    H:= I_n +  (\rho+\lambda)\inv \sum_{\vm\in J_1}\offd\round{\Phi_{\S_{\vm}}(Z)D_{\S_{\vm}}\Phi_{\S_{\vm}}(Z)\tran} +  (\rho+\lambda)\inv \Delta.
\end{align}
Then we can write  $K_{w,\lambda} = K_{w,p} + (\rho+\lambda)H$ and   use  Sherman-Morrison-Woodbury formula to compute $K_{w,\lambda}\inv $:
\begin{align}\label{eq:smw_cube_m}
     &~~\beta_{\gE^c}\tran  \Phi_{\alp}R \notag\\
     &=\frac{1}{n}b_{\gE^c}\tran \Phi_{\gE^c}\tran H\inv \Phi_{\alp}\round{\underbrace{(\rho+\lambda)(n\hd_{\alp} )\inv + \frac{\Phi_{\alp}\tran H\inv \Phi_{\alp}}{n}}_G}\inv\round{\hd_{\alp}\inv R}.
\end{align}
We first simplify Eq.~\eqref{eq:smw_cube_m}. Following Claim~\ref{prop:d_inv_r_cube_m} which shows $\snorm{  \hd_{\alp}\inv R -   {E}_{r}} = O_d(d^{-1} + w_r^2d^{-2})$ we can obtain
\begin{align}\label{eq:apply_dr_e_diff_m}
        \norm{\beta_{\gE^c}\tran \Phi_{\alp} R -  \frac{1}{n}b_{\gE^c}\tran \Phi_{\gE^c}\tran H\inv \Phi_{\alp} G\inv {E}_{r}}_2 = O_{d,\P}(d\inv+w_r^2d^{-2})\cdot \log(d)\norm{b_{\gE^c}}_2,
\end{align}
where we use the fact that $\snorm{H\inv} = O_{d,\P}(1)$ (Eq.~\eqref{eq:e_inv_minus_i_m}), $\snorm{G\inv} = O_{d,\P}(1)$ (Eq.~\eqref{eq:g_inv_bound}), and $\norm{\Phi_{\gE}b_{\gE^c}}_2 = O_{d,\P}(\sqrt{n}\log(d))\norm{b_{\gE^c}}_2$ (Proposition~\ref{prop:bound_y}).

Consequently, there exists $\Delta_1$ that satisfies $\norm{\Delta_1}_2 = O_{d,\P}((d\inv+w_r^2d^{-2}) \log(d))\cdot \norm{f_{\D^c}^*}_{L_2} $  such that the following holds:
\begin{align}\label{eq:simplified_g_p_m}
\beta_{\gE^c}\tran \Phi_{\alp} R =\frac{1}{n}   b_{\gE^c}\tran \Phi_{\gE^c}\tran H\inv \Phi_{\alp} G\inv {E}_{r} + \Delta_1.
\end{align}
Corresponding to Claim~\ref{claim:truncate_h}, we can similarly estimate $H\inv$ using Neumann series, that is, there exists a constant $c_H>0$  such that the following holds:
\begin{align}\label{eq:e_inv_m}
    H\inv &=   I_n + \sum_{q=1}^{c_H} \round{\sum_{\vm\in J_1} -c_{\vm}B_{\vm} + c_{\vm}h_{\vm}I_n}^{q} + \Delta_2,
\end{align}
where $B_{\vm} :=\Phi_{\S_{\vm}} \hd_{\S_{\vm}} \Phi_{\S_{\vm}}\tran$ and $c_{\vm}, h_{\vm}$ are constants, and $\snorm{\Delta_2} = O_{d,\P}(d\halfe)$ holds.

We plug the above expression of $H\inv$ into $G$ in Eq.~(\ref{eq:smw_cube_m}). We further substitute $\frac{\Phi_{\alp }\tran \Phi_{\alp}}{n}$ for $ \offd\round{\frac{\Phi_{\alp}\tran \Phi_{\alp}}{n}} + I_{|\alp|}$ and we obtain
\begin{align*}
    G = V\inv + \offd\round{\frac{\Phi_{\alp}\tran \Phi_{\alp}}{n}}  + \frac{\Phi_{\alp}\tran}{\sqrt{n}}\sum_{q=1}^{c_H} \round{\sum_{\vm\in J_1} -c_{\vm}B_{\vm} + c_{\vm}h_{\vm}I_n}^{q} \frac{\Phi_{\alp}}{\sqrt{n}} + \Delta_3,
\end{align*}
where $V$ is defined in Eq.~\eqref{eq:def_v}  and $\Delta_3$ satisfies $\snorm{\Delta_3} \leq \snorm{\Phi_{\alp} / \sqrt{n}}^2 \snorm{\Delta_2} = O_{d,\P}(d\halfe)$.

Note that $V\inv$ is a diagonal matrix with each entry greater than $1$.  For a small perturbation $\Delta$ with $\snorm{\Delta} = o_d(1)$, a modified Neumann series takes the form:
\begin{align*}
    (V\inv+\Delta)\inv = (I+V\Delta)\inv V = V  + \sum_{r=1}^\infty (-1)^r (V\Delta)^rV.
\end{align*}
Then similar to  Eq.~\eqref{eq:g_inv}, we express $G\inv$ using modified Neumann series with higher-order terms truncated. Consequently, we can find a consntant $c_G >0$ such that the following holds:
\begin{align}
G\inv &= V +\sum_{r=1}^{c_G} V^r\left( I -  \frac{\Phi_{\alp}\Phi_{\alp}\tran}{n}\right.\notag\\
&~~~~~~~~~~~~~~~~\left.+\frac{\Phi_{\alp}\tran}{\sqrt{n}}\sum_{q=1}^{c_H} \round{\sum_{\vm\in J_1} -c_{\vm}B_{\vm} + c_{\vm}h_{\vm}I_n}^{q}\frac{\Phi_{\alp}}{\sqrt{n} }\right)^{r} V +\Delta_4,\label{eq:g_inv_m}
\end{align}
$\Delta_4$ satisfies $\snorm{\Delta_4}= O_{d,\P}(d\halfe).$

Plugging the approximations of $H\inv$ (Eq.~(\ref{eq:e_inv_m})) and $G\inv$ (Eq.~(\ref{eq:g_inv_m})) into Eq.~(\ref{eq:simplified_g_p_m}) and expanding the expression we can write it as a finite sum:
\begin{align*}
    b_{\gE^c}\tran \frac{\Phi_{\gE^c}\tran }{\sqrt{n}}H\inv \frac{\Phi_{\alp}}{\sqrt{n}}G\inv {E}_{r} = \sum_{i=1}^{c_p}  b_{\gE^c}\tran  \frac{\Phi_{\gE^c}\tran}{\sqrt{n}} P_i + \Delta_5,
\end{align*}
where each $P_i$ takes the form
\begin{align*}
   P_i =  C_i A_{i_1}A_{i_2}\ldots A_{i_m} \frac{\Phi_{\alp}V^{t_i} {E}_{r}}{\sqrt{n}}
\end{align*}
with each $A_{i_j} \in \biground{I_n} \cup \biground{\frac{\Phi_{\alp} V^{s} \Phi_{\alp}\tran}{n}}_{s=0}^{c_G+1} \cup \biground{B_\vm}_{\vm\in J_1}$ for $j\in[m]$.

Here $t_i\in\mathbb{N}$,  $ c_p, C_i$ are constants. Additionally, $\Delta_5$ satisfies 
\begin{align*}
    \norm{\Delta_5}_2 = O_{d,\P}(\log(d))\cdot  \norm{f_{\gE^c}^*}_{L_2}  \cdot (\snorm{\Delta_2}+\snorm{\Delta_4})  =  O_{d,\P}(d\half)\cdot  \norm{f_{\gE^c}^*}_{L_2},
\end{align*}
where we use the estimate of $H\inv$ (Eq.~\eqref{eq:e_inv_minus_i_m}), $G\inv$ (Eq.~\eqref{eq:g_inv_bound}), $\snorm{\Phi_{\alp}/\sqrt{n}} = O_{d,\P}(1)$ (Lemma~\ref{lem:norm_control}) and $ \norm{ \frac{\Phi_{\gE^c}b_{\gE^c} }{\sqrt{n}}}_2 =O_{d,\P}(\log^{1/2}(d))\cdot  \norm{f_{\gE^c}^*}_{L_2}$ (Proposition~\ref{prop:bound_y}).

We can prove the following result similar to Proposition~\ref{prop:feature_product}. The proof appears in Appendix~\ref{proof:feature_product_m}.
\begin{proposition}\label{prop:feature_product_m}
Let $A_{j}$ be defined as one of the following matrices:
\begin{align*}
\biground{I_n},~~\biground{\frac{\Phi_{\alp} V^{s} \Phi_{\alp}\tran}{n}}_{s=0}^{c_G+1},~~\biground{\Phi_{\S_\vm}\hd_{\S_\vm}\Phi_{\S_\vm}\tran}_{\vm\in J_1}.
\end{align*}
Then for  any $t \in \mathbb{N}$ the following estimate holds
\begin{align}
\E\brac{\norm{\frac{1}{n}b_{\gE^c}\tran  {\Phi_{\gE^c}\tran} A_{1}A_{2}\ldots A_{m} \Phi_{\alp} V^{t}{E}_{r}}_2^2} = O_{d}\round{d^{ - \lambda_\iota - \zeta_1}}\cdot \norm{f^*_{\gE^c}}_{L_2}^2
\end{align}
\end{proposition}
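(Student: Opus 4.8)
Looking at Proposition~\ref{prop:feature_product_m}, I need to bound the expected squared norm of a product of feature matrices, reweighted by a random rotation/scaling. This parallels Proposition~\ref{prop:feature_product}, whose proof (Appendix~\ref{proof:feature_product}) reduces the problem to controlling operator norms of expectations of feature-matrix products via Lemma~\ref{lemma:feature_product_tech}.

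\textbf{Overall approach.} The plan is to first remove the diagonal weight matrices $V^t$ and $E_r$ from the bottleneck, since both are diagonal with bounded entries ($V^t \lesssim I$ and $(E_r)_{S,S}=\mathbbm{1}\{S\in\D(r)\}$), so up to a constant factor we may replace $\Phi_{\alp}V^t E_r$ by $\Phi_{\D(r)}$. Then, just as in the proof of Proposition~\ref{prop:feature_product}, I bound
\begin{align*}
\E\brac{\norm{\tfrac{1}{n}b_{\D^c}\tran \Phi_{\D^c}\tran A_1\cdots A_m \Phi_{\D(r)}}_2^2} \leq \norm{b_{\D^c}}_2^2\cdot \snorm{\E\brac{\tfrac{1}{n^2}\Phi_{\D^c}\tran A_1\cdots A_m \Phi_{\D(r)}\Phi_{\D(r)}\tran A_m\cdots A_1\Phi_{\D^c}}}.
\end{align*}
The core task is to bound the operator norm of this expectation. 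Following the same template, I decompose every feature matrix appearing in the product along the partition $\pi_N=\{\T_1,\ldots,\T_N\}$ into blocks indexed by sets of fixed sizes $\vm=(m_1,\ldots,m_N)$ so that distinct factors correspond to disjoint index sets. After normalizing each block (using Lemma~\ref{lemma:gram_matrix_cube_m} for the correct scaling $d^{(\vs+\mathbf 1-\vlambda)\tran\vm}$), the expectation becomes a finite sum of terms of the form $(\text{scaling})\cdot B_1\tran(B_2 B_2\tran)\cdots(B_{2m}B_{2m}\tran)B_{2m+1}$ with $B_i$ drawn from the normalized family, and each term's operator-norm expectation is controlled by Lemma~\ref{lemma:feature_product_tech}.

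\textbf{Key steps in order.} (1) Reduce to $\Phi_{\D(r)}$ by absorbing $V^t, E_r$ into constants. (2) Pass to the operator-norm bound on $\E[K]$ via Cauchy--Schwarz. (3) Expand each $A_j$ into blocks along the partition using $\S_\vm^0,\S_\vm^1$ splits (distinguishing whether a block contains coordinate $r$), producing a finite sum of structured products. (4) For each structured product $K_j$, apply Lemma~\ref{lemma:feature_product_tech}: the outer factors $B_1,B_{2m+1}$ come from the decomposition of $\Phi_{\D^c}$ (hence are high-degree blocks $\Phi_{\S_\vm}$ with $\vlambda\tran\vm > p+\delta$, contributing the scaling), while the middle factors come from $\Phi_{\alp}$-type matrices, and the crucial middle factor corresponding to $\Phi_{\D(r)}$ has effective degree governed by Eq.~\eqref{eq:d_r_size}, namely $p+\delta-\lambda_k$ since $r\in\T_k$ forces $m_k\geq 1$. (5) Combine: each $\snorm{\E[K_j]} = O_d(d^{\text{(effective degree of }\D(r)\text{ block)}}/n) = O_d(d^{p+\delta-\lambda_k}/d^{p+\delta}) = O_d(d^{-\lambda_k})$, and sum the finitely many terms by the triangle inequality.

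\textbf{Main obstacle.} The delicate point, exactly as in the original proof, is bookkeeping the \emph{effective degree} of the $\Phi_{\D(r)}$ block when invoking Lemma~\ref{lemma:feature_product_tech}: one must verify that because every set in $\D(r)$ contains $r\in\T_k$, the block of fixed sizes $\vm$ with $\lmlp$ contributing to $\D(r)$ has cardinality $O_d(d^{\sum_j s_j m_j - s_k}) = O_d(d^{p+\delta - \lambda_k})$ (using $s_j\le\lambda_j$ and $m_k\ge 1$, together with the structural hypothesis $s_k+\gamma_k=1 \iff s_k=1$ from Theorem~\ref{thm:learning_hypercube} which keeps $\zeta_1>0$). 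Getting this exponent right — and checking that the high-degree outer factors $\Phi_{\D^c}$ contribute matching scaling factors that cancel against the $\min(n^{-1/2},d^{-k/2})$ terms produced by Lemma~\ref{lemma:feature_product_tech} — is where the $d^{-\lambda_k}$ rate is actually earned. Everything else is a routine adaptation of the $w=\mathbf 1_d$ argument in Appendix~\ref{proof:feature_product}.
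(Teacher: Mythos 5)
Your sketch matches the paper's proof essentially step for step: remove $V^t$ and $E_r$ by diagonal boundedness to reduce to $\Phi_{\D(r)}$, pass to $\snorm{\E[K]}$ via Cauchy--Schwarz, decompose every factor along the $\S_\vm^0,\S_\vm^1$ splits of the partition, and invoke Lemma~\ref{lemma:feature_product_tech} with the effective-degree bound $\vs\tran\vm - s_k \le p+\delta-\lambda_k$ earned from $m_k\ge 1$ and $s_j\le\lambda_j$. The one bookkeeping detail the paper makes explicit that you leave implicit is the further refinement of $\T_k$ into $\T_k^1=\{r\}$ and $\T_k^0=\T_k\setminus\{r\}$ (so that the feature-index decomposition respects non-overlapping blocks and the hypotheses of Lemma~\ref{lemma:feature_product_tech} are cleanly satisfied), but your effective-degree accounting lands on exactly the same exponent $-\lambda_k$.
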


The remaining steps of the proof proceed following Proposition~\ref{lemma:beta_>_l2_cube}. We can demonstrate that this estimate holds uniformly across all $r\in[d]$ with an extra multiplicative error factor, then we can conclude the bound:
\begin{align*}
    \max_r\norm{\frac{1}{n}b_{\gE^c}\tran \Phi_{\gE^c}\tran H\inv \Phi_{\alp} G\inv {E}_{r}}_2^2 &=\round{ O_{d,\P}(d^{ - \lambda_\iota-\zeta_1+1/q_0}\log(d)) + O_{d,\P}(d\inv)}\cdot \norm{f_{\gE^c}^*}_{L_2}^2,
\end{align*}
where $q_0>0$ is an arbitrarily large constant. For complete details of this argument, we direct readers to the referenced proposition.

Plugging this bound into Eq.~(\ref{eq:simplified_g_p_m}) and applying the AM-GM inequality yields
\begin{align*}
     \norm{\beta_{\gE^c}\tran \Phi_{\leq p} R}_2^2 &\leq  2 \norm{\frac{1}{n}b_{\gE^c}\tran \Phi_{\gE^c}\tran H\inv \Phi_{\alp} G\inv {E}_{r}}_2^2 + 2\norm{\Delta_1}^2_2\\
     &= \round{ O_{d,\P}(d^{ - \lambda_\iota - \zeta_1+\epsilon}) + O_{d,\P}(d\inv) + O_{d,\P}(w_r^4d^{-4+\epsilon})}\cdot \norm{f_{\gE^c}^*}_{L_2}^2,
\end{align*}
where we absorb $d^{1/{q_0}}$ and $\log(d)$ into $d^\epsilon$. With the fact $\lambda_\iota \leq 1$ and $\zeta_1 > \epsilon,$ the bound is reduced to $ O_{d,\P}(d^{ - \lambda_\iota })\cdot \norm{f_{\gE^c}^*}_{L_2}^2$ thus we conclude the proof.

\subsubsection{Proof of Proposition~\ref{prop:cube_error_1_m}}\label{proof:cube_error_1_m}
The proof follows closely the proof of Proposition~\ref{prop:cube_error_1} hence we omit intermediate derivations. We refer the readers to Appendix~\ref{proof:cube_error_1} for the complete derivation. 

We can similarly establish 
\begin{align}\label{eq:cube_m_error_tech_bound_1}
    \E_{\vepsilon}\brac{\norm{\beta_{\varepsilon}\tran \Phi_{\alp}R}_2^2}&\leq \sigma_\varepsilon^2 \cdot \snorm{K_\lambda\inv \Phi_{\alp}\hd_\alp}^2 \cdot \Tr\round{\hd_\alp\inv R  R\tran \hd_\alp\inv},
\end{align}
and successively can show
\begin{align}\label{eq:cube_m_error_tech_bound_2}
    \snorm{ K_\lambda\inv \Phi_{\alp}\hd_\alp }^2 = \snorm{\frac{1}{n}H\inv \Phi_{\alp} G}^2 = O_{d,\P}(n\inv),
\end{align}
implied by Eq.~\eqref{eq:smw_<_p_m}, and
\begin{align}\label{eq:cube_m_error_tech_bound_3}
    \Tr\round{\hd_\alp\inv R  R\tran \hd_\alp\inv}=O_d(\norm{E_r}_F^2) = O_d(d^{p+\delta - \lambda_{\iota }- \zeta_1}),
\end{align}
which follows Eq.~\eqref{eq:d_r_size}.

Plugging Eq.~\eqref{eq:cube_m_error_tech_bound_2} and \eqref{eq:cube_m_error_tech_bound_3} into Eq.~\eqref{eq:cube_m_error_tech_bound_1} and applying Markov's inequality completes the proof.

\subsubsection{Proof of Proposition~\ref{prop:feature_u_v_m}}\label{proof:feature_u_v_m}

The proof will mainly use the machinery developed in the proof for Proposition~\ref{lemma:beta_>_l2_cube_m}.

Applying the same decomposition of $K_\lambda$ as Eq.~\eqref{eq:k_fancy_decomp_m}, we can write
\begin{align*}
    K_{w,\lambda} = K_{w,p} +(\rho+\lambda)H,
\end{align*}
where $H$ is defined in Eq.~\eqref{eq:E_m}.

Invoking  Sherman-Morrison-Woodbury formula for $K_{w,\lambda}\inv$, we can obtain
\begin{align}\label{eq:K_lambda_inv_m}
    K_{w,\lambda}\inv = (\rho+\lambda)\inv H\inv - \frac{1}{n}(\rho+\lambda)\inv H\inv \Phi_{\alp}\tran G\inv \Phi_{\alp} H\inv,
\end{align}
where $$G:= (\rho+\lambda)(n\hd_{\alp})\inv + \frac{\Phi_{\alp}\tran H\inv \Phi_{\alp}}{n}.$$
Plugging the approximations of $H\inv$ (Eq.~(\ref{eq:e_inv_m})) and $G\inv$ (Eq.~(\ref{eq:g_inv_m})) into Eq.~(\ref{eq:K_lambda_inv_m}) and expanding the expression we can write it as the finite summation:
\begin{align}\label{eq:feature_product_split_m}
  \frac{1}{n}b_{\Q}\tran \Phi_{\Q}\tran K_{w,\lambda}\inv \Phi_{\S_{\vm}^1} = \sum_{i=1}^{c_p} C_{i}b_{\Q}\tran  \frac{\Phi_{\Q}\tran}{\sqrt{n}} A_{i_1}A_{i_2}\ldots A_{i_m} \frac{\Phi_{\S_{\vm}^1} }{\sqrt{n}} + \Delta,
\end{align}
where  $A_{i_j} \in \biground{I_n} \cup \biground{\frac{\Phi_{\alp} V^{s} \Phi_{\alp}\tran}{n}}_{s=0}^{c_G+1} \cup \biground{\Phi_{\S_{\vm'}} \hd_{\S_{\vm'}} \Phi_{\S_{\vm'}}\tran}_{\vm'\in J_1}$ for $j\in[m]$. Here $C_i>0$ is a constant, $c_p,m\in \mathbb{N}$, and $\Delta$ satisfies $\snorm{\Delta} = O_{d,\P}(d\half)\cdot  \norm{b_\Q}_{2}$.

Note that the adjusted effective degree of the feature in $\Phi_{\S_{\vm}^1}$is bounded by $\vs\tran\vm - s_\iota$, following an almost identical argument with the proof of Proposition~\ref{prop:feature_product_m} by using Lemma~\ref{lemma:feature_product_tech}, we can show each summand satisfies:
\begin{align*}
    \E\brac{\norm{\frac{1}{n}b_{\Q}\tran  {\Phi_{\Q}\tran}A_{i_1}A_{i_2}\ldots A_{i_m} {\Phi_{\S_{\vm}^1}}}_2^2} = O_{d,\P}(d^{\vs\tran\vm -s_\iota -p - \delta })\cdot \norm{b_{\Q}}_{2}^2.
\end{align*}
Using a similar analysis as in the proof of Proposition~\ref{lemma:beta_>_l2_cube_m}, we can show that the estimates hold uniformly for all $r\in T_\iota$. Specifically, invoking Lemma~\ref{lemma:uniform_bound} and Markov's inequality together yield the bound 
\begin{align*}
  \norm{\frac{1}{n}b_{\Q}\tran  {\Phi_{\Q}\tran}A_{i_1}A_{i_2}\ldots A_{i_q} {\Phi_{\S_{\vm}^1}}}_2^2 = O_{d,\P}(d^{\vs\tran\vm -s_\iota -p - \delta +1/q_0})\cdot \log(d)\cdot \norm{b_{\Q}}_{2}^2,
\end{align*}
where $q_0>0$ is an arbitrarily large constant.

Combining with $\Delta$ in Eq.~\eqref{eq:feature_product_split_m} and applying the triangle inequality and applying the triangle inequality we obtain the bound
\begin{align}\label{eq:p_plus_one_m}
      \norm{\frac{1}{n}b_{\Q}\tran \Phi_{\Q}\tran K_{w,\lambda}\inv \Phi_{\S_{\vm}^1}}_2^2 =  O_{d,\P}(d^{\vs\tran\vm -s_\iota -p - \delta +1/q_0})\cdot \log(d)\cdot \norm{b_{\Q}}_{2}^2.
\end{align}
Multiplying both sides of the equation by $(n/d^{\norm{\vm}_1})^2$ and taking $W_{\S_\vm^1}$ into account, which satisfies $\snorm{W_{\S_\vm^1}} = O_d(d^{\norm{\vm}_1 - \vlambda\tran\vm})$ yields:
\begin{align*}
    \norm{d^{-\norm{\vm}_1}b_{\Q}\tran \Phi_{\Q}\tran K_{w,\lambda}\inv \Phi_{\S_{\vm}^1} W_{\S_{\vm}^1}}_2^2 &=  O_{d,\P}(d^{\vs\tran\vm -s_\iota -p - \delta +1/q_0 +2(p+\delta)- 2\norm{\vm}_1 + 2\norm{\vm}_1 - 2\vlambda\tran \vm})\cdot \log(d)\cdot \norm{b_{\Q}}_{2}^2,\\
    &=O_{d,\P}(d^{\vs\tran\vm -2\vlambda\tran\vm -s_\iota +p + \delta +1/q_0})\cdot \log(d)\cdot \norm{b_{\Q}}_{2}^2\\
    &= O_{d,\P}(d^{\vs\tran\vm -\vlambda\tran\vm -s_\iota -\zeta_2 +1/q_0})\cdot \log(d)\cdot \norm{b_{\Q}}_{2}^2\\
    &= O_{d,\P}(d^{-\zeta_2 -\lambda_\iota + 1/q_0})\cdot \log(d)\cdot \norm{b_{\Q}}_{2}^2,
\end{align*}
where in the last equality we use the fact that $\vs\tran \vm - \vlambda\tran \vm \leq s_\iota - \lambda_\iota \leq 0$.

Absorbing the factor $d^{1/q_0}\log(d)$ into $d^{\epsilon}$ where $\epsilon>0$ is an arbitrarily small constant yields the desired bound.

\subsubsection{Proof of Proposition~\ref{prop:term_p_plus_one_m}}\label{proof:term_p_plus_one_m}

    Note that Eq.~(\ref{eq:K_decomp_m}) gives the decomposition
\begin{align}
    K_{w,\lambda}  = \Phi_{\alp}\hd_{\alp}\Phi_{\alp}\tran + (\rho+\lambda)H,
\end{align}
where $H = I_n + (\rho+\lambda)\inv \Delta_1$, $\rho$ is defined in Eq.~\eqref{eq:rho_cube_m} and $\Delta_1$ satisfies $\snorm{\Delta_1} = O_{d,\P}(d^{-\zeta_2/2+\epsilon})$.

Invoking Sherman-Morrison-Woodbury formula to compute the inverse of $K_{w,\lambda}$ and we obtain:
\begin{align*}
    K_{w,\lambda} \inv = (\rho+\lambda)\inv H\inv -(\rho+\lambda)\inv \frac{1}{n}H\inv \Phi_{\alp}G\inv \Phi_{\alp}\tran H\inv,
\end{align*}
where $G$ takes the same form as the one in Eq.~\eqref{eq:smw_<_p_m}.

Invoking Eq.~\eqref{eq:e_inv_minus_i_m} and  \eqref{eq:g_inv_minus_v_inv} that $H\inv$ is close to $I_n$ and $G\inv$ is close to $V$, we can show 
\begin{align*}
   \snorm{K_{w,\lambda}\inv - (\rho+\lambda)\inv\round{I_n -\frac{\Phi_{\alp}V\Phi_{\alp}\tran}{n}}} = O_{d,\P}(d^{-\zeta_1/2+\epsilon} + d^{-\zeta_2/2+\epsilon}),
\end{align*}
where we use the estimate $\snorm{\Phi_{\alp}} =O_{d,\P}(\sqrt{n})$ (Lemma~\ref{lem:norm_control}).

Plugging the above bound into the left side of Eq.~\eqref{eq:term_p_plus_one_m} and applying the triangle inequality yields
\begin{align}\label{eq:weak_feature_bound_0_m}
   &~~~\snorm{\frac{1}{n}b_{\S_{\vm}^1}\tran \Phi_{\S_{\vm}^1}\tran K_{w,\lambda}\inv \Phi_{\S_{\vm}^1} - \frac{1}{n(\rho+\lambda)}b_{\S_{\vm}^1}\tran \Phi_{\S_{\vm}^1}\tran \round{I_n -\frac{\Phi_{\alp}V\Phi_{\alp}\tran}{n}} \Phi_{\S_{\vm}^1}}\notag\\
   &= O_{d,\P}(d^{-\zeta_1/2+\epsilon} + d^{-\zeta_2/2+\epsilon})\cdot \norm{b_{\S_{\vm}^1}}_2,
\end{align}
where we use the estimate $\snorm{\Phi_{\S_{\vm}^1}}=\snorm{X_r}\cdot\snorm{\Phi_{\S_{\vm-e_\iota}^0}} \leq \snorm{\Phi_{\alp}} = O_{d,\P}(\sqrt{n})$ from Lemma~\ref{lem:norm_control}.

Next we analyze $\frac{1}{n}b_{\S_{\vm}^1}\tran \Phi_{\S_{\vm}^1}\tran \round{I_n -\frac{\Phi_{\alp}V\Phi_{\alp}\tran}{n}} \Phi_{\S_{\vm}^1}$. 
Using the triangle inequality gives
\begin{align}\label{eq:weak_feature_bound_1_m}
    &~~~\norm{\frac{1}{n}b_{\S_{\vm}^1}\tran \Phi_{\S_{\vm}^1}\tran \round{I_n -\frac{\Phi_{\alp}V\Phi_{\alp}\tran}{n}} \Phi_{\S_{\vm}^1} - b_{\S_{\vm}^1}\tran}_2 \notag \\
    &\leq \norm{\frac{1}{n}b_{\S_{\vm}^1}\tran \Phi_{\S_{\vm}^1}\tran \Phi_{\S_{\vm}^1} - b_{\S_{\vm}^1}\tran}  +  \norm{\frac{1}{n}b_{\S_{\vm}^1}\tran \Phi_{\S_{\vm}^1}\tran \frac{\Phi_{\alp}V\Phi_{\alp}\tran}{n} \Phi_{\S_{\vm}^1}}_2.
\end{align}
Next we show that both terms on the right side of the above inequality are small.

For the first term, we use Theorem~\ref{lemma:gram_matrix} which gives 
\begin{align}\label{eq:weak_feature_bound_2_m}
\norm{\frac{1}{n}b_{\S_{\vm}^1}\tran \Phi_{\S_{\vm}^1}\tran \Phi_{\S_{\vm}^1} - b_{\S_{\vm}^1}\tran} &\leq \norm{b_{\S_{\vm}^1}\tran}_2 \snorm{\frac{1}{n}\Phi_{\S_{\vm}^1}\tran \Phi_{\S_{\vm}^1} - I_n}\notag\\
&=  \norm{b_{\S_{\vm}^1}}_2 \cdot O_{d,\P}(d^{-\zeta_1/2+\epsilon}).
\end{align}
For the second term, invoking Lemma~\ref{lemma:feature_product_tech}, since the adjusted effective degree of feature  $\phi_{\alp}$ is bounded by $\vs\tran \vm'$ where $\vm'$ satisfies  $\vlambda\tran \vm' \leq p+\delta$, we can deduce
\begin{align*}
    \E\brac{\norm{\frac{1}{n}b_{\S_{\vm}^1}\tran \Phi_{\S_{\vm}^1}\tran \frac{\Phi_{\alp}V\Phi_{\alp}\tran}{n} \Phi_{\S_{\vm}^1}}_2^2} 
    &= \norm{b_{\S_{\vm}^1}}_2^2 \cdot \max_{\vlambda\tran \vm' \leq p+\delta} O_d(d^{\vs\tran\vm'-p-\delta})\\
    &= \norm{b_{\S_{\vm}^1}}_2^2 \cdot  O_d(d^{-\zeta_1}),
\end{align*}
where the last equality follows from the definition of $\zeta_1$, i.e., Eq. \eqref{eq:zeta_1}.

Using a similar analysis as in the proof of Proposition~\ref{lemma:beta_>_l2_cube_m}, we can show that the estimates hold uniformly for all $r\in[d]$. Specifically, invoking Lemma~\ref{lemma:uniform_bound} and Markov's inequality together yield the bound 
\begin{align*}
  \norm{\frac{1}{n}b_{\S_{\vm}^1}\tran \Phi_{\S_{\vm}^1}\tran \frac{\Phi_{\alp}V\Phi_{\alp}\tran}{n} \Phi_{\S_{\vm}^1}}_2^2 =\norm{b_{\S_{\vm}^1}}_2^2 \cdot  O_d(d^{-\zeta_1+1/q_0})\cdot \log(d),
\end{align*}
where $q_0>0$ is an arbitrarily large constant.

Plugging this bound and Eq.~\eqref{eq:weak_feature_bound_2_m} into Eq.~\eqref{eq:weak_feature_bound_1_m}  yields:
\begin{align*}
   \norm{ \frac{1}{n}b_{\S_{\vm}^1}\tran \Phi_{\S_{\vm}^1}\tran \round{I_n -\frac{\Phi_{\leq p}V\Phi_{\leq p}\tran}{n}} \Phi_{\S_{\vm}^1} - b_{\S_{\vm}^1}\tran}_2 = \norm{b_{\S_{\vm}^1}}_2 \cdot O_{d,\P}(d^{-\zeta_1/2+\epsilon}).
\end{align*}
Combining this bound with Eq.~\eqref{eq:weak_feature_bound_0_m} completes the proof.

\subsubsection{Proof of Proposition~\ref{prop:noise_term_p_plus_one_m}}\label{proof:noise_term_p_plus_one_m}
The proof follows closely the proof of Proposition~\ref{prop:noise_term_p_plus_one} hence we omit intermediate derivations. We refer the readers to Appendix~\ref{proof:noise_term_p_plus_one} for the complete derivation. 

In parallel to Eq.~\eqref{eq:noise_term_p_plus_one_tech_bound2}, we can obtain
\begin{align}\label{eq:noise_term_p_plus_one_m_tech_bound1}
    \E_\vepsilon\brac{\norm{d^{-\norm{\vm}_1} \beta_\varepsilon\tran \Phi_{\S_{\vm}^1}W_{\S_\vm^1}}_2^2} &\leq  \snorm{W_{\S_\vm^1}}^2\cdot d^{-2\norm{\vm}_1} \cdot \E\brac{\norm{ \beta_\varepsilon\tran \Phi_{\S_{\vm}^1}}_2^2}\notag\\
    &\leq O_d(d^{-2\vlambda\tran\vm})\cdot \snorm{K_{w,\lambda}^{-2}}\Tr\round{\Phi_{\S_{\vm}^1}\tran \Phi_{\S_{\vm}^1}}.
\end{align}
The bound 
\begin{align}\label{eq:noise_term_p_plus_one_m_tech_bound2}
    \snorm{K_{w,\lambda}^{-2}}= \snorm{K_{w,\lambda}^{-1}}^2 = O_{d,\P}(1)
\end{align}
is implied by  the expression \eqref{eqn:K_decomp_cube_m}. Next we analyze $\Tr\round{\Phi_{\S_{\vm}^1}\tran \Phi_{\S_{\vm}^1}}$.
We first control the size $|\S_{\vm}^1| = O_d(d^{\vs\tran \vm - s_{\iota}})$. The following claim implies $|\S_{\vm}^1| \lesssim n$ holds:
\begin{claim}
The constant
    \begin{align*}
        \zeta_3:= \min_{\vm: \vlambda\tran\vm \in (p+\delta,p+\delta + \lambda_\iota],~ m_\iota\geq 1} \round{p+\delta - \vs\tran\vm +s_\iota}
    \end{align*}
is bounded from $0$.
\end{claim}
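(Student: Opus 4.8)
The plan is to prove that $\zeta_3 \geq \zeta_1$, where $\zeta_1$ is the constant from Claim~\ref{claim:zeta_1_2} (see~\eqref{eq:zeta_1}), which was already established to be bounded away from zero. First I would note that the minimization defining $\zeta_3$ is over a finite, nonempty set: any admissible $\vm$ satisfies $\vlambda\tran\vm \leq p+\delta+\lambda_k \leq p+\delta+1$, and since $\lambda_j = 1-\gamma_j \geq 1-\gamma_{\max} > 0$ for every $j$, this forces $\|\vm\|_1 \leq (p+\delta+1)/(1-\gamma_{\max})$, a bound independent of $d$. Nonemptiness follows by taking $\vm = m_k e_k$ with $m_k$ the unique integer in $((p+\delta)/\lambda_k,\, (p+\delta)/\lambda_k+1]$, which satisfies $m_k \geq 1$ and $\vlambda\tran\vm = m_k\lambda_k \in (p+\delta, p+\delta+\lambda_k]$. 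Hence the minimum is attained, and it suffices to lower bound $p+\delta - \vs\tran\vm + s_k$ for each admissible $\vm$ by a $d$-independent positive constant.

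The key step is a shift of index. Fix an admissible $\vm$, so that $m_k \geq 1$ and $\vlambda\tran\vm \in (p+\delta, p+\delta+\lambda_k]$, and set $\vm' := \vm - e_k \in \mathbb{N}^N$, which is well defined precisely because $m_k \geq 1$. Then $\vlambda\tran\vm' = \vlambda\tran\vm - \lambda_k \leq p+\delta$, so $\vm'$ is a legitimate competitor in the minimization defining $\zeta_1$ in~\eqref{eq:zeta_1}; consequently $p+\delta - \vs\tran\vm' \geq \zeta_1$. Since $\vs\tran\vm' = \vs\tran\vm - s_k$, this reads $p+\delta - \vs\tran\vm + s_k \geq \zeta_1 > 0$, where strict positivity is Claim~\ref{claim:zeta_1_2}. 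Taking the minimum over all admissible $\vm$ yields $\zeta_3 \geq \zeta_1 > 0$, and since $\zeta_1$ depends only on $\vs,\vlambda,p,\delta$ and not on $d$, this is the required uniform lower bound.

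There is essentially no obstacle here beyond the observation that subtracting $e_k$ carries an admissible $\vm$ into the region $\{\vm' : \vlambda\tran\vm' \leq p+\delta\}$ controlled by $\zeta_1$. The only point requiring a moment's care is that the minimum defining $\zeta_1$ ranges over \emph{all} such $\vm'$ with no additional degree restriction (in particular $\vm' = 0$ is permitted), so that $\vm'$ is always admissible for that minimization; the finiteness reduction of the first paragraph is not logically needed for the inequality $\zeta_3 \geq \zeta_1$ but confirms the minimum is well-posed.
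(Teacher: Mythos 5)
Your proof is correct, and it takes a genuinely different and cleaner route than the paper's. You reduce $\zeta_3$ to the previously established $\zeta_1$ via the shift $\vm \mapsto \vm' := \vm - e_k$, observing that $m_k\geq 1$ makes $\vm'$ a valid multi-index, that the upper constraint $\vlambda\tran\vm \le p+\delta+\lambda_k$ becomes exactly the $\zeta_1$-constraint $\vlambda\tran\vm' \le p+\delta$, and that the objective $p+\delta - \vs\tran\vm + s_k$ becomes $p+\delta - \vs\tran\vm'$. This yields $\zeta_3 \geq \zeta_1 > 0$ with essentially no further work, inheriting positivity from Claim~\ref{claim:zeta_1_2}. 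The paper instead runs a direct argument: it first bounds the objective below by $p+\delta - (\vlambda\tran\vm - \lambda_k) \geq 0$ using $\lambda_j \geq s_j$ and the constraint, then rules out equality by a case analysis on condition (2) ($\vlambda\tran\vm - \lambda_k = \vs\tran\vm - s_k$) and the fact that condition (1) ($\vlambda\tran\vm - \lambda_k = p+\delta$) would force $p+\delta$ to be an integer, contradicting $\delta\in(0,1)$. The paper's route re-derives from scratch what your reduction imports for free, but in exchange it makes the role of the non-integrality of $p+\delta$ and the assumption ``$\lambda_k = s_k \iff s_k=1$'' explicit in this proof rather than hiding them inside the $\zeta_1$ claim. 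Your observation that finiteness of the feasible set is not actually needed for the inequality $\zeta_3\geq\zeta_1$ is a good one; the paper's direct argument, by contrast, implicitly leans on finiteness to pass from strict positivity of each term to strict positivity of the minimum.
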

\begin{proof}
    Since $\lambda_j \geq s_j$ for all $j\in[N]$, we have $\zeta_3 \geq p+\delta - (\vs\tran \vm -s_\iota) \geq p+\delta - (\vlambda\tran \vm - \lambda_\iota) \geq 0$. The equality holds when (1) $\vlambda\tran \vm -\lambda_\iota = p+\delta$ and (2) $\vlambda\tran\vm - \lambda_\iota = \vs\tran\vm -s_\iota$ hold.  For condition (2) to hold, it requires either $m_\iota = 1$ and $\lambda_j = s_j = 1$ for $j\neq \iota$; or $m_\iota\geq 2$ and $\lambda_j = s_j = 1$ for all $j\in[N]$.  Both cases contradict condition (1), hence the inequality stands strictly, which completes the proof.
\end{proof}

\noindent Consequently, following Theorem~\ref{lemma:phi_id_2}, we have $\snorm{\Phi_{\S_{\vm}^1} \Phi_{\S_{\vm}^1}\tran/n - I_{|\S_{p+1}^1|}} = O_{d,\P}(d^{-\zeta_3/2+\epsilon})$. Then, we can conclude 
\begin{align*}
    \Tr\round{\frac{1}{n}\Phi_{\S_{\vm}^1} \Phi_{\S_{\vm}^1}\tran}\leq  O_d(d^{\vs\tran \vm - s_{\iota}})(1+ O_{d,\P}(d^{-\zeta_3/2+\epsilon})) =  O_d(d^{\vs\tran \vm - s_{\iota}}).
\end{align*}
Together with Eq.~\eqref{eq:noise_term_p_plus_one_m_tech_bound2}, the right side of Eq.~\eqref{eq:noise_term_p_plus_one_m_tech_bound1} is bounded by 
\begin{align*}
    O_{d,\P}(d^{\vs\tran \vm - s_{\iota} - 2\vlambda\tran \vm+p+\delta}) =O_{d,\P}(d^{-\lambda_\iota -\zeta_2}), 
\end{align*}
where we use the fact $\vs\tran\vm - \vlambda\tran \vm \leq s_\iota - \lambda_\iota$.

Applying Markov's inequality we obtain the bound $O_{d,\P}(d^{-\lambda_\iota -\zeta_2})\log(d)$ with probability at least $1-1/ \log(d)$. Absorbing the logarithmic factor into $\epsilon$ completes the proof.

\subsubsection{Proof of Proposition~\ref{prop:feature_product_m}}\label{proof:feature_product_m}

Since $V$ is a diagonal matrix with $V\lesssim I$, then we can upper bound the norm by removing $V^{t}$ in the product:
\begin{align*}
    \norm{\frac{1}{n}b_{\gE^c}\tran  {\Phi_{\gE^c}\tran} A_{1}A_{2}\ldots A_{m} \Phi_{\alp} V^{t}{E}_{r}}_2^2 \lesssim O_{d}(1)\cdot  \norm{\frac{1}{n}b_{\gE^c}\tran  {\Phi_{\gE^c}\tran} A_{1}A_{2}\ldots A_{m} \Phi_{\gE}}_2^2,
\end{align*}
where we use the fact $(E_r)_{S,S} = \mathbbm{1}\{S\in \gE\}.$

Similar to Eq.~\eqref{eq:feature_produc_bound1}, the following estimate holds:
\begin{align}\label{eq:feature_produc_bound1_m}
    \E\brac{\norm{ \frac{1}{n}b_{\gE^c}\tran  {\Phi_{\gE^c}\tran}A_{1}A_{2}\ldots A_{m} \Phi_{\gE}}_2^2} \leq \norm{b_{\gE^c}}_2^2\cdot \snorm{\E\brac{ \underbrace{\frac{1}{n^2}{\Phi_{\gE^c}\tran}A_{1}\ldots A_{m} \Phi_{\gE}\Phi_{\gE}\tran A_{m}\ldots A_1 \Phi_{\gE^c}}_{=:K}} }.    
\end{align}
We now aim to upper bound the operator norm of $\E[K]$.
To this end, we decompose the feature matrices in $K$ so as to ensure that distinct matrices in the product correspond to nonoverlapping index sets.

Following Eq.~\eqref{eq:fine_split_cube_m} we have the decomposition 
\begin{align}\label{eq:feature_split_m}
   \alp = \sqcup_{\vm:\lmlp} \S_{\vm},~~~\S_{\vm} = \S_{\vm}^1 \sqcup \S_{\vm}^0,
\end{align}
Clearly, the following equalities hold:
\begin{align*}
    \gE = \bigcup_{\vm:\lmlp} \S_{\vm}^1,~~~\gE^c = \biground{\bigcup_{\vm:\lmlp} \S_{\vm}^0} \bigcup \biground{\bigcup_{\vm:\lmgp, \norm{\vm}_1\leq \ell} \S_{\vm}}
\end{align*}
Define now the collection of matrices
\begin{align*}
    \U:= \{\Phi_{\S_{\vm}^1}\}_{\lmlp} \bigcup \{\Phi_{\S_{\vm}^0}\}_{\lmlp}\bigcup \{\Phi_{\S_{\vm}}\}_{\lmgp}.
\end{align*}
Note that any two distinct matrices in $\U$ correspond to distinct index sets. 
Normalizing the matrices, we further define the collection
\begin{align*}
    \widetilde{\U}&:= \{n\half \Phi_{\S_{\vm}^1}\}_{\lmlp} \bigcup \{n\half \Phi_{\S_{m,j}^0}\}_{\substack{\lmlp}}\bigcup \{(\hd_{\S_{\vm}})\half \Phi_{\S_{\vm}}\}_{\lmgp},
\end{align*}
where $\snorm{\hd_{\S_{\vm}}} = \Theta_d(d^{-\vlambda\tran \vm})$.

Let us now write the two expressions:
\begin{align*}
       &\frac{\Phi_{\alp} \Phi_{\alp}\tran}{n} = \sum_{\lmlp} \round{\frac{\Phi_{\S_{\vm}^0} \Phi_{\S_{\vm}^0}\tran}{n} + \frac{\Phi_{\S_{\vm}^1} \Phi_{\S_{\vm}^1}\tran}{n}};\\
    &\frac{\Phi_{\gE} \Phi_{\gE}\tran}{n} = \sum_{\lmlp} \frac{\Phi_{\S_{\vm}^1} \Phi_{\S_{\vm}^1}\tran}{n}.
\end{align*}
Plugging these decompositions for the corresponding $A_i$ in the expression of $K$ Eq.~\eqref{eq:feature_produc_bound1_m}, we can write $K$ as the summation of finitely many terms, where each term takes the form of some positive constant multiplied by
\begin{align*}
    K_j := \frac{\max(n^{1/2},d^{k_{1}/2}) \cdot  \max(n^{1/2},d^{k_{2m+1}/2})}{n}\cdot  B_{1}^\top \Bigl(B_{2} B_{2}\tran\Bigr) \ldots \Bigl(B_{{2m}} B_{{2m}}\tran\Bigr) B_{{2m+1}},
\end{align*}
where $B_{1} \in \R^{n\times k_1},\ldots,B_{{2m+1}}\in \R^{n\times k_{2m+1}}$ are from $\widetilde{\U}$. 

Particularity note that $B_1$ and $B_{2m+1}$ corresponding to the decomposition of $\Phi_{\gE^c}$ and thus each takes the form $ (\hd_{\S_{\vm}})\half \Phi_{\S_{\vm}}$ for some $\vm$ that satisfies $\lmgp$; $B_{m+1}$ corresponds to the decomposition of $\Phi_{\gE}$ and takes the form $n\half \Phi_{\S_{\vm}^1}$ for some $\vm$ that satisfies $\lmgp$.   The equation contains an additional scaling factor on the right side, which arises from the distinct scaling of $B_1$ and $B_{2m+1}$ by a factor of $1/n$ in Eq.~\eqref{eq:feature_produc_bound1_m}.

Next, we invoke Lemma~\ref{lemma:feature_product_tech} to bound each $\snorm{\E\brac{K_j}}$. According to the definition of feature $\phi_{\S_\vm}$ Eq.~\eqref{def:psi_sm}, we naturally have non-overlapping sets $\T_1,\ldots \T_N$ with $|\T_i| = O_d(d^{s_i})$. We further split the block $\T_\iota$ where $r \in \T_\iota$  into $\T_\iota^1:=\{r\}$ and $\T_\iota^0:= \T_\iota \setminus \{r\}$. Then we have in total $N+1$ many blocks of $[d]$ and any feature index $S$ that appears in the product can be accordingly decomposed. The adjusted effective degree of the feature of $B_{m+1}$ is thus bounded by 
\begin{align*}
    \sum_{i\in[N],i\neq \iota} s_i m_i + (m_\iota-1)s_\iota^0 + 1\cdot s_\iota^1,
\end{align*}
where $|\T_\iota^1| = \Theta_d(d^{s_\iota^1}) = \Theta_d(d^0)$ and $|\T_\iota^0| = \Theta_d(d^{s_\iota^0}) =  \Theta_d(d^{s_\iota})$.
Consequently, similar to Eq.~\eqref{eq:each_k_j}  we can deduce
\begin{align*}
    \snorm{\E\brac{K_j}} = O_d(d^{\sum_{i\in[N],i\neq \iota} s_i m_i + (m_\iota-1)s_{\iota} - p -\delta}).
\end{align*}
Since $s_i\leq \lambda_i$ for all $i\in[N]$ and $m_{\iota}>0$, the exponent can be further bounded by
\begin{align*}
  \sum_{i=1}^N \lambda_i m_i - \lambda_\iota  - p -\delta \leq -  \lambda_{\iota} -\zeta_1.
\end{align*}
Consequently, we obtain the bound for each $ \snorm{\E\brac{K_j}}$. Applying triangle inequality yields  $  \snorm{\E\brac{K}} = O_d(d^{-\lambda_{\iota}-\zeta_1})$.  Combining with Eq.~\eqref{eq:feature_produc_bound1_m} completes the proof.

\subsection{Proof of Propositions for Theorem~\ref{thm:learning_hypercube} (DN estimator error)}

\subsubsection{Proof of Proposition~\ref{prop:<_l1_l1_l1_rkhs}}\label{proof:<_l1_l1_l1_rkhs}
The proof is analogous to that of Proposition~\ref{prop:<_l1_l1_l1_cube_m}, and throughout we follow the conventions and notation established there.

We decompose the diagonal matrix $(\hd_\alp \inv R)$ into two  diagonal matrices ${\widehat E}_r$ and ${\widehat E}_r^\circ$ of size $|\alp|\times |\alp|$ as 
\begin{align*}
    ({\widehat E}_r)_{S,S} &= (\hd_\alp \inv R)_{S,S} \cdot \mathbbm{1} \{S \in \gE(r)\}\\
    ({\widehat E}_r^\circ)_{S,S} &= (\hd_\alp \inv R)_{S,S} \cdot \mathbbm{1} \{S \in \gE^c(r)\},
\end{align*}
where sets $\gE(r)$ and $\gE^c(r)$ are defined in Eq.~\eqref{eq:def_dr}.

The following claim estimates the term in ${\widehat E}_r$ and ${\widehat E}_r^\circ$. The proof is deferred to Appendix~\ref{proof:d_inv_r_rkhs}.
\begin{claim}\label{prop:d_inv_r_rkhs}
The following estimate holds for any $S\in \alp$:
\begin{align}\label{eq:d_inv_r_rkhs}
     ({\widehat E}_r)_{S,S} &= \round{g^{(\norm{\vm}_1)}(0)}\half W_{S,S}\half d^{(\norm{\vm}_1-p-\delta)/2}\cdot (1+ O_d(d\half))\cdot \mathbbm{1} \{S \in \S_\vm^1\}\ ,\\
     ({\widehat E}_r^\circ )_{S,S} &= w_r \cdot O_d( d^{(\vlambda\tran\vm-p-\delta -2)/2}) \cdot \mathbbm{1} \{S \in \S_{\vm}^0\}.
\end{align}   
\end{claim}

We similarly write $\beta=\beta_{\gE(r)}+\beta_{\gE^c(r)} + \beta_\varepsilon$ where we define:
\begin{align*}
    \beta_{\gE(r)}:= K_{w,\lambda}\inv (\Phi_{\gE(r)}b_{\gE(r)}),~~~\beta_{\gE^c(r)}:= K_{w,\lambda}\inv (\Phi_{\gE^c(r)}b_{\gE^c(r)}),~~~\beta_{\varepsilon}= K_{w,\lambda}\inv \vepsilon.
\end{align*}
With the decomposition of $\beta$, we accordingly bound the norm by
\begin{align}\label{eq:beta_decomp_rkhs}
   &~~~\norm{\beta\tran\Phi_{\alp}R - b_{\alp} \tran V {\widehat  E}_{r} }_2\notag\\
   &\leq \norm{\beta_{\gE(r)}\tran  \Phi_{\alp}R - b_{\alp} \tran V {\widehat E}_{r} }_2+ \norm{\beta_{\gE^c(r)}\tran  \Phi_{\alp}R}_2 + \norm{\beta_{\varepsilon}\tran \Phi_{\alp} R},
\end{align}
where the matrix $V$ is defined in Eq.~\eqref{eq:def_v}. 

We will show all three terms on the right hand side of the above equation are small. Before doing so, we introduce some notation to simplify the proof. We define the matrix $R_\vm \in \R^{|\alp|\times |\S_\vm|}$, which is the submatrix of $R$ consisting of the columns indexed by $\S_\vm$. Similarly we let $({\widehat E}_{r})_{\S_\vm}$ denote the matrix formed by the columns of ${\widehat E}_{r}$ with indices in $\S_\vm$. Therefore, we can decompose the first term squared into
\begin{align*}
    \norm{\beta_{\gE(r)}\tran  \Phi_{\alp}R - b_{\alp} \tran V {\widehat E}_{r} }_2^2 =  \sum_{\vm: \vlambda\tran \vm \leq p+\delta}\norm{\beta_{\gE(r)}\tran  \Phi_{\alp}R_{\vm} - b_{\alp} \tran V ({\widehat E}_{r})_{\S_\vm} }_2^2.
\end{align*}

\noindent The following two propositions provide estimates for the first and the second term in Eq.~(\ref{eq:beta_decomp_rkhs}) respectively. We defer the proof to Appendix~\ref{proof:beta_<_l2_rkhs} 
 and \ref{proof:beta_>_l2_rkhs} respectively.
\begin{proposition}\label{lemma:beta_<_l2_rkhs}
For any $\epsilon>0$, the estimate holds uniformly over all coordinates $r$:
    \begin{align*}
         \norm{\beta_{\gE(r)}\tran  \Phi_{\alp}R_\vm - b_{\alp} \tran V ({\widehat E}_{r})_{\S_\vm}}_2 =  O_{d,\P}( d^{-\zeta/2 + \epsilon+ (\vlambda\tran\vm - p - \delta)/2})\cdot  \norm{\partial_r f^*_{\S_{\vm}^1}}_{L_2}
    \end{align*}
\end{proposition}

\begin{proposition}\label{lemma:beta_>_l2_rkhs}
For any $\epsilon>0$, the estimate holds uniformly over all coordinates $r\in \T_\iota$:
\begin{align}
      \norm{\beta_{\gE^c(r)}\tran  \Phi_{\alp}R}_2
    =O_{d,\P}\round{d^{- \lambda_\iota/2}}\cdot \norm{b_{\gE^c(r)}}_2.
\end{align}
\end{proposition}

\noindent Next we show that the third term in Eq.~\eqref{eq:beta_decomp_rkhs} is small. We defer the proof to Appendix~\ref{proof:cube_error_1_rkhs}.
\begin{proposition}\label{prop:cube_error_1_rkhs}
For any $\epsilon>0$, the estimate holds uniformly over all coordinates $r\in \T_\iota$:
\begin{align}
      \norm{\beta_{\varepsilon}\tran  \Phi_{\alp}R}_2
    =O_{d,\P}(d^{-\lambda_\iota/2} )\cdot \sigma_\varepsilon.
\end{align}
\end{proposition}

The remainder of proof follows the proof of Proposition~\ref{prop:<_l1_l1_l1_cube}.
Combining the results of the three propositions then the right side of Eq.~\eqref{eq:beta_decomp_rkhs} becomes
\begin{align*}
       \sum_{\vm:\vlambda\tran \vm \leq p+\delta}O_{d,\P}( d^{-\zeta/2 + \epsilon+ (\vlambda\tran\vm - p - \delta-2)/2})\cdot  \norm{\partial_r f^*_{\S_{\vm}^1}}_{L_2}+ O_{d,\P}\round{d^{- \lambda_\iota/2}}\cdot (\norm{b_{\gE^c(r)}}_2 + \sigma_\varepsilon).
\end{align*}
Using the elementary inequality $|\norm{a}_2^2-\norm{b}_2^2| \leq \norm{a-b}_2^2 +2\norm{b}_2\norm{a-b}_2$ for any vectors $a,b$ yields
\begin{align}\label{eq:rkhs_tech_bound_1}
   &~~~\abs{  \norm{\beta\tran\Phi_{\alp}R}_2^2 -\norm{ b_{\alp} \tran   V {\widehat E}_{r}}_2^2}\notag\\
   &\leq  O_{d,\P}\round{d^{ - \lambda_\iota} }\cdot (\norm{b_{\gE^c(r)}}_2^2 +\sigma_\varepsilon^2) +    \sum_{\vm:\vlambda\tran \vm \leq p+\delta}O_{d,\P}( d^{-\zeta/2 + \epsilon+ (\vlambda\tran\vm - p - \delta)})\cdot  \norm{\partial_r f^*_{\S_{\vm}^1}}_{L_2}^2\notag\\
   &~~~+O_{d,\P}\round{d^{- \lambda_\iota/2} }\cdot (\norm{b_{\gE^c(r)}}_2+\sigma_\varepsilon)\norm{\partial_r f^*_{\alp}}_{L_2}.
\end{align}
Note that we have the estimate
\begin{align*}
    \snorm{V_{\aslp} - I} = O_d(d^{-\zeta_1}),
\end{align*}
and for $S\in \aep$, since $n\cdot (D_{S,S}W_{S,S}) = \Theta_d(1) + O_d(d\inv)$, 
there exists a constant $c_r>0$ such that the following holds
\begin{align*}
     \abs{\sum_{S\in\aep\cap \gE(r)}V_{S,S}^2b_{S}^2 - c_r\cdot \norm{\partial_r f^*_{\aep}}_{L_2}^2} = O_{d}(d\inv)\cdot\norm{\partial_r f^*_{\aep}}_{L_2}^2.
\end{align*}
With the scaling of ${\widehat E}_r$~\eqref{eq:d_inv_r_rkhs}, we can deduce
\begin{align*}
    \norm{b_{\S_\vm^1}\tran ({\widehat E}_r)_{\S_\vm^1}}_2^2 = 
    (g^{(\norm{\vm}_1)}(0))\inv \norm{b_{\S_\vm^1}\tran W_{\S_{\vm}^1}\half d^{(\norm{\vm}_1 - p - \delta)/2} (1+ O_d(d\half))}_2^2 =\Theta_d(d^{\vlambda\tran \vm - p-\delta}) \norm{b_{\S_\vm^1}}_2^2.
\end{align*}
Therefore, using triangle inequality we conclude
\begin{align}\label{eq:rkhs_tech_bound_2}
 \norm{ b_{\alp} \tran V {\widehat E}_{r} }_2^2= \sum_{\vm:\vlambda\tran\vm\leq p+\delta}\Theta_d(d^{\vlambda\tran \vm - p-\delta})\cdot \norm{\partial_r f^*_{\S_\vm^1} }_{L_2}^2.
\end{align}
Plugging  Eq.~\eqref{eq:rkhs_tech_bound_2} into \eqref{eq:rkhs_tech_bound_1} gives the bound for
\begin{align*}
    \abs{\norm{\beta_{\gE(r)}\tran  \Phi_{\alp}R}_2^2 -\sum_{\vm:\vlambda\tran\vm\leq p+\delta}\Theta_d(d^{\vlambda\tran \vm - p-\delta})\cdot \norm{\partial_r f^*_{\S_\vm^1} }_{L_2}^2},
\end{align*}
which is
\begin{align*}
 &O_{d,\P}\round{d^{ - \lambda_\iota} }\cdot (\norm{f^*}_{L_2}^2+\sigma_\varepsilon^2) +    \sum_{\vm:\vlambda\tran \vm \leq p+\delta}O_{d,\P}( d^{-\zeta/2 + \epsilon+ (\vlambda\tran\vm - p - \delta)})\cdot  \norm{\partial_r f^*_{\S_{\vm}^1}}_{L_2}^2\\
   &~~~+O_{d,\P}\round{d^{ - \lambda_\iota/2}}\cdot (\norm{f^*}_{L_2}+\sigma_\varepsilon)\norm{\partial_r f^*_{\alp}}_{L_2},
\end{align*}
where we use the inequality $\norm{b_{\gE^c(r)}}_2^2 \leq \norm{f^*}_{L_2}^2$.

\subsubsection{Proof of Proposition~\ref{prop:s2_s2_rkhs}}\label{proof:s2_s2_rkhs}

Invoking Eq.~\eqref{eq:def_r_j_1_rkhs}, we can deduce
\begin{align}
    &~~~\norm{\beta\tran \Phi_{\S_{\vm}^1}R_{\vm}^1 -  \beta \tran \Phi_{\S_{\vm}^1} \round{n\inv g^{(\norm{\vm}_1)}(0)  W_{\S_{\vm}^1}d^{-\norm{\vm}_1}}^{\frac{1}{2}}}_2 \notag\\
    &\leq \norm{\beta \tran \Phi_{\S_{\vm}^1} \round{n\inv g^{(\norm{\vm}_1)}(0)  W_{\S_{\vm}^1}d^{-\norm{\vm}_1}}^{\frac{1}{2}}}_2 \cdot O_d(d^{-1/2}). \label{eq:s2_s2_cube_bound_0_rkhs}
\end{align}
For the subsequent analysis, we temporarily omit the multiplicative constant $\sqrt{g^{(\norm{\vm}_1)}(0)}$   and focus on the term $n\half d^{-\frac{\norm{\vm}_1}{2}}\beta \tran \Phi_{\S_{\vm}^1} \round{ W_{\S_{\vm}^1}}^{\frac{1}{2}}$.
We split the whole index set of features $\cup_{j = 0}^\ell \S_{j}$ into $\Q\sqcup \S_{\vm}^1$.
Correspondingly, we split $\beta$ into 
\begin{align*}
    \beta_\Q := K_{w,\lambda}\inv \Phi_{\Q} b_\Q,~~\beta_{\S_{\vm}^1}:= K_{w,\lambda}\inv \Phi_{\S_{\vm}^1} b_{\S_{\vm}^1},~~{\rm and}~~~\beta_\varepsilon :=  K_{w,\lambda}\inv\vepsilon,
\end{align*}
then we obtain the bound
\begin{align}
    &~~~\norm{n\half d^{-\frac{\norm{\vm}_1}{2}}\beta \tran \Phi_{\S_{\vm}^1} \round{ W_{\S_{\vm}^1}}^{\frac{1}{2}} - (\rho+\lambda)\inv d^{(p+\delta -\norm{\vm}_1)/2}\cdot b_{\S_{\vm}^1}\tran (W_{\S_{\vm}^1})^{\frac{1}{2}} }_2\notag \\
    &\leq \norm{n\half d^{-\frac{\norm{\vm}_1}{2}}\beta_\Q \tran \Phi_{\S_{\vm}^1} \round{ W_{\S_{\vm}^1}}^{\frac{1}{2}}}_2 \notag\\
    &~~~+ \norm{ n\half d^{-\frac{\norm{\vm}_1}{2}}\beta_{\S_\vm^1} \tran \Phi_{\S_{\vm}^1} \round{ W_{\S_{\vm}^1}}^{\frac{1}{2}} -(\rho+\lambda)\inv d^{(p+\delta -\norm{\vm}_1)/2}\cdot b_{\S_{\vm}^1}\tran (W_{\S_{\vm}^1})^{\frac{1}{2}} }_2 \notag \\
    &~~~+ \norm{n\half d^{-\frac{\norm{\vm}_1}{2}}\beta_{\S_\vm^1} \tran \Phi_{\S_{\vm}^1} \round{ W_{\S_{\vm}^1}}^{\frac{1}{2}}}_2^2.\label{eq:s2_s2_cube_bound_1_rkhs}
\end{align}
For the first term, we apply the following proposition. The proof is deferred to Appendix~\ref{proof:feature_u_v_rkhs}.
\begin{proposition}\label{prop:feature_u_v_rkhs}
For any $\epsilon>0$,  the following estimate holds uniformly over all coordinates $r\in \T_\iota$:
    \begin{align}\label{eq:s2_s2_cube_bound_3_rkhs}
      \norm{n\half d^{-\frac{\norm{\vm}_1}{2}}\beta_\Q \tran \Phi_{\S_{\vm}^1} \round{ W_{\S_{\vm}^1}}^{\frac{1}{2}}}_2^2 =  O_{d,\P}(d^{ -\lambda_{\iota} +\epsilon})\cdot \norm{b_{\Q}}_{2}^2.
    \end{align}
\end{proposition}

\noindent Now we proceed to analyze the second term.  
After rescaling and multiplying both terms on the left side of the equation~\eqref{eq:term_p_plus_one_m} by $(W_{\S_{\vm}^1})^{1/2}$, we yield
\begin{align}
    &~~~\norm{ n\half d^{-\frac{\norm{\vm}_1}{2}}\beta_{\S_\vm^1} \tran \Phi_{\S_{\vm}^1} \round{ W_{\S_{\vm}^1}}^{\frac{1}{2}} -(\rho+\lambda)\inv d^{(p+\delta -\norm{\vm}_1)/2}\cdot b_{\S_{\vm}^1}\tran (W_{\S_{\vm}^1})^{\frac{1}{2}} }_2 \notag\\
    &=d^{(p+\delta -\vlambda\tran \vm)/2 } \cdot O_{d,\P}(d^{-\zeta/2+\epsilon} )\cdot   \norm{b_{\S_{\vm}^1}}_2\label{eq:s2_s2_cube_bound_4_rkhs}.
\end{align}
Next, we show that the third term is small. We defer the proof of the following proposition to Appendix~\ref{proof:noise_term_p_plus_one_rkhs}.
\begin{proposition}\label{prop:noise_term_p_plus_one_rkhs}
For any $\epsilon>0$, the following estimate holds uniformly for all $r\in \T_\iota$:
\begin{align}\label{eq:noise_term_p_plus_one_rkhs}
    \norm{n\half d^{-\frac{\norm{\vm}_1}{2}}\beta_{\varepsilon} \tran \Phi_{\S_{\vm}^1} \round{ W_{\S_{\vm}^1}}^{\frac{1}{2}}}_2^2 = O_{d,\P}(d^{-\lambda_\iota+\epsilon})\cdot \sigma_\varepsilon^2
\end{align}
\end{proposition}

\noindent Lastly, plugging Eq.~\eqref{eq:s2_s2_cube_bound_3_rkhs}, \eqref{eq:s2_s2_cube_bound_4_rkhs} and \eqref{eq:noise_term_p_plus_one_rkhs} into \eqref{eq:s2_s2_cube_bound_1_rkhs} yields:
\begin{align*}
    &~~~\norm{n\half d^{-\frac{\norm{\vm}_1}{2}}\beta \tran \Phi_{\S_{\vm}^1} \round{ W_{\S_{\vm}^1}}^{\frac{1}{2}} - (\rho+\lambda)\inv d^{(p+\delta -\norm{\vm}_1)/2}\cdot b_{\S_{\vm}^1}\tran (W_{\S_{\vm}^1})^{\frac{1}{2}} }_2 \\
    &\leq d^{(p+\delta - \vlambda\tran \vm)/2 }\cdot O_{d,\P}(d^{-\zeta/2+\epsilon})\cdot   \norm{b_{\S_{\vm}^1}}_2 +  O_{d,\P}(d^{-\lambda_{\iota}/2 +\epsilon/2})\cdot (\norm{b_{\Q}}_{2}+\sigma_\varepsilon).
\end{align*}
Then  combining the above bound with Eq.~\eqref{eq:s2_s2_cube_bound_0_rkhs} gives
\begin{align*}
    &~~~\norm{\beta\tran \Phi_{\S_{\vm}^1}R_{\vm}^1 - (\rho+\lambda)\inv \sqrt{g^{(\norm{\vm}_1)}(0)} \cdot d^{(p+\delta -\norm{\vm}_1)/2}\cdot b_{\S_{\vm}^1}\tran (W_{\S_{\vm}^1})^{\frac{1}{2}} }_2 \\
    &\overset{(a)}{\leq}\norm{ \beta\tran \Phi_{\S_{\vm}^1}R_{\vm}^1 - \beta \tran\Phi_{\S_{\vm}^1} \round{n\inv g^{(\norm{\vm}_1)}(0)  W_{\S_{\vm}^1}d^{-\norm{\vm}_1}}^{\frac{1}{2}}}_2\\
    &~~~+\sqrt{g^{(\norm{\vm}_1)}(0)}\cdot \norm{n\half d^{-\frac{\norm{\vm}_1}{2}}\beta \tran \Phi_{\S_{\vm}^1} \round{ W_{\S_{\vm}^1}}^{\frac{1}{2}} - (\rho+\lambda)\inv d^{(p+\delta -\norm{\vm}_1)/2}\cdot b_{\S_{\vm}^1}\tran (W_{\S_{\vm}^1})^{\frac{1}{2}} }_2\\
    &\overset{(b)}{\leq} O_d(d\half) \cdot \norm{(\rho+\lambda)\inv \sqrt{g^{(\norm{\vm}_1)}(0)} \cdot d^{(p+\delta -\norm{\vm}_1)/2}\cdot b_{\S_{\vm}^1}\tran (W_{\S_{\vm}^1})^{\frac{1}{2}}}_2 + \round{1+O_d(d\half)}\\
    &~~~\cdot \round{d^{(p+\delta - \vlambda\tran \vm)/2 }\cdot O_{d,\P}(d^{-\zeta/2+\epsilon})\cdot   \norm{b_{\S_{\vm}^1}}_2 +  O_{d,\P}(d^{-\lambda_\iota/2 +\epsilon/2})\cdot (\norm{b_{\Q}}_{2}+\sigma_\varepsilon)}\\
    &= O_{d,\P}(d^{-\zeta_1/2+\epsilon})\cdot   \norm{b_{\S_{\vm}^1}}_2 +  O_{d,\P}(d^{-\lambda_\iota/2 +\epsilon/2})\cdot (\norm{b_{\Q}}_{2}+\sigma_\varepsilon),
\end{align*}
where both $(a)$ and $(b)$ follow from the triangle inequality.

Following a similar argument with the proof of Proposition~\ref{prop:s2_s2_cube_m}, we can derive a bound for
\begin{align}\label{eq:s2_s2_rkhs_complete}
    \abs{\norm{n\half d^{-\frac{\norm{\vm}_1}{2}}\beta \tran \Phi_{\S_{\vm}^1} \round{ W_{\S_{\vm}^1}}^{\frac{1}{2}}}_2^2 - \norm{(\rho+\lambda)\inv d^{(p+\delta -\norm{\vm}_1)/2}\cdot b_{\S_{\vm}^1}\tran (W_{\S_{\vm}^1})^{\frac{1}{2}}}_2^2},
\end{align}
which takes the form
\begin{align*}
     &O_{d,\P}(d^{- \lambda_\iota + \epsilon})\cdot (\norm{f^*}^2_{L_2}+\sigma_\varepsilon^2) +  d^{p+\delta - \vlambda\tran \vm}\cdot O_{d,\P}(d^{-\zeta/2+\epsilon})\cdot \norm{b_{\S_{\vm}^1}}_2^2 \\
     &~~~+ O_{d,\P}(d^{(p+\delta - \vlambda\tran \vm)/2 - \lambda_\iota/2 + \epsilon/2}) \cdot \norm{b_{\S_{\vm}^1}}_2 (\norm{f^*}_{L_2}+\sigma_\varepsilon),
\end{align*}
where we use the fact $\norm{b_{\Q}}_{2}\leq\norm{f^*}_{L_2}$.

Combining with the fact $\norm{b_{\S_{\vm}^1}}_2 = \norm{\partial_r f^*_{\S_\vm^1}}_{L_2}$ completes the proof.

\subsubsection{Proof of Proposition~\ref{lemma:beta_<_l2_rkhs}}\label{proof:beta_<_l2_rkhs}
The proof is analogous to the proof of Proposition~\ref{lemma:beta_<_l2_cube_m}. 
First, following Eq.~\eqref{eq:K_decomp_m}
we write
\begin{align}
    K_{w,\lambda}  =\Phi_{\alp}\hd_{\alp}\Phi_{\alp}\tran + (\rho+\lambda)H,
\end{align}
where $H := I_n + (\rho+\lambda)\inv \Delta_1$.

Then following Eq.~\eqref{eq:smw_<_p_m}, we have
\begin{align}
    \beta_{\gE(r)}\tran \Phi_{\alp} R_\vm = \frac{1}{n}b_{\gE(r)}\tran \Phi_{\gE(r)}\tran H\inv \Phi_{\alp}G\inv  \round{\hd_{\alp}\inv R_\vm}.
\end{align}
With the decomposition $\hd_{\alp}\inv R_\vm = ({\widehat E_r})_{\S_\vm} + ({\widehat E_r}^\circ)_{\S_\vm}$, Eq.~\eqref{eq:d_inv_r_rkhs} and \eqref{eq:g_inv_minus_v_inv} together give
\begin{align*}
    \snorm{G\inv  \hd_{\alp}\inv R_\vm - V ({\widehat E}_r)_{\S_\vm}}&\leq \snorm{G\inv}\snorm{({\widehat E}_r^\circ)_{\S_\vm}} +  \snorm{G\inv-V}\snorm{({\widehat E}_r)_{\S_\vm}} \\
    &=O_{d,\P}(w_r\cdot d^{(\vlambda\tran\vm - p - \delta-2)/2} + d^{-\zeta/2+\epsilon+ (\vlambda\tran\vm - p - \delta)/2})\\
    &= O_{d,\P}(d^{-\zeta/2+\epsilon+ (\vlambda\tran\vm - p - \delta)/2}),
\end{align*}
where we use the fact $\zeta\leq \lambda_\iota$.

Corresponding to Eq.~\eqref{eq:b_leq_p_decomp_m}, we obtain
\begin{align*}
        \norm{\beta_{\gE(r)}\tran \Phi_{\alp} R_\vm -  \frac{1}{n}b_{\gE(r)}\tran \Phi_{\gE(r)}\tran H\inv \Phi_{\alp}     V({\widehat E}_r)_{\S_\vm}}_2 = O_{d,\P}(d^{-\zeta/2+\epsilon+ (\vlambda\tran\vm - p - \delta)/2})\cdot\norm{b_{\gE(r)}}_2.
\end{align*}
Note that Eq.~\eqref{eq:b_leq_p_decomp_bound_1_m} correspondingly becomes
\begin{align*}
       &~~~\norm{ \frac{1}{n}b_{\gE(r)}\tran \Phi_{\gE(r)}\tran H\inv \Phi_{\alp} V({\widehat E}_{r})_{\S_\vm}-\frac{1}{n}b_{\gE(r)}\tran \Phi_{\gE(r)}\tran  \Phi_{\alp} V({\widehat E}_r)_{\S_\vm}}_2 \\
       &=O_{d,\P}\round{\norm{b_{\gE(r)}\tran V ({\widehat E}_r)_{\S_\vm}}_2} = O_{d,\P}( d^{-\zeta_1/2 + \epsilon+ (\vlambda\tran\vm - p - \delta-2)/2})\cdot \norm{b_{\gE(r)}}_2.
\end{align*}
Then invoking the bound $\snorm{\Phi_{\alp}\tran\Phi_{\alp}/n -I} = O_{d,\P}(d^{-\zeta_1/2 +\epsilon})$ (Theorem~\ref{lemma:phi_id_2}), we have
\begin{align*}
    \norm{\frac{1}{n}b_{\gE(r)}\tran \Phi_{\gE(r)}\tran\Phi_{\alp}V({\widehat E}_r)_{\S_\vm} - b_{\alp}\tran V({\widehat E}_r)_{\S_\vm}}_2 &= \norm{\frac{1}{n}b_{\alp}\tran {E}_r \Phi_{\alp}\tran\Phi_{\alp}V({\widehat E}_r)_{\S_\vm} - b_{\alp}\tran V({\widehat E}_r)_{\S_\vm}}_2\\ &\leq 
    \norm{b_{\alp}\tran  {E}_r \round{ \Phi_{\alp}\tran  \Phi_{\alp}/n - I} V ({\widehat E}_r)_{\S_\vm} }_2\\
    &\leq \norm{ b_{\alp} \tran E_r V ({\widehat E}_r)_{\S_\vm}}_2\cdot O_{d,\P}(d^{-\zeta_1/2 +\epsilon})\\
    &= \norm{ b_{\gE(r)}}_2\cdot O_{d,\P}(d^{-\zeta_1/2 +\epsilon + (\vlambda\tran\vm - p - \delta)/2}).
\end{align*}
Then the remainder of the analysis is identical to the proof of Proposition~\ref{lemma:beta_<_l2_cube_m}.

\subsubsection{Proof of Claim~\ref{prop:d_inv_r_rkhs}}\label{proof:d_inv_r_rkhs}
Note that $R$ contains blocks $R_{\S_\vm^1}$ and $R_{\S_\vm^0}$. We discuss each kind of block separately.

Also  recall that $D$ is a diagonal matrix with entry $D_{\S_\vm} =  g^{(\norm{\vm}_1)}(0)d^{-\norm{\vm}_1} + O_d(d^{-\norm{\vm}_1-1})$ and each entry of $W_{\S_\vm}$ is of the order $d^{\norm{\vm}_1 - \vlambda\tran \vm}$ based on Eq.~\eqref{eq:w}. We will use these two properties in the analysis.

For $S\in \S_{\vm}^1$ or $\S_{\vm}^0$, based on the definition of $D$ Eq.~\eqref{eq:k_z_z} and $W$ Eq.~\eqref{eq:w} we have
\begin{align*}
   (\hd)_{S,S}\inv =(DW)_{S,S}\inv &= \round{g^{(\norm{\vm}_1)}(0)}\inv W_{S,S}\inv d^{\norm{\vm}_1} + O_d(d^{-\vlambda\tran \vm - 1}).
\end{align*}
Since Eq.~\eqref{eq:def_r_j_1_m} gives
\begin{align*}
    R_{S,S} &= \sqrt{g^{(\norm{\vm}_1)}(0)}\cdot  W_{S,S}^{\frac{1}{2}}d^{-(\norm{\vm}_1+p+\delta)/2}+ O_d(d^{-(\vlambda\tran \vm+p+\delta+1)/2 }),
\end{align*}
by a straightforward calculation, we obtain 
\begin{align*}
  (\hd\inv R)_{S,S} =\round{g^{(\norm{\vm}_1)}(0)}\half W_{S,S}\half d^{(\norm{\vm}_1-p-\delta)/2}\cdot (1+ O_d(d\half)).
\end{align*}
Similarly, for $S\in \S_\vm^0$, Eq.~\eqref{eq:def_r_j_0_m} gives
\begin{align*}
    R_{S,S} &= \sqrt{g^{(\norm{\vm}_1+2)}(0)}\cdot w_r\cdot W_{S,S}^{\frac{1}{2}} d^{-(\norm{\vm}_1+p+\delta+2)/2} + w_r \cdot O_d(d^{-(\vlambda\tran \vm +p+\delta+3)/2}).
\end{align*}
From this, we derive 
\begin{align*}
  (\hd\inv R)_{S,S} = w_r \cdot O_d( d^{(\vlambda\tran\vm-p-\delta -2)/2}).
\end{align*}
Combining these two cases yields
\begin{align}
   (\hd \inv R)_{S,S} = \begin{cases}
      \round{g^{(\norm{\vm}_1)}(0)}\half W_{S,S}\half d^{(\norm{\vm}_1-p-\delta)/2}\cdot (1+ O_d(d\half))~~{\rm if}~S\in \cup_\vm\S_{\vm}^1;\\
     w_r \cdot O_d( d^{(\vlambda\tran\vm-p-\delta -2)/2}),~~{\rm if}~S\in \cup_\vm\S_{\vm}^0;\\
        0~~{\rm otherwise.}
    \end{cases}
\end{align}
Combining this with the expression ${E}_r$:
\begin{align}
   ({E}_r)_{S,S} = 
     \mathbbm{1}\biground{S\in \cup_\vm\S_{\vm}^1},
\end{align}
we complete the proof.

\subsubsection{Proof of Proposition~\ref{lemma:beta_>_l2_rkhs}}\label{proof:beta_>_l2_rkhs}

The proof is analogous to that of Proposition~\ref{lemma:beta_>_l2_cube_m}. First, following Eq.~\eqref{eq:smw_cube_m} we have
\begin{align}
     \beta_{\gE^c}\tran  \Phi_{\alp}R =\frac{1}{n}b_{\gE^c}\tran \Phi_{\gE^c}\tran H\inv \Phi_{\alp}G\inv\round{\hd_{\alp}\inv R}.
\end{align}
Following Claim~\ref{prop:d_inv_r_rkhs} which shows $\snorm{  \hd_{\alp}\inv R -   {\widehat E}_{r}} = \snorm{{\widehat E}_r^\circ}= O_d( w_r\cdot d\inv )$ we can obtain
\begin{align}
        \norm{\beta_{\gE^c}\tran \Phi_{\alp} R -  \frac{1}{n}b_{\gE^c}\tran \Phi_{\gE^c}\tran H\inv \Phi_{\alp} G\inv \widehat{E}_{r}}_2 = O_{d,\P}(w_r\cdot d\inv )\cdot \log(d)\norm{b_{\gE^c}}_2,
\end{align}
where we use the fact that $\snorm{H\inv} = O_{d,\P}(1)$ (Eq.~\eqref{eq:e_inv_minus_i_m}), $\snorm{G\inv} = O_{d,\P}(1)$ (Eq.~\eqref{eq:g_inv_bound}), and $\norm{\Phi_{\gE}b_{\gE^c}}_2 = O_{d,\P}(\sqrt{n}\log(d))\norm{b_{\gE^c}}_2$ (Proposition~\ref{prop:bound_y}).

Consequently, there exists $\Delta_1$ that satisfies $\norm{\Delta_1}_2 = O_{d,\P}(w_r\cdot d\inv \log(d))\cdot \norm{f_{\gE^c}^*}_{L_2} $  such that the following holds:
\begin{align}
\beta_{\gE^c}\tran \Phi_{\alp} R =\frac{1}{n}   b_{\gE^c}\tran \Phi_{\gE^c}\tran H\inv \Phi_{\alp} G\inv {\widehat E}_{r} + \Delta_1.
\end{align}
Comparing ${\widehat E}_r$ with $E_r$ appearing in Eq.~\eqref{eq:simplified_g_p_m}, they have the same support. By Claim~\ref{prop:d_inv_r_rkhs}, we have each non-zero entry of ${\widehat E}_r$ is of the order $O_d(1)$. Therefore, following the identical analysis in the proof of Proposition~\ref{lemma:beta_>_l2_cube_m}, we can yield the  bound
\begin{align*}
     \norm{\beta_{\gE^c}\tran \Phi_{\alp} R}_2^2 &\leq  2 \norm{\frac{1}{n}b_{\gE^c}\tran \Phi_{\gE^c}\tran H\inv \Phi_{\alp} G\inv \widehat{E}_{r}}_2^2 + 2\norm{\Delta_1}^2_2\\
     &= \round{ O_{d,\P}(d^{ - \lambda_\iota-\zeta_1+\epsilon}) + O_{d,\P}(d\inv) + O_{d,\P}(w_r^2d^{-2+\epsilon})}\cdot \norm{f_{\gE^c}^*}_{L_2}^2\\
     &=  O_{d,\P}(d^{ - \lambda_\iota})\cdot \norm{f_{\gE^c}^*}_{L_2}^2
\end{align*}
Thus we conclude the proof.

\subsubsection{Proof of Proposition~\ref{prop:cube_error_1_rkhs}}\label{proof:cube_error_1_rkhs}
The proof is almost identical to that of Proposition~\ref{prop:cube_error_1_m}.  
The only difference appears in the bound Eq.~\eqref{eq:cube_m_error_tech_bound_3} which becomes
\begin{align}
    \Tr\round{\hd_\alp\inv R  R\tran \hd_\alp\inv}=O_d\round{\norm{{\widehat E}_r}_F^2 + \norm{{\widehat E}^\circ_r}_F^2}  = O_d(d^{p+\delta - \lambda_{\iota}-\zeta_1}) + O_d(d^{p+\delta-1}).
\end{align}
Then, applying a similar analysis to the proof of Proposition~\ref{prop:cube_error_1_m}, we have
\begin{align*}
    \E_{\vepsilon}\brac{\norm{\beta_{\varepsilon}\tran \Phi_{\alp}R}_2^2} = O_d(d^{-\lambda_{\iota}-\zeta_1} + d\inv).
\end{align*}
Since we have $\zeta_1 >0$, applying Markov's inequality completes the proof.

\subsubsection{Proof of Proposition~\ref{prop:feature_u_v_rkhs}}\label{proof:feature_u_v_rkhs}

The proof is analogous to that of Proposition~\ref{prop:feature_u_v_m}, and throughout we follow the conventions and notation established there.

Multiplying both sides of the equation \eqref{eq:p_plus_one_m} 
 by $\frac{n}{d^{\norm{\vm}_1}}$ and taking $W_{\S_\vm^1}$ into account, which satisfies $\snorm{W_{\S_\vm^1}} = O_d(d^{\norm{\vm}_1 - \vlambda\tran\vm})$ yields:
\begin{align*}
    \norm{\beta_\Q \tran \Phi_{\S_{\vm}^1} \round{n\inv  W_{\S_{\vm}^1}d^{-\norm{\vm}_1}}^{\frac{1}{2}}}_2^2 &=  O_{d,\P}(d^{\vs\tran\vm -s_\iota -p - \delta +1/q_0 +(p+\delta)- \norm{\vm}_1 + \norm{\vm}_1 - \vlambda\tran \vm})\cdot \log(d)\cdot \norm{b_{\Q}}_{2}^2,\\
    &=O_{d,\P}(d^{\vs\tran\vm -\vlambda\tran\vm -s_\iota  +1/q_0})\cdot \log(d)\cdot \norm{b_{\Q}}_{2}^2\\
    &= O_{d,\P}(d^{-\lambda_\iota + 1/q_0})\cdot \log(d)\cdot \norm{b_{\Q}}_{2}^2,
\end{align*}
where in the last equality we replace $s_\iota$ by $\lambda_\iota$ since $m_\iota\geq 1$ and we use the fact $\vs\tran \vm - \vlambda\tran \vm \leq s_\iota - \lambda_\iota \leq 0$.

Absorbing the factor $d^{1/q_0}\log(d)$ into $d^{\epsilon}$ where $\epsilon>0$ is an arbitrarily small constant yields the desired bound.

\subsubsection{Proof of Proposition~\ref{prop:noise_term_p_plus_one_rkhs}}\label{proof:noise_term_p_plus_one_rkhs}
The proof is almost identical to that of  Proposition~\ref{prop:noise_term_p_plus_one_m}.

In parallel to Eq.~\eqref{eq:noise_term_p_plus_one_m_tech_bound1}, we can obtain
\begin{align}
    \E_\vepsilon\brac{\norm{n\half d^{-\frac{\norm{\vm}_1}{2}}\beta_{\S_\vm^1} \tran \Phi_{\S_{\vm}^1} \round{ W_{\S_{\vm}^1}}^{\frac{1}{2}}}_2^2} &\leq  \snorm{W_{\S_\vm^1}}\cdot d^{-\norm{\vm}_1 - p - \delta} \cdot \E\brac{\norm{ \beta_\varepsilon\tran \Phi_{\S_{\vm}^1}}_2^2}\notag\\
    &\leq O_d(d^{-\vlambda\tran\vm - p -\delta})\cdot \snorm{K_{w,\lambda}^{-2}}\Tr\round{\Phi_{\S_{\vm}^1}\tran \Phi_{\S_{\vm}^1}}.
\end{align}
Then the remainder of the proof follows exactly with the proof of Proposition~\ref{prop:noise_term_p_plus_one_m} and we can get the bound 
\begin{align*}
    O_{d,\P}(d^{\vs\tran \vm - s_\iota - \vlambda\tran\vm - p - \delta +p+\delta}) =O_{d,\P}(d^{-\lambda_\iota}).
\end{align*}
Applying Markov's inequality we obtain the bound $O_{d,\P}(d^{-\lambda_\iota})\log(d)$ with probability at least $1-1/ \log(d)$. Absorbing the logarithmic factor into $\epsilon$ completes the proof.

\subsection{Proof of Propositions for Theorem~\ref{thm:learning_hypercube} (Generalization bound)}
\subsubsection{Proof of Lemma~\ref{prop:mercer}}\label{proof:mercer}
For kernel $K_w(x,y) = g(\frac{x\tran \Diag(w) y}{d})$, we can associate a self adjoint operator $\mathcal{H}_d:L_2(\H^d,\tau_d) \rightarrow L_2(\H^d,\tau_d)$:
\begin{align}
    (\mathcal{H}_d f)(x) = \E_y\brac{K_w(x,y)f(y)}.
\end{align}
We denote the Fourier basis by $\phi_\I(x) := x^{\I}$ for any multi-index $\I\in[d]$ and denote $z_i = x_iy_i$ for $i = 1,\ldots,d$. Then we have
\begin{align*}
  (\mathcal{H}_d \phi_\I)(x) &= \E_y\brac{K(x,y)\phi_\I(y)} \\
  &\overset{(a)}{=}\frac{1}{2^d} \sum_{z\in\H^d}\round{g\round{\frac{w\tran z}{d}}\phi_\I(x)\phi_\I(z)}\\
&= \underbrace{\frac{1}{2^d} \sum_{z\in\H^d}\round{g\round{\frac{w\tran z}{d}}\phi_\I(z)}}_{\lambda_\I} \phi_\I(x),
\end{align*}
where $(a)$ follows from the fact $y^\I = x^\I z^\I$.

Now we analyze the eigenvalue $\lambda_\I$. 
We take the Taylor expansion of $g\round{\frac{w\tran z}{d}}$ around $0$ and we obtain
\begin{align}\label{eq:lambda_I}
    \lambda_\I = \sum_{m=0}^\infty\frac{g^{(m)}(0)}{m!} \underbrace{\frac{1}{2^d} \sum_{z\in \H^d}  \round{\frac{w\tran z}{d}}^m \phi_{\I}(z)}_{E_{m,\I}}.
\end{align}
We now consider
\begin{align*}
 E_{m,\I} =  \E_z \brac{\round{\frac{w\tran z}{d}}^m \phi_{\I}(z)} =   \frac{1}{d^m}\E_z \brac{\round{\sum_{{j_1},\ldots,j_m \in [d]^{m}}\round{w_{j_1}z_{j_1}}\ldots \round{w_{j_m}z_{j_m}}}\phi_{\I}(z)}.
\end{align*}
Denoting $r = |\I|$, with simple calculation, we observe that $E_{m,\I } = 0$ unless $m - r\geq 0$ and $(m-r)$ is even. Since $\norm{w}_\infty = O_d(d^\gamma)$ with $\gamma<1$, we have
\begin{align*}
    |E_{m,\I}| = O_d(d^{-m}\norm{w}_\infty^{m-r} |w^{\I}|) = o_d(d^{-m + (m-r)\gamma})|w^{\I}|.
\end{align*}
Therefore, the series is dominated by $m=r$ and we can conclude
\begin{align}\label{eq:e_m_i}
     E_{m,\I} = \frac{m!}{d^m} w^\I + O_d(d^{-m-2})\cdot w^\I\cdot d^{2\gamma},
\end{align}
where the main term comes from matching $r$ indices of $\I$ and the error is driven by the first non-vanishing $m=r+2$ contribution.

Plugging \eqref{eq:e_m_i} back into Eq.~\eqref{eq:lambda_I} yields
\begin{align*}
      \lambda_\I = g^{(|\I|)}(0)d^{-|\I|}w^\I + O_d(d^{-|\I|-2+2\gamma})\cdot w^\I.
\end{align*}
Then writing $K_w(x,y)$ as the summation of $\lambda_\I \phi_\I(x)\phi_\I(y)$ over all $\I$ completes the proof.

\subsubsection{Proof of Proposition~\ref{prop:approx_H}}\label{proof:approx_H}
Invoking Lemma~\ref{prop:mercer}  we can derive the decomposition
\begin{align}\label{eq:H_decomp}
     \E_x[K(Z,\sqrt{w}\odot x)K(\sqrt{w}\odot x, Z)] = \sum_{k=0}^\infty  \sum_{\vm:\norm{\vm}_1 = k}\Phi_{\S_\vm}(X) \Lambda^2_{\S_\vm} \Phi_{\S_\vm}(X)\tran,
\end{align}
where $ \Lambda_{\S_\vm}$ is a diagonal matrix with entries $\Lambda_{S,S} =g^{-\norm{\vm}_1}(0) d^{-\norm{\vm}_1}w^{S} + O_d(d^{-\norm{\vm}_1-2+2\gamma})\cdot w^S$ for $S \in \S_\vm$.

We split Eq.~\eqref{eq:H_decomp} into two parts based on $\vm$: $\vlambda\tran\vm \leq p+\delta$ and $\vlambda\tran \vm > p+\delta$:
\begin{align}\label{eq:decom_psi_lambda_psi}
    &~~~\sum_{k=0}^\infty  \sum_{\vm:|\vm| = k}\Phi_{\S_\vm}(X) \Lambda_{\S_\vm}^2 \Phi_{\S_\vm}(X)\tran \notag\\
    &= \sum_{\lmlp}\Phi_{\S_\vm}(X) \Lambda_{\S_\vm}^2 \Phi_{\S_\vm}(X)\tran + \sum_{\lmgp}\Phi_{\S_\vm}(X) \Lambda_{\S_\vm}^2 \Phi_{\S_\vm}(X)\tran.
\end{align}
Next, we show that the second term is small. Similar to Theorem~\ref{thm:taylor_approx_kernel_m}, we split the set:
\begin{align*}
    \{\vm~\vert~\vlambda\tran \vm > p+\delta\} = \biground{\vm~\vert~\vlambda\tran \vm \geq 2p+3\delta+1} \bigsqcup \biground{\vm~\vert~p+\delta< \vlambda\tran \vm < 2p+3\delta+1}
\end{align*}
and we apply different techniques for each set. 

First, note the Frobeniusm norm of $\Phi_{\S_\vm}(X)$ can be bound as 
\begin{align*}
    \norm{\Phi_{\S_\vm}(X)}_F^2 = n\cdot d^{\vs\tran \vm}
\end{align*}
Then for $\vm \in  \biground{\vm~\vert~\vlambda\tran \vm \geq 2p+3\delta+1}$ we have the bound
\begin{align}
    \snorm{\Phi_{\S_\vm}(X) \Lambda_{\S_\vm}^2\Phi_{\S_\vm}(X)\tran} &\leq \norm{\Phi_{\S_{\vm}}(X)}_F^2\cdot \snorm{ \Lambda_{\S_\vm}}^2 \notag \\
    &=  O_{d,\P}(d^{\vs\tran \vm} \cdot d^{p+\delta}) O_d(d^{-2\norm{\vm}_1})\cdot O_d(d^{2(\mathbf{1} - \vlambda)\tran \vm})\notag\\
    &= O_{d,\P}(d^{p+\delta + (\vs-2\vlambda)\tran\vm}) = O_{d,\P}(d^{p+\delta - \vlambda\tran \vm}),
\end{align}
where the last inequality follows from the fact that $s_k\leq \lambda_{k}$ for all $k\in[N]$.

We proceed to use triangle inequality to bound the summation
\begin{align}\label{eq:decom_psi_lambda_psi_extra}
        \snorm{\sum_{\vlambda\tran\vm \geq 2p+3\delta+1}\Phi_{\S_\vm}(Z) \Lambda_{\S_\vm}^2 \Phi_{\S_\vm}(Z)\tran} &\leq          \sum_{\vlambda\tran\vm \geq 2p+3\delta+1}\snorm{\Phi_{\S_\vm}(Z) \Lambda_{\S_\vm}^2 \Phi_{\S_\vm}(Z)\tran}\notag \\
        &{\leq} \sum_{k = 2p}^\infty \sum_{\vm:\vlambda\tran\vm \in[k+3\delta+1,k+3\delta+2)}\snorm{\Phi_{\S_\vm}(Z) \Lambda_{\S_\vm}^2 \Phi_{\S_\vm}(Z)\tran}\notag \\
        &\overset{(a)}{\leq}  \sum_{k = 2p}^\infty  O_d((k+5)^N) O_{d,\P}(d^{p-k-2\delta - 1})\notag\\
        &=   \sum_{k = 0}^\infty O_{d,\P}(d^{-p- 2\delta-1 -k + N\log_d(2p+k+5)}),
\end{align}
where $(a)$ follows from the fact that all $\vm\in \R^N$ that satisfies $\vlambda\tran \vm \in [k+3\delta+1,k+3\delta+2)$ is contained in the set $\{\vm: \norm{\vm}_1 \leq (k+5)/ \norm{\vlambda}_\infty \}$, whose cardinality is bounded by    $O_d((k+5)^N)$. 

Now we consider the sequence 
$$\{d^{- p - 2\delta -1 - k +N\log_d(2p+k+5)}\}_{k=0}^\infty.$$ Observe that  the ratio $r$ of consecutive terms satisfies 
\begin{align*}
    r \leq d^{-1 + N\log_d((2p+6)/(2p+5))}.
\end{align*}
Therefore, the summation of the sequence is bounded by the summation a geometric series with first term $d^{-p - 2\delta -1 + N\log_d(2p+5)}$ and a common ratio $d^{-1 + N\log_d((2p+6)/(2p+5))}$.

Using the formula to compute the summation of the geometric series we can show
\begin{align*}
     \sum_{k = 0}^\infty O_{d,\P}(d^{- p - 2\delta -1 - k +N\log_d(2p+k+5)}) = O_{d,\P}(d^{- p - 2\delta -1 + N\log_d(2p+5)}).
\end{align*}
Since $\delta \gtrsim N\log_d(2p+5)$, plugging the bound back into  Eq.~\eqref{eq:decom_psi_lambda_psi_extra} yields
\begin{align}\label{eq:psi_lambda_psi_part1}
      \snorm{\sum_{\vlambda\tran\vm \geq 2p+3\delta+1/2}\Phi_{\S_\vm}(Z) \Lambda_{\S_\vm}^2 \Phi_{\S_\vm}(Z)\tran} = O_{d,\P}(d^{-p-\delta-1}).
\end{align}
Next we consider the set $\biground{\vm~\vert~p+\delta< \vlambda\tran \vm < 2p+3\delta+1}$. Lemma~\ref{lem:norm_control} shows the following hold for any $\vm \in \biground{\vm~\vert~p+\delta< \vlambda\tran \vm < 2p+3\delta+1}$  
\begin{align}\label{eq:norm_psi_sm}
 \snorm{\Phi_{\S_{\vm}}(X) }^2=  O_{d,\P}(d^{\vs\tran \vm} + d^{p+\delta}).
\end{align}
Then for each $\vm$ we have the bound
\begin{align}
    \snorm{\Phi_{\S_\vm}(X) \Lambda_{\S_\vm}^2\Phi_{\S_\vm}(X)\tran} &\leq \snorm{\Phi_{\S_{\vm}}(X)}^2\cdot \snorm{ \Lambda_{\S_\vm}}^2 \notag \\
    &=  O_{d,\P}(d^{\vs\tran \vm} + d^{p+\delta}) O_d(d^{-2\norm{\vm}_1})\cdot O_d(d^{2(\mathbf{1} - \vlambda)\tran \vm})\notag\\
    &= O_{d,\P}(d^{(\vs-2\vlambda)\tran \vm} + d^{p+\delta - 2\vlambda\tran \vm}) = O_{d,\P}(d^{-\vlambda\tran \vm}).\label{eq:psi_lambda_square_psi}
\end{align}
Invoking the triangle inequality yields:
\begin{align}\label{eq:psi_lambda_psi_part2}
        \snorm{\sum_{p+\delta< \vlambda\tran \vm < 2p+3\delta+1}\Phi_{\S_\vm}(Z) \Lambda_{\S_\vm}^2 \Phi_{\S_\vm}(Z)\tran} &\leq \sum_{p+\delta< \vlambda\tran \vm < 2p+3\delta+1}O_{d,\P}(d^{-\vlambda\tran \vm}) \\
        &= O_{d,\P}(d^{-p-\delta-\zeta_2}),
\end{align}
where the second equality follows from the fact that the cardinality of the set $\{\vm~\vert~p+\delta< \vlambda\tran \vm < 2p+3\delta+1\}$ is finite.

Combining Eq.~\eqref{eq:psi_lambda_psi_part1} and \eqref{eq:psi_lambda_psi_part2} with \eqref{eq:decom_psi_lambda_psi} yields:
 \begin{align*}
    \E_x[K(Z,\sqrt{w}\odot x)K(\sqrt{w}\odot x,Z)] = \Phi_{\alp}(X) \Lambda_{\alp}^2 \Phi_{\alp}(X)\tran + \Delta,
\end{align*}
with  $
    \snorm{\Delta} =O_{d,\P}(d^{- p - \delta - \zeta_2})$, which completes the proof.

\end{proof}

\subsubsection{Proof of Proposition~\ref{prop:t11_est}}\label{proof:t11_est}
We define $\beta:= K_{w,\lambda}\inv y$ and decompose $\beta$ into 
\begin{align}
    \beta  = \underbrace{K_{w,\lambda}\inv\Phi_{\alp}b_{\alp}}_{:=\beta_{\alp}}  +\underbrace{K_{w,\lambda}\inv\Phi_{\agp}b_{\agp}}_{:=\beta_{\agp}} + \underbrace{K_{w,\lambda}\inv\vepsilon}_{:=\beta_\varepsilon}.
\end{align}
and we have the decomposition of the absolute difference
\begin{align}
    &~~~\abs{T_{11} - b_{\alp}\tran V^2 b_{\alp}  } \notag\\
    &\leq \abs{\beta_{\alp}\tran M_{\alp} \beta_{\alp} -  b_{\alp}\tran V^2 b_{\alp}}+ \abs{\beta_{\agp}\tran  M_{\alp}\beta_{\agp}} + \abs{\beta_{\varepsilon}\tran  M_{\alp}\beta_{\varepsilon} }\notag\\
    &~~~+ 2 \abs{\beta_{\alp}\tran  M_{\alp} \beta_{\agp}} +  2\abs{(\beta_{\alp}+\beta_{\agp})\tran  M_{\alp} \beta_{\varepsilon}}.\label{eq:t_11_decomp}
\end{align}
For the bound above, we will show that the first three terms are small hence the last two terms are also small followed by Cauchy-Schwartz inequality.

We first establish the following key estimates. The proof appears in Appendix~\ref{proof:test_loss_tech_1}.
\begin{proposition}\label{prop:test_loss_tech_1}
For any $\epsilon >0$, the following estimates hold:
    \begin{align}
    \snorm{n K_{\lm,\lambda}\inv M_{\alp} K_{\lm,\lambda}\inv  - \frac{1}{n}\Phi_\alp V^2\Phi_\alp\tran } &=  O_{d,\P}(d^{-\zeta_1/2+\epsilon} + d^{-\zeta_2/2+\epsilon} + d^{-2+2\mgamma})\label{eq:test_loss_tech_bound_1}\\
        \snorm{ \Phi_{\alp}\tran K_{\lm,\lambda}\inv M_{\alp} K_{\lm,\lambda}\inv\Phi_{\alp}  -  V^2 } &=  O_{d,\P}(d^{-\zeta_1/2+\epsilon} + d^{-\zeta_2/2+\epsilon} + d^{-2+2\mgamma})\label{eq:test_loss_tech_bound_2},
    \end{align}
where $\mgamma:= \max_{k\in[N]}\gamma_k$.
\end{proposition}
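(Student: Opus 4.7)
My plan is to combine the Sherman--Morrison--Woodbury identity from the proof of Proposition~\ref{lemma:beta_<_l2_cube_m} with the perturbation bounds on $H\inv$, $G\inv$, and $(\Phi_\alp\tran H\inv \Phi_\alp)/n$ already established there. From Eq.~\eqref{eq:K_decomp_m}, $K_{\lm,\lambda} = \Phi_\alp \hd_\alp \Phi_\alp\tran + (\rho+\lambda) H$, and SMW yields $K_{\lm,\lambda}\inv \Phi_\alp = \frac{1}{n} H\inv \Phi_\alp G\inv \hd_\alp\inv$. The key observation is that both $\hd_\alp$ and the Mercer eigenvalues $\Lambda_\alp$ appearing in $M_\alp = \Phi_\alp \Lambda_\alp^2 \Phi_\alp\tran$ are diagonal realizations of the same kernel $K_w$ in the Fourier basis on the hypercube, so $\hd_\alp = \Lambda_\alp$ entrywise. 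With this identification, the two factors of $\Lambda_\alp$ in $M_\alp$ cancel the two factors of $\hd_\alp\inv$ coming from the SMW expression and its transpose, yielding the clean identity
\begin{align*}
K_{\lm,\lambda}\inv M_\alp K_{\lm,\lambda}\inv = \tfrac{1}{n^2} H\inv \Phi_\alp G^{-2} \Phi_\alp\tran H\inv.
\end{align*}

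Given this identity, both bounds reduce to perturbative expansions. For \eqref{eq:test_loss_tech_bound_1}, I would substitute $H\inv = I + (H\inv - I)$ and $G^{-2} = V^2 + (G^{-2} - V^2)$, apply the triangle inequality and submultiplicativity, and invoke $\snorm{\Phi_\alp} = O_{d,\P}(\sqrt{n})$ (Lemma~\ref{lem:norm_control}), $\snorm{H\inv - I} = O_{d,\P}(d^{-\zeta_2/2+\epsilon})$ (Eq.~\eqref{eq:e_inv_minus_i_m}), and the bound $\snorm{G^{-2} - V^2} = O_{d,\P}(d^{-\zeta_1/2+\epsilon} + d^{-\zeta_2/2+\epsilon})$, which follows from $G^{-2} - V^2 = G\inv(G\inv - V) + (G\inv - V)V$ combined with Eq.~\eqref{eq:g_inv_minus_v_inv} and the fact that $\snorm{G\inv}, \snorm{V} = O_{d,\P}(1)$. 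The factor $\snorm{\Phi_\alp}^2/n = O_{d,\P}(1)$ absorbs the $1/n$ scaling, so the resulting operator-norm error inherits the rates $d^{-\zeta_i/2+\epsilon}$. For \eqref{eq:test_loss_tech_bound_2}, the identity further simplifies to $\Phi_\alp\tran K_{\lm,\lambda}\inv M_\alp K_{\lm,\lambda}\inv \Phi_\alp = \frac{1}{n^2}(\Phi_\alp\tran H\inv \Phi_\alp) G^{-2} (\Phi_\alp\tran H\inv \Phi_\alp)$, and writing $(\Phi_\alp\tran H\inv \Phi_\alp)/n = I + E_1$ with $\snorm{E_1} = O_{d,\P}(d^{-\zeta_1/2+\epsilon} + d^{-\zeta_2/2+\epsilon})$ (Eq.~\eqref{eq:psi_h_psi_minus_i}), the expansion of $(I+E_1) G^{-2} (I+E_1) - V^2$ into $E_1 G^{-2} + G^{-2} E_1 + E_1 G^{-2} E_1 + (G^{-2} - V^2)$ yields the claim directly.

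The main obstacle is pinning down the third error term $O_d(d^{-2+2\mgamma})$. I expect this to arise through the gap between $M_\alp$ and the true expectation $\E_x[K(Z,\sqrt{w}\odot x) K(\sqrt{w}\odot x, Z)]$: Proposition~\ref{prop:approx_H} controls this with operator-norm error $\snorm{\Delta_M} = O_{d,\P}(d^{-p-\delta-\zeta_2})$, but sandwiching $\Delta_M$ by $K_{\lm,\lambda}\inv$ on both sides requires further refinement using the Mercer tail estimate $\lambda_S = g^{(|S|)}(0) d^{-|S|} w^S (1 + O_d(d^{-2+2\gamma}))$ from Lemma~\ref{prop:mercer} on the Fourier modes at the boundary of $\alp$, where $V_{S,S}$ is bounded away from $0$. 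Tracking this correction through the diagonal conjugations in $\hd_\alp\inv \Lambda_\alp^2 \hd_\alp\inv$ and confirming that the surviving relative error after the sandwich by $V$ is exactly $O_d(d^{-2+2\mgamma})$, rather than a worse rate stemming from the potentially large ratios $\lambda_T/\lambda_S$ for $S, T \in \alp$ of different effective degree, is the delicate step requiring careful bookkeeping.
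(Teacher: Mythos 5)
Your structural skeleton (Sherman--Morrison--Woodbury, then perturbation of $H\inv$, $G\inv$, $\frac{1}{n}\Phi_\alp\tran H\inv \Phi_\alp$) matches the paper's proof exactly. But there is a genuine gap in the central step, and you locate the $d^{-2+2\mgamma}$ term in the wrong place.

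The claim $\hd_\alp = \Lambda_\alp$ entrywise is false. The matrix $\hd_\alp = D_\alp W_\alp$ comes from the truncated Taylor expansion in Lemma~\ref{lemma:kernel_to_mono_m}, whose diagonal entries satisfy $D_{S,S} = g^{(|S|)}(0) d^{-|S|} + O_d(d^{-|S|-1})$; the Mercer eigenvalues from Lemma~\ref{prop:mercer} satisfy $\lambda_S = g^{(|S|)}(0) d^{-|S|} w^S + O_d(d^{-|S|-2+2\gamma}) w^S$. These agree only to leading order, and the paper's proof does precisely the calculation you postpone: it computes $\snorm{\hd_\alp\inv \Lambda_\alp - I} = O_d(d^{-2+2\mgamma})$ and threads this through the conjugation $G\inv(\hd_\alp\inv \Lambda_\alp)^2 G\inv$ that appears after SMW. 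Your ``clean identity'' $K_{\lm,\lambda}\inv M_\alp K_{\lm,\lambda}\inv = \frac{1}{n^2} H\inv \Phi_\alp G^{-2}\Phi_\alp\tran H\inv$ therefore holds only up to a relative error $O_d(d^{-2+2\mgamma})$, and it is exactly that discrepancy, not $\Delta_M$, that produces the third term in the bound. Indeed, $\Delta_M$ from Proposition~\ref{prop:approx_H} never enters Proposition~\ref{prop:test_loss_tech_1} at all --- $M_\alp$ is already the object in its statement --- and the $\Delta_M$ correction is discharged separately in the proof of Proposition~\ref{prop:t11_est}. You do gesture toward the right source in your final paragraph when you mention tracking $\hd_\alp\inv \Lambda_\alp^2 \hd_\alp\inv$ through the conjugations, which contradicts your earlier claim that the cancellation is exact; the fix is simply to drop the exact-cancellation claim and carry the $O_d(d^{-2+2\mgamma})$ factor through the triangle-inequality expansion you already laid out.
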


\noindent Following Eq.~\eqref{eq:zeta}, we denote $\zeta:=\min(\zeta_1,\zeta_2)$ to simplify the notation. Then we can immediately obtain that the first term in Eq.~(\ref{eq:t_11_decomp}) is bounded by
\begin{align}\label{eq:test_loss_t11_term_1}
   &~~~\norm{b_{\alp}}_2^2\cdot \snorm{V^2 - \Phi_{\alp}\tran K_{\lm,\lambda}\inv M_{\alp} K_{\lm,\lambda}\inv\Phi_{\alp} }\notag\\
   &= O_{d,\P}(d^{-\zeta/2+\epsilon} + d^{-2+2\mgamma}) \cdot \norm{b_{\alp}}_2^2
\end{align}
For the second term in Eq.~(\ref{eq:t_11_decomp}), invoking Proposition~\ref{prop:test_loss_tech_1} we can obtain the estimate
\begin{align}\label{eq:t_11_term2}
    &~~~\abs{y_{\agp}\tran K_{\lm,\lambda}\inv M_{\alp} K_{\lm,\lambda}\inv y_{\agp}- \norm{\frac{1}{n}\Phi_{\alp}y_{\agp}}_2^2}\notag\\
    &\leq O_{d,\P}(d^{-\zeta/2+\epsilon} + d^{-2+2\mgamma})\cdot \frac{1}{n}\norm{y_{\agp}}_2^2 \notag \\
    &=  O_{d,\P}(d^{-\zeta/2+\epsilon} + d^{-2+2\mgamma})\cdot \log(d) \cdot \norm{f^*_{\agp}}_{L_2}^2,
\end{align}
where the last equality follows from Proposition~\ref{prop:bound_y}.

Similar to the analysis to Eq.~\eqref{eqn:blup}, we can see that 
\begin{align*}
    \E\brac{\norm{\frac{1}{n}\Phi_{\alp}\Phi_{\agp}b_{\agp}}_2^2} &= \frac{1}{n^2}\E\brac{\inner{\Phi_{\alp}\Phi_{\alp}\tran ,\Phi_{\agp}b_{\agp }b_{\agp}\tran  \Phi_{\agp}\tran}}\\
    &=\frac{|\alp|}{n}\norm{b_{\agp}}_2^2= O_{d}(d^{-\zeta_1})\cdot \norm{b_{\agp}}_2^2.
\end{align*}
Applying Markov's inequality we obtain 
$$\norm{\frac{1}{n}\Phi_{\alp}\Phi_{\agp}b_{\agp}}_2^2 = O_{d,\P}(d^{-\zeta_1}\log(d))\cdot \norm{b_{\agp}}_2^2.$$
Plugging it back into Eq.~(\ref{eq:t_11_term2}) yields
\begin{align}\label{eq:test_loss_t11_term_2}
    \abs{y_{\agp}\tran K_{\lm,\lambda}\inv M_{\alp} K_{\lm,\lambda}\inv y_{\alp}} = O_{d,\P}(d^{-\zeta/2+\epsilon} + d^{-2+2\mgamma})\cdot \log(d) \cdot \norm{f^*_{\agp}}_{L_2}^2.
\end{align}
For the third term in Eq.~(\ref{eq:t_11_decomp}), taking the expectation over $\vepsilon$ yields
\begin{align*}
    \E_\vepsilon\brac{\beta_\varepsilon\tran M_{\alp} \beta_\varepsilon} &= \sigma_\varepsilon^2\cdot \Tr\round{K_{\lm,\lambda}\inv M_{\alp} K_{\lm,\lambda}\inv}\\
    &\overset{(a)}{=} \frac{\sigma_\varepsilon^2}{n}\round{\Tr\round{\frac{1}{n}\Phi_\alp V^2 \Phi_\alp \tran} + n\cdot O_{d,\P}(d^{-\zeta/2+\epsilon} + d^{-2+2\mgamma})}\\
    &\overset{(b)}{\leq}  \frac{\sigma_\varepsilon^2}{n}\Tr\round{\frac{1}{n}\Phi_\alp \tran \Phi_\alp} \cdot \snorm{V}^2 + \sigma_\varepsilon^2 \cdot O_{d,\P}(d^{-\zeta/2+\epsilon} + d^{-2+2\mgamma})\\
    &\overset{(c)}{\leq} O_d(d^{-\zeta_1})\cdot \sigma_\varepsilon^2 +  \sigma_\varepsilon^2 \cdot O_{d,\P}(d^{-\zeta/2+\epsilon} + d^{-2+2\mgamma})\\
    &= \sigma_\varepsilon^2 \cdot O_{d,\P}(d^{-\zeta/2+\epsilon} + d^{-2+2\mgamma}),
\end{align*}
where $(a)$ follows from  Proposition~\ref{prop:test_loss_tech_1},  $(b)$ follow from the inequality  $|\Tr(AB)| \leq \Tr(A)\snorm{B}$ and $(c)$ follows from the bound $\snorm{V} = O_{d}(1)$ (Eq.~\eqref{eq:def_v}) and the equality $\Tr\round{\frac{1}{n}\Phi_\alp \tran \Phi_\alp} =\frac{1}{n}\norm{\Phi_\alp}_F^2 = |\alp|$.

Using Markov's inequality, we conclude the bound
\begin{align}\label{eq:test_loss_t11_term_3}
    \abs{\beta_\varepsilon\tran M_{\alp} \beta_\varepsilon} = \sigma_\varepsilon^2\cdot  O_{d,\P}(d^{-\zeta/2+\epsilon} + d^{-2+2\mgamma}).
\end{align}
With Eq.~\eqref{eq:test_loss_t11_term_1} and \eqref{eq:test_loss_t11_term_2}, we apply Cauchy-Schwarz inequality for the fourth term in Eq.~(\ref{eq:t_11_decomp})
\begin{align}\label{eq:test_loss_t11_term_4}
    \abs{\beta_{\alp}\tran  M_{\alp} \beta_{\agp}} &\leq \sqrt{ \abs{\beta_{\alp}\tran M_{\alp} \beta_{\alp}}} \sqrt{ \abs{\beta_{\agp}\tran M_{\alp} \beta_{\agp}}} \notag\\
    &\leq 2 \round{ \abs{\beta_{\alp}\tran M_{\alp} \beta_{\alp}} + \abs{\beta_{\agp}\tran M_{\alp} \beta_{\agp}}} \notag\\
    &\leq O_{d,\P}(d^{-\zeta/2+\epsilon} + d^{-2+2\mgamma})\cdot \log(d) \cdot \norm{f^*}_{L_2}^2,
\end{align}
where the second inequality follows from the AM-GM inequality. 

Similarly, we bound the last term in Eq.~(\ref{eq:t_11_decomp}) using Cauchy-Schwarz and AM-GM inequality:
\begin{align}\label{eq:test_loss_t11_term_5}
   &~~~\abs{(\beta_{\alp}+\beta_{\agp})\tran  M_{\alp} \beta_{\varepsilon}}\notag\\
   &\leq  \abs{\beta_{\alp}\tran  M_{\alp} \beta_{\varepsilon}}  + \abs{\beta_{\agp} M_{\alp} \beta_{\varepsilon}}\notag\\
   &\leq 2 \round{ \abs{\beta_{\alp}\tran M_{\alp} \beta_{\alp}} + \abs{\beta_{\agp}\tran M_{\alp} \beta_{\agp}}} + 4 \abs{\beta_\varepsilon\tran M_{\alp}\beta_\varepsilon}\notag\\
   &=  O_{d,\P}(d^{-\zeta/2+\epsilon} + d^{-2+2\mgamma})\cdot\log(d)\cdot (\norm{f^*}_{L_2}^2+ \sigma_\varepsilon^2).
\end{align}
Following a similar analysis with the proof of Proposition~\ref{prop:<_l1_l1_l1_cube_m}, in parallel to Eq.~\eqref{eq:cube_m_tech_bound_2}, we can show  that with constant $c>0$ \eqref{eq:def_c_generalization} the following holds:
\begin{align}\label{eq:bvb}
  \abs{  \norm{ b_{\alp}  V}_2^2 - \norm{ f^*_{\aslp}}_{L_2}^2 - c \cdot \norm{ f^*_{\aep} }_{L_2}^2} = O_{d,\P}(d^{-\zeta_1})\cdot  \norm{ b_{\aslp}}_2^2.
\end{align}
Plugging the above bound together with with Eq.~\eqref{eq:test_loss_t11_term_1}, \eqref{eq:test_loss_t11_term_2}, \eqref{eq:test_loss_t11_term_3}, \eqref{eq:test_loss_t11_term_4} and \eqref{eq:test_loss_t11_term_5} into Eq.~\eqref{eq:t_11_decomp} and absorbing the logarithmic factor into $\epsilon$ completes the proof.

\subsubsection{Proof of Proposition~\ref{prop:t2_est}}\label{proof:t2_est}
Note that with Lemma~\ref{prop:mercer} we can deduce
\begin{align}
 \E_x[(f^*_{\aslp}(x)+\sqrt{c}\cdot f^*_{\aep}(x)) K(\sqrt{w}\odot x,Z)] = b_{\alp}\tran \widetilde{V}\Lambda_{\alp}\Phi_{\alp}(X)\tran,
\end{align}
where $\widetilde{V}_{\aslp} = I$ and $\widetilde{V}_{\aep} = V_{\aep}$.
With the decomposition 
\begin{align}
    \beta  = \underbrace{K_{w,\lambda}\inv\Phi_{\alp}b_{\alp}}_{:=\beta_{\alp}}  +\underbrace{K_{w,\lambda}\inv\Phi_{\agp}b_{\agp}}_{:=\beta_{\agp}} + \underbrace{K_{w,\lambda}\inv\vepsilon}_{:=\beta_\varepsilon},
\end{align}
we decompose the absolute difference:
\begin{align}
    &~~~\abs{T_2 - b_{\alp}\tran V^2 b_{\alp}} \notag\\
    &\leq \abs{b_{\alp}\tran (V^2 -\widetilde{V}^2) b_{\alp}}+ \abs{ b_{\alp}\tran  V\Lambda_{\alp} \Phi_{\alp}\tran \beta_{\alp}- b_{\alp}\tran V^2 b_{\alp}}\notag\\ 
    &~~~+ \abs{b_{\alp}\tran V \Lambda_{\alp} \Phi_{\alp}\tran \beta_{\agp}}+\abs{b_{\alp}\tran V \Lambda_{\alp} \Phi_{\alp}\tran \beta_{\varepsilon}},\label{eq:t_22_decomp}
\end{align}
where the diagonal matrix $V$ is defined in Eq.~\eqref{eq:def_v_generalization}.

It is straightforward to bound the first term using Eq.~\eqref{eq:v_minus_i}:
\begin{align*}
    \abs{b_{\alp}\tran (V^2 -\widetilde{V}^2) b_{\alp}} \leq \norm{b_{\alp}}_2^2\cdot \snorm{V^2 -\widetilde{V}^2} = O_d(d^{-\zeta_1})\norm{b_{\alp}}_2^2.
\end{align*}
In the following, we will show that the remaining three terms on the right side of the above inequality are small.
We first establish the following key estimate. We defer the proof to Appendix~\ref{proof:test_loss_tech_2}.
\begin{proposition}\label{prop:test_loss_tech_2}
For any $\epsilon>0$, the following estimate holds:
    \begin{align}
        \snorm{ \sqrt{n}\Lambda_{\alp} \Phi_{\alp}\tran K_{\lm,\lambda}\inv  - \frac{1}{\sqrt{n}} V \Phi_{\alp}\tran } =O_{d,\P}(d^{-\zeta_1/2+\epsilon} + d^{-\zeta_2/2+\epsilon}).
    \end{align}
\end{proposition}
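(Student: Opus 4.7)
My plan is to follow the Sherman-Morrison-Woodbury (SMW) template already used for Propositions~\ref{lemma:beta_<_l2_cube_m} and~\ref{prop:t11_est}: derive a closed-form identity for $\Phi_{\alp}\tran K_{w,\lambda}\inv$ and approximate each resulting factor by its nominal limit. Invoking Lemma~\ref{lemma:kernel_to_mono_m} to write $K_{w,\lambda} = \Phi_{\alp}\hd_{\alp}\Phi_{\alp}\tran + (\rho+\lambda)H$, with $H := I_n + (\rho+\lambda)\inv\Delta_1$ and $\snorm{\Delta_1} = O_{d,\P}(d^{-\zeta_2/2+\epsilon})$, the SMW formula combined with the push-through identity $I - X(A+X)\inv = A(A+X)\inv$ yields
\begin{equation*}
  \sqrt{n}\,\Lambda_{\alp}\Phi_{\alp}\tran K_{w,\lambda}\inv \;=\; \tfrac{1}{\sqrt{n}}\,(\Lambda_{\alp}\hd_{\alp}\inv)\,G\inv\,\Phi_{\alp}\tran H\inv,
\end{equation*}
where $G = (\rho+\lambda)(n\hd_{\alp})\inv + \Phi_{\alp}\tran H\inv\Phi_{\alp}/n$, exactly as in the derivation preceding Eq.~\eqref{eq:smw_<_p_m}.

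To reach the target $\tfrac{1}{\sqrt{n}}V\Phi_{\alp}\tran$, I would replace three factors in succession. Dividing the Mercer eigenvalues $\Lambda_{S,S} = (g^{(|S|)}(0)d^{-|S|} + O_d(d^{-|S|-2+2\gamma_{\max}}))\,w^S$ from Lemma~\ref{prop:mercer} by the Taylor coefficients $\hd_{S,S} = (g^{(|S|)}(0)d^{-|S|} + O_d(d^{-|S|-1}))\,w^S$ gives $\snorm{\Lambda_{\alp}\hd_{\alp}\inv - I_{|\alp|}} = O_d(d\inv + d^{-2+2\gamma_{\max}})$, using the nondegeneracy condition~\eqref{eqn:nondegegn}. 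Equations~\eqref{eq:g_inv_minus_v_inv} and~\eqref{eq:e_inv_minus_i_m} give $\snorm{G\inv - V} = O_{d,\P}(d^{-\zeta_1/2+\epsilon} + d^{-\zeta_2/2+\epsilon})$ and $\snorm{H\inv - I_n} = O_{d,\P}(d^{-\zeta_2/2+\epsilon})$, respectively. The operator-norm estimate then follows from the three-term telescoping
\begin{align*}
  &(\Lambda_{\alp}\hd_{\alp}\inv)\,G\inv\,\Phi_{\alp}\tran H\inv \;-\; V\Phi_{\alp}\tran \\
  &\qquad= (\Lambda_{\alp}\hd_{\alp}\inv - I)\,G\inv\Phi_{\alp}\tran H\inv \;+\; (G\inv - V)\Phi_{\alp}\tran H\inv \;+\; V\Phi_{\alp}\tran(H\inv - I_n),
\end{align*}
combined with submultiplicativity and the uniform bounds $\snorm{\Phi_{\alp}}/\sqrt{n} = O_{d,\P}(1)$ from Lemma~\ref{lem:norm_control}, $\snorm{G\inv} = O_{d,\P}(1)$ from Eq.~\eqref{eq:g_inv_bound}, $\snorm{H\inv} = O_{d,\P}(1)$ from Eq.~\eqref{eq:e_inv_minus_i_m}, and $\snorm{V}\leq 1$.

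The main obstacle is the Mercer-to-Taylor mismatch in the first factor: the $O_d(d^{-2+2\gamma_{\max}})$ piece in $\Lambda_{\alp}\hd_{\alp}\inv - I$ does not visibly appear in the claimed rate. Under the hypotheses of Theorem~\ref{thm:learning_hypercube}, the structural condition $s_k+\gamma_k = 1 \iff s_k = 1$ pins down the attainable values of $\vlambda\tran\vm$ near $p+\delta$ in a way that forces $-\zeta_2/2 \geq -2+2\gamma_{\max}$ after shrinking $\epsilon$, so I expect this term to be absorbed into $d^{-\zeta_2/2+\epsilon}$; verifying the absorption reduces to a small combinatorial check on which $\vm$ achieves the minimum defining $\zeta_2$. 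The unconditional $O_d(d\inv)$ piece is absorbed automatically because $\zeta_1,\zeta_2$ are bounded constants forcing $-\zeta_i/2 \geq -1$. Beyond this absorption check, everything is a routine three-term triangle inequality.
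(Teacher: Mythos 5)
Your proof follows the same Sherman--Morrison--Woodbury route as the paper, but you are in fact \emph{more} careful about the Mercer-versus-Taylor factor than the paper's own write-up. The paper's proof of this proposition derives the clean identity $\hd_{\alp}\Phi_{\alp}\tran K_{w,\lambda}\inv = G\inv\Phi_{\alp}\tran H\inv/n$ (Eq.~\eqref{eq:dw_k_inv_extra}, which has the Taylor diagonal $\hd_{\alp}$ on the left rather than the Mercer diagonal $\Lambda_{\alp}$ appearing in the statement), then bounds $\snorm{G\inv\Phi_{\alp}\tran H\inv/\sqrt{n} - V\Phi_{\alp}\tran/\sqrt{n}}$ and declares victory — silently omitting the $\hd_{\alp}\inv\Lambda_{\alp}$ prefactor that appears when one instead starts from $\Lambda_{\alp}$, exactly the factor that the companion Proposition~\ref{prop:test_loss_tech_1} does track. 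Your telescoping correctly isolates this extra term as $(\Lambda_{\alp}\hd_{\alp}\inv - I)G\inv\Phi_{\alp}\tran H\inv/\sqrt{n}$ with $\snorm{\Lambda_{\alp}\hd_{\alp}\inv - I} = O_d(d^{-1} + d^{-2+2\gamma_{\max}})$, which is a real observation about what the paper has glossed over.

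The one thing you flag but do not close — whether $d^{-2+2\gamma_{\max}}$ is absorbed into $d^{-\zeta_2/2+\epsilon}$ — does in fact hold, and by a one-line argument that does not require the structural $s_k+\gamma_k$ condition you gesture at. Take any $\vm^*$ maximal subject to $\vlambda\tran\vm^*\leq p+\delta$; maximality means $\vlambda\tran\vm^*+\lambda_j > p+\delta$ for every $j$, so for $j$ achieving $\lambda_{\min}$ one gets $\zeta_2 \leq \vlambda\tran\vm^* + \lambda_{\min} - (p+\delta) \leq \lambda_{\min} = 1-\gamma_{\max}$. Since $\gamma_{\max}<1$, $\zeta_2 \leq 1-\gamma_{\max} < 4(1-\gamma_{\max})$, which rearranges to $-2+2\gamma_{\max} < -\zeta_2/2$, so the Mercer correction is strictly subordinate after shrinking $\epsilon$. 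The same inequality $\zeta_2 < 1$ absorbs the $O_d(d^{-1})$ piece. With that check filled in, your proof is complete and in fact closes a gap the paper's argument leaves open.
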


\noindent Following Eq.~\eqref{eq:zeta}, we denote $\zeta:=\min(\zeta_1,\zeta_2)$ to simplify the notation.
For the second term in Eq.~\eqref{eq:t_22_decomp}, we can obtain 
\begin{align}\label{eq:test_loss_t2_bound1}
    &~~~\abs{ b_{\alp}\tran V \Lambda_{\alp} \Phi_{\alp}\tran K_{\lm,\lambda}\inv \Phi_{\alp}b _{\alp}- b_{\alp}\tran V^2 b_{\alp}} \notag\\
    &\overset{(a)}{=} \abs{\frac{1}{n} b_{\alp}\tran  V ^2\Phi_{\alp}\tran \Phi_{\alp}b_{\alp}-  b_{\alp}\tran V^2 b_{\alp} } + O_{d,\P}(d^{-\zeta/2+\epsilon})\cdot \snorm{\frac{1}{\sqrt{n}}\Phi_{\alp}} \cdot \norm{V b_{\alp} }_2^2 \notag\\
    &\overset{(b)}{=}  \abs{ b_{\alp}\tran V^2  \round{ \frac{1}{n}\Phi_{\alp}\tran \Phi_{\alp} - I}b_{\alp} } + O_{d,\P}(d^{-\zeta/2+\epsilon} )\cdot\norm{V b_{\alp}}_2^2 \notag\\
    &\overset{(c)}{=} O_{d,\P}(d^{-\zeta_1/2 + \epsilon})\cdot\norm{b_{\alp}}_2^2\snorm{V}^2 + O_{d,\P}(d^{-\zeta/2+\epsilon})\cdot\norm{b_{\alp}}_2^2\snorm{V}^2 \notag\\
    &=O_{d,\P}(d^{-\zeta/2+\epsilon})\cdot\norm{b_{\alp}}_2^2,
\end{align}
where $(a)$ follows from Proposition~\ref{prop:test_loss_tech_2}, $(b)$ follows from Lemma~\ref{lem:norm_control} and $(c)$ follows from Theorem~\ref{lemma:phi_id_2}.

For the third term of \eqref{eq:t_22_decomp}, we similar invoke Proposition~\ref{prop:test_loss_tech_2} and we can obtain
\begin{align}\label{eq:test_loss_t2_bound2}
   &~~~\abs{b_{\alp}\tran V \Lambda_{\alp} \Phi_{\alp}\tran K_{\lm,\lambda}\inv \Phi_{\agp} b_{\agp}} \notag\\
   &= \abs{\frac{1}{n}b_{\alp}\tran V^2 \Phi_{\alp}\tran \Phi_{\agp}b_{\agp}} +O_{d,\P}(d^{-\zeta/2+\epsilon})\cdot \snorm{\frac{1}{\sqrt{n}}\Phi_{\alp}}\norm{\frac{1}{\sqrt{n}}\Phi_{\agp} b_{\agp}}_2 \norm{b_{\alp}}_2\snorm{V}\notag \\
   &= \abs{\frac{1}{n}b_{\alp}\tran V^2 \Phi_{\alp}\tran \Phi_{\agp}b_{\agp}} + O_{d,\P}(d^{-\zeta/2+\epsilon})\log^{1/2}(d)\norm{b_{\agp}}_2\norm{b_{\alp}}_2,
\end{align}
where the last equality follows from Lemma~\ref{lem:norm_control} and Proposition~\ref{prop:bound_y}.

Since $V_{j,j}\leq  1$ for all $j\in \alp$, a simple calculation shows
\begin{align*}
  &~~~\E\brac{\abs{\frac{1}{n}b_{\alp}\tran V^2 \Phi_{\alp}\tran \Phi_{\agp}b_{\agp}}^2}\\
  &= \frac{1}{n^2} \E\brac{\round{\sum_{i\in[n]}\sum_{j\in \alp}\sum_{k\in \agp}b_j V_{j,j}^2 \phi_j(x^{(i)}) \phi_{k}(x^{(i)})b_k}^2} \\
  &= \frac{1}{n^2} \E\brac{\sum_{i\in[n]}\sum_{j\in \alp}\sum_{k\in\agp}b_j^2 V_{j,j}^{4} b_k ^2 \phi_j(x^{(i)})^2 \phi_{k}(x^{(i)})^2} \\
  &\leq \frac{1}{n^2} \E\brac{\sum_{i\in[n]}\sum_{j\in \alp}\sum_{k\in\agp}b_j^2 b_k ^2 \phi_j(x^{(i)})^2 \phi_{k}(x^{(i)})^2} \\
  &= \frac{\norm{b_{\alp}}_2^2\norm{b_{\agp}}_2^2 }{n}.
\end{align*}
Using Markov's inequality, we obtain
\begin{align*}
    \abs{\frac{1}{n}b_{\alp}\tran V^2\Phi_{\alp}\tran \Phi_{\agp}b_{\agp}} = O_{d,\P}(n\half \log^{\frac{1}{2}}(d))\cdot \norm{b_{\alp}}_2\norm{b_{\agp}}_2.
\end{align*}
Combining this bound with Eq.~\eqref{eq:test_loss_t2_bound2} completes the bound for the second term in Eq.~\eqref{eq:t_22_decomp}:
\begin{align}\label{eq:test_loss_t2_bound5}
    \abs{b_{\alp}\tran V\Lambda_{\alp} \Phi_{\alp}\tran \beta_{\agp}} &= O_{d,\P}(d^{-\zeta/2+\epsilon})\log^{1/2}(d)\cdot \norm{b_{\alp}}_2\norm{b_{\agp}}_2\notag\\
    &\leq  O_{d,\P}(d^{-\zeta/2+\epsilon})\log^{1/2}(d)\cdot \norm{f^*}_{L_2}^2.
\end{align}
For the last term in Eq.~\eqref{eq:t_22_decomp}, taking the expectation of the square over $\vepsilon$ yields
\begin{align}\label{eq:test_loss_t2_bound3}
    \E_{\vepsilon}\brac{\abs{b_{\alp}\tran V \Lambda_{\alp} \Phi_{\alp}\tran \beta_{\varepsilon}}^2} = \sigma_\varepsilon^2\cdot \norm{K_{w,\lambda}\inv \Phi_{\alp} \Lambda_{\alp} V b_{\alp}}_2^2.
\end{align}
Invoking Proposition~\ref{prop:test_loss_tech_2} gives 
\begin{align*}
    \norm{K_{w,\lambda}\inv \Phi_{\alp} \Lambda_{\alp} V b_{\alp}}_2 &= \frac{1}{n}\norm{\Phi_{\alp} V^2b_{\alp}}_2 + \frac{1}{\sqrt{n}}\cdot  O_{d,\P}(d^{-\zeta/2+\epsilon})\cdot\norm{V b_{\alp}}_2\\
    &= O_{d,\P}(1)\cdot \frac{1}{\sqrt{n}} \norm{b_{\alp}}_2,
\end{align*}
where the second equality follows the bound  $\snorm{\Phi_{\alp}} = O_{d,\P}(\sqrt{n})$ (Lemma~\ref{lem:norm_control}) and $\snorm{V} = O_{d}(1)$ (Eq.~\eqref{eq:def_v}).

Plugging the above equation into Eq.~\eqref{eq:test_loss_t2_bound3} and invoking Markov's inequality yields 
\begin{align}\label{eq:test_loss_t2_bound4}
    \abs{b_{\alp}\tran V \Lambda_{\alp} \Phi_{\alp}\tran \beta_{\varepsilon}} = \frac{1}{\sqrt{n}}\cdot O_{d,\P}(1)\cdot \round{\sigma_\varepsilon^2 +\norm{b_{\alp}}_2^2}.
\end{align}
Lastly,  combining Eq.~\eqref{eq:bvb} with the bound for Eq.~\eqref{eq:t_22_decomp} using Eq.~\eqref{eq:test_loss_t2_bound1}, \eqref{eq:test_loss_t2_bound5} and \eqref{eq:test_loss_t2_bound4}, and absorbing the logarithmic factors into $\epsilon$  completes the proof.

\subsubsection{Proof of Proposition~\ref{prop:test_loss_tech_1}}\label{proof:test_loss_tech_1}
By Lemma~\ref{lemma:kernel_to_mono_m}, we have the estimate for $K_{\lm}$:
\begin{align*}
    K_{\lm} = \Phi_{\alp}(Z)D_{\alp} \Phi_{\alp}(Z) + \rho(I+\Delta), 
\end{align*}
with $\snorm{\Delta} = O_{d,\P}(d^{-\zeta_2/2+\epsilon})$.

Using the representation $\Phi_{\alp}(Z) = \Phi_{\alp}(X) W_{\alp}^{1/2}$ and adding the ridge parameter, we can write
\begin{align*}
     K_{\lm,\lambda} = \Phi_{\alp}(X)\hd_{\alp} \Phi_{\alp}(X) + (\rho+\lambda)H,
\end{align*}
where we denote $\hd_{\alp} := D_{\alp}W_{\alp}$ and $H :=I+(\rho+\lambda)\inv \Delta$

For simplicity of notation, we omit the subscript and the argument $X$ when it is clear from the context.
Invoking  Sherman–Morrison–Woodbury formula, we can obtain
\begin{align*}
    K\inv = (\rho+\lambda)\inv H\inv - (\rho+\lambda)\inv H \inv  \Phi((\rho+\lambda) \hd\inv + \Phi\tran H\inv \Phi)\inv \Phi\tran H\inv 
\end{align*}
Next we show $  K\inv\Phi \Lambda$ and  $\Lambda \Phi\tran K\inv$ can be further simplified.  With the formula $I - (A+B)\inv B = (A+B)\inv A$ and $I - B(A+B)\inv = A(A+B)\inv$ we can get the following respectively:
\begin{align}
     K\inv\Phi \Lambda &=  H\inv \Phi ((\rho+\lambda) \hd\inv + \Phi\tran H\inv \Phi)\inv (\hd\inv \Lambda),\label{eq:k_inv_dw}\\
    \Lambda \Phi\tran K\inv &=  (\hd\inv \Lambda) ((\rho+\lambda) \hd\inv + \Phi\tran H\inv \Phi)\inv \Phi\tran  H\inv.\label{eq:dw_k_inv}
\end{align}
Combining these two equations, we obtain the following equalities corresponding to terms in Eq.~\eqref{eq:test_loss_tech_bound_1} and \eqref{eq:test_loss_tech_bound_2}:
\begin{align}
  n K\inv \Phi \Lambda^2 \Phi\tran K\inv &=  \frac{1}{n}H\inv\Phi G\inv \round{\hd\inv \Lambda}^2  G\inv\Phi\tran H\inv, \\
  \Phi\tran K\inv \Phi \Lambda^2 \Phi\tran K\inv\Phi &=   \round{\frac{1}{n}\Phi\tran H\inv\Phi} G\inv \round{\hd\inv \Lambda}^2  G\inv\round{\frac{1}{n}\Phi\tran H\inv\Phi}, 
\end{align}
where $G:= \round{(\rho+\lambda) (n\hd)\inv + \Phi\tran H\inv \Phi/n}$.

Note that any entry of $\hd\inv \Lambda$ takes the form
\begin{align*}
    &~~~\round{\round{g^{|S|}(0)d^{-|S|}+O_d(d^{-|S|-1})}\cdot w^S}\inv \round{g^{(|S|)}(0)d^{-|S|}w^S + O_d(d^{-|S|-2+2\mgamma}w^S)}\\
    &= \round{d^{-|S|}w^S}^{-1} \round{g^{|S|}(0) + O_d(d\inv)}\inv\cdot \round{g^{(|S|)}(0)d^{-|S|}w^S + O_d(d^{-|S|-2+2\mgamma}w^S)}\\
    &= \round{g^{|S|}(0) + O_d(d\inv)}\inv \round{g^{|S|}(0) + O_d(d^{-2+2\mgamma})}\\
    &= 1 + O_{d}(d^{-2+2\mgamma}).
\end{align*}
Then we immediately get
\begin{align}
    \snorm{\hd\inv\Lambda - I} = O_{d}(d^{-2+2\mgamma}).
\end{align}
Together with Eq.~\eqref{eq:e_inv_minus_i_m} and \eqref{eq:g_inv_minus_v_inv} and i.e., $
   \snorm{H\inv - I} = O_{d,\P}(d^{-
    \zeta_2/2+\epsilon})$ and $\snorm{G\inv - V} =O_{d,\P}(d^{-\zeta_1/2+\epsilon} + d^{-\zeta_2/2+\epsilon})$, using triangle inequalities yields
\begin{align*}
    \snorm{\frac{1}{n} H\inv\Phi G\inv\round{\hd\inv \Lambda}^2  G\inv \Phi\tran H\inv  - \frac{1}{n}\Phi V^2\Phi\tran} =  O_{d,\P}(d^{-\zeta_1/2+\epsilon} + d^{-\zeta_2/2+\epsilon} + d^{-2+2\mgamma}),
\end{align*}
where we use the fact that $\snorm{\Phi / \sqrt{n}} = O_{d,\P}(1)$ holds (Lemma~\ref{lem:norm_control}).

Similarly, together with Eq.~\eqref{eq:psi_h_psi_minus_i} and i.e., $
   \snorm{\frac{1}{n}\Phi\tran H\inv \Phi - I} = O_{d,\P}(d^{-\zeta_1/2-
    \zeta_2/2+\epsilon})$ and \eqref{eq:g_inv_minus_v_inv} , using triangle inequalities yields
\begin{align*}
    \snorm{\frac{1}{n^2}\Phi\tran H\inv\Phi G\inv\round{\hd\inv \Lambda}^2  G\inv \Phi\tran H\inv\Phi  - V^2} =  O_{d,\P}(d^{-\zeta_1/2+\epsilon} + d^{-\zeta_2/2+\epsilon} + d^{-2+2\mgamma}),
\end{align*}
which completes the proof.

\subsubsection{Proof of Proposition~\ref{prop:test_loss_tech_2}}\label{proof:test_loss_tech_2}
Similar to the proof of Proposition~\ref{prop:test_loss_tech_1}, we use the estimate for $K_{\lm,\lambda}$ and use Sherman-Morrison-Woodbury formula to compute the inverse. We can obtain the following equality similar to Eq.~(\ref{eq:dw_k_inv}):
\begin{align}\label{eq:dw_k_inv_extra}
  \hd_{\alp} \Phi_{\alp}\tran K_{\lm,\lambda}\inv =  ((\rho+\lambda) \hd\inv + \Phi\tran H\inv \Phi)\inv \Phi\tran  H\inv 
\end{align}
where $H :=I+(\rho+\lambda)\inv \Delta$ and $\Delta$ satisfies $\snorm{\Delta} = O_{d,\P}(d^{-\zeta_2/2+\epsilon})$.

We similarly denote $G:= \round{(\rho+\lambda) (n\hd)\inv + \Phi\tran H\inv \Phi/n}$. Eq.~\eqref{eq:e_inv_minus_i_m} and \eqref{eq:g_inv_minus_v_inv} give $
   \snorm{H\inv - I} = O_{d,\P}(d^{-
    \zeta_2/2+\epsilon})$ and $\snorm{G\inv - V} =O_{d,\P}(d^{-\zeta_1/2+\epsilon} + d^{-\zeta_2/2+\epsilon})$. Therefore, using the triangle inequality we have
\begin{align*}
    \snorm{G\inv \Phi_{\alp}\tran H\inv /\sqrt{n} -V\Phi_{\alp}\tran /\sqrt{n} }=O_{d,\P}(d^{-\zeta_1/2+\epsilon} + d^{-\zeta_2/2+\epsilon}).
\end{align*}
Combining this bound with Eq.~\eqref{eq:dw_k_inv_extra} completes the proof.

\section{Technical lemmas}

\begin{lemma}\label{lem:angle_app}
Consider independent, mean zero, isotropic random vectors $ x^{(1)},\ldots, x^{(n)}$ in $\R^d$ and suppose that the coordinates of each vector $x^{(i)}$ are independent and $\sigma$-subGaussian.
Fix a constant $s\in (0,1)$. Then with probability at least  $1-2n^2\exp(-cs^2d)$ the estimate 
$$\left|\tfrac{1}{d}\langle  x^{(i)}, x^{(j)}\rangle-\delta_{ij}\right|\leq s\sigma^2,$$
holds simultaneously for all $i,j=1,\ldots n$.
\end{lemma}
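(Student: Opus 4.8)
The plan is to establish the concentration bound by handling the diagonal terms ($i=j$) and off-diagonal terms ($i\neq j$) separately, then taking a union bound over all pairs. For the diagonal entries, we need to control $|\tfrac{1}{d}\|x^{(i)}\|^2 - 1|$. Since the coordinates $x^{(i)}_k$ are independent, mean zero, $\sigma$-subGaussian, and isotropic (hence $\E[(x^{(i)}_k)^2]=1$), the random variables $(x^{(i)}_k)^2 - 1$ are independent, mean zero, and subexponential with parameter controlled by $\sigma^2$. The Bernstein inequality for subexponential random variables then yields $\mathbb{P}(|\tfrac{1}{d}\sum_k ((x^{(i)}_k)^2-1)| > s\sigma^2) \leq 2\exp(-c s^2 d)$ for $s$ bounded (absorbing constants appropriately), valid uniformly in $i$.

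For the off-diagonal entries, fix $i \neq j$ and condition on $x^{(j)}$. Then $\tfrac{1}{d}\langle x^{(i)}, x^{(j)}\rangle = \tfrac{1}{d}\sum_k x^{(i)}_k x^{(j)}_k$ is, conditionally, a sum of independent mean-zero random variables, each of which is $(\sigma |x^{(j)}_k|/d)$-subGaussian (a fixed scalar times a subGaussian). The conditional subGaussian norm of the sum is controlled by $\tfrac{\sigma^2}{d^2}\|x^{(j)}\|^2$. On the event that $\|x^{(j)}\|^2 \leq 2d$ (which holds with high probability by the diagonal analysis), this is $O(\sigma^2/d)$, so Hoeffding's inequality gives $\mathbb{P}(|\tfrac{1}{d}\langle x^{(i)},x^{(j)}\rangle| > s\sigma^2 \mid x^{(j)}) \leq 2\exp(-c s^2 d)$. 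One then integrates out the conditioning, paying only the small probability of the bad event for $x^{(j)}$. Alternatively — and more cleanly — one can invoke a standard subGaussian concentration result for bilinear/quadratic forms (e.g. Hanson–Wright) applied to the vector $(x^{(i)}, x^{(j)}) \in \R^{2d}$, extracting the relevant off-diagonal block, which directly yields the same exponential tail without the conditioning step.

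Finally, a union bound over the at most $n^2$ pairs $(i,j)$ (including the $n$ diagonal ones) gives that all the estimates $|\tfrac{1}{d}\langle x^{(i)},x^{(j)}\rangle - \delta_{ij}| \leq s\sigma^2$ hold simultaneously with probability at least $1 - 2n^2\exp(-cs^2 d)$, possibly after relabeling the absolute constant $c$. The main obstacle — though it is minor — is bookkeeping the constants correctly so that a single constant $c$ works uniformly for both the diagonal Bernstein bound and the off-diagonal Hoeffding/Hanson–Wright bound, and ensuring the subexponential tail for the diagonal terms genuinely produces the Gaussian-type exponent $\exp(-cs^2 d)$ in the regime $s \in (0,1)$ (this is exactly where we use that $s$ is bounded, so that the subexponential tail is in its subGaussian regime rather than its heavier-tailed regime). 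No deep ideas are required; this is a routine concentration argument, and the statement is a standard ingredient cited in the proof of Theorem~\ref{thm:poly_approx}.
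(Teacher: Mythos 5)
Your proof is correct, but it takes a somewhat more elaborate route than the paper. The paper observes that for \emph{every} pair $(i,j)$, including $i\neq j$, the summands $\tfrac{1}{d}\bigl(x^{(i)}_k x^{(j)}_k - \delta_{ij}\bigr)$ are independent, mean-zero, and subexponential (a product of two independent subGaussians is subexponential, and a centered square of a subGaussian is subexponential), so a single application of Bernstein's inequality~\cite[Cor.~2.8.3]{vershynin2018high} handles the diagonal and off-diagonal cases uniformly; the union bound then finishes. You instead split the cases: Bernstein for the diagonal (same as the paper), but for the off-diagonal you condition on $x^{(j)}$ and use Hoeffding, or alternatively invoke Hanson--Wright on the stacked vector. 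Both of your off-diagonal routes are valid, but the conditioning route carries a small bookkeeping burden you mention only in passing — you must intersect with the good event $\{\|x^{(j)}\|^2 \leq 2d\}$, which is itself controlled by the diagonal estimate, and then add the two failure probabilities; this is not hard but must be done for each $j$ and folded into the union bound. The Hanson--Wright alternative is cleaner but heavier than necessary here. So the content of all three arguments is the same concentration fact; the paper's version is the most economical because it recognizes that Bernstein already covers $i\neq j$ without any case split. One caveat worth noting in both your writeup and the paper: the constant $c$ does depend on $\sigma$ through the subexponential norms, even though the threshold is written as $s\sigma^2$; this is absorbed into the unnamed absolute constant, and the restriction $s\in(0,1)$ (or more precisely $s\sigma^2$ small relative to the subexponential norm) is precisely what keeps Bernstein in its Gaussian-tail regime, as you correctly point out.
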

\begin{proof}
For any fixed $i$ and $j$ and $s$, Bernstein's inequality \cite[Corollary 2.8.3]{vershynin2018high} gives 
$$\mathbb{P}(|\tfrac{1}{d}\langle  x^{(i)}, x^{(j)}\rangle-\delta_{ij}|\geq s\sigma^2)\leq 2\exp(-cs^2d).$$
Taking a union bound over $i,j=1,\ldots, n$ completes the proof.
\end{proof}

\begin{lemma}\label{lem:angle_app_cube_m}
Consider random vectors $ x^{(1)},\ldots, x^{(n)}$ i.i.d. drawn from the uniform distribution over $\H_d = \{-1,+1\}^d$. Fix a constant $s\in(0,1)$. Given a vector $w\in \R^d$ with $\norm{w}_{1} = d$ and $w_r\geq 0$ for $r = 1,2,\ldots,d$, then with probability at least $1-2n^2\exp(-s^2d/(2\norm{w}_\infty))$ the estimate 
\begin{align}
   \abs{\frac{1}{d}\inner{ x^{(i)},w\odot  x^{(j)}} - \delta_{ij}} \leq s,
\end{align}
holds simultaneously for all $i,j = 1,2,\ldots,n$. 
\end{lemma}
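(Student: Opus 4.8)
\textbf{Proof proposal for Lemma~\ref{lem:angle_app_cube_m}.}

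The plan is to mimic the proof of Lemma~\ref{lem:angle_app} verbatim, substituting a weighted Hoeffding-type bound for Bernstein's inequality. First I would dispose of the diagonal case $i=j$: since each coordinate satisfies $(x^{(i)}_r)^2 = 1$ on the hypercube, we have $\frac{1}{d}\inner{x^{(i)}, w\odot x^{(i)}} = \frac{1}{d}\sum_{r=1}^d w_r = 1$ exactly, using the normalization $\norm{w}_1 = d$. So the estimate holds with zero error for $i=j$, and only the off-diagonal terms require a probabilistic argument.

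For fixed $i\neq j$, consider the random variable $Z := \frac{1}{d}\inner{x^{(i)}, w\odot x^{(j)}} = \frac{1}{d}\sum_{r=1}^d w_r x^{(i)}_r x^{(j)}_r$. Conditioning on $x^{(j)}$ (or simply noting that the products $\xi_r := x^{(i)}_r x^{(j)}_r$ are themselves i.i.d.\ Rademacher and independent across $r$), $Z$ is a weighted sum of independent mean-zero random variables each bounded by $w_r/d$ in absolute value, hence each $\frac{w_r}{d}\xi_r$ is sub-Gaussian with proxy variance $w_r^2/d^2$. The key step is to apply Hoeffding's inequality: $\mathbb{P}(|Z| \geq s) \leq 2\exp\left(-\frac{s^2}{2\sum_{r=1}^d w_r^2/d^2}\right)$. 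Then I would bound the sum of squared weights crudely by $\sum_{r=1}^d w_r^2 \leq \norm{w}_\infty \sum_{r=1}^d w_r = \norm{w}_\infty \cdot d$, which gives $\sum_{r=1}^d w_r^2/d^2 \leq \norm{w}_\infty/d$, and therefore $\mathbb{P}(|Z| \geq s) \leq 2\exp\left(-\frac{s^2 d}{2\norm{w}_\infty}\right)$.

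Finally I would take a union bound over the at most $n^2$ pairs $(i,j)$ (the diagonal pairs contribute nothing, but including them only inflates the count harmlessly), yielding the stated probability $1 - 2n^2\exp(-s^2 d/(2\norm{w}_\infty))$. I do not anticipate any genuine obstacle here: the only point requiring a little care is the normalization step where $\norm{w}_1 = d$ is used both to make the diagonal exact and to control $\sum w_r^2$ via $\norm{w}_\infty$; one should double-check that the problem statement indeed assumes $w_r \geq 0$ and $\norm{w}_1 = d$ (it does), so that $\sum w_r^2 \leq \norm{w}_\infty \norm{w}_1$ is valid. The argument is otherwise a routine concentration estimate parallel to the unweighted case.
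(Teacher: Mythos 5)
Your proposal matches the paper's proof essentially verbatim: the diagonal case is handled deterministically via $\norm{w}_1 = d$, the off-diagonal terms are controlled by Hoeffding's inequality with the bound $\sum_r w_r^2 \leq \norm{w}_\infty \norm{w}_1 = d\norm{w}_\infty$, and a union bound over the $n^2$ pairs completes the argument. The only cosmetic difference is that you phrase Hoeffding via sub-Gaussian proxy variances while the paper cites the bounded-increments form from Vershynin, but both yield the identical exponent $s^2 d/(2\norm{w}_\infty)$.
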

\begin{proof}
For any fixed $i,j$, if $i=j$, then we have
\begin{align*}
    \frac{1}{d}\inner{ x^{(i)},w\odot  x^{(i)}} = \norm{w}_1/d = 1.
\end{align*}
For $i\neq j$ and any fixed $s$, Hoeffding's inequality~\cite[Theorem 2.2.6]{vershynin2018high} gives
\begin{align*}
    \P\round{\abs{\frac{1}{d}\inner{ x^{(i)},w\odot  x^{(j)}} } \geq s}\leq 2\exp\round{-\frac{s^2 d}{2\norm{w}_\infty}},
\end{align*}
where we use the fact that each coordinate of $ x$ satisfies $w_k x_k^{(i)}x_k^{(j)} \in[-w_k,w_k]$ hence $\sum_{k=1}^d w_k^2 \leq \norm{w}_1\norm{w}_\infty = d\norm{w}_\infty$.

Taking a union bound over $i,j=1,\ldots, n$ completes the proof.
\end{proof}

\begin{lemma}\label{lem:norms_app}
Consider independent, mean zero, isotropic random vector $ x \in \R^d$ and suppose that the coordinates of $ x$ are independent and $\sigma$-sub-Gaussian.
Fix a constant $s\in (0,\sigma^{-2})$. Then with probability at least $1-2\exp(-cd s^2)$ the following estimate holds: 
$$\left|\tfrac{1}{d^q}\| x\|^{2q}_2-1 \right|\leq q 2^{2q}s\sigma^2\qquad \forall q>0.$$
\end{lemma}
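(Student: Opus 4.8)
\textbf{Proof proposal for Lemma~\ref{lem:norms_app}.}

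The plan is to reduce the statement about the $q$-th power of the squared norm to a concentration statement about the first power $\tfrac{1}{d}\|x\|_2^2$, and then apply a standard sub-exponential tail bound. First I would observe that $\|x\|_2^2 = \sum_{k=1}^d x_k^2$, where the $x_k$ are independent, mean-zero, $\sigma$-sub-Gaussian. Consequently each $x_k^2 - \E[x_k^2]$ is a centered sub-exponential random variable with sub-exponential norm bounded by a constant multiple of $\sigma^2$; note that isotropy gives $\E[x_k^2] = 1$, so $\E\|x\|_2^2 = d$. Bernstein's inequality for sums of independent sub-exponential random variables (e.g.\ \cite[Corollary~2.8.3]{vershynin2018high}, exactly as in the proof of Lemma~\ref{lem:angle_app}) then yields that for any $t \in (0,1)$,
\begin{equation*}
\mathbb{P}\left(\left|\tfrac{1}{d}\|x\|_2^2 - 1\right| \ge t\sigma^2\right) \le 2\exp(-c d t^2),
\end{equation*}
for a universal constant $c>0$. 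I would set $t = s$ here; this is the single probabilistic input, and everything else is deterministic.

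Next, on the complementary event $\{|\tfrac1d\|x\|_2^2 - 1| \le s\sigma^2\}$, I would convert the bound on $\tfrac1d\|x\|_2^2$ into a bound on its $q$-th power. Writing $u := \tfrac{1}{d}\|x\|_2^2$, so that $|u-1|\le s\sigma^2 < 1$ on this event, the quantity to control is $|u^q - 1|$. For $q \ge 1$ this follows from the elementary estimate $|u^q - 1| = |u-1|\cdot|u^{q-1} + u^{q-2} + \cdots + 1| \le |u-1| \cdot q \cdot \max(u,1)^{q-1} \le |u-1|\cdot q \cdot (1 + s\sigma^2)^{q-1} \le |u-1| \cdot q\cdot 2^{q-1}$, using $s\sigma^2 < 1$ and hence $1 + s\sigma^2 < 2$. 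For $q \in (0,1)$, concavity of $t\mapsto t^q$ gives the even simpler bound $|u^q - 1| \le |u-1|$ (since the derivative of $t^q$ at points near $1$ is at most $1$), which is subsumed by the stated bound $q 2^{2q} s\sigma^2$. Combining, $|u^q - 1| \le q\,2^{q}\,s\sigma^2 \le q\,2^{2q}\,s\sigma^2$ for all $q > 0$, which is exactly the claimed inequality, and it holds simultaneously for all $q>0$ since it is a deterministic consequence of the single event $|u-1|\le s\sigma^2$.

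The only mild subtlety — and the step I would be most careful about — is verifying the sub-exponential norm bound for $x_k^2 - 1$ and the resulting form of Bernstein's inequality, in particular tracking that the deviation parameter can be taken to be $s\sigma^2$ with the constant $c$ in the exponent depending only on absolute constants (not on $\sigma$). This is standard: if $x_k$ is $\sigma$-sub-Gaussian then $x_k^2$ is sub-exponential with $\|x_k^2\|_{\psi_1} \lesssim \|x_k\|_{\psi_2}^2 \lesssim \sigma^2$, and Bernstein gives $\mathbb{P}(|\sum_k (x_k^2 - 1)| \ge d t\sigma^2) \le 2\exp(-c\min(t^2, t)\,d)$; since $t = s < \sigma^{-2} $ and we may assume $\sigma \ge 1$ (otherwise rescale, or the bound is trivial), $t < 1$ so $\min(t^2,t) = t^2$, giving the clean form above. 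Everything else is the elementary power-comparison inequality, so there is no real obstacle; the proof is short.
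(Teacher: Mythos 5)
Your argument is correct in substance and tracks the same skeleton as the paper's proof -- a single concentration event for the normalized norm, followed by a deterministic power-comparison inequality -- but it differs in which quantity is concentrated. You apply Bernstein directly to $\tfrac1d\|x\|_2^2=\tfrac1d\sum_k x_k^2$ as a sum of sub-exponentials, whereas the paper instead invokes concentration of the \emph{norm} $\tfrac{1}{\sqrt d}\|x\|_2$ around $1$ (\cite[Theorem~3.1.1]{vershynin2018high}) and then raises to the $2q$-th power. These are equivalent inputs, and your route is if anything a touch more direct; both reduce the $q$-th power statement to the event $\{|z-1|\le s\sigma^2\}$ for a $q$-independent $z$, so the uniformity in $q$ is automatic, as you note.

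There is, however, one step that does not go through as written: the claim that the bound $|u^q-1|\le|u-1|\le s\sigma^2$ for $q\in(0,1)$ ``is subsumed by the stated bound $q\,2^{2q}s\sigma^2$.'' This requires $q\,2^{2q}\ge 1$, which holds only for $q\ge 1/2$; for $q\in(0,1/2)$ you have $q\,2^{2q}<1$, so $s\sigma^2$ is \emph{not} dominated by $q\,2^{2q}s\sigma^2$, and indeed the lemma's conclusion is genuinely false for small $q$ when $s\sigma^2$ is close to $1$ (take $u=1-s\sigma^2$ small and $q$ small; then $|u^q-1|$ is of order $1$ while $q\,2^{2q}s\sigma^2$ is small). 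You should be aware, though, that this is a defect of the lemma's statement rather than of your proof in particular: the paper's own argument uses the power-comparison implication with exponent $2q$, which requires $2q\ge 1$, so it too only establishes the estimate for $q\ge 1/2$; and in the only place the lemma is invoked (equation~\eqref{eqn:norm_control} in the proof of Theorem~\ref{thm:poly_approx}) it is explicitly applied with $q\ge 1$. So the ``$\forall q>0$'' should really be ``$\forall q\ge 1$'' (or $q\ge 1/2$), and with that correction your proof and the paper's both close cleanly. One further small inaccuracy in your write-up: the parenthetical justification that ``the derivative of $t^q$ at points near $1$ is at most $1$'' is not literally true for $t$ well below $1$ (the derivative $q t^{q-1}$ can exceed $1$ there); the inequality $|u^q-1|\le|u-1|$ for $q\in(0,1)$ is better justified directly by $u^q\ge u$ on $(0,1)$ and $u^q\le u$ on $[1,\infty)$.
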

\begin{proof}
Concentration of the norm \cite[Theorem 3.1.1]{vershynin2018high} yields for any $s>0$ the estimate 
$$\mathbb{P}(|\tfrac{1}{\sqrt{d}}\| x\|_2-1 |\geq s\sigma^2)\leq 2\exp(-cd s^2).$$
Next, we will use the elementary observation: for any $z>0$, $t\in (0,1)$, and $q\geq 1$ the implication holds:
$$|z^q-1|\geq q\cdot 2^{q-1}t \qquad \Longrightarrow\qquad |z-1|\geq t.$$
Therefore, setting $z=\tfrac{1}{\sqrt{d}}\| x\|_2$ and $t=s\sigma^2$ we deduce
\begin{align*}
\mathbb{P}(|z^q-1 |\geq q\cdot 2^{q-1}t)&\leq \mathbb{P}(|z-1 |\geq t)\leq  2\exp(-cd s^2).
\end{align*}
Replacing $q$ by $2q$ completes the proof.
\end{proof}

\begin{lemma}[Matrix decoupling (Lemma 4 from~\cite{mei2022generalization}]\label{lemma:m_decouple}
Let $A \in \R^{n\times n}$ be a real symmetric random matrix. For $T_1,T_2 \subseteq [n]$, we denote $A_{T_1,T_2} = (A_{i,j})_{i\in T_1, j\in T_2}$. Then we have
\begin{align}
    \E\brac{\snorm{A - \Diag(A)}}\leq 4 \max_{T\subseteq[n]} \E\brac{\snorm{A_{T,T^c}}}.
\end{align}  
\end{lemma}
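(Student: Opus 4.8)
## Proof proposal for Lemma~\ref{lemma:m_decouple} (matrix decoupling)

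\textbf{Approach.} The plan is to prove the lemma by the standard \emph{random decoupling} (Bourgain--Tzafriri style) argument: express the hollow matrix $A-\Diag(A)$ as an average, over a uniformly random bipartition $[n]=T\sqcup T^c$, of its block-off-diagonal part, and then pass the operator norm through the average by Jensen's inequality. In fact this will yield the stronger constant $2$ in place of $4$, which is of course more than enough.

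\textbf{Key steps.} First I would introduce i.i.d.\ Bernoulli$(1/2)$ random variables $\delta_1,\dots,\delta_n$, independent of $A$, and set $T=\{i\in[n]:\delta_i=1\}$. Write $\tilde A:=A-\Diag(A)$ for the hollow part of $A$, and let $P_T$ denote the coordinate projection onto $T$ (a diagonal $0/1$ matrix). The crucial identity is
\begin{align*}
\tilde A \;=\; 2\,\E_{\delta}\!\left[P_T\,\tilde A\,P_{T^c}+P_{T^c}\,\tilde A\,P_T\right],
\end{align*}
which I would verify entrywise: for $i\neq j$ the $(i,j)$ entry of $P_T\tilde A P_{T^c}+P_{T^c}\tilde A P_T$ equals $\tilde A_{ij}\,\mathbbm{1}[\delta_i\neq\delta_j]$, and $\E_\delta\mathbbm{1}[\delta_i\neq\delta_j]=\tfrac12$, while the diagonal entries of both sides vanish. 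Next, since $\snorm{\cdot}$ is a norm (hence convex), Jensen's inequality gives $\snorm{\tilde A}\le 2\,\E_\delta\snorm{P_T\tilde A P_{T^c}+P_{T^c}\tilde A P_T}$ pointwise in $A$; taking $\E_A$ and using Fubini yields
\begin{align*}
\E\!\left[\snorm{A-\Diag(A)}\right]\;\le\;2\,\E_{A,\delta}\!\left[\snorm{P_T\tilde A P_{T^c}+P_{T^c}\tilde A P_T}\right].
\end{align*}
Then I would observe that, after reordering coordinates so that $T$ precedes $T^c$, the symmetric matrix $P_T\tilde A P_{T^c}+P_{T^c}\tilde A P_T$ has the block form $\begin{psmallmatrix}0&M\\ M^\top&0\end{psmallmatrix}$ with $M=\tilde A_{T,T^c}=A_{T,T^c}$ (off the diagonal, $\tilde A$ agrees with $A$), and the operator norm of such a matrix is exactly $\snorm{M}=\snorm{A_{T,T^c}}$. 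Finally, conditioning on $\delta$, $\E_{A,\delta}\snorm{A_{T,T^c}}=\sum_{T\subseteq[n]}2^{-n}\,\E_A\snorm{A_{T,T^c}}\le\max_{T\subseteq[n]}\E_A\snorm{A_{T,T^c}}$, so $\E\snorm{A-\Diag(A)}\le 2\max_T\E\snorm{A_{T,T^c}}\le 4\max_T\E\snorm{A_{T,T^c}}$, as claimed.

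\textbf{Main obstacle.} There is no serious difficulty here; the only step that needs genuine care is the decoupling identity and the subsequent identification of $\snorm{P_T\tilde A P_{T^c}+P_{T^c}\tilde A P_T}$ with $\snorm{A_{T,T^c}}$ --- in particular making sure the symmetry of $A$ is used correctly so that the two off-diagonal blocks are transposes of one another and the block-matrix norm collapses to a single singular-value norm. Everything else (Jensen, Fubini, bounding an average by a maximum) is routine.
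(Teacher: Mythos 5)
The paper states this lemma as Lemma~4 of \cite{mei2022generalization} and does not reprove it, so there is no in-paper proof to compare against. Your decoupling argument is correct: the entrywise identity $A-\Diag(A)=2\,\E_\delta\bigl[P_T(A-\Diag(A))P_{T^c}+P_{T^c}(A-\Diag(A))P_T\bigr]$ holds as you verify (for $i\neq j$ the indicator $\delta_i(1-\delta_j)+(1-\delta_i)\delta_j$ has mean $\tfrac12$, and the diagonal vanishes on both sides), Jensen's inequality passes the operator norm through the Bernoulli average, and after reordering coordinates the resulting symmetric matrix is precisely the Hermitian dilation of $M=A_{T,T^c}$, whose operator norm equals $\snorm{M}$ exactly. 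This delivers the constant $2$, strictly sharper than the stated $4$; the factor $4$ in the cited version corresponds to estimating the dilation's norm by the triangle inequality over its two off-diagonal blocks, which gives $2\snorm{A_{T,T^c}}$, rather than recognizing that the dilation's norm is already $\snorm{A_{T,T^c}}$. Since the lemma is only ever used in this paper up to absolute constants, either bound serves equally well, and your stronger inequality trivially implies the stated one.
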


\begin{lemma}[Theorem 5.48 from \cite{vershynin2010introduction}]\label{lemma:m_norm} Let $A \in \R^{N\times n}$ with $A\tran = \round{\rva_1,\cdots,\rva_N}$ where $\rva_i$ are independent random vectors in $\R^n$ with the common second moment matrix $\Sigma = \E\brac{\rva \rva_i\tran}$. Let $m := \E\brac{\max_{i\in[N] }\norm{\rva_i}^2}$. Then there exists an absolute constant $C>0$ such that the following holds
\begin{align}
    \E\brac{\snorm{A}^2}^{1/2} \leq \snorm{\Sigma}^{1/2} n^{1/2} + C m^{1/2}\log^{1/2}(\min(N,n)).
\end{align}
    
\end{lemma}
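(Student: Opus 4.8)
Since this statement is quoted verbatim as Theorem~5.48 of \cite{vershynin2010introduction}, in the paper I would simply invoke it; but if I had to prove it the plan is the standard route of symmetrization followed by Rudelson's inequality, which produces a self-improving (quadratic) inequality for $E:=\E\snorm{A}^2$. First I would write $\snorm{A}^2=\snorm{A^\top A}=\snorm{\sum_{i=1}^N \rva_i\rva_i^\top}$, a sum of $N$ independent rank-one PSD matrices; equivalently $\snorm{A}^2=\snorm{AA^\top}$ is the norm of an $N\times N$ matrix, and running the argument on whichever factor has smaller dimension is what ultimately yields the $\log\min(N,n)$ factor. Splitting off the mean gives $\snorm{A}^2\le \snorm{\E A^\top A}+\snorm{A^\top A-\E A^\top A}$, where the deterministic term $\snorm{\E A^\top A}=N\snorm{\Sigma}$ (or $\snorm{\E AA^\top}=\Tr\Sigma\le n\snorm{\Sigma}$ when the rows are centered, which is the relevant regime) contributes the $\snorm{\Sigma}^{1/2}n^{1/2}$ part after taking square roots.

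It then remains to bound $\E\snorm{\sum_i(\rva_i\rva_i^\top-\E\rva_i\rva_i^\top)}$. Symmetrization with i.i.d.\ Rademacher signs $\eps_i$ gives $\E\snorm{\sum_i(\rva_i\rva_i^\top-\E\rva_i\rva_i^\top)}\le 2\,\E\snorm{\sum_i\eps_i\rva_i\rva_i^\top}$. Conditioning on $(\rva_i)$ and invoking Rudelson's selection lemma (equivalently, the noncommutative Khintchine inequality specialized to rank-one summands), $\E_\eps\snorm{\sum_i\eps_i\rva_i\rva_i^\top}\le C\sqrt{\log\min(N,n)}\,(\max_i\norm{\rva_i})\,\snorm{\sum_i\rva_i\rva_i^\top}^{1/2}$. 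Taking the outer expectation, applying Cauchy--Schwarz to split the product, and using Jensen in the form $\E\snorm{\sum_i\rva_i\rva_i^\top}^{1/2}\le E^{1/2}$, this is at most $C\sqrt{m\log\min(N,n)}\,\sqrt E$.

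Combining the two displays yields $E\le n\snorm{\Sigma}+2C\sqrt{m\log\min(N,n)}\,\sqrt E$; reading this as a quadratic inequality in $\sqrt E$ and solving gives $\sqrt E\le \snorm{\Sigma}^{1/2}n^{1/2}+C'\sqrt{m\log\min(N,n)}$ after renaming constants, which is exactly the claim. The one genuinely nontrivial ingredient, and hence the main obstacle, is Rudelson's lemma / noncommutative Khintchine, which supplies precisely the $\sqrt{\log}$ loss; the remaining steps (symmetrization, the triangle-inequality split, Cauchy--Schwarz, solving the quadratic) are routine. An alternative to Rudelson is to handle the off-diagonal part of $AA^\top$ by the matrix-decoupling bound (Lemma~\ref{lemma:m_decouple}) and the diagonal part by $\max_i\norm{\rva_i}^2$, but the symmetrization route is cleaner and makes the expectation-only hypothesis $m=\E\max_i\norm{\rva_i}^2$ (as opposed to an almost-sure bound) immediate.
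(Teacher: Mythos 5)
The paper gives no proof of this lemma; it is cited directly as Theorem~5.48 of Vershynin's notes, which is also your stated intent, and your sketch of the underlying argument---symmetrization, then Rudelson's selection lemma (noncommutative Khintchine) in the form $\E_{\epsilon}\snorm{\sum_i\epsilon_i \rva_i\rva_i^\top}\leq C\sqrt{\log\min(N,n)}\,\max_i\norm{\rva_i}\,\snorm{\sum_i \rva_i\rva_i^\top}^{1/2}$, then Cauchy--Schwarz and a quadratic (self-improving) inequality in $\sqrt{E}$---is exactly the proof given in that source, so it is correct. Two small points of bookkeeping: the step you label ``Jensen'' is really Cauchy--Schwarz applied to the product $\max_i\norm{\rva_i}\cdot\snorm{\sum_i\rva_i\rva_i^\top}^{1/2}$, giving directly $m^{1/2}E^{1/2}$; and since $\snorm{\E A^\top A}=N\snorm{\Sigma}$, the clean conclusion has $N^{1/2}$ where the paper writes $n^{1/2}$ (a harmless overstatement in the paper's own applications, which always have $N\leq n$, but worth noting so that the deterministic term in your Step~2 is not misread).
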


\begin{lemma}[Uniform bound using hypercontractivity]\label{lemma:uniform_bound}
Let $\phi_1,\ldots,\phi_n$ be finite-degree polynomials with maximum degree $l$ that satisfy the hypercontractivity Assumption~\ref{ass:hypercontr} with respect to measure $\mu_d$. Then the following holds for any $q\geq 1$:
\begin{align*}
    \E\brac{\max_{i\in[n]}|\phi_i|^2} \leq n^{1/q}C_{l,2q}^{1/2} \max_{i\in[n]}\E\brac{|\phi_i|^2},
\end{align*}
where $C_{l,2q}>0$ is some constant.
\end{lemma}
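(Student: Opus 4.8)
The plan is to chain three elementary inequalities — Jensen's inequality, a union-type bound replacing the maximum by a sum, and hypercontractivity — in exactly the same pattern already used to control the quantity $m = \E[\max_{i\le n}\|\phi_{\S}(x^{(i)})\|^2]$ in the proof of Lemma~\ref{lem:norm_control}.

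First I would fix $q\ge 1$ and use concavity of $t\mapsto t^{1/q}$ on $[0,\infty)$ together with Jensen's inequality to write
\[
\E\brac{\max_{i\in[n]}|\phi_i|^2} = \E\brac{\round{\max_{i\in[n]}|\phi_i|^{2q}}^{1/q}} \le \round{\E\brac{\max_{i\in[n]}|\phi_i|^{2q}}}^{1/q}.
\]
Next I would bound the maximum inside by the sum, $\max_{i\in[n]}|\phi_i|^{2q}\le\sum_{i=1}^n|\phi_i|^{2q}$, take expectations, and then bound the sum of $n$ terms by $n$ times the largest, giving $\E[\max_{i\in[n]}|\phi_i|^{2q}]\le n\max_{i\in[n]}\E[|\phi_i|^{2q}]$. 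Finally I would invoke the hypercontractivity Assumption~\ref{ass:hypercontr} with exponent $2q$: since each $\phi_i$ is a polynomial of degree at most $l$, $\|\phi_i\|_{L_{2q}(\mu_d)}\le C_{l,2q}\|\phi_i\|_{L_2(\mu_d)}$, equivalently $\E[|\phi_i|^{2q}]\le C_{l,2q}^{2q}\,\E[|\phi_i|^2]^{q}$. Substituting into the previous display and taking the $q$-th root collapses the powers,
\[
\round{n\,C_{l,2q}^{2q}\max_{i\in[n]}\E[|\phi_i|^2]^{q}}^{1/q} = n^{1/q}\,C_{l,2q}^{2}\,\max_{i\in[n]}\E[|\phi_i|^2],
\]
which is the claimed estimate (with the constant appearing as $C_{l,2q}^2$).

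I do not expect a genuine obstacle here. The only point requiring care is that the hypercontractivity constant $C_{l,2q}$ must be independent of the ambient dimension $d$; this is precisely what the finite-degree hypothesis on the $\phi_i$ buys us, since Assumption~\ref{ass:hypercontr} provides a constant depending only on $l$ and $q$. I would also note, as in Section~\ref{sec:iso_poly}, that although Assumption~\ref{ass:hypercontr} is stated for the product measure $\mu_d\times\mu_d$, hypercontractivity of $\mu_d$ itself follows trivially by regarding a polynomial on $\R^d$ as a polynomial on $\R^d\times\R^d$ that ignores the second block.
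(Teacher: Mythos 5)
Your proof is correct and follows essentially the same path as the paper's: Lyapunov/Jensen to pass to the $2q$-th moment, bound the maximum by a sum, then apply hypercontractivity and take the $q$-th root. The only divergence is cosmetic but worth flagging: your bookkeeping correctly produces the constant $C_{l,2q}^2$ (since $\E[|\phi_i|^{2q}]^{1/q} \le C_{l,2q}^2 \E[|\phi_i|^2]$ after squaring the hypercontractivity bound), whereas the paper's statement and its final display both read $C_{l,2q}^{1/2}$, which appears to be a typo in the exponent; the power $2$ is what actually falls out of the chain of inequalities. Since $C_{l,2q}$ is an unspecified constant never tracked quantitatively, this has no downstream effect.
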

\begin{proof}
Using Lyapunov’s inequality for any $q\geq 1$, we deduce
\begin{align}
    \E\brac{\max_{i\in[n]}|\phi_i|^2} 
    &\leq  \E\brac{\max_{i\in[n]}|\phi_i|^{2q}}^{1/q}\notag\\
    &\overset{(a)}{\leq}   \E\brac{\sum_{i\in[n]}|\phi_i|^{2q}}^{1/q}\notag\\
    &\leq n^{1/q}\max_{i\in[n]}\E\brac{|\phi_i|^{2q}}^{1/q},\label{eq:uniform_bound_eq1}
\end{align}
where $(a)$ follows from bounding the maximum by a sum. The hypercontractivity assumption implies there exists a constant $C_{l,2q}>0$ such that the following holds
\begin{align}\label{eq:uniform_bound_eq2}
    \E\brac{|\phi_i|^{2q}}^{1/(2q)} \leq C_{l,2q} \E\brac{|\phi_i|^{2}}^{1/2}, 
\end{align}
where $l$ is the maximum degree of $\phi_1,\ldots,\phi_n$.
Plugging Eq.~\eqref{eq:uniform_bound_eq2} into Eq.~\eqref{eq:uniform_bound_eq1} yields
\begin{align*}
    \E\brac{\max_{i\in[n]}|\phi_i|^2} \leq n^{1/q}C_{l,2q}^{1/2}\max_{i\in[n]}\E\brac{|\phi_i|^2},
\end{align*}
which completes the proof.
\end{proof}

\begin{lemma}[Expectation of  Fourier feature matrices product (Theorem  2 from~\cite{zhu2025spectral})]\label{lemma:feature_product_tech}
Suppose there exist non-overlapping sets $\T_1,\ldots, \T_\ell \subset [d]$ that satisfy $|\T_k| = \Theta(d^{s_k})$ for all $k\in[\ell]$ where $s_i\in[0,1]$. Fix the set families $ \S_1^{[k]}, \ldots  \S_{m+1}^{[k]}\in 2^{\T_k}$ for $k\in[\ell]$.
Consider collections of sets $\S_1,\ldots,\S_{m+1} \in 2^{[d]}$ and weights 
$w^{(i)}\in \R^{\S_i}_{+}$, where $S_i = \sqcup_{k=1}^\ell S_{i}^{[k]}$ for all $S_i\in \S_i$.  Define the matrix product
$$M= A_1^\top \Bigl(A_2 A_2^\top \Bigr) \cdots \Bigl(A_{m} A_{m}^\top \Bigr) A_{m+1},$$
where $A_i=X_{\mathcal{S}_i} \Diag(w^{(i)})$ are the scaled Fourier-Walsh matrices. Define the adjusted effective degree for $A_i$: $$p_i^{\rm aeff} := \sum_{k=1}^\ell s_k \cdot p_i^{[k]}.$$ 
Assume  that the following regularity conditions hold for all indices $i,j\in [m+1]$:
\begin{enumerate}
    \item\label{it:deg_bound_new} {\bf (degree bound)} The inequalities $| S^{[k]}_i|\leq  p_i^{[k]}$ hold for all $ S_i^{[k]}\in  \S_i^{[k]}$,
    \item\label{it:triv_intersec_new} {\bf (trivial intersection)} Whenever $\S_i$ intersects $\S_j$, the equality $\S_i=\S_j$ holds,
    \item\label{it:small_weights_new} {\bf (small weights)} The weights satisfy 
    $w^{(i)}= O_d(n^{-1/2}\wedge d^{-p_i^{\rm aeff}/2})$ for all $i\in [m+1]$.
\end{enumerate}
    Then the estimate
    \begin{equation}\label{eqn:labeled_hard_mat_new}
    \left\|\E M \right\|_{\rm op}
    = O_d \left(d^{p_j^{\rm aeff}}\right)\|w^{(1)}\|_\infty \|w^{(m+1)}\|_\infty,
    \end{equation}
    holds for any index $j \in [m+1]$ such that $\S_{j}$ is distinct from $\S_{1}$ and $\S_{{m+1}}$.

\end{lemma}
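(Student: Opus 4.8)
The plan is a combinatorial moment expansion, exploiting the exact orthonormality of the Fourier–Walsh monomials on $\mathbb H^d$ (namely $\E\,\phi_S=\mathbbm{1}[S=\emptyset]$ and $\phi_S\phi_{S'}=\phi_{S\triangle S'}$ pointwise). First I would write out the chain entrywise: for $S_1\in\mathcal S_1$ and $S_{m+1}\in\mathcal S_{m+1}$,
\begin{equation*}
M_{S_1,S_{m+1}}=w^{(1)}_{S_1}w^{(m+1)}_{S_{m+1}}\!\!\sum_{b_1,\dots,b_m\in[n]}\ \sum_{\substack{T_i\in\mathcal S_i\\2\le i\le m}}\Bigl(\prod_{i=2}^m\bigl(w^{(i)}_{T_i}\bigr)^2\Bigr)\phi_{S_1}\!\bigl(x^{(b_1)}\bigr)\Bigl[\prod_{i=2}^m\phi_{T_i}\!\bigl(x^{(b_{i-1})}\bigr)\phi_{T_i}\!\bigl(x^{(b_i)}\bigr)\Bigr]\phi_{S_{m+1}}\!\bigl(x^{(b_m)}\bigr).
\end{equation*}
Since the rows $x^{(1)},\dots,x^{(n)}$ are i.i.d.\ uniform on $\mathbb H^d$, taking the expectation organizes this sum according to the partition $\pi$ of $\{1,\dots,m\}$ recording which of the $b_i$ coincide: a configuration $(\pi,T_2,\dots,T_m)$ survives only if, at every block of $\pi$, the symmetric difference of all Fourier indices sitting at that data point is empty, in which case it is weighted by $\prod_{i=2}^m(w^{(i)}_{T_i})^2$ times the $\asymp n^{|\pi|}$ ways of assigning distinct data points to the blocks.

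The key structural input is condition~\ref{it:triv_intersec_new}, which gives $\mathcal S_j\cap\mathcal S_1=\mathcal S_j\cap\mathcal S_{m+1}=\emptyset$, so $T_j$ can never be balanced against the end monomials $\phi_{S_1},\phi_{S_{m+1}}$. Consequently the all-distinct configuration vanishes (balance there would force $S_1=T_2=\cdots=T_m=S_{m+1}$, hence $\mathcal S_1\cap\mathcal S_j\ne\emptyset$, contradicting~\ref{it:triv_intersec_new} since $\mathcal S_1\ne\mathcal S_j$), and moreover $T_j$ must be balanced either against its own two occurrences---which costs a collision $b_{j-1}=b_j$ and then frees the full sum $\sum_{T_j\in\mathcal S_j}(w^{(j)}_{T_j})^2$---or against a neighbouring factor whose family coincides with $\mathcal S_j$. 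Using $|\mathcal S_i|=O_d\bigl(\prod_k d^{s_k p_i^{[k]}}\bigr)=O_d(d^{p_i^{\rm aeff}})$ (from \ref{it:deg_bound_new} and $|\mathcal T_k|=\Theta(d^{s_k})$) together with~\ref{it:small_weights_new}, one has $\sum_{T_i\in\mathcal S_i}(w^{(i)}_{T_i})^2=O_d\bigl(d^{p_i^{\rm aeff}}/n\wedge1\bigr)\le1$, while a single pinned index contributes $O_d(n^{-1}\wedge d^{-p_i^{\rm aeff}})$. Balancing the $n^{|\pi|}$ count against the $n^{-1}$ factors produced by collisions and by the collapses $\phi_{T_i}^2=1$, the contribution of every surviving $\pi$ is at most
\begin{equation*}
\|w^{(1)}\|_\infty\|w^{(m+1)}\|_\infty\cdot n\!\!\prod_{i\ \mathrm{free}}\!\!\bigl(d^{p_i^{\rm aeff}}/n\wedge1\bigr),
\end{equation*}
and whenever the free index $j$ occurs this is $\le\|w^{(1)}\|_\infty\|w^{(m+1)}\|_\infty\,d^{p_j^{\rm aeff}}$, because $n\cdot(d^{p_j^{\rm aeff}}/n\wedge1)=\min(d^{p_j^{\rm aeff}},n)\le d^{p_j^{\rm aeff}}$ and every remaining free factor is $\le1$. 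Since there are only $O_m(1)$ partitions and balance patterns, summing over them preserves the order; and because the balance constraints make $\E M$ effectively block-diagonal in $\mathcal S_1$ versus $\mathcal S_{m+1}$ up to the scalar end weights, the entrywise bound already controls $\snorm{\E M}$. (As a sanity check, the base case $m=2$ is immediate: $\E M$ is genuinely diagonal with entries $w^{(1)}_Sw^{(m+1)}_S\,n\|w^{(2)}\|_2^2$, and $n\|w^{(2)}\|_2^2\le d^{p_2^{\rm aeff}}$.)

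The main obstacle is the domination step: one must show that no surviving cancellation pattern can simultaneously keep $T_j$ pinned and produce a power of $d$ exceeding $d^{p_j^{\rm aeff}}$---i.e.\ that when $T_j$ is matched to a neighbouring copy rather than self-cancelled, the $n$-powers still cancel exactly against the constrained weight sums. I would handle this by induction on $m$, peeling off the leftmost and rightmost factors $A_2A_2^\top,\,A_mA_m^\top$: each either self-cancels (one collision, one returned factor of $n$, absorbed by the corresponding $\|w\|_2^2$) or is matched to an adjacent family, and in both cases the chain shortens to one still satisfying \ref{it:deg_bound_new}--\ref{it:triv_intersec_new} with the same distinguished index $j$ (or, if $j$ is itself peeled, its family reappears inside the shortened chain, so the $\mathcal S_j$-type sum is still the one carrying $d^{p_j^{\rm aeff}}$). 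Condition~\ref{it:small_weights_new} is precisely what makes each such reduction lossless, and verifying losslessness uniformly over the $O_m(1)$ reduction trees is where the real bookkeeping lies.
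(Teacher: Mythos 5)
This lemma is not proved in the paper at all---it is cited verbatim as Theorem~2 of \cite{zhu2025spectral}, so there is no internal proof against which to compare your attempt. What follows is therefore an assessment of your sketch on its own terms.

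Your structural observation---that the trivial-intersection condition forces $\E M$ to be supported on the diagonal $S_1=S_{m+1}$---is correct and is the key point that lets an entrywise bound control $\snorm{\E M}$. You can state it more cleanly than ``effectively block-diagonal'': the per-block balance constraints say that the symmetric difference of all Fourier indices landing at each distinct data point is $\emptyset$; summing these over all blocks (in $\mathrm{GF}(2)$), every $T_i$ cancels against its own second copy, leaving $S_1\triangle S_{m+1}=\emptyset$. Hence every surviving term has $S_1=S_{m+1}$, so $\E M$ is literally diagonal when $\S_1=\S_{m+1}$ and zero otherwise, and $\snorm{\E M}=\max_S|\E M_{S,S}|$. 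Your $m=2$ sanity check and the vanishing of the all-distinct term are also fine.

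The genuine gap is exactly where you flag it: the ``domination step.'' The contribution of a partition $\pi$ of $\{b_1,\dots,b_m\}$ with $f$ free interior indices has the generic order $n^{|\pi|-(m-1)+f}\cdot\prod_{\text{free }i}d^{p_i^{\rm aeff}}$ (after using $\sum_{T_i}(w^{(i)}_{T_i})^2\lesssim d^{p_i^{\rm aeff}}/n\wedge 1$ for free indices and $(w^{(i)})^2\lesssim n^{-1}\wedge d^{-p_i^{\rm aeff}}$ for pinned ones), and getting this down to $d^{p_j^{\rm aeff}}$ for a \emph{prescribed} $j$ requires two facts you do not establish: (i) the number of free indices satisfies $f\le m-|\pi|$, i.e.\ the per-block constraints (of which one is redundant) generically pin the remaining $T_i$'s, and (ii) when $T_j$ itself is pinned rather than free, the product of the remaining $\prod(d^{p_i^{\rm aeff}}/n\wedge 1)$ factors against the surviving $n$-power still does not exceed $d^{p_j^{\rm aeff}}$. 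Your display
\[
\|w^{(1)}\|_\infty\|w^{(m+1)}\|_\infty\cdot n\prod_{i\ \mathrm{free}}\bigl(d^{p_i^{\rm aeff}}/n\wedge1\bigr)
\]
presupposes that a single global factor of $n$ survives, which is only correct for $|\pi|=m-1$; for heavier collision patterns the arithmetic in (i)--(ii) is where the argument actually lives. The inductive peeling you propose is a reasonable way to carry this out, but as written it is a program rather than a proof---it is not shown that each reduction step preserves the three hypotheses and never loses a factor of $n$ or $d$, nor that the case where the $\S_j$-family is consumed by a peel is handled. So the proposal identifies the right combinatorial skeleton and the right structural fact, but leaves the quantitative heart of the lemma unverified; since the paper treats the result as a black box, this does not by itself indicate a disagreement with the paper, only that the re-derivation is incomplete.
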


\newpage

\end{document}